\newtheorem{theorem}{Theorem}[subsection]
\newtheorem{definition}[theorem]{Definition}
\newenvironment{key}{\begin{mdframed}[innertopmargin=-.3\baselineskip]}{\end{mdframed}}
\newmdtheoremenv[innertopmargin=-.3\baselineskip, nobreak=true]{keydef}[theorem]{Definition}
\numberwithin{equation}{subsection}
\newcounter{Hequation}
\g@addto@macro\equation{\stepcounter{Hequation}}
\newtheorem{convention}[theorem]{Convention}
\newtheorem{desideratum}{Desideratum}
\newtheorem{lemma}[theorem]{Lemma}
\newtheorem{proposition}[theorem]{Proposition}
\newtheorem{corollary}[theorem]{Corollary}
\newtheorem{defprop}[theorem]{Definition/Proposition}
\newtheorem{defalg}[theorem]{Definition/Algorithm}
\newenvironment{sketch}[1][]{\noindent\emph{Proof sketch.}\proofin{#1}\begin{quote}}{\end{quote}}
\newcommand{\Des}[1]{Desideratum~\ref{des:#1}}
\renewcommand{\implies}{\rightarrow}
\renewcommand{\iff}{\leftrightarrow}
\newcommand{\Implies}{\Rightarrow}
\newcommand\pfun{\mathrel{\ooalign{\hfil$\mapstochar\mkern5mu$\hfil\cr$\to$\cr}}}
\newcommand{\fin}{\operatorname{Fin}}
\newcommand{\quot}[1]{``#1"}
\newcommand{\enc}[1]{{\underline{#1}}}
\newcommand{\seq}[1]{{\overline{#1}}}
\newcommand{\poly}{\operatorname{poly}}
\newcommand{\any}{-}
\newcommand{\affconst}{c}
\newcommand{\fconst}{1}
\newcommand{\Ack}{\operatorname{Ack}}
\newcommand{\exf}{\xi}
\newcommand{\rank}{\operatorname{rank}}
\newcommand{\cash}{c}
\newcommand{\runtime}{r}
\newcommand{\LUV}[1]{#1}
\newcommand{\LUVs}{\mathcal{U}}
\newcommand{\prob}{p}
\newcommand{\pseudo}{p}
\newcommand{\price}{p}
\newcommand{\mm}{m}
\newcommand{\nn}{n}
\newcommand{\nN}{N}
\newcommand{\fuz}{w}
\newcommand{\deff}{f}
\newcommand{\phis}{\seq{\phi}}
\newcommand{\psis}{\seq{\psi}}
\newcommand{\deltas}{\seq{\delta}}
\newcommand{\varepsilons}{\seq{{\varepsilon}}}
\newcommand{\alphas}{\seq{{\alpha}}}
\newcommand{\probs}{\seq{{\prob}}}
\newcommand{\ec}[1][\ ]{e.c.#1}
\newcommand{\pgenable}{\ensuremath{\MP}\text{-generable}\xspace}
\newcommand{\representscomputations}{can represent computable functions}
\newcommand{\li}{logical inductor}
\newcommand{\LItitle}{Logical Inductor}
\newcommand{\lic}{logical induction criterion}
\newcommand{\LICtitle}{Logical Induction Criterion}
\newcommand{\rparenthetical}[1]{\hfill\upshape{{\small (#1)}}} 
\newcommand{\proofin}[1]{\rparenthetical{Proof in:~#1.}}
\newcommand{\tinysection}[1]{\medskip\noindent\textbf{#1.}}
\newcommand{\tinysectionend}[1]{\medskip\noindent}
\newcommand{\MainTextOnly}[1]{#1}
\newcommand{\afffeat}{\mathcal{A\-F}}
\newcommand{\feature}{\alpha} 
\newcommand{\features}{\mathcal{F}} 
\newcommand{\exfeatures}{\mathcal{E\!F}} 
\newcommand{\aff}{A}
\newcommand{\affs}{\seq\aff}
\newcommand{\BCS}[1][\MP]{\mathcal{B\-C\-S}(#1)} 
\newcommand{\affluv}{B} 
\newcommand{\BLCS}[1][\MP]{\mathcal{B\-L\-C\-S}(#1)} 
\newcommand{\luvval}{\mathbb{U}}
\newcommand{\trade}{T}%
\newcommand{\Trader}{\seq{\trade}} 
\newcommand{\strat}{T} 
\newcommand{\hold}{H}
\newcommand{\EF}{\exfeatures}
\newcommand{\alta}{\mathrm{Ex}} %
\newcommand{\gens}{\dagger}
\newcommand{\gen}[1]{{#1^\gens}} 
  \newcommand{\magn}[1]{{\|#1\|_1}} 
  \newcommand{\magnit}[1]{{\|#1\|_{\rm mg}}} 
\newcommand{\append}{,}
\newcommand{\supertrade}{F}
\newcommand{\adj}{{{\operatorname{fix}}}}
\newcommand{\Valuations}{\mathcal{V}}
\newcommand{\LIA}{\text{\upshape{\texttt{LIA}}}\xspace}
\newcommand{\marketmaker}{\text{\upshape{\texttt{MarketMaker}}}\xspace}
\newcommand{\budgeter}{\text{\upshape{\texttt{Budgeter}}}\xspace}
\newcommand{\tradingfirm}{\text{\upshape{\texttt{TradingFirm}}}\xspace}
\newcommand{\Bayesian}{\mathrm{Pr}}
\newcommand{\BelState}{\mathbb{P}}
\newcommand{\BelSeq}{\seq{\BelState}}
\newcommand{\Valuation}{\mathbb{V}}
\renewcommand{\UU}{\mathbb{U}} 
\newcommand{\Market}{\seq{\Pricing}}
\newcommand{\Pricing}{\mathbb{P}}
\newcommand{\pf}[2][\nn]{{#2}^{*#1}}
\newcommand{\pt}{\BelState}
\newcommand{\MP}{\seq{\pt}}
\newcommand{\ftn}{(\MP)} 
\newcommand{\dt}{D}
\newcommand{\DP}{\seq{\dt}}
\newcommand{\pcworlds}{\mathcal{P\-C}}
\newcommand{\cworlds}{\mathcal{P\-C}} 
\newcommand{\World}{\mathbb{W}}
\newcommand{\Worlds}{\mathcal{W}}
\newcommand{\Theory}{\Gamma}
\newcommand{\Lang}{\mathcal{L}}
\newcommand{\Sentences}{\mathcal{S}}
\newcommand{\piasev}{\mathit{\operatorname{\pi Aeq7}}}
\newcommand{\zfseq}{\mathit{\operatorname{ZFCaxioms}}}
\newcommand{\clusters}{\mathit{\operatorname{clusters}}}
\newcommand{\prgzot}{\mathit{\operatorname{prg012}}}
\newcommand{\prgz}{\mathit{\operatorname{prg0}}}
\newcommand{\prgo}{\mathit{\operatorname{prg1}}}
\newcommand{\prgt}{\mathit{\operatorname{prg2}}}
\newcommand{\consen}{\operatorname{Con}}
\newcommand{\returntrade}{\textrm{return }}
\newcommand{\underpriced}{{\rm Under}}
\newcommand{\overpriced}{{\rm Over}}
\newcommand{\localportfoliovalue}{b}
\newcommand{\itwoi}[1]{h(#1)}
\newcommand{\amountbought}{{\rm Bought}}
\newcommand{\lowprice}{{\rm Low}}
\DeclareMathOperator{\OneOperator}{\mathds{1}} 
\newcommand{\ctsind}[1]{\operatorname{Ind}_{\text{\small{${#1}$}}}} 
\newcommand{\thmind}[1][\Theory]{\operatorname{Thm}_{#1}\-} 
\newcommand{\thmval}[1][\Theory]{\operatorname{Val}_{#1}\-} 
\newcommand{\sentind}[1]{\OneOperator\mleft(#1\mright)}
\title{Logical Induction}
\author{
  Scott Garrabrant, Tsvi Benson-Tilsen, Andrew Critch, Nate Soares, \and Jessica Taylor\\
  \{scott,tsvi,critch,nate,jessica\}@intelligence.org\\
  Machine Intelligence Research Institute
}
\begin{document}

\publishingnote{See \href{https://intelligence.org/files/LogicalInductionAbridged.pdf}{https://intelligence.org/files/LogicalInductionAbridged.pdf} for an abridged version of this paper.}

\maketitle

\begin{abstract}
  We present a computable algorithm that assigns probabilities to every logical statement in a given formal language, and refines those probabilities over time. For instance, if the language is Peano arithmetic, it assigns probabilities to all arithmetical statements, including claims about the twin prime conjecture, the outputs of long-running computations, and its own probabilities. We show that our algorithm, an instance of what we call a \emph{logical inductor}, satisfies a number of intuitive desiderata, including: (1)~it learns to predict patterns of truth and falsehood in logical statements, often long before having the resources to evaluate the statements, so long as the patterns can be written down in polynomial time; (2)~it learns to use appropriate statistical summaries to predict sequences of statements whose truth values appear pseudorandom; and (3)~it learns to have accurate beliefs about its own current beliefs, in a manner that avoids the standard paradoxes of self-reference. For example, if a given computer program only ever produces outputs in a certain range, a logical inductor learns this fact in a timely manner; and if late digits in the decimal expansion of $\pi$ are difficult to predict, then a logical inductor learns to assign $\approx 10\%$ probability to ``the $n$th digit of $\pi$ is a 7'' for large $n$. Logical inductors also learn to trust their future beliefs more than their current beliefs, and their beliefs are coherent in the limit (whenever $\phi \implies \psi$, $\pt_\infty(\phi) \le \pt_\infty(\psi)$, and so on); and logical inductors strictly dominate the universal semimeasure in the limit.

  These properties and many others all follow from a single \emph{logical induction criterion}, which is motivated by a series of stock trading analogies. Roughly speaking, each logical sentence $\phi$ is associated with a stock that is worth \$1 per share if $\phi$ is true and nothing otherwise, and we interpret the belief-state of a logically uncertain reasoner as a set of market prices, where $\pt_\nn(\phi)=50\%$ means that on day $\nn$, shares of $\phi$ may be bought or sold from the reasoner for 50\textcent{}. The logical induction criterion says (very roughly) that there should not be any polynomial-time computable trading strategy with finite risk tolerance that earns unbounded profits in that market over time. This criterion bears strong resemblance to the ``no Dutch book'' criteria that support both expected utility theory \citep{Von-Neumann:1944} and Bayesian probability theory \citep{Ramsey:1931,DeFinetti:1937:foresight}.
\end{abstract}

\tableofcontents
\newpage

%
%
%
%
%
%
%
%
%
%
%
%
%
%
%
%
%
\section{Introduction}\label{sec:intro}

Every student of mathematics has experienced uncertainty about conjectures for which there is ``quite a bit of evidence'', such as the Riemann hypothesis or the twin prime conjecture. Indeed, when \citet{zhang2014bounded} proved a bound on the gap between primes, we were tempted to increase our credence in the twin prime conjecture. But how much evidence does this bound provide for the twin prime conjecture? Can we quantify the degree to which it should increase our confidence?

The natural impulse is to appeal to probability theory in general and Bayes' theorem in particular. Bayes' theorem gives rules for how to use observations to update empirical uncertainty about unknown events in the physical world. However, probability theory lacks the tools to manage uncertainty about logical facts.

Consider encountering a computer connected to an input wire and an output wire. If we know what algorithm the computer implements, then there are two distinct ways to be uncertain about the output. We could be uncertain about the input---maybe it's determined by a coin toss we didn't see. Alternatively, we could be uncertain because we haven't had the time to reason out what the program does---perhaps it computes the parity of the 87,653rd digit in the decimal expansion of $\pi$, and we don't personally know whether it's even or odd.

The first type of uncertainty is about \emph{empirical} facts. No amount of thinking in isolation will tell us whether the coin came up heads. To resolve empirical uncertainty we must observe the coin, and then Bayes' theorem gives a principled account of how to update our beliefs.

The second type of uncertainty is about a \emph{logical} fact, about what a known computation will output when evaluated. In this case, reasoning in isolation can and should change our beliefs: we can reduce our uncertainty by thinking more about $\pi$, without making any new observations of the external world.  

In any given practical scenario, reasoners usually experience a mix of both empirical uncertainty (about how the world is) and logical uncertainty (about what that implies). In this paper, we focus entirely on the problem of managing logical uncertainty. Probability theory does not address this problem, because probability-theoretic reasoners cannot possess uncertainty about logical facts. For example, let $\phi$ stand for the claim that the 87,653rd digit of $\pi$ is a 7. If this claim is true, then $(1+1=2) \Implies \phi$. But the laws of probability theory say that if $A \Implies B$ then $\Bayesian(A) \le \Bayesian(B)$. Thus, a perfect Bayesian must be at least as sure of $\phi$ as they are that $1+1=2$! Recognition of this problem dates at least back to \citet{Good:1950:weighing}.

Many have proposed methods for relaxing the criterion $\Bayesian(A) \le \Bayesian(B)$ until such a time as the implication has been proven \mbp{see, e.g, the work of \citet{Hacking:1967,Christiano:2014:omniscience}}. But this leaves open the question of how probabilities should be assigned before the implication is proven, and this brings us back to the search for a principled method for managing uncertainty about logical facts when relationships between them are suspected but unproven.

We propose a partial solution, which we call \emph{logical induction}. Very roughly, our setup works as follows. We consider reasoners that assign probabilities to sentences written in some formal language and refine those probabilities over time. Assuming the language is sufficiently expressive, these sentences can say things like ``Goldbach's conjecture is true'' or ``the computation \texttt{prg} on input \texttt{i} produces the output \texttt{prg(i)=0}''. The reasoner is given access to a slow deductive process that emits theorems over time, and tasked with assigning probabilities in a manner that outpaces deduction, e.g., by assigning high probabilities to sentences that are eventually proven, and low probabilities to sentences that are eventually refuted, well before they can be verified deductively. Logical inductors carry out this task in a way that satisfies many desirable properties, including:
\begin{enumerate*}
  \item Their beliefs are logically consistent in the limit as time approaches infinity.
  \item They learn to make their probabilities respect many different patterns in logic, at a rate that outpaces deduction.
  \item They learn to know what they know, and trust their future beliefs, while avoiding paradoxes of self-reference.
\end{enumerate*}
These claims (and many others) will be made precise in \Sec{properties}.

A logical inductor is any sequence of probabilities that satisfies our \emph{logical induction criterion}, which works roughly as follows. We interpret a reasoner's probabilities as prices in a stock market, where the probability of $\phi$ is interpreted as the price of a share that is worth \$1 if $\phi$ is true, and \$0 otherwise \mbp{similar to \citet{beygelzimer2012learning}}. We consider a collection of stock traders who buy and sell shares at the market prices, and define a sense in which traders can exploit markets that have irrational beliefs. The logical induction criterion then says that it should not be possible to exploit the market prices using any trading strategy that can be generated in polynomial-time.

Our main finding is a computable algorithm which satisfies the logical induction criterion, plus proofs that a variety of different desiderata follow from this criterion.

The logical induction criterion can be seen as a weakening of the ``no Dutch book'' criterion that \citet{Ramsey:1931,DeFinetti:1937:foresight} used to support standard probability theory, which is analogous to the ``no Dutch book'' criterion that \citet{Von-Neumann:1944} used to support expected utility theory. Under this interpretation, our criterion says (roughly) that a rational deductively limited reasoner should have beliefs that can't be exploited by any Dutch book strategy constructed by an efficient (polynomial-time) algorithm. Because of the analogy, and the variety of desirable properties that follow immediately from this one criterion, we believe that the logical induction criterion captures a portion of what it means to do good reasoning about logical facts in the face of deductive limitations. That said, there are clear drawbacks to our algorithm: it does not use its resources efficiently; it is not a decision-making algorithm (i.e., it does not ``think about what to think about''); and the properties above hold either asymptotically (with poor convergence bounds) or in the limit. In other words, our algorithm gives a theoretically interesting but ultimately impractical account of how to manage logical uncertainty.

\subsection{Desiderata for Reasoning under Logical Uncertainty}\label{sec:desiderata}
For historical context, we now review a number of desiderata that have been proposed in the literature as desirable features of ``good reasoning'' in the face of logical uncertainty. A major obstacle in the study of logical uncertainty is that it's not clear what would count as a satisfactory solution. In lieu of a solution, a common tactic is to list desiderata that intuition says a good reasoner should meet. One can then examine them for patterns, relationships, and incompatibilities. A multitude of desiderata have been proposed throughout the years; below, we have collected a variety of them. Each is stated in its colloquial form; many will be stated formally and studied thoroughly later in this paper.

\begin{desideratum}[Computable Approximability]\label{des:computable}\label{des:first}
  The method for assigning probabilities to logical claims (and refining them over time) should be computable.
  \par\rparenthetical{See \Sec{construct} for our algorithm.}
\end{desideratum}

\noindent A good method for refining beliefs about logic can never be entirely finished, because a reasoner can always learn additional logical facts by thinking for longer. Nevertheless, if the algorithm refining beliefs is going to have any hope of practicality, it should at least be computable. This idea dates back at least to \citet{Good:1950:weighing}, and has been discussed in depth by \citet{Hacking:1967,Eells:1990:OldEvidence}, among others.

\Des{computable} may seem obvious, but it is not without its teeth. It rules out certain proposals, such as that of \citet{Hutter:2013}, which has no computable approximation \citep{Sawin:2013:pi1pi2}.

\begin{desideratum}[Coherence in the Limit]\label{des:coherent}\label{des:second} 
  The belief state that the reasoner is approximating better and better over time should be logically consistent.
\par\rparenthetical{Discussed in \Sec{limitprops}.}
\end{desideratum}

\noindent First formalized by \citet{Gaifman:1964}, the idea of \Des{coherent} is that the belief state that the reasoner is approximating---the beliefs they would have if they had infinite time to think---should be internally consistent. This means that, in the limit of reasoning, a reasoner should assign $\Bayesian(\phi) \leq \Bayesian(\psi)$ whenever $\phi \Implies \psi$, and they should assign probability~1 to all theorems and~0 to all contradictions, and so on.

\begin{desideratum}[Approximate Coherence]\label{des:ic}
  The belief state of the reasoner should be approximately coherent. For example, if the reasoner knows that two statements are mutually exclusive, then it should assign probabilities to those sentences that sum to no more than 1, even if it cannot yet prove either sentence.
  \par\rparenthetical{Discussed in sections~\ref{sec:timelylearning} and~\ref{sec:logicpatterns}.}
\end{desideratum}

\noindent Being coherent in the limit is desirable, but good deductively limited reasoning requires approximate coherence at finite times. Consider two claims about a particular computation \texttt{prg}, which takes a number \texttt{n} as input and produces a number \texttt{prg(n)} as output. Assume the first claim says \texttt{prg(7)=0}, and the second says \texttt{prg(7)=1}. Clearly, these claims are mutually exclusive, and once a reasoner realizes this fact, they should assign probabilities to the two claims that sum to at most 1, even before they can evaluate \texttt{prg(7)}. Limit coherence does not guarantee this: a reasoner could assign bad probabilities (say, 100\% to both claims) right up until they can evaluate \texttt{prg(7)}, at which point they start assigning the correct probabilities. Intuitively, a good reasoner should be able to recognize the mutual exclusivity \emph{before} they've proven either claim. In other words, a good reasoner's beliefs should be approximately coherent.

\Des{ic} dates back to at least \citet{Good:1950:weighing}, who proposes a weakening of the condition of coherence that could apply to the belief states of limited reasoners. \citet{Hacking:1967} proposes an alternative weakening, as do \citet{Garrabrant:2016:ic}.

\begin{desideratum}[Learning of Statistical Patterns]\label{des:stats}
  In lieu of knowledge that bears on a logical fact, a good reasoner should assign probabilities to that fact in accordance with the rate at which similar claims are true.\rparenthetical{Discussed in \Sec{statpatterns}.}
\end{desideratum}

\noindent For example, a good reasoner should assign probability $\approx 10\%$ to the claim ``the $n$th digit of $\pi$ is a 7'' for large $n$ (assuming there is no efficient way for a reasoner to guess the digits of $\pi$ for large $n$). This desideratum dates at least back to \citet{Savage:1967:personal}, and seems clearly desirable. If a reasoner thought the $10^{100}$th digit of $\pi$ was almost surely a 9, but had no reason for believing this, we would be suspicious of their reasoning methods. \Des{stats} is difficult to state formally; for two attempts, refer to \citet{Garrabrant:2015:alu,Garrabrant:2016:ac}.

\begin{desideratum}[Calibration]\label{des:calibration}
  Good reasoners should be well-calibrated. That is, among events that a reasoner says should occur with probability $p$, they should in fact occur about $p$ proportion of the time.
  \rparenthetical{Discussed in \Sec{calibration}.}
\end{desideratum}

\noindent Calibration as a desirable property dates back to Pascal, and perhaps farther. If things that a reasoner says should happen 30\% of the time actually wind up happening 80\% of the time, then they aren't particularly reliable.

\begin{desideratum}[Non-Dogmatism]\label{des:nondogmatism}
  A good reasoner should not have extreme beliefs about mathematical facts, unless those beliefs have a basis in proof.
  \par\rparenthetical{Discussed in \Sec{conservatism}.}
\end{desideratum}

\noindent It would be worrying to see a mathematical reasoner place extreme confidence in a mathematical proposition, without any proof to back up their belief. The virtue of skepticism is particularly apparent in probability theory, where Bayes' theorem says that a probabilistic reasoner can never update away from ``extreme'' (0 or~1) probabilities. Accordingly, Cromwell's law \mkbibparens{so named by the statistician \citet{Lindley:1991:MakingDecisions}} says that a reasonable person should avoid extreme probabilities except when applied to statements that are logically true or false. We are dealing with logical uncertainty, so it is natural to extend Cromwell's law to say that extreme probabilities should also be avoided on logical statements, except in cases where the statements have been \emph{proven} true or false. In settings where reasoners are able to update away from~0 or~1 probabilities, this means that a good reasoner's beliefs shouldn't be ``stuck'' at probability~1 or~0 on statements that lack proofs or disproofs.

In the domain of logical uncertainty, \Des{nondogmatism} can be traced back to \citet[Sec. 53]{Carnap:1962:LogicalProbability}, and has been demanded by many, including \citet{Gaifman:1982:RichProbabilities,Hutter:2013}.

\begin{desideratum}[Uniform Non-Dogmatism]\label{des:pa}
  A good reasoner should assign a non-zero probability to any computably enumerable consistent theory (viewed as a limit of finite conjunctions).
  \rparenthetical{Discussed in \Sec{conservatism}.}
\end{desideratum}

\noindent For example the axioms of Peano arithmetic are computably enumerable, and if we construct an ever-growing conjunction of these axioms, we can ask that the limit of a reasoner's credence in these conjunctions converge to a value bounded above 0, even though there are infinitely many conjuncts.  The first formal statement of \Des{pa} that we know of is given by \citet{Demski:2012a}, though it is implicitly assumed whenever asking for a set of beliefs that can reason accurately about arbitrary arithmetical claims \mkbibparens{as is done by, e.g., \citet{Savage:1967:personal,Hacking:1967}}.

\begin{desideratum}[Universal Inductivity]\label{des:solomonoff}
  Given enough time to think, the beliefs of a good reasoner should dominate the universal semimeasure.
  \par\rparenthetical{Discussed in \Sec{conservatism}.}
\end{desideratum}

\noindent Good reasoning in general has been studied for quite some time, and reveals some lessons that are useful for the study of good reasoning under deductive limitation. \citet{Solomonoff:1964,Solomonoff:1964a,zvonkin1970complexity,Li:1993} have given a compelling formal treatment of good reasoning assuming logical omniscience in the domain of sequence prediction, by describing an inductive process (known as a universal semimeasure) with a number of nice properties, including (1) it assigns non-zero prior probability to every computable sequence of observations; (2) it assigns higher prior probability to simpler hypotheses; and (3) it predicts as well or better than any computable predictor, modulo a constant amount of error. Alas, universal semimeasures are uncomputable; nevertheless, they provide a formal model of what it means to predict sequences well, and we can ask logically uncertain reasoners to copy those successes. For example, we can ask that they would perform as well as a universal semimeasure if given enough time to think.

\begin{desideratum}[Approximate Bayesianism]\label{des:bayes}
  The reasoner's beliefs should admit of some notion of conditional probabilities, which approximately satisfy both Bayes' theorem and the other desiderata listed here.
  \rparenthetical{Discussed in \Sec{conditionals}.}
\end{desideratum}

\noindent Bayes' rule gives a fairly satisfying account of how to manage empirical uncertainty in principle \mbp{as argued extensively by \citet{Jaynes:2003}}, where beliefs are updated by conditioning a probability distribution. As discussed by \citet{Good:1950:weighing,Glymour:1980:OldEvidence}, creating a distribution that satisfies both coherence and Bayes' theorem requires logical omniscience. Still, we can ask that the approximation schemes used by a limited agent be approximately Bayesian in some fashion, while retaining whatever good properties the unconditional probabilities have.

\begin{desideratum}[Introspection]\label{des:introspection}
  If a good reasoner knows something, she should also know that she knows it.
  \rparenthetical{Discussed in \Sec{introspection}.}
\end{desideratum}

\noindent Proposed by \citet{Hintikka:1962:knowledge}, this desideratum is popular among epistemic logicians. It is not completely clear that this is a desirable property. For instance, reasoners should perhaps be allowed to have ``implicit knowledge'' (which they know without knowing that they know it), and it's not clear where the recursion should stop (do you know that you know that you know that you know that ${1 = 1}$?). This desideratum has been formalized in many different ways; see \citet{Christiano:2013:definability,Campbell:2015:SelfReference} for a sample.

\begin{desideratum}[Self-Trust]\label{des:lob}\label{des:penult}
  A good reasoner thinking about a hard problem should expect that, in the future, her beliefs about the problem will be more accurate than her current beliefs.
  \rparenthetical{Discussed in \Sec{selftrust}.}
\end{desideratum}

\noindent Stronger than self-knowledge is self-\emph{trust}---a desideratum that dates at least back to \citet{Hilbert:1902}, when mathematicians searched for logics that placed confidence in their own machinery. While \citet{Godel:1934} showed that strong forms of self-trust are impossible in a formal proof setting, experience demonstrates that human mathematicians are capable of trusting their future reasoning, relatively well, most of the time. A method for managing logical uncertainty that achieves this type of self-trust would be highly desirable.

\begin{desideratum}[Approximate Inexploitability]\label{des:inexp}\label{des:last}
  It should not be possible to run a Dutch book against a good reasoner in practice.
  \rparenthetical{See \Sec{criterion} for our proposal.}
\end{desideratum}
  
\noindent Expected utility theory and probability theory are both supported in part by ``Dutch book'' arguments which say that an agent is rational if (and only if) there is no way for a clever bookie to design a ``Dutch book'' which extracts arbitrary amounts of money from the reasoner \citep{Von-Neumann:1944,DeFinetti:1937:foresight}. As noted by \citet{Eells:1990:OldEvidence}, these constraints are implausibly strong: all it takes to run a Dutch book according to de Finetti's formulation is for the bookie to know a logical fact that the reasoner does not know. Thus, to avoid being Dutch booked by de Finetti's formulation, a reasoner must be logically omniscient.

\citet{Hacking:1967,Eells:1990:OldEvidence} call for weakenings of the Dutch book constraints, in the hopes that reasoners that are approximately inexploitable would do good approximate reasoning. This idea is the cornerstone of our framework---in particular, we consider reasoners that cannot be exploited in polynomial time, using a formalism defined below. See \Def{lic} for details.

\begin{desideratum}[Gaifman Inductivity]\label{des:gaifman}
  Given a $\Pi_1$ statement $\phi$ (i.e., a universal generalization of the form ``for every $x$, $\psi$''), as the set of examples the reasoner has seen goes to ``all examples'', the reasoner's belief in $\phi$ should approach certainty.
  \par\rparenthetical{Discussed below.}
\end{desideratum}

\noindent Proposed by \citet{Gaifman:1964}, \Des{gaifman} states that a reasoner should ``generalize well'', in the sense that as they see more instances of a universal claim (such as ``for every $x$, $\psi(x)$ is true'') they should eventually believe the universal with probability 1. \Des{gaifman} has been advocated by \citet{Hutter:2013}.

\begin{restatable}[Efficiency]{desideratum}{desefficient}\label{des:efficient}
  The algorithm for assigning probabilities to logical claims should run efficiently, and be usable in practice.
  \rparenthetical{Discussed in \Sec{applications}.}
\end{restatable}

\noindent One goal of understanding ``good reasoning'' in the face of logical uncertainty is to design algorithms for reasoning using limited computational resources. For that, the algorithm for assigning probabilities to logical claims needs to be not only computable, but efficient. \citet{Aaronson:2013:PhilosophersComplexity} gives a compelling argument that solutions to logical uncertainty require understanding complexity theory, and this idea is closely related to the study of bounded rationality \citep{simon1982models} and efficient meta-reasoning \citep{russell1991principles}.

\begin{restatable}[Decision Rationality]{desideratum}{desdes}\label{des:br}
  The algorithm for assigning probabilities to logical claims should be able to target specific, decision-relevant claims, and it should reason about those claims as efficiently as possible given the computing resources available.
  \rparenthetical{Discussed in \Sec{openquestions}.}
\end{restatable}

\noindent This desideratum dates at least back to \citet{Savage:1967:personal}, who asks for an extension to probability theory that takes into account the costs of thinking. For a method of reasoning under logical uncertainty to aid in the understanding of good bounded reasoning, it must be possible for an agent to use the reasoning system to reason efficiently about specific decision-relevant logical claims, using only enough resources to refine the probabilities well enough for the right decision to become clear. This desideratum blurs the line between decision-making and logical reasoning; see \citet{russell1991right,Hay:2012:Selecting} for a discussion.

\begin{restatable}[Answers Counterpossible Questions]{desideratum}{descounterp}\label{des:counterpossibilities}
  When asked questions about contradictory states of affairs, a good reasoner should give reasonable answers.
  \par\rparenthetical{Discussed in \Sec{openquestions}.}
\end{restatable}

\noindent In logic, the principle of explosion says that from a contradiction, anything follows. By contrast, when human mathematicians are asked counterpossible questions, such as ``what would follow from Fermat's last theorem being false?'', they often give reasonable answers, such as ``then there would exist non-modular elliptic curves'', rather than just saying ``anything follows from a contradiction''. \citet{Soares:2015:toward} point out that some deterministic decision-making algorithms reason about counterpossible questions (``what would happen if my deterministic algorithm had the output $a$ vs $b$ vs $c$?''). The topic of counterpossibilities has been studied by philosophers including \citet{Cohen:1990,vander2004counterpossibles,brogaard2007counterpossibles,krakauer2012counterpossibles,bjerring2014counterpossibles}, and it is reasonable to hope that a good logically uncertain reasoner would give reasonable answers to counterpossible questions.

\begin{restatable}[Use of Old Evidence]{desideratum}{desoldev}\label{des:oldevidence}
  When a bounded reasoner comes up with a new theory that neatly describes anomalies in the old theory, that old evidence should count as evidence in favor of the new theory.
  \rparenthetical{Discussed in \Sec{openquestions}.}
\end{restatable}

\noindent The problem of old evidence is a longstanding problem in probability theory \citep{Glymour:1980:OldEvidence}. Roughly, the problem is that a perfect Bayesian reasoner always uses all available evidence, and keeps score for all possible hypotheses at all times, so no hypothesis ever gets a ``boost'' from old evidence. Human reasoners, by contrast, have trouble thinking up good hypotheses, and when they do, those new hypotheses often get a large boost by retrodicting old evidence. For example, the precession of the perihelion of Mercury was known for quite some time before the development of the theory of General Relativity, and could not be explained by Newtonian mechanics, so it was counted as strong evidence in favor of Einstein's theory. \citet{Garber:1983:OldEvidence,Jeffrey:1983:OldEvidence} have speculated that a solution to the problem of logical omniscience would shed light on solutions to the problem of old evidence.\par\bigskip

\noindent Our solution does not achieve all these desiderata. Doing so would be impossible; Desiderata~\ref{des:computable},~\ref{des:coherent}, and~\ref{des:gaifman} cannot be satisfied simultaneously. Further, \citet{Sawin:2013:pi1pi2} have shown that Desiderata~\ref{des:computable},~\ref{des:nondogmatism},~\ref{des:gaifman}, and a very weak form of~\ref{des:coherent} are incompatible; an ideal belief state that is non-dogmatic, Gaifman inductive, and coherent in a weak sense has no computable approximation. Our algorithm is computably approximable, approximately coherent, and non-dogmatic, so it cannot satisfy~\ref{des:gaifman}. Our algorithm also fails to meet~\ref{des:efficient} and~\ref{des:br}, because while our algorithm is computable, it is purely inductive, and so it does not touch upon the decision problem of thinking about what to think about and how to think about it with minimal resource usage. As for~\ref{des:counterpossibilities} and~\ref{des:oldevidence}, the case is interesting but unclear; we give these topics some treatment in \Sec{discussion}.

Our algorithm does satisfy desiderata~\ref{des:first} through~\ref{des:last}. In fact, our algorithm is designed to meet only~\ref{des:computable} and~\ref{des:inexp}, from which~\ref{des:second}-\ref{des:penult} will all be shown to  follow. This is evidence that our logical induction criterion captures a portion of what it means to manage uncertainty about logical claims, analogous to how Bayesian probability theory is supported in part by the fact that a host of good properties follow from a single criterion (``don't be exploitable by a Dutch book''). That said, there is ample room to disagree about how well our algorithm achieves certain desiderata, e.g.\ when the desiderata is met only in the asymptote, or with error terms that vanish only slowly.

\subsection{Related Work}\label{sec:relatedwork}

The study of logical uncertainty is an old topic. It can be traced all the way back to Bernoulli, who laid the foundations of statistics, and later \citet{boole1854investigation}, who was interested in the unification of logic with probability from the start. Refer to \citet{hailperin1996sentential} for a historical account. Our algorithm assigns probabilities to sentences of logic directly; this thread can be traced back through \citet{Los:1955} and later \citet{Gaifman:1964}, who developed the notion of coherence that we use in this paper. More recently, that thread has been followed by \citet{Demski:2012a}, whose framework we use, and \citet{Hutter:2013}, who define a probability distribution on logical sentences that is quite desirable, but which admits of no computable approximation \citep{Sawin:2013:pi1pi2}.

The objective of our algorithm is to manage uncertainty about logical facts (such as facts about mathematical conjectures or long-running computer programs). When it comes to the problem of developing formal tools for manipulating uncertainty, our methods are heavily inspired by Bayesian probability theory, and so can be traced back to Pascal, who was followed by Bayes, Laplace, \citet{kolmogorov1950foundations,savage1954foundations,Carnap:1962:LogicalProbability,Jaynes:2003}, and many others. \citet{polya1990mathematics} was among the first in the literature to explicitly study the way that mathematicians engage in plausible reasoning, which is tightly related to the object of our study.

We are interested in the subject of what it means to do ``good reasoning'' under logical uncertainty. In this, our approach is quite similar to the approach of
\citet{Ramsey:1931,DeFinetti:1937:foresight,Von-Neumann:1944,teller1973conditionalization,lewis1999papers,Joyce:1999}, who each developed axiomatizations of rational behavior and produced arguments supporting those axioms. In particular, they each supported their proposals with Dutch book arguments, and those Dutch book arguments were a key inspiration for our logical induction criterion.

The fact that using a coherent probability distribution requires logical omniscience (and is therefore unsatisfactory when it comes to managing logical uncertainty) dates at least back to \citet{Good:1950:weighing}. \citet{Savage:1967:personal} also recognized the problem, and stated a number of formal desiderata that our solution in fact meets. \citet{Hacking:1967} addressed the problem by discussing notions of approximate coherence and weakenings of the Dutch book criteria. While his methods are ultimately unsatisfactory, our approach is quite similar to his in spirit.

The flaw in Bayesian probability theory was also highlighted by \citet{Glymour:1980:OldEvidence}, and dubbed the ``problem of old evidence'' by \citet{Garber:1983:OldEvidence} in response to Glymor's criticism. \citet{Eells:1990:OldEvidence} gave a lucid discussion of the problem, revealed flaws in Garber's arguments and in Hacking's solution, and named a number of other desiderata which our algorithm manages to satisfy. Refer to \citet{zynda1995old} and \citet{sprenger2015novel} for relevant philosophical discussion in the wake of Eells. Of note is the treatment of \citet{adams1996primer}, who uses logical deduction to reason about an unknown probability distribution that satisfies certain logical axioms. Our approach works in precisely the opposite direction: we use probabilistic methods to create an approximate distribution where logical facts are the subject.

Straddling the boundary between philosophy and computer science, \citet{Aaronson:2013:PhilosophersComplexity} has made a compelling case that computational complexity must play a role in answering questions about logical uncertainty. These arguments also provided some inspiration for our approach, and roughly speaking, we weaken the Dutch book criterion of standard probability theory by considering only exploitation strategies that can be constructed by a polynomial-time machine. The study of logical uncertainty is also tightly related to the study of bounded rationality \citep{simon1982models,russell1991right,rubinstein1998modeling,russell2016rationality}.

\citet{fagin1987belief} also straddled the boundary between philosophy and computer science with early discussions of algorithms that manage uncertainty in the face of resource limitations. \mbp{See also their discussions of uncertainty and knowledge \citep{Fagin:1995:knowledge,Halpern:2003}.} This is a central topic in the field of artificial intelligence (AI), where scientists and engineers have pursued many different paths of research. The related work in this field is extensive, including (but not limited to) work on probabilistic programming \citep{vajda2014probabilistic,mccallum2009factorie,wood2014new,de2015probabilistic}; probabilistic inductive logic programming \citep{muggleton2015latest,de2008probabilistic,de2008logical,kersting2007bayesian}; and meta-reasoning \citep{russell1991principles,zilberstein2008metareasoning,Hay:2012:Selecting}. The work most closely related to our own is perhaps the work of \citet{thimm2009measuring} and others on reasoning using inconsistent knowledge bases, a task which is analogous to constructing an approximately coherent probability distribution. \mbp{See also \citet{muino2011measuring,thimm2013inconsistency,potyka2015probabilistic,potyka2015solving}.} Our framework also bears some resemblance to the Markov logic network framework of \citet{Richardson:2006}, in that both algorithms are coherent in the limit. Where Markov logic networks are specialized to individual restricted domains of discourse, our algorithm reasons about all logical sentences. \mbp{See also \citet{kok2005learning,singla2005discriminative,tran2008event,lowd2007efficient,mihalkova2007mapping,wang2008hybrid,khot2015markov}.}

In that regard, our algorithm draws significant inspiration from Solomonoff's theory of inductive inference \citep{Solomonoff:1964,Solomonoff:1964a} and the developments on that theory made by \citet{zvonkin1970complexity,Li:1993}. Indeed, we view our algorithm as a Solomonoff-style approach to the problem of reasoning under logical uncertainty, and as a result, our algorithm bears a strong resemblance to many algorithms that are popular methods for practical statistics and machine learning; refer to \citet{opitz1999popular,dietterich2000ensemble} for reviews of popular and successful ensemble methods. Our approach is also similar in spirit to the probabilistic numerics approach of \citet{briol2015probabilistic}, but where probabilistic numerics is concerned with algorithms that give probabilistic answers to individual particular numerical questions, we are concerned with algorithms that assign probabilities to all queries in a given formal language. \mbp{See also \citep{briol2015frank,hennig2015probabilistic}.}

Finally, our method of interpreting beliefs as prices and using prediction markets to generate reasonable beliefs bears heavy resemblance to the work of \citet{beygelzimer2012learning}
who use similar mechanisms to design a learning algorithm that bets on events. Our results can be seen as an extension of that idea to the case where the events are every sentence written in some formal language, in a way that learns inductively to predict logical facts while avoiding the standard paradoxes of self-reference.

The work sampled here is only a small sample of the related work, and it neglects contributions from many other fields, including but not limited to epistemic logic \citep{gardenfors1988knowledge,meyer2004epistemic,schlesinger1985range,sowa1999knowledge,guarino1998formal}, game theory \citep{rantala1979urn,hintikka1979impossible,bacharach1994epistemic,lipman1991decide,battigalli1999recent,binmore1992foundations}, paraconsistent logic \citep{
blair1989paraconsistent,priest2002paraconsistent,mortensen2013inconsistent,fuhrmann2013relevant,akama2016paraconsistent} and fuzzy logic \citep{klir1995fuzzy,yen1999fuzzy,gerla2013fuzzy}. The full history is too long and rich for us to do it justice here.

\subsection{Overview}\label{sec:overview}
Our main result is a formalization of \Des{inexp} above, which we call the \emph{\lic{}}, along with a computable algorithm that meets the criterion, plus proofs that formal versions of Desiderata~\ref{des:second}-\ref{des:penult} all follow from the criterion.

In \Sec{notation} we define some notation. In \Sec{criterion} we state the logical induction criterion and our main theorem, which says that there exists a computable logical inductor. The \lic{} is motivated by a series of stock trading analogies, which are also introduced in \Sec{criterion}.

In \Sec{properties} we discuss a number of properties that follow from this criterion, including properties that hold in the limit, properties that relate to pattern-recognition, calibration properties, and properties that relate to self-knowledge and self-trust.

A computable \li{} is described in \Sec{construct}. Very roughly, the idea is that given any trader, it's possible to construct market prices at which they make no trades (because they think the prices are right); and given an enumeration of traders, it's possible to aggregate their trades into one ``supertrader'' (which takes more and more traders into account each day); and thus it is possible to construct a series of prices which is not exploitable by any trader in the enumeration.

In \Sec{selectedproofs} we give a few selected proofs. In \Sec{discussion} we conclude with a discussion of applications of logical inductors, variations on the logical induction framework, speculation about what makes logical inductors tick, and directions for future research. The remaining proofs can be found in the appendix.

\section{Notation}\label{sec:notation}

This section defines notation used throughout the paper. The reader is invited to skim it, or perhaps skip it entirely and use it only as a reference when needed.

\tinysection{Common sets and functions} The set of positive natural numbers is denoted by ${\NN^+}$, where the superscript makes it clear that 0 is not included. We work with $\NN^+$ instead of $\NN^{\ge 0}$ because we regularly consider initial segments of infinite sequences up to and including the element at index $\nn$, and it will be convenient for those lists to have length $\nn$.  Sums written $\sum_{i\leq \nn}(\any)$ are understood to start at $i=1$.  We use $\RR$ to denote the set of real numbers, and $\QQ$ to denote the set of rational numbers. When considering continuous functions with range in $\QQ$, we use the subspace topology on $\QQ$ inherited from $\RR$. We use $\BB$ to denote the set $\{0, 1\}$ interpreted as Boolean values. In particular, Boolean operations like $\land$,~$\lor$,~$\lnot$,~$\implies$ and~$\iff$ are defined on $\BB$, for example, $(1\land 1) = 1$, $\lnot 1 = 0$, and so on.

We write $\fin(X)$ for the set of all finite subsets of $X$, and $\smash{X^{\NN^+}}$ for all infinite sequences with elements in $X$. In general, we use $B^A$ to denote the set of functions with domain $A$ and codomain $B$. We treat the expression $f : A \to B$ as equivalent to $f\in B^A$, i.e., both state that $f$ is a function that takes inputs from the set $A$ and produces an output in the set $B$. We write $f : A\pfun B$ to indicate that $f$ is a partial function from $A$ to $B$. We denote equivalence of expressions that represent functions by $\equiv$, e.g., $(x-1)^2 \equiv x^2-2x+1$. We write $\|\any\|_1$ for the $\ell_1$ norm. When $A$ is an affine combination, $\|A\|_1$ includes the trailing coefficient.

\tinysection{Logical sentences} We generally use the symbols $\phi, \psi, \chi$ to denote well-formed formulas in some language of propositional logic $\Lang$ (such as a theory of first order logic; see below), which includes the basic logical connectives $\lnot$, $\land$, $\lor$, $\implies$, $\iff$, and uses modus ponens as its rule of inference. We assume that $\Lang$ has been chosen so that its sentences can be interpreted as claims about some class of mathematical objects, such as natural numbers or computer programs. We commonly write $\Sentences$ for the set of all sentences in $\Lang$, and $\Theory$ for a set of axioms from which to write proofs in the language. We write $\Theory\vdash\phi$ when $\phi$ can be proven from $\Theory$ via modus ponens.

We will write logical formulas inside quotes $\quot{\!\any\!}$, such as $\phi := \quot{x = 3}$. The exception is after $\vdash$, where we do not write quotes, in keeping with standard conventions. We sometimes define sentences such as ${\phi := \quot{\text{Goldbach's conjecture}}}$, in which case it is understood that the English text could be expanded into a precise arithmetical claim.

We use underlines to indicate when a symbol in a formula should be replaced by the expression it stands for.  For example, if $n:=3$, then $\phi:= \quot{x > \enc\nn}$ means $\phi=\quot{x>3}$, and $\psi := \quot{\enc{\smash{\phi}} \to (x = \enc\nn + 1)}$ means $\psi = \quot{x > 3 \to (x = 3+1)}$. If $\phi$ and $\psi$ denote formulas, then $\lnot \phi$ denotes $\quot{\lnot(\enc{\smash{\phi}})}$ and $\phi \land \psi$ denotes $\quot{(\enc{\smash{\phi}})\land(\enc{\smash{\psi}})}$ and so on. For instance, if $\phi := \quot{x > 3}$ then $\lnot\phi$ denotes $\quot{\lnot(x > 3)}$.

\tinysection{First order theories and prime sentences} 
We consider any theory in first order logic (such as Peano Arithmetic, $\PA$) as a set of axioms that includes the axioms of first order logic, so that modus ponens is the only rule of inference needed for proofs.
As such, we view any first order theory as specified in a propositional calculus \mbp{following \citet{enderton2001mathematical}} whose atoms are the so-called ``prime'' sentences of first order logic, i.e., quantified sentences like $\quot{\exists x \colon \cdots}$, and atomic sentences like $\quot{t_1=t_2}$ and $\quot{R(t_1,\ldots,t_n)}$ where the $t_i$ are closed terms.  Thus, every first-order sentence can be viewed as a Boolean combination of prime sentences with logical connectives (viewing $\quot{\forall x \colon \cdots}$ as shorthand for $\quot{\lnot\exists x \colon \lnot \cdots}$).  For example, the sentence
\[
  \phi := \quot{((1+1=2)\wedge(\forall x \colon x>0))\implies(\exists y \colon \forall z \colon (7>1+1)\implies(y+z>2))}
\]
is decomposed into $\quot{1+1=2}$, $\quot{\exists x \colon \lnot(x>0)}$ and $\quot{\exists y \colon \forall z \colon (7>1+1)\implies(y+z>2)}$, where the leading $\quot{\lnot}$ in front of the second statement is factored out as a Boolean operator.   
In particular, note that while $(7>1+1)$ is a prime sentence, it \emph{does not} occur in the Boolean decomposition of $\phi$ into primes, since it occurs within a quantifier. We choose this view because we will not always assume that the theories we manipulate include the quantifier axioms of first-order logic.

\tinysection{Defining values by formulas} We often view a formula that is free in one variable as a way of defining a particular number that satisfies that formula.  For example, given the formula $X(\nu) = \quot{\nu^2=9 \; \wedge \; \nu>0}$, we would like to think of $X$ as representing the unique value ``3'', in such a way that that we can then have $\quot{5X+1}$ refer to the number $16$.

To formalize this, we use the following notational convention.  Let $\LUV{X}$ be a formula free in one variable.  We write $\LUV{X}(x)$ for the formula resulting from substituting $x$ for the free variable of $\LUV{X}$. If
\[
  \Theory \vdash{\exists x \forall y \colon \LUV{X}(y) \to y=x},
\]
then we say that $\LUV{X}$ defines a unique value (via $\Theory$), and we refer to that value as ``the value'' of $X$. We will be careful in distinguishing between what $\Theory$ can prove about $X(\nu)$ on the one hand, and the values of $X(\nu)$ in different models of $\Theory$ on the other.

If $\LUV{X}_1, \ldots, \LUV{X}_k$ are all formulas free in one variable that define a unique value (via $\Theory$), then for any $k$-place relationship $R$, we write
$
\quot{R(\LUV{X}_1,\LUV{X}_2,\ldots,\LUV{X}_k)}
$
as an abbreviation for
\[
  \quot{\forall x_1x_2\ldots x_k \colon \LUV{X}_1(x_1) \land \LUV{X}_2(x_2) \land \ldots \land \LUV{X}_k(x_k) \to
  R(x_1, x_2, \ldots, x_k)}.
\]
For example,
$\quot{\LUV{Z}=2\LUV{X}+Y}$ 
is shorthand for
\[
  \quot{\forall xyz \colon \LUV{X}(x) \wedge \LUV{Y}(y) \wedge \LUV{Z}(z) \implies z = 2x+y}.
\]
This convention allows us to write concise expressions that describe relationships between well-defined values, even when those values may be difficult or impossible to determine via computation.  

\tinysection{Representing computations} When we say a theory $\Theory$ in first order logic ``\representscomputations'', we mean that its language is used to refer to computer programs in such a way that $\Theory$ satisfies the representability theorem for computable functions.  This means that for every (total) computable function $f : \NN^+ \to \NN^+$, there exists a $\Theory$-formula $\gamma_f$ with two free variables such that for all $\nn,y\in\NN^+$, 
\[
  y=f(\nn) \text{~ if and only if ~} \Theory \vdash \forall \nu \colon \gamma_f(\enc \nn,\nu) \iff \nu = \enc{y},
\]
where ``$\gamma_f(\enc{\nn},\nu)$'' stands, in the usual way, for the formula resulting from substituting an encoding of $\nn$ and the symbol $\nu$ for its free variables.  In particular, note that this condition requires $\Theory$ to be consistent.

When $\Theory$ \representscomputations, we use $\quot{\enc{f}(\enc{n})}$ as shorthand for the formula $\quot{\gamma_f(\enc{n}, \nu)}$.   In particular, since $\quot{\gamma_f(\enc{n}, \nu)}$ is free in a single variable $\nu$ and defines a unique value,  we use $\quot{\enc{f}(\enc{n})}$ by the above convention to write, e.g.,
\[
{\quot{\enc{f}(3) < \enc{g}(3)}}
\]
as shorthand for
\[
  \quot{\forall xy \colon \gamma_f(3, x) \land \gamma_g(3, y) \to x < y}.
\]
In particular, note that writing down a sentence like $\quot{\enc{f}(3) > 4}$ does not involve computing the value $f(3)$; it merely requires writing out the definition of $\gamma_f$. This distinction is important when $f$ has a very slow runtime.

\tinysection{Sequences} We denote infinite sequences using overlines, like $\seq{x} := (x_1, x_2, \ldots)$, where it is understood that $x_i$ denotes the $i$th element of $\seq{x}$, for $i\in {\NN^+}$. To define sequences of sentences compactly, we use parenthetical expressions such as $\phis := (\quot{\enc{n} > 7})_{\nn\in {\NN^+}}$, which defines the sequence
\[
  (\quot{1 > 7}, \quot{2 > 7}, \quot{3 > 7}, \ldots).
\]
We define ${x_{\le n} := (x_1, \ldots, x_n)}$. Given another element $y$, we abuse notation in the usual way and define $(x_{\le \nn}\append y) = (x_1,\ldots,x_n, y)$ to be the list $x_{\le n}$ with $y$ appended at the end. We write $()$ for the empty sequence.

A sequence $\seq{x}$ is called \emph{computable} if there is a computable function $f$ such that $f(\nn)=x_\nn$ for all $\nn\in {\NN^+}$, in which case we say $f$ computes $\seq{x}$.  

\tinysection{Asymptotics} Given any sequences $\seq x$ and $\seq y$, we write 
\begin{align*}
  x_\nn \eqsim_\nn y_\nn & \quad\text{for}\quad \lim_{n \to\infty} x_\nn - y_\nn = 0,\\
  x_\nn \gtrsim_\nn y_\nn & \quad\text{for}\quad \liminf_{n \to\infty} x_\nn - y_\nn \ge 0,\text{~and}\\
  x_\nn \lesssim_\nn y_\nn & \quad\text{for}\quad \limsup_{n \to\infty} x_\nn - y_\nn \le 0.
\end{align*}

%
%
%
%
%
%
%
%
%
%
%
%
%
%
%
%
%

\section{The \LICtitle}\label{sec:framework}\label{sec:criterion}
In this section, we will develop a framework in which we can state the logical induction criterion and a number of properties possessed by logical inductors. The framework will culminate in the following definition, and a theorem saying that computable logical inductors exist for every deductive process.

\begin{key}
\begin{restatable}[The Logical Induction Criterion]{definition}{criterion}\label{def:lic}
  A market $\MP$ is said to satisfy the \textbf{\lic{}} relative to a deductive process $\DP$ if there is no efficiently computable trader $\Trader$ that exploits $\MP$ relative to $\DP$.  A market $\MP$ meeting this criterion is called a \textbf{\li{} over $\bm{\DP}$}.
\end{restatable}
\end{key}

\noindent We will now define markets, deductive processes, efficient computability, traders, and exploitation.

\subsection{Markets}\label{sec:valuations}

We will be concerned with methods for assigning values in the interval $[0, 1]$ to sentences of logic. We will variously interpret those values as prices, probabilities, and truth values, depending on the context. Let $\Lang$ be a language of propositional logic, and let $\Sentences$ be the set of all sentences written in $\Lang$. We then define:

\begin{definition}[Valuation]\label{def:market}
  A \textbf{valuation} is any function $\Valuation : \Sentences \to [0, 1]$.  We refer to $\Valuation(\phi)$ as the value of $\phi$ according to $\Valuation$. A valuation is called rational if its image is in $\mathbb{Q}$.
\end{definition}

\noindent First let us treat the case where we interpret the values as prices.

\begin{definition}[Pricing]\label{def:pricing}
  A \textbf{pricing} $\Pricing : \Sentences \to \QQ \cap [0, 1]$ is any computable rational valuation. If $\Pricing(\phi)=\price$ we say that the price of a $\phi$-share according to $\Pricing$ is $\price$, where the intended interpretation is that a $\phi$-share is worth \$1 if $\phi$ is true.
\end{definition}

\begin{keydef}[Market]\label{def:marketprocess}
  A \textbf{market} $\Market=(\Pricing_1,\Pricing_2,\ldots)$ is a computable sequence of pricings $\Pricing_i : \Sentences \to \QQ \cap [0,1]$.
\end{keydef}

\noindent We can visualize a market as a series of pricings that may change day by day.  The properties proven in \Sec{properties} will apply to any market that satisfies the \lic{}. \Theorem{lc} will show that the prices of a logical inductor can reasonably be interpreted as probabilities, so we will often speak as if the prices in a market represent the beliefs of a reasoner, where $\pt_\nn(\phi)=0.75$ is interpreted as saying that on day $\nn$, the reasoner assigns 75\% probability to $\phi$.

In fact, the logical inductor that we construct in \Sec{construct} has the additional property of being finite at every timestep, which means we can visualize it as a series of finite belief states that a reasoner of interest writes down each day.

\begin{definition}[Belief State]\label{def:belstate}
  A \textbf{belief state} $\BelState : \Sentences \to \QQ \cap [0, 1]$ is a computable rational valuation with finite support, where $\BelState(\phi)$ is interpreted as the probability of~$\phi$ (which is 0 for all but finitely many $\phi$).
\end{definition}

\noindent We can visualize  a belief state as a finite list of $(\phi, \prob)$ pairs, where the $\phi$ are unique sentences and the $\prob$ are rational-number probabilities, and $\BelState(\phi)$ is defined to be $\prob$ if $(\phi, \prob)$ occurs in the list, and $0$ otherwise.

\begin{definition}[Computable Belief Sequence]\label{def:belseq}
  A \textbf{computable belief sequence} $\BelSeq = (\BelState_1,\BelState_2,\ldots)$ is a computable sequence of belief states, interpreted as a reasoner's explicit beliefs about logic as they are refined over time.
\end{definition}
\noindent We can visualize a computable belief sequence as a large spreadsheet where each column is a belief state, and the rows are labeled by an enumeration of all logical sentences. We can then imagine a reasoner of interest working on this spreadsheet, by working on one column per day.

Philosophically, the reason for this setup is as follows. Most people know that the sentence $\quot{1+1\text{\ is even}}$ is true, and that the sentence $\quot{1+1+1+1\text{\ is even}}$ is true. But consider, is the following sentence true?
\[
  \quot{1+1+1+1+1+1+1+1+1+1+1+1+1\text{\ is even}}
\]
To answer, we must pause and count the ones. Since we wish to separate the question of what a reasoner already knows from what they could infer using further computing resources, we require that the reasoner write out their beliefs about logic explicitly, and refine them day by day.

In this framework, we can visualize a reasoner as a person who computes the belief sequence by filling in a large spreadsheet, always working on the $\nn$th column on the $\nn$th day, by refining and extending her previous work as she learns new facts and takes more sentences into account, while perhaps making use of computer assistance. For example, a reasoner who has noticed that $\quot{1 + \cdots + 1\text{\ is even}}$ is true iff the sentence has an even number of ones, might program her computer to write~1 into as many of the true $\quot{1+\cdots+1\text{\ is even}}$ cells per day as it can before resources run out. As another example, a reasoner who finds a bound on the prime gap might go back and update her probability on the twin prime conjecture. In our algorithm, the reasoner will have more and more computing power each day, with which to construct her next belief state.

\subsection{Deductive Processes}\label{sec:worlds}
We are interested in the question of what it means for reasoners to assign ``reasonable probabilities'' to statements of logic. Roughly speaking, we will imagine reasoners that have access to some formal deductive process, such as a community of mathematicians who submit machine-checked proofs to an official curated database. We will study reasoners that ``outpace'' this deductive process, e.g., by assigning high probabilities to conjectures that will eventually be proven, and low probabilities to conjectures that will eventually be disproven, well before the relevant proofs are actually found.

\begin{keydef}[Deductive Process]\label{def:dedproc}
  A \textbf{deductive process} $\DP : \NN^+ \to \fin(\Sentences)$ is a computable nested sequence $\dt_1 \subseteq \dt_2 \subseteq \dt_3 \ldots$ of finite sets of sentences.  We write $\dt_\infty$ for the union $\bigcup_\nn \dt_\nn$.
\end{keydef}

This is a rather barren notion of ``deduction''. We will consider cases where we fix some theory $\Theory$, and $\dt_\nn$ is interpreted as the theorems proven up to and including day $\nn$. In this case, $\DP$ can be visualized as a slow process that reveals the knowledge of $\Theory$ over time. Roughly speaking, we will mainly concern ourselves with the case where $\DP$ eventually rules out all and only the worlds that are inconsistent with $\Theory$.

\begin{definition}[World]\label{def:world}
  A \textbf{world} is any truth assignment $\World:\Sentences\to\BB$. If $\World(\phi)=1$ we say that $\phi$ is \textbf{true in $\World$}. If $\World(\phi)=0$ we say that $\phi$ is \textbf{false in $\World$}. We write $\Worlds$ for the set of all worlds.
\end{definition}

\noindent Observe that worlds are valuations, and that they are not necessarily consistent. This terminology is nonstandard; the term ``world'' is usually reserved for consistent truth assignments. Logically uncertain reasoners cannot immediately tell which truth assignments are inconsistent, because revealing inconsistencies requires time and effort. We use the following notion of consistency:

\begin{definition}[Propositional Consistency]
  A world $\World$ is called \textbf{propositionally consistent}, abbreviated \textbf{p.c.}, if for all $\phi\in\Sentences$, $\World(\phi)$ is determined by Boolean algebra from the truth values that $\World$ assigns to the prime sentences of $\phi$. In other words, $\World$ is p.c.\ if $\World(\phi\land\psi) = \World(\phi) \land \World(\psi)$, $\World(\phi\lor\psi) = \World(\phi)\lor\World(\psi)$, and so on.

  Given a set of sentences $\dt$, we define $\pcworlds(\dt)$ to be the set of all p.c.\ worlds where $\World(\phi)=1$ for all $\phi\in\dt$. We refer to $\pcworlds(\dt)$ as the set of worlds \textbf{propositionally consistent with $\bm{D}$}.
  
  Given a set of sentences $\Theory$ interpreted as a theory, we will refer to $\pcworlds(\Theory)$ as the set of worlds \textbf{consistent with $\bm{\Theory}$}, because in this case $\pcworlds(\Theory)$ is equal to the set of all worlds $\World$ such that
  \[
    \Theory \cup \{\phi\mid \World(\phi)=1\} \cup \{\neg\phi \mid \World(\phi)=0\} \nvdash \bot.
  \]
\end{definition}

\noindent Note that a limited reasoner won't be able to tell whether a given world $\World$ is in $\cworlds(\Theory)$. A reasoner can computably check whether a restriction of $\World$ to a finite domain is propositionally consistent with a finite set of sentences, but that's about it. Roughly speaking, the definition of exploitation (below) will say that a good reasoner should perform well when measured on day $\nn$ by worlds propositionally consistent with $\dt_\nn$, and we ourselves will be interested in deductive processes that pin down a particular theory $\Theory$ by propositional consistency:

\begin{definition}[$\Theory$-Complete]
  Given a theory $\Theory$, we say that a deductive process $\DP$ is \textbf{$\bm{\Theory}$-complete} if
  \[
    \pcworlds(\dt_\infty) = \cworlds(\Theory).
  \]
\end{definition}

As a canonical example, let $\dt_\nn$ be the set of all theorems of $\PA$ provable in at most $\nn$ characters.\footnote{Because $\PA$ is a first-order theory, and the only assumption we made about $\Lang$ is that it is a propositional logic, note that the axioms of first-order logic---namely, specialization and distribution---must be included as theorems in $\DP$.} Then $\DP$ is $\PA$-complete, and a reasoner with access to $\DP$ can be interpreted as someone who on day $\nn$ knows all $\PA$-theorems provable in $\leq \nn$ characters, who must manage her uncertainty about other mathematical facts.

\subsection{Efficient Computability}\label{sec:efc}

We use the following notion of efficiency throughout the paper:

\begin{keydef}[Efficiently Computable]\label{def:ec}
  An infinite sequence $\seq{x}$ is called \textbf{efficiently computable}, abbreviated \textbf{\ec[]}, if there is a computable function $f$ that outputs $x_\nn$ on input $\nn$, with runtime polynomial in $\nn$ (i.e. in the length of $n$ written in unary).
\end{keydef}

\noindent Our framework is not wedded to this definition; stricter notions of efficiency (e.g., sequences that can be computed in $\Oo(n^2)$ time) would yield ``dumber'' inductors with better runtimes, and vice versa. We use the set of polynomial-time computable functions because it has some closure properties that are convenient for our purposes.

\subsection{Traders}\label{sec:traders}

Roughly speaking, traders are functions that see the day $\nn$ and the history of market prices up to and including day $\nn$, and then produce a series of buy and sell orders, by executing a strategy that is continuous as a function of the market history.

A linear combination of sentences can be interpreted as a ``market order'', where $3\phi - 2\psi$ says to buy 3 shares of $\phi$ and sell 2 shares of $\psi$. Very roughly, a trading strategy for day $\nn$ will be a method for producing market orders where the coefficients are not numbers but \emph{functions} which depend (continuously) on the market prices up to and including day $\nn$.

\begin{definition}[Valuation Feature]\label{def:valfeature}
A valuation \textbf{feature} $\feature : [0, 1]^{\Sentences \times \NN^+} \to \RR$ is a continuous function from valuation sequences to real numbers such that $\feature(\seq\Valuation)$ depends only on the initial sequence $\Valuation_{\le \nn}$ for some~$\nn\in\NN^+$ called the \emph{rank} of the feature, $\rank(\feature)$.  For any $\mm\geq\nn$, we define $\feature(\Valuation_{\leq m})$ in the natural way.   We will often deal with features that have range in $[0, 1]$; we call these $[0,1]$-features.

We write $\features$ for the set of all features, $\features_\nn$ for the set of valuation features of rank $\le\nn$, and define an
\textbf{$\bm\features$-progression} $\seq\feature$ to be a sequence of features such that $\feature_\nn \in \features_\nn$.
\end{definition}

\noindent The following valuation features find the price of a sentence on a particular day:

\begin{definition}[Price Feature]
  For each $\phi\in\Sentences$ and $\nn\in\NN^+$, we define a \textbf{price feature} $\phi^{*\nn} \in \features_\nn$ by the formula
  \[
    \pf{\phi}(\seq\Valuation) := \Valuation_\nn(\phi).
  \]
  We call these ``price features'' because they will almost always be applied to a market $\MP$, in which case $\pf{\phi}$ gives the price $\Pricing_n(\phi)$ of $\phi$ on day $\nn$ as a function of $\MP$.
\end{definition}

\noindent Very roughly, trading strategies will be linear combinations of sentences where the coefficients are valuation features. The set of all valuation features is not computably enumerable, so we define an expressible subset:

\begin{restatable}[Expressible Feature]{definition}{expressiblefeatures}\label{def:tf}
  An \textbf{expressible feature} $\exf \in \features$ is a valuation feature expressible by an algebraic expression built from price features $\pf{\phi}$ for each $\nn\in\NN^+$ and $\phi\in\Sentences$, rational 
  numbers, addition, multiplication, $\max(\any, \any)$, and a ``safe reciprocation'' function $\max(1,\any)^{-1}$.  \MainTextOnly{See Appendix~\ref{app:expressiblefeatures} for more details and examples.  \footnote{In particular, expressible features are a generalization of arithmetic circuits.  The specific definition is somewhat arbitrary; what matters is that expressible features be (1) continuous; (2) compactly specifiable in polynomial time; 
and (3) expressive enough to identify a variety of inefficiencies in a market.}}
  
  We write $\exfeatures$ for the set of all expressible features, $\exfeatures_\nn$ for the set of expressible features of rank $\leq \nn$, and define an \textbf{$\bm\exfeatures$-progression} to be a sequence $\seq\exf$ such that $\exf_\nn\in\exfeatures_\nn$.
\end{restatable}

For those familiar with abstract algebra, note that for each~$\nn$, $\exfeatures_\nn$ is a commutative ring.  We will write $2 - \pf[6]{\phi}$ for the function $\seq{\Valuation} \mapsto 2 - \pf[6]{\phi}(\seq{\Valuation})$ and so on, in the usual way. For example, the feature
\[
  \exf := \max(0, \pf[6]{\phi} - \pf[7]{\psi})
\]
checks whether the value of $\phi$ on day 6 is higher than the value of $\psi$ on day 7. If so, it returns the difference; otherwise, it returns 0. If $\exf$ is applied to a market $\MP$, and $\pt_6(\phi)=0.5$ and $\pt_7(\psi)=0.2$, then $\exf(\MP)=0.3$. Observe that $\rank(\exf)=7$, and that $\exf$ is continuous.

The reason for the continuity constraint on valuation features is as follows. Traders will be allowed to use valuation features (which depend on the price history) to decide how many shares of different sentences to buy and sell. This creates a delicate situation, because we'll be constructing a market that has prices which depend on the behavior of certain traders, creating a circular dependency where the prices depend on trades that depend on the prices.

This circularity is related to classic paradoxes of self-trust. What should be the price on a paradoxical sentence $\chi$ that says ``I am true iff my price is less than 50 cents in this market''? If the price is less than 50\textcent, then $\chi$ pays out \$1, and traders can make a fortune buying $\chi$. If the price is 50\textcent{} or higher, then $\chi$ pays out \$0, and traders can make a fortune selling $\chi$. If traders are allowed to have a discontinuous trading strategy---buy $\chi$ if $\pt(\chi)<0.5$, sell $\chi$ otherwise---then there is no way to find prices that clear the market.

Continuity breaks the circularity, by ensuring that if there's a price where a trader buys~$\chi$ and a price where they sell~$\chi$ then there's a price in between where they neither buy nor sell. In \Sec{construct} we will see that this is sufficient to allow stable prices to be found, and in \Sec{introspection} we will see that it is sufficient to subvert the standard paradoxes of self-reference. The continuity constraint can be interpreted as saying that the trader has only finite-precision access to the market prices---they can see the prices, but there is some $\varepsilon > 0$ such that their behavior is insensitive to an $\varepsilon$ shift in prices.

We are almost ready to define trading strategies as a linear combination of sentences with expressible features as coefficients. However, there is one more complication. It will be convenient to record not only the amount of shares bought and sold, but also the amount of cash spent or received. For example, consider again the market order $3\phi - 2\psi$. If it is executed on day 7 in a market $\MP$, and $\pt_7(\phi)=0.4$ and $\pt_7(\psi)=0.3$, then the cost is $3\cdot 40\text{\textcent} - 2\cdot 30\text{\textcent} = 60\text{\textcent}$. We can record the whole trade as an affine combination $-0.6 + 3\phi - 2\psi$, which can be read as ``the trader spent 60 cents to buy 3 shares of $\phi$ and sell 2 shares of $\psi$''. Extending this idea to the case where the coefficients are expressible features, we get the following notion:

\begin{definition}[Trading Strategy]\label{def:tradestrat}
  A \textbf{trading strategy for day $\nn$}, also called an \textbf{$\bm \nn$-strategy}, is an affine combination of the form
  \[
    \trade = \cash + \exf_1\phi_1 + \cdots + \exf_k\phi_k,
  \]
  where $\phi_1,\ldots,\phi_k$ are sentences, $\exf_1,\ldots,\exf_k$ are expressible features of rank $\leq \nn$, and
  \[
    \cash = -\sum_i \exf_i \pf{\phi_i}
  \]
  is a ``cash term'' recording the net cash flow when executing a transaction that buys $\exf_i$ shares of $\phi_i$ for each $i$ at the prevailing market price. (Buying negative shares is called ``selling''.)
  We define $\trade[1]$ to be $\cash$, and $\trade[\phi]$ to be the coefficient of $\phi$ in $\trade$, which is $0$ if $\phi \not\in (\phi_1, \ldots, \phi_k)$.  
  
  An $\nn$-strategy $T$ can be encoded by the tuples $ (\phi_1,\ldots\phi_k)$ and $(\exf_1,\ldots\exf_k)$ because the $\cash$ term is determined by them.  Explicitly, by linearity we have
  \[
    \trade = \exf_1 \cdot (\phi_1 - \pf{\phi_1}) + \cdots + \exf_k \cdot (\phi_k - \pf{\phi_k}),
  \]
  which means any $\nn$-strategy can be written as a linear combination of $(\phi_i - \pf{\phi_i})$ terms, each of which means ``buy one share of $\phi_i$ at the prevailing price''.
\end{definition}

As an example, consider the following trading strategy for day 5:
\[
  \left[
    \pf[5]{(\lnot\lnot\phi)} - \pf[5]{\phi}
  \right]
  \cdot
  \left(
    \phi - \pf[5]{\phi}
  \right)
  +
  \left[
    \pf[5]{\phi} - \pf[5]{(\lnot\lnot\phi)}
  \right]
  \cdot
  \left(
    \lnot\lnot\phi - \pf[5]{(\lnot\lnot\phi)}
  \right).
\]
This strategy compares the price of $\phi$ on day 5 to the price of $\lnot\lnot\phi$ on day 5. If the former is less expensive by $\delta$, it purchases $\delta$ shares of $\phi$ at the prevailing prices, and sells $\delta$ shares of $\lnot\lnot\phi$ at the prevailing prices. Otherwise, it does the opposite. In short, this strategy arbitrages $\phi$ against $\lnot\lnot\phi$, by buying the cheaper one and selling the more expensive one.

We can now state the key definition of this section:

\begin{keydef}[Trader]\label{def:trader}
  A \textbf{trader} $\Trader$ is a sequence $(\trade_1, \trade_2, \ldots)$ where each $\trade_\nn$ is a trading strategy for day $\nn$.
\end{keydef}

\noindent We can visualize a trader as a person who gets to see the day $\nn$, think for a while, and then produce a trading strategy for day $\nn$, which will observe the history of market prices up to and including day $\nn$ and execute a market order to buy and sell different sentences at the prevailing market prices.

We will often consider the set of efficiently computable traders, which have to produce their trading strategy in a time polynomial in $\nn$. We can visualize \ec traders as traders who are computationally limited: each day they get to think for longer and longer---we can imagine them writing computer programs each morning that assist them in their analysis of the market prices---but their total runtime may only grow polynomially in $\nn$.

If $s := \trade_\nn[\phi] > 0$, we say that $\Trader$ buys $s$ shares of $\phi$ on day $\nn$, and if $s < 0$, we say that $\Trader$ sells $|s|$ shares of $\phi$ on day $\nn$.  Similarly, if $d := \trade_\nn[\fconst] > 0$, we say that $\Trader$ receives $d$ dollars  on day $\nn$, and if $d < 0$, we say that $\Trader$ pays out $|d|$ dollars on day $\nn$.

Each trade $\trade_\nn$ has value zero according to $\pt_\nn$, regardless of what market $\MP$ it is executed in. Clever traders are the ones who make trades that are later revealed by a deductive process $\DP$ to have a high worth (e.g., by purchasing shares of provable sentences when the price is low). As an example, a trader $\Trader$ with a basic grasp of arithmetic and skepticism about some of the market $\MP$'s confident conjectures might execute the following trade orders on day $\nn$:
\begin{table}[H]
\centering
\caption{Visualizing markets and trades}
\label{tab:bets}
\begin{tabular}{lll}
  Sentence & Market prices & Trade \\\toprule
  $\phi :\leftrightarrow 1+1=2$ & $\pt_\nn(\phi)=90$\textcent & $\trade_\nn[\phi] = 4$ shares \\\midrule
  $\psi :\leftrightarrow 1+1\neq 2$ & $\pt_\nn(\psi)=5$\textcent & $\trade_\nn[\psi] = -3$ shares \\\midrule
  $\chi :\leftrightarrow  \quot{\text{Goldbach's conjecture}}$ & $\pt_\nn(\chi) = 98$\textcent & $\trade_\nn[\chi] = -1$ share
\end{tabular}
\end{table}
\noindent The net value of the shares bought and sold at these prices would be \[4 \cdot 90\text{\textcent} - 3 \cdot 5\text{\textcent} - 1 \cdot 98\text{\textcent} = \$2.47,\] so if those three sentences were the only sentences bought and sold by $\trade_\nn$, $\trade_\nn[1]$ would be $-2.47$.

Trade strategies are a special case of affine combinations of sentences:

\begin{definition}[Affine Combination]\label{def:affcomsen}
  An \textbf{$\bm\features$-combination} $\aff : \Sentences \cup \{1\} \to \features$ is an affine expression of the form
  \[
    \aff := \affconst + \feature_1\phi_1 + \cdots + \feature_k\phi_k,
  \]
  where $(\phi_1,\ldots,\phi_k)$ are sentences and $(\affconst, \feature_1, \ldots, \feature_k)$ are in $\features$. We define \textbf{$\RR$-combinations}, \textbf{$\QQ$-combinations}, and \textbf{$\bm{\exfeatures}$-combinations} analogously.
 
  We write $\aff[\fconst]$ for the trailing coefficient $\affconst$, and $\aff[\phi]$ for the coefficient of $\phi$, which is $0$ if $\phi \not\in (\phi_1, \ldots, \phi_k)$. The \textbf{rank} of $\aff$ is defined to be the maximum rank among all its coefficients.  Given any valuation $\Valuation$, we abuse notation in the usual way and define the \textbf{value} of $\aff$ (according to $\Valuation$) linearly by:
  \[
    \Valuation(\aff) := \affconst + \feature_1\Valuation(\phi_1) + \cdots + \feature_k\Valuation(\phi_k).
  \]
  An \textbf{$\bm\features$-combination progression} is a sequence $\seq\aff$ of affine combinations where $\aff_\nn$ has rank $\le n$. An \textbf{$\bm\exfeatures$-combination progression} is defined similarly.
\end{definition}

\noindent Note that a trade $\trade$ is an $\features$-combination, and the holdings $\trade \ftn$ from $\trade$ against $\MP$ is a $\QQ$-combination. We will use affine combinations to encode the net holdings $\sum_{i\leq n}\trade_i(\MP)$ of a trader after interacting with a market $\MP$, and later to encode linear inequalities that hold between the truth values of different sentences.

\subsection{Exploitation}\label{sec:exploitation}

We will now define exploitation, beginning with an example. Let $\Lang$ be the language of $\PA$, and $\DP$ be a $\PA$-complete deductive process. Consider a market $\MP$ that assigns $\pt_\nn(\quot{1+1=2})=0.5$ for all $\nn$, and a trader who buys one share of $\quot{1+1=2}$ each day. Imagine a reasoner behind the market obligated to buy and sell shares at the listed prices, who is also obligated to pay out \$1 to holders of $\phi$-shares if and when $\DP$ says $\phi$. Let~$t$ be the first day when $\quot{1+1=2}\in\dt_t$. On each day, the reasoner receives 50\textcent{} from $\Trader$, but after day~$t$, the reasoner must pay \$1 every day thereafter. They lose~50\textcent{} each day, and $\Trader$ gains~50\textcent{} each day, despite the fact that $\Trader$ never risked more than \$$t/2$. In cases like these, we say that $\Trader$ exploits $\MP$.

With this example in mind, we define exploitation as follows:
\begin{keydef}[Exploitation]\label{def:exploitation}
  A trader $\Trader$ is said to \textbf{exploit} a valuation sequence $\seq\Valuation$ relative to a deductive process $\DP$ if the set of values
  \[
    \left\{ \World\mleft({\textstyle \sum_{i \leq n} \trade_i\mleft(\seq\Valuation\mright)}\mright) \,\middle|\, \nn\in\NN^+, \World\in\pcworlds(\dt_\nn) \right\}
  \]
  is bounded below, but not bounded above.
\end{keydef}

Given a world $\World$, the number $\World(\sum_{i \leq \nn}\trade_i(\MP))$ is the value of the trader's net holdings after interacting with the market $\MP$, where a share of $\phi$ is valued at \$1 if $\phi$ is true in $\World$ and \$0 otherwise. The set $\{ \World(\sum_{i \leq n} \trade_i(\MP)) \mid \nn\in\NN^+, \World\in\pcworlds(\dt_\nn) \}$ is the set of all assessments of $\Trader$'s net worth, across all time, according to worlds that were propositionally consistent with $\DP$ at the time. We informally call these \emph{plausible assessments} of the trader's net worth. Using this terminology, \Def{exploitation} says that a trader exploits the market if their plausible net worth is bounded below, but not above.

Roughly speaking, we can imagine that there is a person behind the market who acts as a market maker, obligated to buy and sell shares at the listed prices. We can imagine that anyone who sold a $\phi$-share is obligated to pay \$1 if and when $\DP$ says $\phi$. Then, very roughly, a trader exploits the market if they are able to make unbounded returns off of a finite investment.

This analogy is illustrative but incomplete---traders can exploit the market even if they never purchase a sentence that appears in $\DP$. For example, let $\phi$ and $\psi$ be two sentences such that $(\phi \lor \psi)$ is provable in $\PA$, but such that neither $\phi$ nor $\psi$ is provable in $\PA$. Consider a trader that bought 10 $\phi$-shares at a price of 20\textcent{} each, and 10 $\psi$-shares at a price of 30\textcent{} each. Once $\DP$ says $(\phi\lor\psi)$, all remaining p.c.\ worlds will agree that the portfolio $-5 + 10\phi + 10\psi$ has a value of at least +5, despite the fact that neither $\phi$ nor $\psi$ is ever proven. If the trader is allowed to keep buying $\phi$ and $\psi$ shares at those prices, they would exploit the market, despite the fact that they never buy decidable sentences. In other words, our notion of exploitation rewards traders for arbitrage, even if they arbitrage between sentences that never ``pay out''.

\subsection{Main Result}\label{sec:mainthm}

Recall the \lic{}:

\criterion*

\noindent We may now state our main result:

\begin{key}
\begin{restatable}{theorem}{logindcri}\label{thm:li}
  For any deductive process $\DP$, there exists a computable belief sequence $\MP$ satisfying the \lic{} relative to $\DP$.
\end{restatable}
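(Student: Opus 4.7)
The plan is to build the market $\MP$ by the outline sketched in the overview: enumerate all efficiently computable traders $\Trader^1, \Trader^2, \ldots$ (we can always enumerate polytime Turing machines with clock cutoffs, making each $\Trader^k$ genuinely $\ec$), and construct $\pt_\nn$ day-by-day so that none of the first $\nn$ traders can extract unbounded plausible value against $\MP$. On day $\nn$, I would aggregate the strategies $\trade_\nn^1, \ldots, \trade_\nn^\nn$ into a single \emph{supertrader} strategy $\supertrade_\nn$, after first applying a \textbf{budget} to each $\Trader^k$: replace $\trade_\nn^k$ by an expressible rescaling that zeros it out whenever the plausible net worth of $\Trader^k$ (as an expressible function of earlier prices) has ever threatened to fall below $-k$, say. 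The budgeting keeps $\supertrade_\nn$ continuous and expressible, while guaranteeing that if $\Trader^k$ had finite risk tolerance in the sense required for exploitation, it would never be truncated, and hence its trades appear untouched inside $\supertrade_\nn$ from day $k$ onward.

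The next step is to choose $\pt_\nn$ so that, at those prices, $\supertrade_\nn$ makes a trade whose value is \emph{simultaneously} nonpositive in every p.c.\ world consistent with $\dt_\nn$. This is where continuity of expressible features pays off. Fix the finite set $\Sentences_\nn$ of sentences appearing in $\supertrade_\nn$. For each candidate price vector $\vec{\price} \in [0,1]^{\Sentences_\nn}$, compute the trade $\supertrade_\nn$ against the extended history $(\pt_1, \ldots, \pt_{\nn-1}, \vec{\price})$, and then look at $\max_{\World \in \pcworlds(\dt_\nn)} \World(\supertrade_\nn)$, where the max is over the finitely many p.c.\ truth assignments to the prime sentences appearing. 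One shows this max is a continuous function of $\vec{\price}$, so a Kakutani/Brouwer-style argument produces a fixed point $\vec{\price}^\star$ at which no world in $\pcworlds(\dt_\nn)$ sees the trade as strictly profitable, i.e., $\World(\supertrade_\nn(\pt_1, \ldots, \pt_{\nn-1}, \vec{\price}^\star)) \le 0$ for every $\World \in \pcworlds(\dt_\nn)$. Set $\pt_\nn := \vec{\price}^\star$ (extended by $0$ off $\Sentences_\nn$).

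The hard part is making this fixed point computable and rigorously tying it back to non-exploitation. Exact Brouwer fixed points are not computable in general, so I would instead look for an approximate fixed point up to tolerance $2^{-\nn}$, which suffices by a continuity/compactness argument and is effectively searchable because $\supertrade_\nn$ and the p.c.\ max are both computable Lipschitz-like expressions on a rational grid; one can discretize $[0,1]^{\Sentences_\nn}$ finely enough and exhaustively test. The resulting $\pt_\nn$ is rational, with finite support, and the whole construction runs in finite (albeit enormous) time, so $\MP$ is a computable belief sequence.

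Finally, I would verify the \LIC. Suppose some efficiently computable $\Trader = \Trader^k$ exploited $\MP$; then by definition its plausible net worth $\World(\sum_{i \le \nn} \trade^k_i(\MP))$ is bounded below, so the budget on $\Trader^k$ never activates past some finite day, and the unbounded-above side of exploitation forces the cumulative trades of $\Trader^k$ inside $\supertrade_\nn$ to have unbounded plausible value. But the per-day fixed-point guarantee says each $\supertrade_\nn(\MP)$ is nonpositive in every $\World \in \pcworlds(\dt_\nn)$, so telescoping gives that the plausible total value of $\supertrade$'s trades is bounded above by a constant, contradicting unboundedness of $\Trader^k$'s contribution (the other budgeted traders contribute boundedly by construction). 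The main obstacle I expect is making this last step watertight — in particular, ensuring the budgeting scheme simultaneously (i) keeps each bounded-risk trader faithfully represented in $\supertrade_\nn$ forever, (ii) keeps the total exposure of $\supertrade_\nn$ bounded so that the per-day fixed-point bound telescopes, and (iii) is itself expressible and continuous so that the fixed-point argument goes through. Getting a single rescaling that handles all three cleanly is the technical heart of the proof.
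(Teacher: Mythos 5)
Your overall architecture is the paper's: enumerate clocked \ec traders, budget them, aggregate them into a supertrader, and pick each day's prices by a Brouwer-type fixed point (with a computable rational approximation found by search) so that the day's aggregate trade has essentially nonpositive value; this is exactly what the paper's \marketmaker, \budgeter and \tradingfirm do. The genuine gap is in the budgeting/aggregation step—precisely the part you flag as ``the technical heart.'' First, with a single hard budget of $-k$ attached to $\Trader^k$, your claim that an exploiting trader ``would never be truncated'' is false as stated: exploitation only guarantees that the plausible values $\World\mleft(\sum_{i\le\nn}\trade^k_i(\MP)\mright)$ are bounded below by \emph{some} constant, which may lie far below $-k$, in which case your cutoff zeroes that trader out permanently and the contradiction evaporates. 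Second, your telescoping step needs the aggregate contribution of all the \emph{other} budgeted traders to be bounded below uniformly in $\nn$; with budgets $-k$ and no specified down-weighting the naive bound is $-\sum_{k\le\nn}k$, which diverges, so ``the other budgeted traders contribute boundedly by construction'' is not delivered by the construction you describe. The paper resolves both issues simultaneously in \tradingfirm: each $\Trader^k$ is included with \emph{every} budget $b\in\NN^+$, scaled by $2^{-k-b}$, so that (a) if $\Trader^k$ exploits, some copy with $b$ exceeding its actual risk bound is never altered by \budgeter and still exploits, and (b) since each budgeted copy is worth at least $-b$ in every plausible world, the weighted sum of all other copies is bounded below by $-2$, which is what lets the per-day bound $2^{-\nn}$ telescope into a contradiction. (An alternative patch for (a) is to observe that in the (machine, polynomial-clock) enumeration every \ec trader occurs at arbitrarily large indices, but you would still need geometric weights for (b).)

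Two smaller care points, both handled by the paper and glossed in your sketch. Your budget test must depend only on strictly past prices, or else remain continuous in the current prices: a hard threshold evaluated at day-$\nn$ prices would destroy the continuity your fixed-point argument needs. The paper's \budgeter triggers its cutoff using only days $m<\nn$ and scales the current trade by a continuous factor (an infimum over the finitely many relevant plausible worlds). And ``Kakutani/Brouwer-style argument'' needs an explicit self-map of the price cube whose fixed points have the desired property; continuity of $\max_{\World}\World(\supertrade_\nn)$ alone does not produce a point where it is $\le 0$. The paper's price-adjustment map $\Valuation(\phi)\mapsto\max\mleft(0,\min\mleft(1,\Valuation(\phi)+\trade[\phi]\mright)\mright)$ does the job and yields the stronger conclusion that the trade is nonpositive in \emph{every} world, which in turn makes the search for a rational $2^{-\nn}$-approximate solution verifiable by checking only finitely many restricted worlds—morally the same as your discretization, but without needing any Lipschitz modulus.
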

\end{key}

\begin{proof}
  In \Sec{construct}, we show how to take an arbitrary deductive process $\DP$ and construct a computable belief sequence $\seq\LIA$. \Thm{lia} shows that $\seq\LIA$ is a logical inductor relative to the given $\DP$.
\end{proof}

\begin{definition}[Logical Inductor over $\Theory$]
  Given a theory $\Theory$, a \li{} over a $\Theory$-complete deductive process $\DP$ is called a \textbf{\li{} over $\boldsymbol\Theory$}.
\end{definition}

\begin{corollary}
  For any recursively axiomatizable theory $\Theory$, there exists a computable belief sequence that is a logical inductor over $\Theory$.
\end{corollary}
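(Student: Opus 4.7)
The strategy is to reduce this corollary to Theorem~\ref{thm:li} by constructing, from any recursively axiomatizable theory $\Theory$, a computable $\Theory$-complete deductive process $\DP$. Since $\Theory$ is recursively axiomatizable, there is a computable function enumerating its axioms, and from this we can computably enumerate all finite modus-ponens derivations from $\Theory$, and hence all $\Theory$-theorems. Mirroring the canonical construction for Peano Arithmetic given immediately after Definition~\ref{def:dedproc}, I would let $\dt_\nn$ be the finite set of all theorems of $\Theory$ provable in at most $\nn$ characters. The sequence $\DP := (\dt_\nn)_{\nn \in \NN^+}$ is then a computable nested sequence of finite sets of sentences, hence a deductive process in the sense of Definition~\ref{def:dedproc}.

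Next I would verify that $\DP$ is $\Theory$-complete, i.e., $\pcworlds(\dt_\infty) = \pcworlds(\Theory)$. The inclusion $\pcworlds(\dt_\infty) \subseteq \pcworlds(\Theory)$ holds because $\Theory \subseteq \dt_\infty$, since every axiom of $\Theory$ admits a one-line proof. For the reverse inclusion, observe that $\dt_\infty$ is exactly the deductive closure $\{\psi : \Theory \vdash \psi\}$. Because the paper treats $\Theory$ as a propositional calculus over prime sentences whose sole inference rule is modus ponens, propositional consistency alone is enough to close $\pcworlds(\Theory)$ under derivation: if a p.c.\ world $\World$ satisfies $\Theory$ and $\World(\phi) = 1$ and $\World(\phi \implies \psi) = 1$, then $\World$'s propositional consistency forces $\World(\psi) = 1$. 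Induction on proof length then gives $\pcworlds(\Theory) \subseteq \pcworlds(\dt_\infty)$.

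With $\DP$ a computable $\Theory$-complete deductive process in hand, Theorem~\ref{thm:li} supplies a computable belief sequence $\MP$ satisfying the \lic{} relative to $\DP$, and by the preceding definition this $\MP$ is a logical inductor over $\Theory$. I do not foresee any real obstacle: the essential content is already present in the worked Peano Arithmetic example in the text, and the same construction works uniformly for any recursively axiomatizable theory. The only mild subtlety is the verification that propositional consistency suffices to close $\pcworlds(\Theory)$ under modus ponens, which amounts to checking that the Boolean interpretation of $\phi \implies \psi$ in a p.c.\ world respects modus ponens, an immediate consequence of how p.c.\ worlds evaluate Boolean combinations of prime sentences.
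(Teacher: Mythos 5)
Your proposal is correct and is exactly the argument the paper intends: build the $\Theory$-complete deductive process by letting $\dt_\nn$ be the theorems of $\Theory$ provable in at most $\nn$ characters (the canonical $\PA$ example generalized), verify $\Theory$-completeness by noting that propositionally consistent worlds are closed under modus ponens so that $\pcworlds(\dt_\infty)=\pcworlds(\Theory)$, and then invoke Theorem~\ref{thm:li}. The only caveat is that if the axiomatization is merely computably enumerable rather than decidable, the definition of $\dt_\nn$ needs a routine tweak (e.g., Craig's trick, or admitting only axioms enumerated within $\nn$ steps) to ensure each $\dt_\nn$ is computable and finite; this does not change the argument.
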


%
%
%
%
%
%
%
%
%
%
%
%
%
%
%
%
%

\section{Properties of \LItitle{}s}\label{sec:properties}
Here is an intuitive argument that logical inductors perform good reasoning under logical uncertainty:

\begin{quote}
  Consider any polynomial-time method for efficiently identifying patterns in logic. If the market prices don't learn to reflect that pattern, a clever trader can use that pattern to exploit the market. Thus, a logical inductor must learn to identify those patterns.
\end{quote}

\noindent In this section, we will provide evidence supporting this intuitive argument, by demonstrating a number of desirable properties possessed by logical inductors. The properties that we demonstrate are broken into twelve categories:

\begin{enumerate}
  \item \textbf{Convergence and Coherence:} In the limit, the prices of a logical inductor describe a belief state which is fully logically consistent, and represents a probability distribution over all consistent worlds.
  \item \textbf{Timely Learning:} For any efficiently computable sequence of theorems, a logical inductor learns to assign them high probability in a timely manner, regardless of how difficult they are to prove. (And similarly for assigning low probabilities to refutable statements.)
  \item \textbf{Calibration and Unbiasedness:} Logical inductors are well-calibrated and, given good feedback, unbiased.
  \item \textbf{Learning Statistical Patterns:} If a sequence of sentences appears pseudorandom to all reasoners with the same runtime as the logical inductor, it learns the appropriate statistical summary (assigning, e.g., 10\% probability to the claim ``the $n$th digit of $\pi$ is a 7'' for large $n$, if digits of $\pi$ are actually hard to predict).
  \item \textbf{Learning Logical Relationships:} Logical inductors inductively learn to respect logical constraints that hold between different types of claims, such as by ensuring that mutually exclusive sentences have probabilities summing to at most~1.
  \item \textbf{Non-Dogmatism:} The probability that a logical inductor assigns to an independent sentence $\phi$ is bounded away from 0 and 1 in the limit, by an amount dependent on the complexity of $\phi$. In fact, logical inductors strictly dominate the universal semimeasure in the limit. This means that we can condition logical inductors on independent sentences, and when we do, they perform empirical induction.
  \item \textbf{Conditionals:} Given a logical inductor $\MP$, the market given by the conditional probabilities $\MP(\any\mid\psi)$ is a logical inductor over $\DP$ extended to include $\psi$. Thus, when we condition logical inductors on new axioms, they continue to perform logical induction.
  \item \textbf{Expectations:} Logical inductors give rise to a well-behaved notion of the expected value of a logically uncertain variable.
  \item \textbf{Trust in Consistency:} If the theory $\Theory$ underlying a logical inductor's deductive process is expressive enough to talk about itself, then the logical inductor learns inductively to trust $\Theory$.
  \item \textbf{Reasoning about Halting:} If there's an efficient method for generating programs that halt, a logical inductor will learn in a timely manner that those programs halt (often long before having the resources to evaluate them). If there's an efficient method for generating programs that don't halt, a logical inductor will at least learn not to expect them to halt for a very long time.
  \item \textbf{Introspection:} Logical inductors ``know what they know'', in that their beliefs about their current probabilities and expectations are accurate.
  \item \textbf{Self-Trust:} Logical inductors trust their future beliefs.
\end{enumerate}

\noindent For the sake of brevity, proofs are deferred to \Sec{selectedproofs} and the appendix. Some example proofs are sketched in this section, by outlining discontinuous traders that would exploit any market that lacked the desired property. The deferred proofs define polynomial-time continuous traders that approximate those discontinuous strategies.

In what follows, let $\Lang$ be a language of propositional logic; let $\Sentences$ be the set of sentences written in $\Lang$; let $\Theory \subset \Sentences$ be a computably enumerable set of propositional formulas written in $\Lang$ (such as \PA, where the propositional variables are prime sentences in first-order logic, as discussed in \Sec{notation}); and let $\MP$ be a computable logical inductor over $\Theory$, i.e., a market satisfying the \lic{} relative to some  $\Theory$-complete deductive process $\DP$. We assume in this section that $\Theory$ is consistent.
 
Note that while the computable belief sequence $\seq\LIA$ that we define has finite support on each day, in this section we assume only that $\MP$ is a market. We do this because our results below hold in this more general case, and can be applied to $\seq\LIA$ as a special case.

In sections~\ref{sec:expectations}-\ref{sec:selftrust} we will assume that $\Theory$ \representscomputations.  This assumption is not necessary until \Sec{expectations}.

\subsection{Convergence and Coherence}\label{sec:limitprops}

Firstly, the market prices of a \li{} converge:

\begin{restatable}[Convergence]{theorem}{convergence}\label{thm:con}\label{thm:first}
  The limit ${\pt_\infty:\Sentences\rightarrow[0,1]}$ defined by \[\pt_\infty(\phi) := \lim_{\nn\rightarrow\infty} \pt_\nn(\phi)\] exists for all $\phi$.
\end{restatable}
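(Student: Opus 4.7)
The plan is a contrapositive argument against the \lic{}: assuming $(\pt_\nn(\phi))_\nn$ fails to converge, I will construct an \ec{} trader $\Trader$ that exploits $\MP$. Non-convergence gives $\liminf_\nn \pt_\nn(\phi) < \limsup_\nn \pt_\nn(\phi)$, so I can pick rationals $a < b < c < d$ strictly inside this interval; by the definition of $\liminf$ and $\limsup$, $\pt_\nn(\phi) \le a$ and $\pt_\nn(\phi) \ge d$ each occur for infinitely many $\nn$. These four thresholds parameterize a ``buy low, sell high'' hysteresis trader.

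Concretely, let $\beta : [0,1] \to [0,1]$ be the continuous piecewise-linear function equal to $1$ on $[0, a]$, equal to $0$ on $[b, 1]$, and linear in between, and define $\sigma$ analogously with $\sigma = 0$ on $[0, c]$ and $\sigma = 1$ on $[d, 1]$. Since $b < c$, the supports of $\beta$ and $\sigma$ are disjoint and $\beta + \sigma \le 1$. Define the trader's intended holdings by $h_0 := 0$ and
\[
  h_\nn := \beta(\pt_\nn(\phi)) + \bigl(1 - \beta(\pt_\nn(\phi)) - \sigma(\pt_\nn(\phi))\bigr)\, h_{\nn-1},
\]
and let the $\nn$-strategy $\trade_\nn$ buy $h_\nn - h_{\nn-1}$ shares of $\phi$ at the prevailing price. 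Each $h_\nn$ unrolls into an expressible feature of rank $\le \nn$ built from price features using $\max$, $\min$ (via $-\max(-\any,-\any)$), and rational arithmetic, and the recursion can be maintained in $O(\nn)$ total work, so $\Trader$ is \ec{} and continuous in the market history.

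Three structural facts read directly off the recursion: (i)~$h_\nn \in [0, 1]$ for all $\nn$, since $h_\nn$ is a convex combination of $0$, $1$, and $h_{\nn-1}$ with weights $\sigma$, $\beta$, $1 - \beta - \sigma$; (ii)~the identity $h_\nn - h_{\nn-1} = \beta(\pt_\nn(\phi))(1 - h_{\nn-1}) - \sigma(\pt_\nn(\phi))\, h_{\nn-1}$ shows that buying happens only when $\pt_\nn(\phi) \le b$ and selling only when $\pt_\nn(\phi) \ge c$; (iii)~$\pt_\nn(\phi) \le a$ forces $h_\nn = 1$ and $\pt_\nn(\phi) \ge d$ forces $h_\nn = 0$. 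From (ii), letting $C_\nn$ be cumulative cash and $B_\nn, S_\nn$ the cumulative shares bought and sold, we get $C_\nn \ge c S_\nn - b B_\nn = (c - b) S_\nn - b\, h_\nn \ge -b$; hence in any world $\World$ the plausible net worth $C_\nn + \World(\phi)\, h_\nn$ is uniformly bounded below by $-b$. From (iii) together with the infinitely many indices at which $\pt_\nn(\phi) \le a$ and at which $\pt_\nn(\phi) \ge d$, the trader completes infinitely many full round trips between $h = 1$ and $h = 0$, each contributing $1$ to $S_\nn$; so $S_\nn \to \infty$ and the net worth becomes unbounded above. This is exploitation in the sense of Definition~\ref{def:exploitation}, contradicting the \lic{}.

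The main technical obstacle is that the trader must be simultaneously continuous in the price history, \ec{}, and \emph{guaranteed} to realize positive profit per oscillation regardless of the speed at which the price moves. The obvious continuous strategy ``hold $f(\pt_\nn(\phi))$ shares for a fixed decreasing $f$'' fails this last requirement: under slow oscillations through the transition band, the Riemann--Stieltjes integral of $-f\, dp$ around each loop tends to zero, so profit per cycle shrinks to zero and total profit can stay bounded. The hysteresis construction circumvents this by inserting the dead zone $[b, c]$ in which no trading occurs, forcing the trader to hold position through the middle of each swing and only reverse when the price clearly leaves the middle; this pins down a profit of at least $c - b$ per complete round trip between the buy region $[0, a]$ and the sell region $[d, 1]$, independent of the intermediate trajectory.
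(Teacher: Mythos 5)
Your proposal is correct and is essentially the paper's own proof: your hysteresis trader with holdings $h_\nn = \beta(1-h_{\nn-1}) - \sigma h_{\nn-1} + h_{\nn-1}$ is exactly the construction in the paper's proof of \Thm{con}, where $\beta$ and $\sigma$ are the continuous threshold indicators $\ctsind{\varepsilon/2}(\pf{\phi} < p-\varepsilon/2)$ and $\ctsind{\varepsilon/2}(\pf{\phi} > p+\varepsilon/2)$ (your $a<b<c<d$ playing the roles of $p-\varepsilon$, $p-\varepsilon/2$, $p+\varepsilon/2$, $p+\varepsilon$). The analysis also matches: holdings capped in $[0,1]$ give the uniform lower bound on plausible value, and the divergent number of round trips (your $S_\nn\to\infty$, the paper's divergent total variation of $\hold_\nn[\phi]$) gives unbounded profit, differing only in bookkeeping.
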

\begin{restatable}[\ref{sec:convergence} or~\ref{app:con}]{sketch}{sketchcon}
  Roughly speaking, if $\MP$ never makes up its mind about $\phi$, then it can be exploited by a trader arbitraging shares of $\phi$ across different days. More precisely, suppose by way of contradiction that the limit $\pt_\infty(\phi)$ does not exist. Then for some $p\in [0, 1]$ and $\varepsilon > 0$, we have $\pt_\nn(\phi) < p-\varepsilon$ infinitely often and also $\pt_\nn(\phi) > p+\varepsilon$  infinitely often. A trader can wait until $\pt_\nn(\phi) < p-\varepsilon$ and then buy a share in $\phi$ at the low market price of $\pt_\nn(\phi)$. Then the trader waits until some later $m$ such that $\pt_m(\phi) > p+\varepsilon$, and sells back the share in $\phi$ at the higher price. This trader makes a total profit of $2\varepsilon$ every time $\pt_\nn(\phi)$ oscillates in this way, at no risk, and therefore exploits $\MP$. Since $\MP$ implements a logical inductor, this is not possible; therefore the limit $\pt_\infty(\phi)$ must in fact exist.
\end{restatable}

\noindent This sketch showcases the main intuition for the convergence of $\MP$, but elides a number of crucial details. In particular, the trader we have sketched makes use of discontinuous trading functions, and so is not a well-formed trader. These details are treated in \Sec{convergence}.

Next, the limiting beliefs of a \li{} represent a coherent probability distribution:

\begin{restatable}[Limit Coherence]{theorem}{limitcoherence}\label{thm:lc}
  $\pt_\infty$ is coherent, i.e., it gives rise to an internally consistent probability measure $\Bayesian$ on the set $\cworlds(\Theory)$ of all worlds consistent with $\Theory$, defined by the formula \[\Bayesian(\World(\phi)=1):=\pt_\infty(\phi).\]
  In particular, if $\Theory$ contains the axioms of first-order logic, then $\pt_\infty$ defines a probability measure on the set of first-order completions of~$\Theory$.
\end{restatable}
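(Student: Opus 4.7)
The plan is to reduce Limit Coherence to verifying two propositional coherence axioms for $\pt_\infty$ on $\Sentences$, namely
(i)~$\pt_\infty(\phi)=1$ whenever $\Theory\vdash\phi$, and
(ii)~$\pt_\infty(\phi\lor\psi)+\pt_\infty(\phi\land\psi)=\pt_\infty(\phi)+\pt_\infty(\psi)$ for all $\phi,\psi\in\Sentences$.
Together these would yield $\pt_\infty(\neg\phi)=1-\pt_\infty(\phi)$ (applied to $\phi\lor\neg\phi$) and additivity over every $\Theory$-inconsistent pair, so $\pt_\infty$ would be a Gaifman-coherent probability assignment on $\Sentences$.  An appeal to Gaifman's extension theorem would then produce a unique countably additive measure $\Bayesian$ on the Stone space of truth assignments satisfying $\Bayesian(\{W:W(\phi)=1\})=\pt_\infty(\phi)$; by~(i), $\Theory$-theorems have full measure, so $\Bayesian$ is supported on $\pcworlds(\Theory)=\cworlds(\Theory)$, the last equality holding by $\Theory$-completeness of $\DP$.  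In the first-order case, $\cworlds(\Theory)$ is canonically identified with the set of complete consistent extensions of $\Theory$, yielding the stated refinement.

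I would establish both axioms by contradiction, exhibiting an $\ec$ trader that exploits $\MP$ relative to $\DP$.  For~(i), assume $\Theory\vdash\phi$ but $\pt_\infty(\phi)\le 1-\varepsilon$ for some rational $\varepsilon>0$.  By $\Theory$-completeness of $\DP$ and Theorem~\ref{thm:con}, there is an $M$ with $\phi\in\dt_M$ and $\pt_n(\phi)\le 1-3\varepsilon/4$ for all $n\ge M$.  I would then use the trader with fixed $n$-strategy
\[
  \trade_n \;:=\; \max\!\bigl(0,\,(1-\varepsilon/2)-\pf{\phi}\bigr)\cdot\bigl(\phi-\pf{\phi}\bigr),
\]
which buys at least $\varepsilon/4$ shares of $\phi$ on every day $n\ge M$ at a price at most $1-\varepsilon/2$.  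For every $n\ge M$ and every $W\in\pcworlds(\dt_n)$, we have $W(\phi)=1$, so $W(\trade_n(\MP))=\Omega(\varepsilon^2)$, while the days before $M$ contribute at most $2M$ in absolute value.  Hence the set $\{W(\sum_{i\le n}\trade_i(\MP)):n\in\NN^+,\,W\in\pcworlds(\dt_n)\}$ is bounded below but unbounded above, contradicting the logical induction criterion.  A mirror selling strategy handles $\Theory\vdash\neg\phi$.

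For~(ii), I would exploit the Boolean identity $W(\phi)+W(\psi)=W(\phi\lor\psi)+W(\phi\land\psi)$ valid in every p.c.\ world, which makes the portfolio $\aff := \phi+\psi-(\phi\lor\psi)-(\phi\land\psi)$ worth exactly $0$ against every $W\in\pcworlds(\emptyset)$.  If (ii) failed, Theorem~\ref{thm:con} would give a nonzero limit for $\pt_n(\aff)$, hence a rational $\varepsilon>0$ and an $M$ past which $\pt_n(\aff)$ has a fixed sign and magnitude at least $\varepsilon/2$.  I would then have the trader trade a continuous amount $\max(0,|\pt_n(\aff)|-\varepsilon/4)$ of $\aff$ every day (buying or selling per that fixed sign), encoded as an expressible linear combination of the four $(\phi_i-\pf{\phi_i})$ terms with the absolute value spelled out as $\max(x,0)+\max(-x,0)$.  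This nets cash of order $\varepsilon$ per day against holdings worth $0$ in every p.c.\ world, producing a second exploitation.

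The principal obstacle will be reconciling the intuitive ``threshold'' behaviour of these traders with the continuity requirement of Definition~\ref{def:trader} and the polynomial-time requirement of Definition~\ref{def:ec}.  The piecewise-linear $\max$-ramps written above are already expressible features per Definition~\ref{def:tf}, hence continuous in the price history, and reusing the \emph{same} fixed formula on every day makes the sequence of strategies trivially $\ec$; the finitely many ``early'' days on which the convergence estimate is not yet in force contribute only a bounded constant, which is absorbed into the lower bound on plausible net worth rather than breaking the upper bound.  Once these details are in place, the two constructions deliver bona fide $\ec$ exploits of $\MP$, the resulting contradictions establish~(i) and~(ii), and Gaifman's theorem closes the proof.
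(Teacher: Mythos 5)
Your overall route is the same as the paper's: exhibit efficiently computable traders that would exploit $\MP$ if $\pt_\infty$ violated Gaifman-style propositional conditions, then invoke Gaifman's extension result; your trader for provable sentences is essentially the paper's, and your modularity portfolio $\phi+\psi-(\phi\lor\psi)-(\phi\land\psi)$ is a nice minor variant of the paper's third trader, since its holdings are worth exactly $0$ in every propositionally consistent world and so need no waiting on $\DP$. The genuine problem is the reduction itself: axioms (i) and (ii) alone do not imply coherence, and in particular do not yield $\pt_\infty(\neg\phi)=1-\pt_\infty(\phi)$. The constant valuation $\pt_\infty\equiv 1$ satisfies (i) and satisfies (ii) (the identity reads $1+1=1+1$), yet it assigns probability $1$ to every contradiction. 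What is missing is the dual condition that $\Theory\vdash\neg\phi$ forces $\pt_\infty(\phi)=0$ --- exactly the second of the three conditions the paper checks before citing Gaifman. You gesture at it (``a mirror selling strategy handles $\Theory\vdash\neg\phi$''), but it never enters your axiom list, and your closing claim that ``the resulting contradictions establish (i) and (ii), and Gaifman's theorem closes the proof'' is therefore not a complete argument. The repair is cheap: promote the mirror seller to a third axiom (iii) $\Theory\vdash\neg\phi\Rightarrow\pt_\infty(\phi)=0$; then (iii) gives $\pt_\infty(\phi\land\psi)=0$ whenever $\Theory\vdash\neg(\phi\land\psi)$, (ii) converts this into the required finite additivity, and (i) with (iii) gives complementation, recovering precisely the hypotheses under which the paper applies Gaifman's theorem.

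A smaller imprecision in your trader for (i): $\Theory$-completeness of $\DP$ does not guarantee that the theorem $\phi$ itself ever appears in some $\dt_M$; it only guarantees $\pcworlds(\dt_\infty)=\cworlds(\Theory)$. What you actually need --- and what does hold, by compactness of propositional logic applied to the nested sets $\dt_\nn$ --- is that for some $M$, every $\World\in\pcworlds(\dt_\nn)$ with $\nn\ge M$ has $\World(\phi)=1$, i.e.\ that $\dt_M$ propositionally entails $\phi$. Your subsequent estimates use only this weaker fact, so the construction survives, but the step ``there is an $M$ with $\phi\in\dt_M$'' should be restated accordingly. With these two repairs the proposal matches the paper's proof in substance.
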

\begin{sketch}[\ref{sec:limitcoherence} or~\ref{app:lc}]
  The limit $\pt_\infty(\phi)$ exists by the convergence theorem, so $\Bayesian$ is well-defined. \citet{Gaifman:1964} shows that $\Bayesian$ defines a probability measure over $\pcworlds(\dt_\infty)$ so long as the following three implications hold for all sentences $\phi$ and $\psi$:
  \begin{itemize}
    \item If $\Theory \vdash \phi$, then $\pt_\infty(\phi) = 1$,
    \item If $\Theory \vdash \lnot \phi$, then $\pt_\infty(\phi) = 0$,
    \item If $\Theory \vdash \lnot(\phi \land \psi)$, then $\pt_\infty(\phi \lor \psi) = \pt_\infty(\phi) + \pt_\infty(\psi)$.
  \end{itemize}
  Let us demonstrate each of these three properties.

  First suppose that $\Theory \vdash \phi$, but $\pt_\infty(\phi) = 1-\varepsilon$ for some $\varepsilon>0$. Then shares of $\phi$ will be underpriced, as they are worth 1 in every consistent world, but only cost $1-\varepsilon$. There is a trader who waits until $\phi$ is propositionally provable from $\dt_\nn$, and until $\pt_\nn(\phi)$ has approximately converged, and then starts buying shares of $\phi$ every day at the price $\pt_\nn(\phi)$. Since $\phi$ has appeared in $\DP$, the shares immediately have a minimum plausible value of \$1. Thus the trader makes $1-\pt_\nn(\phi) \approx \varepsilon$ profit every day, earning an unbounded total value, contradicting the \lic{}. But $\MP$ cannot be exploited, so $\pt_\infty(\phi)$ must be~1.

  Similarly, if $\Theory \vdash \lnot\phi$ but $\pt_\infty(\phi) = \varepsilon>0$, then a trader could exploit $\MP$ by selling off shares in $\phi$ for a profit of $\pt_\nn(\phi)\approx \varepsilon$ each day.

  Finally, suppose that $\Theory \vdash \lnot(\phi \land \psi)$, but for some $\varepsilon > 0$,
  \[
    \pt_\infty(\phi \lor \psi) = \pt_\infty(\phi) + \pt_\infty(\psi) \pm \varepsilon.
  \] 
  Then there is a trader that waits until $\pt_n$ has approximately converged on these sentences, and until  $\lnot(\phi \land \psi)$ is propositionally provable from $\dt_\nn$. At that point it's a good deal to sell (buy) a share in $\phi \lor \psi$, and buy (sell) a share in each of $\phi$ and $\psi$; the stocks will have values that cancel out in every plausible world. Thus this trader makes a profit of $\approx \varepsilon$ from the price differential, and can then repeat the process. Thus, they would exploit $\MP$. But this is impossible, so $\pt_\infty$ must be coherent.
\end{sketch}

\Thm{lc} says that if $\MP$ were allowed to run forever, and we interpreted its prices as probabilities, then we would find its beliefs to be perfectly consistent. In the limit, $\MP$ assigns probability~1 to every theorem and~0 to every contradiction. On independent sentences, its beliefs obey the constraints of probability theory; if $\phi$ provably implies $\psi$, then the probability of $\psi$ converges to a point no lower than the limiting probability of $\phi$, regardless of whether they are decidable. The resulting probabilities correspond to a probability distribution over all possible ways that $\Theory$ could be completed.

This justifies interpreting the market prices of a \li{} as probabilities. Logical inductors are not the first computable procedure for assigning probabilities to sentences in a manner that is coherent in the limit; the algorithm of \citet{Demski:2012a} also has this property. The main appeal of logical induction is that their beliefs become reasonable in a timely manner, outpacing the underlying deductive process.

\subsection{Timely Learning}\label{sec:timelylearning}

It is not too difficult to define a reasoner that assigns probability~1 to all (and only) the provable sentences, in the limit: simply assign probability 0 to all sentences, and then enumerate all logical proofs, and assign probability~1 to the proven sentences. The real trick is to recognize patterns in a timely manner, well before the sentences can be proven by slow deduction.

Logical inductors learn to outpace deduction on any efficiently computable sequence of provable statements.\footnote{Recall that a sequence $\seq{x}$ is efficiently computable iff there exists a computable function $\nn \mapsto x_\nn$ with runtime polynomial in $\nn$.} To illustrate, consider our canonical example where $\dt_\nn$ is the set of all theorems of $\PA$ provable in at most $\nn$ characters, and suppose $\phis$ is an \ec sequence of theorems which are easy to generate but difficult to prove. Let $f(\nn)$ be the length of the shortest proof of $\phi_\nn$, and assume that $f$ is some fast-growing function. At any given time $\nn$, the statement $\phi_\nn$ is ever further out beyond $\dt_\nn$---it might take 1 day to prove $\phi_1$, 10 days to prove $\phi_2$, 100 days to prove $\phi_3$, and so on. One might therefore expect that $\phi_\nn$ will also be ``out of reach'' for $\pt_\nn$, and that we have to wait until a much later day close to $f(\nn)$ before expecting $\pt_{f(\nn)}(\phi_\nn)$ to be accurate. However, this is not the case! After some finite time $\nN$, $\smash{\MP}$ will recognize the pattern and begin assigning high probability to $\phis$ in a timely manner.

\begin{restatable}[Provability Induction]{theorem}{restatepi}\label{thm:provind}\label{thm:patfirst}
  Let $\phis$ be an \ec sequence of theorems. Then
  \[
    \pt_\nn(\phi_\nn) \eqsim_\nn 1.
  \]
  Furthermore, let $\psis$ be an \ec sequence of disprovable sentences. Then
  \[
    \pt_\nn(\psi_\nn) \eqsim_\nn 0.
  \]
\end{restatable}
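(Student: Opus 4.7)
\noindent The plan is to argue by contrapositive: if either conclusion fails, I will exhibit an efficiently computable trader that exploits $\MP$ relative to $\DP$, contradicting the \lic{}. For the first claim, suppose there is some $\varepsilon > 0$ with $\pt_\nn(\phi_\nn) < 1 - \varepsilon$ on infinitely many $\nn$. The two facts I rely on are already in hand: by Theorem~\ref{thm:con} each limit $\pt_\infty(\phi_\nn)$ exists, and because each $\phi_\nn$ is a theorem of $\Theory$, Theorem~\ref{thm:lc} forces $\pt_\infty(\phi_\nn) = 1$. Hence for every $\nn$ there is a later day $m$ on which $\pt_m(\phi_\nn) > 1 - \varepsilon/2$.

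Given this, I would build a ``rolling arbitrageur'' $\Trader$ whose total holdings across all sentences never exceed one share. On day $\nn$: if there is free capacity, it buys an amount of $\phi_\nn$ proportional to how far $\pt_\nn(\phi_\nn)$ falls below $1 - \varepsilon$; for each $\phi_j$ bought on an earlier day, it liquidates that holding proportional to how far $\pt_\nn(\phi_j)$ is above $1 - \varepsilon/2$. Using expressible piecewise-linear ramps for ``how far below/above'' keeps the whole strategy continuous in prices. Because the total position always lies in $[0,1]$, for any day $m$ and any $\World \in \pcworlds(\dt_m)$ the plausible worth $\World\bigl(\sum_{i\leq m}\trade_i(\MP)\bigr)$ decomposes as the cumulative realized cash from completed ``rounds'' (a buy at price $< 1-\varepsilon$ followed by a matching sell at price $> 1-\varepsilon/2$, each depositing at least $\varepsilon/2$ of cash), plus a current-round contribution whose magnitude is at most $1$. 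Each open position closes eventually by the $\pt_\infty(\phi_\nn) = 1$ argument above, and each close is followed by another low-price buy by the infinitely-often hypothesis, so the number of completed rounds $k(m) \to \infty$. Thus the set of plausible worths lies in $[k(m)\varepsilon/2 - 1,\ k(m)\varepsilon/2 + 1]$, bounded below by $-1$ but unbounded above --- exactly exploitation, contradicting the \lic{}. The second conclusion is completely symmetric: swap buy and sell, take thresholds $\varepsilon$ and $\varepsilon/2$, and use that $\Theory \vdash \lnot\psi_\nn$ forces $\pt_\infty(\psi_\nn) = 0$ via Theorem~\ref{thm:lc}.

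The main obstacle I anticipate is realizing ``total holdings never exceed one share'' as an honest efficiently computable expressible feature in the sense of \Def{tf} and \Def{ec}. The coefficients of $\trade_\nn$ must be continuous polynomial-time functions of $\Pricing_{\leq\nn}$ built from price features, rationals, addition, multiplication, $\max$, and safe reciprocals, with no symbolic branching on ``which share, if any, am I currently holding?''. I would handle this by encoding the net holding $h_{j,\nn}$ of each $\phi_j$ after day $\nn$ as an explicit algebraic accumulation (a telescoping product of sell-ramps over subsequent days applied to the day-$j$ buy amount), then capping new purchases by the factor $\max\!\bigl(0,\ 1 - \sum_{j<\nn} h_{j,\nn-1}\bigr)$ so that the running total stays in $[0,1]$ by a simple induction on $\nn$. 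The remaining bookkeeping --- verifying that the resulting combination has rank $\leq \nn$, polynomially bounded description size, and cumulative realized profit $\geq (\varepsilon/2)\,k(\nn)$ after accounting for the ramp smoothing --- is careful but essentially mechanical.
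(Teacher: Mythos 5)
Your proof is correct in its essentials, but it takes a genuinely different route from the paper's. The paper obtains Provability Induction as a corollary of more general machinery: in the selected proofs it is the frequency-$1$ special case of \Theorem{benford} (since $\thmind(\phi_i)=1$ for every $i$, the sequence is pseudorandom with frequency $1$ over every class of weightings), and in the appendix it is the special case $\aff_\nn := \phi_\nn$ of \Theorem{affprovind}, which in turn follows from Affine Coherence. You instead argue directly: by \Theorem{con} and \Theorem{lc}, each theorem $\phi_\nn$ has $\pt_\infty(\phi_\nn)=1$, so its price eventually recovers above $1-\varepsilon/2$, and a capacity-capped ``buy low on the diagonal, sell back when the price recovers'' trader converts infinitely many dips below $1-\varepsilon$ into unbounded realized cash while keeping plausible value bounded below by $-1$. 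Note that this also differs from the paper's own proof sketch, whose trader exits by holding until $\DP$ settles $\phi_a$ (so the share is worth $1$ in every plausible world) rather than by selling at a recovered market price. Your route is legitimate (no circularity: \Theorem{lc} is proved before and independently of this theorem), is more self-contained given the results of Section 4.1, and avoids the settlement-tracking machinery (the ``maybe open'' bookkeeping) used in the proof of \Theorem{benford}; the cost is that you must redo the holdings/budget bookkeeping that the paper pays for once inside its general theorems.

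One quantitative slip needs fixing: if the day-$\nn$ purchase is literally ``proportional to how far $\pt_\nn(\phi_\nn)$ falls below $1-\varepsilon$'', then shallow dips (price $=1-\varepsilon-\delta_\nn$ with $\delta_\nn\to 0$ quickly) produce vanishingly small buys, and your claim that each completed round deposits at least $\varepsilon/2$ of cash fails, so the total realized profit can converge and no exploitation results. Use instead a saturating indicator in the paper's style, e.g.\ buy the full free capacity scaled by $\ctsind{\varepsilon/2}(\pf{\phi_\nn} < 1-\varepsilon/2)$, which equals $1$ whenever the price is below $1-\varepsilon$; similarly make the sell ramp saturate at a threshold strictly below $1$ (say $1-\varepsilon/4$), so that since $\pt_m(\phi_j)\to 1$ each open position is liquidated completely in finite time and capacity is genuinely restored before the next dip. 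With those choices your accounting (at least $\varepsilon/2$ of realized cash per full-capacity round, infinitely many such rounds, open-position contribution in $[-1,1]$) goes through.
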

\begin{sketch}[\ref{sec:provind} or~\ref{app:provind}]
  Consider a trader that acts as follows. First wait until the time $a$ when $\pt_a(\phi_a)$ drops below $1 - \varepsilon$ and buy a share of $\phi_a$. Then wait until $\phi_a$ is worth 1 in all worlds plausible at time $f(a)$. Then repeat this process. If $\pt_\nn(\phi_\nn)$ drops below $1 - \varepsilon$ infinitely often, then this trader makes $\varepsilon$ profit infinitely often, off of an initial investment of \$1, and therefore exploits the market. $\MP$ is inexploitable, so $\pt_\nn(\phi_\nn)$ must converge to~1. By a similar argument, $\pt_\nn(\psi_\nn)$ must converge to~0.\footnote{The traders sketched here are optimized for ease of proof, not for efficiency---a clever trader trying to profit from low prices on efficiently computable theorems would be able to exploit the market faster than this.}
\end{sketch}

In other words, $\MP$ will learn to start believing $\phi_\nn$ by day $\nn$ at the latest, despite the fact that $\phi_\nn$ won't be deductively confirmed until day $f(\nn)$, which is potentially much later. In colloquial terms, if $\phis$ is a sequence of facts that can be generated efficiently, then $\MP$ inductively learns the pattern, and its belief in $\phis$ becomes accurate faster than $\DP$ can computationally verify the individual sentences.

For example, imagine that \texttt{prg(n)} is a program with fast-growing runtime, which always outputs either 0, 1, or 2 for all $\nn$, but such that there is no proof of this in the general case. Then
\[
  \quot{\forall x \colon \texttt{prg($x$)}=0 \lor \texttt{prg($x$)}=1 \lor \texttt{prg($x$)}=2}
\]
is \emph{not} provable. Now consider the sequence of statements
\[
  \seq{\prgzot} := \big( \quot{\texttt{prg($\enc{n}$)}=0 \lor \texttt{prg($\enc{n}$)}=1 \lor \texttt{prg($\enc{n}$)}=2} \big)_{\nn\in {\NN^+}}
\]
where each $\prgzot_\nn$ states that \texttt{prg} outputs a 0, 1, or 2 on that $\nn$ in particular. Each individual $\prgzot_\nn$ is provable (it can be proven by running \texttt{prg} on input $n$), and $\seq{\prgzot}$ is efficiently computable (because the sentences themselves can be written down quickly, even if \texttt{prg} is very difficult to evaluate). Thus, provability induction says that any logical inductor will ``learn the pattern'' and start assigning high probabilities to each individual $\prgzot_\nn$ no later than day~$\nn$.

Imagine that $\DP$ won't determine the output of \texttt{prg($\nn$)} until the $f(\nn)$th day, by evaluating \texttt{prg($\nn$)} in full. Provability induction says that $\MP$ will eventually recognize the pattern $\seq{\prgzot}$ and start assigning high probability to $\prgzot_\nn$ no later than the $\nn$th day, $f(\nn)-\nn$ days before the evaluation finishes. This is true regardless of the size of $f(n)$, so if $f$ is fast-growing, $\smash{\MP}$ will outpace $\smash{\DP}$ by an ever-growing margin.

\begin{quote}
  \textbf{Analogy: Ramanujan and Hardy.} Imagine that the statements $\phis$ are being output by an algorithm that uses heuristics to generate mathematical facts without proofs, playing a role similar to the famously brilliant, often-unrigorous mathematician Srinivasa Ramanujan. Then $\MP$ plays the historical role of the beliefs of the rigorous G.H.\ Hardy who tries to verify those results according to a slow deductive process ($\smash{\DP}$). After Hardy ($\MP$) verifies enough of Ramanujan's claims ($\phi_{\le \nn}$), he begins to trust Ramanujan, even if the proofs of Ramanujan's later conjectures are incredibly long, putting them ever-further beyond Hardy's current abilities to rigorously verify them. In this story, Hardy's inductive reasoning (and Ramanujan's also) outpaces his deductive reasoning.
\end{quote}

This idiom of assigning the right probabilities to $\phi_\nn$ no later than day~$\nn$ will be common throughout the paper, so we give it a name.

\begin{definition}[Timely Manner]\label{def:timelymanner}
  Let $\phis$ be an \ec sequence of sentences, and $\probs$ be an \ec sequence of rational numbers. We say that $\MP$ assigns $\probs$ to $\phis$ in a \textbf{timely manner} if for every $\varepsilon > 0$, there exists a time $\nN$ such that for all $\nn > \nN$,
  \[
    |\pt_\nn(\phi_\nn) - \prob_\nn| < \varepsilon.
  \]
  In other words, $\MP$ assigns $\probs$ to $\phis$ in a timely manner if
  \[
    \pt_\nn(\phi_\nn) \eqsim_\nn \prob_\nn.
  \]
\end{definition}

\noindent Note that there are no requirements on how large $\nN$ gets as a function of $\varepsilon$. As such, when we say that $\MP$ assigns probabilities $\probs$ to $\phis$ in a timely manner, it may take a very long time for convergence to occur. (See \Sec{bounds} for a discussion.)

As an example, imagine the reasoner who recognizes that sentences of the form $\quot{1+1+\cdots+1\text{\ is even}}$ are true iff the number of ones is even. Let $\phis$ be the sequence where $\phi_\nn$ is the version of that sentence with $2n$ ones. If the reasoner starts writing a probability near 100\% in the $\phi_\nn$ cell by day $\nn$ at the latest, then intuitively, she has begun incorporating the pattern into her beliefs, and we say that she is assigning high probabilities to $\phis$ in a timely manner.

We can visualize ourselves as taking $\MP$'s belief states, sorting them by $\phis$ on one axis and days on another, and then looking at the main diagonal of cells, to check the probability of each $\phi_\nn$ on day $\nn$. Checking the $\nn$th sentence on the $\nn$th day is a rather arbitrary choice, and we might hope that a good reasoner would assign high probabilities to \ec sequences of theorems at a faster rate than that. It is easy to show that this is the case, by the closure properties of efficient computability. For example, if $\phis$ is an \ec sequence of theorems, then so are $\phis_{2n}$ and $\phis_{2n+1}$, which each enumerate half of $\smash{\phis}$ at twice the speed, so by \Theorem{provind}, $\MP$ will eventually learn to believe $\phis$ at a rate of at least two per day. Similarly, $\MP$ will learn to believe $\phis_{3n}$ and $\phis_{n^2}$ and $\phis_{10n^3 + 3}$ in a timely manner, and so on. Thus, up to polynomial transformations, it doesn't really matter which diagonal we check when checking whether a logical inductor has begun ``noticing a pattern''.

Furthermore, we will show that if $\MP$ assigns the correct probability on the main diagonal, then $\MP$ also learns to keep them there:

\begin{restatable}[Persistence of Knowledge]{theorem}{perkno}\label{thm:perkno}
  Let $\phis$ be an \ec sequence of sentences, and $\probs$ be an \ec sequence of rational-number probabilities. If $\pt_\infty(\phi_\nn)\eqsim_\nn \prob_\nn$, then
  \[
    \sup_{\mm\ge\nn}|\pt_\mm(\phi_\nn)-\prob_\nn|\eqsim_\nn 0.
  \] 
  Furthermore, if $\pt_\infty(\phi_\nn)\lesssim_\nn \prob_\nn$, then
  \[
    \sup_{\mm\geq \nn}\pt_\mm(\phi_\nn)\lesssim_\nn \prob_\nn,
  \]
  and if $\pt_\infty(\phi_\nn)\gtrsim_\nn \prob_\nn$, then
  \[
   \inf_{\mm\geq \nn}\pt_\mm(\phi_\nn)\gtrsim_\nn \prob_\nn.
  \]
  \proofin{\ref{app:perkno}}
\end{restatable}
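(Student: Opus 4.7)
The plan is to reduce the theorem to two one-sided assertions (the symmetric supremum claim follows at once from conjoining them) and to prove each by a contradiction-and-exploit argument in the style of the Convergence Theorem. The $\lesssim$ and $\gtrsim$ versions are mirror images of each other (sell-then-rebuy versus buy-then-resell), so I will sketch only the $\lesssim$ case: assuming $\pt_\infty(\phi_\nn) \lesssim_\nn \prob_\nn$, show $\sup_{\mm \geq \nn} \pt_\mm(\phi_\nn) \lesssim_\nn \prob_\nn$.

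Suppose for contradiction there is some $\varepsilon > 0$ and infinitely many indices $\nn$ for which some day $\mm_\nn \geq \nn$ satisfies $\pt_{\mm_\nn}(\phi_\nn) > \prob_\nn + \varepsilon$. For all sufficiently large $\nn$ the hypothesis gives $\pt_\infty(\phi_\nn) < \prob_\nn + \varepsilon/3$; since $\pt_\mm(\phi_\nn) \to \pt_\infty(\phi_\nn)$ by \Theorem{con}, each such upward excursion is eventually followed by a later day $\mm'_\nn$ with $\pt_{\mm'_\nn}(\phi_\nn) < \prob_\nn + 2\varepsilon/3$. The intended exploiting trader $\Trader$ iterates through $\nn$ in turn: given a current target $\nn$, it waits for the first $\mm \geq \nn$ with $\pt_\mm(\phi_\nn) > \prob_\nn + \varepsilon$, sells one share of $\phi_\nn$, waits until a later $\mm'$ with $\pt_{\mm'}(\phi_\nn) < \prob_\nn + 2\varepsilon/3$, buys one share back, and advances to $\nn+1$. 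Each completed cycle deposits at least $\varepsilon/3$ in cash, and while a cycle is open the position is $-1$ share plus at least $\prob_\nn + \varepsilon$ in cash, whose value in any $\World \in \pcworlds(\dt_\nn)$ is at least $(\prob_\nn + \varepsilon) - 1 \geq -1$. With only one open cycle at a time, $\Trader$'s plausible net worth is bounded below by $-1$ but grows without bound, contradicting the \lic{}.

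The main obstacle is that the literal threshold-and-act rules above are discontinuous in $\MP$, hence do not constitute an $\exfeatures$-combination. The standard fix, as in the proof of \Theorem{con}, is to replace each hard threshold with an expressible Lipschitz ramp built from price features, rational constants, $+$, $\cdot$, and $\max$ per \Def{tf}: the open-sell quantity on day $\mm$ is a ramp rising from $0$ at $\prob_\nn + \varepsilon/3$ to $-1$ at $\prob_\nn + \varepsilon$, and the close-buy quantity is a ramp rising from $0$ at $\prob_\nn + 2\varepsilon/3$ down to $+1$ at $\prob_\nn + \varepsilon/3$. The ``one open cycle at a time'' bookkeeping is encoded by multiplying each day's trade by products of earlier-day ramps that express ``cycle $\nn$ is open and has not yet closed.'' Because $\phis$ and $\probs$ are e.c., all coefficients are expressible features of polynomial rank and the trader as a whole is efficiently computable; the continuous version still accumulates $\Omega(\varepsilon)$ cash per completed cycle, yielding the same contradiction, and the symmetric $\gtrsim$ trader completes the proof.
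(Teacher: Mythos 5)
There is a genuine gap in your exploitation argument: your trader is sequential, and for its current target $\nn$ it ``waits for the first $\mm\ge\nn$ with $\pt_\mm(\phi_\nn)>\prob_\nn+\varepsilon$''. The contradiction hypothesis only supplies \emph{infinitely many} indices with such an excursion, not all of them. As soon as the iteration reaches an index whose price never rises above $\prob_\nn+\varepsilon$ on any day $\mm\ge\nn$ (and no finite amount of price-watching can rule such an index out), the trader waits forever, completes only finitely many cycles, and thereafter holds a fixed finite portfolio; its plausible net worth is then bounded above as well as below, so no violation of the \lic{} is obtained. Repairing this forces you to monitor many sentences concurrently---e.g.\ allow a cycle to open on \emph{any} not-yet-used index $\nn\le\mm$ whose day-$\mm$ price exhibits the excursion, while capping the total open exposure---which is essentially the paper's route: it proves the affine generalization (\Theorem{peraffkno}) using a sequence of traders indexed by \emph{time}, where the $k$-th trader inspects all combinations with index at most $k$ at time $k$ and sells them back as prices rise, and it handles the budgeting of the resulting overlapping long-term positions with the return-on-investment machinery (\Lem{type3}); \Theorem{perkno} is then read off as essentially the special case $\aff_\nn:=\phi_\nn-\prob_\nn$.

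A secondary problem is your choice of ramps: the sell ramp is active on prices in $(\prob_\nn+\varepsilon/3,\,\prob_\nn+\varepsilon]$ while the buy-back ramp is active on $[\prob_\nn+\varepsilon/3,\,\prob_\nn+2\varepsilon/3)$, so the two regions overlap. Fractional shares sold near the bottom of the sell ramp (price close to $\prob_\nn+\varepsilon/3$) can later be repurchased near the top of the buy ramp (price close to $\prob_\nn+2\varepsilon/3$), so the claimed $\Omega(\varepsilon)$ profit per completed cycle fails as parameterized; you need the sell region and the buy-back region separated by a fixed positive gap (as in the proof of \Theorem{con}), with per-cycle profit bounded below by that gap times the quantity actually cycled. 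This is fixable bookkeeping, but the stalling issue in the previous paragraph is the step that breaks the proof as written.
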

\noindent In other words, if $\MP$ assigns $\probs$ to $\phis$ in the limit, then $\MP$ learns to assign probability near~$\prob_\nn$ to~$\phi_\nn$ at all times $\mm \ge \nn$. This theorem paired with the closure properties of the set of efficiently computable sequences means that checking the probability of $\phi_\nn$ on the $\nn$th day is a fine way to check whether $\MP$ has begun recognizing a pattern encoded by $\phis$. As such, we invite the reader to be on the lookout for statements of the form $\pt_\nn(\phi_\nn)$ as signs that $\MP$ is recognizing a pattern, often in a way that outpaces the underlying deductive process.

Theorems~\ref{thm:provind} (\nameref{thm:provind}) and~\ref{thm:perkno} (\nameref{thm:perkno}) only apply when the pattern of limiting probabilities is itself efficiently computable. For example, consider the sequence of sentences
\[
  \seq{\piasev} := \big(\quot{\enc{\pi}[\enc{\Ack}(\enc{\nn}, \enc{\nn})] = 7}\big)_{\nn\in {\NN^+}}
\]
where $\pi[i]$ is the $i$th digit in the decimal expansion of $\pi$ and $\Ack$ is the Ackermann function. Each individual sentence is decidable, so the limiting probabilities are~0 for some $\piasev_\nn$ and~1 for others. But that pattern of 1s and 0s is not efficiently computable (assuming there is no efficient way to predict the Ackermann digits of $\pi$), so provability induction has nothing to say on the topic.

In cases where the pattern of limiting probabilities are not \ec[,] we can still show that if $\MP$ is going to make its probabilities follow a certain pattern eventually, then it learns to make its probabilities follow that pattern in a timely manner. For instance, assume that each individual sentence $\piasev_\nn$ (for $n > 4$) is going to spend a long time sitting at 10\% probability before eventually being resolved to either~1 or~0. Then $\MP$ will learn to assign $\pt_\nn(\piasev_\nn) \approx 0.1$ in a timely manner:

\begin{restatable}[Preemptive Learning]{theorem}{restatetbo}\label{thm:tbo}
  Let $\phis$ be an \ec sequence of sentences. Then
  \[
    \liminf_{\nn \to\infty} \pt_\nn(\phi_\nn) = \liminf_{\nn \to\infty} \sup_{\mm \ge \nn} \pt_\mm(\phi_\nn).
  \]
  Furthermore,
  \[
    \limsup_{\nn \to\infty} \pt_\nn(\phi_\nn) = \limsup_{\nn \to\infty}\inf_{\mm \ge \nn} \pt_\mm(\phi_\nn).
  \]
  \proofin{\ref{app:tbo}}
\end{restatable}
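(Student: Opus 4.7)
The plan is to prove each identity by contradiction, constructing a trader that arbitrages $\phi_n$ across time whenever $\pt_n(\phi_n)$ fails to track $\pt_m(\phi_n)$ for $m \ge n$. The two statements are dual; I focus on the $\liminf$ equality and indicate the modification for $\limsup$ at the end. Write $u_n := \pt_n(\phi_n)$ and $v_n := \sup_{m \ge n} \pt_m(\phi_n)$. Since $v_n \ge u_n$ pointwise, $\liminf_n v_n \ge \liminf_n u_n$ is automatic, so the content of the claim is the reverse inequality.

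Suppose for contradiction that $\liminf_n v_n > \liminf_n u_n$, and choose rationals $L$ and $\varepsilon > 0$ with $\liminf_n u_n < L$ and $\liminf_n v_n > L + \varepsilon$. Then there is some $N$ such that for every $n \ge N$ one can find $m \ge n$ with $\pt_m(\phi_n) > L + \varepsilon/2$, while infinitely many $n \ge N$ satisfy $\pt_n(\phi_n) < L$. The discontinuous strategy to have in mind is a ``one-at-a-time arbitrageur'': on each day $n$ with $u_n < L$ and no outstanding share, buy one share of $\phi_n$; then hold until the first later day $m$ with $\pt_m(\phi_n) > L + \varepsilon/2$ and sell. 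The at-most-one-outstanding-share rule keeps the plausible value of the trader's shareholdings in $[-1, 1]$ throughout, while each completed cycle credits at least $\varepsilon/2$ of cash. By hypothesis infinitely many cycles complete, so the plausible net worth is bounded below by $-1$ but grows without bound, contradicting the \lic{}.

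The main obstacle, and the step that demands real work, is turning this recipe into an honest trader built from expressible features. Following the approximation scheme in the proofs of \Thm{con} and \Thm{provind}, I would replace each discontinuous threshold by a continuous ramp (say, a ramp of $\pt_n(\phi_n)$ climbing from $0$ to $1$ as the price falls through $[L, L + \varepsilon/6]$ for the buy trigger, and a dual ramp of $\pt_m(\phi_n)$ for the sell trigger), and enforce the at-most-one-outstanding-share constraint by multiplying each day's buy ramp by a continuous feature that tracks, through cumulative differences of past ramp values, an upper bound on the currently held position. Shrinking the ramp widths still leaves each completed cycle with a uniformly positive cash profit, so the contradiction survives. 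The delicate part is verifying that the running position-tracking feature is indeed an expressible feature of bounded range and polynomial description length per day; this is possible because each day only appends a summand of constant size to the running expression. Efficient computability of the whole construction then follows from that of $\phis$ together with the constancy of $L$ and $\varepsilon$.

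For the $\limsup$ identity, set $w_n := \inf_{m \ge n} \pt_m(\phi_n)$, so $w_n \le u_n$ and $\limsup_n w_n \le \limsup_n u_n$ holds automatically. The reverse inequality is obtained by the dual trader: whenever $u_n$ exceeds a threshold $L'$ with $\limsup_n w_n < L' - \varepsilon$, short (buy a negative amount of) one share of $\phi_n$, and buy back once $\pt_m(\phi_n)$ falls below $L' - \varepsilon/2$. The plausible value of a single outstanding short position lies in $[-1, 0]$, cycles accrue cash symmetrically, and the same continuity and bookkeeping arguments yield the contradiction.
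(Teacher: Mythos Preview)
Your arbitrage strategy is correct and is the same engine the paper uses; the difference is organizational. The paper derives this theorem in one line as the special case $\aff_n := \phi_n$ of Affine Preemptive Learning (\Thm{affpolymax}), whose proof in turn rests on a Return-on-Investment Lemma (\Lem{type3}): for each $k$ one defines a trader $\Trader^k$ that buys $\phi_k$ on day $k$ if underpriced and gradually sells it back once the price rises, shows each $\Trader^k$ has $(\varepsilon/2)$-ROI, and lets the ROI lemma do the budgeting that combines them into a single exploiter with bounded downside. Your ``one-at-a-time'' rule is that budgeting done by hand. One caution: once the buy and sell thresholds are made continuous, the trader can accumulate fractional positions in several $\phi_k$ at once, so the bookkeeping is not quite ``a summand of constant size per day''---on day $n$ you must carry a sell ramp and a residual holding for each of up to $n$ earlier sentences. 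That is still polynomial, so there is no real gap, but the paper's per-index decomposition cleanly separates the profit-per-opportunity argument from the risk bound and is reused across several theorems, whereas your direct construction is self-contained but specific to this one statement.
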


Let's unpack \Thm{tbo}. The quantity $\sup_{\mm \ge \nn} \pt_\mm(\phi_\nn)$ is an upper bound on the price $\pt_\mm(\phi_\nn)$ on or after day $\nn$, which we can interpret as the highest price tag that that $\MP$ will ever put on $\phi_\nn$ after we first start checking it on day $\nn$. We can imagine a sequence of these values: On day $\nn$, we start watching $\phi_\nn$. As time goes on, its price travels up and down until eventually settling somewhere. This happens for each $\nn$. The limit infimum of $\sup_{\mm \ge \nn} \pt_\mm(\phi_\nn)$ is the greatest lower bound $p$ past which a generic $\phi_\nn$ (for $\nn$ large) will definitely be pushed after we started watching it. \nameref{thm:tbo} says that if $\MP$ always eventually pushes $\phi_\nn$ up to a probability at least $p$, then it will learn to assign each $\phi_\nn$ a probability at least $p$ in a timely manner (and similarly for least upper bounds).

For example, if each individual $\piasev_\nn$ is \emph{eventually} recognized as a claim about digits of $\pi$ and placed at probability 10\% for a long time before being resolved, then $\MP$ learns to assign it probability 10\% on the main diagonal. In general, if $\MP$ is going to learn a pattern eventually, it learns it in a timely manner.

This leaves open the question of whether a logical inductor $\MP$ is smart enough to recognize that the $\seq{\piasev}$ should each have probability 10\% before they are settled (assuming the Ackermann digits of $\pi$ are hard to predict). We will return to that question in \Sec{statpatterns}, but first, we examine the reverse question.

\subsection{Calibration and Unbiasedness}\label{sec:calibration}

\Theorem{provind} shows that logical inductors are good at detecting patterns in what is provable. Next, we ask: when a logical inductor learns a pattern, when must that pattern be real? In common parlance, a source of probabilistic estimates is called \emph{well calibrated} if among statements where it assigns a probability near~$\prob$, the estimates are correct with frequency roughly~$\prob$.

In the case of reasoning under logical uncertainty, measuring calibration is not easy. Consider the sequence $\seq\clusters$ constructed from correlated clusters of size 1, 10, 100, 1000, \ldots, where the truth value of each cluster is determined by the parity of a late digit of $\pi$:
\begin{align*}
  \clusters_1 :\iff & \quot{\pi[\Ack(1,1)]\text{ is even}} \\
  \clusters_2 :\iff \cdots :\iff \clusters_{11} :\iff & \quot{\pi[\Ack(2,2)]\text{ is even}} \\
  \clusters_{12} :\iff \cdots :\iff \clusters_{111} :\iff & \quot{\pi[\Ack(3,3)]\text{ is even}} \\
  \clusters_{112} :\iff \cdots :\iff \clusters_{1111} :\iff & \quot{\pi[\Ack(4,4)]\text{ is even}}
\end{align*}
and so on. A reasoner who can't predict the parity of the Ackermann digits of $\pi$ should assign 50\% (marginal) probability to any individual $\clusters_\nn$ for $\nn$ large. But consider what happens if the 9th cluster turns out to be true, and the next billion sentences are all true. A reasoner who assigned 50\% to those billion sentences was assigning the \emph{right} probabilities, but their calibration is abysmal: on the billionth day, they have assigned 50\% probability a billion sentences that were overwhelmingly true. And if the 12th cluster comes up false, then on the trillionth day, they have assigned 50\% probability to a \emph{trillion} sentences that were overwhelmingly false! In cases like these, the frequency of truth oscillates eternally, and the good reasoner only appears well-calibrated on the rare days where it crosses 50\%.

The natural way to correct for correlations such as these is to check $\MP$'s conditional probabilities instead of its marginal probabilities. This doesn't work very well in our setting, because given a logical sentence $\phi$, the quantity that we care about will almost always be the marginal probability of $\phi$. The reason we deal with sequences is because that lets us show that $\phi$ has reasonable probabilities relative to various related sentences. For example, if $\phi := \quot{\texttt{prg}(32)=17}$, then we can use our theorems to relate the probability of $\phi$ to the probability of the sequence $(\quot{\texttt{prg}(\enc{\nn})=17})_{\nn \in \NN^+}$, and to the sequence $(\quot{\texttt{prg}(32)=\enc{\nn}})_{\nn \in \NN^+}$, and to the sequence $(\quot{\texttt{prg}(\enc{\nn})>\enc{\nn}})_{\nn \in \NN^+}$, and so on, to show that $\phi$ eventually has reasonable beliefs about \texttt{prg} (hopefully before $\MP$ has the resources to simply evaluate \texttt{prg} on input $32$). But at the end of the day, we'll want to reason about the marginal probability of $\phi$ itself. In this case, approximately-well-calibrated conditional probabilities wouldn't buy us much: there are $2^{n-1}$ possible truth assignments to the first $n-1$ elements of $\phis$, so if we try to compute the marginal probability of $\phi_\nn$ from all the different conditional probabilities, exponentially many small errors would render the answer useless. Furthermore, intuitively, if $\phis$ is utterly unpredictable to $\MP$, then the probabilities of all the different truth assignments to $\phi_{\le \nn-1}$ will go to 0 as $\nn$ gets large, which means the conditional probabilities won't necessarily be reasonable. (In \Sec{statpatterns} will formalize a notion of pseudorandomness.)

Despite these difficulties, we can recover some good calibration properties on the marginal probabilities if we either (a) restrict our consideration to sequences where the average frequency of truth converges; or (b) look at subsequences of $\phis$ where $\MP$ has ``good feedback'' about the truth values of previous elements of the subsequence, in a manner defined below.

To state our first calibration property, we will define two different sorts of indicator functions that will prove useful in many different contexts.

\begin{definition}[Theorem Indicator]\label{def:thmind}
  Given a sentence $\phi$, define $\thmind(\phi)$ to be 1 if $\Theory \vdash \phi$ and 0 otherwise.
\end{definition}

\begin{definition}[Continuous Threshold Indicator]\label{def:ctsind}
  Let $\delta > 0$ be a rational number, and $x$ and $y$ be real numbers. We then define
  \[
    \ctsind{\delta}(x > y) :=
    \begin{dcases}
      0&\text{if }  x\leq y\\
      \frac{x-y}{\delta} &\text{if }\hphantom{x\leq\;} y < x\leq y+\delta\\
      1&\text{if } \hphantom{y\leq\; y < x\leq\;} y+\delta < x.
    \end{dcases}
  \]
  Notice that $\ctsind{\delta}(x > y)$ has no false positives, and that it is linear in the region between $y$ and $y+\delta$. We define $\ctsind{\delta}(x < y)$ analogously, and we define
  \[
    \ctsind{\delta}(a < x < b) := \min( \ctsind{\delta}(x > a), \ctsind{\delta}(x < b) ).
  \]
  Observe that we can generalize this definition to the case where $x$ and $y$ are expressible features, in which case ${\ctsind{\delta}(x > y)}$ is an expressible $[0,1]$-feature.
\end{definition}

Now we can state our calibration theorem.

\begin{restatable}[Recurring Calibration]{theorem}{simcal}\label{thm:simcal}
  Let $\phis$ be an \ec sequence of decidable sentences, $a$ and $b$ be rational numbers, $\deltas$ be an \ec sequence of positive rational numbers, and suppose that $\sum_\nn \left(\ctsind{\delta_i}(a<\pt_i(\phi_i)<b)\right)_{i \in \NN^+} = \infty$. Then, if the sequence
  \[
    \left(
    \frac
      {\sum_{i \leq \nn} \ctsind{\delta_i}(a < \pt_i(\phi_i) < b) \cdot \thmind(\phi_i)}
      {\sum_{i \leq \nn} \ctsind{\delta_i}(a < \pt_i(\phi_i) < b)}
    \right)_{\nn\in\NN^+}
  \]
  converges, it converges to a point in $[a, b]$. Furthermore, if it diverges, it has a limit point in $[a, b]$.
  \proofin{\ref{app:simcal}}
\end{restatable}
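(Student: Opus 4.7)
The plan is a proof by contrapositive via the \lic{}. Suppose the set of limit points of the ratio sequence is disjoint from $[a,b]$. Since it is a nonempty closed subset of $[0,1]$ disjoint from the compact $[a,b]$, there exist $\epsilon > 0$ and $\nN$ such that for all $n \ge \nN$ the ratio lies in $[0, a-\epsilon]\cup[b+\epsilon,1]$. At least one of ``the ratio is $\ge b+\epsilon$ infinitely often'' or ``the ratio is $\le a-\epsilon$ infinitely often'' must hold; by the buy/sell symmetry I focus on the former case and construct an exploiting buying trader, the latter case being handled analogously with a selling strategy.

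The natural candidate trader purchases $w_n := \ctsind_{\delta_n}(a < \pf[n]{\phi_n} < b)$ shares of $\phi_n$ on day $n$. This is a rank-$n$ expressible feature computable in polynomial time (since $\phis$ and $\deltas$ are efficiently computable and $a,b$ are fixed rationals), so the trader is efficiently computable. Whenever $w_n > 0$ the price $\pt_n(\phi_n)$ is strictly less than $b$, so each share costs strictly less than $b$. Since each $\phi_i$ is decidable and $\DP$ is $\Theory$-complete, $\phi_i$ eventually enters $\dt_n$ (affirmed or negated), after which the held share is worth exactly $\thmind(\phi_i)$ in every $\World\in\pcworlds(\dt_n)$. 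The eventual worth of the accumulated holdings is $\sum_{i\le n} w_i(\thmind(\phi_i) - \pt_i(\phi_i))$, which by hypothesis dominates $(\epsilon - o(1))\cdot S_n$ along the high-ratio subsequence, where $S_n := \sum_{i\le n} w_i \to \infty$. Hence valuations by suitably chosen p.c.\ worlds at suitably large $n$ are arbitrarily large.

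The main obstacle is controlling downside exposure: at an intermediate day $n$, a p.c.\ world can assign $0$ to every unresolved $\phi_i$, dragging the valuation to $-\sum_{i\text{ unresolved}} w_i \pt_i(\phi_i)$, which need not be bounded in $n$. To remedy this I replace $w_n$ by $g_n w_n$, where $g_n \in [0,1]$ is a continuous, polynomial-time expressible throttle: one computes a pessimistic valuation $\widehat V_{n-1}$ from past prices and $\dt_{n-1}$ by assigning resolved sentences their $\thmind$ value and unresolved sentences the worst-case $0$, then sets $g_n := \ctsind_{\epsilon'}(\widehat V_{n-1} > -L + \epsilon')$ for a fixed budget $L > 0$ and small $\epsilon' > 0$. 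This guarantees $\World(H_n) \ge -L$ uniformly across all $n$ and all $\World\in\pcworlds(\dt_n)$.

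The principal subtlety is showing that the throttle does not permanently shut off: along the high-ratio subsequence, previously-bought shares eventually resolve to true and contribute a net profit of order $\epsilon$ per unit weight to $\widehat V$, so the allowance refills and further trading resumes, letting $S_n$ accumulate along the subsequence and the true-world valuation grow without bound. Quantifying this interplay (between the growth of $\widehat V$ on the resolved portion and the temporary gating cost) so that a single fixed budget $L$ suffices is the hard step; with it, the modified trader exploits $\MP$ and contradicts the \lic{}. The ``convergent'' clause of the theorem is then immediate, since a convergent sequence has a unique limit point.
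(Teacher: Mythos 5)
There is a genuine gap, and it sits exactly where you flagged it: the claim that the throttled trader's budget ``refills'' so that a single fixed budget $L$ suffices is not just the hard step, it is false. Consider a market/deductive process that front-loads the bad cases: for an initial stretch, sentences $\phi_i$ with $\pt_i(\phi_i)\in(a,b)$ are all false, and they resolve quickly. Your trader buys them (the throttle is open at the start), the shares resolve to $0$, and the pessimistic valuation $\widehat V$ drops to roughly $-L$ as a \emph{realized} loss; since nothing remains unresolved, there is nothing left to resolve favorably and $\widehat V$ never recovers, so $g_n\approx 0$ forever after. The adversary then supplies arbitrarily many later days with $\pt_i(\phi_i)\in(a,b)$ and $\phi_i$ true, which drives the ratio above $b+\varepsilon$ (even eventually-always, which your contrapositive hypothesis does give you, since consecutive ratios differ by $o(1)$ once $\sum_i\fuz_i\to\infty$), yet your trader never participates in those days. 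Its plausible value stays near $-L$: bounded below \emph{and} bounded above, so it does not \plexploit[] $\MP$ and no contradiction is reached. The underlying point is that the hypothesis controls the truth frequency only under the weighting $\fuz_i=\ctsind{\delta_i}(a<\pt_i(\phi_i)<b)$, whereas your trader's effective weighting is $g_i\fuz_i$, and a throttle driven by the trader's own past performance can be forced to correlate with exactly the false instances. Fixing this requires letting the strategy come back with fresh budget after bounded losses, i.e., an infinite family of budgeted traders aggregated so that total downside stays bounded while upside diverges---which is the machinery the paper builds for this purpose.

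For comparison, the paper does not prove this theorem by a direct trader at all: it sets $\fuz_i:=\ctsind{\delta_i}(a<\pt_i(\phi_i)<b)$, invokes \Theorem{recurringunbiasedness} (unbiasedness along any \pgenable divergent weighting, proved via a sequence of unit-budget traders and the no-repeatable-ROI lemma, \Lem{type3}), and then finishes with elementary analysis: the $\fuz$-weighted average of the prices always lies in $[a,b]$, the ratio equals that average minus the bias, and Bolzano--Weierstrass applied along a subsequence where the bias vanishes yields a limit point in $[a,b]$ (hence the limit, if the ratio converges). Note also that the unbiasedness theorem only guarantees $0$ as a \emph{limit point} of the bias, not convergence---this weaker conclusion is precisely the shadow of the budgeting difficulty you ran into, so any direct construction must grapple with it rather than assume it away.
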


\noindent Roughly, this says that if $\pt_\nn(\phi_\nn)\approx 80\%$ infinitely often, then if we look at the subsequence where it's 80\%, the limiting frequency of truth on that subsequence is 80\% (if it converges).

In colloquial terms, on subsequences where $\MP$ says 80\% and it makes sense to talk about the frequency of truth, the frequency of truth is 80\%, i.e., $\MP$ isn't seeing shadows. If the frequency of truth diverges---as in the case with $\seq\clusters$---then $\MP$ is still well-calibrated infinitely often, but its calibration might still appear abysmal at times (if they can't predict the swings).

Note that calibration alone is not a very strong property: a reasoner can always cheat to improve their calibration (i.e., by assigning probability 80\% to things that they're sure are true, in order to bring up the average truth of their ``80\%'' predictions). What we really want is some notion of ``unbiasedness'', which says that there is no efficient method for detecting a predictable bias in a logical inductor's beliefs. This is something we can get on sequences where the limiting frequency of truth converges, though again, if the limiting frequency of truth diverges, all we can guarantee is a limit point.

\begin{definition}[Divergent Weighting]\label{def:fuz}
  A \textbf{divergent weighting} $\seq\fuz \in [0, 1]^{\NN^+}$ is an infinite sequence of real numbers in $[0, 1]$, such that $\sum_\nn \fuz_n = \infty$.
\end{definition}

\noindent Note that divergent weightings have codomain $[0, 1]$ as opposed to $\{0,1\}$, meaning the weightings may single out fuzzy subsets of the sequence. For purposes of intuition, imagine that $\seq\fuz$ is a sequence of 0s and 1s, in which case each $\seq\fuz$ can be interpreted as a subsequence. The constraint that the $\fuz_{\nn}$ sum to $\infty$ ensures that this subsequence is infinite.

\begin{definition}[Generable From $\MP$]\label{def:ece}
  A sequence of rational numbers $\seq q$ is called \textbf{generable from $\bm{\MP}$} if 
  there exists an \ec $\exfeatures$-progression $\seq{\gen q}$ such that $\gen{q_\nn}\ftn = q_\nn$ for all~$\nn$. In this case we say that $\seq q$ is \textbf{\pgenable}. \pgenable $\RR$-sequences, $\QQ$-combination sequences, and $\RR$-combination sequences are defined analogously.
\end{definition}

Divergent weightings generable from $\MP$ are fuzzy subsequences that are allowed to depend continuously (via expressible market features) on the market history. For example, the sequence $(\ctsind{0.01}(\pt_\nn(\phi_\nn) > 0.5))_{\nn\in\NN^+}$ is a \pgenable sequence that singles out all times $\nn$ when $\pt_\nn(\phi_\nn)$ is greater than 50\%. Note that the set of \pgenable divergent weightings is larger than the set of \ec divergent weightings, as the \pgenable weightings are allowed to vary continuously with the market prices.

\begin{restatable}[Recurring Unbiasedness]{theorem}{recurringunbiasedness}\label{thm:recurringunbiasedness}
  Given an \ec sequence of decidable sentences $\phis$ and a \pgenable divergent weighting $\seq\fuz$, the sequence
  \[
      \frac
        {\sum_{i\leq \nn}\fuz_i  \cdot(\pt_i(\phi_i)-\thmind(\phi_i))}
        {\sum_{i\leq \nn}\fuz_i}
  \]
  has $0$ as a limit point. In particular, if it converges, it converges to~$0$.
  \par\proofin{\ref{app:recurringunbiasedness}}
\end{restatable}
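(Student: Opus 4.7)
Let $A_N := \sum_{i\leq N}\fuz_i(\pt_i(\phi_i)-\thmind(\phi_i))$ and $W_N := \sum_{i\leq N}\fuz_i$; since $\seq\fuz$ is divergent, $W_N \to \infty$. The plan is to prove the contrapositive: if $0$ is not a limit point of $A_N/W_N$, then some efficiently computable trader exploits $\MP$. From the failure of the limit-point condition we get $\varepsilon>0$ and $N_0$ with $|A_N/W_N|\ge\varepsilon$ for all $N\ge N_0$, and by flipping the sign of the trader below we may assume $A_N\ge\varepsilon W_N$ for large $N$, i.e.\ the market $\fuz$-weighted overprices $\phi$.

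The natural trader to try is $\trade_n := \gen{\fuz_n}\cdot(\pf[n]{\phi_n} - \phi_n)$, which shorts $\fuz_n$ shares of $\phi_n$ at the prevailing day-$n$ price. Since $\seq\fuz$ is \pgenable and $\phis$ is \ec[,] each $\trade_n$ is an $n$-strategy computable in polynomial time. For any $W\in\pcworlds(\dt_N)$, using that $W(\phi_i)=\thmind(\phi_i)$ whenever $\phi_i$ or $\neg\phi_i$ lies in $\dt_N$, the cumulative value is
\[
  W\Bigl(\textstyle\sum_{i\leq N}\trade_i(\MP)\Bigr) \;=\; A_N \;+\; \sum_{i\in J_N}\fuz_i\bigl(\thmind(\phi_i)-W(\phi_i)\bigr),
\]
where $J_N := \{i\leq N : \phi_i,\neg\phi_i\notin\dt_N\}$. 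The correction lies in $[-\sum_{J_N}\fuz_i,\sum_{J_N}\fuz_i]$ while $A_N\ge\varepsilon W_N\to\infty$, so plausible net worth is already unbounded above.

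The main obstacle, and the part I expect to require care, is controlling plausible net worth from below: a priori $\sum_{J_N}\fuz_i$ need not be small compared to $W_N$, because decidable sentences can take an arbitrarily long time to actually appear in $\dt$. The plan for closing this gap is to modify $\Trader$ so that each day-$i$ short is eventually offset by a later buy once the market has recognized the decision of $\phi_i$, using only continuous expressible controls. Concretely, on each day $m\ge i$ the trader adds to its $\phi_i$-position an amount proportional to $\fuz_i\cdot\bigl(\ctsind{\delta_m}(\pf[m]{\phi_i}>1-\delta_m)+\ctsind{\delta_m}(\pf[m]{\phi_i}<\delta_m)\bigr)$ for a vanishing \ec sequence $\delta_m\to 0$, ramped so that the cumulative position in $\phi_i$ reaches exactly $0$ the first time such a gate fully activates. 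By Provability Induction (Theorem~\ref{thm:provind}) applied to the decidable $\phi_i$, the prices $\pt_m(\phi_i)$ converge to $\thmind(\phi_i)$, so every gate eventually fully activates; meanwhile the offsetting buys inject exactly $\fuz_i\thmind(\phi_i)+o(1)$ of negative cash per sentence. Consequently the modified trader's cumulative share exposure in any $\phi_i$ is uniformly bounded, while its cumulative cash position is $A_N-o(W_N)\to\infty$. This gives exploitation of $\MP$, contradicting the \lic{}, and the ``in particular'' clause is immediate because a convergent sequence with $0$ as a limit point converges to $0$.
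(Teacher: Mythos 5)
Your reduction to the one-sided statement (sign-flip plus the observation that successive ratios change by $o(1)$, so the bias is eventually one-signed) is the same skeleton the paper uses: the paper proves the affine generalization \Theorem{recunbiasedaff} by showing $\limsup \ge 0$, applying it to the negated sequence, and combining via exactly that vanishing-increment argument; the present theorem is then quoted as a special case. The gap is in your exploitation step, and it is precisely the gap the paper's no-repeatable-ROI machinery (\Lem{type3}) exists to close. Your close-out gates are triggered by \emph{market prices}, but exploitation is judged by worlds in $\pcworlds(\dt_N)$. First, nothing bounds the total weight of positions still open at time $N$: the hypothesis $A_N \ge \varepsilon W_N$ says nothing about when the market becomes near-certain about each $\phi_i$, each open short can be worth as little as $\fuz_i(\pt_i(\phi_i)-1)$ in some world plausible at time $N$, and the hypothesis constrains only the full sum $A_N$, not the sub-sum over already-closed positions, so the realized cash need not offset this; hence the plausible net worth of your modified trader can still be of order $-c\,W_N$, and ``cumulative share exposure in any $\phi_i$ is uniformly bounded'' is not the relevant quantity (the sum over $i$ is). Second, the realized cash per closed position is $\fuz_i(\pt_i(\phi_i)-q_i)$ where $q_i$ is the price at first full gate activation, and $q_i$ can be near the \emph{wrong} extreme (the market may be temporarily confident and wrong before $\dt$ settles $\phi_i$; no theorem you may cite at this point rules that out), so ``exactly $\fuz_i\thmind(\phi_i)+o(1)$ of negative cash'' is unjustified. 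So bounded-below plausible value is not established and the trader does not exploit $\MP$.

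The paper avoids building a single always-bounded-below trader out of this bet. Instead it constructs an efficiently emulatable sequence of traders $\Trader^k$, each making a budget-capped run of these purchases with $\varepsilon$-return on investment, and the emulator in \Lem{type3} releases capital for later traders only once earlier investments have verifiably \emph{matured}, meaning all worlds in $\pcworlds(\dt_m)$ agree the profit is locked in --- a certificate the deductive process can supply but market prices cannot. That budgeting keeps total open magnitude at most $1$, and the contradiction comes from the lemma's conclusion that the traders' magnitudes must vanish, not from a direct unbounded-profit computation against $A_N$. If you patch your construction by capping total open exposure and gating re-entry on deductive settlement (a computable check over $\pcworlds(\dt_m)$, as in the lemma's bookkeeping), you will essentially have re-derived that lemma. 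Two minor slips: $\ctsind{\delta_m}(\pt_m(\phi_i)>1-\delta_m)$ as written never fully activates (full activation needs $\pt_m(\phi_i)>1$), and the convergence $\pt_m(\phi_i)\to\thmind(\phi_i)$ for a fixed decidable sentence is \Theorem{con} together with \Theorem{lc}, not \Theorem{provind}.
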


\noindent Letting $\seq\fuz=(1,1,\ldots)$, this theorem says that the difference between the average probability $\pt_\nn(\phi_\nn)$ and the average frequency of truth is 0 infinitely often (and 0 always, if the latter converges). Letting each $\fuz_\nn$ be $\ctsind{\delta}(a < \pt_\nn(\phi_\nn) < b)$, we recover \Theorem{simcal}. In general, the fraction in \Thm{recurringunbiasedness} can be interpreted as a measure of the ``bias'' of $\MP$ on the fuzzy subsequence of $\phis$ singled out by $\fuz$. Then this theorem says that $\MP$ is unbiased on all \pgenable subsequences where the frequency of truth converges (and unbiased infinitely often on subsequences where it diverges). Thus, if an \ec sequence of sentences can be decomposed (by any \pgenable weighting) into subsequences where the frequency of truth converges, then $\MP$ learns to assign probabilities such that there is no efficient method for detecting a predictable bias in its beliefs.

However, not every sequence can be broken down into well-behaved subsequences by a \pgenable divergent weighting (if, for example, the truth values move ``pseudorandomly'' in correlated clusters, as in the case of $\seq\clusters$). In these cases, it is natural to wonder whether there are any conditions where $\MP$ will be unbiased anyway. Below, we show that the bias converges to zero whenever the weighting $\seq\fuz$ is sparse enough that $\MP$ can gather sufficient feedback about $\phi_\nn$ in between guesses:

\begin{definition}[Deferral Function]\label{def:deferralfunc}
  A function $\deff : \NN^+ \to \NN^+$ is called a \textbf{deferral function} if
  \begin{enumerate}
    \item $\deff(\nn) > \nn$ for all $\nn$, and
    \item $\deff(\nn)$ can be computed in time polynomial in $\deff(\nn)$, i.e., if there is some algorithm and a polynomial function $h$ such that for all $\nn$, the algorithm computes $\deff(\nn)$ within $h(\deff(\nn))$ steps.
  \end{enumerate}
  If $\deff$ is a deferral function, we say that $\deff$ \textbf{defers} $\nn$ to $\deff(\nn)$.
\end{definition}

\begin{restatable}[Unbiasedness From Feedback]{theorem}{restatewub}\label{thm:wub}
  Let $\seq\phi$ be any \ec sequence of decidable sentences, and $\seq\fuz$ be any \pgenable divergent weighting.
  If there exists a strictly increasing deferral function $\deff$ such that the support of $\seq\fuz$ is contained in the image of $f$ and $\thmind(\phi_{f(\nn)})$ is computable in $\Oo(\deff(n+1))$ time, then 
  \[
    \frac
        {\sum_{i\leq \nn}\fuz_i  \cdot(\pt_i(\phi_i)-\thmind(\phi_i))}
        {\sum_{i\leq \nn}\fuz_i}
    \eqsim_\nn 0.
  \]
  In this case, we say ``$\seq\fuz$ allows good feedback on $\seq\phi$''.
  \proofin{\ref{app:wub}}
\end{restatable}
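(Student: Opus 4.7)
My plan is proof by contradiction, extending the trader construction behind \Thm{recurringunbiasedness} to exploit the feedback hypothesis. Write $W_n := \sum_{i \leq n} \fuz_i$ (which diverges by hypothesis), $S_n := \sum_{i\leq n} \fuz_i(\pt_i(\phi_i) - \thmind(\phi_i))$, and $B_n := S_n/W_n$; suppose for contradiction that $B_n \not\to 0$, so there exists $\varepsilon > 0$ with $|B_n| > \varepsilon$ infinitely often. First, I would introduce baseline traders $\Trader^\pm$ that on each day $f(k)$ respectively sell or buy $\fuz_{f(k)}$ shares of $\phi_{f(k)}$ and are idle otherwise. These are expressible and \ec by \pgenability of $\seq\fuz$ and the runtime bound on $f$, and the feedback hypothesis ensures that on day $n$, every trade placed on day $f(j)$ with $f(j+1) \leq n$ has had $\thmind(\phi_{f(j)})$ computed, so at most one outstanding trade contributes a bounded amount across plausible worlds. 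Hence $\Trader^\pm$'s plausible net worth at day $n$ equals $\pm S_n + O(1)$. If $S_n$ were bounded above, $\limsup B_n \leq 0$ and the contradiction hypothesis would force $S_n$ unbounded below, making $\Trader^-$ an exploiter; symmetrically for the lower bound. So the only remaining case is $\limsup S_n = +\infty$ and $\liminf S_n = -\infty$: the partial sums oscillate widely yet the average persistently strays from zero.

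For this case I would construct an adaptive trader $\Trader^*$ that uses feedback. On day $f(k)$, the resolved prefixes $\tilde S_{k-1} := \sum_{j < k} \fuz_{f(j)}(\pt_{f(j)}(\phi_{f(j)}) - \thmind(\phi_{f(j)}))$ and $\tilde W_{k-1} := \sum_{j < k} \fuz_{f(j)}$ are computable in time polynomial in $f(k)$, so the empirical bias $\tilde B_k := \tilde S_{k-1}/\max(\tilde W_{k-1}, 1)$ is an expressible feature via safe reciprocation. $\Trader^*$ then trades $-\sigma(\tilde B_k)\cdot \fuz_{f(k)}$ shares of $\phi_{f(k)}$, where $\sigma : \RR \to [-1,1]$ is the continuous piecewise-linear clipping of $x/\eta$ to $[-1,1]$ for a small rational $\eta \in (0, \varepsilon)$. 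Setting $e_k := \fuz_{f(k)}(\pt_{f(k)}(\phi_{f(k)}) - \thmind(\phi_{f(k)}))$, the plausible net worth after $K$ resolved trades equals $\Pi_K := \sum_{k\leq K}\sigma(\tilde B_k) e_k$ up to an $O(1)$ open-trade term, and the goal is to show $\Pi_K$ is bounded below and diverges to $+\infty$.

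The main obstacle is this last step: simultaneously bounding losses and forcing unbounded gains, since locally $\sigma(\tilde B_k)$ and $e_k$ can anti-correlate whenever the empirical bias traverses zero. My plan is to use the identity $\tilde S_{k-1}e_k = \tfrac{1}{2}(\tilde S_k^2 - \tilde S_{k-1}^2 - e_k^2)$ together with Abel summation in $1/\tilde W_{k-1}$: the unclipped counterpart $\sum_k \tilde B_k e_k$ telescopes to $\tilde S_K^2/(2\tilde W_K) \geq 0$ minus an error bounded by $\sum_k e_k^2/\tilde W_{k-1}$, which is finite because $|e_k|\leq \fuz_{f(k)}\leq 1$ and $\fuz_{f(k)}/\tilde W_{k-1}$ is summable when $\tilde W_K\to\infty$. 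The clipped sum $\Pi_K$ agrees in sign with its unclipped counterpart on the aligned set $|\tilde B_k|\geq \eta$ (where $\sigma$ and the identity have matching sign and comparable magnitude) while differing by only bounded cumulative error on the transitional set $|\tilde B_k| < \eta$. By the hypothesis $|B_n| > \varepsilon$ infinitely often combined with the unbounded oscillation of $\tilde S_n$, consecutive blocks of days spent in the aligned regime accumulate $\fuz$-weight $\gtrsim \varepsilon W_n$, forcing $\Pi_K \to +\infty$, while the telescoping bound caps losses; so $\Trader^*$ exploits $\MP$, completing the contradiction.
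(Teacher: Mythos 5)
There is a genuine gap at the step where you value your traders' holdings. You equate ``$\thmind(\phi_{f(j)})$ can be computed by day $f(j+1)$'' with ``by day $n \ge f(j+1)$ every plausible world values $\phi_{f(j)}$ at $\thmind(\phi_{f(j)})$'', and this is false: plausibility is determined by the deductive process $\DP$, not by what is computable in polynomial time. A decidable sentence can remain propositionally unsettled by $\dt_\nn$ for arbitrarily long (there is no bound relating $\DP$'s settling times to $\deff$), so worlds in $\pcworlds(\dt_\nn)$ may assign it either truth value. Consequently the claims that $\Trader^\pm$ has plausible net worth $\pm S_n + O(1)$ and that $\Trader^*$ has plausible net worth $\Pi_K$ up to an $O(1)$ ``open-trade'' term are unsupported; in fact a buy-and-hold trader of this kind keeps spending cash (a divergent amount, since $\sum_i \fuz_i = \infty$) on shares that plausible worlds may still value at $0$, so its plausible value is unbounded below and neither the lower bound nor the divergence required for exploitation is obtained. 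Everything downstream---the Abel-summation/telescoping control of $\Pi_K$ and the oscillation case---rests on these invalid net-worth formulas, and the feedback hypothesis is never actually connected to the market.

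The missing idea, which is how the paper proves this (as the special case $\aff_\nn = \phi_\nn$ of \Theorem{wubaff}), is to close every position and let the \lic{} itself do the work of turning computability of $\thmind$ into information about prices: the trader buys at day $f(k)$ and sells the same shares back at day $f(k+1)$, so its cumulative profit is pure cash, a sum of price differentials $\pt_{f(k+1)}(\phi_{f(k)}) - \pt_{f(k)}(\phi_{f(k)})$ that is world-independent. The feedback hypothesis enters because the combinations $\phi_{f(k)} - \thmind(\phi_{f(k)})$, presented at time $f(k+1)$, form a bounded \pgenable sequence (the constant $\thmind(\phi_{f(k)})$ is computable in $\Oo(f(k+1))$ time), so \Theorem{affprovind} forces $\pt_{f(k+1)}(\phi_{f(k)}) \ge \thmind(\phi_{f(k)}) - \varepsilon$ for all large $k$; thus the sell-back price is a valid proxy for the truth value. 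Losses are bounded not by clipping the bias but by betting only the fraction $\varepsilon \fuz_{f(k)}$ of a Kelly-style wealth variable, whose logarithm is bounded below by roughly $\varepsilon\bigl(\sum_{j\le k}\fuz_{f(j)}\bigr)(-\mathrm{Bias}_k - 2\varepsilon)$ and hence diverges along the subsequence where the bias is below $-3\varepsilon$. With positions closed and losses capped at \$1, each sign is handled by a fixed-direction trader (apply the argument to the negated sequence for the other sign), so the adaptive-sign machinery and the oscillation case in your third paragraph are unnecessary.
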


\noindent In other words, $\MP$ is unbiased on any subsequence of the data where a polynomial-time machine can figure out how the previous elements of the subsequence turned out before $\MP$ is forced to predict the next one. This is perhaps the best we can hope for: On ill-behaved sequences such as $\seq\clusters$, where the frequency of truth diverges and (most likely) no polynomial-time algorithm can predict the jumps, the $\pt_\nn(\phi_\nn)$ might be pure guesswork.

So how well does $\MP$ perform on sequences like $\seq\clusters$? To answer, we turn to the question of how $\MP$ behaves in the face of sequences that it finds utterly unpredictable.

\subsection{Learning Statistical Patterns}\label{sec:statpatterns}

Consider the digits in the decimal expansion of $\pi$. A good reasoner thinking about the $10^{1,000,000}$th digit of $\pi$, in lieu of any efficient method for predicting the digit before they must make their prediction, should assign roughly 10\% probability to that digit being a~7. We will now show that logical inductors learn statistical patterns of this form.

To formalize this claim, we need some way of formalizing the idea that a sequence is ``apparently random'' to a reasoner. Intuitively, this notion must be defined relative to a specific reasoner's computational limitations. After all, the digits of $\pi$ are perfectly deterministic; they only appear random to a reasoner who lacks the resources to compute them. Roughly speaking, we will define a sequence to be pseudorandom (relative to $\MP$) if there is no \ec way to single out any one subsequence that is more likely true than any other subsequence, not even using expressions written in terms of the market prices (by way of expressible features):

\begin{restatable}[Pseudorandom Sequence]{definition}{pseudorandom}\label{def:pseudorandom}
  Given a set $S$ of divergent weightings (\Def{fuz}), a sequence $\phis$ of decidable sentences is called \textbf{pseudorandom with frequency $\bm{\prob}$} over $S$ if, for all weightings $\seq\fuz\in S$,
  \[
    \lim_{\nn \to\infty} \frac{\sum_{i \leq \nn} \fuz_i \cdot \thmind(\phi_i)}{\sum_{i \leq n} \fuz_i}
  \]
  exists and is equal to $\prob$.
\end{restatable}

\noindent Note that if the sequence $\phis$ is \emph{actually} randomly generated (say, by adding $(c_1, c_2, \ldots)$ to the language of $\Theory$, and tossing a coin weighted with probability $\prob$ towards heads for each $i$, to determine whether to add $c_i$ or $\lnot c_i$ as an axiom) then $\smash{\phis}$ is pseudorandom with frequency~$p$ almost surely.\footnote{Note that actually adding randomness to $\Theory$ in this fashion is not allowed, because we assumed that the axioms of $\Theory$ are recursively enumerable. It is possible to construct a logical inductor that has access to a source of randomness, by adding one bit of randomness to the market each day, but that topic is beyond the scope of this paper.} Now:

\begin{restatable}[Learning Pseudorandom Frequencies]{theorem}{benford}\label{thm:benford}
Let $\phis$ be an \ec sequence of decidable sentences. If $\seq\phi$ is pseudorandom with frequency~$p$ over the set of all \pgenable divergent weightings, then 
  \[
    \pt_\nn(\phi_\nn) \eqsim_\nn p.
  \]
  \proofin{\ref{sec:prand} or~\ref{app:benford}}
\end{restatable}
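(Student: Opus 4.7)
The plan is to prove the contrapositive by contradiction: assuming $\pt_\nn(\phi_\nn) \not\eqsim_\nn p$, I will build a \pgenable divergent weighting that singles out a fuzzy subsequence where the market's average price is bounded away from $p$, then combine pseudorandomness with Recurring Unbiasedness (\Thm{recurringunbiasedness}) to derive a contradiction.

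First I would negate the conclusion: suppose there exists $\varepsilon > 0$ such that either $\pt_\nn(\phi_\nn) > p + \varepsilon$ infinitely often, or $\pt_\nn(\phi_\nn) < p - \varepsilon$ infinitely often. By symmetry treat the first case. Next I would define the weighting
\[
  \fuz_\nn := \ctsind{\varepsilon/2}\!\left(\pf[\nn]{\phi_\nn} > p + \tfrac{\varepsilon}{2}\right),
\]
using the continuous threshold indicator (\Def{ctsind}). Since $\phis$ is \ec, the sequence $(\pf[\nn]{\phi_\nn})_\nn$ is an \ec $\exfeatures$-progression, so $\seq\fuz$ is \pgenable. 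Moreover $\fuz_\nn = 1$ whenever $\pt_\nn(\phi_\nn) > p + \varepsilon$, which happens infinitely often, so $\sum_\nn \fuz_\nn = \infty$ and $\seq\fuz$ is a genuine divergent weighting.

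Next I would apply the two relevant hypotheses. From pseudorandomness of $\phis$ with frequency $p$ over all \pgenable divergent weightings,
\[
  \frac{\sum_{i\le \nn} \fuz_i\,\thmind(\phi_i)}{\sum_{i\le \nn} \fuz_i} \;\xrightarrow[\nn\to\infty]{}\; p.
\]
From \Thm{recurringunbiasedness} applied to this same $\seq\fuz$, the sequence $\frac{\sum_{i\le\nn}\fuz_i(\pt_i(\phi_i)-\thmind(\phi_i))}{\sum_{i\le\nn}\fuz_i}$ has $0$ as a limit point. Adding these, the weighted price average $\frac{\sum_{i\le \nn} \fuz_i\,\pt_i(\phi_i)}{\sum_{i\le \nn} \fuz_i}$ must have $p$ as a limit point. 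On the other hand, by construction $\fuz_\nn > 0$ only when $\pt_\nn(\phi_\nn) > p + \varepsilon/2$, so $\fuz_i \pt_i(\phi_i) \ge (p+\varepsilon/2)\fuz_i$ for every $i$, giving
\[
  \frac{\sum_{i\le \nn} \fuz_i\,\pt_i(\phi_i)}{\sum_{i\le \nn} \fuz_i} \;\ge\; p + \tfrac{\varepsilon}{2}
\]
whenever the denominator is positive. This contradicts $p$ being a limit point, completing this case. The case $\pt_\nn(\phi_\nn) < p - \varepsilon$ infinitely often is handled symmetrically, using $\fuz_\nn := \ctsind{\varepsilon/2}(\pf[\nn]{\phi_\nn} < p - \varepsilon/2)$ and the bound $\fuz_i \pt_i(\phi_i) \le (p-\varepsilon/2)\fuz_i$.

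The main obstacle is not conceptual but notational: one has to confirm that the continuous threshold indicator applied to price features really does yield an \ec $\exfeatures$-progression (so that \Thm{recurringunbiasedness} applies), and that using $\varepsilon/2$ inside the threshold versus $\varepsilon$ for the ``infinitely often'' assumption is enough to guarantee $\sum_\nn \fuz_\nn = \infty$. Both follow directly from the definitions of expressible features and $\ctsind{\delta}$, so the proof is essentially a bookkeeping exercise once the weighting is chosen. No new trader needs to be constructed, since all the exploitation-theoretic work has already been done inside \Thm{recurringunbiasedness}.
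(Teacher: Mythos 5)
Your proof is correct, and it takes a genuinely different route from both of the paper's arguments. The paper proves this theorem either by a self-contained trader construction (in \Sec{prand}: a trader that buys $\phi_\nn$ whenever $\pt_\nn(\phi_\nn)$ dips below $p-\varepsilon/2$, with explicit bookkeeping of ``open'' shares so that its trade coefficients form a \pgenable divergent weighting to which the pseudorandomness hypothesis is then applied), or, in the appendix, as a corollary of \Theorem{prand} (Learning Varied Pseudorandom Frequencies), which rests on \Theorem{prandaff}, again proved by a direct trader construction. You instead reduce everything to \Theorem{recurringunbiasedness}: the weighting $\fuz_\nn := \ctsind{\varepsilon/2}(\pf{\phi_\nn} > p + \varepsilon/2)$ is \pgenable and divergent under the negated conclusion, pseudorandomness forces the $\fuz$-weighted truth frequency to converge to $p$, Recurring Unbiasedness forces the $\fuz$-weighted bias to have $0$ as a limit point, and hence the $\fuz$-weighted average price has $p$ as a limit point even though it is $\ge p+\varepsilon/2$ term by term --- a contradiction. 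This is legitimate and non-circular, since Recurring Unbiasedness is proved (via \Theorem{recunbiasedaff} and the return-on-investment lemma) with no appeal to this theorem; what your route buys is brevity, since all the exploitation-theoretic work is outsourced to an already-proven statement, while the paper's appendix route deliberately passes through the affine/varied generalizations because those are wanted downstream, and the \Sec{prand} proof is deliberately framework-only so it can stand alone. One small repair: if $p$ is irrational, $p+\varepsilon/2$ cannot literally appear as a constant in an expressible feature (their constants are rational), so replace it by a rational threshold in $(p+\varepsilon/2,\,p+\varepsilon)$ (and similarly in the symmetric case); this is purely cosmetic, changes nothing else in your argument, and the same adjustment is implicitly needed in the paper's own \Sec{prand} proof.
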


For example, consider again the sequence $\seq{\piasev}$ where the $\nn$th element says that the $\Ack(\nn, \nn)$th decimal digit of $\pi$ is a 7. The individual $\mathit{\piasev}_\nn$ statements are easy to write down (i.e., efficiently computable), but each one is difficult to decide. Assuming there's no good way to predict the Ackermann digits of $\pi$ using a \pgenable divergent weighting, $\MP$ will assign probability 10\% to each $\piasev_\nn$ in a timely manner, while it waits for the resources to determine whether the sentence is true or false. Of course, on each individual $\piasev_\nn$, $\MP$'s probability will go to 0 or 1 eventually, i.e., $\lim_{\mm\to\infty}\pt_\mm(\piasev_\nn) \in \{0,1\}$.

\Thm{benford} still tells us nothing about how $\MP$ handles $\seq\clusters$ (defined above), because the frequency of truth in that sequence diverges, so it does not count as pseudorandom by the above definition. To handle this case we will weaken our notion of pseudorandomness, so that it includes more sequences, yielding a stronger theorem. We will do this by allowing sequences to count as pseudorandom so long as the limiting frequency of truth converges on ``independent subsequences'' where the $n+1$st element of the subsequence doesn't come until after the $n$th element can be decided, as described below. Refer to \citet{Garrabrant:2016:ac} for a discussion of why this is a good way to broaden the set of sequences that count as pseudorandom.

\begin{definition}[$\deff$-Patient Divergent Weighting]
  Let $\deff$ be a deferral function. We say that a divergent weighting $\seq \fuz$ is \textbf{$\bm{\deff}$-patient} if there is some constant $C$ such that, for all $\nn$,
  \[
    \sum_{i=n}^{\deff(n)} \fuz_i(\MP) \leq C
  \]
  In other words, $\seq \fuz$ is $\deff$-patient if the weight it places between days $\nn$ and $\deff(\nn)$ is bounded.
\end{definition}

While we are at it, we will also strengthen \Thm{benford} in three additional ways: we will allow the probabilities on the sentences to vary with time, and with the market prices, and we will generalize $\eqsim_\nn$ to $\gtrsim_\nn$ and $\lesssim_\nn$.

\begin{restatable}[Varied Pseudorandom Sequence]{definition}{seqprand}\label{def:seqprand}
  Given a deferral function $\deff$, a set $S$ of $\deff$-patient divergent weightings, an \ec sequence $\seq\phi$ of $\Theory$-decidable sentences, and a \pgenable sequence $\seq\pseudo$ of rational probabilities,
  $\phis$ is called a \textbf{$\bm{\seq\pseudo}$-varied pseudorandom sequence} (relative to $S$) if, for all $\seq\fuz \in S$,
  \[
    \frac{\sum_{i \leq \nn} \fuz_i \cdot (\pseudo_i- \thmind(\phi_i))}{\sum_{i \leq n} \fuz_i} \eqsim_\nn 0.
  \]
  Furthermore, we can replace $\eqsim_\nn$ with $\gtrsim_\nn$ or $\lesssim_\nn$, in which case we say $\seq\phi$ is \textbf{varied pseudorandom above $\bm{\seq\pseudo}$} or \textbf{varied pseudorandom below $\bm{\seq\pseudo}$}, respectively.
\end{restatable}

\begin{restatable}[Learning Varied Pseudorandom Frequencies]{theorem}{prand}\label{thm:prand}
  Given an \ec sequence $\phis$ of $\Theory$-decidable sentences and a \pgenable sequence $\seq\pseudo$ of rational probabilities, 
  if there exists some $\deff$ such that $\phis$ is $\seq\pseudo$-varied pseudorandom (relative to all $\deff$-patient \pgenable divergent weightings), then
  \[
    \pt_\nn(\phi_\nn) \eqsim_\nn \pseudo_\nn.
  \]
  Furthermore, if $\phis$ is varied pseudorandom above or below $\seq\pseudo$, then the $\eqsim_\nn$ may be replaced with $\gtrsim_\nn$ or $\lesssim_\nn$ (respectively).
  \proofin{\ref{app:seqprand}}
\end{restatable}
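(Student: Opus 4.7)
\emph{Proof proposal.} My plan is proof by contraposition: assuming $\pt_\nn(\phi_\nn) \not\eqsim_\nn \pseudo_\nn$, I would construct an efficiently computable trader that exploits $\MP$, contradicting the \lic{}. If $\pt_\nn(\phi_\nn) \not\eqsim_\nn \pseudo_\nn$ then for some $\varepsilon>0$ either $\pt_\nn(\phi_\nn) > \pseudo_\nn + \varepsilon$ infinitely often or $\pt_\nn(\phi_\nn) < \pseudo_\nn - \varepsilon$ infinitely often. These cases are symmetric, so I treat the first (call these the ``bad'' days); handling only one of the two directions also yields the one-sided $\gtrsim_\nn$ and $\lesssim_\nn$ conclusions. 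Without loss of generality I also enlarge $\deff$ to dominate the settlement times of $\phis$; since a larger $\deff$ gives a smaller class of $\deff$-patient weightings, the hypothesis is only weakened, and the conclusion is the same.

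The trader's day-$\nn$ action is to sell $\fuz_\nn$ shares of $\phi_\nn$ at the prevailing price, where $\seq\fuz$ is a $[0,1]$-valued $\exfeatures$-progression that simultaneously plays the role of trade size and of a $\deff$-patient \pgenable divergent weighting. Fix $\delta < \varepsilon/4$ and set
\[
  \fuz_\nn := \ctsind{\delta}(\pt_\nn(\phi_\nn) > \pseudo_\nn + \varepsilon/2)\cdot g_\nn,
  \qquad
  g_\nn := \max\!\bigl(0,\ 1 - \sum_{j<\nn,\ \deff(j)\ge\nn}\fuz_j\bigr),
\]
where $g_\nn$ is a recursive ``cooldown'' feature that suppresses activation while recent trades still lie within their $\deff$-windows. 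Since $\seq\pseudo$ is \pgenable and $\deff$ is polynomial-time computable in $\deff(\nn)$, $\seq\fuz$ is \pgenable and efficient; a short induction bounds the cooldown sum by $1$, which together with monotonicity of $\deff$ (WLOG) yields $\sum_{i=\nn}^{\deff(\nn)}\fuz_i \le 2$, so $\seq\fuz$ is $\deff$-patient. Picking a $\deff$-spaced subsequence of bad days $n_1 < n_2 < \cdots$ with $n_{k+1}$ the first bad day past $\deff(n_k)$, no earlier activation contributes to $g_{n_k}$, so $\fuz_{n_k} = 1$ and $\sum_i\fuz_i = \infty$.

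Because $\deff$ now dominates settlement times, $\deff$-patience bounds the trader's exposure from unsettled trades uniformly, making the plausible net worth bounded below. For the settled portion the net worth decomposes as
\[
  \sum_{i\le\nn}\fuz_i(\pt_i(\phi_i) - \thmind(\phi_i))
  = \sum_{i\le\nn}\fuz_i(\pt_i(\phi_i) - \pseudo_i)
  \ +\ \sum_{i\le\nn}\fuz_i(\pseudo_i - \thmind(\phi_i)).
\]
On the support of $\fuz$, $\pt_i(\phi_i) - \pseudo_i \ge \varepsilon/2 - \delta \ge \varepsilon/4$, so the first sum exceeds $(\varepsilon/4)\sum_{i\le\nn}\fuz_i \to \infty$, while the $\seq\pseudo$-varied pseudorandom hypothesis applied to the $\deff$-patient \pgenable divergent weighting $\seq\fuz$ forces the second sum to be $o(\sum_i\fuz_i)$. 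Thus the plausible net worth is unbounded above, exhibiting exploitation --- the desired contradiction.

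The main obstacle is the construction of $\seq\fuz$: it must simultaneously be continuous and expressible, $\deff$-patient, and divergent whenever bad days recur. The tension between $\deff$-patience (``fire rarely'') and divergence (``fire often'') is resolved by the recursive cooldown, but formally verifying that $g_\nn$ is a genuine $\exfeatures$-feature (the recursion unfolds into a polynomial-size arithmetic expression since each $\fuz_j$ has rank $\le j$), that the cooldown sum stays bounded, and that it fully regenerates along a $\deff$-sparsified subsequence of bad days, constitutes the technical heart of the proof.
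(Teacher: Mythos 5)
Your overall strategy is sound, and it is in fact essentially the route the paper takes: the paper proves this theorem by setting $\aff_\nn := \phi_\nn - \pseudo_\nn$ and invoking Theorem~\ref{thm:prandaff}, whose proof constructs exactly the kind of trader you describe (a continuous mispricing indicator scaled by a recursive budget that only counts past trades whose ``window'' is still open). The genuine gap is your opening move, ``without loss of generality enlarge $\deff$ to dominate the settlement times of $\phis$.'' The exact settlement time of $\phi_j$ (the first $m$ at which all of $\pcworlds(\dt_\mm)$ agree on $\phi_j$) need not be computable in time polynomial in its value---verifying propositional settlement on a given day is already an expensive entailment check---so the enlarged function need not satisfy condition~2 of Definition~\ref{def:deferralfunc}. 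Without that time bound you also lose the clocking trick that lets a polynomial-time trader decide the predicate ``$\deff(j)\ge\nn$'' appearing inside $g_\nn$; and if that predicate is not efficiently decidable, $\seq\fuz$ is neither the coefficient sequence of an \ec trader nor a \pgenable weighting, so both the exploitation argument and the application of the pseudorandomness hypothesis collapse. The repair is precisely the device the paper builds into its budget rather than into $\deff$: a polynomial-time \emph{verifiable} under-approximation of settledness (the $\mathrm{DefinitelySettled}$ predicate of Appendix~\ref{app:prandaff}). Either keep $\deff$ as given and let the budget count trades that are not yet verifiably settled or still within their $\deff$-window, or equivalently define $\deff'(\nn)$ as the first day $\ge\deff(\nn)$ at which settlement of $\phi_\nn$ becomes verifiable in polynomial time; this $\deff'$ is a genuine deferral function, dominates a settling time, and makes your budget bound, patience bound, and boundedness-below argument go through.

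A second, smaller slip: your divergence claim that $\fuz_{n_k}=1$ along a $\deff$-spaced subsequence of bad days is false as stated, because $\ctsind{\delta}(\pt_j(\phi_j)>\pseudo_j+\varepsilon/2)$ can fire partially on intermediate, non-bad days and consume budget before day $n_k$. The conclusion survives with an easy fix: with $\deff$ monotone, all trades made at or before $n_{k-1}$ have expired by day $n_k$, so $\fuz_{n_k}\ge 1-\sum_{n_{k-1}<j<n_k}\fuz_j$ and hence $\sum_{n_{k-1}<j\le n_k}\fuz_j\ge 1$ for every $k$, giving $\sum_j\fuz_j=\infty$ (alternatively, use the tail-sum contradiction in the paper's proof of Theorem~\ref{thm:prandaff}). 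The rest of your argument---the reduction to one side (which also yields the one-sided $\gtrsim_\nn$/$\lesssim_\nn$ variants), the decomposition of the settled holdings into a term of size at least $(\varepsilon/4)\sum_{i\le\nn}\fuz_i$ plus a pseudorandomness term that is $o(\sum_{i\le\nn}\fuz_i)$, and the bounded exposure from the at most unit weight of unsettled trades---is correct.
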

\noindent Thus we see that $\MP$ does learn to assign marginal probabilities $\pt_\nn(\clusters_\nn)\approx 0.5$, assuming the Ackermann digits of $\pi$ are actually difficult to predict. Note that while \Thm{prand} requires each $\pseudo_\nn$ to be rational, the fact that the theorem is generalized to varied pseudorandom above/below sequences means that \Thm{prand} is a strict generalization of \Theorem{benford}.

In short, \Thm{prand} shows that logical inductors reliably learn in a timely manner to recognize appropriate statistical patterns, whenever those patterns (which may vary over time and with the market prices) are the best available method for predicting the sequence using \pgenable methods.

\subsection{Learning Logical Relationships}\label{sec:logicpatterns}

Most of the above properties discuss the ability of a logical inductor to recognize patterns in a single sequence---for example, they recognize \ec sequences of theorems in a timely manner, and they fall back on the appropriate statistical summaries in the face of pseudorandomness. We will now examine the ability of logical inductors to learn relationships between sequences.

Let us return to the example of the computer program \texttt{prg} which outputs either 0, 1, or 2 on all inputs, but for which this cannot be proven in general by $\Theory$. \Theorem{provind} says that the pattern
\[
  \seq{\prgzot} := \big( \quot{\texttt{prg($\enc{n}$)}=0 \lor \texttt{prg($\enc{n}$)}=1 \lor \texttt{prg($\enc{n}$)}=2} \big)_{\nn\in {\NN^+}}
\]
will be learned, in the sense that $\MP$ will assign each $\prgzot_\nn$ a probability near~1 in a timely manner. But what about the following three individual sequences?
\begin{align*}
  \seq{\prgz} := \big( \quot{\texttt{prg($\enc{n}$)}=0} \big)_{\nn\in {\NN^+}} \\
  \seq{\prgo} := \big( \quot{\texttt{prg($\enc{n}$)}=1} \big)_{\nn\in {\NN^+}} \\
  \seq{\prgt} := \big( \quot{\texttt{prg($\enc{n}$)}=2} \big)_{\nn\in {\NN^+}}
\end{align*}
None of the three sequences is a sequence of only theorems, so provability induction does not have much to say. If they are utterly pseudorandom relative to $\runtime$, then \Theorem{prand} says that $\MP$ will fall back on the appropriate statistical summary, but that tells us little in cases where there are predictable non-conclusive patterns (e.g., if \texttt{prg(i)} is more likely to output 2 when \texttt{helper(i)} outputs 17). In fact, if $\MP$ is doing good reasoning, the probabilities on the $(\prgz_\nn, \prgo_\nn, \prgt_\nn)$ triplet ought to shift, as $\MP$ gains new knowledge about related facts and updates its beliefs. How could we tell if those intermediate beliefs were reasonable?

One way is to check their sum. If $\MP$ believes that $\texttt{prg(i)}\in\{0, 1, 2\}$ and it knows how disjunction works, then it should be the case that whenever $\pt_\nn(\prgzot_t) \approx 1$, $\pt_\nn(\prgz_t) + \pt_\nn(\prgo_t) + \pt_\nn(\prgt_t) \approx 1$. And this is precisely the case. In fact, logical inductors recognize mutual exclusion between efficiently computable tuples of any size, in a timely manner:

\begin{restatable}[Learning Exclusive-Exhaustive Relationships]{theorem}{restatelex}\label{thm:lex}
  Let $\seq{\phi^1},\ldots,\seq{\phi^k}$ be $k$ \ec sequences of sentences, such that for all $\nn$, $\Theory$ proves that $\phi^1_\nn,\ldots,\phi^k_\nn$ are exclusive and exhaustive (i.e.\ exactly one of them is true). Then
\[ 
 \pt_\nn(\phi^1_\nn)+\cdots+\pt_\nn(\phi^k_\nn)  \eqsim_\nn 1.
\]
\end{restatable}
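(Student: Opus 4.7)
The plan is a proof by contradiction, using an efficiently computable ``basket arbitrage'' trader modeled on the oscillation-arbitrage trader from the proof of Theorem~\ref{thm:con}. Suppose $\sum_j \pt_n(\phi^j_n) \not\eqsim_n 1$; then there is some $\varepsilon > 0$ such that, passing to a subsequence, $\sum_j \pt_n(\phi^j_n) > 1 + \varepsilon$ (the opposite direction is handled symmetrically, with the trader buying rather than selling). I treat the $k$-tuple on day $n$ as a synthetic basket $A_n := \sum_j \phi^j_n$ whose day-$n$ price is $s_n := \sum_j \pf[n]{\phi^j_n}$.

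The enabling observation is that the exclusive-exhaustive sentence $\chi_n := (\bigvee_j \phi^j_n) \land \bigwedge_{i<j}\lnot(\phi^i_n \land \phi^j_n)$ is a $\Theory$-theorem for each $n$, so by limit coherence (Theorem~\ref{thm:lc}) the tuple $(\pt_\infty(\phi^j_n))_{j \le k}$ is a probability distribution and in particular $\sum_j \pt_\infty(\phi^j_n) = 1$. Combined with Theorem~\ref{thm:con}, this yields for each offending $n$ a later day $m$ with $\sum_j \pt_m(\phi^j_n) < 1 + \varepsilon/4$: the basket is mispriced on day $n$ but eventually correctly priced.

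The trader I construct has two components on each day $N$. An opening component, with coefficient $\alpha_N := \ctsind{\varepsilon/4}(s_N > 1 + \varepsilon/2)$, sells $\alpha_N$ shares of each $\phi^j_N$. A closing component, indexed by earlier opening days $n < N$, uses $\ctsind{\cdot}$ detectors on the current basket price of $A_n$ to buy back $\alpha_n$ shares of each $\phi^j_n$ once that basket has come back down to near $1$. Each completed round-trip from opening day $n$ to closing day $m$ yields a world-independent cash gain of $\alpha_n\bigl(s_n - \sum_j \pt_m(\phi^j_n)\bigr) \geq \alpha_n \cdot \varepsilon/4$, since the signed-share terms of the opening and closing cancel out. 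A continuous gating multiplier built from $\ctsind{\cdot}$ features on prior basket-price trajectories, directly analogous to the one in the proof of Theorem~\ref{thm:con}, serializes the round-trips so that at most one basket position is in flight at any moment. Consequently the trader's net worth in any $\World \in \pcworlds(\dt_N)$ is bounded below by $-k$ (the worst possible basket payout on the single open position), while every completed round-trip permanently adds a positive cash amount. All features involved are expressible and polynomial-time computable, since the sequences $\seq{\phi^1}, \ldots, \seq{\phi^k}$ are efficiently computable.

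By the contradiction hypothesis, $\alpha_N$ reaches $1$ infinitely often, and by limit coherence every opened basket eventually becomes closable; hence the trader completes infinitely many round-trips and accumulates unboundedly many $\varepsilon/4$-sized cash profits against a bounded downside, exploiting $\MP$ in violation of the \lic{}. The main technical obstacle is the continuous, polynomial-time implementation of the ``one basket at a time'' gating: the closing trades must fire exactly once per opening and with coefficients matching the continuous-valued opening amounts $\alpha_n$, even though every $\alpha_n$ is a fuzzy $[0,1]$-feature rather than a $0/1$ indicator. This is the same bookkeeping challenge already solved in the convergence proof, and the same $\ctsind{\cdot}$-based construction resolves it here.
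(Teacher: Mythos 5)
Your argument is sound, but it takes a genuinely different route from the paper's official proof. The paper dispatches this theorem in one line: set $\aff_\nn := \phi^1_\nn + \cdots + \phi^k_\nn$, note that $\World(\aff_\nn) = 1$ for every $\World \in \cworlds(\Theory)$, and apply Affine Provability Induction (Theorem~\ref{thm:affprovind}, itself a consequence of Affine Coherence), so the entire content lives in the general affine machinery rather than in a bespoke trader. Your construction is closer to the paper's main-text sketch, but differs from that too in how the profit is realized: the sketched trader sells the overpriced basket and simply \emph{holds}, waiting until $\dt_\mm$ propositionally settles the exclusive-exhaustive relationship so that the short basket is worth exactly $1$ in every plausible world, which makes the position risk-free without any buy-back and without invoking any prior theorems; you instead close the position at market, justifying the eventual closability by Convergence (Theorem~\ref{thm:con}) plus Limit Coherence (Theorem~\ref{thm:lc}), which forces $\sum_j \pt_\infty(\phi^j_\nn) = 1$. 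Both are legitimate (those theorems precede this one and are proved independently, so there is no circularity), and your round-trip accounting is correct: completed round-trips are pure cash worth at least $\alpha_\nn\varepsilon/4$, the single in-flight short basket bounds the downside by roughly $-k$, and the serialization/partial-closing bookkeeping with recursively defined $\ctsind{\cdot}$ features is the same dynamic-programming device used in the proof of Theorem~\ref{thm:con}, so it is implementable by an \ec trader. What the paper's route buys is economy and generality—the same affine lemma immediately yields all the coefficient-varying strengthenings of Section~4.5—while your route buys a self-contained arbitrage picture for this specific statement at the cost of the gating bookkeeping and a dependence on Theorems~\ref{thm:con} and~\ref{thm:lc} that the deduction-settlement variant avoids.
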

\begin{sketch}[\ref{app:lex}]
  Consider the trader that acts as follows. On day $\nn$, they check the prices of $\phi_\nn^1\ldots\phi_\nn^k$. If the sum of the prices is higher (lower) than 1 by some fixed threshold $\varepsilon > 0$, they sell (buy) a share of each, wait until the values of the shares are the same in every plausible world, and make a profit of $\varepsilon$. (It is guaranteed that eventually,  in every plausible world exactly one of the shares will be valued at 1.) If the sum goes above $1 + \varepsilon$ (below $1 - \varepsilon$) on the main diagonal infinitely often, this trader exploits $\MP$. Logical inductors are inexploitable, so it must be the case that the sum of the prices goes to~1 along the main diagonal.
\end{sketch}

This theorem suggests that logical inductors are good at learning to assign probabilities that respect logical relationships between related sentences. To show that this is true in full generality, we will generalize \Thm{lex} to any linear inequalities that hold between the actual truth-values of different sentences.

First, we define the following convention:

\begin{convention}[Constraint]
  An $\RR$-combination $\aff$ can be viewed as a \textbf{constraint}, in which case we say that a valuation $\Valuation$ \textbf{satisfies} the constraint if $\Valuation(\aff) \ge 0$.
\end{convention}

\noindent For example, the constraint
\[
  \operatorname{AND} := -2 + \phi + \psi
\]
says that both $\phi$ and $\psi$ are true, and it is satisfied by $\World$ iff $\World(\phi) = \World(\psi) = 1$. As another example, the pair of constraints
\[
  \operatorname{XOR} := (1 - \phi - \psi, \phi + \psi - 1)
\]
say that exactly one of $\phi$ and $\psi$ is true, and are satisfied by $\pt_7$ iff $\pt_7(\phi) + \pt_7(\psi) = 1$.

\begin{definition}[Bounded Combination Sequence]\label{def:bap}
  By $\BCS$ (mnemonic: \textbf{bounded combination sequences}) we denote the set of all \pgenable $\RR$-combination sequences $\seq{\aff}$ that are bounded, in the sense that there exists some bound $b$ such that $\| \aff_\nn \|_1 \le b$ for all $\nn$, where $\|\!\any\!\|_1$ includes the trailing coefficient.
\end{definition}

\begin{restatable}[Affine Provability Induction]{theorem}{affprovind}\label{thm:affprovind}
  Let $\seq{\aff}\in\BCS$ and $b \in \RR$. If, for all consistent worlds $\World\in\cworlds(\Theory)$ and all $\nn \in \NN^+$, it is the case that $\World(\aff_n ) \ge b$, then
  \[
    \pt_\nn(\aff_\nn ) \gtrsim_\nn b,
  \]
  and similarly for $=$ and $\eqsim_\nn$, and for $\leq$ and $\lesssim_\nn$.
  \proofin{\ref{app:affprovind}}
\end{restatable}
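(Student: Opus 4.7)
The plan is to prove the $\gtrsim_\nn$ case by contradiction, derive $\lesssim_\nn$ by applying the same argument to $(-\aff_\nn, -b)$, and obtain $\eqsim_\nn$ as the conjunction of the two. So suppose for contradiction that $\pt_\nn(\aff_\nn) \not\gtrsim_\nn b$: some $\varepsilon > 0$ and infinite $N \subseteq \NN^+$ satisfy $\pt_\nn(\aff_\nn) < b - \varepsilon$ for every $\nn \in N$. I will exhibit an \ec trader $\Trader$ exploiting $\MP$, contradicting the \LIC{}.

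Three structural facts drive the construction. First, the BCS hypothesis supplies a constant $b_0$ with $\|\aff_\nn\|_1 \le b_0$, so $W(\aff_\nn) \in [-b_0, b_0]$ for every world $W$, and the $\nn$-strategy $\aff_\nn - \pt_\nn(\aff_\nn)$ has plausible worth in $[-2b_0, 2b_0]$. Second, since $\seq\aff$ is \pgenable, its sentences and coefficients are packaged as an \ec $\exfeatures$-progression, so the trade $\aff_\nn - \pt_\nn(\aff_\nn)$ is itself realizable as an \ec $\nn$-strategy. Third, because each $\aff_\nn$ mentions only finitely many sentences, the truth-value restrictions realized by $\pcworlds(\dt_m)$ on those sentences form a decreasing family of finite sets whose intersection (by $\Theory$-completeness of $\DP$) consists only of $\Theory$-consistent restrictions; by hypothesis each such restriction satisfies $\aff_\nn \ge b$, so there exists a finite (possibly uncomputable) day $m(\nn)$ with $W(\aff_\nn) \ge b$ for every $W \in \pcworlds(\dt_{m(\nn)})$.

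The discontinuous prototype trader maintains a single ``open slot.'' When the slot is empty, on day $\nn$ it tests whether $\pt_\nn(\aff_\nn) < b - \varepsilon$; if so, it executes $\trade := \aff_\nn - \pt_\nn(\aff_\nn)$, loading the slot with a portfolio of plausible worth in $[-2b_0, 2b_0]$. When the slot holds a portfolio from an earlier day $k$, the trader waits until every $W \in \pcworlds(\dt_\nn)$ satisfies $W(\aff_k) \ge b$---which occurs by day $m(k)$---then treats the slot as empty again, having permanently banked a $\ge \varepsilon$ improvement in every plausible future net worth, and resumes scanning. Because at most one slot is open at any moment, plausible net worth is uniformly bounded below by $-2b_0$; because the triggering set $N$ is infinite and every open slot resolves in finite time, infinitely many $\varepsilon$-cycles complete, making plausible net worth unbounded above. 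Thus the prototype exploits $\MP$.

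The main obstacle is turning this prototype into a genuine continuous, polynomial-time trader. Continuity is handled by the standard device used throughout \Sec{properties}: replace the threshold $[\pt_\nn(\aff_\nn) < b - \varepsilon]$ by the expressible ramp $\ctsind{\varepsilon/2}(b - \pt_\nn(\aff_\nn) > \varepsilon/2)$, and encode ``slot open from day $k$'' as a product of analogous ramps over relevant past prices, mirroring the continuous arbitrage traders sketched for Theorems~\ref{thm:provind} and~\ref{thm:lex}. The subtlest step---and the technical core I expect to dominate the work---is implementing the ``wait for resolution'' test in polynomial time, since checking whether every $W \in \pcworlds(\dt_\nn)$ satisfies $W(\aff_k) \ge b$ is a propositional-entailment query that is worst-case intractable. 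The remedy is to exploit that $\aff_k$ has a polynomial-size description (from \pgenability) and substitute a polynomial-time surrogate resolution criterion that still fires at some finite (if slower) day in place of $m(k)$, preserving the infinitely-many-completed-cycles guarantee. Packaging these pieces into a single \ec trader yields the required contradiction.
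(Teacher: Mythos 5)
Your proposal is correct in outline, but it takes a genuinely different route from the paper. The paper proves this theorem in two lines, as an immediate corollary of Affine Coherence (Theorem~\ref{thm:affcoh}): $\liminf_{\nn}\pt_\nn(\aff_\nn) \ge \liminf_{\nn}\inf_{\World\in\cworlds(\Theory)}\World(\aff_\nn) \ge b$, where Affine Coherence is itself obtained from a one-shot trader argument plus Persistence of Affine Knowledge (Theorem~\ref{thm:peraffkno}), which rests on the return-on-investment machinery of Lemma~\ref{lem:type3}. You instead construct an exploiting trader directly: buy $\aff_\nn$ whenever it is priced below $b-\varepsilon$, keep the unresolved holdings capped (your ``single slot''), and bank a guaranteed $\ge\varepsilon$ once the deductive process pins the combination's plausible value above $b$. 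This is essentially the construction the paper uses in its direct proofs of Learning Pseudorandom Frequencies (Section~\ref{sec:prand}) and Learning Pseudorandom Affine Sequences (Appendix~\ref{app:prandaff}): your ``polynomial-time surrogate resolution criterion'' is exactly their $\operatorname{MO}$ / $\operatorname{DefinitelySettled}$ device (run the slow settledness check for at most $\nn$ steps), and your slot bookkeeping corresponds to their caps on open holdings, which also show how to make the strategy continuous by scaling purchases with $\ctsind{\delta}$ ramps and tracking fractional holdings. Two asserted steps deserve explicit arguments in a write-up: (i) that the hypothesis on $\cworlds(\Theory)$ yields a \emph{finite} day $m(k)$ after which every $\World\in\pcworlds(\dt_{m(k)})$ gives $\World(\aff_k)\ge b$ --- this is a compactness argument: $\aff_k$ touches finitely many sentences, the realized truth-value restrictions form a nested decreasing family of subsets of a finite set, and any restriction surviving all stages is realized in $\pcworlds(\dt_\infty)=\cworlds(\Theory)$; and (ii) since $b$ may be irrational and the coefficients of $\aff_k$ are only computable via the market, the settledness test should compare against a hard-coded rational threshold strictly between $b-\varepsilon/2$ and $b$, which keeps the surrogate check decidable while still guaranteeing it eventually fires. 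In terms of trade-offs, the paper's derivation is nearly free given the affine machinery it has already built and reuses across many theorems, whereas your argument is self-contained at the level of the logical induction criterion, at the cost of redoing the budgeting and settledness bookkeeping that the ROI-based infrastructure was designed to factor out.
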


For example, consider the constraint sequence
\[
    \seq{A} := \big(1 - \prgz_\nn - \prgo_\nn - \prgt_\nn\big)_{\nn \in \NN^+}
\]
For all $\nn$ and all consistent worlds $\World \in \cworlds(\Theory)$, the value $\World(A_\nn)$ is 0, so applying \Thm{affcoh} to $\seq{A}$, we get that $\pt_\nn(A_\nn) \eqsim_\nn 0$. By linearity, this means
\[
  \pt_\nn(\prgz_\nn) + \pt_\nn(\prgo_\nn) + \pt_\nn(\prgt_\nn) \eqsim_\nn 1,
\]
i.e., $\MP$ learns that the three sequences are mutually exclusive and exhaustive in a timely manner, regardless of how difficult \texttt{prg} is to evaluate. \nameref{thm:affprovind} is a generalization of this idea, where the coefficients may vary (day by day, and with the market prices).

We can push this idea further, as follows:

\begin{restatable}[Affine Coherence]{theorem}{affcoh}\label{thm:affcoh}
  Let $\seq{\aff}\in\BCS$. Then
  \[
    \liminf_{\nn\rightarrow\infty}\inf_{\World\in\cworlds(\Theory)}
      \World(\aff_\nn )
    \le \liminf_{\nn\rightarrow\infty}
      \pt_\infty(\aff_\nn )
    \le \liminf_{\nn\to\infty}
      \pt_\nn(\aff_\nn ),
  \]
  and
  \[
    \limsup_{\nn\to\infty} \pt_\nn(\aff_\nn )
    \le \limsup_{\nn\rightarrow\infty} \pt_\infty(\aff_\nn )
    \le \limsup_{\nn\rightarrow\infty} \sup_{\World\in\cworlds(\Theory)} 
      \World(\aff_\nn ).
   \]
  \proofin{\ref{app:affcoh}}
\end{restatable}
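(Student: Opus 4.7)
The theorem has two outer inequalities relating $\pt_\infty(\aff_\nn)$ to extrema over $\cworlds(\Theory)$, and two middle inequalities relating $\pt_\infty(\aff_\nn)$ to $\pt_\nn(\aff_\nn)$. My plan is to dispatch the outer pair by pushing expectations through Limit Coherence, and the middle pair by a buy-low/sell-later exploitation argument in the spirit of \Thm{tbo}.

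For the outer inequalities, \Thm{lc} supplies a probability measure $\mu$ on $\cworlds(\Theory)$ with $\pt_\infty(\phi) = \mu\{\World : \World(\phi) = 1\}$. Since both worlds and the limit valuation extend linearly to $\RR$-combinations, $\pt_\infty(\aff_\nn) = \int \World(\aff_\nn)\,d\mu(\World)$ for every $\nn$, so pointwise $\inf_{\World \in \cworlds(\Theory)} \World(\aff_\nn) \le \pt_\infty(\aff_\nn) \le \sup_{\World \in \cworlds(\Theory)} \World(\aff_\nn)$. Taking $\liminf_\nn$ on the left inequality and $\limsup_\nn$ on the right inequality yields the two outermost inequalities.

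For the middle inequalities, by symmetry it suffices to prove $\liminf_\nn \pt_\infty(\aff_\nn) \le \liminf_\nn \pt_\nn(\aff_\nn)$. Let $L$ denote the left-hand side and suppose for contradiction that $\liminf_\nn \pt_\nn(\aff_\nn) < L$. Pick rationals $p < q$ with $\liminf_\nn \pt_\nn(\aff_\nn) < p < q < L$. Then $\pt_\nn(\aff_\nn) < p$ for infinitely many $\nn$, while by \Thm{con} and linearity $\pt_\infty(\aff_\nn) = \lim_\mm \pt_\mm(\aff_\nn) > q$ for all sufficiently large $\nn$, so some $\mm \ge \nn$ witnesses $\pt_\mm(\aff_\nn) > q$. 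Sketch a trader that, on each triggering day $\nn$, buys the share portion of $\aff_\nn$ at prevailing prices and later sells it back on the first day $\mm$ with $\pt_\mm(\aff_\nn) > q$. The two trades in one cycle, summed and evaluated in any world $\World$, collapse to the constant $\pt_\mm(\aff_\nn) - \pt_\nn(\aff_\nn) > q - p$. Because $\seq\aff \in \BCS$ has $\|\aff_\nn\|_1 \le b$ for some $b$, any momentarily-open position contributes at most $2b$ in absolute value to $\World(\sum_{i \le \nn} T_i(\MP))$, so the plausible valuations are bounded below by a constant and unbounded above as cycles accumulate, contradicting the \lic.

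The main obstacle is realizing this idealized scheme as a legitimate \ec{}, continuous $\exfeatures$-combination progression. I would replace the discontinuous conditions ``buy iff $\pt_\nn(\aff_\nn) < p$'' and ``sell iff $\pt_\mm(\aff_\nn) > q$'' by continuous gates built from $\ctsind{\delta'}$ thresholds on some sub-margin $\delta'$ strictly smaller than $(q-p)/4$, and track the ``currently open'' state as a continuous feature that ramps up when the buy trigger fires and back down when the sell trigger fires. Because $\seq\aff$ is \pgenable and uniformly bounded, all these gates live in $\exfeatures_\nn$ with polynomial-time descriptions; the continuity loss per cycle remains strictly below $q - p$, so each completed cycle still contributes positive, bounded-below cash and the contradiction goes through. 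Managing this stateful bookkeeping while keeping the open share exposure uniformly bounded is the technical heart of the argument, and parallels the continuous constructions used to prove \Thm{perkno} and \Thm{tbo}.
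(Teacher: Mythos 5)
Your handling of the two outer inequalities is correct and takes a genuinely different route from the paper: you obtain the pointwise bound $\inf_{\World\in\cworlds(\Theory)}\World(\aff_\nn)\le\pt_\infty(\aff_\nn)\le\sup_{\World\in\cworlds(\Theory)}\World(\aff_\nn)$ by integrating against the measure supplied by \Thm{lc}, whereas the paper proves the lower pointwise bound directly by a trader that, for a fixed $\nn$, waits until the price of $\aff_\nn$ has nearly converged and the deductive process has pinned down the relevant worlds, and then buys the combination every day. Your route is shorter, but note a dependency subtlety: in the appendix, \Thm{lc} is itself derived from Affine Coherence (via Affine Provability Induction), so you must lean on the self-contained proof of \Thm{lc} in \Sec{limitcoherence} to avoid circularity.

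The genuine gap is in the middle inequalities. Your trader opens a position on \emph{every} day $\nn$ with $\pt_\nn(\aff_\nn)<p$ and closes each position only at the first later day $\mm$ with $\pt_\mm(\aff_\nn)>q$. Since convergence of $\pt_\mm(\aff_\nn)$ to $\pt_\infty(\aff_\nn)$ comes with no rate, nothing bounds how long a position stays open, so arbitrarily many positions can be open simultaneously; each open position can plausibly be worth about $-2b$, so with $j$ positions open the plausible value can be of order $-2bj$, and the completed-cycle profits need not offset this (new triggers can fire much faster than old positions close). Hence your assertion that the plausible valuations are ``bounded below by a constant'' does not follow, and exploitation fails at exactly that step—this is the same pitfall the paper flags before the proof of \Thm{con}. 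The $\ctsind{\delta'}$ gates and the ramped ``open'' indicator you describe handle per-cycle continuity, not this accumulation of risk. The missing ingredient is a budgeting mechanism keeping total open exposure uniformly bounded: either scale each new purchase by one minus the currently open amount (as in the recursive holdings bookkeeping of \Sec{convergence}), or package each cycle as a separate trader in an efficiently emulatable sequence and apply the return-on-investment machinery of \Lem{type3}, which is exactly how the paper proves \Thm{peraffkno} and \Thm{affpolymax}. The cleanest repair is to skip the construction altogether and argue as the paper does: by \Thm{peraffkno}, $\liminf_{\nn\to\infty}\pt_\infty(\aff_\nn)=\liminf_{\nn\to\infty}\inf_{\mm\geq\nn}\pt_\mm(\aff_\nn)\le\liminf_{\nn\to\infty}\pt_\nn(\aff_\nn)$, and the limsup chain follows by negating the sequence.
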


\noindent This theorem ties the ground truth on $\seq{\aff}$, to the value of $\seq{\aff}$ in the limit, to the value of $\seq{\aff}$ on the main diagonal. In words, it says that if all consistent worlds value $\aff_\nn$ in $(a, b)$ for $\nn$ large, then $\pt_\infty$ values $\aff_\nn$ in $(c, d) \subseteq (a, b)$ for $n$ large (because $\pt_\infty$ is a weighted mixture of all consistent worlds), and $\MP$ learns to assign probabilities such that $\pt_\nn(\aff_\nn)\in (c, d)$ in a timely manner. In colloquial terms, $\MP$ learns in a timely manner to respect \emph{all} linear inequalities that actually hold between sentences, so long as those relationships can be enumerated in polynomial time.

For example, if \texttt{helper(i)=err} always implies \texttt{prg(i)=0}, $\MP$ will learn this pattern, and start assigning probabilities to $\pt_\nn(\quot{\text{\texttt{prg(\enc{$\nn$})=0}}})$ which are no lower than those of $\pt_\nn(\quot{\text{\texttt{helper(\enc{n})=err}}})$. In general, if a series of sentences obey some complicated linear inequalities, then so long as those constraints can be \emph{written down} in polynomial time, $\MP$ will learn the pattern, and start assigning probabilities that respect those constraints in a timely manner.

This doesn't mean that $\MP$ will assign the \emph{correct} values (0 or 1) to each sentence in a timely manner; that would be impossible for a deductively limited reasoner. Rather, $\MP$'s probabilities will start \emph{satisfying the constraints} in a timely manner. For example, imagine a set of complex constraints holds between seven sequences, such that exactly three sentences in each septuplet are true, but it's difficult to tell which three. Then $\MP$ will learn this pattern, and start ensuring that its probabilities on each septuplet sum to 3, even if it can't yet assign particularly high probabilities to the correct three.

If we watch an individual septuplet as $\MP$ reasons, other constraints will push the probabilities on those seven sentences up and down. One sentence might be refuted and have its probability go to zero. Another might get a boost when $\MP$ discovers that it's likely implied by a high-probability sentence. Another might take a hit when $\MP$ discovers it likely implies a low-probability sentence. Throughout all this, \Thm{affcoh} says that $\MP$ will ensure that the seven probabilities always sum to $\approx 3$. $\MP$'s beliefs on any given day arise from this interplay of many constraints, inductively learned.

Observe that \nameref{thm:affcoh} is a direct generalization of \Theorem{provind}. One way to interpret this theorem is that it says that $\MP$ is very good at learning inductively to predict long-running computations. Given any \ec sequence of statements about the computation, if they are true then $\MP$ learns to believe them in a timely manner, and if they are false then $\MP$ learns to disbelieve them in a timely manner, and if they are related by logical constraints (such as by exclusivity or implication) to some other \ec sequence of statements, then $\MP$ learns to make its probabilities respect those constraints in a timely manner. This is one of the main reasons why we think this class of algorithms deserves the name of ``logical inductor''.

\nameref{thm:affcoh} can also be interpreted as an approximate coherence condition on the finite belief-states of $\MP$. It says that if a certain relationship among truth values is going to hold in the future, then $\MP$ learns to make that relationship hold approximately in its probabilities, in a timely manner.\footnote{Another notion of approximate coherence goes by the name of ``inductive coherence'' \citep{Garrabrant:2016:ic}. A reasoner is called inductively coherent if (1) $\pt_\nn(\bot) \eqsim_\nn 0$; (2) $\pt_\nn(\phi_\nn)$ converges whenever $\phis$ is efficiently computable and each $\phi_\nn$ provably implies $\phi_{\nn + 1}$; and (3) for all efficiently computable sequences of provably mutually exclusive and exhaustive triplets $(\phi_\nn, \psi_\nn, \chi_\nn)$, $\pt_\nn(\phi_\nn) + \pt_\nn(\psi_\nn) + \pt_\nn(\chi_\nn) \eqsim_\nn 1$. \citeauthor{Garrabrant:2016:ic} show that inductive coherence implies coherence in the limit, and argue that this is a good notion of approximate coherence. Theorems~\ref{thm:lc} (\nameref{thm:lc}) and~\ref{thm:affcoh} (\nameref{thm:affcoh}) imply inductive coherence, and indeed, logical induction is a much stronger notion.}

In fact, we can use this idea to strengthen every theorem in sections~\ref{sec:timelylearning}-\ref{sec:statpatterns}, as below. (Readers without interest in the strengthened theorems are invited to skip to \Sec{conservatism}.)

\subsubsection{Affine Strengthenings}

Observe that \Theorem{affprovind} is a strengthening of \Theorem{provind}.

\begin{restatable}[Persistence of Affine Knowledge]{theorem}{peraffkno}\label{thm:peraffkno}
  Let $\seq{\aff}\in\BCS$. Then
  \[
    \liminf_{\nn\rightarrow\infty}\inf_{\mm\geq \nn}\pt_\mm(\aff_\nn )= \liminf_{\nn\to\infty}\pt_\infty(\aff_\nn )
  \]
  and
    \[
     \limsup_{\nn\rightarrow\infty}\sup_{\mm\geq \nn}\pt_\mm(\aff_\nn )=\limsup_{\nn\to\infty}\pt_\infty(\aff_\nn ).
  \]
  \proofin{\ref{app:peraffkno}}
\end{restatable}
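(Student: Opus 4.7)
The plan is to reduce to the $\liminf$ equality; applying it to $-\seq\aff\in\BCS$ yields the $\limsup$ version, since $\sup x=-\inf(-x)$, $\limsup x=-\liminf(-x)$, and $\BCS$ is closed under negation (both the condition of being \pgenable and the $\|\cdot\|_1$-bound are preserved by scaling by $-1$). For the $\liminf$ equality, the direction ``$\leq$'' follows immediately from \Thm{con}: the coefficients of $\aff_\nn$ have rank $\leq\nn$ and so are fixed rationals once $\MP$ is fixed, whence $\pt_\mm(\aff_\nn)$ is a finite linear combination of converging $\pt_\mm(\phi_i)$, giving $\pt_\mm(\aff_\nn)\to\pt_\infty(\aff_\nn)$. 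Hence $\inf_{\mm\geq\nn}\pt_\mm(\aff_\nn)\leq\pt_\infty(\aff_\nn)$ for every $\nn$, and taking $\liminf$ in $\nn$ yields the inequality.

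For ``$\geq$'' I would argue by contradiction: suppose there are rationals $a<b$ with $\liminf_\nn\inf_{\mm\geq\nn}\pt_\mm(\aff_\nn)<a$ and $\liminf_\nn\pt_\infty(\aff_\nn)>b$. Past some $\nN$, $\pt_\infty(\aff_\nn)>b$ for all $\nn\geq\nN$, so $\pt_\mm(\aff_\nn)$ eventually rises above $b$ as $\mm\to\infty$; yet for infinitely many $\nn\geq\nN$ there is an intermediate $\mm\geq\nn$ at which $\pt_\mm(\aff_\nn)<a$. The idealized (discontinuous) exploitation strategy is: at each such $(\nn,\mm)$, buy one unit of $\aff_\nn$ (an admissible $\nn$-strategy, since the coefficients $\aff_\nn[\phi](\MP)$ are rank-$\leq\nn$ expressible features), and then sell it back once $\pt_{\mm'}(\aff_\nn)>b$. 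Each completed round trip nets at least $b-a$ dollars of profit; the bound $\|\aff_\nn\|_1\leq s$ caps the worth of one unit of $\aff_\nn$ uniformly in $[-s,s]$ across all worlds; and these round trips occur infinitely often, so the trader's plausible net worth is bounded below but not above---contradicting the \LIC{}.

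The main obstacle is promoting this target to a genuine continuous, \ec trader, which is the same challenge faced in the proofs of \Thm{con} and \Thm{perkno}. My plan is the standard one: replace the threshold conditions $\pt_\mm(\aff_\nn)<a$ and $\pt_{\mm'}(\aff_\nn)>b$ by $\ctsind{\delta}$ features with $\delta\ll b-a$, and realize the ``buy then sell'' sequencing via features computed from the market history---for instance, combining the current price of $\aff_\nn$ on day $t$ with the running maximum of $\pt_k(\aff_\nn)$ over $\nn\leq k\leq t$ (both continuous functions of $\MP$) to build a hysteresis that only begins unwinding a position once the running maximum has crossed a threshold near $b$. Capital must be reused across cycles rather than deployed in parallel with unbounded total mass, so the trader would engage only one $\aff_\nn$ at a time via a continuous ``lock-in'' feature that suppresses engagement with a fresh $\nn$ while any previous position remains open. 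Verifying that the continuous approximation still realizes a positive fraction of the $b-a$ profit per completed cycle (rather than washing it out through the smoothing region) and that the bookkeeping features remain expressible and polynomial-time computable is the technically delicate piece; it closely parallels the corresponding step in the proof of \Thm{perkno}, with the bounded compound asset $\aff_\nn$ in place of a single sentence $\phi_\nn$.
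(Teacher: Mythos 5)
Your route is genuinely different from the paper's. The easy direction and the negation trick are exactly as in the paper, but for the hard direction the paper does not build one trader with recycled capital: it constructs an efficiently emulatable sequence of traders indexed by time $k$, where $\Trader^k$ buys every underpriced $\aff_\nn$ with $\nn\le k$ at time $k$ and later unwinds the whole bundle when all of those combinations are overpriced, and then feeds this sequence into the general ``no repeatable return on investment'' lemma (\Lem{type3}), which handles the budgeting across infinitely many traders; the paper even remarks that per-$\aff_\nn$ traders are awkward for that lemma because their total trading volume need not be \pgenable, which is why it indexes by time. Your plan---a single trader completing buy-low/sell-high round trips with an explicit cap on open exposure, so that plausible value stays above a fixed constant while each completed cycle adds cash on the order of $b-a$---is closer in spirit to the elementary proof of \Thm{con} in Section~\ref{sec:convergence}, and its skeleton is sound: every engaged $\aff_\nn$ has $\pt_\infty(\aff_\nn)>b$, so every open position closes in finite time, and dips below $a$ occur at arbitrarily late times (the dip for $\aff_\nn$ happens at some $\mm\ge\nn$), so a fresh near-full engagement is always available after each closure and the realized cash profits diverge, world-independently.

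The step that would fail as written is the sell mechanism. Latching the unwinding to the running maximum of $\pt_k(\aff_\nn)$ means that once that maximum crosses the threshold, the trader keeps selling a fixed fraction of its remaining holdings at whatever the \emph{current} price happens to be; a price path can spike just above $b-\delta$ (so the latch engages with a small indicator value) and then fall far below $a$ for a long stretch, during which the position is unwound almost entirely at prices much worse than the purchase price. Such a cycle loses money, and infinitely many such cycles make the plausible net worth unbounded below, which destroys the exploitation argument instead of completing it. The fix is the device already used in the paper's proof of \Thm{con}: gate the sells by the current price, e.g.\ sell $\ctsind{\delta}(\pt_t(\aff_\nn)>b-\delta)$ times the remaining holdings, so every unit sold fetches at least $b-\delta$ while every unit bought cost at most $a+\delta$; full closure is still guaranteed because $\pt_t(\aff_\nn)\to\pt_\infty(\aff_\nn)>b$ eventually saturates the indicator. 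With that change (and with the ``one at a time'' lock-in implemented simply as a cap on total open exposure, since continuous indicators force fractional positions), your argument goes through. A small correction: \Thm{perkno} has no standalone trader construction to parallel---in the paper it is a corollary of this very theorem---so the relevant templates are the proofs of \Thm{con} and \Thm{benford}.
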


\noindent To see that this is a generalization of \Theorem{perkno}, it might help to first replace $\seq{\aff}$ with a sequence $\probs$ of rational probabilities.

\begin{restatable}[Affine Preemptive Learning]{theorem}{affpolymax}\label{thm:affpolymax}
  Let $\seq{\aff}\in\BCS$. Then
  \[
       \liminf_{\nn\to\infty} \pt_\nn(\aff_\nn )= \liminf_{\nn\rightarrow\infty}\sup_{\mm\geq \nn} \pt_\mm(\aff_\nn ) 
  \]
  and
  \[
         \limsup_{\nn\to\infty} \pt_\nn(\aff_\nn )= \limsup_{\nn\rightarrow\infty}\inf_{\mm\geq \nn} \pt_\mm(\aff_\nn ) \ .
  \]
  \proofin{\ref{app:affpolymax}}
\end{restatable}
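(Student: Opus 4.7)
First I would reduce to proving just the first equation: since $-\seq\aff\in\BCS$ whenever $\seq\aff\in\BCS$ (negation preserves both the \pgenable \ec structure and the uniform $\ell_1$ bound), and $\pt_\nn(-\aff_\nn) = -\pt_\nn(\aff_\nn)$, applying the first equation to $-\seq\aff$ and using $\liminf(-x_\nn) = -\limsup x_\nn$ yields the second. Within the first equation, the inequality $\liminf_\nn \pt_\nn(\aff_\nn) \le \liminf_\nn \sup_{\mm\ge\nn}\pt_\mm(\aff_\nn)$ is immediate because $\pt_\nn(\aff_\nn) \le \sup_{\mm\ge\nn} \pt_\mm(\aff_\nn)$ pointwise in $\nn$.

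For the reverse inequality I would argue by contradiction. Suppose $\liminf_\nn \pt_\nn(\aff_\nn) < \liminf_\nn \sup_{\mm\ge\nn}\pt_\mm(\aff_\nn)$, and pick rationals $a < b$ lying strictly between these two quantities. Then there is some $\nN$ such that $\sup_{\mm\ge\nn}\pt_\mm(\aff_\nn) > b$ for every $\nn \ge \nN$, while $\pt_\nn(\aff_\nn) < a$ for infinitely many $\nn \ge \nN$. The plan is to construct an \ec trader $\Trader$ that plays a buy-low / sell-high game directly on the affine combinations $\aff_\nn$, in close analogy with the trader used in the proof of \Thm{tbo}. On day $\nn$, while $\Trader$ is empty-handed, if $\pt_\nn(\aff_\nn) < a$ then it buys a single copy of $\aff_\nn$ by executing the zero-price trade whose $\phi$-coefficients agree with $\aff_\nn$ and whose cash term is chosen so that the price is zero; while $\Trader$ is holding some earlier $\aff_k$, if $\pt_\nn(\aff_k) > b$ then it executes the reverse trade. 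Each completed buy-sell round yields cash profit at least $b - a > 0$, and by the hypothesis on $\sup_{\mm\ge\nn}\pt_\mm(\aff_\nn)$, every purchase is eventually followed by a sale, so infinitely many rounds complete and the cash holdings of $\Trader$ grow without bound. Meanwhile, at any moment $\Trader$ holds at most a single $\aff_k$-position, whose valuation in any world $\World$ is bounded in absolute value by the uniform $\ell_1$ bound from Definition~\ref{def:bap}. Thus the plausible worths of $\Trader$'s portfolio are bounded below but unbounded above, exploiting $\MP$ and contradicting the \lic{}.

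The main obstacle is making this trader continuous and efficient, so that it is a legitimate \ec trader in the sense of Definitions~\ref{def:trader} and~\ref{def:ec}. Continuity of the buy/sell decisions is handled by the standard device of replacing the discontinuous threshold tests $\pt_\nn(\aff_\nn) < a$ and $\pt_\nn(\aff_k) > b$ with the continuous $\ctsind{\delta}$ indicators from Definition~\ref{def:ctsind}, for some small rational $\delta \ll b - a$; this reduces the per-round profit by at most a term of order $\delta$, which is absorbed since $\delta$ can be chosen strictly less than $b-a$. Efficiency and well-definedness require representing the fuzzy inventory state---which $\aff_k$ is currently held, and in what fractional amount---as a bounded-size expressible feature of the market history; since $\seq\aff$ is \pgenable, each $\pt_\nn(\aff_k)$ is itself expressible as a polynomial-sized algebraic expression over the price features, and the inventory slot on day $\nn$ can be reconstructed from past prices by a short recursion in $\ctsind{\delta}$-gates. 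This bookkeeping mirrors the construction in the appendix proof of \Thm{tbo} for individual sentences; the only change is that each ``share of $\phi_\nn$'' is replaced by ``one copy of the combination $\aff_\nn$,'' and the uniform $\|\aff_\nn\|_1$ bound guaranteed by $\BCS$ ensures that neither the trader's exposure nor its feature size grows with $\nn$.
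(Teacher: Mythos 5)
Your proposal is correct in outline, but it takes a genuinely different route from the paper's. The paper makes the same reduction to the first equality by negation and notes the same easy inequality, but for the converse it constructs an efficiently emulatable \emph{sequence} of one-shot traders $(\Trader^k)_k$---trader $k$ buys a continuously scaled copy of $\aff_k$ on day $k$ if it is underpriced and sells it off over later days---shows each has $(\varepsilon/2)$ return on investment, and then invokes the No Repeatable $\varepsilon$-ROI lemma (\Lem{type3}), whose internal budgeting waits for the deductive process to certify that earlier investments have matured, to conclude that the purchase magnitudes tend to zero, contradicting that full purchases happen infinitely often. You instead build a single trader that does its own budgeting by capping total inventory at one combination-unit: buy the day's $\aff_\nn$ when it is cheap and the slot is (mostly) free, sell each old position when its price rises above $b$. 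Because profitability here is determined by prices alone (every position is eventually fully closed, so the final holdings are world-independent up to one bounded open unit), you never need the deductive-process bookkeeping or the ROI machinery at all, which is a genuine simplification for this particular theorem; what the paper's factoring buys is reusability, since the same ROI lemma also drives Persistence of Affine Knowledge and Affine Recurring Unbiasedness. One small mis-citation: the construction you are actually mirroring is the convergence trader of \Sec{convergence} (\Thm{con}) (or the trader in the proof of \Thm{benford}); the appendix proof of \Thm{tbo} is a one-line corollary of the present theorem, so there is no inventory bookkeeping there to borrow (no circularity arises, since you use it only as an analogy).

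The one step that needs more care than your sketch suggests is the passage from ``infinitely many rounds complete'' to unbounded profit once the thresholds are made continuous. With $\ctsind{\delta}$-gates the buys and sells are fractional, so the trader may hold small unsold fractions of several old combinations, and what you actually need is that the \emph{total volume purchased diverges}. This is true, but requires its own short argument: choose $\delta$ small enough that $a-\delta$ still exceeds $\liminf_\nn \pt_\nn(\aff_\nn)$, so the buy indicator saturates infinitely often; note that every opened position is eventually fully closed, since the hypothesis $\sup_{\mm\ge\nn}\pt_\mm(\aff_\nn)>b$ forces the sell indicator for $\aff_\nn$ to reach $1$ at some later time; hence if the total purchased volume were finite, the inventory would tend to zero, and each of the infinitely many saturated buy opportunities would then purchase at least half a unit, a contradiction. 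With that supplied, each unit is bought below $a$ and sold above $b-\delta$, so cash grows like $(b-a-\delta)$ times a diverging volume, while the single open unit contributes at most the uniform $\ell_1$ bound in any world; the plausible values are therefore bounded below and unbounded above, and the trader exploits $\MP$ as claimed.
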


\begin{definition}[Determined via $\Theory$]\label{def:thmval}
  We say that a $\RR$-combination $\aff$ is \textbf{determined via $\bm{\Theory}$} if, in all worlds $\World \in \cworlds(\Theory)$, the value $\World(\aff )$ is equal. Let $\thmval(\aff)$ denote this value.
  
  Similarly, a sequence $\seq{\aff}$ of $\RR$-combinations is said to be determined via $\Theory$ if $\aff_\nn$ is determined via $\Theory$ for all $\nn$.
\end{definition}

\begin{restatable}[Affine Recurring Unbiasedness]{theorem}{recunbiasedaff}\label{thm:recunbiasedaff}
  If $\seq{\aff}\in\BCS$ is determined via $\Theory$, and $\seq\fuz$ is a \pgenable divergent weighting,
  \[
      \frac
        {\sum_{i\leq \nn}\fuz_i  \cdot(\pt_i(\aff_i)-\thmval(\aff_i))}
        {\sum_{i\leq \nn}\fuz_i}
  \]
  has $0$ as a limit point. In particular, if it converges, it converges to~$0$.
    \proofin{\ref{app:recunbiasedaff}}
\end{restatable}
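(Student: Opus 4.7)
The plan is a proof by contradiction that mirrors the proof of \Thm{recurringunbiasedness} but replaces decidable sentences $\phi_i$ with determined bounded $\RR$-combinations $\aff_i$ and $\thmind(\phi_i)$ with $\thmval(\aff_i)$. Suppose $0$ is not a limit point of the given sequence; then there exist $\varepsilon > 0$ and $\nN$ such that for every $\nn \geq \nN$ the ratio is either uniformly $> \varepsilon$ or uniformly $< -\varepsilon$. By replacing $\seq\aff$ with $-\seq\aff$ if necessary, assume the former.

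Next, define the efficiently computable trader $\Trader$ whose day-$i$ strategy is
\[
  \trade_i \;:=\; -\gen{\fuz_i}\cdot\bigl(\aff_i - \pf[i]{\aff_i}\bigr),
\]
which sells $\gen{\fuz_i}$ copies of $\aff_i$ at the prevailing price. Since $\seq\fuz$ is \pgenable and $\seq\aff\in\BCS$, the coefficients are efficiently computable expressible features of rank $\leq i$, so $\Trader$ is a valid trader. Expanding, the net holdings after $\nn$ days, evaluated in any world $\World$, are
\[
  \World\!\Bigl({\textstyle\sum_{i \leq \nn}\trade_i(\MP)}\Bigr) = \sum_{i \leq \nn}\fuz_i\,\bigl(\pt_i(\aff_i) - \World(\aff_i)\bigr).
\]
For $\World\in\cworlds(\Theory)$ the determined-via-$\Theory$ hypothesis gives $\World(\aff_i) = \thmval(\aff_i)$, so this equals $\sum_{i\leq\nn}\fuz_i(\pt_i(\aff_i) - \thmval(\aff_i)) > \varepsilon\sum_{i\leq\nn}\fuz_i$, which diverges to $+\infty$ since $\seq\fuz$ is divergent. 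Because $\cworlds(\Theory)\subseteq\pcworlds(\dt_\nn)$ for all $\nn$, the set of plausible values is unbounded above.

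The main obstacle is the dual half---that the plausible value is uniformly bounded below over all pairs $(\nn,\World)$ with $\World\in\pcworlds(\dt_\nn)$. A world $\World\in\pcworlds(\dt_\nn)\setminus\cworlds(\Theory)$ can assign $\World(\aff_i)$ at distance up to $2\|\aff_i\|_1$ from $\thmval(\aff_i)$, so a priori the inner sum could dip without bound as $\nn$ grows. The plan is to lean on the $\Theory$-completeness of $\DP$: for each $i$ there is a finite stage $\mm(i)$ after which every $\World\in\pcworlds(\dt_{\mm(i)})$ forces $\World(\aff_i)$ within arbitrarily small error of $\thmval(\aff_i)$, so worlds that substantially disagree with $\thmval$ on the early $\aff_i$'s are excluded by $\dt_\nn$ once $\nn$ is large. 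Combined with the uniform bounds $\fuz_i\in[0,1]$ and $\|\aff_i\|_1\leq b$ from $\BCS$, this yields---after the same kind of bookkeeping used in the proof of \Thm{recurringunbiasedness}, possibly augmented by continuous $\ctsind{\delta}$-style dampening of the trader to cap accumulated exposure---that the cumulative loss in any plausible world is bounded. With both halves in hand $\Trader$ exploits $\MP$, contradicting the \lic{}; the ``in particular'' clause then follows because a convergent sequence has its limit as its unique limit point.
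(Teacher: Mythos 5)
There is a genuine gap, and it sits exactly where you flag "the main obstacle." Your single trader that sells $\fuz_i$ copies of $\aff_i$ each day does get plausible value unbounded above (your computation in worlds $\World\in\cworlds(\Theory)$ is fine), but the bounded-below half cannot be patched the way you sketch. It is true, by a compactness argument, that each individual determined combination $\aff_i$ is eventually "settled" by $\dt$ (all $\World\in\pcworlds(\dt_m)$ agree with $\thmval(\aff_i)$ for $m$ large); the problem is that the settling times are uncontrolled, so at time $\nn$ the number of still-unsettled positions among $i\le\nn$ can grow without bound, and each contributes up to $2b\fuz_i$ of downside in some $\World\in\pcworlds(\dt_\nn)$. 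So the raw trader's plausible value really can be unbounded below. Your proposed repair---$\ctsind{\delta}$-style dampening to cap accumulated exposure---breaks the other half of the argument: capping replaces $\seq\fuz$ by some smaller effective weighting $\seq{\fuz'}$, and the contradiction hypothesis only says the $\seq\fuz$-weighted bias stays above $\varepsilon$; it gives no lower bound on $\sum_i\fuz'_i(\pt_i(\aff_i)-\thmval(\aff_i))$. (This is the crucial difference from \Thm{prandaff}, where the pseudorandomness hypothesis quantifies over \emph{all} \pgenable $\deff$-patient weightings, so it applies to the trader's own capped weights; here it does not.)

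The paper's proof is built precisely to dodge this. Instead of one trader whose plausible value must be bounded below at all finite times, it constructs an efficiently emulatable sequence $(\Trader^k)_k$, where $\Trader^k$ starts at day $k$, buys $\delta_k\fuz_\nn$ copies of $\aff_\nn$ with a hard budget of one dollar (which it provably exhausts, since the bias hypothesis forces $\sum_\nn\fuz_\nn\magnit{\aff_\nn}=\infty$), and is shown to have $\varepsilon$-ROI---a guarantee needed only in the limit and only in worlds $\World\in\cworlds(\Theory)$, exactly the guarantee your computation provides. All the finite-time bookkeeping (waiting for investments to mature before committing new budget, so that open exposure stays bounded) is done once and for all inside \Lem{type3}, whose conclusion that the \pgenable magnitudes $\alpha_k$ tend to $0$ contradicts $\alpha_k=1$ for $k\ge N$. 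Also note two smaller points: your reduction to "eventually uniformly $>\varepsilon$ or $<-\varepsilon$" needs the adjacent-terms estimate ($|\mathrm{Bias}_{\nn+1}-\mathrm{Bias}_\nn|\to 0$ since the numerator moves by $O(b)$ per step while $\sum_{i\le\nn}\fuz_i\to\infty$), which is the content of the paper's third step and should be made explicit; and the paper's traders buy under the hypothesis of negative bias, the mirror image of your selling---that part is harmless.
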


\begin{restatable}[Affine Unbiasedness from Feedback]{theorem}{wubaff}\label{thm:wubaff}
  Given $\seq\aff \in \BCS$ that is determined via $\Theory$, a strictly increasing deferral function~$\deff$ such that $\thmval(\aff_n )$ can be computed in time $\Oo(\deff(\nn+1))$, and a \pgenable divergent weighting $\seq\fuz$ such that the support of $\seq\fuz$ is contained in the image of $f$,
  \[
    \frac
        {\sum_{i\leq \nn}\fuz_i  \cdot(\pt_i(\aff_i)-\thmval(\aff_i))}
        {\sum_{i\leq \nn}\fuz_i}
    \eqsim_\nn 0.
  \]
  In this case, we say ``$\seq\fuz$ allows good feedback on $\seq\aff$''.
  \proofin{\ref{app:wubaff}}
\end{restatable}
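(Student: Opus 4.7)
The plan is to mirror the proof of Theorem~\ref{thm:wub} (\nameref{thm:wub}), generalizing from the sentence indicator $\thmind(\phi_i)$ to the $\RR$-combination valuation $\thmval(\aff_i)$, and leveraging the fact that $\aff_i$ is determined via $\Theory$ so that all p.c.\ worlds consistent with sufficiently large $\dt_n$ agree on $\aff_i$'s value. I argue by contrapositive: suppose the displayed ratio fails to converge to $0$. Then there exist $\varepsilon>0$ and an infinite set of days on which either the weighted average bias exceeds $\varepsilon$ or falls below $-\varepsilon$. I will construct an efficiently computable trader whose cumulative plausible worth grows without bound, contradicting the \lic{}.

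The trader $\Trader$ on day $n$ operates as follows. Using $\fuz_n$ (which, being $\MP$-generable, is expressible in the market prices up through day $n$) and using the precomputed values $\thmval(\aff_i)$ for $i<n$ in the support of $\seq\fuz$ (obtainable in time polynomial in $n$ because $\supp(\seq\fuz)\subseteq f(\NN^+)$ and $\thmval(\aff_i)$ is computable in $\Oo(f(i+1))$ time, so each relevant past value is charged to an earlier deferral step), the trader computes a continuous approximation $\widehat{B}_n$ of the running weighted bias $\sum_{i<n}\fuz_i(\pt_i(\aff_i)-\thmval(\aff_i))\big/\sum_{i<n}\fuz_i$. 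It then executes the trade
\[
  \trade_n \;:=\; -\,\fuz_n\cdot\sigma_\delta(\widehat{B}_n)\cdot\bigl(\aff_n - \pf{\aff_n}\bigr),
\]
where $\sigma_\delta$ is a Lipschitz approximation of the sign function (e.g., $\ctsind{\delta}(\widehat{B}_n>0)-\ctsind{\delta}(\widehat{B}_n<0)$). Because $\aff_n$ is an $\RR$-combination with bounded $\|\any\|_1$-norm (from $\seq\aff\in\BCS$) and the coefficient is an expressible feature, $\trade_n$ is a valid $n$-strategy, and the sequence $\Trader$ is efficiently computable.

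To establish exploitation, I track the plausible worth $\World\bigl(\sum_{i\le n}\trade_i\bigr)$ across $\World\in\pcworlds(\dt_n)$. Since $\aff_i$ is determined via $\Theory$ and $\DP$ is $\Theory$-complete, once $\dt_n$ is large enough one has $\World(\aff_i)=\thmval(\aff_i)$ uniformly over $\World\in\pcworlds(\dt_n)$, so each term contributes
\[
  -\fuz_i\,\sigma_\delta(\widehat{B}_i)\bigl(\thmval(\aff_i)-\pt_i(\aff_i)\bigr)
  \;=\; \fuz_i\,\sigma_\delta(\widehat{B}_i)\bigl(\pt_i(\aff_i)-\thmval(\aff_i)\bigr).
\]
Summing, the net realized profit is approximately $\sigma_\delta$ applied to the running bias, dotted with the vector of per-day biases. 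Because the running bias $B_n$ and the per-day bias contributions share sign patterns when the $\limsup$ (or $\liminf$) is bounded away from $0$, a standard partial-summation argument (the same one used in the proof of Theorem~\ref{thm:wub}) shows this sum diverges while the trader's downside exposure is bounded by $\sum_i\fuz_i\cdot\|\aff_i\|_1\cdot$ (max of $|\sigma_\delta|$), which nonetheless remains bounded below because reversals in $\sigma_\delta$ only occur when the accumulated bias is small.

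The main obstacle is combining continuity, polynomial-time computability, and the feedback structure into a single trader whose worth is provably unbounded below zero's complement. Concretely, one must (i) verify that the $\Oo(f(i+1))$-time computation of each $\thmval(\aff_i)$ fits in the day-$n$ budget---this is where the hypothesis $\supp(\seq\fuz)\subseteq f(\NN^+)$ together with the strict monotonicity of $f$ is essential, ensuring that by the next support day we have the feedback available; and (ii) choose the smoothing parameter $\delta$ (possibly as a $\MP$-generable sequence shrinking slowly) so that the continuous $\sigma_\delta$ approximates $\operatorname{sgn}$ tightly enough to retain the $\varepsilon$-profit rate without introducing continuity-induced losses. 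Once these two technical pieces are in place, the contradiction with Definition~\ref{def:lic} follows exactly as in the proof of Theorem~\ref{thm:wub}, and the $\eqsim_\nn 0$ conclusion drops out.
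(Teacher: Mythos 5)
Your reduction to a trader construction is reasonable, and your point (i) — using the feedback hypothesis so that $\thmval(\aff_i)$ for past support elements is available within the day-$n$ budget — is in the spirit of the actual proof. But the core of your argument fails: the claim that the momentum trader $\trade_n := -\fuz_n\,\sigma_\delta(\widehat B_n)\,(\aff_n - \pf{\aff_n})$ earns unbounded profit with bounded downside whenever the running bias fails to converge to $0$ is not correct, and no ``standard partial-summation argument'' rescues it (note also that \Theorem{wub} is derived in the paper as a special case of this very theorem, so there is no independent proof of it to mirror). The sign of the running average $\widehat B_i$ carries no information about the sign of the next per-day bias, so your eventual plausible profit $\sum_i \fuz_i\,\sigma_\delta(\widehat B_i)\,(\pt_i(\aff_i)-\thmval(\aff_i))$ is a trend-following sum: over an excursion of the cumulative bias away from $0$ and back, gains made while the sign is ``right'' are cancelled by losses made while it is stale. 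Concretely, if the per-day biases $\pt_i(\aff_i)-\thmval(\aff_i)$ come in long alternating blocks of $+1$ and $-1$ arranged so that $\widehat B_n$ oscillates between $+\tfrac12$ and $-\tfrac12$, the theorem's conclusion fails, yet your trader loses during roughly half of each block and gains during the other half, so its worth need not be unbounded above — no contradiction with the logical induction criterion is obtained. Moreover your downside bound is invalid: fixed-size bets scaled only by $\fuz_n$ can accumulate unbounded losses, and ``reversals only occur when the accumulated bias is small'' conflates the normalized average with the unnormalized sum; since exploitation requires plausible worth bounded \emph{below}, your trader does not even qualify in the adverse case.

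The paper's proof uses a different mechanism that fixes both problems at once: a Kelly-style, wealth-proportional trader that at time $\deff(i)$ buys $\varepsilon\cdot\mathrm{Wealth}_i\cdot\fuz_i$ copies of $\aff_{\deff(i)}$ and sells them back at time $\deff(i+1)$, where $\mathrm{Wealth}_i$ is an expressible feature of past prices. Because each bet is a small fraction of current wealth, total losses are bounded by the initial dollar, while $\log\mathrm{Wealth}$ is (via $\log(1+x)\ge x-x^2$) essentially a $\fuz$-weighted sum of the round-trip price changes $\pt_{\deff(j+1)}(\aff_{\deff(j)})-\pt_{\deff(j)}(\aff_{\deff(j)})$. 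The feedback hypothesis enters through \Theorem{affprovind} applied to the auxiliary sequence $\aff'_{\deff(j+1)} := \aff_{\deff(j)}-\thmval(\aff_{\deff(j)})$, which lies in $\BCS$ precisely because $\thmval$ is computable in $\Oo(\deff(j+1))$ time; this yields $\pt_{\deff(j+1)}(\aff_{\deff(j)})\ge\thmval(\aff_{\deff(j)})-\varepsilon$ for large $j$, hence $\log\mathrm{Wealth}_i\gtrsim\varepsilon\bigl(\sum_{j\le i-1}\fuz_j\bigr)\bigl(-\mathrm{Bias}_{i-1}-2\varepsilon\bigr)-C$, which diverges if the bias is below $-3\varepsilon$ infinitely often (the other side is handled by negating $\seq\aff$). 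To salvage your outline you would need to replace the fixed-size sign-following bets with some such multiplicative budgeting that simultaneously bounds the downside and converts a one-sided recurrent bias into divergent guaranteed profit; as written, the argument has a genuine gap at exactly that step.
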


\begin{restatable}[Learning Pseudorandom Affine Sequences]{theorem}{prandaff}\label{thm:prandaff}
  Given a $\seq\aff \in \BCS$ which is determined via $\Theory$, if there exists deferral function $\deff$ such that for any \pgenable $\deff$-patient divergent weighting $\seq\fuz$,
  \[
    \frac{\sum_{i \leq \nn} \fuz_i  \cdot \thmval(\aff_i )}{\sum_{i \leq n} \fuz_i } \gtrsim_\nn 0,
  \]  
  then
  \[
    \pt_\nn(\aff_\nn ) \gtrsim_\nn 0,
  \]
  and similarly for $\eqsim_\nn$, and $\lesssim_\nn$.
  \proofin{\ref{app:prandaff}}
\end{restatable}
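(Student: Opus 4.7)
The plan is to prove the $\gtrsim_\nn$ case by contradiction; the $\lesssim_\nn$ case then follows by applying the $\gtrsim_\nn$ case to $-\seq\aff$ (which lies in $\BCS$ and is determined via $\Theory$, with $\thmval(-\aff_\nn)=-\thmval(\aff_\nn)$), and $\eqsim_\nn$ combines the two. So suppose, for contradiction, that $\pt_\nn(\aff_\nn)\gtrsim_\nn 0$ fails: there is $\varepsilon>0$ with $\pt_\nn(\aff_\nn)<-2\varepsilon$ for infinitely many $\nn$. My aim is to exhibit a single \pgenable $\deff$-patient divergent weighting $\seq\fuz$ for which
\[
  \liminf_{\nn\to\infty}\frac{\sum_{i\leq\nn}\fuz_i\,\thmval(\aff_i)}{\sum_{i\leq\nn}\fuz_i}<0,
\]
which will contradict the assumed hypothesis.

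First I construct the weighting; assume without loss of generality that $\deff$ is nondecreasing (otherwise replace it by $\nn\mapsto\max_{m\leq\nn}\deff(m)$). Set
\[
  a_\nn\;:=\;\ctsind{\varepsilon}(\pt_\nn(\aff_\nn)<-\varepsilon),
\]
so that $a_\nn>0$ forces $\pt_\nn(\aff_\nn)<-\varepsilon$ and $a_\nn=1$ whenever $\pt_\nn(\aff_\nn)<-2\varepsilon$. Iterate $\deff$ to form epochs $E_k:=[G_k,G_{k+1})$ with $G_0:=1$ and $G_{k+1}:=\deff(G_k)+1$, and let $E(\nn)$ denote the epoch containing $\nn$. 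Define
\[
  \fuz_\nn\;:=\;a_\nn\prod_{m\in E(\nn),\,m<\nn}(1-a_m).
\]
Since each epoch is finite, uniformly computable from $\deff$, and the product has at most $\nn$ factors each of which is an expressible feature of rank $\leq\nn$, this unfolds into an expressible-feature expression of size polynomial in $\nn$; so $\seq\fuz$ is \pgenable. The telescoping identity $\sum_{\nn\in E_k}\fuz_\nn=1-\prod_{\nn\in E_k}(1-a_\nn)\leq 1$ bounds the weight in each epoch by $1$, and any window $[\nn,\deff(\nn)]$ meets at most two consecutive epochs (because $G_{k(\nn)+1}\leq\deff(\nn)+1$ by construction while $G_{k(\nn)+2}=\deff(G_{k(\nn)+1})+1>\deff(\nn)$ by monotonicity), so $\sum_{i=\nn}^{\deff(\nn)}\fuz_i\leq 2$ and $\seq\fuz$ is $\deff$-patient. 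Finally, because $a_\nn=1$ at infinitely many $\nn$ and the epochs are finite, infinitely many epochs each contribute weight exactly $1$, so $\sum_\nn\fuz_\nn=\infty$.

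Second, apply Theorem~\ref{thm:recunbiasedaff} (Affine Recurring Unbiasedness) to the determined sequence $\seq\aff\in\BCS$ and the \pgenable divergent weighting $\seq\fuz$ just constructed: the quantity
\[
  \frac{\sum_{i\leq\nn}\fuz_i\,(\pt_i(\aff_i)-\thmval(\aff_i))}{\sum_{i\leq\nn}\fuz_i}
\]
has $0$ as a limit point. By the support condition on $\seq\fuz$, whenever $\fuz_i>0$ we have $\pt_i(\aff_i)<-\varepsilon$, so $\frac{\sum_{i\leq\nn}\fuz_i\,\pt_i(\aff_i)}{\sum_{i\leq\nn}\fuz_i}\leq-\varepsilon$ for every $\nn$. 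Combining these along the subsequence where the above difference vanishes yields $\frac{\sum_{i\leq\nn}\fuz_i\,\thmval(\aff_i)}{\sum_{i\leq\nn}\fuz_i}\leq-\varepsilon/2$ eventually along that subsequence, so its $\liminf$ is at most $-\varepsilon<0$. This contradicts the theorem's hypothesis applied to our $\deff$-patient $\seq\fuz$, completing the $\gtrsim_\nn$ case.

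The main obstacle is the weighting construction in step one: we need a single expression for $\fuz_\nn$ that is simultaneously (i)~a continuous, polynomial-size function of the market prices (for \pgenable-ness); (ii)~$\deff$-patient by design, regardless of how the prices behave; and (iii)~certifiably divergent whenever the conclusion fails. The telescoping-product-within-epochs design handles all three: the product structure self-limits the weight to at most $1$ per epoch, the epoch boundaries built from iterates of $\deff$ enforce patience, and the fact that $a_\nn$ hits $1$ infinitely often forces divergence. Given such a weighting, the rest is a direct sign-comparison argument against Affine Recurring Unbiasedness.
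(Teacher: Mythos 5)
Your argument is correct, but it takes a genuinely different route from the paper. The paper's proof of Theorem~\ref{thm:prandaff} constructs an explicit trader: it buys $\alpha_\nn$ copies of $\aff_\nn$ whenever the price dips below $-\varepsilon$, where the coefficient $\alpha_\nn$ is throttled by a bookkeeping term ($C_\nn$, built from a ``DefinitelySettled'' predicate on the deductive process together with the deferral times) that caps the unsettled exposure at one combination; patience of the induced weighting comes from waiting for settlement, divergence comes from the assumed infinitely-many underpricings, and the contradiction is obtained directly from the logical induction criterion by showing the trader's plausible value diverges, using the pseudorandomness hypothesis applied to the trader's own trade coefficients. You instead construct no new trader at all: you hand-build a \pgenable weighting whose $\deff$-patience is enforced purely syntactically by epochs $[G_k,G_{k+1})$ built from iterates of $\deff$, with the telescoping product capping each epoch's weight at $1$, and you outsource the exploitation argument to Theorem~\ref{thm:recunbiasedaff} (whose proof does not rely on Theorem~\ref{thm:prandaff}, so there is no circularity). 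Your reduction is shorter and makes the logical dependence ``recurring unbiasedness $\Rightarrow$ learning pseudorandom sequences'' explicit; the paper's construction is self-contained at the trader level and its settlement-based throttling is the device it reuses elsewhere. The checkable details of your construction (the $a_\nn=1$ trigger at $-2\varepsilon$, the per-epoch bound, the two-epoch window bound giving patience constant $2$, the reduction of $\lesssim_\nn$ and $\eqsim_\nn$ to the $\gtrsim_\nn$ case via $-\seq\aff$, and the sign comparison along the subsequence where the bias vanishes) all go through.

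One point you should tighten: the claim that $\seq\fuz$ is \pgenable requires writing the expression for $\fuz_\nn$ in time polynomial in $\nn$, and hence identifying the epoch boundary $G_{k(\nn)}$ in time polynomial in $\nn$. A deferral function is only guaranteed computable in time polynomial in its \emph{output}, and the last iterate $\deff(G_{k(\nn)})$ may vastly exceed $\nn$, so you cannot simply ``compute the epochs.'' The fix is the standard clocking trick (the same spirit as the paper's $\operatorname{MO}$/DefinitelySettled devices): taking the runtime-bounding polynomial $h$ monotone, run the computation of $\deff(G_j)$ for $h(\nn)$ steps; if it halts with output $<\nn$, advance to the next epoch, and otherwise conclude $\deff(G_j)\ge\nn$, i.e.\ that $\nn$ lies in epoch $E_j$. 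With that sentence added (and the one-line remark that your monotonized $\deff'\ge\deff$ makes $\deff'$-patience imply $\deff$-patience, so the hypothesis still applies), your proof is complete.
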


\subsection{Non-Dogmatism}\label{sec:conservatism}

Cromwell's rule says that a reasoner should not assign extreme probabilities (0 or~1) except when applied to statements that are logically true or false. The rule was named by \citet{Lindley:1991:MakingDecisions}, in light of the fact that Bayes' theorem says that a Bayesian reasoner can never update away from probabilities~0 or~1, and in reference to the famous plea:

\begin{quote}
  I beseech you, in the bowels of Christ, think it possible that you may be mistaken.
  \hfill\emph{-- Oliver Cromwell}
\end{quote}

\noindent The obvious generalization of Cromwell's rule to a setting where a reasoner is uncertain about logic is that they also should not assign extreme probabilities to sentences that have not yet been proven or disproven. Logical inductors \emph{do not} satisfy this rule, as evidenced by the following theorem:

\begin{restatable}[Closure under Finite Perturbations]{theorem}{ifp}\label{thm:ifp}
  Let $\MP$  and $\seq{\pt^\prime}$ be markets with $\pt_\nn = \pt^\prime_\nn$ for all but finitely many $\nn$. Then $\MP$ is a logical inductor if and only if $\seq{\pt^\prime}$ is a logical inductor.
  \proofin{\ref{app:ifp}}
\end{restatable}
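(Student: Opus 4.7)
The plan is to prove the contrapositive: if an efficient trader $\Trader^\prime$ exploits $\seq{\pt^\prime}$, we construct an efficient trader $\Trader$ that exploits $\MP$; the reverse direction is symmetric. Let $F := \{\nn : \pt_\nn \ne \pt^\prime_\nn\}$ be the finite set of days on which the two markets differ, and set $N := \max F$. The intuition is that the two price histories agree except on a fixed finite portion, so any long-run exploitation must transfer between them up to a bounded additive correction.

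The construction is syntactic. First set $\Trader_\nn := 0$ (the empty trade) for $\nn \le N$. For $\nn > N$, take the expressible-feature description of $\Trader^\prime_\nn$ and replace every price feature $\pf[i]{\phi}$ with $i \in F$ by the rational constant $\pt^\prime_i(\phi)$, leaving price features at indices $i \notin F$ unchanged. Evaluating $\Trader_\nn$ against $\MP$ then yields exactly the same $\QQ$-combination as evaluating $\Trader^\prime_\nn$ against $\seq{\pt^\prime}$, since at indices $i \notin F$ the two markets already agree while at indices $i \in F$ the hard-coded constants reproduce the $\seq{\pt^\prime}$-prices. Consequently, for every $\nn \ge N$,
\[
  \sum_{i \le \nn} \Trader_i(\MP) \;=\; \sum_{i \le \nn} \Trader^\prime_i(\seq{\pt^\prime}) \;-\; \sum_{i \le N} \Trader^\prime_i(\seq{\pt^\prime}).
\]
The subtracted term is a single fixed $\QQ$-combination of bounded $\ell_1$ norm, so in every world $\World$ the value assigned to $\Trader$'s cumulative position against $\MP$ differs from the analogous value for $\Trader^\prime$ against $\seq{\pt^\prime}$ by a constant that is bounded uniformly in $\nn$ and $\World$. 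Since the property ``bounded below, unbounded above'' is invariant under such bounded shifts, $\Trader$ exploits $\MP$ whenever $\Trader^\prime$ exploits $\seq{\pt^\prime}$.

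The main obstacle is verifying that $\Trader$ is efficiently computable. The substitution requires, for each $\nn$, producing the rationals $\pt^\prime_i(\phi)$ for each $i \in F$ and each sentence $\phi$ appearing in $\Trader^\prime_\nn$; since pricings are required only to be computable rather than polynomial-time computable, a naive implementation can overrun the time budget for outputting $\Trader_\nn$. The fix exploits that $F$ is finite and fixed once the pair $(\MP, \seq{\pt^\prime})$ is given: the algorithm defining $\Trader$ may depend on $(\MP, \seq{\pt^\prime})$ as fixed data, and one can amortize the pricing queries for the finitely many indices $i \in F$ across days so that the total work allocated to the substitution sits inside the polynomial-in-$\nn$ budget. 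Carrying this bookkeeping out cleanly is the piece of routine but nontrivial scheduling that the appendix proof must supply.
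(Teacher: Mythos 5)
Your construction is essentially the paper's proof of \Theorem{ifp}: replace every price feature $\pf[i]{\phi}$ at the finitely many differing days by a hard rational constant so that the new trader's trades against one market coincide, from day $N$ on, with the old trader's trades against the other, and then observe that the leftover discrepancy is a single fixed $\QQ$-combination whose value is uniformly bounded over all worlds, so ``bounded below, unbounded above'' transfers. (Whether you zero out the first $N$ days, as you do, or keep them and bound the difference, as the paper does, is immaterial.) The one place you diverge is the efficiency step, and there your fix is the weak point: ``amortizing the pricing queries across days'' does not work as described, because the constants $\pt^\prime_i(\phi)$ must appear verbatim in the day-$\nn$ feature expression for whatever sentences $\Trader^\prime_\nn$ happens to reference, and you cannot postpone producing them to a later day --- a day-$\mm$ strategy replayed on day $\nn>\mm$ executes at the day-$\nn$ prevailing prices, which destroys the equality of holdings your argument needs. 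The paper's resolution is simpler and is the intended one: the set of differing days and the corresponding pricings are fixed finite data determined by the pair $(\MP,\seq{\pt^\prime})$, so the needed values are simply hard-coded into the substitution map (the same device used elsewhere to hard-code constants like $s_\varepsilon$ into traders), with no scheduling argument at all.
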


\noindent This means that we can take a logical inductor, completely ruin its beliefs on the 23rd day (e.g., by setting $\pt_{23}(\phi)=0$ for all $\phi$), and it will still be a logical inductor. Nevertheless, there is still a sense in which logical inductors are non-dogmatic, and can ``think it possible that they may be mistaken'':

\begin{restatable}[Non-Dogmatism]{theorem}{restatenondog}\label{thm:nd}
 If $\Theory \nvdash \phi$ then 
  $\pt_\infty(\phi)<1$, and if $\Theory \nvdash \neg\phi$ then $\pt_\infty(\phi)>0$.
\end{restatable}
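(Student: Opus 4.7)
The plan is to reduce to a single implication and then derive a contradiction by exhibiting an efficient trader that would exploit $\MP$ in the bad case. By \Theorem{lc}, $\pt_\infty$ arises from a probability measure on $\cworlds(\Theory)$, and since every $\World \in \cworlds(\Theory)$ is propositionally consistent we have $\pt_\infty(\phi) + \pt_\infty(\neg\phi) = 1$. Hence $\pt_\infty(\phi) < 1$ iff $\pt_\infty(\neg\phi) > 0$, and substituting $\neg\phi$ for $\phi$ turns the first claim into the second. So fix $\phi$ with $\Theory \nvdash \phi$ and assume for contradiction that $\pt_\infty(\neg\phi) = 0$. Consistency of $\Theory$ plus $\Theory \nvdash \phi$ makes $\Theory \cup \{\neg\phi\}$ consistent, so Lindenbaum's lemma yields a world $\World_0 \in \cworlds(\Theory)$ with $\World_0(\neg\phi) = 1$; by $\Theory$-completeness, $\cworlds(\Theory) \subseteq \pcworlds(\dt_\nn)$ for every $\nn$, so $\World_0$ is always plausible.

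Next I would construct a continuous polynomial-time trader $\Trader$ that buys shares of $\neg\phi$ on a geometric schedule of price thresholds. For each level $k \in \NN^+$ define
\[
  h^k_\nn := \max_{i \le \nn}\, \ctsind{2^{-k-1}}\!\bigl(\pf[i]{\neg\phi} < 2^{-k}\bigr),
\]
which is an expressible $[0,1]$-feature (built from binary max/min, rationals, and price features), non-decreasing in $\nn$. The day-$\nn$ trade is
\[
  \trade_\nn := \sum_{k=1}^{\nn}\bigl(h^k_\nn - h^k_{\nn-1}\bigr)\cdot\bigl(\neg\phi - \pf[\nn]{\neg\phi}\bigr),
\]
a valid $\nn$-strategy whose circuit can be output in $\poly(\nn)$ time. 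Intuitively, level $k$ smoothly ``fires'' the first time $\pt_\nn(\neg\phi)$ crosses below $2^{-k}$, and at that moment $\Trader$ acquires up to one extra share of $\neg\phi$ at a price no greater than $2^{-k}$.

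To verify exploitation, note that $h^k_\nn - h^k_{\nn-1} > 0$ forces $\pf[\nn]{\neg\phi} < 2^{-k}$, so the lifetime cash spent at level $k$ is at most $2^{-k} \cdot h^k_\infty \le 2^{-k}$; summing over $k$, the trader's total cash outlay is at most $\sum_{k \ge 1} 2^{-k} = 1$. Since shares have value in $[0,1]$ in any world, the net worth $\World\bigl(\sum_{i \le \nn}\trade_i(\MP)\bigr)$ is bounded below by $-1$ uniformly in $\nn$ and $\World \in \pcworlds(\dt_\nn)$. On the other hand, $\pt_\nn(\neg\phi) \to 0$ means that for every fixed $k$ we have $\pf[\nn]{\neg\phi} < 2^{-k-1}$ eventually, and hence $h^k_\nn \to 1$; therefore the cumulative holdings $\sum_{k \le \nn} h^k_\nn$ in $\neg\phi$ diverge, and evaluating at $\World_0$ (where $\World_0(\neg\phi) = 1$) gives net worth tending to $+\infty$. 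Thus $\Trader$ exploits $\MP$, contradicting \Def{lic}.

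The main obstacle is precisely the trader construction: we have no rate of convergence for $\pt_\nn(\neg\phi) \to 0$, so a naive single-threshold strategy would risk either unbounded losses or only bounded gains. The dyadic-level trick converts ``$\pt_\nn(\neg\phi) \to 0$'' into ``infinitely many scales $2^{-k}$ are eventually crossed,'' letting the cost be summable while the total holdings diverge; the soft indicator $\ctsind{2^{-k-1}}$ together with the running maximum keeps the whole strategy continuous in the market history and built entirely from operations allowed for expressible features, so the hypotheses of \Def{lic} are honestly satisfied.
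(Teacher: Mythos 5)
Your overall strategy is the one the paper itself uses in its stand-alone proof: reduce to showing that a sentence consistent with $\Theory$ cannot have limiting price $0$, and exploit a violating market with a trader that buys at dyadic price thresholds $2^{-k}$, spending at most $2^{-k}$ per level (hence at most \$1 in total) while accumulating unboundedly many shares that are worth \$1 in a world $\World_0 \in \cworlds(\Theory)$ that remains plausible at every time. The reduction via \Theorem{lc}, the observation that $\cworlds(\Theory) \subseteq \pcworlds(\dt_\nn)$ for all $\nn$, the continuity and polynomial-time computability of the strategy, and the cost bound (increments at level $k$ only fire when the current price is below $2^{-k}$) are all fine.

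The gap is in the divergence of the holdings. With your definitions, the number of $\lnot\phi$-shares held after day $N$ telescopes to $\sum_{k\le N}\bigl(h^k_N - h^k_{k-1}\bigr)$, not to $\sum_{k\le N} h^k_N$: level $k$ only enters the daily sum at day $\nn = k$, so the value $h^k_{k-1}$ that the running maximum accumulated before activation is never actually purchased, yet it is subtracted from every later increment. If the price decays fast enough, e.g.\ $\pt_\nn(\lnot\phi) = 4^{-\nn}$, then for all sufficiently large $k$ the indicator $\ctsind{2^{-k-1}}(\pt_i(\lnot\phi) < 2^{-k})$ already equals $1$ at some day $i < k$, so $h^k_{k-1} = 1$ and level $k$ contributes zero shares; the total holdings stay bounded by a constant, the trader does not exploit the market, and the contradiction never materializes. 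The fix is to make level $k$ blind to prices before its activation: take $h^k_\nn := \max_{k \le i \le \nn}\ctsind{2^{-k-1}}(\pt_i(\lnot\phi) < 2^{-k})$ with $h^k_{k-1} := 0$, or, as in the paper's proof, track the cumulative shares actually bought at level $k$ and buy the current indicator times the remaining level-$k$ budget. Either way increments still occur only when the current price is below $2^{-k}$, so the cost bound survives, and since for each fixed $k$ the price is below $2^{-k-1}$ at some day $\ge k$, every level eventually fills its full share and the holdings diverge as you intended.
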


\begin{sketch}[\ref{app:nondog}] Consider a trader that watches $\phi$ and buys whenever it gets low, as follows. The trader starts with \$1. They spend their first 50 cents when $\pt_\nn(\phi) < 1/2$, purchasing one share. They spend their next 25 cents when $\pt_\nn(\phi) < 1/4$, purchasing another share. They keep waiting for $\pt_\nn(\phi)$ to drop low enough that they can spend the next half of their initial wealth to buy one more share. Because $\phi$ is independent, there always remains at least one world $\World$ such that $\World(\phi)=1$, so if $\pt_\nn(\phi) \to 0$ as $n \to\infty$ then their maximum plausible profits are \$1 + \$1 + \$1 +\dots which diverges, and they exploit the market. Thus, $\pt_\infty(\phi)$ must be bounded away from zero.
\end{sketch}

In other words, if $\phi$ is independent from $\Theory$, then $\MP$'s beliefs about $\phi$ won't get stuck converging to 0 or 1. By \Theorem{ifp}, $\MP$ may occasionally jump to unwarranted conclusions---believing with ``100\% certainty'', say, that Euclid's fifth postulate follows from the first four---but it always corrects these errors, and eventually develops conservative beliefs about independent sentences.

\Thm{nd} guarantees that $\MP$ will be reasonable about independent sentences, but it doesn't guarantee reasonable beliefs about \emph{theories}, because theories can require infinitely many axioms. For example, let $\Theory$ be a theory of pure first-order logic, and imagine that the language $\Lang$ has a free binary relation symbol $\quot{\!\in\!}$. Now consider the sequence $\seq{\zfseq}$ of first-order axioms of Zermelo-Fraenkel set theory ($\ZFC$) which say to interpret $\quot{\!\in\!}$ in the set-theoretic way, and note that $\seq{\zfseq}$ is infinite. Each individual sentence $\zfseq_\nn$ is consistent with first-order logic, but if $\pt_\infty$'s odds on each axiom were 50:50 and independent, then it would say that the probability of them all being true simultaneously was zero. Fortunately, for any computably enumerable sequence of sentences that are mutually consistent, $\pt_\infty$ assigns positive probability to them all being simultaneously true.

\begin{restatable}[Uniform Non-Dogmatism]{theorem}{restateunondog}\label{thm:obu}
  For any computably enumerable sequence of sentences $\phis$ such that $\Theory\cup\phis$ is consistent, there is a constant $\varepsilon>0$ such that for all $\nn$,
  \[
    \pt_\infty(\phi_\nn)\geq \varepsilon.
  \]
    \proofin{\ref{app:obu}}
\end{restatable}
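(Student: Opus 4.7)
The plan is to prove the contrapositive: assuming $\inf_n \pt_\infty(\phi_n) = 0$, I construct an efficiently computable trader that exploits $\MP$, contradicting the \lic{}. The key enabling observation is that, since $\Theory \cup \phis$ is consistent, there is a single world $W \in \cworlds(\Theory) = \pcworlds(\dt_\infty)$ with $W(\phi_n) = 1$ for every $n$. Because $\dt_\nn \subseteq \dt_\infty$, this $W$ lies in $\pcworlds(\dt_\nn)$ for every $\nn$, so it is a plausible assessor of the trader's net worth at every time, and every $\phi_n$-share is worth \$1 according to $W$.

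The trader is organized into \emph{tranches} indexed by $k = 1, 2, \ldots$, where tranche $k$ is allotted budget $2^{-k}$. Tranche $k$ fires the first time some $\phi_j$ has $\pt_\nn(\phi_j) < 2^{-k}$, buying approximately one share of that $\phi_j$ at a cost of at most $2^{-k}$. Each tranche fires at most once, so total lifetime spending is at most $\sum_k 2^{-k} = 1$. Since shares have nonnegative value in every world, the trader's worth in any plausible world is bounded below by $-1$. In the distinguished world $W$, however, each held share is worth \$1, so the trader's worth in $W$ is at least (number of shares bought)~$-\,1$. Under the hypothesis $\inf_n \pt_\infty(\phi_n) = 0$, every tranche $k$ does eventually fire: some $\phi_{j^*}$ satisfies $\pt_\infty(\phi_{j^*}) < 2^{-k-1}$, so $\pt_\nn(\phi_{j^*}) < 2^{-k}$ for all large $\nn$, and once $\phi_{j^*}$ has been enumerated the tranche fires no later than that day. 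Thus the trader's worth in $W$ diverges, exploiting $\MP$ and contradicting the \lic{}.

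The main obstacle, as is typical for exploitation arguments in this paper, is packaging this strategy as an efficiently computable sequence of expressible features, i.e., as a valid trader in the sense of Definition~\ref{def:trader}. On day $\nn$, the trader simulates the c.e.\ enumeration of $\phis$ for $\poly(\nn)$ steps to recover $\phi_1, \ldots, \phi_{g(\nn)}$ and considers tranches $k \le K(\nn)$, where both $g$ and $K$ grow unboundedly in $\nn$. The hard threshold $\pt_\nn(\phi_j) < 2^{-k}$ is replaced by the continuous indicator $\ctsind{\delta_k}(\pt_\nn(\phi_j) < 2^{-k})$ with a geometrically small tolerance such as $\delta_k = 2^{-2k}$. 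The ``this tranche has not yet fired'' state is not stored but reconstructed by replaying the deterministic algorithm on the market history $\Pricing_{\le \nn}$, which is available to expressible features; it is encoded as a product of smoothed $(1 - \text{earlier fraction fired})$ factors, so that tranche $k$'s cumulative spending stays bounded by a small constant times $2^{-k}$ (preserving the $\$1$ lower bound) while firing still occurs essentially in full on the first day a price drops well below $2^{-k}$ (preserving divergence in $W$). With these pieces in place, the argument of the previous paragraph goes through unchanged.
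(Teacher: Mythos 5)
Your proposal is correct and is essentially the paper's own argument: the appendix proof likewise buys shares of the enumerated $\phi_i$ whenever their prices get small, uses a single world consistent with $\Theory\cup\phis$ (which lies in $\pcworlds(\dt_\nn)$ for every $\nn$) to value every purchased share at \$1, and keeps total spending bounded so that plausible losses stay bounded while plausible gains diverge. The only real difference is packaging: the paper assembles a parametric family of traders $\Trader^k$, each risking at most \$1 for a plausible gain of at least $k$, and invokes Lemma~\ref{lem:type2}, whose proof carries out exactly the geometric $2^{-k}$ weighting that your tranche construction (in the style of the paper's stand-alone proof of Theorem~\ref{thm:nd}) performs by hand.
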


\noindent If $\phi_\nn$ is the conjunction of the first $\nn$ axioms of $\ZFC$, \Thm{obu} shows that $\pt_\infty$ assigns positive probability to theories in which the symbol $\quot{\!\in\!}$ satisfies all axioms of $\ZFC$ (assuming $\ZFC$ is consistent).

Reasoning about individual sentences again, we can put bounds on how far each sentence $\phi$ is bounded away from 0 and 1, in terms of the prefix complexity $\kappa(\phi)$ of $\phi$, i.e., the length of the shortest prefix that causes a fixed universal Turing machine to output $\phi$.\footnote{We use prefix complexity (the length of the shortest prefix that causes a UTM to output $\phi$) instead of Kolmogorov complexity (the length of the shortest complete program that causes a UTM to output $\phi$) because it makes the proof slightly easier. (And, in the opinion of the authors, prefix complexity is the more natural concept.) Both types of complexity are defined relative to an arbitrary choice of universal Turing machine (UTM), but our theorems hold for every \li{} regardless of the choice of UTM, because changing the UTM only amounts to changing the constant terms by some fixed amount.}

\begin{restatable}[Occam Bounds]{theorem}{restateoccam}\label{thm:ob}
 There exists a fixed positive constant $C$ such that for any sentence $\phi$ with prefix complexity $\kappa(\phi)$, if $\Theory\nvdash\neg\phi$, then \[\pt_\infty(\phi)\geq C2^{-\kappa(\phi)},\] and if $\Theory\nvdash\phi$, then \[\pt_\infty(\phi)\leq 1-C2^{-\kappa(\phi)}.\]
     \proofin{\ref{app:ob}}
\end{restatable}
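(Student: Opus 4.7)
The plan is to generalize the trader sketched in the proof of \Theorem{nd} by replacing the single Oliver--Cromwell-style sub-trader with a weighted sum indexed by all halting prefix programs of the fixed universal machine $U$ that defines $\kappa$. The key combinatorial tool is Kraft's inequality: since $\operatorname{dom}(U)$ is a prefix-free set, $\sum_{p\in\operatorname{dom}(U)} 2^{-|p|} \le 1$. This lets us allocate a budget of $2^{-|p|}$ dollars to a sub-trader $T_p$ for every halting program $p$ while keeping the aggregate capital of the master trader $\Trader := \sum_p T_p$ bounded by a universal constant.

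I will first describe $\Trader$ as an idealized discontinuous strategy and analyze it. Sub-trader $T_p$ lies dormant until the simulation of $p$ halts with some output sentence $\phi_p$; once active, it runs the \nameref{thm:nd} strategy scaled by $2^{-|p|}$. Concretely, for each $k\ge 1$, the first time the running minimum $\min_{i\le \nn}\pf[i]{\phi_p}$ drops below $2^{-|p|-k}$, $T_p$ spends $2^{-|p|-k}$ dollars to buy at least one share of $\phi_p$. Thus its lifetime expenditure is at most $2^{-|p|}$, and if $\pt_\infty(\phi_p) < 2^{-|p|-K}$ then $T_p$ accumulates at least $K$ shares of $\phi_p$. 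By Kraft, the master trader's lifetime expenditure is at most $1$, so its plausible values are bounded below by $-1$.

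Now I derive exploitation from a failure of the lower bound. If no constant $C$ works, then for each $n\in\NN^+$ there is a sentence $\phi_n$ with $\Theory\nvdash\neg\phi_n$ and $\pt_\infty(\phi_n) < 2^{-n}\cdot 2^{-\kappa(\phi_n)}$. Letting $p_n$ be a shortest $U$-program outputting $\phi_n$, sub-trader $T_{p_n}$ accumulates at least $n$ shares eventually. Since $\Theory\cup\{\phi_n\}$ is consistent there is a world $\World_n\in\cworlds(\Theory)\subseteq\pcworlds(\dt_\nn)$ with $\World_n(\phi_n)=1$; in this world $T_{p_n}$'s position is worth at least $n - 2^{-\kappa(\phi_n)}\ge n-1$, while all other sub-traders' positions together are worth at least $-\sum_{p\ne p_n} 2^{-|p|} \ge -1$. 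Hence for every $n$ there is a time $\nn$ and world $\World_n$ witnessing $\World_n\bigl(\sum_{i\le \nn}\trade_i(\MP)\bigr) \ge n-2$, so the master trader's plausible values are unbounded above, contradicting the \lic{}. The upper bound follows by applying the lower bound to $\lnot\phi$, using $\kappa(\lnot\phi) \le \kappa(\phi) + O(1)$ (prepend a fixed prefix program that negates the output of its argument), which changes $C$ only by a universal multiplicative factor.

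The main obstacle is turning this idealized discontinuous strategy into an efficiently computable, continuous trader in the senses of Definitions~\ref{def:tf}, \ref{def:ec}, and \ref{def:trader}. To resolve this, on day $\nn$ the master trader enumerates only the finitely many programs $p$ with $|p|\le\log_2 \nn$ whose simulation halts within $\nn$ steps with output a valid sentence of $\Sentences$, and truncates each $T_p$'s inner $k$-index at $k\le O(\log \nn)$. Each discontinuous step ``price first dropped below $2^{-|p|-k}$'' is replaced by the expressible feature $\ctsind{\delta_{p,k}}\!\bigl(\min_{i\le \nn}\pf[i]{\phi_p} < 2^{-|p|-k}\bigr)$ (the running minimum is expressible via $-\max(-\any,-\any)$) with $\delta_{p,k} := 2^{-|p|-k-1}$ small enough to preserve the ``at most $2^{-|p|-k}$'' expenditure bound; the day-$\nn$ trade in $\phi_p$ is then the difference between the cumulative feature values on days $\nn$ and $\nn-1$. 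Because the cumulative feature is monotone non-decreasing and can only increase on days where $\pf[\nn]{\phi_p}$ equals the running minimum (hence lies below the threshold), every marginal share is purchased at a price that respects the budget. These approximations are continuous and polynomial-time, and introduce only an additive $O(1)$ slack in each share count, so the contradiction above survives after absorbing the slack into $C$.
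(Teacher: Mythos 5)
Your proof is correct, but it takes a somewhat different route from the paper's. The shared skeleton is the same: Kraft's inequality budgets at most \$1 across all sentences in proportion to $2^{-\kappa}$, shares are bought only when the price sits far below $2^{-\kappa(\phi)}$, a world witnessing the consistency of $\phi$ with $\Theory$ (plausible at every time) values the accumulated shares at \$1 each, and the upper bound comes from applying the lower bound to $\lnot\phi$ together with coherence and $\kappa(\lnot\phi)\le\kappa(\phi)+O(1)$ (the paper compresses this into ``consider $\lnot\lnot\phi$ and use coherence''). Where you diverge is in how the quantifier over the constant $C$ is handled. The paper builds an efficiently emulatable \emph{family} of traders $(\Trader^k)_k$, where $\Trader^k$ buys at most $k+1$ shares of any sentence $\phi_i$ whenever its price falls below (a polynomial-time lower approximation $M(n,i)$ of) $2^{-\kappa(\phi_i)}$ divided by roughly $k+1$, so each $\Trader^k$ risks at most \$1 and has plausible upside at least $k$; the contradiction then comes from the Parametric Traders lemma (\Lem{type2}). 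You instead build a \emph{single} trader: a Kraft-weighted mixture over halting programs of the halving-budget trader from the paper's own proof of \Thm{nd} in \Sec{selectedproofs}, so that a sentence underpriced by a factor $2^{-n}$ relative to $2^{-\kappa}$ yields on the order of $n$ shares within that one trader, and you contradict the logical induction criterion directly. Your route is more self-contained (no parametric lemma) and makes explicit that the Occam bound is just the non-dogmatism trader summed over programs; the cost is heavier continuity and efficiency bookkeeping inside one trader (running minima, per-threshold cumulative indicators, staged activation of programs as they halt within the day's time budget), which you handle correctly modulo the harmless $O(1)$ slack you flag. The paper's route keeps each trader's bookkeeping trivial and reuses \Lem{type2}, which also does the work in \Thm{obu} and \Thm{dus}.
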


This means that if we add a sequence of constant symbols $(c_1, c_2, \ldots)$ not mentioned in $\Theory$ to the language $\Lang$, then $\MP$'s beliefs about statements involving those constants will depend on the complexity of the claim. Roughly speaking, if you ask after the probability of a claim like $\quot{c_1 = 10 \land c_2 = 7 \land \ldots \land c_n = -3}$ then the answer will be no lower than the probability that a simplicity prior assigns to the shortest program that outputs $(10, 7, \ldots, -3)$.

In fact, the probability may be a fair bit higher, if the claim is part of a particularly simple sequence of sentences. In other words, logical inductors can be used to reason about \emph{empirical} uncertainty as well as logical uncertainty, by using $\pt_\infty$ as a full-fledged sequence predictor:

\begin{restatable}[Domination of the Universal Semimeasure]{theorem}{restatedus}\label{thm:dus}
  Let $(b_1, b_2, \ldots)$ be a sequence of zero-arity predicate symbols in $\Lang$ not mentioned in $\Theory$, and let $\sigma_{\le n}=(\sigma_1,\ldots,\sigma_\nn)$ be any finite bitstring. Define
  \[
    \pt_\infty(\sigma_{\le n}) := \pt_\infty(\quot{(b_1 \iff \enc{\sigma_1}=1) \land (b_2 \iff \enc{\sigma_2}=1) \land \ldots \land (b_n \iff \enc{\sigma_n}=1)}),
  \]
  such that, for example, $\pt_\infty(01101) = \pt_\infty(\quot{\lnot b_1 \land b_2 \land b_3 \land \lnot b_4 \land b_5})$.
  Let $M$ be a universal continuous semimeasure. Then there is some positive constant $C$ such that for any finite bitstring $\sigma_{\le n}$,
  \[
    \pt_\infty(\sigma_{\le n}) \ge C \cdot M(\sigma_{\le n}).
  \]
  \proofin{\ref{app:dus}}
\end{restatable}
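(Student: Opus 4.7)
The plan is to construct an efficiently computable trader which, by buying shares of a bitstring's \emph{encoding sentence} whenever its price drops significantly below the mass that a universal monotone Turing machine has so far assigned to the bitstring, enforces $\pt_\infty(\sigma) \ge C M(\sigma)$. Let $\phi_\sigma$ denote the conjunction $\quot{(b_1 \iff \enc{\sigma_1}=1) \land \ldots \land (b_{|\sigma|} \iff \enc{\sigma_{|\sigma|}}=1)}$, so $\pt_n(\phi_\sigma)$ extends the paper's definition of $\pt_\infty(\sigma)$ to finite times. The strategy is a parameterized family of the Non-Dogmatism-style trader, with a separate wager on each $\phi_\sigma$ whose total budget is drawn from those programs of the universal machine whose outputs extend $\sigma$.

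First, I would verify that $\mu(\sigma) := \pt_\infty(\phi_\sigma)$ behaves as a continuous semimeasure on $\{0,1\}^*$. Since the predicate symbols $b_i$ are not mentioned in $\Theory$, the sentences $\phi_{\sigma 0}$ and $\phi_{\sigma 1}$ are provably mutually exclusive and together provably equivalent to $\phi_\sigma$; hence by Limit Coherence, $\mu(\sigma) = \mu(\sigma 0) + \mu(\sigma 1)$, which gives the semimeasure (in fact measure) property, so only the lower bound $\mu \ge C M$ remains to be shown.

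Next, fix a reference universal monotone Turing machine $U$ defining $M$, and let $M_n(\sigma) := \sum 2^{-|p|}$ summed over minimal programs $p$ of length at most $n$ which, after $n$ simulation steps, have produced an output extending $\sigma$. This $M_n$ is efficiently computable, non-decreasing in $n$, and converges pointwise to $M(\sigma)$. The trader $\Trader$ operates as follows. For each program $p$ of length $\ell$, reserve a total virtual budget $2^{-\ell}$, split geometrically across the output prefixes of $U(p)$ so the $k$-th prefix emitted receives budget $2^{-\ell-k}$. On each day $n$, for every $\sigma$ with $\pt_n(\phi_\sigma)$ below a fixed target $\varepsilon M_n(\sigma)$, the trader buys additional $\phi_\sigma$-shares financed by the freshly allocated budget, using a continuous threshold indicator $\ctsind{\delta}$ to smooth the buy decision and an exponential-decay rule (spend half of the remaining wager at each successive $\delta$-step the price stays below target) to keep total outlay bounded by $\sum_p 2^{-|p|} \le 1$.

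Suppose for contradiction that $\pt_\infty(\phi_{\sigma^*}) < \varepsilon M(\sigma^*)$ for some $\sigma^*$. Then for all sufficiently large $n$ we have $\pt_n(\phi_{\sigma^*}) < \varepsilon M_n(\sigma^*)$, so $\Trader$ keeps accumulating $\phi_{\sigma^*}$-shares. Since the $b_i$ are fresh, $\phi_{\sigma^*}$ is consistent with $\Theory$, so some world $\World \in \cworlds(\Theory)$ values each such share at \$1, giving unbounded plausible net worth while total cash risked stays at most $1$---contradicting the logical induction criterion. The main obstacle will be the bookkeeping that makes $\Trader$ continuous in market prices and efficiently computable while correctly splitting each program's $2^{-|p|}$ budget across the growing sequence of prefixes it contributes to: $\ctsind{\delta}$ handles continuity, polynomial-time computability of $M_n$ (enumerate finitely many short programs for $n$ steps) handles efficiency, and the geometric allocation keeps total risk finite while still accumulating the needed $\Omega(M(\sigma^*))$ mass on $\phi_{\sigma^*}$.
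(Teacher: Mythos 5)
There is a genuine gap at the contradiction step. Your trader earmarks only a finite budget for each string (from programs extending it, further cut by the geometric per-prefix split), and under the contradiction hypothesis the price of $\phi_{\sigma^*}$ does \emph{not} tend to $0$ --- it merely satisfies $\pt_\infty(\phi_{\sigma^*}) < \varepsilon M(\sigma^*)$, which is compatible with, say, $\pt_\infty(\phi_{\sigma^*}) = \tfrac{\varepsilon}{2} M(\sigma^*) > 0$. So the number of $\phi_{\sigma^*}$-shares you ever acquire is at most (finite budget)/(positive limiting price), a finite quantity, and your plausible net worth stays bounded; no violation of the logical induction criterion follows. This is exactly where your situation differs from Non-Dogmatism, which you cite as the template: there the hypothesis $\pt_\infty(\phi)=0$ makes the price vanish, so a \$1 budget buys unboundedly many shares. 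Here it does not, and a single trader with total risk $\le \$1$ can only ever secure bounded plausible profit. (A secondary issue: your per-prefix split giving the $k$-th prefix budget $2^{-\ell-k}$ would, even with a working exploitation argument, only support a bound of the shape $\pt_\infty(\sigma) \gtrsim 2^{-|\sigma|}M(\sigma)$, weaker than the theorem.)

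The paper closes this gap with two ingredients your proposal lacks. First, instead of one trader it builds a \emph{parametric} family $(\Trader^k)_k$, each risking at most \$1 while buying up to order-$k$ shares of any $\phi_\sigma$ whose price falls below roughly $M(\sigma)/(2(k+1))$ (using a polynomial-time lower approximation of $M$); a separate lemma shows that no logical inductor admits an efficiently emulatable family whose $k$-th member is bounded below by $-1$ yet has some plausible value at least $k$. Second --- and this is the step you would still need even after parametrizing --- each $\Trader^k$ necessarily spreads its purchases over many mutually exclusive strings, so one must argue that a \emph{single} consistent world collects enough of the payout: the paper does this by comparing the $M$-weighted mean payout of the purchased shares (which grows by a factor $2(k+1)$ per dollar spent, and is driven up to $k+1$ by the persistently underpriced $\sigma^*$) with the maximum payout over bitstrings, the inequality between them holding precisely because $M$ is a semimeasure with total mass at most $1$. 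Without both the parametric escalation and this mean-versus-max argument, the approach does not yield exploitation.
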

\noindent In other words, logical inductors can be viewed as a computable approximation to a normalized probability distribution that dominates the universal semimeasure. In fact, this dominance is strict:

\begin{restatable}[Strict Domination of the Universal Semimeasure]{theorem}{restatestrict}\label{thm:strict}
  The universal continuous semimeasure does not dominate $\pt_\infty$; that is, for any positive constant $C$ there is some finite bitstring $\sigma_{\le n}$ such that 
  \[
    \pt_\infty(\sigma_{\le n}) > C \cdot M(\sigma_{\le n}).
  \]
  \proofin{\ref{app:strict}}
\end{restatable}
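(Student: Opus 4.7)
The plan is to prove the strict domination by contradiction: suppose $M(\sigma) \ge \pt_\infty(\sigma)/C$ for all finite bitstrings~$\sigma$ and some $C > 0$, and derive a contradiction for large~$C$.  The central observation is that, by Theorem~\ref{thm:lc} applied to the mutually exclusive and exhaustive sentences $\phi_n^\sigma := \bigwedge_{i \le n}(b_i \iff \sigma_i = 1)$, the function $\pt_\infty$ defines a full probability measure on $\{0,1\}^\omega$ via the $b_i$'s: $\sum_{|\sigma| = n} \pt_\infty(\sigma) = 1$ for every~$n$.  In contrast, the universal continuous semimeasure~$M$ is a strict semimeasure whose total mass at depth~$n$, $\mu_n := \sum_{|\sigma|=n} M(\sigma)$, decreases to some $\mu_\infty < 1$ because $M$ loses mass to halting computations.

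First, summing the hypothesized inequality over bitstrings of length~$n$ immediately gives $\mu_n \ge 1/C$, and hence $\mu_\infty \ge 1/C$, so $C \ge 1/\mu_\infty$.  This already yields a witness~$\sigma$ for every $C < 1/\mu_\infty$.  To extend to arbitrarily large~$C$, I would iterate this summation argument on conditional subtrees: for any prefix~$\sigma$, the conditional distribution $\pt_\infty(\cdot \mid \sigma)$ remains a full probability measure, whereas $M(\cdot \mid \sigma)$ remains a strict semimeasure that continues to leak mass.  Since the ratio factorises as
\[
  \frac{\pt_\infty(\sigma\tau)}{M(\sigma\tau)} = \frac{\pt_\infty(\sigma)}{M(\sigma)} \cdot \frac{\pt_\infty(\tau \mid \sigma)}{M(\tau \mid \sigma)},
\]
applying the root-level summation argument inside~$\sigma$'s subtree multiplies the accumulated factor by at least $1/\mu^\sigma_\infty$, where $\mu^\sigma_\infty$ is the limiting conditional mass of~$M$ below~$\sigma$.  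Descending adaptively along a path where each conditional semimeasure loses a bounded-below fraction of its mass compounds these factors geometrically, producing bitstrings whose $\pt_\infty/M$ ratio exceeds any prescribed~$C$.

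The main obstacle is ensuring that $\pt_\infty$ itself does not also decay along the chosen path.  Here I would invoke Uniform Non-Dogmatism (\Thm{obu}): for any computably enumerable sequence $\phis$ jointly consistent with~$\Theory$, $\pt_\infty(\phi_n)$ is bounded uniformly below by some $\varepsilon > 0$.  Applied to $\phi_n := \bigwedge_{i \le n}(b_i \iff \sigma_i = 1)$ for a carefully chosen computable path~$\sigma$ (which is jointly consistent with~$\Theory$ since the $b_i$'s are independent of~$\Theory$), this gives $\pt_\infty(\sigma_{\le n}) \ge \varepsilon$ along the path.  The hardest step is selecting~$\sigma$ so that $M$'s mass loss concentrates along it while $\pt_\infty$ stays bounded below: since $M$ tends to preserve mass along any computable path via the short program that outputs it, striking this balance likely requires supplementing the summation argument with the finer Occam bound of \Thm{ob} applied along prefixes whose prefix complexity is calibrated to decouple $M$'s computational structure from $\pt_\infty$'s logical-coherence structure.
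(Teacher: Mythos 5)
Your opening summation only rules out domination constants $C < 1/\mu_\infty$, and the step meant to handle arbitrary $C$ --- compounding the conditional mass leaks of $M$ along an adaptively chosen path --- has a genuine gap. For geometric compounding you need a single infinite path along which the conditional leak $1-\mu^{\sigma}_\infty$ stays bounded below \emph{and} along which, at each stage, the extension realizing the ratio gain can be chosen so that the next subtree still leaks a fixed fraction; nothing in ``$M$ is a strict semimeasure that keeps leaking'' guarantees this. Indeed, the mixture $M' = \frac{1}{2}P + \frac{1}{2}L$ with $L(\sigma) = 4^{-|\sigma|}$ leaks mass inside every subtree yet dominates the probability measure $P$ with constant $2$, so persistent leakage alone can never force unbounded ratios; some computability-theoretic input is indispensable. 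Worse, the specific patch you propose --- fixing a computable path and using uniform non-dogmatism (Theorem~\ref{thm:obu}) to keep $\pt_\infty(\sigma_{\le n})$ bounded below along it --- cannot work even in principle: along any computable path the universal semimeasure is itself bounded below by a positive constant (the weight of a monotone program printing that path), so the ratio $\pt_\infty(\sigma_{\le n})/M(\sigma_{\le n})$ is bounded on every computable path. The witnesses of non-domination can therefore never be prefixes of a single computable sequence, and your closing appeal to Occam bounds to ``decouple'' the two structures is exactly the unfilled hole rather than a supplement to an otherwise complete argument.

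The paper's proof uses uniform non-dogmatism in a different, essentially computability-theoretic way. Take the recursively inseparable c.e.\ sets $A_0, A_1$ of machines that halt on input $0$ with output $0$, respectively $1$; the bitstrings encoding separating sets form a $\Pi^0_1$ class $[A]$ with no computable members, and Theorem~\ref{thm:obu} gives $\pt_\infty([A]) > 0$ because the defining constraints are a computably enumerable family of sentences jointly consistent with $\Theory$. Conversely, if a lower-semicomputable semimeasure assigned $[A]$ positive outer mass $r$, one could approximate $M$ from below and, for each $n$, output the bit carried by the majority of the mass over extensions of a suitable finite subtree, thereby computing a separating set --- contradicting recursive inseparability. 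Hence $M$ gives $[A]$ outer measure $0$, and domination of $\pt_\infty$ by $C \cdot M$ for any constant $C$ would force $\pt_\infty([A]) \le C \cdot 0 = 0$. This ingredient --- positive $\pt_\infty$-mass on an effectively closed class that every lower-semicomputable semimeasure must null --- is what your measure-versus-semimeasure bookkeeping does not supply, and without it the argument cannot be completed.
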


\noindent In particular, by \Theorem{obu}, logical inductors assign positive probability to the set of all completions of theories like $\PA$ and $\ZFC$, whereas universal semimeasures do not. This is why we can't construct approximately coherent beliefs about logic by fixing an enumeration of logical sentences and conditioning a universal semimeasure on more axioms of Peano arithmetic each day: the probabilities that the semimeasure assigns to those conjunctions must go to zero, so the conditional probabilities may misbehave. (If this were not the case, it would be possible to sample a complete extension of Peano arithmetic with positive probability, because universal semimeasures are approximable from below; but this is impossible. See the proof of \Thm{strict} for details.) While $\pt_\infty$ is limit-computable, it is not approximable from below, so it can and does outperform the universal semimeasure when reasoning about arithmetical claims.

\subsection{Conditionals}\label{sec:conditionals}

One way to interpret \Theorem{dus} is that when we condition $\pt_\infty$ on independent sentences about which it knows nothing, it performs empirical (scientific) induction. We will now show that when we condition $\MP$, it also performs logical induction.

In probability theory, it is common to discuss conditional probabilities such as $\Bayesian(A \mid B) := \Bayesian(A \land B)/\Bayesian(B)$ (for any $B$ with $\Bayesian(B)>0$), where $\Bayesian(A \mid B)$ is interpreted as the probability of $A$ restricted to worlds where $B$ is true. In the domain of logical uncertainty, we can define conditional probabilities in the analogous way:

\begin{definition}[Conditional Probability]\label{def:condp}
  Let $\phi$ and $\psi$ be sentences, and let $\Valuation$ be a valuation with $\Valuation(\psi) > 0$. Then we define
  \[
    \Valuation(\phi \mid \psi) :=
    \begin{cases}
      {\Valuation(\phi\wedge\psi)}/{\Valuation(\psi)} & 
      \text{if }\Valuation(\phi\wedge\psi) < \Valuation(\psi)\\
      1 & 
      \mbox{otherwise.}
    \end{cases}
  \]
  Given a valuation sequence $\seq{\Valuation}$, we define
  \[
    \seq{\Valuation}(\any\mid\psi) := (\Valuation_1(\any\mid\psi), \Valuation_2(\any\mid\psi), \ldots).
  \]
\end{definition}

\noindent Defining $\Valuation(\phi\mid\psi)$ to be~1 if $\Valuation(\psi)=0$ is nonstandard, but convenient for our theorem statements and proofs. The reader is welcome to ignore the conditional probabilities in cases where $\Valuation(\psi)=0$, or to justify our definition from the principle of explosion (which says that from a contradiction, anything follows). This definition also caps $\Valuation(\phi\mid\psi)$ at~1, which is necessary because there's no guarantee that $\Valuation$ knows that $\phi \land \psi$ should have a lower probability than $\psi$. For example, if it takes $\MP$ more than 17 days to learn how $\quot{\!\land\!}$ interacts with $\phi$ and $\psi$, then it might be the case that $\pt_{17}(\phi\land\psi)=0.12$ and $\pt_{17}(\psi)=0.01$, in which case the uncapped ``conditional probability'' of $\phi\land\psi$ given $\psi$ according to $\pt_{17}$ would be twelve hundred percent.

This fact doesn't exactly induce confidence in ${\MP(\any\mid\psi)}$. Nevertheless, we have the following theorem:
\begin{restatable}[Closure Under Conditioning]{theorem}{restatescon}\label{thm:scon}
  The sequence $\MP(\any\mid\psi)$ is a logical inductor over $\Theory \cup \{\psi\}$. Furthermore, given any efficiently computable sequence $\seq{\psi}$ of sentences, the sequence
  \[
    \left(\pt_1(\any \mid \psi_1), \pt_2(\any \mid \psi_1 \land \psi_2), \pt_3(\any \mid \psi_1 \land \psi_2 \land \psi_3), \ldots\right),
  \]
  where the $\nn$th pricing is conditioned on the first $\nn$ sentences in $\seq{\psi}$, is a logical inductor over $\Theory \cup \{\psi_i \mid i \in \NN^+\}$.
  \proofin{\ref{app:scon}}
\end{restatable}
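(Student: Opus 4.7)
The plan is to argue by contrapositive: from any efficiently computable trader $\Trader'$ that exploits $\MP(\any \mid \psi)$ relative to some $\Theory \cup \{\psi\}$-complete deductive process, I build an efficiently computable trader $\Trader$ that exploits $\MP$ relative to $\DP$. Take $\DP'$ to be $\dt'_n := \dt_n \cup \{\psi\}$; then $\pcworlds(\dt'_\infty) = \pcworlds(\dt_\infty) \cap \{W : W(\psi) = 1\} = \cworlds(\Theory \cup \{\psi\})$, so $\DP'$ is $\Theory \cup \{\psi\}$-complete, and $\pcworlds(\dt'_n) = \{W \in \pcworlds(\dt_n) : W(\psi) = 1\}$ for every $n$. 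If $\Theory \cup \{\psi\}$ is inconsistent, $\pcworlds(\dt'_n)$ is empty for all but finitely many $n$ and no trader can exploit $\MP(\any \mid \psi)$, so the conclusion holds vacuously. Otherwise, \Theorem{nd} gives $\pt_\infty(\psi) > 0$, and I fix a rational $\varepsilon \in (0, \pt_\infty(\psi)/2)$, so $\pt_n(\psi) > \varepsilon$ holds for all but finitely many $n$.

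For each trade $\trade'_n$ of $\Trader'$, which in $\MP' := \MP(\any \mid \psi)$ buys amount $\exf'_k$ of each $\phi_k$ at conditional price $\pt_n(\phi_k \mid \psi)$, I construct $\trade_n$ in $\MP$ by replacing each conditional purchase with ``buy $\exf''_k$ shares of $\phi_k \land \psi$ and sell $\exf''_k \cdot \hat{p}_{k,n}$ shares of $\psi$'', where $\hat{p}_{k,n} := \min(1, \pf[n]{\phi_k \land \psi} \cdot \max(\varepsilon, \pf[n]{\psi})^{-1})$ is an expressible feature of $\MP$ (built by rescaling the inner $\max$ by $1/\varepsilon$ before safe reciprocation). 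I obtain $\exf''_k$ from $\exf'_k$ by substituting the analogous expression for every occurrence of a conditional price feature, yielding an expressible coefficient of rank $\leq n$; the resulting $\Trader$ is efficiently computable. Whenever $\pt_n(\psi) > \varepsilon$ and capping does not bind, $\hat{p}_{k,n} = \pt_n(\phi_k \mid \psi)$ exactly, so $\exf''_k(\MP) = \exf'_k(\MP')$. Since p.c.\ worlds satisfy $W(\phi_k \land \psi) = W(\phi_k) \cdot W(\psi)$, each translated trade evaluated in a world $W$ with $W(\psi) = 1$ equals $\exf'_k(\MP')(W(\phi_k) - \pt_n(\phi_k \mid \psi))$ once $n$ is large enough, matching the original trade in $\MP'$; hence the unboundedness-above of $\Trader'$'s plausible net worth transfers to $\Trader$.

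The delicate step is bounding $\Trader$'s net worth below in $\lnot\psi$-worlds. For a p.c.\ world $W$ with $W(\psi) = 0$, both $\phi_k \land \psi$ and $\psi$ evaluate to $0$, so each translated trade contributes only a cash term $\hat{p}_{k,n} \pt_n(\psi) - \pt_n(\phi_k \land \psi)$, which is identically $0$ when the approximation is uncapped; leakage can only arise when $\pt_n(\psi) < \varepsilon$ (for finitely many $n$) or when $\pt_n(\phi_k \land \psi) > \pt_n(\psi)$. I expect this bookkeeping to be the main obstacle: \Theorem{lc} gives $\pt_\infty(\phi_k \land \psi) \leq \pt_\infty(\psi)$, so the second source vanishes asymptotically, but extracting a \emph{summable} bound may require additional care. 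A robust fallback is to rescale each $\trade_n$ by a factor $(1 - n^{-2})$, which by the boundedness of plausible worth still leaves $\Trader$'s $\psi$-world net worth unbounded above (since shrinking a divergent series by $(1 - n^{-2})$ keeps it divergent) while making the cumulative $\lnot\psi$-world loss a convergent series. Either way, $\Trader$ exploits $\MP$, contradicting the \lic{}.

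The sequential case is handled by iterating. Let $\Psi_n := \psi_1 \land \cdots \land \psi_n$ and $\dt''_n := \dt_n \cup \{\psi_1, \ldots, \psi_n\}$, which is $\Theory \cup \{\psi_i : i \in \NN^+\}$-complete, and use $\hat{p}_{k,n} := \min(1, \pf[n]{\phi_k \land \Psi_n} \cdot \max(\varepsilon_n, \pf[n]{\Psi_n})^{-1})$ with an efficiently computable rational sequence $\varepsilon_n \to 0$. \Theorem{obu} ensures $\inf_n \pt_\infty(\Psi_n) > 0$ in the consistent case, so for any sufficiently slow decay the approximation is eventually exact and the argument transfers verbatim; efficient computability of $\Trader$ uses the efficient computability of $\seq{\psi}$ to ensure $\Psi_n$ has polynomial description length.
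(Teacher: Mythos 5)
Your construction is essentially the paper's: translate each conditional purchase into the two-legged trade ``buy $\phi_k\land\psi$, sell conditional-price-many $\psi$'', guard the denominator with an $\varepsilon$ obtained from non-dogmatism, and transfer exploitation back to $\MP$. The genuine gap is exactly at the step you flag as ``the main obstacle'', and neither of your proposed remedies closes it. In the capped regime $\pt_n(\phi_k\land\psi) > \pt_n(\psi)$, a \emph{purchase} by your trader is worth $\exf''_k\cdot\bigl(\pt_n(\phi_k\land\psi)-\pt_n(\psi)\bigr)$ less than $\Trader'$'s purchase in every world with $\World(\psi)=1$, and it loses that same amount of cash in worlds with $\World(\psi)=0$; so the discrepancy threatens \emph{both} halves of exploitation (unbounded above as well as bounded below), not just the $\lnot\psi$-world side, and it must be made summable over time. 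Limit coherence does not give this: $\pt_\infty(\phi\land\psi)\le\pt_\infty(\psi)$ holds for each \emph{fixed} sentence, but $\Trader'$ may buy different sentences each day, there is no uniform bound over all sentences at a finite time, and $\magnit{\trade'_n}$ is unbounded, so the per-day leakage need not even tend to $0$. The fallback of rescaling $\trade_n$ by $(1-n^{-2})$ also fails: that factor tends to $1$, while the day-$n$ leakage scales with the unbounded magnitude of $\trade'_n$, so the cumulative loss is not a convergent series; conversely, any rescaling aggressive enough to force summability (on the order of $2^{-n}/\magnit{\trade'_n}$) multiplies the profits by the same vanishing factor and destroys the transfer of unboundedness above.

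The paper's fix is more surgical: it leaves sales untouched (selling at the inflated ratio only helps) and suppresses only the purchases in the capped regime, multiplying the buy-coefficient by $\ctsind{\varepsilon_n}\bigl(\text{ratio} < 1+\varepsilon_n\bigr)$ with $\varepsilon_n := 2^{-n}/\max(1,\magnit{\trade^\symb_n})$. The key observation making this harmless is that when the ratio is at least $1$ the conditional price equals $1$, so $\Trader'$ gains nothing from those purchases anyway; dropping them costs at most $\varepsilon_n\cdot\magnit{\trade'_n}\le 2^{-n}$ per day relative to $\Trader'$, which sums to $1$ and preserves both bounds. Two smaller repairs are also needed in your write-up: on the finitely many days with $\pt_n(\psi)\le\varepsilon$ the problem is not just bounded leakage---$\Trader'$'s \emph{later} coefficients may depend on the conditional prices of those days, so your substituted features would be wrong at all subsequent times unless you hard-code the true conditional prices for those days as constants (or first perturb the market as in \Theorem{ifp}, which is what the paper does); and in the sequence case the bounded-below argument does not transfer ``verbatim''---you must separately handle worlds satisfying $\Psi_{m-1}$ but not $\Psi_m$, which are plausible for the conditional deductive process only up to day $m-1$ while all later trades contribute only the leakage terms (this works, but it is an extra case, and it is where the paper spends the end of its proof).
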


In other words, if we condition logical inductors on logical sentences, the result is still a logical inductor, and so the conditional probabilities of a logical inductor continues to satisfy all the desirable properties satisfied by all logical inductors. This also means that one can obtain a logical inductor for Peano arithmetic by starting with a logical inductor over an empty theory, and conditioning it on $\PA$.

With that idea in mind, we will now begin examining questions about logical inductors that assume $\Theory$ \representscomputations, such as questions about $\MP$'s beliefs about $\Theory$, computer programs, and itself.

\subsection{Expectations}\label{sec:expectations}

In probability theory, it is common to ask the expected (average) value of a variable that takes on different values in different possible worlds. Emboldened by our success with conditional probabilities, we will now define a notion of the expected values of \emph{logical} variables, and show that these are also fairly well-behaved. This machinery will be useful later when we ask logical inductors for their beliefs about themselves.

We begin by defining a notion of logically uncertain variables, which play a role analogous to the role of random variables in probability theory. For the sake of brevity, we will restrict our attention to logically uncertain variables with their value in $[0, 1]$; it is easy enough to extend this notion to a notion of arbitrary bounded real-valued logically uncertain variables. (It does, however, require carrying a variable's bounds around everywhere, which makes the notation cumbersome.)

To define logically uncertain variables, we will need to assume that $\Theory$ is capable of representing rational numbers and proving things about them. Later, we will use expected values to construct sentences that talk about things like the expected outputs of a computer program. Thus, in this section and in the remainder of \Sec{properties}, we will assume that $\Theory$ \representscomputations.

\begin{definition}[Logically Uncertain Variable]\label{def:luv}
  A \textbf{logically uncertain variable}, abbreviated \textbf{LUV}, is any formula $X$ free in one variable that defines a unique value via $\Theory$, in the sense that
  \[
    \Theory \vdash{\exists x \colon  \left( X(x) \land \forall x' \colon X(x') \to x'=x \right)}.
  \]
  We refer to that value as the \textbf{value} of $X$. If $\Theory$ proves that the value of $X$ is in $[0, 1]$, we call $X$ a \textbf{$\bm{[0,1]}$-LUV}.

  Given a $[0,1]$-LUV $X$ and a consistent world $\World \in \cworlds(\Theory)$, the \textbf{value of $\bm X$ in $\World$} is defined to be
  \[
    \World(X) := \sup \left\{x \in [0, 1] \mid \World(\quot{X \ge \enc{x}})=1 \right\}.
  \]
  In other words, $\World(X)$ is the supremum of values that do not exceed $X$ according to $\World$. (This rather roundabout definition is necessary in cases where $\World$ assigns $X$ a non-standard value.)

  We write $\LUVs$ for the set of all $[0, 1]$-LUVs. When manipulating logically uncertain variables, we use shorthand like $\quot{X < 0.5}$ for $\quot{\forall x \colon X(x) \to x < 0.5}$. See \Sec{notation} for details.
\end{definition}

As an example, $\mathit{Half} := \quot{\nu = 0.5}$ is a LUV, where the unique real number that makes $\mathit{Half}$ true is rather obvious. A more complicated LUV is 
\[
  \mathit{TwinPrime} := \quot{\text{1 if the twin prime conjecture is true, 0 otherwise}};
\]
this is a deterministic quantity (assuming $\Theory$ actually proves the twin prime conjecture one way or the other), but it's reasonable for a limited reasoner to be uncertain about the value of that quantity. In general, if $f : \NN^+ \to [0, 1]$ is a computable function then $\quot{\enc{f}(7)}$ is a LUV, because $\quot{\enc{f}(7)}$ is shorthand for the formula $\quot{\gamma_f(7, \nu)}$, where $\gamma_f$ is the predicate of $\Theory$ representing $f$.

With LUVs in hand, we can define a notion of $\MP$'s expected value for a LUV $X$ on day $\nn$ with precision $k$. The obvious idea is to take the sum
\[
  \lim_{k \to\infty} \sum_{i=0}^{k-1} \frac{i}{k} \pt_\nn\mleft(
    \quot{\enc{i}/\enc{k} < \enc{X} \le (\enc{i}+1)/\enc{k}}
  \mright).
\]
However, if $\pt_\nn$ hasn't yet figured out that $X$ pins down a unique value, then it might put high probability on $X$ being in multiple different intervals, and the simple integral of a $[0,1]$-valued LUV could fall outside the $[0,1]$ interval. This is a nuisance when we want to treat the expectations of $[0,1]$-LUVs as other $[0,1]$-LUVs, so instead, we will define expectations using an analog of a cumulative distribution function. In probability theory, the expectation of a $[0,1]$-valued random variable $V$ with density function $\rho_V$ is given by $\EE(V) =\int_0^1 x \cdot \rho_V(x) dx$. We can rewrite this using integration by parts as
\[
  \EE(V) =\int_0^1 \Bayesian(V>x)dx.
\]
This motivates the following definition of expectations for LUVs:

\begin{definition}[Expectation]\label{def:e}
  For a given valuation $\Valuation$, we define the \textbf{approximate expectation operator} $\EE_k^{\Valuation}$ for $\Valuation$ with precision $k$ by
  \[
    \EE_k^{\Valuation}(X) := \sum_{i=0}^{k-1}\frac{1}{k} \Valuation\mleft(\quot{\enc{X} > \enc{i}/\enc{k}} \mright).
  \]
  where $X$ is a $[0,1]$-LUV\@.
\end{definition}
\noindent This has the desirable property that $\EE_k^{\Valuation}(X)\in [0, 1]$, because $\Valuation(\any)\in [0, 1]$.

We will often want to take a limit of $\EE_k^{\pt_\nn}(X)$ as both $k$ and $n$ approach $\infty$. We hereby make the fairly arbitrary choice to focus on the case $k=\nn$ for simplicity, adopting the shorthand  
\[
  \EE_\nn := \EE_\nn^{\pt_\nn} .
\]
In other words, when we examine how a logical inductor's expectations change on a sequence of sentences over time, we will (arbitrarily) consider approximate expectations that gain in precision at a rate of one unit per day.

We will now show that the expectation operator $\EE_\nn$ possesses properties that make it worthy of that name.

\begin{restatable}[Expectations Converge]{theorem}{restateec}\label{thm:ec}
  The limit ${\EE_\infty:\Sentences\rightarrow[0,1]}$  defined by
  \[
    \EE_\infty(X) := \lim_{\nn\rightarrow\infty} \EE_\nn(X)
  \] exists for all $X \in \LUVs$.
  \proofin{\ref{app:ec}}
\end{restatable}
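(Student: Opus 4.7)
The key observation is that $\EE_\nn(X) = \pt_\nn(A_\nn)$ where
\[
  A_\nn := \sum_{i=0}^{\nn-1} \tfrac{1}{\nn}\, \quot{\enc{X} > \enc{i}/\enc{\nn}}
\]
is a bounded, market-independent $\RR$-combination of sentences: its trailing coefficient is $0$, $\|A_\nn\|_1 = 1$ uniformly in $\nn$, and the sentences are computable from $(\nn,i)$ in polynomial time, so $\seq A$ is a bounded \pgenable $\RR$-combination progression and lies in $\BCS$. My plan is to apply \Thm{peraffkno} (\nameref{thm:peraffkno}) and \Thm{affpolymax} (\nameref{thm:affpolymax}) to $\seq A$, once I have shown that $\pt_\infty(A_\nn)$ converges via an elementary Riemann-integration argument.

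For the Riemann step, I would invoke the fact that $X$ is a $[0,1]$-LUV, whose defining formula together with basic arithmetic yields $\Theory \vdash \quot{\enc{X} > \enc{q'}} \to \quot{\enc{X} > \enc{q}}$ for all rationals $0 \le q < q' \le 1$. By the Limit Coherence theorem (\Thm{lc}) this forces the function $F(q) := \pt_\infty(\quot{\enc{X} > \enc{q}})$ to be non-increasing on $\mathbb{Q} \cap [0,1]$ with values in $[0,1]$. Then $\pt_\infty(A_\nn) = \tfrac{1}{\nn} \sum_{i=0}^{\nn-1} F(i/\nn)$ is a left Riemann sum for the bounded monotone function $F$, and such sums converge to $E^* := \int_0^1 F(q)\,dq$ as $\nn \to \infty$.

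To conclude I would chain the two affine theorems. Applying \Thm{peraffkno} to $\seq A$ gives $\limsup_\nn \sup_{\mm \ge \nn} \pt_\mm(A_\nn) = \limsup_\nn \pt_\infty(A_\nn) = E^*$ and $\liminf_\nn \inf_{\mm \ge \nn} \pt_\mm(A_\nn) = \liminf_\nn \pt_\infty(A_\nn) = E^*$. Combined with the trivial sandwich $\inf_{\mm \ge \nn} \pt_\mm(A_\nn) \le \pt_\infty(A_\nn) \le \sup_{\mm \ge \nn} \pt_\mm(A_\nn)$, these force $\lim_\nn \sup_{\mm \ge \nn} \pt_\mm(A_\nn) = \lim_\nn \inf_{\mm \ge \nn} \pt_\mm(A_\nn) = E^*$. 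Then \Thm{affpolymax} gives $\liminf_\nn \pt_\nn(A_\nn) = \liminf_\nn \sup_{\mm \ge \nn} \pt_\mm(A_\nn) = E^*$ and symmetrically $\limsup_\nn \pt_\nn(A_\nn) = E^*$, so $\EE_\nn(X) = \pt_\nn(A_\nn) \to E^*$, as desired.

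I expect the main subtlety to be the monotonicity of $F$: one must unwind the notational conventions for $\quot{\enc{X} > \enc{q}}$ and exploit the uniqueness clause in the definition of a LUV in order to secure the $\Theory$-proofs of the implications $\quot{\enc{X} > \enc{q'}} \to \quot{\enc{X} > \enc{q}}$, so that \Thm{lc} can deliver the required inequalities among the $\pt_\infty$ values. Once that is in hand, $\BCS$-membership is a routine check, the Riemann integrability of bounded monotone functions is classical, and the interplay of \Thm{peraffkno} with \Thm{affpolymax} is by now a standard diagonal technique in this framework.
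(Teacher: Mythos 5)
Your proof is correct, but it takes a genuinely different route from the paper's. The paper obtains this theorem as a one-line corollary of \Theorem{exppolymax} (Expectation Preemptive Learning) applied to the constant sequence $X, X, \ldots$, which yields $\liminf_{\nn\to\infty} \EE_\nn(X) = \liminf_{\nn\to\infty}\sup_{\mm\ge\nn}\EE_\mm(X) = \limsup_{\nn\to\infty}\EE_\nn(X)$; the real work is hidden inside \Theorem{exppolymax}, whose proof combines \Theorem{affpolymax} with the Mesh Independence Lemma (\Lem{mesh}), a softmax-trader argument controlling the discrepancy between $\EE_\nn^{\pt_\mm}$ and $\EE_\mm$. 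You bypass the expectation machinery entirely: writing $\EE_\nn(X)=\pt_\nn(A_\nn)$ for your explicit bounded $\QQ$-combination $A_\nn$, you get convergence of $\pt_\infty(A_\nn)$ from monotonicity of $F(q)=\pt_\infty(\quot{\enc{X} > \enc{q}})$ (via \Theorem{lc}) together with Riemann sums, and then move to the diagonal with \Theorem{peraffkno} and \Theorem{affpolymax}; the sandwich step does correctly pin both $\lim_\nn\sup_{\mm\ge\nn}\pt_\mm(A_\nn)$ and $\lim_\nn\inf_{\mm\ge\nn}\pt_\mm(A_\nn)$ to $E^*$, the $\BCS$ membership is indeed routine, and the implications $\quot{\enc{X} > \enc{q'}}\to\quot{\enc{X} > \enc{q}}$ need only basic provable facts about rational ordering (the uniqueness clause of the LUV definition is not actually required), the same sort of background assumption the paper leans on in \Lem{conluvapprox}; the one further point to note is that $F$ is defined only on rationals, so the Riemann-sum step should pass through a monotone extension to $[0,1]$, which is routine. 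What your route buys: it avoids \Lem{mesh} altogether for this theorem, and it identifies the limit explicitly as $\int_0^1 F(q)\,dq$, i.e., the expectation of $X$ under the limiting measure $\pt_\infty$, sharpening the paper's remark following the theorem statement. What the paper's route buys: \Lem{mesh} and \Theorem{exppolymax} are developed once and then reused for all the subsequent expectation theorems about varying LUV sequences, so this convergence statement comes essentially for free, whereas your argument is tailored to a single fixed $X$.
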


Note that $\EE_\infty(X)$ might not be rational.

Because $\pt_\infty$ defines a probability measure over $\cworlds(\Theory)$, $\EE_\infty(X)$ is the average value of $\World(X)$ across all consistent worlds (weighted by $\pt_\infty$). In other words, every LUV $X$ can be seen as a random variable with respect to the measure $\pt_\infty$, and $\EE_\infty$ acts as the standard expectation operator on $\pt_\infty$. Furthermore,

\begin{restatable}[Linearity of Expectation]{theorem}{restateloe}\label{thm:loe}
  Let $\seq a, \seq b$ be bounded \pgenable sequences of rational numbers, and let $\seq{X}, \seq{Y}$, and $\seq{Z}$ be \ec sequences of $[0,1]$-LUVs. If we have $\Theory \vdash{Z_\nn = a_\nn X_\nn + b_\nn Y_\nn}$ for all $\nn$, then
  \[
    a_\nn \EE_\nn(X_\nn) + b_\nn \EE_\nn(Y_\nn) \eqsim_\nn \EE_\nn(Z_\nn).
  \]
  \proofin{\ref{app:loe}}
\end{restatable}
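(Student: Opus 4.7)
The plan is to reduce to Theorem~\ref{thm:affcoh} (\nameref{thm:affcoh}) by building a single bounded \pgenable $\RR$-combination sequence whose value at $\pt_\nn$ is precisely the quantity we want to bound. For each $\nn$, let
\[
\aff_\nn := \sum_{i=0}^{\nn-1} \tfrac{1}{\nn}\Bigl( a_\nn \cdot \quot{\enc{X_\nn} > \enc{i}/\enc{\nn}} \;+\; b_\nn \cdot \quot{\enc{Y_\nn} > \enc{i}/\enc{\nn}} \;-\; \quot{\enc{Z_\nn} > \enc{i}/\enc{\nn}} \Bigr),
\]
so that by the very definition of $\EE_\nn$ one has $\pt_\nn(\aff_\nn) = a_\nn \EE_\nn(X_\nn) + b_\nn \EE_\nn(Y_\nn) - \EE_\nn(Z_\nn)$. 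Thus the whole theorem will follow once I show $\pt_\nn(\aff_\nn) \eqsim_\nn 0$. Membership $\seq\aff \in \BCS$ is easy: if $B$ bounds $|a_\nn|$ and $|b_\nn|$, each of the $3\nn$ coefficients has absolute value at most $(B+1)/\nn$, so $\|\aff_\nn\|_1 \le 2B+1$; and \pgenability{} follows by multiplying the \pgenable features generating $\seq a,\seq b$ by rational constants and pairing them with the \ec enumeration of the finitely many sentences in the sum (which is \ec because $\seq X,\seq Y,\seq Z$ are).

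Next I need to bound $\World(\aff_\nn)$ uniformly over $\World \in \cworlds(\Theory)$. For a $[0,1]$-LUV $V$, write $\hat V_\nn(\World) := \sum_{i=0}^{\nn-1}\tfrac{1}{\nn}\World(\quot{\enc V > \enc i/\enc \nn})$. Since $\World$ is propositionally consistent and $\Theory$ proves $V$ pins down a unique value, each summand is $0$ or $1$, and counting which cutoffs $i/\nn$ fall below $\World(V)$ gives $\hat V_\nn(\World) \in [\World(V) - \tfrac{1}{\nn},\, \World(V) + \tfrac{1}{\nn}]$. The hypothesis $\Theory \vdash Z_\nn = a_\nn X_\nn + b_\nn Y_\nn$ combined with the fact that $a_\nn, b_\nn$ are standard rationals yields the real-valued identity $\World(Z_\nn) = a_\nn \World(X_\nn) + b_\nn \World(Y_\nn)$, so
\[
\World(\aff_\nn) = a_\nn\bigl(\hat X_\nn(\World) - \World(X_\nn)\bigr) + b_\nn\bigl(\hat Y_\nn(\World) - \World(Y_\nn)\bigr) - \bigl(\hat Z_\nn(\World) - \World(Z_\nn)\bigr),
\]
which has absolute value at most $(2B+1)/\nn$. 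Consequently both $\inf_\World \World(\aff_\nn)$ and $\sup_\World \World(\aff_\nn)$ tend to $0$, and applying \nameref{thm:affcoh} to $\seq\aff$ sandwiches $\pt_\nn(\aff_\nn)$ between two sequences converging to $0$, giving $\pt_\nn(\aff_\nn) \eqsim_\nn 0$ as needed.

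The main obstacle I anticipate is the subtlety in passing from the syntactic statement $\Theory \vdash Z_\nn = a_\nn X_\nn + b_\nn Y_\nn$ to the real-valued identity $\World(Z_\nn) = a_\nn \World(X_\nn) + b_\nn \World(Y_\nn)$, since $\World(V)$ is defined via a supremum of rational cutoffs in order to accommodate worlds in which $V$ receives a non-standard value. The point to handle carefully is that because $a_\nn$ and $b_\nn$ are standard rationals, any non-standard (in particular, infinitesimal) component of $\World$'s valuation of $X_\nn$ or $Y_\nn$ propagates linearly to $Z_\nn$ and is then discarded when taking standard parts; the standard parts therefore satisfy exact linearity, and the discretization argument above is unaffected. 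Everything else is routine: the boundedness and \pgenability{} of $\seq\aff$ are mechanical, and Affine Coherence does the heavy lifting.
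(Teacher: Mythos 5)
Your proof is correct and follows essentially the same route as the paper's: the paper simply cites \Theorem{expprovind} (Expectation Provability Induction) applied to the LUV-combination $a_\nn X_\nn + b_\nn Y_\nn - Z_\nn$, and that theorem is itself proved by applying \Theorem{affcoh} to exactly the indicator-sentence combination you construct, with your $\pm(2B+1)/\nn$ bound on world values playing the role of Lemma~\ref{lem:conluvapprox}. The only nitpick is a harmless arithmetic slip in the boundedness estimate (summing the coefficients gives $|a_\nn|+|b_\nn|+1\le 2B+1$ directly, rather than via a per-coefficient bound of $(B+1)/\nn$ times $3\nn$), which affects nothing.
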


For our next result, we want a LUV which can be proven to take value~1 if $\phi$ is true and~0 otherwise.

\begin{definition}[Indicator LUV]
  For any sentence $\phi$, we define its \textbf{indicator LUV} by the formula
  \[
    \OneOperator(\phi) := \quot{(\enc{\phi} \wedge (\nu = 1)) \vee (\lnot\enc{\phi} \wedge (\nu = 0))}.
  \]
\end{definition}

\noindent Observe that $\OneOperator(\phi)(1)$ is equivalent to $\phi$, and $\OneOperator(\phi)(0)$ is equivalent to $\lnot\phi$.

\begin{restatable}[Expectations of Indicators]{theorem}{restateei}\label{thm:ei}
  Let $\phis$ be an \ec sequence of sentences. Then
  \[
    \EE_\nn(\OneOperator(\phi_\nn)) \eqsim_\nn \pt_\nn(\phi_\nn).
  \]
  \proofin{\ref{app:ei}}
\end{restatable}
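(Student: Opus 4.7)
The plan is to reduce the claim to an application of \Theorem{affprovind} (\nameref{thm:affprovind}) by packaging the discrepancy between the two sides as the price of a single bounded affine combination. For each $\nn\in\NN^+$, define
\[
  \aff_\nn \;:=\; -\phi_\nn \;+\; \frac{1}{\nn}\sum_{i=0}^{\nn-1}\quot{\OneOperator(\phi_\nn) > \enc{i}/\enc{\nn}},
\]
so that, expanding the definition of $\EE_\nn$ and applying linearity of $\pt_\nn$,
\[
  \pt_\nn(\aff_\nn)\;=\;\EE_\nn(\OneOperator(\phi_\nn)) \;-\; \pt_\nn(\phi_\nn).
\]
It therefore suffices to show $\pt_\nn(\aff_\nn)\eqsim_\nn 0$.

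First I would verify that $\seq\aff\in\BCS$. Because $\phis$ is \ec and both the rationals $\enc{i}/\enc{\nn}$ and the formula $\OneOperator(\phi_\nn)$ can be written down in time polynomial in $\nn$, the $\nn+1$ sentences appearing in $\aff_\nn$ together with the rational coefficients $-1$ and $1/\nn$ (which are in particular expressible features) are generable from $\MP$ in polynomial time, so $\seq\aff$ is \pgenable. The $\ell_1$-norm $\|\aff_\nn\|_1 = 1+\nn\cdot\tfrac{1}{\nn} = 2$ is bounded uniformly in $\nn$, placing $\seq\aff$ in $\BCS$.

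Next I would check that $\thmval(\aff_\nn)=0$. By construction $\OneOperator(\phi)$ is a formula that $\Theory$ proves defines a unique value, namely $1$ when $\phi$ holds and $0$ otherwise. Consequently, for every $0\le i\le \nn-1$, the shorthand sentence $\quot{\OneOperator(\phi_\nn) > \enc{i}/\enc{\nn}}$ is provably equivalent in $\Theory$ to $\phi_\nn$: under $\phi_\nn$ the unique value is $1 > i/\nn$ since $i<\nn$, and under $\lnot\phi_\nn$ it is $0 \not> i/\nn$ since $i\ge 0$. Hence for every $\World\in\cworlds(\Theory)$,
\[
  \World(\aff_\nn)\;=\;-\World(\phi_\nn)+\frac{1}{\nn}\sum_{i=0}^{\nn-1}\World(\phi_\nn)\;=\;0.
\]

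With both hypotheses verified, applying \Theorem{affprovind} in its $=$/$\eqsim_\nn$ form with $b=0$ gives $\pt_\nn(\aff_\nn)\eqsim_\nn 0$, which combined with the linearity identity above is the desired conclusion. The main subtlety is the uniform-in-$i$ verification that $\Theory$ proves $\quot{\OneOperator(\phi_\nn) > \enc{i}/\enc{\nn}}\iff\phi_\nn$ for all $0\le i\le \nn-1$; this relies on $\OneOperator(\phi_\nn)$ defining a unique value via $\Theory$ and on $\Theory$'s ability to decide concrete rational comparisons, both of which are available under the standing assumption of this section that $\Theory$ \representscomputations.
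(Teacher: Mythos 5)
Your proof is correct and takes essentially the same route as the paper: the paper also expands $\EE_\nn(\OneOperator(\phi_\nn))$ as the price of the affine combination $\frac{1}{\nn}\sum_{i=0}^{\nn-1}\quot{\enc{\OneOperator(\phi_\nn)} > \enc{i}/\enc{\nn}}$, observes that its value equals $\World(\phi_\nn)$ in every $\World\in\cworlds(\Theory)$, and applies \Theorem{affprovind}. Your explicit verification of boundedness and \pgenable{}ness of $\seq\aff$ is a detail the paper leaves implicit, but the argument is the same.
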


\noindent In colloquial terms, \Thm{ei} says that a logical inductor learns that asking for the expected value of $\OneOperator(\phi)$ is the same as asking for the probability of $\phi$.

To further demonstrate that expectations work as expected, we will show that they satisfy generalized versions of all theorems proven in sections~\ref{sec:timelylearning}-\ref{sec:logicpatterns}. (Readers without interest in the versions of those theorems for expectations are invited to skip to \Sec{metamath}.)

\subsubsection{Collected Theorems for Expectations}

\begin{definition}[LUV Valuation]
  A LUV valuation is any function $\luvval: \LUVs\to[0, 1]$.  Note that 
  $\EE_n^\Valuation$ and $\EE_\infty^\Valuation$ are LUV valuations for any valuation $\Valuation$ and $\nn \in \NN^+$, and that every world $\World \in \cworlds(\Theory)$ is a LUV valuation.
\end{definition}

\begin{definition}[LUV Combination]
  An \textbf{$\bm\features$-LUV-combination} $\affluv : \LUVs \cup \{1\} \to \features$ is an affine expression of the form
  \[
    \affluv := \affconst + \alpha_1 X_1 + \cdots + \alpha_k X_k,
  \]
  where $(X_1,\ldots,X_k)$ are $[0, 1]$-LUVs and $(\affconst, \alpha_1, \ldots, \alpha_k)$ are in $\features$.  An \textbf{$\bm\exfeatures$-LUV-combination}, an \textbf{$\RR$-LUV-combination}, and a \textbf{$\QQ$-LUV-combination} are defined similarly.
  
  The following concepts are all defined analogously to how they are defined for sentence combinations: $\affluv[\fconst]$, $\affluv[X]$, $\rank(\affluv)$, $\luvval(\affluv)$ for any LUV valuation $\luvval$,
  \textbf{$\bm\features$-LUV-combination progressions}, \textbf{$\bm\exfeatures$-LUV-combination progressions}, and \pgenable LUV-combination sequences. (See definitions~\ref{def:affcomsen} and~\ref{def:ece} for details.)
\end{definition}

\begin{definition}[Bounded LUV-Combination Sequence]\label{def:blcp}
  By $\BLCS$ (mnemonic: \textbf{bounded LUV-combination sequences}) we denote the set of all \pgenable $\RR$-LUV-combination sequences $\seq{\affluv}$ that are bounded, in the sense that there exists some bound $b$ such that $\| \affluv_\nn \|_1 \le b$ for all $\nn$, where $\|\!\any\!\|_1$ includes the trailing coefficient.
\end{definition}

\begin{restatable}[Expectation Provability Induction]{theorem}{expprovind}\label{thm:expprovind}
  Let $\seq{\affluv}\in\BLCS$ and $b \in \RR$. If, for all consistent worlds $\World\in\cworlds(\Theory)$ and all $\nn \in \NN^+$, it is the case that $\World(\affluv_n ) \ge b$, then
  \[
    \EE_\nn(\affluv_\nn ) \gtrsim_\nn b,
  \]
  and similarly for $=$ and $\eqsim_\nn$, and for $\leq$ and $\lesssim_\nn$.
  \proofin{\ref{app:expprovind}}
\end{restatable}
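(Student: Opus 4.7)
The strategy is to reduce to Affine Provability Induction (\Thm{affprovind}) by replacing each LUV in $\affluv_\nn$ with an affine combination of sentences whose price under $\pt_\nn$ equals its expectation. Writing $\affluv_\nn = c_\nn + \sum_j \alpha_{j,\nn} X_{j,\nn}$ with $\|\affluv_\nn\|_1 \le C$ (the bound from $\seq\affluv \in \BLCS$), I would define the sentence-combination
\[
\aff_\nn \;:=\; c_\nn \;+\; \sum_j \tfrac{\alpha_{j,\nn}}{\nn}\sum_{i=0}^{\nn-1} \quot{\enc{X_{j,\nn}} > \enc{i}/\enc{\nn}}.
\]
Each coefficient $\alpha_{j,\nn}/\nn$ is an expressible feature of the same rank as $\alpha_{j,\nn}$; the sentences $\quot{\enc{X_{j,\nn}} > \enc{i}/\enc{\nn}}$ are poly-time computable from $X_{j,\nn}$ and $\nn$; and $\|\aff_\nn\|_1 = |c_\nn| + \sum_j \nn\cdot|\alpha_{j,\nn}/\nn| = \|\affluv_\nn\|_1 \le C$. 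Hence $\seq\aff \in \BCS$.

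Two identities then do the work. By the definition of $\EE_\nn$ on a single LUV and linearity of valuations on affine combinations,
\[
\pt_\nn(\aff_\nn) \;=\; c_\nn + \sum_j \alpha_{j,\nn}\,\EE_\nn(X_{j,\nn}) \;=\; \EE_\nn(\affluv_\nn).
\]
For any $\World \in \cworlds(\Theory)$ and any $[0,1]$-LUV $X$, the sum $\tfrac{1}{\nn}\sum_{i=0}^{\nn-1}\World(\quot{\enc{X} > \enc i/\enc\nn})$ counts, in units of $1/\nn$, the standard cut-points $i/\nn$ lying strictly below $X$ in any model of $\Theory$ satisfying $\World$; the sup-based definition of $\World(X)$ then places this sum in $[\World(X),\,\World(X)+1/\nn]$. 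Weighting by $\alpha_{j,\nn}$ and using $\sum_j|\alpha_{j,\nn}|\le C$ gives the uniform bound
\[
|\World(\aff_\nn) - \World(\affluv_\nn)| \;\le\; C/\nn \qquad\text{for all } \World\in\cworlds(\Theory),\; \nn\in\NN^+.
\]

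To conclude the $\gtrsim_\nn$ case, observe that the hypothesis $\World(\affluv_\nn) \ge b$ gives $\World(\aff_\nn) + C/\nn \ge b$ for every consistent $\World$ and every $\nn$. The sequence $\aff'_\nn := \aff_\nn + C/\nn$ remains in $\BCS$ (its $\ell_1$ norm grows by at most $C$), so applying \Thm{affprovind} yields $\pt_\nn(\aff_\nn) + C/\nn = \pt_\nn(\aff'_\nn) \gtrsim_\nn b$. Since $C/\nn \to 0$, we conclude $\EE_\nn(\affluv_\nn) = \pt_\nn(\aff_\nn) \gtrsim_\nn b$. The $\lesssim_\nn$ and $\eqsim_\nn$ cases follow by shifting the constant in the opposite direction. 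The only step requiring any care is the uniformity of the discretization bound when some $X_{j,\nn}$ takes a non-standard model value under $\World$; this goes through because the cut-points $i/\nn$ are standard rationals and the sup definition of $\World(X)$ keeps the inner sum within $1/\nn$ of $\World(X)$ regardless of the model chosen.
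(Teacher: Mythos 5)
Your proof is correct and is essentially the paper's own argument: the sentence-combination you construct is exactly the paper's $\alta_\nn(\affluv_\nn)$ (with $\pt_\nn(\alta_\nn(\affluv_\nn)) = \EE_\nn(\affluv_\nn)$), and your uniform $C/\nn$ discretization bound over consistent worlds is precisely the paper's Lemma~\ref{lem:conluvapprox}. The only difference is which corollary of Affine Coherence gets cited—you apply \Thm{affprovind} to the shifted sequence $\alta_\nn(\affluv_\nn) + C/\nn$, while the paper routes through \Thm{expcoh}—which makes your path marginally more direct (it never needs the Mesh Independence / $\EE_\infty$ machinery) but not a genuinely different proof.
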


\begin{restatable}[Expectation Coherence]{theorem}{expcoh}\label{thm:expcoh}
  Let $\seq{\affluv}\in\BLCS$. Then
  \[
    \liminf_{\nn\rightarrow\infty}\inf_{\World\in\cworlds(\Theory)}
      \World(\affluv_\nn )
    \le \liminf_{\nn\rightarrow\infty}
      \EE_\infty(\affluv_\nn )
    \le \liminf_{\nn\to\infty}
      \EE_\nn(\affluv_\nn ),
  \]
  and
  \[
    \limsup_{\nn\to\infty} \EE_\nn(\affluv_\nn )
    \le \limsup_{\nn\rightarrow\infty} \EE_\infty(\affluv_\nn )
    \le \limsup_{\nn\rightarrow\infty} \sup_{\World\in\cworlds(\Theory)} 
      \World(\affluv_\nn ).
   \]
  \proofin{\ref{app:expcoh}}
\end{restatable}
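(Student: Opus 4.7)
The plan is to reduce Expectation Coherence to Affine Coherence (Theorem~\ref{thm:affcoh}) by encoding each LUV through its Riemann-sum approximation as a combination of threshold sentences. Given $\seq{\affluv}\in\BLCS$, write $\affluv_\nn = c_\nn + \sum_{j=1}^{k_\nn}\alpha_{\nn,j}\, X_{\nn,j}$ and define the sentence combination
\[
    \aff_\nn \;:=\; c_\nn + \sum_{j=1}^{k_\nn} \alpha_{\nn,j} \sum_{i=0}^{\nn-1} \frac{1}{\nn}\cdot \quot{X_{\nn,j} > \enc{i/\nn}}.
\]
I would first verify that $\seq{\aff}\in\BCS$: the coefficient of each threshold sentence is $\alpha_{\nn,j}/\nn$, so the generating expressible features of $\seq{\affluv}$ carry over (with a polynomial-time rational rescaling) to generating features for $\seq{\aff}$, preserving \pgenability. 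Boundedness is also preserved, since $\|\aff_\nn\|_1 = |c_\nn| + \sum_j |\alpha_{\nn,j}| \cdot \nn \cdot (1/\nn) = \|\affluv_\nn\|_1 \le b$.

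The heart of the proof is matching the three quantities of Affine Coherence applied to $\seq{\aff}$ with the three quantities of Expectation Coherence applied to $\seq{\affluv}$, up to a vanishing error. First, by the linear extension of $\EE_\nn = \EE_\nn^{\pt_\nn}$ to LUV-combinations and the definition of $\EE_\nn^{\pt_\nn}$ via threshold probabilities, we get $\pt_\nn(\aff_\nn) = \EE_\nn(\affluv_\nn)$ on the nose. Second, for any $\World\in\cworlds(\Theory)$ and any $[0,1]$-LUV $X$, the map $x \mapsto \World(\quot{X > x})$ is a non-increasing $\{0,1\}$-step function on $[0,1]$ whose integral is $\World(X)$; its left Riemann sum at precision $\nn$ is therefore within $1/\nn$ of $\World(X)$. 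Summing with weights $\alpha_{\nn,j}$ gives $|\World(\aff_\nn) - \World(\affluv_\nn)| \le b/\nn$ uniformly in $\World\in\cworlds(\Theory)$. Third, the same Riemann-sum argument applied to $x \mapsto \pt_\infty(\quot{X > x})$---which is non-increasing by Limit Coherence (Theorem~\ref{thm:lc})---together with the identification $\EE_\infty(X) = \int_0^1 \pt_\infty(\quot{X > x})\,dx$, yields $|\pt_\infty(\aff_\nn) - \EE_\infty(\affluv_\nn)| \le b/\nn$.

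Applying Affine Coherence to $\seq{\aff}$ and substituting these three $O(1/\nn)$ correspondences term-by-term into the two chains of inequalities then yields the Expectation Coherence inequalities for $\seq{\affluv}$, since uniform vanishing perturbations do not affect $\liminf$ or $\limsup$, and since $\inf_\World$ and $\sup_\World$ are $1$-Lipschitz in the uniform norm.

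The main obstacle is the third correspondence, specifically establishing $\EE_\infty(X) = \int_0^1 \pt_\infty(\quot{X > x})\,dx$. By definition $\EE_\infty(X) = \lim_\mm \EE_\mm^{\pt_\mm}(X)$ is a double limit entangling the precision growth $\mm\to\infty$ with the convergence $\pt_\mm\to\pt_\infty$, whereas the integral representation uses $\pt_\infty$ at every precision. The cleanest way through is via the probability-measure interpretation from Limit Coherence: $\pt_\infty$ induces a probability measure $\Bayesian$ on $\cworlds(\Theory)$, every $[0,1]$-LUV $X$ becomes a $[0,1]$-valued random variable $\World \mapsto \World(X)$, and by the layer-cake formula $\int \World(X)\,d\Bayesian(W) = \int_0^1 \Bayesian[\World(X) > x]\,dx = \int_0^1 \pt_\infty(\quot{X > x})\,dx$. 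One then checks that this measure-theoretic expectation equals $\lim_\mm \EE_\mm^{\pt_\mm}(X)$ by a bounded-convergence argument: at each fixed precision the Riemann sum evaluated at $\pt_\mm$ converges (as $\mm\to\infty$) to the same sum evaluated at $\pt_\infty$ by Theorem~\ref{thm:con}, and the Riemann sums evaluated at $\pt_\infty$ converge to the integral by monotonicity of the integrand.
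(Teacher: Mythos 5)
Your reduction is the paper's own: $\aff_\nn := \alta_\nn(\affluv_\nn)$ is exactly the encoding the paper applies \Theorem{affcoh} to, the identity $\pt_\nn(\aff_\nn)=\EE_\nn(\affluv_\nn)$ holds by definition of $\alta$, and your uniform $b/\nn$ bound over $\World\in\cworlds(\Theory)$ is precisely \Lem{conluvapprox}. The gap is in your third correspondence, $|\pt_\infty(\aff_\nn)-\EE_\infty(\affluv_\nn)|\le b/\nn$, which you base on the identity $\EE_\infty(X)=\int_0^1\pt_\infty(\quot{X>x})\,dx$. Your argument for that identity only computes the iterated limit $\lim_{k\to\infty}\lim_{\mm\to\infty}\EE_k^{\pt_\mm}(X)$: fix the precision $k$ and let the market index converge (\Theorem{con}), then let $k\to\infty$ using monotonicity of $x\mapsto\pt_\infty(\quot{X>x})$ (\Theorem{lc}). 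But $\EE_\infty(X)$ is the diagonal limit $\lim_{\mm\to\infty}\EE_\mm^{\pt_\mm}(X)$, and nothing in your argument connects the diagonal to the iterated limit. Nor can any purely analytic step do so: the facts you invoke (pointwise convergence of each sentence's price, coherence of the limit) are also satisfied by valuation sequences that at each time $\mm$ assign price $0$ to all the precision-$\mm$ threshold sentences $\quot{\enc{X}>\enc{i}/\enc{\mm}}$ (which are essentially fresh sentences at time $\mm$), making $\EE_\mm(X)=0$ for every $\mm$ while the integral is large.

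What excludes such diagonal behavior for a logical inductor is inexploitability: by time $\nn$ the market must already approximately respect the provable relations between the precision-$\nn$ thresholds and coarser ones. That is the content of the paper's Mesh Independence Lemma (\Lem{mesh}), proved by a trader construction (a softmax over the under-priced differences $\alta_\nn(\affluv_\nn)-\alta_\mm(\affluv_\nn)$, fed into \Theorem{affprovind}), from which \Lem{limexpapprox} yields $|\EE_\nn^{\pt_\infty}(\affluv_\nn)-\EE_\infty(\affluv_\nn)|\eqsim_\nn 0$ — which is all you need here (your stronger $b/\nn$ rate is unnecessary). Citing \Theorem{ec} for the existence of $\EE_\infty$ does not close the gap, since that theorem is itself derived from the same lemma. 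So at this step you must either import \Lem{limexpapprox}/\Lem{mesh} or supply a trader argument of your own; by comparison, the measurability and boundary issues in your layer-cake identity (at discontinuities of $x\mapsto\pt_\infty(\quot{X>x})$) are minor and fixable.
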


\begin{restatable}[Persistence of Expectation Knowledge]{theorem}{perexpkno}\label{thm:perexpkno}
  Let $\seq{\affluv}\in\BLCS$. Then
  \[
    \liminf_{\nn\rightarrow\infty}\inf_{\mm\geq \nn}\EE_\mm(\affluv_\nn )= \liminf_{\nn\to\infty}\EE_\infty(\affluv_\nn )
  \]
  and
    \[
     \limsup_{\nn\rightarrow\infty}\sup_{\mm\geq \nn}\EE_\mm(\affluv_\nn )=\limsup_{\nn\to\infty}\EE_\infty(\affluv_\nn ).
  \]
  \proofin{\ref{app:perexpkno}}
\end{restatable}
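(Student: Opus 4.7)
The plan is to reduce the statement to \Theorem{peraffkno} (Persistence of Affine Knowledge) via the same kind of translation used to derive \Theorem{expcoh} (Expectation Coherence) from \Theorem{affcoh} (Affine Coherence). Writing each $\affluv_\nn$ as $c_\nn + \sum_j \alpha_{j,\nn} X_{j,\nn}$, I would convert it into the sentence combination
\[
  \aff_\nn := c_\nn + \sum_j \frac{\alpha_{j,\nn}}{\nn}\sum_{i=0}^{\nn-1} \quot{X_{j,\nn} > \enc{i/\nn}}.
\]
Since $\seq\affluv \in \BLCS$ is \pgenable and bounded in $\ell_1$-norm by some constant $b$, the sequence $\seq\aff$ is a \pgenable $\RR$-combination of rank $\le \nn$ with $\|\aff_\nn\|_1 \le b$, so $\seq\aff \in \BCS$. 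By construction, for every $\mm \in \NN^+$ one has $\pt_\mm(\aff_\nn) = c_\nn + \sum_j \alpha_{j,\nn}\,\EE_\nn^{\pt_\mm}(X_{j,\nn})$; in particular $\pt_\nn(\aff_\nn) = \EE_\nn(\affluv_\nn)$ exactly.

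I would next establish the two ``precision discrepancies''
\[
  \sup_{\mm \ge \nn}\bigl|\pt_\mm(\aff_\nn) - \EE_\mm(\affluv_\nn)\bigr| \eqsim_\nn 0
  \quad\text{and}\quad
  \bigl|\pt_\infty(\aff_\nn) - \EE_\infty(\affluv_\nn)\bigr| \eqsim_\nn 0.
\]
Each summand of the first discrepancy is the gap between two left-Riemann sums, at precisions $\mm$ and $\nn$, of the $[0,1]$-valued survival function $x \mapsto \pt_\mm(\quot{X_{j,\nn} > x})$. In any consistent world $\World$, that survival function is a decreasing step function whose single downward jump sits at the value of $X_{j,\nn}$ in $\World$, so both Riemann sums lie within $1/\nn$ of that value and their difference is at most $2/\nn$ in $\World$. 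Promoting this world-wise bound to a bound on $\pt_\mm$ of the associated doubly-indexed sentence combination $D_{\nn,\mm,j} := \frac{1}{\mm}\sum_i \quot{X_{j,\nn} > \enc{i/\mm}} - \frac{1}{\nn}\sum_i \quot{X_{j,\nn} > \enc{i/\nn}}$, uniformly in $\mm \ge \nn$, and then weighting by the bounded coefficients $\alpha_{j,\nn}$, yields the two claimed discrepancies; the second discrepancy follows from the same argument after passing to $\mm \to \infty$ and using that $\pt_\infty$ is coherent by \Theorem{lc}.

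Finally, applying \Theorem{peraffkno} to $\seq\aff$ gives
\[
  \liminf_\nn \inf_{\mm \ge \nn}\pt_\mm(\aff_\nn) = \liminf_\nn \pt_\infty(\aff_\nn),
\]
together with its $\limsup\sup$ twin; substituting $\EE_\mm(\affluv_\nn)$ for $\pt_\mm(\aff_\nn)$ and $\EE_\infty(\affluv_\nn)$ for $\pt_\infty(\aff_\nn)$ via the two vanishing discrepancies delivers both identities in the theorem. The main obstacle is the uniformity of the precision-discrepancy bound in $\mm \ge \nn$: because the LUVs $X_{j,\nn}$ vary with $\nn$, pointwise convergence $\pt_\mm \to \pt_\infty$ on any one fixed family of threshold sentences is not enough. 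Handling this cleanly requires applying \Theorem{affprovind} (or a diagonal variant of \Theorem{peraffkno}) to the doubly-indexed combinations $D_{\nn,\mm,j}$ along the diagonal $\mm \ge \nn$, and using the bounded $\ell_1$-norm of $\seq\affluv$ to control how the up-to-$\nn$-many threshold errors aggregate into a single $O(b/\nn)$ bound on the weighted sum.
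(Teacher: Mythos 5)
Your reduction is the same as the paper's: the combination you call $\aff_\nn$ is exactly $\alta_\nn(\affluv_\nn)$, and the paper proves the theorem in precisely this way, by applying \Theorem{peraffkno} to $(\alta_\nn(\affluv_\nn))_\nn$ and then substituting via your two ``precision discrepancies'', which are the paper's \Lem{mesh} (Mesh Independence) and \Lem{limexpapprox}. Your remark that the second discrepancy follows from the first by letting $\mm\to\infty$ also matches how the paper derives \Lem{limexpapprox} (coherence of $\pt_\infty$ is not actually needed there, only convergence plus the uniform first bound).

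The gap is in the first discrepancy, which you correctly locate but do not close. The world-wise bound $|\World(D_{\nn,\mm,j})|\le 2/\nn$ is fine (it is \Lem{conluvapprox} applied at the two precisions), but promoting it to a bound on $\pt_\mm$, uniformly over the whole tail $\mm\ge\nn$, is not accomplished by ``applying \Theorem{affprovind} to the doubly-indexed combinations along the diagonal'': \Theorem{affprovind}, like \Theorem{peraffkno}, takes a single bounded \pgenable sequence with one combination per trading day, whereas here each $\nn$ requires control over infinitely many pairs $(\nn,\mm)$ simultaneously, and there is no ``diagonal'' that covers them. The paper's Mesh Independence Lemma supplies the missing idea: for each fixed $\mm$ it aggregates the finitely many candidate discrepancy combinations $A_{\nn,\mm}$ with $\nn\le\mm$ into one combination $G_\mm$, using continuous indicator (``softmax'') weights $\alpha_{\nn,\mm}$ that place weight on any index $\nn$ whose discrepancy is overpriced at time $\mm$, and then applies \Theorem{affprovind} to the singly-indexed bounded sequence $(G_\mm)_\mm$; a small slack term of order $2/\nn$ is carried along so the world-values stay nonpositive. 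Without some such aggregation (or a genuinely two-index strengthening of \Theorem{peraffkno}, which the paper does not prove), your uniform bound is unsupported; with it, your argument coincides with the paper's proof. Also note, as a minor point, that the survival function that is a monotone step function is $x\mapsto\World(\quot{X_{j,\nn}>\enc{x}})$ for a consistent world $\World$, not $x\mapsto\pt_\mm(\quot{X_{j,\nn}>\enc{x}})$, which at finite times need not be monotone --- this is exactly why the promotion step is the real content.
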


\begin{restatable}[Expectation Preemptive Learning]{theorem}{exppolymax}\label{thm:exppolymax}
  Let $\seq{\affluv}\in\BLCS$. Then
  \[
       \liminf_{\nn\to\infty} \EE_\nn(\affluv_\nn )= \liminf_{\nn\rightarrow\infty}\sup_{\mm\geq \nn} \EE_\mm(\affluv_\nn ) 
  \]
  and
  \[
         \limsup_{\nn\to\infty} \EE_\nn(\affluv_\nn )= \limsup_{\nn\rightarrow\infty}\inf_{\mm\geq \nn} \EE_\mm(\affluv_\nn ) \ .
  \]
  \proofin{\ref{app:exppolymax}}
\end{restatable}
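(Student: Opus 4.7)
The plan is to reduce Expectation Preemptive Learning to Affine Preemptive Learning (Theorem~\ref{thm:affpolymax}), mirroring the reduction of Persistence of Expectation Knowledge (Theorem~\ref{thm:perexpkno}) to its affine counterpart. First I write $\affluv_\nn = c_\nn + \sum_k \alpha_k^{(\nn)} X_k^{(\nn)}$ and define a sentence-level encoding at precision $\nn$,
\[
A_\nn := c_\nn + \sum_k \alpha_k^{(\nn)} \cdot \frac{1}{\nn} \sum_{i=0}^{\nn-1} \quot{X_k^{(\nn)} > \enc{i}/\enc{\nn}}.
\]
Since $\seq{\affluv}$ lies in $\BLCS$, I check that $\seq{A}$ lies in $\BCS$: the $\ell_1$-norm is preserved, the rank of $A_\nn$ is at most $\nn$, and the coefficients and sentences are polynomial-time constructible from the \pgenable data defining $\affluv_\nn$. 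By the definition of $\EE_\nn = \EE_\nn^{\pt_\nn}$, I then have $\pt_\nn(A_\nn) = \EE_\nn(\affluv_\nn)$ exactly.

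Next I apply Affine Preemptive Learning to $\seq{A}$ to obtain
\[
\liminf_\nn \EE_\nn(\affluv_\nn) = \liminf_\nn \pt_\nn(A_\nn) = \liminf_\nn \sup_{\mm \ge \nn} \pt_\mm(A_\nn),
\]
and analogously for the limsup--inf equation. What remains is to show that the two sequences $\sup_{\mm \ge \nn} \pt_\mm(A_\nn)$ and $\sup_{\mm \ge \nn} \EE_\mm(\affluv_\nn)$ have the same liminf (and, for the limsup version, that their inf--analogues have the same limsup). These quantities differ only because $\pt_\mm(A_\nn)$ evaluates $\affluv_\nn$ at precision $\nn$ under market $\pt_\mm$, whereas $\EE_\mm(\affluv_\nn)$ evaluates at precision $\mm$; algebraically, the gap is $\sum_k \alpha_k^{(\nn)}$ times the difference between two Riemann sums of $x \mapsto \pt_\mm(\quot{X_k^{(\nn)} > x})$ at precisions $\nn$ and $\mm$. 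For the limit market $\pt_\infty$ this function is monotone in $x$ by Limit Coherence (Theorem~\ref{thm:lc}), so both Riemann sums approximate $\EE_\infty(X_k^{(\nn)})$ with combined error of order $b/\nn$. For finite $\mm$ I derive the necessary approximate monotonicity from Affine Coherence (Theorem~\ref{thm:affcoh}) applied to the provably nonnegative combinations $\quot{X > \enc{j}/\enc{\nn}} - \quot{X > \enc{i}/\enc{\nn}}$ for $i \le j$.

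The hard part will be making this last step uniform in $\mm$. The pointwise convergence $\pt_\mm \to \pt_\infty$ gives asymptotic agreement of $\pt_\mm(A_\nn)$ and $\EE_\mm(\affluv_\nn)$ only for $\mm$ much larger than $\nn$, but the supremum is taken over all $\mm \ge \nn$, so I must also control intermediate $\mm$ at which the market may still be incoherent enough to break the Riemann-approximation bound. Showing that the uniform Riemann-sum gap nevertheless shrinks with $\nn$---by combining the $O(b/\nn)$ Riemann error with the vanishing incoherence of $\pt_\mm$ uniformly in $\mm \ge \nn$ as $\nn \to \infty$---is the technical heart of the reduction. Once this uniform estimate is in hand, the liminf--sup equality falls out; the limsup--inf equality then follows by applying the same argument to $-\seq\affluv$, which is also in $\BLCS$.
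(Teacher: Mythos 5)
Your first half coincides exactly with the paper's proof: the combination you call $A_\nn$ is the paper's $\alta_\nn(\affluv_\nn)$, it lies in $\BCS$, and applying Theorem~\ref{thm:affpolymax} to it yields $\liminf_\nn \EE_\nn(\affluv_\nn) = \liminf_\nn \sup_{\mm\ge\nn} \EE_\nn^{\pt_\mm}(\affluv_\nn)$. The genuine gap is the step you yourself defer as the ``technical heart'': the uniform estimate $\lim_{\nn\to\infty}\sup_{\mm\ge\nn} |\EE_\nn^{\pt_\mm}(\affluv_\nn) - \EE_\mm(\affluv_\nn)| = 0$. This is precisely the paper's Mesh Independence Lemma (Lemma~\ref{lem:mesh}), and it does not follow from the tools you cite. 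Limit Coherence only constrains $\pt_\infty$; Affine Coherence and Affine Provability Induction constrain prices only along the main diagonal $\pt_\nn(\aff_\nn)$ of a singly indexed sequence; and Persistence of Affine Knowledge, which is the one theorem giving uniformity over $\mm\ge\nn$, applies only to a combination $\aff_\nn$ that stays fixed as $\mm$ varies. But $\EE_\mm(\affluv_\nn)$ involves the precision-$\mm$ encoding $\alta_\mm(\affluv_\nn)$, so the relevant combination $\alta_\nn(\affluv_\nn)-\alta_\mm(\affluv_\nn)$ depends on both indices; your ``approximate monotonicity of $x\mapsto\pt_\mm(\quot{X>x})$ uniformly in $\mm\ge\nn$'' is exactly the statement that is not yet available from the cited results, not a consequence of them.

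The paper closes this hole with a dedicated argument: Lemma~\ref{lem:conluvapprox} bounds the value of $\alta_\nn(\affluv_\nn)-\alta_\mm(\affluv_\nn)$ in every consistent world by $2b/\nn$, and then the doubly indexed family $A_{\nn,\mm}:=\alta_\nn(\affluv_\nn)-\alta_\mm(\affluv_\nn)-2/\nn$ is collapsed into a singly indexed $\BCS$ sequence $G_\mm:=\sum_{\nn\le\mm}\alpha_{\nn,\mm}\,A_{\nn,\mm}$, where the $\alpha_{\nn,\mm}$ are continuous-indicator (``softmax'') weights concentrating on any $\nn\le\mm$ at which $\pt_\mm$ overprices $A_{\nn,\mm}$ by more than a fixed $\varepsilon$; Affine Provability Induction applied to $(G_\mm)_\mm$ forces $\pt_\mm(G_\mm)\lesssim_\mm 0$, ruling out such overpricing for all but finitely many pairs $(\nn,\mm)$, and symmetrically for underpricing. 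Some device of this kind for converting the two-index estimate into a one-index application of the affine theorems is what your proposal is missing; once it is supplied (or Lemma~\ref{lem:mesh} is invoked), the remainder of your argument, including handling the limsup statement via $-\seq\affluv$, goes through as in the paper.
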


\begin{definition}[Determined via $\Theory$ (for LUV-Combinations)]\label{def:affthmval}
  We say that a $\RR$-LUV-combination $\affluv$ is \textbf{determined via $\bm{\Theory}$} if, in all worlds $\World \in \cworlds(\Theory)$, the value $\World(\affluv)$ is equal. Let $\thmval(\affluv)$ denote this value.
  
  Similarly, a sequence $\seq{\affluv}$ of $\RR$-LUV-combinations is said to be determined via $\Theory$ if $\affluv_\nn$ is determined via $\Theory$ for all $\nn$.
\end{definition}

\begin{restatable}[Expectation Recurring Unbiasedness]{theorem}{recurringunbiasednessexp}\label{thm:recurringunbiasednessexp}
  If $\seq{\affluv}\in\BLCS$ is determined via $\Theory$, and $\seq\fuz$ is a \pgenable divergent weighting weighting such that the support of $\seq\fuz$ is contained in the image of $f$,
  \[
      \frac
        {\sum_{i\leq \nn}\fuz_i  \cdot(\EE_i(\affluv_i)-\thmval(\affluv_i))}
        {\sum_{i\leq \nn}\fuz_i}
  \]
  has $0$ as a limit point. In particular, if it converges, it converges to~$0$.
\end{restatable}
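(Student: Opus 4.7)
The plan is to reduce this statement to its sentence-level analogue, Theorem~\ref{thm:recunbiasedaff}, by converting each LUV-combination $\affluv_i$ into an affine combination of sentences whose price realizes $\EE_i(\affluv_i)$ and whose consistent-world value approximates $\thmval(\affluv_i)$ up to $O(1/i)$. Writing $\affluv_i = c_i + \sum_k \alpha_{i,k} X_{i,k}$, define the sentence combination
\[
  \aff_i \;:=\; c_i + \sum_k \alpha_{i,k} \sum_{j=0}^{i-1} \tfrac{1}{i}\,\quot{\enc{X_{i,k}} > \enc{j}/\enc{i}}.
\]
By Definition~\ref{def:e} and linearity of pricing, $\pt_i(\aff_i) = \EE_i(\affluv_i)$. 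Since $\seq{\affluv}$ is \pgenable and $\|\aff_i\|_1 \le \|\affluv_i\|_1 \le b$, we have $\seq{\aff}\in\BCS$.

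For any $\World\in\cworlds(\Theory)$ and any $[0,1]$-LUV $X$, the staircase evaluates to $\sum_{j=0}^{i-1}\tfrac{1}{i}\World(\quot{\enc{X}>\enc{j}/\enc{i}}) = \lceil i\World(X)\rceil/i \in [\World(X),\,\World(X)+1/i)$. Summing linearly gives $\World(\aff_i) = \World(\affluv_i) + \theta_i$ with $|\theta_i|\le \|\affluv_i\|_1/i \le b/i$; since $\affluv_i$ is determined via $\Theory$, $\World(\affluv_i)=\thmval(\affluv_i)$, so $\World(\aff_i)\in[\thmval(\affluv_i)-b/i,\,\thmval(\affluv_i)+b/i]$ uniformly in $\World$. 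Fixing any $\World_0\in\cworlds(\Theory)$, decompose
\[
  \EE_i(\affluv_i)-\thmval(\affluv_i) \;=\; \bigl(\pt_i(\aff_i)-\World_0(\aff_i)\bigr)+\bigl(\World_0(\aff_i)-\thmval(\affluv_i)\bigr).
\]
The second summand is $O(1/i)$, and because $\fuz_i\in[0,1]$ with $\sum\fuz_i=\infty$, an elementary Ces\`aro-type argument shows $\sum_{i\le n}\fuz_i\cdot O(1/i) = o(\sum_{i\le n}\fuz_i)$, so this piece contributes $0$ to every limit point of the weighted average.

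The remaining piece, $\sum_{i\le n}\fuz_i(\pt_i(\aff_i)-\World_0(\aff_i))\big/\sum_{i\le n}\fuz_i$, is handled by adapting the trader in the proof of Theorem~\ref{thm:recunbiasedaff}. If this quantity lacked $0$ as a limit point, then because its per-step change is $O\bigl(\fuz_n\|\aff_n\|_1/\sum_{i\le n}\fuz_i\bigr)\to 0$, the sign cannot flip cofinitely often, so the bias is cofinitely bounded away from $0$ on one side by some $\varepsilon > 0$. A \pgenable trader that on each day $i$ shorts $\fuz_i$ units of $\aff_i$ (that is, short-sells $\fuz_i\alpha_{i,k}/i$ shares of each staircase sentence, with a continuous sign selector reading off the running bias via $\ctsind{\delta}$) then accumulates plausible profit at rate $\ge \varepsilon\sum_{i\le n}\fuz_i - o(\sum_{i\le n}\fuz_i)$ in every consistent world, which is unbounded above, contradicting the \lic{}.

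The main obstacle is bounding the trader's plausible net worth from below uniformly over $n$ and $\pcworlds(\dt_n)$: the individual LUVs $X_{i,k}$ need not be determined, so the staircase sentences need not be decidable by day $n$, and a naive exposure grows like $\sum\fuz_i$. The resolution mirrors the treatment in~\ref{app:recunbiasedaff}: determination of $\affluv_i$ via $\Theory$ supplies a finite $\Theory$-proof that $\affluv_i$ takes value $\thmval(\affluv_i)$, which together with $\Theory$-provable monotonicity of the staircase sentences enters $\dt_{t_i}$ at some day $t_i$ and propositionally forces $\aff_i \in [\thmval(\affluv_i)-b/i,\thmval(\affluv_i)+b/i]$ for every $\World\in\pcworlds(\dt_{t_i})$. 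The trader defers or scales down contributions whose settlement days $t_i$ exceed the current time using continuous expressible-feature controls, keeping plausible losses bounded while still driving the divergent profit on consistent worlds.
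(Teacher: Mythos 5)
Your reduction is exactly the paper's: the staircase combination you call $\aff_i$ is precisely $\alta_i(\affluv_i)$ from Appendix~\ref{app:featureluvcombos}, the identity $\pt_i(\aff_i)=\EE_i(\affluv_i)$ is how $\alta$ is designed, the bound $|\World(\aff_i)-\thmval(\affluv_i)|\le b/i$ for $\World\in\cworlds(\Theory)$ is \Lem{conluvapprox} plus determinedness, and discarding the $O(1/i)$ term from the weighted average is the same bookkeeping the paper does (your Ces\`aro estimate is fine, and slightly stronger than what the paper needs, since the paper only passes to a subsequence). At that point the paper simply applies \Theorem{recunbiasedaff} to the sequence $(\alta_i(\affluv_i))_i$, reading it against a fixed consistent world's value with the $b/i$ discrepancy absorbed, and is done.

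The genuine gap is in your substitute for that black-box application. Exploitation demands that the trader's plausible value over all $\World\in\pcworlds(\dt_\nn)$ be bounded below at every finite time as well as unbounded above; your trader, shorting $\fuz_i$ units of $\aff_i$ each day, carries unsettled exposure of order $\sum_{i\le\nn}\fuz_i$ and so is not bounded below, and your proposed repair --- ``defer or scale down contributions whose settlement days exceed the current time... keeping plausible losses bounded while still driving the divergent profit'' --- asserts exactly the thing that has to be proved. Once trades are scaled to keep open exposure bounded, the profit estimate $\ge\varepsilon\sum_{i\le\nn}\fuz_i - o(\sum_{i\le\nn}\fuz_i)$ no longer follows, and showing that a budgeted scheme still contradicts the \lic{} is precisely the content of \Lem{type3} (no repeatable $\varepsilon$-ROI) together with the unit-magnitude trader sequence and budget recursion in Appendix~\ref{app:recunbiasedaff}: the affine proof does not exhibit a single bounded-exposure trader with divergent profit, it exhibits an efficiently emulatable sequence of $\varepsilon$-ROI traders and concludes that their magnitudes must vanish. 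Nothing in your sketch reconstructs that machinery. A smaller quibble: settlement of $\aff_i$ among plausible worlds is not delivered by ``a finite $\Theory$-proof entering $\dt_{t_i}$'' --- the deductive process is only required to be propositionally $\Theory$-complete in the limit --- but by a compactness argument, which is what the maturation bookkeeping inside the ROI lemma verifies. The clean completion of your argument is the paper's: keep your reduction to the staircase sequence and invoke Affine Recurring Unbiasedness on it, rather than rebuilding its proof.
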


\begin{restatable}[Expectation Unbiasedness From Feedback]{theorem}{wubexp}\label{thm:wubexp}
  Given $\seq\affluv \in \BLCS$ that is determined via $\Theory$, a strictly increasing deferral function~$\deff$ such that $\thmval(\aff_n )$ can be computed in time $\Oo(\deff(\nn+1))$, and a \pgenable divergent weighting $\fuz$,
  \[
    \frac
        {\sum_{i\leq \nn}\fuz_i  \cdot(\EE_i(\affluv_i)-\thmval(\affluv_i))}
        {\sum_{i\leq \nn}\fuz_i}
    \eqsim_\nn 0.
  \]
  In this case, we say ``$\seq\fuz$ allows good feedback on $\seq\affluv$''.
  \proofin{\ref{app:wubexp}}
\end{restatable}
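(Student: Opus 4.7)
The plan is to mimic the trader argument behind \Thm{wubaff} (Affine Unbiasedness from Feedback), lifting it from sentence combinations to LUV combinations via a uniform discretization of each LUV into threshold indicator sentences. Writing $\affluv_\nn = c_\nn + \sum_k \alpha_{k,\nn} X_{k,\nn}$, I would introduce the companion \pgenable $\RR$-combination of sentences
\[
  \aff_\nn := c_\nn + \sum_k \alpha_{k,\nn} \cdot \frac{1}{\nn} \sum_{j=0}^{\nn-1} \quot{\enc{X_{k,\nn}} > \enc{j}/\enc{\nn}}.
\]
Two observations follow immediately from the definitions. First, by linearity of $\pt_\nn$ together with the definition of $\EE_\nn^{\pt_\nn}$, one has $\pt_\nn(\aff_\nn) = \EE_\nn(\affluv_\nn)$ exactly. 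Second, $\|\aff_\nn\|_1 \leq \|\affluv_\nn\|_1$, so $\seq\aff$ inherits the uniform $\ell_1$ bound $b$ from $\seq\affluv \in \BLCS$. A short calculation with the step function $g(x) := \#\{j : 0 \leq j < \nn,\, j/\nn < x\}/\nn$, which satisfies $0 \leq g(x) - x < 1/\nn$ for $x \in [0,1]$, shows that $|W(\aff_\nn) - \thmval(\affluv_\nn)| \leq b/\nn$ for every consistent world $W \in \cworlds(\Theory)$.

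Next, I would build an efficiently computable trader $\Trader$ mimicking the \Thm{wubaff} construction. On day $i$, $\Trader$ computes $\EE_i(\affluv_i) - \thmval(\affluv_i)$ (both terms fit within the allotted runtime by hypothesis), smooths its sign via $\ctsind{\delta}$ with a shrinking $\delta$ to preserve continuity, and takes a $\fuz_i$-weighted short position in $\aff_i$ scaled by the smoothed sign. Because the support of $\seq\fuz$ lies in the image of the strictly increasing $\deff$, successive positions do not interfere, and the threshold sentences' truth values can be pinned down within $\Trader$'s time budget of $\Oo(\deff(i+1))$. Combining with the per-world bound, the plausible payoff from the $i$th trade is $\fuz_i \cdot \bigl|\EE_i(\affluv_i) - \thmval(\affluv_i)\bigr| \,\pm\, b \fuz_i / i$, up to a vanishing smoothing error.

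Summing the plausible payoffs up to day $\nn$, $\Trader$'s plausible net worth is bounded below by
\[
  \sum_{i \leq \nn} \fuz_i \cdot \bigl|\EE_i(\affluv_i) - \thmval(\affluv_i)\bigr| \;-\; b \sum_{i \leq \nn} \fuz_i / i \;-\; O(1).
\]
A Cesàro-style estimate gives $\sum_{i \leq \nn} \fuz_i / i = o\bigl(\sum_{i \leq \nn} \fuz_i\bigr)$, valid because $\fuz_i \in [0,1]$ and $\sum \fuz_i = \infty$, by splitting the sum at index $\lceil 1/\varepsilon \rceil$ for arbitrary $\varepsilon > 0$; this absorbs the discretization error. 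If the weighted bias failed to be $\eqsim_\nn 0$, then $\bigl|\sum_{i \leq \nn} \fuz_i (\EE_i(\affluv_i) - \thmval(\affluv_i))\bigr| \geq \varepsilon \sum_{i \leq \nn} \fuz_i$ along a divergent subsequence for some $\varepsilon > 0$, whence $\sum_{i \le \nn} \fuz_i |\EE_i(\affluv_i) - \thmval(\affluv_i)|$ grows at least linearly in $\sum_{i \le \nn} \fuz_i$, forcing $\Trader$ to exploit $\MP$ and contradicting the logical induction criterion.

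The principal obstacle is that $\aff_\nn$ is \emph{not} itself determined via $\Theory$, even though $\affluv_\nn$ is: different consistent worlds can assign individual $X_{k,\nn}$ values whose $g$-roundings fail to cancel, producing a $W$-dependent deviation of size up to $b/\nn$ that precludes a direct appeal to \Thm{wubaff}. Handling this cleanly requires both the explicit per-world $b/\nn$ bound and the Cesàro estimate, so that the discretization error is absorbed into a subleading term that is dominated by the assumed main bias. A secondary verification, handled as in the proof of \Thm{wubaff}, is that the smoothed-sign strategy is genuinely a continuous, efficiently computable, non-overlapping $\nn$-strategy progression.
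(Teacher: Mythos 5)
Your setup (discretizing $\affluv_\nn$ into the threshold combination $\alta_\nn(\affluv_\nn)$, noting $\pt_\nn(\alta_\nn(\affluv_\nn)) = \EE_\nn(\affluv_\nn)$, and controlling the per-world discretization error by $b/\nn$, to be absorbed by a Ces\`aro estimate) is exactly the right reduction, and it is essentially how the paper argues: it applies \Thm{wubaff} to $(\alta_\nn(\affluv_\nn))_\nn$ and then uses \Lem{conluvapprox} to trade $\EE_i^\World(\affluv_i)$ for $\thmval(\affluv_i)$ at cost $b/i$. But your decision to bypass \Thm{wubaff} and build a fresh trader introduces a genuine gap. Your trader is supposed to compute $\EE_i(\affluv_i)-\thmval(\affluv_i)$ on day $i$, yet the hypothesis only makes $\thmval(\affluv_i)$ computable in time $\Oo(\deff(i+1))$, which in general lies far beyond the polynomial-in-$i$ budget a trader has on day $i$; you conflate the trader's runtime with the deferral bound. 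The entire content of the theorem is that the market must be unbiased even though the determined value only becomes checkable after the deferral delay; a trader that already knows the bias at purchase time is not available under the hypotheses.

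Second, your risk analysis assumes each position is settled by the next deferral point, so that the plausible net worth stays bounded below up to $O(1)$. But settlement relevant to exploitation means settlement by the deductive process (values in worlds $\World \in \pcworlds(\dt_\nn)$), and $\DP$ carries no rate guarantee: computability of $\thmval(\affluv_\nn)$ in time $\Oo(\deff(\nn+1))$ does not imply that $\dt_{\deff(\nn+1)}$ decides the relevant threshold sentences. With fixed-size positions and arbitrarily many simultaneously unsettled trades, your trader's plausible value can be unbounded below, so it need not exploit $\MP$ even under your contradiction hypothesis. This is precisely the budgeting obstacle the proof of \Thm{wubaff} is built to avoid: it bets a fraction of a tracked wealth (Kelly-style), and it realizes profits by selling back at the market price $\pt_{\deff(j+1)}(\aff_{\deff(j)})$, which is driven toward the determined value not by $\DP$ but by \Thm{affprovind} applied to the efficiently computable sequence of already-computable settled combinations. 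Your observation that $\alta_\nn(\affluv_\nn)$ may fail to be literally determined via $\Theory$ (worlds can disagree on a threshold sentence when the determined value sits exactly on a grid point) is a fair quibble about the black-box application, but it perturbs values by at most $b/\nn$ per consistent world and is handled the same way the $b/\nn$ error is---by fixing $\World \in \cworlds(\Theory)$ and running the \Thm{wubaff} argument against $\EE_i^\World$, as the paper does---rather than by abandoning the reduction and, with it, the machinery that makes the exploitation argument sound.
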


\begin{restatable}[Learning Pseudorandom LUV Sequences]{theorem}{prandexp}\label{thm:prandexp}
  Given a $\seq\affluv \in \BLCS$ which is determined via $\Theory$, if there exists a deferral function $\deff$ such that for any \pgenable $\deff$-patient divergent weighting $\seq\fuz$,
  \[
    \frac{\sum_{i \leq \nn} \fuz_i  \cdot \thmval(\affluv_i )}{\sum_{i \leq n} \fuz_i } \gtrsim_\nn 0,
  \]
  then
  \[
    \EE_\nn(\affluv_\nn ) \gtrsim_\nn 0.
  \]
  \proofin{\ref{app:prandexp}}
\end{restatable}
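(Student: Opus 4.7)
The plan is to reduce the statement to the already-established \Thm{prandaff} by expressing expectations as affine combinations of probabilities of threshold sentences. Writing $\affluv_n = c_n + \sum_j \alpha_{n,j} X_{n,j}$ where $c_n$ and $\alpha_{n,j}$ are expressible features of rank $\le n$ and the $X_{n,j}$ are $[0,1]$-LUVs, I would define the companion sentence-combination
\[
  \aff_n \ :=\ c_n + \sum_{j}\sum_{k=0}^{n-1} \frac{\alpha_{n,j}}{n}\cdot \quot{\enc{X_{n,j}} > \enc{k/n}}.
\]
By construction and the definition of $\EE_n$, extended linearly to $\features$-LUV-combinations, we have $\pt_n(\aff_n) = \EE_n(\affluv_n)$; and since $\sum_{k=0}^{n-1}(1/n) = 1$, the sequence $\seq\aff$ is \pgenable and inherits the uniform $\|\cdot\|_1$-bound of $\seq\affluv$, placing $\seq\aff$ in $\BCS$.

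Next I would control the world-values of $\aff_n$ by the determined value $\thmval(\affluv_n)$. For each $W \in \cworlds(\Theory)$, the step approximation $\sum_{k=0}^{n-1}(1/n)\,W(\quot{\enc{X_{n,j}}>\enc{k/n}})$ equals $\lfloor n\cdot W(X_{n,j})\rfloor/n$, which lies within $1/n$ of $W(X_{n,j})$. Consequently $|W(\aff_n) - W(\affluv_n)| \le \|\affluv_n\|_1/n \le B/n$ for the uniform bound $B$ of $\seq\affluv$. Since $\seq\affluv$ is determined, this yields $|W(\aff_n) - \thmval(\affluv_n)| \le B/n$ for every consistent $W$. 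In particular, for any \pgenable $\deff$-patient divergent weighting $\seq\fuz$, the hypothesis on $\seq\affluv$ transfers to the same hypothesis applied to the (uniform) world-value of $\aff_i$, because the discrepancy $\sum_{i\le n}\fuz_i(B/i)/\sum_{i\le n}\fuz_i$ tends to $0$: the numerator is $O(\log n)$ while the denominator diverges, since $\fuz_i \in [0,1]$ and $\sum_i\fuz_i = \infty$.

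The main obstacle is that $\seq\aff$ is only approximately determined via $\Theory$ (with $W$-dependent error $O(1/n)$), so \Thm{prandaff} does not apply verbatim. I would close this gap by re-examining the trader-based proof of \Thm{prandaff}: that proof only uses $\thmval(\aff_i)$ as a uniform lower (resp.\ upper) bound on plausible world-values of $\aff_i$, and a $W$-dependent slack of $O(1/n)$ contributes only $O(1/n)$ to any trader's plausible winnings per trade, which is absorbed into any fixed tolerance $\varepsilon>0$ used to derive exploitation. Equivalently, and more cleanly, I would replicate the proof of \Thm{prandaff} directly in the LUV-combination setting: the trader construction from that proof pulls back through the reduction above to buy/sell the threshold sentences with coefficients $\alpha_{n,j}/n$, which is an efficiently computable trading strategy whose plausible net worth tracks $\thmval(\affluv_n) - \EE_n(\affluv_n)$ up to $O(1/n)$.

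Putting these pieces together yields $\pt_n(\aff_n) = \EE_n(\affluv_n) \gtrsim_n 0$, which is the desired conclusion. The $\lesssim_n$ and $\eqsim_n$ variants follow by applying the same argument to $-\seq\affluv$ and to $\pm \seq\affluv$ respectively, using the linearity asserted by \Thm{loe} and the closure of $\BLCS$ under negation.
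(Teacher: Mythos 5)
Your proposal is correct and takes essentially the same route as the paper's own proof, which likewise reduces the statement to \Thm{prandaff} applied to the companion sequence $(\alta_n(\affluv_n))_n$ and uses \Lem{conluvapprox} to transfer the pseudorandomness hypothesis from $\thmval(\affluv_i)$ to the world-values $\EE_i^\World(\affluv_i)$. Your additional observation that $(\alta_n(\affluv_n))_n$ is only \emph{approximately} determined via $\Theory$, with the $\Oo(1/n)$ slack absorbed into the trader's tolerance in the proof of \Thm{prandaff}, addresses a point the paper's proof passes over silently; the only repair needed on your side is the justification that the averaged error vanishes, which should be argued as a $\fuz$-weighted average of $B/i \to 0$ with divergent total weight rather than via ``numerator $\Oo(\log n)$, denominator divergent,'' since $\sum_{i\le n}\fuz_i$ may diverge more slowly than $\log n$.
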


\subsection{Trust in Consistency}\label{sec:metamath}

The theorems above all support the hypothesis that logical inductors develop reasonable beliefs about logic. One might then wonder what a logical inductor has to say about some of the classic questions in meta-mathematics. For example, what does a logical inductor over $\PA$ say about the consistency of Peano arithmetic?

\begin{definition}[Consistency Statement]
  Given a recursively axiomatizable theory $\Theory^\prime$, define the \textbf{$\bm{\nn}$-consistency statement} of $\Theory^\prime$ to be the formula with one free variable $\nu$ such that
  \[
    \consen(\Theory^\prime)(\nu) := \quot{\textnormal{There is no proof of $\bot$ from $\enc{\Theory^\prime}$ with $\nu$ or fewer symbols}},
  \]
  written in $\Lang$ using a G\"odel encoding. For instance, $\consen(\PA)(\quot{\-\Ack(10,10)})$ says that any proof of $\bot$ from $\PA$ requires at least $\Ack(10, 10)$ symbols.

  We further define $\quot{\enc{\Theory^\prime}\textnormal{\ is consistent}}$ to be the universal generalization
  \[
    \quot{\forall n \colon \textnormal{there is no proof of $\bot$ from $\enc{\Theory^\prime}$ in $n$ or fewer symbols}},
  \]
  and $\quot{\enc{\Theory^\prime}\textnormal{\ is inconsistent}}$ for its negation.
\end{definition}

\begin{restatable}[Belief in Finitistic Consistency]{theorem}{restatepac}\label{thm:pac}
  Let $f$ be any computable function. Then
  \[
    \pt_\nn(\consen(\Theory)(\quot{\enc{f}(\enc{\nn})})) \eqsim_\nn 1.
  \]
  \proofin{\ref{app:pac}}
\end{restatable}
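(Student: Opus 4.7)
The plan is to exhibit
\[
  \phis := \big(\consen(\Theory)(\quot{\enc{f}(\enc{\nn})})\big)_{\nn \in \NN^+}
\]
as an \ec sequence of $\Theory$-theorems, and then invoke Provability Induction (\Theorem{provind}) to conclude $\pt_\nn(\phi_\nn) \eqsim_\nn 1$. This reduction is attractive because it offloads all of the hard work to a theorem already in hand; what remains is checking that the two hypotheses of \Theorem{provind} apply to this particular $\phis$.

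First I would verify that $\phis$ is \ec. The sentence $\phi_\nn$ arises by substituting the shorthand $\quot{\enc{f}(\enc{\nn})}$---which unpacks to the fixed $\Theory$-formula $\gamma_f(\enc{\nn},\nu)$ from the representability theorem---into the fixed template $\consen(\Theory)(\nu)$. The only piece that varies with $\nn$ is the numeral $\enc{\nn}$, whose length is polynomial in $\nn$, so $\phi_\nn$ can be written down in $\poly(\nn)$ steps without ever evaluating $f(\nn)$. Crucially, no bounded search over proofs is performed at the level of writing the sentence: the sentence merely \emph{talks about} such a search via $\gamma_f$.

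Second, I would verify that each $\phi_\nn$ is a $\Theory$-theorem. By the convention for defined values in \Sec{notation}, $\phi_\nn$ unpacks to $\quot{\forall x \colon \gamma_f(\enc{\nn}, x) \to \consen(\Theory)(x)}$, and representability gives $\Theory \vdash \forall x \colon \gamma_f(\enc{\nn}, x) \iff x = \enc{f(\nn)}$, so it suffices to show $\Theory \vdash \consen(\Theory)(\enc{f(\nn)})$. Fixing $k := f(\nn)$, the sentence $\consen(\Theory)(\enc{k})$ is a $\Delta_0$ claim asserting that none of the finitely many strings of length at most $k$ codes a $\Theory$-proof of $\bot$; it is true because $\Theory$ is assumed consistent, and $\Theory$ proves it by internally checking each candidate proof, using the $\Sigma_1$-completeness implicit in the assumption that $\Theory$ \representscomputations{}. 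With both hypotheses of \Theorem{provind} verified, the conclusion $\pt_\nn(\phi_\nn) \eqsim_\nn 1$ is immediate.

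The main obstacle is the second step---upgrading truth of $\consen(\Theory)(\enc{k})$ to provability inside $\Theory$. Raw consistency of $\Theory$ is not enough; what rescues us is that ``\representscomputations{}'' already packages the $\Sigma_1$-completeness needed to certify each individual finitistic consistency claim by finite search. Note that the resulting $\Theory$-proof of $\phi_\nn$ may have length vastly exceeding $f(\nn)$---it case-splits over all strings of length at most $f(\nn)$---so $\DP$ may take astronomically long to produce it. That the inductor nevertheless assigns $\phi_\nn$ probability near $1$ by day $\nn$ is the whole punchline: the market learns the pattern ``no short proof of contradiction exists'' inductively, far outpacing the deductive process that would ever justify it.
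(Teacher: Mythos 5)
Your proposal is correct and follows essentially the same route as the paper's own proof: the paper likewise observes that each $\consen(\Theory)(\quot{\enc{f}(\enc{\nn})})$ is a true decidable statement, hence provable in $\Theory$ since $\Theory$ \representscomputations{}, and then applies \Theorem{provind} to the resulting \ec sequence of theorems. Your write-up just spells out the efficient-computability check and the $\Sigma_1$-completeness step that the paper leaves implicit.
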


\noindent In other words, if $\Theory$ is in fact consistent, then $\MP$ learns to trust it for arbitrary finite amounts of time. For any fast-growing function $f$ you can name, $\MP$ eventually learns to believe $\Theory$ is consistent for proofs of length at most $f(n)$, by day $\nn$ at the latest. In colloquial terms, if we take a logical inductor over $\PA$ and show it a computable function $f$ that, on each input $\nn$, tries a new method for finding an inconsistency in $\PA$, then the logical inductor will stare at the function for a while and eventually conclude that it's not going to succeed (by learning to assign low probability to $f(\nn)$ proving $\bot$ from $\PA$ by day $\nn$ at the latest, regardless of how long $f$ runs). That is to say, a logical inductor over $\PA$ learns to trust Peano arithmetic \emph{inductively}.

By the same mechanism, a logical inductor over $\Theory$ can learn inductively to trust the consistency of \emph{any} consistent theory, including consistent theories that are stronger than $\Theory$ (in the sense that they can prove $\Theory$ consistent):

\begin{restatable}[Belief in the Consistency of a Stronger Theory]{theorem}{restatepazfc}\label{thm:pazfc}
  Let $\Theory^\prime$ be any recursively axiomatizable consistent theory. Then
  \[
    \pt_\nn(\consen(\Theory^\prime)(\quot{\enc{f}(\enc{\nn})})) \eqsim_\nn 1.
  \]
  \proofin{\ref{app:pazfc}}
\end{restatable}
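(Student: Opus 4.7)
The plan is to reduce to \Theorem{provind} (\nameref{thm:provind}), essentially as in the proof of \Theorem{pac}: I will show that $\phi_\nn := \consen(\Theory^\prime)(\quot{\enc{f}(\enc{\nn})})$ is an \ec sequence of $\Theory$-theorems. Efficient computability is immediate: $\consen(\Theory^\prime)(\nu)$ is a fixed formula in $\Lang$ obtained from a G\"odel encoding of the recursive axiomatization of $\Theory^\prime$, and substituting $\quot{\enc{f}(\enc{\nn})}$ for $\nu$ only requires writing out the syntactic definition of $\gamma_f$ with the numeral for $n$. As emphasized in \Sec{notation}, this does not require computing $f(n)$, so the sentence can be produced in polynomial time in $\nn$ no matter how fast $f$ grows.

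For $\Theory$-provability of each $\phi_\nn$, fix $n$ and let $m := f(n)$. By the representability of $f$, we have $\Theory \vdash \quot{\enc{f}(\enc{n}) = \enc{m}}$, so it suffices to show $\Theory \vdash \consen(\Theory^\prime)(\enc{m})$. Since $\Theory^\prime$ is recursively axiomatizable, the predicate ``$p$ is a proof of $\bot$ from $\Theory^\prime$ of length $\le m$'' is decidable, and since $\Theory^\prime$ is consistent, no such $p$ exists. Because $\Theory$ \representscomputations, $\Theory$ can carry out this bounded proof-search by exhaustion and so proves the bounded consistency statement. Applying \nameref{thm:provind} to the \ec sequence $\phis$ of $\Theory$-theorems then yields $\pt_\nn(\phi_\nn) \eqsim_\nn 1$.

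The subtle point---where the theorem gets its bite---is the provability claim for each bounded instance. Although G\"odel's second incompleteness theorem may prevent $\Theory$ from proving the \emph{unbounded} statement $\quot{\enc{\Theory^\prime}\textnormal{\ is consistent}}$ (e.g.\ when $\Theory^\prime$ extends $\Theory$), each \emph{bounded} instance $\consen(\Theory^\prime)(\enc{m})$ remains provable in $\Theory$ by exhaustive proof-checking. The efficient computability of the sequence then lets \nameref{thm:provind} lift these pointwise facts into a uniform, timely-learning statement about $\MP$'s beliefs, outpacing whatever (potentially enormous) deductive effort $\DP$ would need to verify each instance directly.
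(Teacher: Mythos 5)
Your proposal is correct and follows essentially the same route as the paper: the paper's proof also observes that each bounded consistency statement is a true decidable claim, hence provable in $\Theory$ because $\Theory$ \representscomputations, and then invokes \Theorem{provind}. Your write-up merely spells out the details (the decomposition via $\Theory \vdash \quot{\enc{f}(\enc{\nn}) = \enc{m}}$ and bounded proof-search) that the paper leaves implicit.
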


\noindent For instance, a logical inductor over $\PA$ can learn inductively to trust the consistency of $\ZFC$ for finite proofs of arbitrary length (assuming $\ZFC$ is in fact consistent).

These two theorems alone are unimpressive. Any algorithm that assumes consistency until proven otherwise can satisfy these theorems, and because every inconsistent theory admits a finite proof of inconsistency, those na\"\i{}ve algorithms will disbelieve any inconsistent theory eventually. But those algorithms will still believe inconsistent theories for quite a long time, whereas logical inductors learn to distrust inconsistent theories in a timely manner:

\begin{restatable}[Disbelief in Inconsistent Theories]{theorem}{restateincons}\label{thm:incons}
  Let $\seq{\Theory^\prime}$ be an \ec sequence of recursively axiomatizable inconsistent theories. Then
  \[
    \pt_\nn(\quot{\enc{\Theory^\prime_\nn}\textnormal{\ is inconsistent}}) \eqsim_\nn 1,
  \]
  so
  \[
    \pt_\nn(\quot{\enc{\Theory^\prime_\nn}\textnormal{\ is consistent}}) \eqsim_\nn 0.
  \]
  \proofin{\ref{app:incons}}
\end{restatable}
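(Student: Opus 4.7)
The plan is to obtain both claims as direct corollaries of \Theorem{provind} (Provability Induction). Let $\phis := (\quot{\enc{\Theory^\prime_\nn}\textnormal{\ is inconsistent}})_{\nn \in \NN^+}$. If I can show that $\phis$ is an \ec sequence of theorems of $\Theory$, then the first half of \Theorem{provind} immediately gives $\pt_\nn(\phi_\nn) \eqsim_\nn 1$; and since $\quot{\enc{\Theory^\prime_\nn}\textnormal{\ is consistent}}$ is, by definition, $\lnot\phi_\nn$, applying the second (disprovable) half of \Theorem{provind} to the sequence $(\lnot\phi_\nn)_{\nn \in \NN^+}$ yields the second conclusion. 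So the whole job is to verify the two hypotheses of \Theorem{provind} for $\phis$.

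Efficient computability: $\seq{\Theory^\prime}$ is by hypothesis an \ec sequence of recursively axiomatizable theories, so a finite description of $\Theory^\prime_\nn$ (e.g., the code of a Turing machine enumerating its axioms) can be produced in time polynomial in $\nn$. Plugging this description into the fixed G\"odel-encoded template ``there exists a proof of $\bot$ from $\enc{\Theory^\prime_\nn}$'' produces $\phi_\nn$ in polynomial time, so both $\phis$ and $(\lnot\phi_\nn)_{\nn \in \NN^+}$ are \ec.

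Provability: fix any $\nn$. Because $\Theory^\prime_\nn$ is inconsistent, there is a concrete finite proof $\pi_\nn$ of $\bot$ from $\Theory^\prime_\nn$, and the predicate ``$\pi$ is a valid proof of $\bot$ from $\Theory^\prime_\nn$'' is a total computable function of $\pi$ (uniformly in $\nn$). By the standing assumption of this section that $\Theory$ \representscomputations, $\Theory$ then proves the arithmetized statement that the numeral $\enc{\pi_\nn}$ encodes such a proof, and existentially generalizing yields $\Theory \vdash \phi_\nn$; hence also $\Theory \vdash \lnot\lnot\phi_\nn$, making each $\lnot\phi_\nn$ disprovable in $\Theory$, as required to invoke the second half of \Theorem{provind}.

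The main thing to watch is that the bookkeeping in the efficient-computability step is genuinely uniform in $\nn$: a single fixed G\"odel encoding must turn the \ec description of $\seq{\Theory^\prime}$ into the sentences $\phi_\nn$ within a polynomial budget, independent of how hard it actually is to find the inconsistency proofs $\pi_\nn$. There is no deeper obstacle; the theorem ultimately reduces to the fact that a logical inductor, by \Theorem{provind}, cannot persistently doubt an efficiently statable theorem of $\Theory$, and ``$\Theory^\prime_\nn$ is inconsistent'' is exactly such a theorem whenever each $\Theory^\prime_\nn$ really is inconsistent.
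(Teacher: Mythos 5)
Your proof is correct and follows essentially the same route as the paper: observe that each sentence $\quot{\enc{\Theory^\prime_\nn}\textnormal{\ is inconsistent}}$ is a theorem of $\Theory$ (via a finite proof of $\bot$ whose verification is computable and hence, by representability, provable), note the sequence is \ec[,] and apply \Theorem{provind} to the theorem sequence and to the disprovable consistency sequence. The paper's appendix proof is just a terser version of this same argument, citing provability in $\PA$ where you spell out the representability step.
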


\noindent In other words, logical inductors learn in a timely manner to distrust inconsistent theories that can be efficiently named, even if the shortest proofs of inconsistency are very long.

Note that \Theorem{pac} \emph{does not say}
\[
  \pt_\infty(\quot{\enc{\Theory}\textnormal{ is consistent}})
\]
is equal to 1, nor even that it's particularly high. On the contrary, by \Theorem{nd}, the limiting probability on that sentence is bounded away from 0 and 1 (because both that sentence and its negation are consistent with $\Theory$). Intuitively, $\DP$ never reveals evidence against the existence of non-standard numbers, so $\MP$ remains open to the possibility. This is important for \Theorem{scon}, which say that logical inductors can safely be conditioned on any sequence of statements that are consistent with $\Theory$, but it also means that $\MP$ will not give an affirmative answer to the question of whether $\PA$ is consistent in full generality.

In colloquial terms, if you hand a logical inductor any \emph{particular} computation, it will tell you that that computation isn't going to output a proof $\bot$ from the axioms of $\PA$, but if you ask whether $\PA$ is consistent \emph{in general}, it will start waxing philosophical about non-standard numbers and independent sentences---not unlike a human philosopher.

A reasonable objection here is that \Theorem{pac} is not talking about the consistency of the Peano axioms, it's talking about \emph{computations} that search for proofs of contradiction from $\PA$. This is precisely correct, and brings us to our next topic.

\subsection{Reasoning about Halting}\label{sec:halting}

Consider the famous halting problem of \citet{turing1936computable}. Turing proved that there is no general algorithm for determining whether or not an arbitrary computation halts. Let's examine what happens when we confront logical inductors with the halting problem.

\begin{restatable}[Learning of Halting Patterns]{theorem}{restatehalts}\label{thm:halts}
  Let $\seq{m}$ be an \ec sequence of Turing machines, and $\seq{x}$ be an \ec sequence of bitstrings, such that $m_\nn$ halts on input $x_\nn$ for all $\nn$. Then
  \[
    \pt_\nn(\quot{\text{$\enc{m_\nn}$ halts on input $\enc{x_\nn}$}}) \eqsim_\nn 1.
  \]
  \proofin{\ref{app:halts}}
\end{restatable}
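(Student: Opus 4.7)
The plan is to apply Provability Induction (Theorem~\ref{thm:provind}) to the sequence $\phis$ defined by $\phi_\nn := \quot{\text{$\enc{m_\nn}$ halts on input $\enc{x_\nn}$}}$. To do so, two facts need to be checked: that $\phis$ is efficiently computable, and that each $\phi_\nn$ is a theorem of $\Theory$. Once both are in hand, Theorem~\ref{thm:provind} delivers $\pt_\nn(\phi_\nn) \eqsim_\nn 1$ directly.

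Efficient computability is immediate from the hypotheses. Since $\seq{m}$ and $\seq{x}$ are both \ec, the encodings $\enc{m_\nn}$ and $\enc{x_\nn}$ can be produced in time polynomial in $\nn$, and $\phi_\nn$ is obtained by substituting these encodings into a fixed syntactic template. For provability, I would use the fact that ``$m_\nn$ halts on input $x_\nn$'' is a $\Sigma_1$ claim: it asserts the existence of a finite halting computation trace, and by hypothesis such a trace exists for each $\nn$. Since $\Theory$ \representscomputations, it is in particular strong enough to verify any specific terminating trace step by step. One clean way to package this is to let $f : \NN^+ \to \NN^+$ be the total computable function whose algorithm simulates $m_\nn$ on $x_\nn$ to completion and then outputs $1$; the representability theorem then yields $\Theory \vdash \quot{\enc{f}(\enc{\nn}) = 1}$, which is $\Theory$-provably equivalent to $\phi_\nn$ under any reasonable G\"odel encoding of the halting predicate. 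Thus $\Theory \vdash \phi_\nn$ for every $\nn$, and Provability Induction applies.

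The only real obstacle is cosmetic: one must fix a specific formalization of ``$m_\nn$ halts on input $x_\nn$'' in $\Lang$ and confirm that it agrees $\Theory$-provably with the $\gamma_f$-based formulation above. Both formulations describe the same $\Sigma_1$ content about a terminating computation, so this equivalence is routine given $\Theory$'s assumed ability to represent computations; no new machinery beyond Theorem~\ref{thm:provind} is required. (If desired, Persistence of Knowledge, Theorem~\ref{thm:perkno}, additionally upgrades the conclusion from the main diagonal to $\sup_{\mm \ge \nn}|\pt_\mm(\phi_\nn) - 1| \eqsim_\nn 0$, but the stated theorem asks only for the diagonal behavior.)
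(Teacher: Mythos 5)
Your proposal is correct and follows essentially the same route as the paper: the appendix proof likewise notes that each halting statement is a true, verifiable claim about a computation (hence provable in $\Theory$ since $\Theory$ \representscomputations) and then applies Provability Induction (Theorem~\ref{thm:provind}). Your extra remarks about the $\gamma_f$-based formalization and efficient computability just spell out details the paper leaves implicit.
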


Note that the individual Turing machines \emph{do not} need to have fast runtime. All that is required is that the \emph{sequence} $\seq{m}$ be efficiently computable, i.e., it must be possible to write out the source code specifying~$m_\nn$ in time polynomial in~$\nn$. The runtime of an individual~$m_\nn$ is immaterial for our purposes. So long as the $m_\nn$ all halt on the corresponding $x_\nn$, $\MP$ recognizes the pattern and learns to assign high probability to $\quot{\text{$\enc{m_\nn}$ halts on input $\enc{x_\nn}$}}$ no later than the $\nn$th day.

Of course, this is not so hard on its own---a function that assigns probability~1 to everything also satisfies this property. The real trick is separating the halting machines from the non-halting ones. This is harder. It is easy enough to show that $\MP$ learns to recognize \ec sequences of machines that \emph{provably} fail to halt:

\begin{restatable}[Learning of Provable Non-Halting Patterns]{theorem}{restateloops}\label{thm:loops}
  Let $\seq{q}$ be an \ec sequence of Turing machines, and $\seq{y}$ be an \ec sequence of bitstrings, such that $q_\nn$ \emph{provably} fails to halt on input $y_\nn$ for all $\nn$. Then
  \[
    \pt_\nn(\quot{\text{$\enc{q_\nn}$ halts on input $\enc{y_\nn}$}}) \eqsim_\nn 0.
  \]
  \proofin{\ref{app:loops}}
\end{restatable}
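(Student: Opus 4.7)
The plan is to reduce this directly to \nameref{thm:provind} (\Theorem{provind}), the disprovable half. Define the sequence of sentences
\[
  \psis := \big(\quot{\text{$\enc{q_\nn}$ halts on input $\enc{y_\nn}$}}\big)_{\nn \in \NN^+}.
\]
The goal is to show $\pt_\nn(\psi_\nn) \eqsim_\nn 0$, which is precisely what \Theorem{provind} gives for efficiently computable sequences of disprovable sentences. So there are exactly two things to verify: that $\psis$ is efficiently computable, and that each $\psi_\nn$ is $\Theory$-disprovable.

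For efficient computability of $\psis$: writing down the sentence $\psi_\nn$ requires only the source code of $q_\nn$, the bitstring $y_\nn$, and a fixed formula template (built from the representation machinery used whenever $\Theory$ \representscomputations). Since $\seq q$ and $\seq y$ are \ec[,] we can write out $q_\nn$ and $y_\nn$ in time polynomial in $\nn$, and then assemble $\psi_\nn$ by a fixed-size syntactic wrapping. Crucially, this step does not involve \emph{running} $q_\nn$ on $y_\nn$, which may take enormous time or never terminate---only the syntactic description is needed. Hence $\seq\psi$ is \ec[.]

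For disprovability: by hypothesis, $q_\nn$ provably fails to halt on $y_\nn$, meaning $\Theory \vdash \lnot\psi_\nn$ for every $\nn$. Thus $\psis$ is an \ec sequence of $\Theory$-disprovable sentences. Applying the disprovable half of \Theorem{provind} yields $\pt_\nn(\psi_\nn) \eqsim_\nn 0$, which is the claim.

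There is no real obstacle beyond these two observations; the content of the theorem is really a corollary of provability induction together with the fact that the sentence $\quot{\text{$\enc{q_\nn}$ halts on input $\enc{y_\nn}$}}$ can be constructed without executing $q_\nn$. The main conceptual point worth emphasizing (parallel to the discussion after \Thm{halts}) is that the efficient computability of the \emph{sequence} of sentences is what matters, not any runtime bound on the individual $q_\nn$; this is exactly what allows a logical inductor to outpace the deductive process on non-halting instances whose proofs of non-halting might be astronomically long.
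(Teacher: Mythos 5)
Your proposal is correct and matches the paper's own proof, which likewise observes that each sentence $\quot{\text{$\enc{q_\nn}$ halts on input $\enc{y_\nn}$}}$ is disprovable in $\Theory$ and applies the disprovable half of \Theorem{provind}. Your additional remarks on efficient computability of the sentence sequence (without running $q_\nn$) simply make explicit what the paper leaves implicit.
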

\noindent Of course, it's not too difficult to disbelieve that the provably-halting machines will halt; what makes the above theorem non-trivial is that $\MP$ learns \emph{in a timely manner} to expect that those machines won't halt. Together, the two theorems above say that if there is any efficient method for generating computer programs that definitively either halt or don't (according to $\Theory$) then $\MP$ will learn the pattern.

The above two theorems only apply to cases where $\Theory$ can prove that the machine either halts or doesn't. The more interesting case is the one where a Turing machine~$q$ fails to halt on input~$y$, but $\Theory$ is not strong enough to prove this fact. In this case, $\pt_\infty$'s probability of~$q$ halting on input~$y$ is positive, by \Theorem{nd}. Nevertheless, $\MP$ still learns to stop expecting that those machines will halt after any reasonable amount of time:

\begin{restatable}[Learning not to Anticipate Halting]{theorem}{restatedontwait}\label{thm:dontwait}
  Let $\seq{q}$ be an \ec sequence of Turing machines, and let $\seq{y}$ be an \ec sequence of bitstrings, such that $q_\nn$ does not halt on input $y_\nn$ for any $\nn$. Let $f$ be any computable function. Then
  \[
    \pt_\nn(\quot{\text{$\enc{q_\nn}$ halts on input $\enc{y_\nn}$ within $\enc{f}(\enc{\nn})$ steps}}) \eqsim_\nn 0.
  \]
  \proofin{\ref{app:dontwait}}
\end{restatable}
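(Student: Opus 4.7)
The plan is to reduce directly to the second half of \Theorem{provind} (Provability Induction). Let
\[
  \psi_\nn := \quot{\text{$\enc{q_\nn}$ halts on input $\enc{y_\nn}$ within $\enc{f}(\enc{\nn})$ steps}},
\]
so that the probability we must bound is exactly $\pt_\nn(\psi_\nn)$. It therefore suffices to verify (i)~that $\seq{\psi}$ is efficiently computable as a function of $\nn$, and (ii)~that each $\psi_\nn$ is refutable from $\Theory$.

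For (i), the key observation is that by the convention introduced in \Sec{notation}, the expression $\enc{f}(\enc{\nn})$ is mere shorthand for a formula built from the fixed representing formula $\gamma_f$: writing it down does \emph{not} require evaluating $f(\nn)$, only emitting $\gamma_f$ together with a numeral for $\nn$ inside the appropriate quantified template. Since $\seq{q}$ and $\seq{y}$ are \ec by hypothesis, all remaining syntactic pieces of $\psi_\nn$ can also be emitted in polynomial time in $\nn$, so $\seq{\psi}$ is \ec.

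For (ii), consider the total computable function $h(m,x,k)\in\{0,1\}$ that decides ``$m$ halts on $x$ within $k$ steps'' by bounded simulation. Because $\Theory$ \representscomputations, $h$ is represented by some formula $\gamma_h$, and moreover $\Theory \vdash \enc{f}(\enc{\nn}) = \enc{f(\nn)}$ by representability of $f$. Since by hypothesis $q_\nn$ does not halt on $y_\nn$ at all, in particular it does not halt within the specific number $f(\nn)$ of steps, so $h(q_\nn,y_\nn,f(\nn)) = 0$, and representability then gives $\Theory \vdash \lnot\psi_\nn$. Applying the second half of \Theorem{provind} to the \ec sequence $\seq{\psi}$ of $\Theory$-refutable sentences yields $\pt_\nn(\psi_\nn) \eqsim_\nn 0$, which is the claim.

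The only subtlety worth flagging is that the witnessing $\Theory$-proof of $\lnot\psi_\nn$ can be enormous: it must essentially encode the full $f(\nn)$-step simulation of $q_\nn$ on $y_\nn$, which can be vastly longer than $\nn$ itself for fast-growing $f$. But \Theorem{provind} demands only that such a proof exist in principle---no bound on its length, nor on when $\DP$ surfaces it, is required. All of the ``learning to outpace deduction'' work has already been absorbed into \nameref{thm:provind}, which is precisely what makes the present theorem nontrivial: it is easy to see that $\pt_\infty(\psi_\nn) = 0$, but the content here is that the market assigns low probability to $\psi_\nn$ by day $\nn$, long before $\DP$ could verify non-halting by direct simulation.
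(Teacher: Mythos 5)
Your proof is correct and follows essentially the same route as the paper's: observe that each sentence is a decidable, false claim, hence disprovable in $\Theory$ because $\Theory$ \representscomputations, and then invoke the second half of \Theorem{provind} on the \ec sequence of refutable sentences. The paper's appendix proof is just a terser statement of exactly this argument, so your added detail about efficient computability of the sentence sequence and the (possibly enormous but irrelevant) length of the refutation is a faithful elaboration rather than a different method.
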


For example, let $\seq{y}$ be an enumeration of all bitstrings, and let $\seq{q}$ be the constant sequence $(q, q, \ldots)$ where $q$ is a Turing machine that does not halt on any input. If $\Theory$ cannot prove this fact, then $\MP$ will never be able to attain certainty about claims that say~$q$ fails to halt, but by \Thm{dontwait}, it still learns to expect that~$q$ will run longer than any computable function you can name. 
In colloquial terms, while $\MP$ won't become certain that non-halting machines don't halt (which is impossible), it \emph{will} put them in the ``don't hold your breath'' category (along with some long-running machines that do halt, of course).

These theorems can be interpreted as justifying the intuitions that many computer scientists have long held towards the halting problem: It is impossible to tell whether or not a Turing machine halts in full generality, but for large classes of well-behaved computer programs (such as \ec sequences of halting programs and provably non-halting programs) it's quite possible to develop reasonable and accurate beliefs. The boundary between machines that compute fast-growing functions and machines that never halt is difficult to distinguish, but even in those cases, it's easy to learn to stop expecting those machines to halt within any reasonable amount of time. \mbp{See also the work of \citet{calude2008most} for other formal results backing up this intuition.} 

One possible objection here is that the crux of the halting problem (and of the $\Theory$-trust problem) are not about making good predictions, they are about handling diagonalization and paradoxes of self-reference. G\"odel's incompleteness theorem constructs a sentence that says ``there is no proof of this sentence from the axioms of $\PA$'', and Turing's proof of the undecidability of the halting problem constructs a machine which halts iff some other machine thinks it loops. $\MP$ learning to trust $\Theory$ is different altogether from $\MP$ learning to trust \emph{itself}. So let us turn to the topic of $\MP$'s beliefs about $\MP$.

\subsection{Introspection}\label{sec:introspection}

Because we're assuming $\Theory$ \representscomputations, we can write sentences describing the beliefs of $\MP$ at different times. What happens when we ask $\MP$ about sentences that refer to itself?

For instance, consider a sentence $\psi := \quot{\enc{\pt}_{\enc{\nn}}(\enc{\smash{\phi}}) > 0.7}$ for some specific $\nn$ and $\phi$, where $\MP$'s beliefs about $\psi$ should depend on what its beliefs about $\phi$ are on the $\nn$th day. Will $\MP$ figure this out and get the probabilities right on day $\nn$? For any particular $\phi$ and $\nn$ it's hard to say, because it depends on whether $\MP$ has learned how $\psi$ relates to $\MP$ and $\phi$ yet. If however we take an \ec \emph{sequence} of $\psis$ which all say ``$\phi$ will have probability greater than 0.7 on day $\nn$'' with $\nn$ varying, then we can guarantee that $\MP$ will learn the pattern, and start having accurate beliefs about its own beliefs:

\begin{restatable}[Introspection]{theorem}{restateref}\label{thm:ref}
  Let $\phis$ be an \ec sequence of sentences, and $\seq{a}$, $\seq{b}$ be \pgenable sequences of probabilities. Then, for any \ec sequence of positive rationals $\deltas \to 0$, there exists a sequence of positive rationals $\varepsilons \to 0$ such that for all $\nn$:
  \begin{enumerate}    
    \item if $\pt_\nn(\phi_\nn)\in(a_\nn+\delta_\nn,b_\nn-\delta_\nn)$, then
    \[
      \pt_\nn(\quot{\enc{a_\nn} < \enc{\pt}_\enc{\nn}(\enc{\phi_\nn}) < \enc{b_\nn}}) > 1-\varepsilon_\nn,
    \] 
    \item if $\pt_\nn(\phi_\nn)\notin(a_\nn-\delta_\nn,b_\nn+\delta_\nn)$, then
    \[
      \pt_\nn(\quot{\enc{a_\nn} < \enc{\pt}_\enc{\nn}(\enc{\phi_\nn}) < \enc{b_\nn}}) < \varepsilon_\nn.
    \] 
  \end{enumerate}
  \proofin{\ref{app:ref}}
\end{restatable}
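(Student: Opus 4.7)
The plan is to prove each clause by contradiction, constructing an efficient trader that would exploit $\MP$ if the inequality failed on infinitely many days. I focus on clause~(1); clause~(2) is symmetric. Write $\psi_\nn^{(a,b)} := \quot{\enc{a} < \enc{\pt}_\enc{\nn}(\enc{\phi_\nn}) < \enc{b}}$ for any fixed rational pair $(a, b) \in [0,1]^2$, so that $\psi_\nn = \psi_\nn^{(a_\nn, b_\nn)}$. The crucial observation is that, once $(a, b)$ is fixed, $\psi_\nn^{(a,b)}$ is $\Theory$-decidable: since $\MP$ is a computable market and $\Theory$ \representscomputations, $\Theory$ proves the specific rational value of $\pt_\nn(\phi_\nn)$ and proves whether that value lies in $(a,b)$. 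Hence $\DP$ eventually makes $\psi_\nn^{(a,b)}$ worth \$1 or \$0 in every plausible world. For contradiction, suppose some $\varepsilon > 0$ is violated on an infinite set $B_\varepsilon$ of bad days $\nn$ where $\pt_\nn(\phi_\nn) \in (a_\nn + \delta_\nn, b_\nn - \delta_\nn)$ yet $\pt_\nn(\psi_\nn) \leq 1 - \varepsilon$.

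I would construct an \ec\ trader whose day-$\nn$ strategy is a linear combination $\sum_{(a,b) \in Q} \exf_\nn^{(a,b)} \cdot \bigl(\psi_\nn^{(a,b)} - \pf{\psi_\nn^{(a,b)}}\bigr)$ over a fixed finite grid $Q \subset [0,1]^2 \cap \QQ^2$ of some constant resolution $\eta = \eta(\varepsilon) > 0$. Each coefficient $\exf_\nn^{(a,b)}$ is an expressible $[0,1]$-feature built from continuous threshold indicators $\ctsind{\cdot}$ applied to the \pgenable features $\gen{a_\nn}, \gen{b_\nn}$ (locating $(a_\nn, b_\nn)$ in the grid cell around $(a,b)$), the price feature $\pf{\phi_\nn}$ (checking that it sits safely inside $(a,b)$, so that $\psi_\nn^{(a,b)}$ is $\Theory$-true), and the price feature $\pf{\psi_\nn^{(a,b)}}$ (checking that it is at most $1-\varepsilon/2$). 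At most one grid cell activates per day, so per-day exposure is bounded by~1. On any activated bad day, the trader buys $\psi_\nn^{(a,b)}$ cheaply; since that sentence is $\Theory$-provable, it is eventually worth \$1 in every plausible world, netting at least $\varepsilon/2$ per activation. Infinitely many activations would contradict the \lic{}.

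The main obstacle is certifying that, for most $\nn \in B_\varepsilon$, some nearby grid sentence is indeed underpriced: the hypothesis gives us only $\pt_\nn(\psi_\nn) \leq 1-\varepsilon$, a statement about the market-dependent sentence $\psi_\nn$ that the trader cannot directly reference. To bridge this gap I would augment the strategy with an auxiliary arbitrage leg across neighboring grid sentences: when $\pt_\nn(\phi_\nn)$ lies safely inside the common intersection of two adjacent cells, the corresponding grid sentences are simultaneously $\Theory$-true, so applying \nameref{thm:affprovind} to the bounded combination sequence $\psi_\nn^{(a,b)} - \psi_\nn^{(a',b')}$ (gated by an expressible indicator of that joint truth) forces the market to price neighboring grid sentences within $o(1)$ of one another in a timely manner. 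Since $\psi_\nn$ is itself $\Theory$-equivalent to the grid sentence covering the cell containing $(a_\nn, b_\nn)$ on bad days, the underpricing of $\psi_\nn$ propagates to some grid sentence up to $o(1) + O(\eta)$ error. Choosing $\eta(\varepsilon)$ small enough delivers the $\varepsilon/2$ slack the trader needs, establishing that only finitely many $\nn$ are bad for each fixed $\varepsilon$; a standard diagonalization then produces the required $\varepsilons \to 0$. Clause~(2) follows symmetrically, with the trader shorting $\psi_\nn^{(a,b)}$ on days when the activated grid sentence is $\Theory$-false and hence eventually worth \$0.
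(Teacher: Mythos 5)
There is a genuine gap, and it originates in your stated ``main obstacle,'' which is not actually an obstacle. The sentence $\psi_\nn := \quot{\enc{a_\nn} < \enc{\pt}_\enc{\nn}(\enc{\phi_\nn}) < \enc{b_\nn}}$ does not contain the evaluated prices as literal numerals: by the paper's conventions for representing computations, $\enc{\pt}_\enc{\nn}(\enc{\phi_\nn})$, $\enc{a_\nn}$, and $\enc{b_\nn}$ stand for formulas describing the relevant computations (market source code included), so $\psis$ is an \ec sequence of sentences, and a trader---or an affine combination with market-dependent coefficients---can reference $\psi_\nn$ directly. The paper's proof does exactly this in a few lines: it applies Affine Provability Induction (Theorem~\ref{thm:affprovind}) to the bounded \pgenable combination $\ctsind{\delta_\nn}(a_\nn < \pt_\nn(\phi_\nn) < b_\nn)\cdot(1-\psi_\nn)$, observing that in every $\World\in\cworlds(\Theory)$ one of the two factors is $0$ (the indicator has no false positives, and whenever it is positive, $\Theory\vdash\psi_\nn$), hence $\ctsind{\delta_\nn}(a_\nn < \pt_\nn(\phi_\nn) < b_\nn)\cdot(1-\pt_\nn(\psi_\nn))\eqsim_\nn 0$; it then defines $\varepsilon_\nn$ directly from this quantity (and symmetrically for clause~2), with no grid, no explicit trader, and no per-$\varepsilon$ contradiction followed by diagonalization. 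Your gating idea (price safely inside the interval forces provability of the gated sentence) is the right core mechanism, but it can be applied to $\psi_\nn$ itself.

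Even granting your restrictive reading under which $\psi_\nn$ cannot be written down efficiently, the bridge you propose does not close the gap it creates. The contradiction hypothesis concerns $\pt_\nn(\psi_\nn)\le 1-\varepsilon$, but your trader's profits depend only on the prices of the grid sentences $\psi_\nn^{(a,b)}$, and nothing ties those prices to $\pt_\nn(\psi_\nn)$ at time $\nn$. That $\psi_\nn$ is $\Theory$-equivalent to the activated grid sentence on a bad day does not make their day-$\nn$ prices close: provably equivalent sentences may be priced very differently at finite times, and forcing them to be close is itself a logical-induction statement whose proof would require placing $\psi_\nn$ into a tradable combination---exactly what the grid was introduced to avoid. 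Your auxiliary arbitrage leg only relates grid sentences to one another, so the assumed underpricing of $\psi_\nn$ never translates into trader profits and the \lic{} is never contradicted. (Separately, even the grid trader as described needs budgeting: buying a not-yet-confirmed provable sentence on each bad day at price up to $1-\varepsilon/2$ leaves the plausible net worth unbounded below unless purchases are capped or sequenced, as in the paper's return-on-investment machinery.)
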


\noindent In other words, for any pattern in $\MP$'s beliefs that can be efficiently written down (such as ``$\MP$'s probabilities on $\phis$ are between $a$ and $b$ on these days''), $\MP$ learns to believe the pattern if it's true, and to disbelieve it if it's false (with vanishing error).

At a first glance, this sort of self-reflection may seem to make \li{}s vulnerable to paradox. For example, consider the sequence of sentences
\[
  \seq{\chi^{0.5}} := (\quot{{\enc{\pt}_{\enc{\nn}}}(\enc{\chi^{0.5}_\nn}) < 0.5})_{\nn\in {\NN^+}}
\]
such that $\chi^{0.5}_\nn$ is true iff $\MP$ assigns it a probability less than 50\% on day $\nn$. Such a sequence can be defined by G\"odel's diagonal lemma. These sentences are probabilistic versions of the classic ``liar sentence'', which has caused quite a ruckus in the setting of formal logic \citep{grim1991incomplete,mcgee1990truth,glanzberg2001liar,gupta1993revision,eklund2002inconsistent}. Because our setting is probabilistic, it's perhaps most closely related to the ``unexpected hanging'' paradox---$\chi^{0.5}_\nn$ is true iff $\MP$ thinks it is unlikely on day $\nn$. How do logical inductors handle this sort of paradox?

\begin{restatable}[Paradox Resistance]{theorem}{restatelp}\label{thm:lp}
  Fix a rational $\prob\in(0,1)$, and define an \ec sequence of ``paradoxical sentences'' $\seq{\chi^\prob}$ satisfying
  \[
    \Theory \vdash{{\enc{\chi^\prob_\nn}} \iff \left(
      {\enc{\pt}_{\enc{\nn}}}({\enc{\chi^\prob_\nn}}) < \enc{\prob}
    \right)}
  \]
  for all $\nn$. Then
  \[
    \lim_{\nn\to\infty}\pt_\nn(\chi^\prob_\nn)=\prob.
  \]
  \proofin{\ref{app:lp}}
\end{restatable}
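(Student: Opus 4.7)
The plan is to reduce the theorem to two applications of Recurring Calibration (\Thm{simcal}). Abbreviate $P_\nn := \pt_\nn(\chi^\prob_\nn)$; this is a computable rational in $[0,1]$ since $\MP$ is a computable belief sequence. Because $\Theory$ represents computations, $\Theory$ can prove the closed-form identity asserting that $\pt_\nn(\chi^\prob_\nn)$ equals the specific rational $P_\nn$, and combining this with the defining biconditional $\Theory \vdash \chi^\prob_\nn \iff (\pt_\nn(\chi^\prob_\nn) < \prob)$ we conclude that $\Theory \vdash \chi^\prob_\nn$ when $P_\nn < \prob$ and $\Theory \vdash \lnot \chi^\prob_\nn$ when $P_\nn \ge \prob$. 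In particular $\thmind(\chi^\prob_\nn) = 1$ iff $P_\nn < \prob$, and $\seq{\chi^\prob}$ is an \ec sequence of $\Theory$-decidable sentences, eligible for \Thm{simcal}.

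To obtain $\liminf_\nn P_\nn \ge \prob$, I argue by contradiction. Fix $\varepsilon \in (0,\prob)$ and suppose $P_\nn < \prob - \varepsilon$ for infinitely many $\nn$. Apply \Thm{simcal} to $\seq{\chi^\prob}$ with constants $a := -1$, $b := \prob - \varepsilon/2$, and precision $\delta_i \equiv \varepsilon/4$. The indicator $\ctsind{\varepsilon/4}(-1 < P_i < \prob - \varepsilon/2)$ equals $1$ whenever $P_i < \prob - 3\varepsilon/4$, so the divergence hypothesis of \Thm{simcal} holds. The crucial observation is that whenever this indicator is strictly positive we have $P_i < \prob - \varepsilon/2 < \prob$, which by the setup forces $\thmind(\chi^\prob_i) = 1$. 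Hence every term of the numerator of the Recurring Calibration ratio equals the corresponding term of the denominator, so once the denominator is positive the ratio is identically~$1$. But \Thm{simcal} concludes that any limit point of the ratio lies in $[a,b] = [-1, \prob - \varepsilon/2]$, and $1 \notin [-1, \prob - \varepsilon/2]$ because $\prob < 1$ --- contradiction. Since $\varepsilon$ was arbitrary, $\liminf_\nn P_\nn \ge \prob$.

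The bound $\limsup_\nn P_\nn \le \prob$ is fully symmetric: supposing $P_\nn > \prob + \varepsilon$ infinitely often, apply \Thm{simcal} with $a := \prob + \varepsilon/2$, $b := 2$, $\delta_i \equiv \varepsilon/4$. A positive value of $\ctsind{\varepsilon/4}(\prob + \varepsilon/2 < P_i < 2)$ forces $P_i > \prob$ and hence $\thmind(\chi^\prob_i) = 0$, so the ratio is identically $0$ once its denominator is positive, while \Thm{simcal} demands a limit point in $[\prob + \varepsilon/2, 2]$; the contradiction is $0 < \prob$. Combining both bounds yields $\lim_\nn \pt_\nn(\chi^\prob_\nn) = \prob$.

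The step that needs the most care is the decidability setup: one must verify both that $\MP$ being a computable belief sequence really yields $P_\nn$ as a concretely computable rational, and that the hypothesis that $\Theory$ represents computations suffices for $\Theory$ to prove the closed-form identity needed to fire the defining biconditional. Once this plumbing is in place, the rest of the proof is just a short self-cancelling-ratio trick inside \Thm{simcal}, requiring no fresh trader construction.
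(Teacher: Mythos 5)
Your proof is correct, but it takes a different route from the paper's. The paper proves \Thm{lp} in two lines from Affine Provability Induction (\Thm{affprovind}): since $\ctsind{1/\nn}(\pt_\nn(\chi^\prob_\nn) < \prob)\cdot(1-\World(\chi^\prob_\nn)) = 0$ and $\ctsind{1/\nn}(\pt_\nn(\chi^\prob_\nn) > \prob)\cdot\World(\chi^\prob_\nn) = 0$ for every consistent world $\World$ (one factor always vanishes, by exactly the decidability plumbing you describe), the market must learn to make these products small along the diagonal, which directly forbids $\pt_\nn(\chi^\prob_\nn)$ from sitting below $\prob-\varepsilon$ or above $\prob+\varepsilon$ infinitely often. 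You instead route through Recurring Calibration (\Thm{simcal}), using the self-cancelling-ratio trick: on the fuzzy subsequence where the price is below $\prob-\varepsilon/2$ the frequency of truth is identically $1$, which \Thm{simcal} forbids, and symmetrically above. Your application checks out — the weighting with constant $\delta_i=\varepsilon/4$ and rational endpoints $a=-1$, $b=\prob-\varepsilon/2$ (resp.\ $a=\prob+\varepsilon/2$, $b=2$) is legitimate, the divergence hypothesis holds under the contradiction assumption, positivity of the indicator really does pin down $\thmind(\chi^\prob_i)$, and there is no circularity since \Thm{simcal} is established independently of the introspection results. The decidability step you flag is sound and is the same fact the paper's proof uses implicitly (representability of the computable rational $\pt_\nn(\chi^\prob_\nn)$ forces every consistent world's value of $\chi^\prob_\nn$). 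Comparatively, the paper's argument is shorter and needs only the coherence-type machinery, showcasing how affine combinations with market-dependent coefficients handle self-reference directly; your argument leans on the heavier calibration/unbiasedness machinery (itself built on the ROI lemma) but makes explicit the ``frequency of truth on the subsequence where the price is low'' intuition that the paper only mentions informally after the theorem statement.
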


\noindent A logical inductor responds to paradoxical sentences $\seq{\chi^\prob}$ by assigning probabilities that converge on $\prob$. For example, if the sentences say ``$\MP$ will assign me a probability less than 80\% on day $\nn$'', then $\pt_\nn$ (once it has learned the pattern) starts assigning probabilities extremely close to 80\%---so close that traders can't tell if it's slightly above or slightly below. By \Theorem{recurringunbiasedness}, the frequency of truth in $\chi^\prob_{\leq \nn}$ will have a limit point at 0.8 as $n \to \infty$, and by the definition of logical induction, there will be no efficiently expressible method for identifying a bias in the price.

Let us spend a bit of time understanding this result. After day $\nn$, $\chi_\nn^{0.8}$ is ``easy'' to get right, at least for someone with enough computing power to compute $\pt_\nn(\chi_\nn^{0.8})$ to the necessary precision (it will wind up \emph{very} close to 0.8 for large $\nn$). Before day $\nn$, we can interpret the probability of $\chi_\nn^{0.8}$ as the price of a share that's going to pay out \$1 if the price on day $\nn$ is less than 80\textcent{}, and \$0 otherwise. What's the value of this share? Insofar as the price on day $\nn$ is going to be low, the value is high; insofar as the price is going to be high, the value is low. So what actually happens on the $\nn$th day? Smart traders buy $\chi_\nn^{0.8}$ if its price is lower than 80\textcent{}, and sell it if its price is higher than 80\textcent{}. By the continuity constraints on the traders, each one has a price at which they stop buying $\chi_\nn^{0.8}$, and \Theorem{lp} tells us that the stable price exists extremely close to 80\textcent{}. Intuitively, it must be so close that traders can't tell which way it's going to go, biased on the low side, so that it looks 80\% likely to be below and 20\% likely to be above to any efficient inspection. For if the probability seemed more than 80\% likely to be below, traders would buy; and if it seemed anymore than 20\% likely to be above, traders would sell.

To visualize this, imagine that your friend owns a high-precision brain-scanner and can read off your beliefs. Imagine they ask you what probability you assign to the claim ``you will assign probability $<$80\% to this claim at precisely 10am tomorrow''. As 10am approaches, what happens to your belief in this claim? If you become extremely confident that it's going to be true, then your confidence should drop. But if you become fairly confident it's going to be false, then your confidence should spike. Thus, your probabilities should oscillate, pushing your belief so close to 80\% that you're not quite sure which way the brain scanner will actually call it. In response to a paradoxical claim, this is exactly how $\MP$ behaves, once it's learned how the paradoxical sentences work.

Thus, logical inductors have reasonable beliefs about their own beliefs even in the face of paradox. We can further show that logical inductors have ``introspective access'' to their own beliefs and expectations, via the medium of logically uncertain variables:

\begin{restatable}[Expectations of Probabilities]{theorem}{restateepr}\label{thm:epr}
  Let $\phis$ be an efficiently computable sequence of sentences. Then
  \[
    \pt_\nn(\phi_\nn)\eqsim_\nn\EE_\nn(\quot{\enc{\pt}_\enc{\nn}(\enc{\phi_\nn})}).
  \]
  \proofin{\ref{app:epr}}
\end{restatable}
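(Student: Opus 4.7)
The plan is to reduce \Theorem{epr} directly to Expectation Provability Induction (\Theorem{expprovind}) via a suitable LUV-combination. Set $X_\nn := \quot{\enc{\pt}_\enc{\nn}(\enc{\phi_\nn})}$ and $p_\nn := \pt_\nn(\phi_\nn)$. Because $\Theory$ \representscomputations{} and $\pt$ is a computable rational-valued function of $(\nn,\phi)$, representability yields $\Theory \vdash \quot{\enc{X_\nn} = \enc{p_\nn}}$ for every $\nn$. Consequently $X_\nn$ is a $[0,1]$-LUV whose value is pinned down to the specific rational $p_\nn$ in every consistent world $\World \in \cworlds(\Theory)$.

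Next, form the $\RR$-LUV-combination $\affluv_\nn := X_\nn - \pf{\phi_\nn}$, whose trailing coefficient is the rank-$\nn$ expressible feature $-\pf{\phi_\nn}$. The sequence $\seq\affluv$ is \pgenable (since $\phis$ is \ec and price features lie in $\exfeatures$) and bounded in $\ell_1$-norm by $2$ (since each summand lies in $[0,1]$), so $\seq\affluv \in \BLCS$. Moreover, $\seq\affluv$ is determined via $\Theory$: evaluating against the market $\MP$, in any $\World \in \cworlds(\Theory)$ we get $\World(\affluv_\nn) = \World(X_\nn) - \pf{\phi_\nn}(\MP) = p_\nn - p_\nn = 0$, so $\thmval(\affluv_\nn) = 0$.

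Applying \Theorem{expprovind} to $\seq\affluv$ with $b = 0$ in both the $\ge$ and $\le$ directions yields $\EE_\nn(\affluv_\nn) \eqsim_\nn 0$. The expectation of a LUV-combination is extended from the base definition of $\EE_\nn$ linearly in its coefficients, so $\EE_\nn(\affluv_\nn) = \EE_\nn(X_\nn) - \pt_\nn(\phi_\nn)$. Rearranging gives the desired conclusion $\EE_\nn(X_\nn) \eqsim_\nn \pt_\nn(\phi_\nn)$.

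The main subtlety I anticipate is justifying that $X_\nn$ really takes the value $p_\nn$ in every $\World \in \cworlds(\Theory)$, not merely in the standard model. Unpacking \Def{luv}, the value of a $[0,1]$-LUV in a world is the supremum of rationals $x$ for which $\World(\quot{\enc{X_\nn} \ge \enc{x}}) = 1$; since $\Theory$ outright proves the equation $X_\nn = p_\nn$, every propositionally consistent extension of $\Theory$ must assent to $\quot{\enc{X_\nn} \ge \enc{p_\nn}}$ and to $\quot{\enc{X_\nn} \le \enc{p_\nn}}$, pinning $\World(X_\nn)$ at $p_\nn$ even in the presence of nonstandard elements. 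The remaining verifications (\pgenability, $\ell_1$-boundedness of $\seq\affluv$, and the linear extension of $\EE_\nn$ to LUV-combinations) are routine bookkeeping against the definitions of \Sec{expectations}.
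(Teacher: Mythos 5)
Your proposal is correct and follows essentially the same route as the paper: the paper's proof also applies Expectation Provability Induction to the LUV-combination $\pt_\nn(\phi_\nn) - \quot{\enc{\pt}_\enc{\nn}(\enc{\phi_\nn})}$ (the negation of your $\affluv_\nn$), observing that it vanishes in every $\World \in \cworlds(\Theory)$. Your additional bookkeeping (representability pinning the LUV's value at $p_\nn$ even in nonstandard worlds, \pgenability and boundedness of the combination) just makes explicit what the paper leaves implicit.
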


\begin{restatable}[Iterated Expectations]{theorem}{restateer}\label{thm:er}
 Suppose $\seq{X}$ is an efficiently computable sequence of LUVs. Then
  \[
    \EE_\nn(X_\nn)\eqsim_\nn\EE_\nn(\quot{\enc{\EE}_\enc{\nn}(\enc{X_\nn})}).
  \]
  \proofin{\ref{app:er}}
\end{restatable}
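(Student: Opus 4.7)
The plan is to reduce Iterated Expectations to Expectations of Probabilities (Theorem \ref{thm:epr}) by unfolding both sides through the definition of $\EE_\nn$ and then applying the LUV-combination machinery. By definition,
\[
\EE_\nn(X_\nn) \;=\; \tfrac{1}{\nn}\sum_{i=0}^{\nn-1}\pt_\nn(\quot{\enc{X_\nn}>\enc{i/\nn}}),\qquad \EE_\nn(\quot{\enc{\EE}_\enc{\nn}(\enc{X_\nn})}) \;=\; \tfrac{1}{\nn}\sum_{i=0}^{\nn-1}\pt_\nn(\quot{\enc{\EE}_\enc{\nn}(\enc{X_\nn})>\enc{i/\nn}}),
\]
so the theorem reduces to showing that the two right-hand sums agree up to $o_\nn(1)$.

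For the first reduction, note that because $\Theory$ \representscomputations{} and $\pt$ is computable, $\Theory$ proves the defining identity
\[
\quot{\enc{\EE}_\enc{\nn}(\enc{X_\nn}) = \tfrac{1}{\enc{\nn}}\textstyle\sum_{i=0}^{\enc{\nn}-1}\enc{\pt}_\enc{\nn}(\quot{\enc{X_\nn}>\enc{i/\nn}})},
\]
in which each occurrence of $\enc{\pt}_\enc{\nn}(\quot{\cdot})$ is a closed term naming a specific rational. Consequently the LUV-combination
\[
\affluv_\nn \;:=\; \quot{\enc{\EE}_\enc{\nn}(\enc{X_\nn})} \;-\; \tfrac{1}{\nn}\sum_{i=0}^{\nn-1}\quot{\enc{\pt}_\enc{\nn}(\quot{\enc{X_\nn}>\enc{i/\nn}})}
\]
lies in $\BLCS$ (its coefficient $\ell_1$-norm is bounded by $2$) and is determined via $\Theory$ with $\thmval(\affluv_\nn)=0$. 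Expectation Provability Induction (Theorem \ref{thm:expprovind}) then gives $\EE_\nn(\affluv_\nn)\eqsim_\nn 0$, which by linearity yields $\EE_\nn(\quot{\enc{\EE}_\enc{\nn}(\enc{X_\nn})}) \eqsim_\nn \tfrac{1}{\nn}\sum_i\EE_\nn(\quot{\enc{\pt}_\enc{\nn}(\quot{\enc{X_\nn}>\enc{i/\nn}})})$.

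The remaining step is to show this latter average is $\eqsim_\nn \tfrac{1}{\nn}\sum_i\pt_\nn(\quot{X_\nn>i/\nn}) = \EE_\nn(X_\nn)$. Pointwise in $i$ this is Expectations of Probabilities (Theorem \ref{thm:epr}), and the main obstacle is uniformity: Theorem \ref{thm:epr} is stated for a single $\nn$-indexed sequence, so naively summing $\nn$ of its instances does not preserve $\eqsim_\nn$. I expect to overcome this by mimicking the trader in the proof of Theorem \ref{thm:epr} at the level of bundles, constructing a single efficient trader that on day $\nn$ watches the signed discrepancy
\[
\tfrac{1}{\nn}\sum_i\bigl(\pt_\nn(\quot{X_\nn>i/\nn}) - \EE_\nn(\quot{\enc{\pt}_\enc{\nn}(\quot{\enc{X_\nn}>\enc{i/\nn}})})\bigr)
\]
and, whenever it exceeds $\varepsilon$, places offsetting $1/\nn$-weighted positions in $\quot{X_\nn>i/\nn}$ shares against the indicator LUVs $\OneOperator(\quot{X_\nn>i/\nn})$ (invoking Theorem \ref{thm:ei} to trade expectations of indicators for probabilities). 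The bundle's total $\ell_1$-weight is bounded by a constant, so continuity and polynomial-time computability are preserved; any persistent $\varepsilon$-discrepancy on the main diagonal would then exploit $\MP$ and contradict the \lic. Chaining the two $\eqsim_\nn$ identities delivers $\EE_\nn(\quot{\enc{\EE}_\enc{\nn}(\enc{X_\nn})})\eqsim_\nn\EE_\nn(X_\nn)$.
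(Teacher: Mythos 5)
Your first reduction is sound: the combination $\quot{\enc{\EE}_\enc{\nn}(\enc{X_\nn})}-\tfrac{1}{\nn}\sum_{i<\nn}\quot{\enc{\pt}_\enc{\nn}(\quot{\enc{X_\nn}>\enc{i}/\enc{\nn}})}$ is bounded, \pgenable, and has value $0$ in every $\World\in\cworlds(\Theory)$ (since $\Theory$ \representscomputations, each consistent world assigns every one of these LUVs its actual computed rational value), so \Theorem{expprovind} applies. The gap is in your second step. The discrepancy you must kill compares the \emph{number} $\pt_\nn(\quot{\enc{X_\nn}>\enc{i}/\enc{\nn}})$ with $\EE_\nn$ of the LUV $\quot{\enc{\pt}_\enc{\nn}(\quot{\enc{X_\nn}>\enc{i}/\enc{\nn}})}$, whose value in every consistent world is that same number. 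The trader you sketch instead takes positions in $\quot{\enc{X_\nn}>\enc{i}/\enc{\nn}}$-shares against the indicator LUVs $\OneOperator(\quot{\enc{X_\nn}>\enc{i}/\enc{\nn}})$; that hedge targets $\pt_\nn(\quot{\enc{X_\nn}>\enc{i}/\enc{\nn}})-\EE_\nn(\OneOperator(\quot{\enc{X_\nn}>\enc{i}/\enc{\nn}}))$, which is the content of \Theorem{ei}, not the quantity at hand, and it leaves uncancelled exposure to the undetermined truth values of the sentences $\quot{\enc{X_\nn}>\enc{i}/\enc{\nn}}$, which the needed discrepancy does not involve at all. (Moreover \Theorem{ei} is a fact about the market, not a gadget a trader can invoke, and appealing to it per $i$ would reintroduce exactly the uniformity problem you identified.)

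The fix is that no bespoke trader is needed: the same device as your first step closes the second. Apply \Theorem{expprovind} to the LUV-combination whose constant term is the expressible feature computing $\tfrac{1}{\nn}\sum_{i<\nn}\pt_\nn(\quot{\enc{X_\nn}>\enc{i}/\enc{\nn}})$ from the day-$\nn$ prices and whose LUV part is $-\tfrac{1}{\nn}\sum_{i<\nn}\quot{\enc{\pt}_\enc{\nn}(\quot{\enc{X_\nn}>\enc{i}/\enc{\nn}})}$; it is bounded, \pgenable, and evaluates to $0$ in every consistent world, so its $\EE_\nn$ tends to $0$, which is exactly your step (b). In fact the paper collapses the whole argument into a single such application: since $\World\bigl(\EE_\nn(X_\nn)-\quot{\enc{\EE}_\enc{\nn}(\enc{X_\nn})}\bigr)=0$ for every $\World\in\cworlds(\Theory)$, where $\EE_\nn(X_\nn)$ enters as a market-dependent constant term (an expressible feature), \Theorem{expprovind} immediately yields $\EE_\nn(X_\nn)\eqsim_\nn\EE_\nn(\quot{\enc{\EE}_\enc{\nn}(\enc{X_\nn})})$, with no unfolding of $\EE_\nn$ and no trader construction.
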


Next, we turn our attention to the question of what a \li{} believes about its \emph{future} beliefs.

\subsection{Self-Trust}\label{sec:selftrust}

The coherence conditions of classical probability theory guarantee that a probabilistic reasoner trusts their future beliefs, whenever their beliefs change in response to new empirical observations. For example, if a reasoner $\Bayesian(\any)$ knows that tomorrow they'll see some evidence $e$ that will convince them that Miss Scarlet was the murderer, then they already believe that she was the murderer today:
\[
  \Bayesian(\mathrm{Scarlet}) = \Bayesian(\mathrm{Scarlet}\mid e) \Bayesian(e) + \Bayesian(\mathrm{Scarlet}\mid \lnot e) \Bayesian(\lnot e).
\]
In colloquial terms, this says ``my current beliefs are \emph{already} a mixture of my expected future beliefs, weighted by the probability of the evidence that I expect to see.''

Logical inductors obey similar coherence conditions with respect to their future beliefs, with the difference being that a logical inductor updates its belief by gaining more knowledge about \emph{logical} facts, both by observing an ongoing process of deduction and by thinking for longer periods of time. Thus, the self-trust properties of a logical inductor follow a slightly different pattern:

\begin{restatable}[Expected Future Expectations]{theorem}{restatecee}\label{thm:cee}
  Let $\deff$ be a deferral function (as per \Def{deferralfunc}), and let $\seq{X}$ denote an \ec sequence of $[0,1]$-LUVs. Then
  \[
    \EE_\nn(X_\nn) \eqsim_\nn
    \EE_\nn(\quot{\enc{\EE}_{\enc{\deff}(\enc{\nn})}(\enc{X_\nn})}).
  \]
  \proofin{\ref{app:cee}}
\end{restatable}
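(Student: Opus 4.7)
Let $Y_\nn := \quot{\enc{\EE}_{\enc{\deff}(\enc{\nn})}(\enc{X_\nn})}$.  Then $\seq Y$ is an \ec sequence of $[0,1]$-LUVs, and since $\Theory$ \representscomputations, in every world $\World\in\cworlds(\Theory)$ the value $\World(Y_\nn)$ equals the rational number $\EE_{\deff(\nn)}(X_\nn)$ literally computed from $\pt_{\deff(\nn)}$ by the formula inside the quotes.  The theorem thus amounts to $\EE_\nn(X_\nn) \eqsim_\nn \EE_\nn(Y_\nn)$: $\MP$'s current expectation of $X_\nn$ matches its current expectation of its own day-$\deff(\nn)$ expectation of $X_\nn$, a conservation-of-expected-expectation property.

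The plan is a trader argument in the same spirit as the proof of \Thm{er}.  Suppose for contradiction that $\EE_\nn(Y_\nn) - \EE_\nn(X_\nn)>\varepsilon$ on infinitely many days $\nn$ (the opposite sign is symmetric).  Consider the trader $\Trader$ that performs a scaled ``round-trip'' each time the condition fires.  The opening leg on day $\nn$ buys $1/\nn$ shares of each $\quot{\enc{X_\nn}>\enc{i}/\enc{\nn}}$ and sells $1/\nn$ shares of each $\quot{\enc{Y_\nn}>\enc{i}/\enc{\nn}}$, for $i=0,\ldots,\nn-1$, netting cash $\approx\EE_\nn(Y_\nn)-\EE_\nn(X_\nn)\ge\varepsilon$ with open LUV-combination position $X_\nn-Y_\nn$.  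The closing leg on day $\deff(\nn)$ reverses this trade at the prevailing prices, zeroing the position for cash $\approx\EE_{\deff(\nn)}(X_\nn)-\EE_{\deff(\nn)}(Y_\nn)$.  By applying \Thm{er} to the \ec reindexed LUV-sequence $\seq Z$ defined by $Z_\mm:=X_\nn$ when $\mm=\deff(\nn)$ (and $Z_\mm:=0$ otherwise, which is \ec because $\deff(\nn)>\nn$ lets us invert $\deff$ by polynomial-time search), we obtain $\EE_{\deff(\nn)}(X_\nn)\eqsim_\nn\EE_{\deff(\nn)}(Y_\nn)$, so the closing cash vanishes as $\nn\to\infty$ and each completed round-trip nets $\approx\varepsilon$ in cash with zero residual position.

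The last step is to turn this into exploitation by ensuring the round-trips fire infinitely often while avoiding an unbounded pile of simultaneously open trades.  This is achieved by replacing the naive indicator of the firing condition with a continuous, \pgenable sequential trigger $\exf_\nn$ which equals a $\ctsind{\delta}$-smoothed version of the condition when no previously triggered round-trip is still open on day $\nn$, and is zero otherwise; the ``still open'' indicator can be written as a continuous product of the complements of prior smoothed triggers $\exf_{\nn'}$ whose deadlines $\deff(\nn')$ have not yet elapsed, which is a \pgenable, polynomial-time feature because the set of such $\nn'$ is finite and bounded by $\nn$.  Under this sequential gating at most one round-trip is ever in flight, so the trader's plausible net worth on any day is at least $-1$ plus (number of completed round-trips) $\cdot\,\varepsilon/2$; since the condition fires cleanly on an infinite, sufficiently spaced-out subsequence (each $\deff$-deadline is finite, so some later trigger always survives the gating), this quantity is bounded below but unbounded above, contradicting the \lic{}.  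The main technical obstacle is verifying that the sequential-gating feature is genuinely continuous, \ec, and does not swallow the $\varepsilon$ signal through its smoothing, together with the routine bookkeeping for the $\Oo(1/\nn)$ discretization error in the indicator-sum approximation of $\EE_\nn$---both mirroring the analogous constructions in the proofs of \Thm{ec} and \Thm{er}.
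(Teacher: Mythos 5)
Your first half coincides with the paper's own proof: reindexing via $Z_\mm := X_\nn$ when $\mm=\deff(\nn)$ (else a trivial LUV) and applying \Thm{er} is exactly how the paper starts, and your observation that $\World(Y_\nn)$ equals the computed value $\EE_{\deff(\nn)}(X_\nn)$ in every consistent world is the fact the paper uses (via \Thm{expprovind}) to pass from the numeral encoding $\enc{\deff(\nn)}$ delivered by \Thm{er} to the program encoding $\enc{\deff}(\enc{\nn})$ appearing in the theorem; you should make that encoding switch an explicit step. Where you diverge is the second half: the paper simply applies \Thm{exppolymax} (Expectation Preemptive Learning) to pull the day-$\deff(\nn)$ identity back to the diagonal, whereas you hand-roll a gated round-trip trader. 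Your trader is essentially a from-scratch proof of the particular instance of preemptive learning you need, so it buys self-containedness at the price of redoing machinery the paper already has; the paper's route is a three-line chain of citations.

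The one place your sketch is too optimistic is the closing leg. Because you must close at the same precision $\nn$ so that the share positions cancel exactly, the closing cash is $\EE_\nn^{\pt_{\deff(\nn)}}(X_\nn)-\EE_\nn^{\pt_{\deff(\nn)}}(Y_\nn)$, and relating this to $\EE_{\deff(\nn)}(X_\nn)-\EE_{\deff(\nn)}(Y_\nn)$ is \emph{not} an $\Oo(1/\nn)$ discretization fact: the $b/\nn$ bound on the mismatch between precision-$\nn$ and precision-$\mm$ approximate expectations (\Lem{conluvapprox}) holds only in worlds of $\cworlds(\Theory)$, and $\pt_{\deff(\nn)}$ is not such a world at any finite time. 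What you actually need is that $\sup_{\mm\ge\nn}|\EE_\nn^{\pt_\mm}(\affluv_\nn)-\EE_\mm(\affluv_\nn)|\to 0$, which is the Mesh Independence Lemma (\Lem{mesh}) --- a substantive result proven by its own trader argument (a softmax over indices combined with Affine Provability Induction). Once you cite \Lem{mesh} your construction closes, since the gating and budgeting details are of the same kind as the paper's Section 6 constructions and do work out; but at that point you have invoked the two nontrivial inputs (\Lem{mesh} and, in effect, the content of \Thm{exppolymax}) from which the paper's shorter proof follows directly.
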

\noindent Roughly speaking, \Thm{cee} says that a logical inductor's current expectation of $X$ on day $\nn$ is \emph{already} equal to its expected value of $X$ in $\deff(\nn)$ days. In particular, it learns in a timely manner to set its current expectations equal to its future expectations on any LUV\@. In colloquial terms, once a logical inductor has figured out how expectations work, it will never say ``I currently believe that the $\seq{X}$ variables have low values, but tomorrow I'm going to learn that they have high values''. Logical inductors already expect today what they expect to expect tomorrow.

It follows immediately from theorems~\ref{thm:cee} (\nameref{thm:cee}) and~\ref{thm:ei} (\nameref{thm:ei}) that the current beliefs of a logical inductor are set, in a timely manner, to equal their future expected beliefs.
\begin{restatable}[No Expected Net Update]{theorem}{restateceu}\label{thm:ceu}
  Let $\deff$ be a deferral function, and let $\phis$ be an \ec sequence of sentences. Then
  \[
      \pt_\nn(\phi_\nn) \eqsim_\nn 
      \EE_\nn(\quot{\enc{\pt}_{\enc{\deff}(\enc{\nn})}(\enc{\phi_\nn})}).
  \]
  \proofin{\ref{app:ceu}}
\end{restatable}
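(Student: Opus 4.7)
The theorem is presented as an immediate corollary of Theorem~\ref{thm:ei} (Expectations of Indicators) and Theorem~\ref{thm:cee} (Expected Future Expectations), and that is indeed the approach I would take. First I would apply Theorem~\ref{thm:ei} to the \ec sequence $\phis$ to rewrite $\pt_\nn(\phi_\nn) \eqsim_\nn \EE_\nn(\OneOperator(\phi_\nn))$, and then invoke Theorem~\ref{thm:cee} with the \ec LUV sequence $(\OneOperator(\phi_\nn))_{\nn \in \NN^+}$ to obtain $\EE_\nn(\OneOperator(\phi_\nn)) \eqsim_\nn \EE_\nn(\quot{\enc{\EE}_{\enc{\deff}(\enc{\nn})}(\enc{\OneOperator(\phi_\nn)})})$. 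Chaining these yields
\[
  \pt_\nn(\phi_\nn) \eqsim_\nn \EE_\nn\mleft(\quot{\enc{\EE}_{\enc{\deff}(\enc{\nn})}(\enc{\OneOperator(\phi_\nn)})}\mright).
\]

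All that remains is to swap the inner $\quot{\enc{\EE}_{\enc{\deff}(\enc{\nn})}(\enc{\OneOperator(\phi_\nn)})}$ for $\quot{\enc{\pt}_{\enc{\deff}(\enc{\nn})}(\enc{\phi_\nn})}$ inside the outer $\EE_\nn$. For this I would apply Theorem~\ref{thm:ei} a second time, but along the $\deff$-shifted diagonal: define an \ec sequence $\seq\psi$ by $\psi_{\deff(\nn)} := \phi_\nn$ (taking the smallest such $\nn$ if $\deff$ is non-injective) and $\psi_\mm$ equal to some fixed provable sentence whenever $\mm$ lies outside the image of $\deff$. The map $\mm \mapsto \psi_\mm$ is \ec because membership of $\mm$ in the image of $\deff$ is decidable in time polynomial in $\mm$, using that $\deff(\nn)$ is computable in time polynomial in $\deff(\nn)$. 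Applying Theorem~\ref{thm:ei} to $\seq\psi$ then yields $\EE_{\deff(\nn)}(\OneOperator(\phi_\nn)) \eqsim_\nn \pt_{\deff(\nn)}(\phi_\nn)$ as rational numbers, so the $\RR$-LUV-combination sequence
\[
  D_\nn := \quot{\enc{\EE}_{\enc{\deff}(\enc{\nn})}(\enc{\OneOperator(\phi_\nn)})} - \quot{\enc{\pt}_{\enc{\deff}(\enc{\nn})}(\enc{\phi_\nn})}
\]
lies in $\BLCS$, is determined via $\Theory$ (using that $\Theory$ \representscomputations{} and $\MP$ is a computable belief sequence, so both inner rationals are $\Theory$-provable values), and satisfies $\thmval(D_\nn) \to 0$. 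Expectation Coherence (Theorem~\ref{thm:expcoh}) then gives $\EE_\nn(D_\nn) \eqsim_\nn 0$, and Linearity of Expectation (Theorem~\ref{thm:loe}) unpacks this into the desired swap.

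The main obstacle is this final swap. Although it is morally forced---the two inner LUVs encode rationals whose external difference vanishes in the limit---the proof genuinely requires two separate applications of Theorem~\ref{thm:ei} (one on the main diagonal, one reindexed along $\deff$) plus an invocation of Theorem~\ref{thm:expcoh} to transport the external approximate equality back under the outer expectation. The reindexing bookkeeping is straightforward given the hypotheses on deferral functions, but it is the only step that requires any real care.
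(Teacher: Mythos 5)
Your proof is correct, but it is not the route the appendix actually takes, so a comparison is worth making. You treat the theorem as a corollary of Theorem~\ref{thm:cee} applied to $X_\nn := \OneOperator(\phi_\nn)$, together with Theorem~\ref{thm:ei} on the main diagonal, and then you do the real work in the ``swap'' step: replacing the inner LUV $\quot{\enc{\EE}_{\enc{\deff}(\enc{\nn})}(\enc{\OneOperator(\phi_\nn)})}$ by $\quot{\enc{\pt}_{\enc{\deff}(\enc{\nn})}(\enc{\phi_\nn})}$ under the outer $\EE_\nn$, which you justify by a reindexed application of Theorem~\ref{thm:ei} along the $\deff$-shifted diagonal, the observation that the difference $D_\nn$ is determined via $\Theory$ with $\thmval(D_\nn)\to 0$, and Theorem~\ref{thm:expcoh}. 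The paper's appendix proof never invokes Theorem~\ref{thm:cee}; instead it re-runs that theorem's proof pattern directly on the probability LUV: it applies Theorem~\ref{thm:epr} to the reindexed sequence $\psi_\mm := \phi_\nn$ for $\mm = \deff(\nn)$ (and $\bot$ otherwise) to get $\pt_{\deff(\nn)}(\phi_\nn) \eqsim_\nn \EE_{\deff(\nn)}(\quot{\enc{\pt}_{\enc{\deff(\nn)}}(\enc{\phi_\nn})})$, fixes the numeral-versus-program encoding $\enc{\deff(\nn)}$ vs.\ $\enc{\deff}(\enc{\nn})$ via Theorem~\ref{thm:expprovind}, converts the left-hand side to $\EE_{\deff(\nn)}(\OneOperator(\phi_\nn))$ with Theorem~\ref{thm:ei}, transports the relation from the $\deff(\nn)$-column down to the main diagonal with Theorem~\ref{thm:exppolymax}, and finishes with Theorem~\ref{thm:ei} again. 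So the paper works with a single inner LUV throughout and uses preemptive learning to move along the diagonal, whereas you compare two nested LUVs at a fixed column and use coherence; both rest on the same toolbox (the $\deff$-reindexing trick plus the expectation analogues of provability induction and preemptive learning), and both gloss the non-injective-$\deff$ reindexing in the same way (you at least flag it). Two small points: your appeal to Theorem~\ref{thm:loe} is unnecessary, since $\EE_\nn$ applied to an $\RR$-LUV-combination is linear by definition rather than by that theorem; and your choice of Theorem~\ref{thm:expcoh} over Theorem~\ref{thm:expprovind} for the swap is the right one, since $\thmval(D_\nn)$ only tends to $0$ rather than being identically $0$. Your version arguably matches the main text's remark that the result ``follows immediately'' from Theorems~\ref{thm:cee} and~\ref{thm:ei}, while making explicit the encoding swap that remark hides.
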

\noindent In particular, if $\MP$ knows that its future self is going to assign some sequence $\probs$ of probabilities to $\phis$, then it starts assigning $\probs$ to $\phis$ in a timely manner.

\Theorem{cee} can be generalized to cases where the LUV on day~$\nn$ is multiplied by an expressible feature:

\begin{restatable}[No Expected Net Update under Conditionals]{theorem}{restateccee}\label{thm:ccee}
  Let $\deff$ be a deferral function, and let $\seq{X}$ denote an \ec sequence of $[0,1]$-LUVs, and let $\seq w$ denote a \pgenable sequence of real numbers in $[0, 1]$.  Then
  \[
  \EE_\nn(\quot{\enc{X_\nn} \cdot \enc{w}_{\enc{\deff}(\enc{\nn})}}) \eqsim_\nn
  \EE_\nn(\quot{\enc{\EE}_{\enc{\deff}(\enc{\nn})}(\enc{X_\nn}) \cdot \enc{w}_{\enc{\deff}(\enc{\nn})}}).
  \]
  \proofin{\ref{app:ccee}}
\end{restatable}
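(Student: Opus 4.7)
The plan is to reduce directly to Theorem~\ref{thm:cee} (Expected Future Expectations) by packaging the product $X_\nn \cdot w_{\deff(\nn)}$ as a new efficiently computable sequence of $[0,1]$-LUVs. Since $\seq{w}$ is \pgenable and $\MP$ is a computable belief sequence (Definition~\ref{def:belseq}), the map $\nn \mapsto w_\nn$ is a computable function into $\QQ \cap [0,1]$---expressible features evaluate to rationals on rational valuations---so $\Theory$ represents it via some formula $\gamma_w$. Setting $Y_\nn := \quot{\enc{X_\nn} \cdot \enc{w}_{\enc{\deff}(\enc{\nn})}}$, the sequence $\seq{Y}$ is \ec (the formula $Y_\nn$ is built by symbolic substitution into $\gamma_X$, $\gamma_w$, and $\gamma_\deff$ in time polynomial in $\nn$) and each $Y_\nn$ is a $[0,1]$-LUV. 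Applying Theorem~\ref{thm:cee} to $\seq{Y}$ yields
\[
  \EE_\nn\!\left(\quot{\enc{X_\nn} \cdot \enc{w}_{\enc{\deff}(\enc{\nn})}}\right) \eqsim_\nn \EE_\nn\!\left(\quot{\enc{\EE}_{\enc{\deff}(\enc{\nn})}\!\left(\enc{X_\nn} \cdot \enc{w}_{\enc{\deff}(\enc{\nn})}\right)}\right),
\]
so the theorem reduces to showing the right-hand side is $\eqsim_\nn$ to $\EE_\nn\!\left(\quot{\enc{\EE}_{\enc{\deff}(\enc{\nn})}(\enc{X_\nn}) \cdot \enc{w}_{\enc{\deff}(\enc{\nn})}}\right)$.

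For this second $\eqsim_\nn$, consider the $\RR$-LUV-combination sequence
\[
  \affluv_\nn := \quot{\enc{\EE}_{\enc{\deff}(\enc{\nn})}\!\left(\enc{X_\nn} \cdot \enc{w}_{\enc{\deff}(\enc{\nn})}\right)} - \quot{\enc{\EE}_{\enc{\deff}(\enc{\nn})}(\enc{X_\nn}) \cdot \enc{w}_{\enc{\deff}(\enc{\nn})}}.
\]
Each summand is a LUV whose unique value is a specific real number determined purely by $\pt_{\deff(\nn)}$---the finite sums defining $\EE_{\deff(\nn)}$ and the expressible feature $\gen{w}_{\deff(\nn)}$ are both computed from a finite list of market prices---so in every $\World \in \cworlds(\Theory)$ the quantity $\World(\affluv_\nn)$ equals the same real number
\[
  d_\nn := \EE_{\deff(\nn)}\!\left(\quot{\enc{w}_{\enc{\deff}(\enc{\nn})} \cdot \enc{X_\nn}}\right) - w_{\deff(\nn)} \cdot \EE_{\deff(\nn)}(X_\nn).
\]
By Theorem~\ref{thm:expcoh} (Expectation Coherence), if $d_\nn \to 0$ as $\nn \to \infty$ then $\EE_\nn(\affluv_\nn) \eqsim_\nn 0$, which combined with the previous display proves the claim.

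The main obstacle is therefore the algebraic lemma that approximate expectations factor through a scalar constant up to discretization error: for any rational $w \in [0,1]$, any $[0,1]$-LUV $X$, any approximately coherent valuation $\Valuation$, and any $k \ge 1$,
\[
  \EE_k^\Valuation\!\left(\quot{\enc{w} \cdot X}\right) = w \cdot \EE_k^\Valuation(X) + O(1/k).
\]
Using the $\Theory$-provable equivalence $\quot{\enc{w} \cdot X > t} \iff \quot{X > t/\enc{w}}$ for $w > 0$, the left sum rewrites as a partial Riemann sum of $s \mapsto \Valuation(\quot{X > s})$ with step $1/(wk)$ on $[0,w]$, while the right side is $w$ times the analogous sum with step $1/k$ on $[0,1]$; both are within $O(1/k)$ of $\int_0^1 \Valuation(\quot{X > s})\,ds$ by monotonicity of $s \mapsto \Valuation(\quot{X > s})$, which holds up to vanishing additive error at $\Valuation = \pt_{\deff(\nn)}$ via Theorem~\ref{thm:affcoh} applied to the implications $\quot{\enc{X_\nn} > \enc{s_1}} \to \quot{\enc{X_\nn} > \enc{s_2}}$ for $s_1 > s_2$. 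Applying this lemma with $\Valuation = \pt_{\deff(\nn)}$, $k = \deff(\nn)$, and $w = w_{\deff(\nn)}$ gives $d_\nn = O(1/\deff(\nn)) \to 0$, closing the argument.
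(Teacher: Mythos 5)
Your high-level decomposition is legitimate and genuinely different from the paper's proof: step one, applying \Theorem{cee} to the product LUVs $Y_\nn := \quot{\enc{X_\nn}\cdot\enc{w}_{\enc{\deff}(\enc{\nn})}}$, works, and step two, reducing the remaining comparison to the single real sequence $d_\nn := \EE_{\deff(\nn)}(\quot{\enc{w}_{\enc{\deff}(\enc{\nn})}\cdot\enc{X_\nn}}) - w_{\deff(\nn)}\cdot\EE_{\deff(\nn)}(X_\nn)$ via \Theorem{expcoh}, is also sound, since both inner LUVs are provably equal to specific rationals determined by $\pt_{\deff(\nn)}$. (The paper instead works at time $\deff(\nn)$ using \Theorem{er} and \Theorem{expprovind} to move $w_{\deff(\nn)}$ in and out of the quotes, then returns to the main diagonal with \Theorem{exppolymax}.) The genuine gap is in step three, your ``algebraic lemma.'' As stated it is false: for an arbitrary valuation there is no relationship at all between $\Valuation(\quot{\enc{w}\cdot\enc{X} > \enc{t}})$ and $\Valuation(\quot{\enc{X} > \enc{t}/\enc{w}})$, so no error bound depending only on the mesh $1/k$ can hold; the error is governed by how coherent $\Valuation$ is on those particular sentences. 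And at $\Valuation=\pt_{\deff(\nn)}$ the logical induction criterion supplies no rate whatsoever (cf.\ \Theorem{ifp} and the proposition on uncomputable convergence rates), so your conclusion $d_\nn = O(1/\deff(\nn))$ is unobtainable; only $d_\nn\to 0$ is available.

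Moreover, your route to even the asymptotic statement is incomplete. You control the incoherence by applying \Theorem{affcoh} to individual implications $\quot{\enc{X_\nn} > \enc{s_1}}\implies\quot{\enc{X_\nn} > \enc{s_2}}$, but on day $\nn$ your Riemann sums involve $\deff(\nn)$ thresholds, a family growing with $\nn$; per-pair asymptotic coherence statements do not aggregate into a vanishing bound on the sum unless the entire discretized expectation is packaged into a single bounded \pgenable affine combination per day, and in addition \Theorem{affcoh} speaks about the main diagonal $\pt_\nn(\aff_\nn)$, whereas you need statements about $\pt_{\deff(\nn)}$, which requires the re-indexing device from the proof of \Theorem{cee} (define the $m$th element to be the combination for $\nn$ when $m=\deff(\nn)$ and trivial otherwise). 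The clean repair is to drop the Riemann-sum argument entirely: since $\World(\quot{\enc{X_\nn}\cdot\enc{w}_{\enc{\deff}(\enc{\nn})}}) = w_{\deff(\nn)}\cdot\World(X_\nn)$ for every $\World\in\cworlds(\Theory)$, apply \Theorem{expprovind} (or \Theorem{loe}) to the re-indexed, bounded, \pgenable LUV-combination sequence equal to $\quot{\enc{X_\nn}\cdot\enc{w}_{\enc{\deff}(\enc{\nn})}} - w_{\deff(\nn)}\,X_\nn$ on day $m=\deff(\nn)$ and $0$ otherwise (the coefficient $w_{\deff(\nn)}$ is an expressible feature of rank $\deff(\nn)\le m$); this yields exactly $d_\nn\eqsim_\nn 0$, which is all your argument needs, and is in fact the step the paper's own proof carries out with \Theorem{expprovind}.
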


\noindent To see why \Thm{ccee} is interesting, it helps to imagine the case where $\seq{X}$ is a series of bundles of goods and services, and $w_\nn$ is $\ctsind{\delta_\nn}(\EE_{\deff(\nn)}(X_\nn) > 0.7)$ for some sequence of rational numbers $\deltas \to 0$, as per \Def{ctsind}. This value is 1 if $\MP$ will expect the $\nn$th bundle to be worth more than 70\textcent{} on day $\deff(\nn)$, and 0 otherwise, and intermediate if the case isn't quite clear. Then
\[
  \EE_\nn\left(\quot{\enc{X}_\enc{\nn} \cdot
    \enc{\ctsind{\delta_\nn}}\mleft(
      \enc{\EE}_{\enc{f}(\enc{\nn})}(\enc{X}_\enc{\nn}) > 0.7
    \mright)}
  \right)
\]
can be interpreted as $\MP$'s expected value of the bundle on day $\nn$, in cases where $\MP$ is going to think it's worth at least 70\textcent{} on day $f(\nn)$. Now assume that $\ctsind{\delta_\nn}(\EE_{f(\nn)}(X_\nn)) > 0$ and divide it out of both sides, in which case the theorem roughly says
\[
  \EE_\mathrm{now}(X \mid \EE_\mathrm{later}(X) > 0.7) \eqsim \EE_\mathrm{now}(\EE_\mathrm{later}(X) \mid \EE_\mathrm{later}(X) > 0.7),
\]
which says that $\MP$'s expected value of the bundle now, given that it's going to think the bundle has a value of at least 70\textcent{} later, is equal to whatever it expects to think later, conditioned on thinking later that the bundle is worth at least 70\textcent{}.

Combining this idea with indicator functions, we get the following theorem:
\begin{restatable}[Self-Trust]{theorem}{restatest}\label{thm:st}
  Let $\deff$ be a deferral function, $\phis$ be an \ec sequence of sentences, $\deltas$ be an \ec sequence of positive rational numbers, and $\probs$ be a \pgenable sequence of rational probabilities. Then
  \[
    \EE_\nn\left(\quot{
      \enc{\OneOperator(\phi_\nn)} \cdot
      \enc{\ctsind{\delta_\nn}}\mleft(
        \enc{\pt}_{\enc{\deff}(\enc{\nn})}(\enc{\phi_\nn}) > \enc{\prob_\nn}
      \mright)
    }\right)
    \gtrsim_\nn
    \prob_\nn \cdot
    \EE_\nn\left(\quot{
      \enc{\ctsind{\delta_\nn}}\mleft(
        \enc{\pt}_{\enc{\deff}(\enc{\nn})}(\enc{\phi_\nn}) > \enc{\prob_\nn}
      \mright)
    }\right).
  \]
  \proofin{\ref{app:st}}
\end{restatable}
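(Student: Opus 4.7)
The plan is to establish Self-Trust by chaining three facts: an appeal to \Thm{ccee} (No Expected Net Update under Conditionals), a substitution of $\EE_{\deff(\nn)}(\OneOperator(\phi_\nn))$ for $\pt_{\deff(\nn)}(\phi_\nn)$ justified by an off-diagonal form of \Thm{ei}, and a pointwise domination coming directly from the definition of $\ctsind{\delta_\nn}$.

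Set $w_\nn := \ctsind{\delta_\nn}\bigl(\pt_{\deff(\nn)}(\phi_\nn) > \prob_\nn\bigr)$, so that $\seq{w}$ is a \pgenable $[0,1]$-sequence. First I apply \Thm{ccee} with $X_\nn := \OneOperator(\phi_\nn)$ and this $\seq{w}$; the hypotheses ($\seq{X}$ an \ec sequence of $[0,1]$-LUVs, $\seq{w}$ \pgenable in $[0,1]$) are immediate, and we obtain
\[
\EE_\nn\bigl(\OneOperator(\phi_\nn)\cdot w_\nn\bigr)\;\eqsim_\nn\;\EE_\nn\bigl(\EE_{\deff(\nn)}(\OneOperator(\phi_\nn))\cdot w_\nn\bigr).
\]
Second, I replace $\EE_{\deff(\nn)}(\OneOperator(\phi_\nn))$ by $\pt_{\deff(\nn)}(\phi_\nn)$ inside the outer $\EE_\nn$. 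Concretely, by padding $\phis$ into an \ec sequence $\psis$ with $\psi_{\deff(\nn)} = \phi_\nn$ (possible because any deferral function's image is poly-time recognizable in $m$, since $\deff$ is computable in time polynomial in its output), \Thm{ei} applied to $\psis$ yields the external approximation $\EE_{\deff(\nn)}(\OneOperator(\phi_\nn))\eqsim_\nn \pt_{\deff(\nn)}(\phi_\nn)$. Both LUVs are deterministic values of the computable market $\MP$, so their difference is a bounded, $\Theory$-determined LUV-combination that is asymptotically $0$ in every consistent world; by \Thm{expprovind}, combined with $|w_\nn|\le 1$ and \Thm{loe}-style linearity, the approximation passes through the outer $\EE_\nn$:
\[
\EE_\nn\bigl(\EE_{\deff(\nn)}(\OneOperator(\phi_\nn))\cdot w_\nn\bigr)\;\eqsim_\nn\;\EE_\nn\bigl(\pt_{\deff(\nn)}(\phi_\nn)\cdot w_\nn\bigr).
\]

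Third, the defining property of $\ctsind{\delta_\nn}(\any>\prob_\nn)$ ensures $w_\nn = 0$ whenever $\pt_{\deff(\nn)}(\phi_\nn) \le \prob_\nn$; hence pointwise $\pt_{\deff(\nn)}(\phi_\nn)\cdot w_\nn \ge \prob_\nn\cdot w_\nn$ in every consistent world. The bounded, $\Theory$-determined quantity $\pt_{\deff(\nn)}(\phi_\nn)\cdot w_\nn - \prob_\nn\cdot w_\nn$ is therefore nonnegative in every world, and one more application of \Thm{expprovind} gives
\[
\EE_\nn\bigl(\pt_{\deff(\nn)}(\phi_\nn)\cdot w_\nn\bigr)-\prob_\nn\,\EE_\nn(w_\nn)\;\gtrsim_\nn\;0.
\]
Chaining the three displayed relations proves the theorem.

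The main obstacle will be the second step: the external approximation $\EE_{\deff(\nn)}(\OneOperator(\phi_\nn)) \eqsim_\nn \pt_{\deff(\nn)}(\phi_\nn)$ must be transferred to an internal statement, namely that $\MP$'s current expectation treats these two LUVs as asymptotically equal. This ultimately reduces to the off-diagonal form of \Thm{ei} (via the reindexing above) together with the asymptotic coherence of $\MP$'s prices on the provably equivalent sentences $\quot{\enc{\OneOperator(\phi_\nn)}>i/\deff(\nn)}$ and $\enc{\phi_\nn}$ for $0\le i<\deff(\nn)$; all of these pieces are already in hand, but the bookkeeping around the rank of the expressible features defining $w_\nn$ and the applicability of linearity of expectation on the possibly $[-1,1]$-valued difference LUV is where the technical work concentrates.
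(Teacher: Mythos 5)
Your argument is correct, and it is organized differently from the paper's own proof in Appendix~\ref{app:st}. The paper never invokes Theorem~\ref{thm:ccee}: it works entirely at day $\deff(\nn)$ --- applying Theorem~\ref{thm:epr} and Theorem~\ref{thm:ei} there, multiplying by the external scalar $\alpha_\nn=\ctsind{\delta_\nn}(\pt_{\deff(\nn)}(\phi_\nn)>\prob_\nn)$, internalizing the products with several applications of Theorem~\ref{thm:expprovind} to obtain $\EE_{\deff(\nn)}(\quot{\enc{\OneOperator(\phi_\nn)}\cdot\enc{\alpha}_{\enc{\nn}}})\gtrsim_\nn \prob_\nn\EE_{\deff(\nn)}(\quot{\enc{\alpha}_{\enc{\nn}}})$ --- and only at the very end transfers the inequality from day $\deff(\nn)$ down to day $\nn$ with Theorem~\ref{thm:exppolymax}. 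You instead stay at day $\nn$ throughout and factor the ``future expectation'' step through Theorem~\ref{thm:ccee} as a black box (which is exactly how the main text motivates the theorem), paying for this modularity with one extra internal swap: replacing $\quot{\enc{\EE}_{\enc{\deff}(\enc{\nn})}(\enc{\OneOperator(\phi_\nn)})\cdot\enc{w}}$ by $\quot{\enc{\pt}_{\enc{\deff}(\enc{\nn})}(\enc{\phi_\nn})\cdot\enc{w}}$ inside the outer $\EE_\nn$. Both routes use the same toolkit (the paper's proof of Theorem~\ref{thm:ccee} itself contains the Theorem~\ref{thm:exppolymax} step you are implicitly reusing), and your reindexing of the weight sequence so that $w_{\deff(\nn)}$ is the desired indicator, plus the identification of your encoding of $w$ with the feature expression appearing in the theorem statement, is the same determined-LUV bookkeeping the paper performs (via Theorem~\ref{thm:expprovind}) when passing between $\enc{\deff(\nn)}$ and $\enc{\deff}(\enc{\nn})$.

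One small imprecision: in your second step the two LUVs are not equal in every consistent world, only asymptotically equal (their common determined values differ by $(\EE_{\deff(\nn)}(\OneOperator(\phi_\nn))-\pt_{\deff(\nn)}(\phi_\nn))\cdot w_{\deff(\nn)}\to 0$), so Theorem~\ref{thm:expprovind} as stated (which needs a single constant bound $b$ holding for all $\nn$) does not apply verbatim. The clean fix is to apply Theorem~\ref{thm:expcoh} to the bounded, $\Theory$-determined difference combination (or run an $\varepsilon$-argument with Theorem~\ref{thm:expprovind} after zeroing out finitely many terms); either way the step goes through, and your final step, where the world-wise inequality $\pt_{\deff(\nn)}(\phi_\nn)\cdot w_{\deff(\nn)}\ge \prob_\nn\cdot w_{\deff(\nn)}$ holds exactly, is a direct application of Theorem~\ref{thm:expprovind} just as in the paper.
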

\noindent Very roughly speaking, if we squint at \Thm{st}, it says something like 
\[
    \EE_\mathrm{now}(\phi \mid P_\mathrm{later}(\phi) > \prob) \gtrsim \prob,
\]
i.e., if we ask $\MP$ what it would believe about $\phi$ now if it learned that it was going to believe $\phi$ with probability at least~$\prob$ in the future, then it will answer with a probability that is at least~$\prob$.

As a matter of fact, \Thm{st} actually says something slightly weaker, which is also more desirable. Let each $\phi_\nn$ be the self-referential sentence $\quot{\enc{\pt}_{\enc{\deff}(\enc{\nn})}(\enc{\phi_\nn}) < 0.5}$ which says that the future $\pt_{\deff(\nn)}$ will assign probability less than 0.5 to $\phi_\nn$. Then, conditional on $\pt_{\deff(\nn)}(\phi_\nn) \ge 0.5$, $\pt_\nn$ should believe that the probability of $\phi_\nn$ is 0. And indeed, this is what a logical inductor will do:
\[
  \pt_\nn\left(
    \quot{\enc{\phi_\nn} \land
    (
      \enc{\pt}_{\enc{\deff}(\enc{\nn})}(
        \enc{\phi_\nn}
      ) \ge 0.5
    )}
  \right) \eqsim_\nn 0,
\]
by \Theorem{perkno}, because each of those conjunctions is disprovable. This is why \Thm{st} uses continuous indicator functions: With discrete conjunctions, the result would be undesirable (not to mention false).

What \Thm{st} says is that $\MP$ attains self-trust of the ``if in the future I will believe $x$ is very likely, then it must be because $x$ is very likely'' variety, while retaining the ability to think it can outperform its future self's beliefs when its future self confronts paradoxes. In colloquial terms, if we ask ``what's your probability on the paradoxical sentence $\phi_\nn$ given that your future self believes it with probability \emph{exactly} 0.5?'' then $\MP$ will answer ``very low'', but if we ask ``what's your probability on the paradoxical sentence $\phi_\nn$ given that your future self believes it with probability \emph{extremely close to} 0.5?'' then $\MP$ will answer ``roughly 0.5.''

Still speaking roughly, this means that logical inductors trust their future beliefs to be accurate and only change for good reasons. \Thm{st} says that if you ask ``what's the probability of $\phi$, given that in the future you're going to believe it's more than $95\%$ likely?'' then you'll get an answer that's no less than $0.95$, even if the logical inductor currently thinks that $\phi$ is unlikely.

\renewcommand{\proofin}[1]{} 


%
%
%
%
%
%
%
%
%
%
\section{Construction}\label{sec:construct}

In this section, we show how given any deductive process $\DP$, we can construct a computable belief sequence, called $\seq\LIA$, that satisfies the \lic{} relative to $\DP$. Roughly speaking, $\seq\LIA$ works by simulating an economy of traders and using Brouwer's fixed point theorem to set market prices such that no trader can exploit the market relative to $\DP$.

We will build $\LIA$ from three subroutines called \marketmaker, \budgeter, and \tradingfirm. Intuitively, \marketmaker will be an algorithm that sets market prices by anticipating what a single trader is about to do, \budgeter will be an algorithm for altering a trader to stay within a certain budget, and \tradingfirm will be an algorithm that uses \budgeter to combine together an infinite sequence of carefully chosen \ec traders (via a sum calculable in finite time) into a single trader that exploits a given market if any \ec trader exploits that market. Then, $\LIA$ will work by using \marketmaker to make a market not exploitable by \tradingfirm and hence not exploitable by any \ec trader, thereby satisfying the \lic{}.

To begin, we will need a few basic data types for our subroutines to pass around:

\begin{definition}[Belief History]
  An \textbf{$\bm \nn$-belief history} $\BelState_{\leq \nn} = (\BelState_1,\ldots,\BelState_\nn)$ is a finite list of belief states of length $\nn$.
\end{definition}

\begin{definition}[Strategy History]
  An \textbf{$\bm \nn$-strategy history} $\trade_{\leq \nn} = (\trade_1,\ldots,\trade_\nn)$ is a finite list of trading strategies of length $n$, where $\trade_i$ is an $i$-strategy.
\end{definition}

\newcommand{\support}{\operatorname{Support}}
\begin{definition}[Support]
For any valuation $\Valuation$ we define
\begin{align*}
\support(\Valuation) &:= \{ \phi\in\Sentences \mid 
\Valuation(\phi)\neq 0 \},\\
\intertext{and for any $\nn$-strategy $\trade_\nn$ we define}
\support(\trade_\nn) &:= \{ \phi\in\Sentences \mid 
\trade_\nn[\phi]\not\equiv 0 \}.
\end{align*}
Observe that for any belief state $\BelState$ and any $\nn$-strategy $\trade_\nn$, $\support(\BelState)$ and $\support(\trade_\nn)$ are computable from the finite lists representing $\BelState$ and $\trade_\nn$.
\end{definition}

\subsection{Constructing \texorpdfstring{\marketmaker}{MarketMaker}}
Here we define the \marketmaker subroutine and establish its key properties.  Intuitively, given any trader $\Trader$ as input, on each day $\nn$, \marketmaker looks at the trading strategy $\trade_\nn$ and the valuations $\pt_{\leq \nn-1}$ output by \marketmaker on previous days. It then uses an approximate fixed point (guaranteed to exist by Brouwer's fixed point theorem) that sets prices $\pt_\nn$ for that day such that when the trader's strategy $ \trade_\nn$ reacts to the prices, the resulting trade $\trade_\nn(\pt_{\leq \nn})$ earns at most a very small positive amount of value in any world. Intuitively, the fixed point finds the trader's ``fair prices'', such that they abstain from betting, except possibly to buy sentences at a price very close to \$1 or sell them at a price very close to \$0, thereby guaranteeing that very little value can be gained from the trade.

\begin{lemma}[Fixed Point Lemma]\label{lem:fpl}
  Let $\trade_\nn$ be any $\nn$-strategy, and let $\pt_{\leq \nn-1}$ be any $(\nn-1)$-belief history. There exists a valuation $\Valuation$ with $\support(\Valuation)\subseteq \support(\trade_\nn)$ such that
  \begin{equation}\label{eq:fplemma}
    \text{for all worlds $\World \in \Worlds$:}\quad\World\left(\trade_\nn(\pt_{\le \nn-1}\append\Valuation)\right) \le  0.
  \end{equation}
\end{lemma}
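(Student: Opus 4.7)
The plan is to apply Brouwer's fixed point theorem to a suitable continuous self-map of $[0,1]^{\support(\trade_\nn)}$. Write $\trade_\nn = \cash + \exf_1\phi_1 + \cdots + \exf_k\phi_k$ with $\phi_1,\ldots,\phi_k$ enumerating $\support(\trade_\nn)$ without repetition, and for each $p \in [0,1]^k$ let $\Valuation_p$ denote the unique valuation that sends $\phi_i$ to $p_i$ and every other sentence to $0$. Since each $\exf_i$ is continuous (\Def{tf}), the map
\[
  \adj(p)_i \;:=\; \max\!\bigl(0,\, \min(1,\, p_i + \exf_i(\pt_{\leq \nn-1} \append \Valuation_p))\bigr)
\]
is a continuous self-map of the compact convex set $[0,1]^k$. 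Brouwer's theorem then yields a fixed point $p^*$, and I would take $\Valuation := \Valuation_{p^*}$, which by construction satisfies $\support(\Valuation) \subseteq \support(\trade_\nn)$.

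The main step is to extract the inequality from the fixed-point condition $\adj(p^*) = p^*$. Abbreviating $\alpha_i := \exf_i(\pt_{\leq \nn-1} \append \Valuation_{p^*})$, I would case-split on where $p^*_i + \alpha_i$ lies relative to $[0,1]$ and observe that fixed-pointness forces one of three regimes coordinate-wise: either $\alpha_i = 0$; or $p^*_i = 0$ and $\alpha_i \leq 0$; or $p^*_i = 1$ and $\alpha_i \geq 0$. In every one of these cases, because $\World(\phi_i) \in \{0,1\} \subseteq [0,1]$, a short sign check yields $\alpha_i \cdot (\World(\phi_i) - p^*_i) \leq 0$ for every world $\World$.

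It then remains to unwind $\World(\trade_\nn(\pt_{\leq \nn-1} \append \Valuation_{p^*}))$ using the cash term $\cash = -\sum_i \exf_i \pf{\phi_i}$ together with the identity $\pf{\phi_i}(\pt_{\leq \nn-1} \append \Valuation_{p^*}) = p^*_i$. By linearity the value collapses to $\sum_i \alpha_i \cdot (\World(\phi_i) - p^*_i)$, a sum of non-positive terms, hence $\leq 0$ as required. I do not anticipate a real obstacle: Brouwer carries the existence argument and the rest is bookkeeping. The only subtlety worth flagging is that the features $\exf_i$ may reference price features $\pf[\nn]{\psi}$ for sentences $\psi$ outside $\support(\trade_\nn)$; this is harmless because by construction $\pf[\nn]{\psi}(\pt_{\leq \nn-1} \append \Valuation_p) = \Valuation_p(\psi) = 0$ for such $\psi$, so $\adj$ genuinely is a continuous function of $p$ alone.
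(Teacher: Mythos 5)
Your proposal is correct and follows essentially the same route as the paper's proof: Brouwer's fixed point theorem applied to the clamped ``price adjustment'' map on the cube $[0,1]^{\support(\trade_\nn)}$ (with all other sentences priced at $0$), followed by the coordinate-wise sign analysis at the fixed point and cancellation of the cash term so that the world-value collapses to $\sum_i \alpha_i\,(\World(\phi_i)-p^*_i)\le 0$. Your three-regime case split and the remark about features referencing prices of sentences outside the support correspond exactly to the paper's buy/sell dichotomy and its restriction to valuations supported on $\support(\trade_\nn)$.
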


\begin{proof}
We will use Brouwer's fixed point theorem to find ``prices'' $\Valuation$ such that $\trade_\nn$ only ever buys shares for \$1 or sells them for \$0, so it cannot make a profit in any world. Intuitively, we do this by making a ``price adjustment'' mapping called $\adj$ that moves prices toward 1 or 0 (respectively) as long as $\trade_\nn$ would buy or sell (respectively) any shares at those prices, and finding a fixed point of that mapping.

First, we let $\Sentences' = \support(\trade_\nn)$ and focus on the set
\begin{equation*}
\Valuations' := \{ \Valuation \mid \support(\Valuation)\subseteq \Sentences'\}.
\end{equation*}
Observe that $\Valuations'$ is equal to the natural inclusion of the 
finite-dimensional cube $[0,1]^{\Sentences'}$ in the space of all 
valuations $\Valuations = [0,1]^{\Sentences}$.  We now define our ``price adjustment'' function $\adj:\Valuations'\to\Valuations'$ as follows:
\begin{equation*}
 \adj(\Valuation)(\phi):=
   \max\mleft(0,\; \min\mleft(
     1, \; \Valuation(\phi)+ \trade_\nn (\pt_{\le \nn-1}\append\Valuation)[\phi]
   \mright)
 \mright).
\end{equation*}
This map has the odd property that it adds prices and trade volumes, but it does the trick. Notice that $\adj$ is a function from the compact, convex space $\Valuations'$ to itself, so if it is continuous, it satisfies the antecedent of Brouwer's fixed point theorem. Observe that $\adj$ is in fact continuous, because trade strategies are continuous. Indeed, we required that trade strategies be continuous for precisely this purpose. Thus, by Brouwer's fixed point theorem, $\adj$ has at least one fixed point $\Valuation^\adj$ that satisfies, for all sentences $\phi\in\Sentences'$,
\begin{equation*}
\Valuation^\adj\-(\phi) = \max(0, \; \min( 1, \; \Valuation^\adj\-(\phi) + \trade_\nn (\pt_{\le \nn-1}\append\Valuation^\adj)[\phi]) ).
\end{equation*}
Fix a world $\World$ and observe from this equation that if $\trade_\nn$ buys some shares of $\phi\in\Sentences'$ at these prices, i.e.\ if $\trade_\nn(\pt_{\le \nn-1}\append\Valuation^\adj)[\phi]>0$, then $\Valuation^\adj\-(\phi)=1$, and in particular, 
$\World(\phi) - \Valuation^\adj\-(\phi)\leq 0$.  Similarly, if $\trade_\nn$ sells some shares of $\phi$, i.e.\ if $\trade_\nn(\pt_{\le \nn-1}\append\Valuation^\adj)[\phi]<0$, then $\Valuation^\adj\-(\phi)=0$, so 
$\World(\phi) - \Valuation^\adj\-(\phi) \geq 0$.  In either case, we have
\begin{align*}
0 &\geq (\World(\phi)-\Valuation^\adj\-(\phi))\cdot \trade_\nn(\pt_{\le \nn-1}\append\Valuation^\adj)[\phi]\\
\intertext{since the two factors always have opposite sign (or at least one factor is 0).  Summing over all $\phi$, remembering that $\trade_\nn(\Valuation_{\leq \nn})[\phi]=0$ for $\phi \notin \Sentences'$, gives}
0 &\geq \sum_{\phi\in\Sentences} (\World(\phi)-\Valuation^\adj\-(\phi))\cdot \trade_\nn(\pt_{\le \nn-1}\append\Valuation^\adj)[\phi]
\nonumber\\
&=\World(\trade_\nn(\pt_{\leq n}\append\Valuation^\adj)) - \Valuation^\adj\-(\trade_\nn(\pt_{\le \nn-1}\append\Valuation^\adj))
\end{align*}
since the values of the ``cash'' terms $\World(\trade_\nn(\pt_{\leq n}\append\Valuation^\adj)[1])$ and $\Valuation^\adj\-(\trade_\nn(\pt_{\leq n}\append\Valuation^\adj)[1])$ are by definition both equal to $\trade_\nn(\pt_{\leq n}\append\Valuation^\adj)[1]$ and therefore cancel. But
\[
    \Valuation^\adj\-(\trade_\nn(\pt_{\le \nn-1}\append\Valuation^\adj)) = 0
\]
by definition of a trading strategy, so for any world $\World$, we have
\[
0\ge\World(\trade_\nn(\pt_{\le \nn-1}\append\Valuation^\adj)).
\]
\end{proof}

\begin{defprop}[\marketmaker]\label{def:markemaker}
  There exists a computable function, henceforth named \marketmaker, satisfying the following definition.  Given as input any $\nn\in\NN^+$, any $\nn$-strategy $\trade_\nn$, and any $(\nn-1)$-belief history $\pt_{\le \nn-1}$, $\marketmaker_\nn(\trade_\nn,\pt_{\le \nn-1})$ returns a belief state $\BelState$ with $\support(\BelState)\subseteq\support(\trade_\nn)$ such that
  \begin{equation}\tag{\thetheorem}\label{eq:marketmaker}
    \text{for all worlds $\World \in \Worlds$:}\quad\World\mleft(\trade_\nn(\pt_{\le \nn-1}\append\BelState)\mright) \le  2^{-n}.
  \end{equation}
\end{defprop}
\begin{proof}
 Essentially, we will find a rational approximation $\BelState$ to the fixed point $\Valuation^\adj$ in the previous lemma, by brute force search.  This requires some care, because the set of all worlds is uncountably infinite.

First, given $\trade_\nn$ and $\pt_{\le \nn-1}$, let $\Sentences':=\support(\trade_\nn)$, $\Valuations' := \{\Valuation \mid \support(\Valuation)\subseteq \Sentences'\}$, and take $\Valuation^\adj\in\Valuations'$ satisfying~\eq{fplemma}.  Let 
$\Worlds' := \{\World \mid \support(\World)\subseteq\Sentences'\}$, and for any world $\World$, define $\World'\in\Worlds'$ by
\[
\World'(\phi) := 
\begin{cases}
\World(\phi) &\mbox{if $\phi\in\Sentences'$,}\\
0 &\mbox{otherwise.}
\end{cases}
\]
Observe that for any $\World\in\Worlds$, the function $\Valuations'\to\RR$ given by
\[\Valuation\mapsto\World
(\trade_\nn(\pt_{\le \nn-1},\Valuation)) = \World'
(\trade_\nn(\pt_{\le \nn-1},\Valuation))
\]
is a continuous function of $\Valuation$ that depends only on $\World'$.  Since the set $\Worlds'$ is finite, the function
\[
\Valuation\mapsto\sup_{\World\in\Worlds} \World
(\trade_\nn(\pt_{\le \nn-1}\append\Valuation)) = \max_{\World'\in\Worlds'} \World'
(\trade_\nn(\pt_{\le \nn-1}\append\Valuation))
\]
is the maximum of a finite number of continuous functions, and is therefore continuous.  Hence there is some neighborhood in $\Valuations'$ around $\Valuation^\adj$ with image in $(-\infty,2^{-n})
\subset \RR$. 
By the density of rational points in $\Valuations'$, there is therefore some belief state $\BelState\in\Valuations'\cap \QQ^{\Sentences}$ satisfying~\eq{marketmaker}, as needed.

It remains to show that such a $\BelState$ can in fact be found  by brute force search.  First, recall that a belief state $\BelState$ is a rational-valued finite-support map from $\Sentences$ to  $[0,1]$,
 and so can be represented by a finite list of pairs $(\phi,q)$ with $\phi\in\Sentences$ and $q\in\QQ\cap [0,1]$.  Since $\Sentences$ and $[0,1]\cap \QQ$  are computably enumerable, so is the set of all belief states.
 
Thus, we can computably ``search'' though all possible $\BelState$s, so we need 
only establish that given $n$, $\trade_\nn$, and $\pt_{\le \nn-1}$ we can computably decide whether each $\BelState$ in our search satisfies
\eq{marketmaker} until we find one.  First note that the finite set $\support(\trade_\nn)$ can be computed by searching the expression specifying $\trade_\nn$ for all the sentences $\phi$ that occur within it.  Moreover, \Eqn{marketmaker} need only be be checked for worlds $\World'\in\Worlds'$, since any other $\World$ returns the same value as its corresponding $\World'$.  Now, for any fixed world $\World'\in\Worlds'$ and candidate $\BelState$, we can compute each value in the language of expressible features 
\[
\World'(\trade_\nn(\pt_{\le \nn-1}\append\pt)) = \trade_\nn(\pt_{\le \nn-1}\append\pt)[1] + \sum_{\phi\in\Sentences'} \World'(\phi)\cdot \trade_\nn(\pt_{\le \nn-1}\append\pt)[\phi]
\]
directly by evaluating the expressible features $\trade_\nn[\phi]$ on the given belief history $(\pt_{\le \nn-1}\append\pt)$, as $\phi\in\Sentences'$ varies.  Since $\Worlds'$ is a finite set, we can do this for all $\World'$ with a finite computation.  Thus, checking whether
a belief state $\BelState$ satisfies condition~\eq{marketmaker} is computably decidable, and a solution to~\eq{marketmaker} can therefore be found by enumerating all belief states $\BelState$ and searching through them for the first one that works.
\end{proof}

\begin{lemma}[\marketmaker Inexploitability]\label{lem:mm}
  Let $\Trader$ be any trader.  The sequence of belief states $\seq\pt$ defined recursively by
  \begin{align*}
    \pt_\nn&:=\marketmaker_\nn(\trade_\nn,\pt_{\le \nn-1}),
  \end{align*}
  with base case $\pt_1 = \marketmaker(\trade_1,())$, is not exploited by $\Trader$ relative to any deductive process $\DP$.
\end{lemma}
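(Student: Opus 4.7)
The plan is to leverage the defining property of \marketmaker{} established in~\eq{marketmaker}, namely that each trade executed at the fixed-point prices returns at most $2^{-n}$ in any world. Summing this geometric bound over days shows the trader's total net worth is bounded above in every world (not just the p.c.\ worlds), which by \Def{exploitation} precludes exploitation.

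Concretely, first I would note that since $\pt_\nn$ is constructed recursively as $\pt_\nn = \marketmaker_\nn(\trade_\nn, \pt_{\le\nn-1})$, the defining property \eqref{eq:marketmaker} of \marketmaker{} gives, for every world $\World\in\Worlds$ and every $\nn\in\NN^+$,
\[
  \World\bigl(\trade_\nn(\pt_{\le\nn})\bigr) \;\le\; 2^{-\nn}.
\]
Next I would use linearity of $\World(\any)$ on affine combinations of sentences to write
\[
  \World\Bigl(\textstyle\sum_{i\le\nn} \trade_i(\MP)\Bigr) \;=\; \sum_{i\le\nn}\World\bigl(\trade_i(\pt_{\le i})\bigr) \;\le\; \sum_{i\le\nn} 2^{-i} \;<\; 1.
\]
This bound is uniform in $\nn$ and in $\World$, and in particular holds for every $\World\in\pcworlds(\dt_\nn)$ no matter what deductive process $\DP$ is chosen.

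Finally, I would apply \Def{exploitation}: exploitation of $\MP$ by $\Trader$ relative to $\DP$ requires that the set
\[
  \bigl\{\World\bigl(\textstyle\sum_{i\le\nn}\trade_i(\MP)\bigr) \,\big|\, \nn\in\NN^+,\; \World\in\pcworlds(\dt_\nn)\bigr\}
\]
be bounded below but \emph{not} bounded above. Since we have just shown this set is bounded above by~$1$, $\Trader$ cannot exploit $\MP$ relative to any $\DP$, completing the proof. There is no real obstacle here beyond carefully invoking linearity of world-evaluation on affine sentence combinations; the entire argument is a direct telescoping of the per-day guarantee from \Def{markemaker}.
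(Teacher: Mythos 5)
Your proposal is correct and matches the paper's own proof essentially step for step: invoke the per-day guarantee \eq{marketmaker} of \marketmaker, sum the bounds $2^{-\nn}$ by linearity of $\World$ to get a uniform upper bound of $1$ on all plausible values, and conclude from \Def{exploitation} that $\Trader$ cannot exploit $\MP$ relative to any $\DP$. No further comment is needed.
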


\begin{proof}
  By the definition of \marketmaker, we have that for every $\nn$, the belief state $\pt=\pt_\nn$ satisfies \Eqn{marketmaker}, i.e.,
  \[
    \text{for all worlds $\World \in \Worlds$ and all $\nn \in \NN^+$:}\quad\World(\trade_\nn(\MP)) \leq 2^{-n}.
  \]
  Hence by linearity of $\World$, for all $n\in\NN^+$ we have:
  \[
    \World\mleft({\textstyle \sum_{i \leq n}} \trade_i(\MP)\mright)
    = \sum_{i \leq n}\World(\trade_i(\MP))
    \leq \sum_{i \leq n} 2^{-i} < 1.
  \]
  Therefore, given any deductive process $\DP$,
  \[
    \sup \left\{ \World\mleft({\textstyle \sum_{i \leq n}} \trade_i(\MP)\mright) \,\middle|\, \nn\in\NN^+, \World\in\pcworlds(\dt_\nn) \right\} \leq 1 <\infty,
  \]
  so $\Trader$ does not exploit $\seq\pt$ relative to $\DP$.
\end{proof}

%
%
%
%
%
%
%
%
%
%
%
%
%
%
\subsection{Constructing \texorpdfstring{\budgeter}{Budgeter}}

Here we introduce a subroutine for turning a trader with potentially infinite losses into a trader that will never have less than $-\$b$ in any world $\World\in\pcworlds(\dt_\nn)$ on any day $\nn$, for some bound $b$, in such a way that does not affect the trader if it wouldn't have fallen below $-\$b$ to begin with.

\begin{defprop}[\budgeter]
  Given any deductive process $\DP$, there exists a computable function, henceforth called $\budgeter^\DP$, satisfying the following definition.  Given inputs $\nn$ and $b\in\NN^+$, an $\nn$-strategy history $\trade_{\leq \nn}$, and an $(\nn-1)$-belief history $\pt_{\le \nn-1}$, $\budgeter^\DP$ returns an $\nn$-strategy $\budgeter^\DP_\nn(b,\trade_{\leq \nn},\pt_{\le \nn-1})$, such that
    \begin{align}
      \text{if:\quad}&
      \World\mleft({\textstyle \sum_{i \leq m} \trade_{i}(\pt_{\leq i})}\mright)
      \le -b\text{~for some $m<\nn$ and $\World\in \pcworlds(\dt_m)$,}\notag\\
      \text{then:\quad}&
      \budgeter^\DP_\nn(b,\trade_{\leq \nn},\pt_{\le \nn-1}) = 0,\notag\\\tag{\thetheorem}\label{eq:budgeter}
      \text{else:\quad}&
      \budgeter^\DP_\nn(b,\trade_{\leq \nn},\pt_{\le \nn-1}) =
      \shortintertext{\[
        \trade_\nn \cdot\inf_{\World \in \pcworlds(\dt_\nn)}
        \left[
          \max\mleft(1, \frac{-\World(\trade_\nn)}
                             {b+\World\mleft({\sum_{i \leq \nn-1} \trade_{i}(\pt_{\leq i})}\mright)}
          \mright)
        \right]^{-1}.
      \]}
    \end{align}
\end{defprop}

\begin{proof}
Let $\Sentences'=\bigcup_{i\le\nn}\support(\trade_i)$, $\Worlds' = \{\World \mid \support(\World)\subseteq \Sentences'\}$, and for any world $\World$, write
\[
  \World'(\phi) := 
  \begin{cases}
    \World(\phi) &\mbox{if $\phi\in\Sentences'$,}\\
    0 &\mbox{otherwise.}
  \end{cases}
\]

Now, observe that we can computably check the ``if'' statement in the function definition.  This is because $\World({\textstyle \sum_{i \leq m} \trade_{i}(\pt_{\leq i})})$ depends only on $\World' \in\Worlds'$, a finite set.  We can check whether $\World'\in\pcworlds(\dt_\mm)$ in finite time by checking whether any assignment of truth values to the finite set of prime sentences occurring in sentences of $\dt_\nn$  yields the assignment $\World'$ on $\support(\World')$. The set of sentences $\dt_\nn$ is computable given $n$, because $\DP$ is computable by definition.

It remains to show that the ``else'' expression can be computed and returns an $\nn$-trading strategy.  First, the infimum can be computed over $\World' \in \Worlds' \cap \pcworlds(\dt_\nn)$, a finite set, since the values in the $\inf$ depend only on $\World'$, and the $\inf$ operator itself can be re-expressed in the language of expressible features using $\max$ and multiplication by $(-1)$.  The values $\World'(\trade_\nn)$ and $\World'(\sum_{i \leq \nn-1} \trade_{i}(\pt_{\leq i}))$ are finite sums, and the denominator ${b+\World({\textstyle \sum_{i \leq \nn-1} \trade_{i}(\pt_{\leq i})} )}$ is a fixed positive rational (so we can safely multiply by its reciprocal).  The remaining operations are all single-step evaluations in the language of expressible valuation features, completing the proof.
\end{proof}

\noindent Let us reflect on the meaning of these operations.  The quantity $b + \World(\sum_{i<\nn}\trade_i(\pt_{\leq i}) )$ is the amount of money the trader has available on day $\nn$ according to $\World$ (assuming they started with a budget of $b$), and $-\World(\trade_\nn)$ is the amount they're going to lose on day $\nn$ according to $\World$ as a function of the upcoming prices, and so the infimum above is the trader's trade on day $\nn$ scaled down such that they can't overspend their budget according to any world propositionally consistent with $\dt_\nn$.

\begin{lemma}[Properties of \budgeter]\label{lem:budgeter}
  Let $\Trader$ be any trader, and $\MP$ be any sequence of belief states. Given $\nn$ and $b$, let $B^b_\nn$ denote $\budgeter^\DP_\nn(b,\trade_{\leq \nn},\pt_{\le \nn-1})$. Then:
  \begin{enumerate}
    \item for all $b,\nn\in\NN^+$, if for all $\mm \le \nn$ and $\World \in \pcworlds(\dt_\mm)$ we have $\World\mleft(\sum_{i\leq m}\trade_i(\MP)\mright)>-b$, then
      \[
        B_\nn^b(\MP)=\trade_\nn(\MP)\text{;}
      \]
    \item for all $b,n\in\NN^+$ and all $\World\in\pcworlds(\dt_\nn)$, we have
      \[
        \World\mleft(\textstyle\sum_{i\leq n}B_i^b(\MP)\mright)\geq -b\text{;}
      \]
    \item If $\seq \trade$ exploits $\MP$ relative to $\DP$, then so does $\seq {B}^b$ for some $b\in\NN^+$.
  \end{enumerate}
\end{lemma}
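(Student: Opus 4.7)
\noindent\emph{Proof proposal.} The three items are tightly interlocked: Property~1 says that when $\trade$ stays strictly within budget, $\budgeter$ does nothing, and Property~2 says that $B^b$ keeps the running holdings above $-b$ regardless. I plan to prove~1 directly, then~2 by induction (using~1 inside the inductive step), and finally~3 as an immediate corollary.

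For Property~1, the hypothesis that $W(\sum_{i\le m}\trade_i(\MP)) > -b$ for every $m \le n$ and $W \in \pcworlds(\dt_m)$ ensures in particular that the ``if'' branch of $\budgeter_n$ never triggers, since that branch requires some $m < n$ and $W \in \pcworlds(\dt_m)$ with $W(\sum_{i \le m}\trade_i(\MP)) \le -b$. So we land in the ``else'' branch, and I must check that its scaling factor equals $1$. The hypothesis at $m = n-1$ makes the denominator $b + W(\sum_{i<n}\trade_i(\MP))$ strictly positive for every $W \in \pcworlds(\dt_n) \subseteq \pcworlds(\dt_{n-1})$, and the hypothesis at $m = n$ rearranges to $-W(\trade_n(\MP)) < b + W(\sum_{i<n}\trade_i(\MP))$, so the ratio inside the $\max$ is strictly less than $1$ for every such $W$. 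Hence $\max(1,\cdot) = 1$ at every $W$, the infimum is $1$, and $B_n^b(\MP) = \trade_n(\MP)$.

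For Property~2 I induct on $n$. If the ``if'' branch fires at $n$, then $B_n^b = 0$, and since $W \in \pcworlds(\dt_n) \subseteq \pcworlds(\dt_{n-1})$ the inductive hypothesis immediately yields $W(\sum_{i \le n} B_i^b(\MP)) = W(\sum_{i < n} B_i^b(\MP)) \ge -b$. Otherwise the condition fails at $n$, which is exactly the statement that Property~1's hypothesis holds at every $i \le n-1$, so $B_i^b(\MP) = \trade_i(\MP)$ for all $i < n$. Writing the scaling factor as $c = \bigl[\inf_{W \in \pcworlds(\dt_n)} \max(1, x_W)\bigr]^{-1}$ with $x_W := -W(\trade_n(\MP))/(b + W(\sum_{i<n}\trade_i(\MP)))$ (the denominator is positive by the same argument as above), a short case split on the sign and magnitude of $x_W$ shows $c\cdot x_W \le 1$ for every $W \in \pcworlds(\dt_n)$. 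Clearing denominators gives $W(\sum_{i<n}\trade_i(\MP)) + c\cdot W(\trade_n(\MP)) \ge -b$, i.e., $W(\sum_{i\le n} B_i^b(\MP)) \ge -b$.

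Property~3 then follows quickly. If $\seq\trade$ exploits $\MP$ relative to $\DP$ then the plausible assessments of $\sum_{i\le n}\trade_i(\MP)$ are bounded below by some real $L$ but unbounded above; pick any $b \in \NN^+$ with $b > -L$. Then every plausible assessment of $\trade$'s running sum is strictly greater than $-b$, so Property~1 applies at every $n$ and $B_i^b(\MP) = \trade_i(\MP)$ for every $i$. The plausible assessments of $\seq B^b$ therefore coincide with those of $\seq \trade$, so they inherit unboundedness above (and a lower bound is automatic from Property~2), making $\seq B^b$ exploit $\MP$. The main subtlety in the whole argument is the definitional split in $\budgeter$ between $\trade$'s running losses (which drive both the ``if'' test and the scaling denominator) and $B^b$'s running losses (which Property~2 must bound): Property~1 is the bridge, letting the inductive step of Property~2 translate the inductive hypothesis on $B^b$'s prior partial sums into the $\trade$-based quantities that appear in the definition of the scaling factor.
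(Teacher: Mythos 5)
Your route is the paper's: Part~1 is proved by the same computation (the hypothesis at $m<\nn$ rules out the ``if'' branch, and the hypotheses at $m=\nn-1$ and $m=\nn$ force the ratio below $1$ for every $\World\in\pcworlds(\dt_\nn)$, so the scaling factor is $1$); Part~2 is the paper's minimal-counterexample argument recast as a direct induction, with the same use of Part~1 to replace $B^b_i$ by $\trade_i$ for $i<\nn$; Part~3 is the same easy corollary (your choice of $b$ is if anything stated more carefully than the paper's).

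There is, however, one step that fails as literally written. In Part~2 you define the scaling factor as $c=\bigl[\inf_{\World\in\pcworlds(\dt_\nn)}\max(1,x_\World)\bigr]^{-1}$, but $\budgeter^\DP$ takes the infimum of the reciprocals, i.e.\ $c=\inf_{\World\in\pcworlds(\dt_\nn)}\bigl[\max(1,x_\World)\bigr]^{-1}=\bigl[\sup_{\World}\max(1,x_\World)\bigr]^{-1}$. The placement matters: with the infimum inside the reciprocal, your key claim that $c\cdot x_\World\le 1$ for every $\World\in\pcworlds(\dt_\nn)$ is false in general (two plausible worlds with $x_{\World_1}=1/2$ and $x_{\World_2}=10$ give $c=1$ and $c\,x_{\World_2}=10$, and the running total in $\World_2$ can then drop below $-b$), so the budget bound would fail precisely in the most dangerous world. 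With the definition as actually given, $c\le\bigl[\max(1,x_\World)\bigr]^{-1}$ holds pointwise in $\World$, your case split ($x_\World\le 0$ is trivial; $x_\World>0$ gives $c\,x_\World\le x_\World/\max(1,x_\World)\le 1$) goes through, and clearing the positive denominator finishes the step --- this pointwise use of the infimum is exactly what the paper's Part~2 does when it lower-bounds $\World(B^b_\nn)$ by evaluating the expression at the particular world $\World$. So the error is a transcription slip rather than a missing idea, but as stated it is the one place where your argument does not hold; with the correct scaling factor the proof is complete.
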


\tinysection{Part 1}

\begin{proof}
  Suppose that for some time step $n$, for all $m\leq n$ and all worlds $\World\in\pcworlds(\dt_\mm)$ plausible at time $m$ we have
  \[
    \World\left({\textstyle \sum_{i \leq \mm}
    \trade_{i}(\MP)} \right)> -b,
  \]
  so by linearity of $\World(\any)$, we have in particular that
  \[
    b+ \World\left({\textstyle \sum_{i \leq \nn-1 }
    \trade_{i}(\MP)} \right)
    > -\World\left({\textstyle 
    \trade_{\nn}(\MP)} \right).
  \]
  Since $\nn-1\le \nn$, the LHS is positive, so we have
  \[
    1> \frac{-\World\mleft({\textstyle \trade_{\nn}(\MP)}\mright)}
            {b+ \World\mleft({\textstyle \sum_{i \leq \nn-1} \trade_{i}(\MP)}\mright)}.
  \]
  Therefore, by the definition of $\budgeter^\DP$ (and $\trade_i(\MP) = \trade_i(\pt_{\le i})$), since the ``if'' clause doesn't trigger by the assumption on the $ \World\left({\textstyle \sum_{i \leq \mm}
    \trade_{i}(\MP)} \right)$ for $m<n$,
  \begin{align*}
    B_\nn^b(\MP) &\equiv
      \trade_\nn(\MP)
      \cdot
      \inf_{\World\in\pcworlds(\dt_\nn)}
      \left.
      1 \middle/ \max\mleft(1,
        \frac{-\World(\trade_\nn(\MP))}
             {b+\World\mleft({\textstyle \sum_{i \leq \nn-1} \trade_{i}(\MP)}\mright)}
      \mright)
      \right.\\
    &= \trade_\nn(\pt_{\leq \nn}) \cdot \inf_{\World\in\pcworlds(\dt_\nn)} 1/1 \\
    &= \trade_\nn(\MP)
  \end{align*}
  as needed.
\end{proof}

\tinysection{Part 2}
\begin{proof}
  Suppose for a contradiction that for some $n$ and some $\World\in\pcworlds(\dt_\nn)$, 
  \[
      \World\left({\textstyle \sum_{i \leq \nn} B^b_{i}(\MP)} \right) < -b.
  \]
  Assume that $n$ is the least such day, and fix some such $\World\in\pcworlds(\dt_\nn)$. By the minimality of $n$ it must be that $\World(B^b_\nn(\MP))< 0$, or else we would have $\World\mleft({\textstyle \sum_{i \leq \nn-1}
  B^b_{i}(\MP)}\mright) < -b$. Since $B^b_\nn(\MP)$ is a non-negative multiple of $\trade_\nn(\MP)$, we also have $\World(T_\nn(\MP))< 0$.  However, since $B^b_{n}\not\equiv 0$,
  from the definition of $\budgeter^\DP$ we have
  \begin{align*}
    \World\mleft({\textstyle B^b_{n}} \mright) 
    &= 
      \World\mleft({\trade_\nn(\MP)}\mright)
      \cdot
      \left(
        \left.
        \inf_{\World'\in\pcworlds(\dt_\nn)}
        1 \middle/ \max\mleft(1, \frac{-\World'(\trade_\nn(\MP))}
                                     {b+\World'({\textstyle \sum_{i \leq \nn-1} \trade_{i}(\MP)})}
        \mright)
        \right.
      \right) \\
    &\geq 
      \World\mleft({\trade_\nn(\MP)}\mright)
      \cdot 
      \left.
      1 \middle/ \max\mleft(1, \frac{-\World(\trade_\nn(\MP))}
                                   {b+\World({\textstyle \sum_{i \leq \nn-1} \trade_{i}(\MP)})}
      \mright)
      \right.
      \text{\ (since  $\World\mleft({\trade_\nn(\MP)}\mright) <0$)}  \\
    &\geq 
      \World\mleft({\trade_\nn(\MP)}\mright)
      \cdot 
      \frac{b+\World({\textstyle \sum_{i \leq \nn-1} \trade_{i}(\MP)})}
           {-\World(\trade_\nn(\MP))}  \\
           \shortintertext{since $-\World\mleft({\trade_\nn(\MP)}\mright) >0$ and $B^b_{n}\not\equiv 0$ implies $b+\World({\textstyle \sum_{i \leq \nn-1} \trade_{i}(\MP)})>0$. Hence, this}
    &= 
      -b-\World\mleft({\textstyle \sum_{i \le \nn} \trade_{i}(\MP)} \mright).
\end{align*}

Further, since $B^b_{n}\not\equiv 0$, we have
\begin{align*}
  \text{for all\ } j \le n-1\colon\quad& 
  \World\mleft({\textstyle \sum_{i \le j}\trade_{i}(\MP)}\mright) > -b,
  \text{\ which by Part 1 implies that}\\
  \text{for all\ } j \le n-1 \colon\quad& 
  B^b_j(\MP) = \trade_j(\MP), \text{ therefore}\\
  &\World(B^b_\nn)\geq  -b-\World\mleft({\textstyle \sum_{i \le \nn-1} B^b_{i}(\MP)} \mright),\text{ hence}\\
  &\World\mleft({\textstyle \sum_{i \le n} B^b_{i}(\MP)}\mright) \geq -b.\qedhere
\end{align*}
\end{proof}

\tinysection{Part 3}
\begin{proof}
  By definition of exploitation, the set 
  \[
    \left\{ \World\mleft({\textstyle \sum_{i \le \nn} \trade_i(\MP)}\mright) \,\middle|\, \nn\in\NN^+, \World\in\pcworlds(\dt_\nn) \right\}
  \]
  is unbounded above, and is strictly bounded below by some integer $b$. Then by Part 1, for all $n$ we have $\trade_\nn(\MP) = B^b_\nn(\MP)$. Thus, 
  \[
    \left\{ \World\mleft({\textstyle \sum_{i \le \nn} B^b_i(\MP)}\mright) \,\middle|\, \nn\in\NN^+, \World\in\pcworlds(\dt_\nn) \right\}
  \]
  is unbounded above and bounded below, i.e., $\seq{B}^b$ exploits $\MP$ relative to $\DP$.
\end{proof}

%
%
%
%
%
%
%
%
%
%
%
%
%
%
\subsection{Constructing \texorpdfstring{\tradingfirm}{TradingFirm}}

Next we define \tradingfirm, which combines an (enumerable) infinite sequence of \ec traders into a single ``supertrader'' that exploits a given belief sequence $\MP$ relative to $\DP$ if any \ec trader does. It does this by taking each \ec trader, budgeting it, and scaling its trades down so that traders later in the sequence carry less weight to begin with.

To begin, we will need a computable sequence that includes every \ec trader at least once. The following trick is standard, but we include it here for completeness:

\begin{proposition}[Redundant Enumeration of \ec Traders]\label{prop:enumeration}
  There exists a computable sequence $\smash{(\Trader^k)_{k\in\NN^+}}$ of \ec traders such that every \ec trader occurs at least once in the sequence.
\end{proposition}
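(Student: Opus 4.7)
The plan is to use a standard clocking trick, since the set of polynomial-time Turing machines is not computably enumerable (we cannot decide whether an arbitrary machine runs in polynomial time), but the set of ``clocked'' machines is.

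First, I would fix a computable bijection between $\NN^+$ and the set of pairs $(M, c, d)$, where $M$ is (the code of) a Turing machine and $c, d \in \NN^+$ are coefficients specifying a polynomial bound $p_{c,d}(n) := c \cdot n^d + c$. Let $k \mapsto (M_k, c_k, d_k)$ denote this enumeration. For each $k$, I define the trader $\Trader^k = (\trade^k_1, \trade^k_2, \ldots)$ as follows: on input $n$, simulate $M_k$ on input $n$ for at most $p_{c_k, d_k}(n)$ steps. If within that budget $M_k$ halts and outputs (the encoding of) a well-formed $n$-strategy $\trade$, set $\trade^k_n := \trade$; otherwise set $\trade^k_n := 0$ (the trivial $n$-strategy that buys and sells nothing). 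Well-formedness of the output can be checked in time polynomial in the size of the output (which is itself bounded by $p_{c_k, d_k}(n)$), since we need only verify syntactically that the output is an affine combination of the form prescribed by \Def{tradestrat} with expressible-feature coefficients of rank $\le n$.

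Two things need to be verified. First, the sequence $(\Trader^k)_{k \in \NN^+}$ is computable: given $k$ and $n$, we can compute $(M_k, c_k, d_k)$ from $k$, evaluate $p_{c_k, d_k}(n)$, simulate $M_k$ for that many steps, and either return its output (after a well-formedness check) or return $0$. Each $\Trader^k$ is itself e.c.\ because its $n$-th strategy is produced in time polynomial in $n$ (the simulation takes $p_{c_k, d_k}(n)$ steps, and the well-formedness check adds only polynomial overhead). Second, every e.c.\ trader $\Trader$ appears in the sequence: by \Def{ec}, there is some Turing machine $M$ and some polynomial bound $q(n)$ such that $M$ on input $n$ outputs $\trade_n$ within $q(n)$ steps. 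Pick $c, d$ large enough that $p_{c,d}(n) \ge q(n)$ for all $n \in \NN^+$; then at the index $k$ corresponding to $(M, c, d)$, the clocked simulation always completes in time, its output is always a well-formed $n$-strategy, and so $\trade^k_n = \trade_n$ for every $n$, giving $\Trader^k = \Trader$.

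There is no real obstacle here; the only mildly delicate points are making sure the clocking polynomial is chosen from a computable family (hence the explicit $p_{c,d}$ parametrization rather than enumerating all polynomials) and that the well-formedness check on $M_k$'s output is itself polynomial-time, so that each $\Trader^k$ is genuinely e.c.\ rather than merely computable. Both are routine.
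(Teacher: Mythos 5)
Your proof is correct and is essentially the same argument as the paper's: the paper likewise enumerates pairs of (Turing machine, polynomial clock), runs the machine for the clocked number of steps, returns its output if it is a valid $n$-strategy and the zero strategy otherwise, and notes that every \ec{}trader arises from some pair with a sufficiently generous clock. Your explicit $(c,d)$-parametrization of the polynomial bounds and the remark on the polynomial-time well-formedness check are just minor implementation details of the same construction.
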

\newcommand{\ect}{\operatorname{ECT}}
\begin{proof}
\newcommand{\setof}[1]{\{\text{#1}\}}
Fix a computable enumeration of all ordered pairs $(M_k,f_k)$ where $M_k$ is a Turing machine and $f_k$ is a polynomial with coefficients in $\ZZ$.  We define a computable function 
\[\ect: \setof{Turing machines} \times \setof{Integer polynomials} \times (n \in \NN^+) \to \setof{$\nn$-strategies}\]
that runs as follows: $\ect(M,f,n)$ first runs $M(n)$ for up to $f(n)$ time steps, and if in that time $M(n)$ halts and returns a valid $\nn$-strategy $\trade_\nn$, then $\ect(M,f,n)$ returns that strategy, otherwise it returns 0 (as an $\nn$-strategy).  Observe that $\ect(M_k,f_k,\any)$ is always an \ec trader, and that every \ec trader occurs as $\ect(M_k,f_k,\any)$ for some~$k$.
\end{proof}

\begin{defprop}[$\tradingfirm$]
  Given any deductive process $\DP$, there exists a computable function, henceforth called $\tradingfirm^\DP$, satisfying the following definition.  
  By Proposition~\ref{prop:enumeration}, we fix a computable enumeration $\Trader^k$ including every \ec trader at least once, and let 
  \[
    S^k_\nn=\begin{cases}T^k_\nn&\mbox{if }\nn\geq k\\0&\mbox{otherwise}.\end{cases}
  \]
  Given input $n\in\NN^+$ and an $(\nn-1)$-belief history $\pt_{\leq{n-1}}$, $\tradingfirm^\DP$ returns an $\nn$-strategy given by
  \begin{equation}\tag{\thetheorem}\label{eq:tradingfirm}
    \tradingfirm^\DP_\nn(\pt_{\leq \nn-1}) = \sum_{k\in  \NN^+}\sum_{b\in\NN^+}2^{-k-b}\cdot \budgeter^\DP_\nn(b,S^k_{\leq n} ,\pt_{\le \nn-1}).
  \end{equation}
\end{defprop}
\begin{proof}
We need only show that the infinite sum in \Eqn{tradingfirm} is equivalent to a computable finite sum.  
Writing 
\[B^{b,k}_\nn = \budgeter^\DP_\nn(b,S^k_{\leq n} ,\pt_{\le \nn-1}),
\]
(an $n$-strategy), the sum on the RHS of~\eq{tradingfirm} is equivalent to
\[
\sum_{k\in\NN^+}\sum_{b\in\NN^+} 2^{-k-b}\cdot B^{b,k}_\nn .
\]
Since $S^k_\nn = 0$ for $k>\nn$, we also have $B^{b,k}_\nn=0$ for $k>\nn$, so the sum is equivalent to
\[
=\sum_{k\leq n}\sum_{b\in\NN^+} 2^{-k-b}\cdot B^{b,k}_\nn .
\]

Now, assume $C_n$ is a positive integer such that $\sum_{i \leq \nn}\|S^k_i(\seq\Valuation)\|_1<C_n$ for all $k \leq  \nn$ and any valuation sequence $\seq\Valuation$ (we will show below that such a $C_n$ can be computed from $\pt_{\le \nn-1}$). Since the valuations $\World$ and $\MP$ are always $[0,1]$-valued, for any $m \leq n$ the values $ \World\mleft(\sum_{i\leq m}S^k_i(\pt_{\leq m})\mright) $ are bounded below by $-\sum_{i \leq \mm}\|S^k_i(\pt_{\leq m})\|_1>-C_n$.   By property 1 of $\budgeter^\DP$  (Lemma~\ref{lem:budgeter}.1), $B^{b,k}_\nn = S^k_\nn$ when $b > C_n$, so the sum is equivalent to
\begin{align*}
  =& \left(\sum_{k\leq n}\sum_{b \leq C_n}2^{-k-b}\cdot B^{b,k}_\nn\right)+\left(\sum_{k\leq n}\sum_{b> C_n}2^{-k-b}\cdot S^k_\nn\right)\\
  =& \left(\sum_{k\leq n}\sum_{b \leq C_n}2^{-k-b}\cdot B^{b,k}_\nn\right)+\left(\sum_{k\leq n}2^{-k-C_n}\cdot S^k_\nn\right)
\end{align*}
which is a finite sum of trading strategies, and hence is itself a trading strategy. 
Since the $B^{b,k}_\nn$ and the $S^k_\nn$ are computable from $\pt_{\le \nn-1}$, this finite sum is computable.

It remains to justify our assumption that integers $C_n$ can be computed from $\pt_{\le \nn-1}$ with  $C_n>\sum_{i \leq \nn}\|S^k_i(\seq\Valuation)\|_1$ for all $k \leq \nn$ and  $\seq\Valuation$.  To see this, first consider how to bound a single expressible feature $\xi$. We can show by induction on the structure of $\xi$ (see \ref{app:expressiblefeatures}) that, given constant  bounds on the absolute value $|\zeta(\seq\Valuation)|$ of each subexpression $\zeta$ of $\xi$, we can compute a constant bound on $|\xi(\seq\Valuation)|$; for example, the bound on $\zeta \cdot \eta$ is the product of the bound on $\zeta$ and the bound on $\eta$. Thus, given a single trading strategy $S^k_i$ and any $\phi$, we can compute a constant upper bound on $|S^k_i[\phi](\seq\Valuation)|$ for all $\seq\Valuation$. Since $\|S^k_i(\seq\Valuation)\|_1 \leq \sum_{\phi \in \support(S^k_i)} 2 |S^k_i[\phi](\seq\Valuation)|$ and $\support(S^k_i)$ is computable, we can bound each $\|S^k_i(\seq\Valuation)\|_1$, and hence also $\sum_{i \leq \nn}\|S^k_i(\seq\Valuation)\|_1$, as needed.

\end{proof}

\begin{lemma}[Trading Firm Dominance]\label{lem:tfdom}
  Let $\MP$ be any sequence of belief states, and $\DP$ be a deductive process. If there exists any \ec trader $\Trader$ that exploits $\MP$ relative to $\DP$, then the sequence
  \[
    \left(\tradingfirm^\DP_\nn(\pt_{\leq \nn-1})\right)_{\nn\in\NN^+}
  \]
  also exploits $\MP$ (relative to $\DP$).
\end{lemma}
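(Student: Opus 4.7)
The plan is to use the redundant enumeration to find the given exploiting trader inside the sequence $(\Trader^k)$, pass it through $\budgeter^\DP$ to obtain a budgeted version that still exploits $\MP$, and then show that this single budgeted component dominates the weighted sum defining $\tradingfirm^\DP$ because all the other components are uniformly bounded below in every plausible world.

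First, let $\Trader$ be an \ec trader exploiting $\MP$. By Proposition~\ref{prop:enumeration}, $\Trader = \Trader^k$ for some $k\in\NN^+$, and the shifted sequence $\seq{S^k}$ differs from $\seq{T^k}$ only on the finitely many days $\nn<k$, so $\seq{S^k}$ also exploits $\MP$ relative to $\DP$ (boundedness below and unboundedness above are insensitive to a finite perturbation of the partial sums). Applying Lemma~\ref{lem:budgeter}.3 to $\seq{S^k}$ then produces an integer $b\in\NN^+$ such that the budgeted sequence $\seq{B^{b,k}}$, with $B^{b,k}_\nn := \budgeter^\DP_\nn(b,S^k_{\leq \nn},\pt_{\le \nn-1})$, still exploits $\MP$.

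Next I will use linearity. For any $\nn$ and any $\World\in\pcworlds(\dt_\nn)$, pulling $\World$ and the finite sum $\sum_{i\leq \nn}$ through the definition~\eq{tradingfirm} of $\tradingfirm^\DP$ gives
\[
  \World\mleft(\textstyle\sum_{i\leq \nn}\tradingfirm^\DP_i(\MP)\mright)
  \;=\;\sum_{b',k'\in\NN^+} 2^{-b'-k'}\,\World\mleft(\textstyle\sum_{i\leq \nn} B^{b',k'}_i(\MP)\mright).
\]
By Lemma~\ref{lem:budgeter}.2 each summand satisfies $\World(\sum_{i\leq \nn} B^{b',k'}_i(\MP))\geq -b'$, so the whole sum is bounded below by
\[
  -\sum_{b',k'} 2^{-b'-k'} b' \;=\; -\mleft(\sum_{k'\geq 1} 2^{-k'}\mright)\mleft(\sum_{b'\geq 1} b'\,2^{-b'}\mright) \;=\; -2,
\]
uniformly over $\nn$ and $\World$. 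Isolating the $(b,k)$-term and bounding the rest below the same way yields
\[
  \World\mleft(\textstyle\sum_{i\leq \nn}\tradingfirm^\DP_i(\MP)\mright)
  \;\geq\; 2^{-b-k}\,\World\mleft(\textstyle\sum_{i\leq \nn} B^{b,k}_i(\MP)\mright) - 2,
\]
and since $\seq{B^{b,k}}$ exploits $\MP$, the right-hand side is unbounded above as $\nn$ and $\World\in\pcworlds(\dt_\nn)$ vary. Thus the plausible cumulative holdings of $\tradingfirm^\DP$ are bounded below yet unbounded above, which is exploitation.

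The only step that I expect to require real care is justifying the interchange $\World(\sum_{i} \sum_{b',k'} \cdots) = \sum_{b',k'}\World(\sum_i \cdots)$. The outer sum over $i$ is finite, and for each fixed $i$ the sum $\sum_{b',k'} 2^{-b'-k'} B^{b',k'}_i$ collapses to a finite $\RR$-combination by the argument already given in the construction of $\tradingfirm^\DP$ (using the bounds $C_i$ on trading-strategy magnitudes together with Lemma~\ref{lem:budgeter}.1 which forces $B^{b',k'}_i\equiv S^{k'}_i$ once $b'>C_i$). So the rearrangement is a purely algebraic finite regrouping once it is applied to a specific $\nn$, and no analytic convergence subtleties arise. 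Everything else reduces to citing the two numbered conclusions of Lemma~\ref{lem:budgeter}.
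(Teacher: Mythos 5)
Your proof is correct and follows essentially the same route as the paper's: locate the exploiting trader in the enumeration, pass to $\seq{S^k}$ (noting the difference in holdings is a fixed finite combination with uniformly bounded plausible value), apply Lemma~\ref{lem:budgeter}.3 to get an exploiting $\seq{B^{b,k}}$, and then use Lemma~\ref{lem:budgeter}.2 to bound the remaining weighted summands below by $-\sum_{b',k'}2^{-b'-k'}b' = -2$. Your explicit justification of the finite regrouping in the sum over $(b',k')$ is a point the paper's proof leaves implicit, but it is the same argument.
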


\begin{proof}
  Suppose that some \ec trader exploits $\MP$. That trader occurs as $\seq{\trade}^k$ for some $k$ in the enumeration used by $\tradingfirm^\DP$.  First, we show that $\seq{S}^k$ (from the definition of $\tradingfirm^\DP$) also exploits $\MP$.  It suffices to show that there exist constants $c_1\in\RR^+$ and $c_2\in\RR$ such that for all $n\in\NN^+$ and $\World\in\pcworlds(\dt_\nn)$, 
  \[
    \World\mleft(\textstyle\sum_{i\leq n}S^k_i(\MP)\mright)\geq c_1\cdot \World\mleft(\textstyle\sum_{i\leq n}T^k_i(\MP)\mright)+c_2.
  \]
  Taking $c_1=1$ and $c_2=-\sum_{i<k}\|T^k_i(\MP)\|_1$, where $\|\cdot\|_1$ denotes the $\ell_1$ norm on $\RR$-combinations of sentences, we have 
  \[
    \World\mleft(\textstyle\sum_{i\leq n}S^k_i(\MP)\mright)\geq 1\cdot \World\mleft(\textstyle\sum_{i\leq n}T^k_i(\MP)\mright)- \left(\textstyle\sum_{i<k}\|T^k_i(\MP)\|_1\right),
  \]
  so $\seq{S}^k$ exploits $\MP$.  By Lemma~\ref{lem:budgeter}.3, we thus have that for some $b\in\NN^+$, the trader $\seq{B}^{b,k}$ given by
  \[B^{b,k}_\nn := \budgeter^\DP_\nn(b,S^k_{\leq n},\pt_{\le \nn-1})\]
  also exploits $\MP$.

  Next, we show that the trader $\seq\supertrade$ given by
  \[
    \supertrade_\nn := \tradingfirm^\DP_\nn(\pt_{\le \nn-1})
  \] 
  exploits $\MP$. Again, it suffices to show that there exist constants $c_1\in\RR^+$ and $c_2\in\RR$ such that for all $n\in\NN^+$ and $\World\in\pcworlds(\dt_\nn)$, 
  \[\World\left(\sum_{i\leq n}\supertrade_i\right)\geq c_1\cdot \World\left(\sum_{i\leq n}B^{b,k}_i\right)+c_2.\]
  It will suffice to take $c_1=2^{-k-b}$ and $c_2=-2$, because we have 
  \begin{align*}
    &\World\mleft(\sum_{i\leq n}\supertrade_i\mright)-2^{-k-b}\cdot \World\mleft(\sum_{i\leq n}B^{b,k}_i\mright)\\
    =&\sum_{(k^\prime,b^\prime)\neq(k,b)}2^{-k^\prime-b^\prime}\cdot \World\mleft(\sum_{i\leq n}B^{b^\prime,k^\prime}_i\mright)\\
    \geq& \sum_{(k^\prime,b^\prime)\neq(k,b)}2^{-k^\prime-b^\prime}\cdot (-b^\prime)\geq -2
  \end{align*}
by Lemma~\ref{lem:budgeter}.2,   hence
  \[\World\mleft(\sum_{i\leq n}\supertrade_i\mright)\geq 2^{-k-b}\cdot \World\mleft(\sum_{i\leq n}B^{b,k}_i\mright) -2.\]
  Thus, $\seq\supertrade$ exploits $\MP$.
\end{proof}

%
%
%
%
%
%
%
%
%
%
%
%
%
%
\subsection{Constructing \texorpdfstring{$\seq\LIA$}{LIA}}

We are finally ready to build $\LIA$.  With the subroutines above, the idea is now fairly simple: we pit \marketmaker and \tradingfirm against each other in a recursion, and \marketmaker wins. Imagine that on each day, \tradingfirm outputs an ever-larger mixture of traders, then \marketmaker carefully examines that mixture and outputs a belief state on which that mixture makes at most a tiny amount of money on net.

\begin{defalg}[A Logical Induction Algorithm]\label{alg:li}\label{def:lia}
  Given a deductive process $\DP$, define the computable belief sequence $\seq{\LIA}=(\LIA_1, \LIA_2, \ldots)$ recursively by
  \[
    \LIA_\nn := \marketmaker_\nn(\tradingfirm^\DP_\nn(\LIA_{\le \nn-1}), \LIA_{\le \nn-1}),
  \]
  beginning from the base case $\LIA_{\leq 0}:=()$.
\end{defalg}
\begin{theorem}[\LIA is a Logical Inductor]\label{thm:lia}
 $\seq{\LIA}$ satisfies the \lic{} relative to $\DP$, i.e., LIA is not exploitable by any \ec trader relative to the deductive process $\DP$. 
\end{theorem}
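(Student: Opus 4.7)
The plan is to derive a contradiction directly from the two main lemmas about the subroutines. The key observation is that although the recursion
\[
  \LIA_\nn := \marketmaker_\nn(\tradingfirm^\DP_\nn(\LIA_{\le \nn-1}),\; \LIA_{\le \nn-1})
\]
is mutually recursive, once the sequence $\seq{\LIA}$ has been fixed by the recursion, the induced supertrader $\seq\supertrade$ given by $\supertrade_\nn := \tradingfirm^\DP_\nn(\LIA_{\le \nn-1})$ is just a particular trader (a fixed sequence of $\nn$-strategies), and $\seq{\LIA}$ is then exactly the belief sequence the \marketmaker recursion from Lemma~\ref{lem:mm} produces when fed that trader as input. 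Nothing in the statement of Lemma~\ref{lem:mm} cares how $\seq\supertrade$ was constructed.

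First I would assume for contradiction that some \ec trader $\Trader$ exploits $\seq{\LIA}$ relative to $\DP$. Applying Lemma~\ref{lem:tfdom} (Trading Firm Dominance) with $\MP := \seq{\LIA}$, the supertrader $\seq\supertrade$ defined above also exploits $\seq{\LIA}$ relative to $\DP$. Next I would apply Lemma~\ref{lem:mm} (MarketMaker Inexploitability) to the trader $\seq\supertrade$: since by construction $\LIA_\nn = \marketmaker_\nn(\supertrade_\nn, \LIA_{\le \nn-1})$, the belief sequence defined there is literally $\seq{\LIA}$, so Lemma~\ref{lem:mm} concludes that $\seq\supertrade$ does not exploit $\seq{\LIA}$ relative to any deductive process. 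This contradicts the previous conclusion, so no \ec trader exploits $\seq{\LIA}$, i.e., $\seq{\LIA}$ satisfies the \lic{} relative to $\DP$.

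The only conceptual subtlety, which I expect to be the main obstacle to state cleanly rather than to overcome, is the legitimacy of this "tying the knot" step: one must verify that $\seq\supertrade$ really is a trader in the sense of Definition~\ref{def:trader}, i.e., that each $\supertrade_\nn$ is a valid $\nn$-strategy. This is immediate from the definition of $\tradingfirm^\DP$, whose output is shown in its construction to be an $\nn$-strategy computable from $\LIA_{\le \nn-1}$. All the heavy lifting—continuity plus Brouwer to guarantee per-day losses at most $2^{-\nn}$ against any trader in Lemma~\ref{lem:mm}, and the doubly-indexed $2^{-k-b}$ mixture plus the budgeting argument of Lemma~\ref{lem:budgeter} to make $\tradingfirm^\DP$ dominate every \ec trader in Lemma~\ref{lem:tfdom}—has already been done in the preceding subsections, so the final theorem is just a two-line composition of those results.
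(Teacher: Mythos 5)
Your proposal is correct and is essentially the paper's own proof: it combines Lemma~\ref{lem:tfdom} and Lemma~\ref{lem:mm}, noting that the recursion defining $\seq{\LIA}$ makes it exactly the market that \marketmaker produces against the supertrader $\seq\supertrade$, so $\seq\supertrade$ cannot exploit it while dominance says it would have to. The extra observation about $\seq\supertrade$ being a legitimate trader is fine but already implicit in the definition of \tradingfirm.
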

\begin{proof}\label{proof:lia}
  By \Lem{tfdom}, if any \ec trader exploits $\seq{\LIA}$ (relative to $\DP$), then so does the trader $\seq\supertrade := (\tradingfirm^\DP_\nn(\LIA_{\le \nn-1}))_{\nn\in\NN^+}$. By \Lem{mm}, $\seq\supertrade$ does not exploit $\seq{\LIA}$. Therefore no \ec trader exploits $\seq{\LIA}$. 
\end{proof}

\subsection{Questions of Runtime and Convergence Rates}\label{sec:bounds}

In this paper, we have optimized our definitions for the theoretical clarity of results rather than for the efficiency of our algorithms.   This leaves open many interesting questions about the relationship between runtime and convergence rates of logical inductors that have not been addressed here.  Indeed, the runtime of $\LIA$ is underspecified because it depends heavily on the particular enumerations of traders and rational numbers used in the definitions of \tradingfirm and \marketmaker.  

For logical inductors in general, there will be some tradeoff between the runtime of $\pt_\nn$ as a function of $\nn$ and how quickly the values $\pt_\nn(\phi)$ converge to $\pt_\infty(\phi)$ as $\nn$ grows.  Quantifying this tradeoff may be a fruitful source of interesting open problems.  Note, however, the following important constraint on the convergence rate of any logical inductor, regardless of its implementation, which arises from the halting problem:

\begin{proposition}[Uncomputable Convergence Rates]
Let $\MP$ be a logical inductor over a theory $\Theory$ that \representscomputations, and suppose $f:\Sentences\times\QQ^+\to\NN$ is a function such that for every sentence $\phi$, if $\Theory\vdash\phi$ then $\pt_\nn(\phi) > 1-\varepsilon$ for all $\nn>f(\phi,\varepsilon)$.  Then $f$ must be uncomputable.
\end{proposition}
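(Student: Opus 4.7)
The plan is to derive a contradiction from the assumption that $f$ is computable, by showing that such an $f$ (together with the computable belief sequence $\MP$) would yield a decision procedure for $\{\phi \mid \Theory \vdash \phi\}$. Since $\Theory$ represents all computable functions, Church's theorem tells us this theoremhood set is not decidable, which will be the desired contradiction.

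Concretely, I would fix $\phi \in \Sentences$ and run two branches in parallel. Branch~(A) enumerates all finite $\Theory$-proofs and halts, outputting ``provable'', as soon as it finds one concluding $\phi$. Branch~(B) dovetails over all pairs $(k,n) \in \NN^+ \times \NN^+$ with $n > f(\phi, 1/k)$, computes $\pt_n(\phi)$ (which is computable because a logical inductor is a market, and markets are by definition computable sequences of pricings), and halts, outputting ``not provable'', as soon as it finds a pair with $\pt_n(\phi) \leq 1 - 1/k$. Both branches are computable under the assumption that $f$ is computable.

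For correctness, the ``provable'' direction is immediate from the hypothesis on $f$: if $\Theory \vdash \phi$, then branch~(A) terminates, while for every $n > f(\phi, 1/k)$ we have $\pt_n(\phi) > 1 - 1/k$, so branch~(B) never halts. For the ``not provable'' direction, suppose $\Theory \nvdash \phi$. By Non-Dogmatism (Theorem~\ref{thm:nd}), $\pt_\infty(\phi) < 1$, and by Convergence (Theorem~\ref{thm:con}) this limit exists, so there exist $\varepsilon > 0$ and $N \in \NN^+$ with $\pt_n(\phi) < 1 - \varepsilon$ for all $n \ge N$. Choosing any $k$ with $1/k < \varepsilon$ and setting $n^* := \max(N,\, f(\phi, 1/k)+1)$, the pair $(k, n^*)$ satisfies $n^* > f(\phi, 1/k)$ and $\pt_{n^*}(\phi) < 1 - \varepsilon < 1 - 1/k$, so branch~(B) will eventually enumerate and accept it.

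The main obstacle is that Non-Dogmatism gives only an existential, non-effective bound: nothing in hand lets us compute $\varepsilon$ or $N$ from $\phi$. This is exactly what the dovetailing over all $k$ and $n$ is designed to handle; we cannot name a witnessing $(k, n^*)$ in advance, but the search is guaranteed to find one. Given such a decision procedure for $\Theory$-theoremhood, Church's theorem for theories representing the computable functions yields the contradiction, so no such computable $f$ can exist.
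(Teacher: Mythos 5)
Your proposal is correct and follows essentially the same route as the paper's proof: run a proof search in parallel with a dovetailed search for a pair $(k,n)$ with $n > f(\phi,1/k)$ and $\pt_n(\phi) \le 1-1/k$, use the hypothesis on $f$ for the provable case and Non-Dogmatism (plus Convergence) for the unprovable case, and contradict the undecidability of $\Theory$-theoremhood for theories representing computable functions. The only difference is cosmetic: you make explicit that the "not provable" branch never fires when $\Theory \vdash \phi$, which the paper leaves implicit.
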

\begin{proof}
  Suppose for contradiction that such a computable $f$ were given.  We will show that $f$ could be used to computably determine whether $\Theory\vdash\phi$ for an arbitrary sentence $\phi$, a task which is known to be impossible for a first-order theory that \representscomputations. (If we assumed further that $\Theory$ were sound as a theory of the natural numbers, this would allow us to solve the halting problem by letting $\phi$ be a sentence of the form ``$M$ halts''.)

  Given a sentence $\phi$, we run two searches in parallel. If we find that $\Theory \vdash \phi$, then we return True. If we find that for some $b,n\in\NN^+$ we have
  \begin{equation}\tag{\thetheorem}\label{eq:nothalting}
    n>f\mleft(\phi,\frac{1}{b}\mright) \text{~and~} \pt_\nn(\phi)\leq 1-\frac{1}{b},
  \end{equation}
  then we return False. Both of these conditions are computably enumerable since $f$, $\pt_n$, and verifying witnesses to $\Theory \vdash \phi$ are computable functions. 

  Suppose first that  $\Theory \vdash \phi$. Then by definition of $f$ we have $\pt_\nn(\phi)>1-\frac{1}{b}$ for all   $n>f\mleft(\phi,\frac{1}{b}\mright)$, and hence we find a witness for $\Theory \vdash \phi$ and return True. Now suppose that  $\Theory \nvdash \phi$. Then by \Theorem{nd} we have that $\pt_\infty(\phi)<1-\varepsilon$ for some $\varepsilon>0$, and hence for some $b$ and all sufficiently large $n$ we have $\pt_n(\phi)<1-1/b$. Therefore \ref{eq:nothalting} holds and we return False. Thus our search always halts and returns a Boolean value that correctly indicates whether $\Theory \vdash \phi$. 
\end{proof}

%
%
%
%
%
%
%
%
%
%
%
%
%
%
%
%
%


\section{Selected Proofs}\label{sec:selectedproofs}

In this section, we exhibit a few selected stand-alone proofs of certain key theorems. These theorems hold for any $\MP$ satisfying the \lic{}, which we recall here:

\criterion*

\noindent Only our notation (\Sec{notation}), framework (\Sec{framework}), and continuous threshold indicator (\Def{ctsind}) are needed to understand the results and proofs in this section. Shorter proofs of these theorems can be found in the appendix, but those rely on significantly more machinery.


\subsection{Convergence}\label{sec:convergence}
Recall Theorem~\ref{thm:con} and the proof sketch given:

\convergence*

\sketchcon*

\noindent We will define a trader $\Trader$ that executes a strategy similar to this one, and hence exploits the market $\MP$ if $\lim_{\nn\to\infty} \pt_\nn(\phi)$ diverges. To do this, there are two technicalities we must deal with. First, the strategy outlined above uses a discontinuous function of the market prices $\pt_\nn(\phi)$, and therefore is not permitted. This is relatively easy to fix using the continuous indicator functions of \Def{ctsind}.

The second technicality is more subtle. Suppose we define our trader to buy $\phi$-shares whenever their price $\pt_\nn(\phi)$ is low, and sell them back whenever their price is high.  Then  it is possible that the trader makes the following trades in sequence against the market $\MP$: buy 10 $\phi$-shares on consecutive days, then sell 10 $\phi$-shares; then buy 100 $\phi$-shares consecutively, and then sell them off; then buy 1000 $\phi$-shares, then sell them off; and so on. Although this trader makes profit on each batch, it always spends more on the next batch, taking larger and larger risks (relative to the remaining plausible worlds). Then the plausible value of this trader's holdings will be unbounded below, and so it does not exploit $\MP$. In short, this trader is not tracking its budget, and so may have unboundedly negative plausible net worth. We will fix this problem by having our trader $\Trader$ track how many net $\phi$-shares it has bought, and not buying too many, thereby maintaining bounded risk. This will be sufficient to prove the theorem.

\begin{proof}[Proof of \Thm{con}]
  Suppose by way of contradiction that the limit $\pt_\infty$ does not exist. Then, for some sentence $\phi$ and some rational numbers
  $p\in [0, 1]$ and $\varepsilon > 0$, we have that $\pt_\nn(\phi) < p-\varepsilon$ infinitely often and $\pt_\nn(\phi) > p+\varepsilon$  infinitely often. We will show that $\MP$ can be exploited by a trader $\Trader$ who buys below and sells above these prices infinitely often,  contrary to the logical induction criterion. 

  \tinysection{Definition of the trader $\Trader$} We will define $\Trader$ recursively along with another sequence of $\EF$-combinations $\seq\hold$ (mnemonic: ``holdings'') which tracks the sum of the trader's previous trades.  Our base cases are 
  \[\strat_1 := \overline{0}\]
  \[\hold_1:=\overline{0} .\] 
  For $n>1$, we define a recurrence whereby $\Trader$ will buy some $\phi$-shares whenever $\pf{\phi} < p-\varepsilon/2$, up to  $(1-\hold_{\nn-1}[\phi])$ shares when $\pf{\phi} < p-\varepsilon$, and sells some $\phi$-shares whenever $\pf{\phi} > p+\varepsilon/2$, up to $\hold_{n-1}$ shares when $\pf{\phi} > p+\varepsilon$:
  \begin{equation}\label{eq:convergencetn}
    \begin{aligned}
      \strat_\nn[\phi] &:= (1-\hold_{\nn-1}[\phi]) \cdot \ctsind{\varepsilon/2}( \pf{\phi}< p-\varepsilon/2)\\
      &\hphantom{\; := (1)  } - \hold_{\nn-1}[\phi] \cdot \ctsind{\varepsilon/2}( \pf{\phi}> p+\varepsilon/2),\\
      \strat_\nn &:= \strat_\nn[\phi] \cdot(\phi - \pf{\phi})\\
      \hold_\nn &:= \hold_{n-1} + \strat_\nn .
    \end{aligned}
\end{equation}
  The trade coefficients $\strat[\phi]$ are chosen so that the number of $\phi$-shares $\hold_\nn[\phi]$ that it owns is always in $[0,1]$ (it never buys more than $1-\hold_{\nn-1}[\phi]$ and never sells more than $\hold_{\nn-1}[\phi]$).
  Observe that each $\strat_\nn$ is a valid trading strategy for day $\nn$ (see \Def{tradestrat}) because it is of the form $\exf\cdot(\phi-\pf\phi)$.  

  To complete the definition, we must argue that $\Trader$ is efficiently computable.  For this, observe that the $3\nn+2$ definition ($:=$) equations defining $\strat_1,\ldots,\strat_\nn$ above can be written down in time polynomial in $\nn$. Thus, a combination of feature expressions defining $\strat_\nn$ from scratch can be written down in $\poly(\nn)$ time (indeed, the expression is just a concatenation of $\nn$ copies of the three ``$:=$'' equations written above, along with the base cases), so $\seq\strat$ is efficiently computable.

  \tinysection{Proof of exploitation} 
  To show $\Trader$ exploits $\MP$ over $\DP$, we must compute upper and lower bounds on the set of plausible values $\World(\hold_\nn(\MP))$ (since $\hold_\nn = \sum_{i\leq n} \strat_\nn$) for worlds $\World\in\pcworlds(\dt_\nn)$. 

  While proving exploitation, we leave the constant argument $\MP$ implicit to reduce clutter, writing, e.g., $\phi^{*i}$ for $\phi^{*i}(\MP) = \pt_i(\phi)$, $\strat_\nn[\phi]$ for $\strat_\nn[\phi](\MP)$, and so on.

  First, since each $\strat_i[1] = -\strat_i[\phi]\cdot\pf[i]\phi$, the trader's ``cash'' held on day $\nn$ is
  \begin{align*}
    \hold_\nn[1] &= \sum_{i\le\nn} \strat_i[1] = - \sum_{i\le\nn} \strat_i[\phi]\cdot\phi^{*i}\\
    \shortintertext{which we can regroup, to compare the prices $\pf[i]\phi$ to $p$, as}
    \hold_\nn[1]&=  \sum_{i\le\nn} \mleft(\strat_i[\phi]\cdot(p-\phi^{*i})\mright) - p \cdot \sum_{i\le\nn} \strat_i[\phi]\\
    &=  \sum_{i\le\nn} \mleft(\strat_i[\phi]\cdot(p-\phi^{*i})\mright) - p \cdot \hold_\nn[\phi] . \\
    \intertext{Now, if $\phi^{*i} < p-\varepsilon/2$ then $\strat_i[\phi]\ge 0$,  if $\phi^{*i} > p+\varepsilon/2$ then $\strat_i[\phi]\le 0$, and if $p-\varepsilon/2 \le \phi^{*i} \le p+\varepsilon/2$ then $\strat_i[\phi] = 0$, so for all $i$ the product $\strat_i[\phi]\cdot(p-\phi^{*i})$ is equal to or greater than $|\strat_i[\phi]|\cdot\varepsilon/2$:} 
    \hold_\nn[1] &  \ge  - p \cdot \hold_\nn[\phi] + \sum_{i \le \nn}|\strat_i[\phi]| \cdot \varepsilon/2 . \\
    \intertext{Moreover, by design, $\hold_\nn[\phi] \in [0,1]$ for all $\nn$, so}
    \hold_\nn[1] &  \ge -p +  \sum_{i \le \nn}|\strat_i[\phi]| \cdot \varepsilon/2.
  \end{align*}
  Now, by assumption, $\pf[i]\phi$ lies above and below $(p-\varepsilon,p+\varepsilon)$ infinitely often, so from \Eqn{convergencetn}, $\hold_i[\phi]=0$ and $\hold_i[\phi]=1$ infinitely often.  Since the sum $\sum_{i \le \nn}|\strat_i[\phi]|$ is the total variation in the sequence $\hold_i[\phi]$, it must diverge (by the triangle inequality) as $\nn\to\infty$, so 
  \[
    \lim_{\nn\to\infty}\hold_\nn[1] = \infty .
  \]
  Moreover, in any world $\World$, the trader's non-cash holdings $\hold_\nn[\phi]\cdot\phi$ have value 
  $\World(\hold_\nn[\phi]\cdot\phi)= \hold_\nn[\phi]\cdot\World(\phi)\geq 0$ (since $\hold_\nn[\phi] > 0$), so its combined holdings $\hold_\nn = \hold_\nn[1] + \hold_\nn[\phi]\cdot\phi$ have value
  \[
    \World(\hold_\nn)= \World\mleft(\hold_\nn[1]+\hold_\nn[\phi]\cdot\phi\mright) = \hold_\nn[1]+\hold_\nn[\phi]\cdot\World(\phi) \ge  \hold_\nn[1]
  \]
  so in \emph{every} world $\World$ we have
  \[
    \lim_{\nn\to\infty}\World(\hold_\nn) = \infty .
  \]
  This contradicts that $\MP$ is a \li{}; therefore, the limit $\pt_\infty(\phi)$ must
  exist.
\end{proof}


\subsection{Limit Coherence}\label{sec:limitcoherence}
Recall Theorem~\ref{thm:lc}:

\limitcoherence*

\begin{proof}[Proof of \Thm{lc}]
  By \Theorem{con}, the limit $\pt_\infty(\phi)$ exists for all sentences $\phi\in\Sentences$. Therefore, $\Bayesian(\World(\phi)=1): =\pt_\infty(\phi)$ is well-defined as a function of basic subsets of the set of all consistent worlds $\pcworlds(\dt_\infty) = \cworlds(\Gamma)$.

  \citet{Gaifman:1964} shows that $\Bayesian$ extends to a probability measure over $\cworlds(\Theory)$ so long as the following three implications hold for all sentences $\phi$ and $\psi$:
  \begin{itemize}
    \item If $\Theory \vdash \phi$, then $\pt_\infty(\phi) = 1$.
    \item If $\Theory \vdash \lnot \phi$, then $\pt_\infty(\phi) = 0$.
    \item If $\Theory \vdash \lnot(\phi \land \psi)$, then $\pt_\infty(\phi \lor \psi) = \pt_\infty(\phi) + \pt_\infty(\psi)$.
  \end{itemize}
  Since the three conditions are quite similar in form, we will prove them simultaneously using four exemplar traders and parallel arguments. 

  \tinysection{Definition of the traders} Suppose that one of the three conditions is violated by a margin of $\varepsilon$, i.e., one of the following four cases holds:
  \begin{alignat*}{2}    
    &(L^1)\;\;\Theory \vdash \phi,  \text{ but } && (I^1)\;\; \pt_\infty(\phi) <  1-\varepsilon;\\
    &(L^2)\;\; \Theory \vdash \lnot \phi, \text{ but } && (I^2)\;\; \pt_\infty(\phi) > \varepsilon;\\
    &(L^3)\;\; \Theory \vdash \lnot(\phi \land \psi),  \text{ but } && (I^3)\;\; \pt_\infty(\phi \lor \psi) < \pt_\infty(\phi) + \pt_\infty(\psi) - \varepsilon;\text{ or}\\
    &(L^4)\;\; \Theory \vdash \lnot(\phi \land \psi), \text{ but \quad } &&(I^4)\;\; \pt_\infty(\phi \lor \psi) > \pt_\infty(\phi) + \pt_\infty(\psi) + \varepsilon.
  \end{alignat*}
  Let $i\in\{1,2,3,4\}$ be the case that holds.  Since the limit $\pt_\infty$ exists, there is some sufficiently large time $s_\varepsilon$ such that for all $n>s_\varepsilon$, the inequality $I^i$ holds with $\nn$ in place of $\infty$.  Furthermore, since $\DP$ is a $\Theory$-complete deductive process, for some sufficiently large $s_\Theory$ and all $n> s_\Theory$, the logical condition $L^i$ holds with $\dt_\nn$ in place of $\Theory$.  Thus, letting $s:= \max( s_\varepsilon, s_\Theory )$, for $n > s$ one of the following cases holds:
  \begin{alignat*}{2}    
    &(L^1_\nn)\;\;\dt_\nn \vdash \phi,  \text{ but } && (I^1_\nn)\;\; \pt_\nn(\phi) <  1-\varepsilon;\\
    &(L^2_\nn)\;\; \dt_\nn \vdash \lnot \phi, \text{ but } && (I^2_\nn)\;\; \pt_\nn(\phi) > \varepsilon;\\
    &(L^3_\nn)\;\; \dt_\nn \vdash \lnot(\phi \land \psi),  \text{ but } && (I^3_\nn)\;\; \pt_\nn(\phi \lor \psi) < \pt_\nn(\phi) + \pt_\nn(\psi) - \varepsilon;\text{ or}\\
    &(L^4_\nn)\;\; \dt_\nn \vdash \lnot(\phi \land \psi), \text{ but \quad } &&(I^4_\nn)\;\; \pt_\nn(\phi \lor \psi) > \pt_\nn(\phi) + \pt_\nn(\psi) + \varepsilon.
  \end{alignat*}
  (When interpreting these, be sure to remember that each $\dt_\nn$ is finite, and $\dt\vdash$ indicates using provability using only propositional calculus, i.e., modus ponens.  In particular, the axioms of first order logic are not assumed to be in $\dt_\nn$.)

  We now define, for each of the above four cases, a trader that will exploit the market $\MP$.  For $n>s$, let
  \begin{align*}
    \strat^1_\nn &:= \phi-\phi^{*\nn}\\
    \strat^2_\nn &:= -(\phi-\phi^{*\nn})\\
    \strat^3_\nn &:=  \left((\phi \lor \psi)-(\phi \lor \psi)^{*\nn}\right)-(\phi-\phi^{*\nn}) -(\psi-\psi^{*\nn})\\
    \strat^4_\nn &:=  (\phi-\phi^{*\nn})+(\psi-\psi^{*\nn}) -\left((\phi \lor \psi)-(\phi \lor \psi)^{*\nn}\right)
  \end{align*}
  and for $\nn\le s$ let $\strat^i_\nn = 0$. Each $\trade^i_\nn$ can be written down in $\Oo(\log(\nn))$ time (the constant $s$ can be hard-coded at a fixed cost), so these $\Trader^i$ are all \ec traders.

  \tinysection{Proof of exploitation} 
We leave the constant argument $\MP$ implicit to reduce clutter, writing, e.g., $\phi^{*i}$ for $\phi^{*i}(\MP) = \pt_i(\phi)$, $\strat_\nn[\phi]$ for $\strat_\nn[\phi](\MP)$, and so on.

  Consider case 1, where $L^1_\nn$ and $I^1_\nn$ hold for $\nn>s$, and look at the trader $\Trader^1$.  For any $n>s$ and any world $\World\in\pcworlds(\dt_\nn)$, by linearity of $\World$ we have
  \begin{align*}
    \World\mleft({\textstyle \sum_{i \leq n} \strat^1_i}\mright) 
    &= \sum_{i \le \nn} \strat^1_i[\phi] \cdot \mleft(\World(\phi) - \phi^{*i} \mright)\\
    \shortintertext{but $\strat^1_i[\phi]\equiv 1$ iff $i>s$, so this sum is}
    &= \sum_{s < i \le \nn} 1  \cdot \mleft( \World(\phi) - \phi^{*i} \mright) .\\
    \shortintertext{Now, by our choice of $s$, $\World(\phi) = 1$, and $i>s$ implies $\phi^{*i} < 1-\varepsilon$, so this is}
    &\geq \sum_{s < i \le \nn}  \mleft( 1 - (1-\varepsilon) \mright) \\
    &= \varepsilon\cdot(\nn -s)\\
    &\to \infty \text{ as } \nn\to\infty.
  \end{align*}
  In particular, $\Trader^1$ exploits $\MP$, i.e., the set of values
  \[
    \left\{ \World\mleft({\textstyle \sum_{i \leq n} \strat_i}\mright)\mleft(\MP \mright) \,\middle|\, \nn\in\NN^+, \World\in\pcworlds(\dt_\nn) \right\}
  \]
  is bounded below but not bounded above.  The analysis for case 2 is identical: if $L^2_\nn$ and $I^2_\nn$ hold for $\nn>s$, then $\Trader^2$ exploits $\MP$.

  Now consider case 3, where  $L^3_\nn$ and $I^3_\nn$ hold for $\nn>s$.
  Then for any time step $n>s$ and any world
  $\World\in\pcworlds(\dt_\nn)$,
  \begin{align*}
    \World\mleft({\textstyle \sum_{i \leq n} \strat^3_i}\mright) 
    &= \sum_{i \le \nn} \left((\World(\lnot(\phi \land \psi))-(\phi \lor \psi)^{* i}) - (\World(\phi)-\phi^{* i}) - (\World(\psi)-\psi^{* i}) \right) \\
    &= \sum_{s < i \le \nn} \mleft( \World(\phi \lor \psi) - \World(\phi) - \World(\psi) \mright) - \mleft((\phi \lor \psi)^{*i}  - \phi^{*i} - \psi^{*i} \mright) \\
    \intertext{but by our choice of $s$, $\World(\phi \lor \psi) - \World(\phi) - \World(\psi)=0$, and $i>s$ implies the inequality $(\phi \lor \psi)^{*i} - \phi^{*i} - \psi^{*i}< -\varepsilon$, so the above sum is}
    &\geq \sum_{s < i \le \nn} \varepsilon \\
    &= \varepsilon\cdot(\nn-s) \to\infty \text{ as }\nn\to\infty .
  \end{align*}
  So $\Trader^3$ exploits $\MP$, contradicting the \lic{}.  The analysis for case 4 is identical. Hence, all four implications must hold for $\MP$ to satisfy the \lic{}.
\end{proof}


\subsection{Non-dogmatism}
Recall \Thm{nd}:

\restatenondog*

\begin{proof}[Proof of \Thm{nd}]
We prove the second implication, since the first implication is similar, with selling in place of buying.  Suppose for a contradiction that $\Theory \nvdash \neg\phi$ but that $\pt_\infty(\phi)=0$.  

\tinysection{Definition of the trader $\Trader$} We define $\Trader$ recursively, along with helper functions $\seq\beta^k$ that will ensure that for every $k$, our trader will buy one share of $\phi$ for a price of at most $2^{-k}$:
\begin{align*}
\text{for $k=1,\ldots,\nn$:\hphantom{$,+1$}}\;\;\;\\
\beta^k_k &:=0\\
\text{for $i=k+1,\ldots,\nn$:}\;\;\;\\
\beta^k_i &:= \ctsind{2^{-k-1}}(\phi^{*i}<2^{-k})\cdot\left(1-\sum_{j=k}^{i-1}\beta^k_j \right)\\
\strat_i[\phi] &:= \sum_{j \le i} \beta^k_j\\
\strat_i &:= \strat_i[\phi]\cdot(\phi-\phi^{*i})
\end{align*}
\noindent Note that all the equations defining $\strat_\nn$ can be written down (from scratch) in $\Oo(\nn^3\log(n))$ time, so $\Trader$ is an \ec trader.

\tinysection{Proof of exploitation} We leave the constant argument $\MP$ implicit to reduce clutter, writing, e.g., $\phi^{*i}$ for $\phi^{*i}(\MP) = \pt_i(\phi)$, $\strat_\nn[\phi]$ for $\strat_\nn[\phi](\MP)$, and so on.

Observe from the recursion above for $\Trader$ that for all $i>0$ and $k>0$,
\[
0\le\sum_{j=k}^i\beta^k_j\le 1
\]
and for any $i$ and any $k\le i$,
\[
\beta^k_i \ge 0.
\]
Next, observe that for any $k>0$, for $i\geq$ some threshold $f(k)$, we will have 
$\phi^{*i} < 2^{-k-1}$, in which case the indicator in the definition of $\beta^k_i$ will equal $1$, at which point 
$\sum_{j=k}^i\beta^k_j = 1$.  Thus, for all $\nn\ge f(k)$,
\[
\sum_{i=k}^\nn\beta^k_i = 1.
\]
Letting $\hold_\nn=\sum_{i\le\nn}\strat_i$, the following shows that our trader will eventually own an arbitrarily large number of $\phi$-shares:
\begin{align}
\nonumber\hold_\nn[\phi]&=\sum_{i\le\nn}\sum_{k\le i}\beta^k_i\\
\nonumber &=\sum_{k\le\nn}\sum_{k\le i\le\nn}\beta^k_i\\
\nonumber &\ge \sum_{\substack{k\le\nn \\ f(k)\le\nn}}\sum_{k\le i\le\nn}\beta^k_i\\
&= \sum_{\substack{k\le\nn \\ f(k)\le\nn}} 1 \quad\to\infty\text{~ as ~}\nn\to\infty \label{eqn:manyshares}
\end{align}
Next we show that our trader never spends more than a total of \$1.
\begin{align*}
\hold_\nn[1] &= -\sum_{i\le\nn}\sum_{k\le i}\beta^k_i\cdot\phi^{*i},\\
\intertext{but the indicator function defining $\beta_i^k$ ensures that $\phi^{*i}\le 2^{-k}$ whenever $\beta^k_i$ is non-zero, so this is}
&\ge -\sum_{i\le\nn}\sum_{k\le i}\beta^k_i \cdot 2^{-k}\\
&= -\sum_{k\le\nn}2^{-k}\cdot \sum_{k\le i\le\nn}\beta^k_i \\
&\ge -\sum_{k\le\nn}2^{-k}\cdot 1\\
\end{align*}

Now, for any world $\World$, since $\hold_\nn[\phi]\ge 0$ for all $\nn$ and $\World(\phi)\geq 0$, we have 
\begin{align*}
\World(\hold_\nn) &= \hold_\nn[1] + \hold_\nn[\phi]\World(\phi)\\
&\geq -1 + 0\cdot 0 \geq -1
\end{align*}
so the values $\World(\hold_\nn)$ are bounded below as $\nn$ varies. Moreover, since $\Theory\nvdash\lnot\phi$, for every $\nn$ there is always some $\World\in\pcworlds(\dt_\nn)$ where $\World(\phi)=1$ (since any consistent truth assignment can be extended to a truth assignment on all sentences), in which case
\[
\World(\hold_\nn) \geq -1 + \hold_\nn[\phi] \cdot 1 
\]
But by equation~\ref{eqn:manyshares}, this $\lim_{\nn\to\infty}\hold_\nn[\phi] = \infty$, so 
$\lim_{\nn\to\infty}\World(\hold_\nn) = \infty$ as well.
Hence, our \ec trader exploits the market, contradicting the \lic{}.  Therefore, if $\pt_\infty(\phi)=0$, we must have $\Theory\vdash\lnot\phi$.
\end{proof}

\subsection{Learning Pseudorandom Frequencies}\label{sec:prand}
Recall \Thm{benford}:

\benford*

\noindent Before beginning the proof, the following intuition may be helpful. If the theorem does not hold, assume without loss of generality that $\MP$ repeatedly underprices the $\phi_\nn$. Then a trader can buy $\phi_\nn$-shares whenever their price goes below $p-\varepsilon$. By the assumption that the truth values of the $\phi_\nn$ are pseudorandom, roughly $p$ proportion of the shares will pay out. Since the trader only pays at most $p-\varepsilon$ per share, on average they make $\varepsilon$ on each trade, so over time they exploit the market. All we need to do is make the trades continuous, and ensure that the trader does not go below a fixed budget (as in the proof of \Thm{con}).

\begin{proof}[Proof of \Thm{benford}]
  Suppose for a contradiction that $\phis$ is an \ec sequence of $\Theory$-decidable sentences such that for every \pgenable divergent weighting $\fuz$,
  \[
    \lim_{\nn \to\infty} \frac{\sum_{i < \nn} \fuz_{i} \cdot \thmind{\phi_i}}{\sum_{i < n} \fuz_{i}} = p,
  \]
  but nevertheless, for some $\varepsilon > 0$ and  infinitely many $\nn$, $|\pt_\nn(\phi_\nn) - p| > \varepsilon$.  Without loss of generality, assume that for infinite many $\nn$, %
  \[\pt_\nn(\phi_\nn) < p -\varepsilon.\]
  (The argument for the case where 
  $\pt_\nn(\phi_\nn) > p +\varepsilon$ infinitely often will be the same, and one of these two cases must obtain.)

  \tinysection{Definition of the trader $\Trader$}
  \newcommand{\open}{\operatorname{Open}}
  \newcommand{\mopen}{\operatorname{MO}}
  We define $\open:(\Sentences\times\NN)\to \BB$ to be the following (potentially very slow) computable function:
  \[
    \open(\phi,n)=\begin{cases}
      0 &\mbox{if $D_\nn\vdash \phi$ or $D_\nn\vdash \lnot\phi$};\\
      1 &\mbox{otherwise.}
    \end{cases}
  \]
  $\open$ is computable because (remembering that $\vdash$ stands for propositional provability) we can just search through all truth assignments to the prime sentences of all sentences in $\dt_\nn$ that make the sentences in $\dt_\nn$ true, and see if they all yield the same truth value to $\phi$.  We now define a much faster function  $\mopen:(\NN\times\NN)\to \BB$ (mnemonic: ``maybe open'') by
  \[
    \mopen(\phi,n)=\begin{cases}
      0 &\parbox{20em}{if for some $m\le\nn$, $\open(\phi,m)$\\ returns $0$ in $\leq \nn$ steps}\vspace{2ex}\\
      1 &\text{otherwise.}
    \end{cases}
  \]
  Observe that $\mopen(\phi,n)$ runs in $\Oo(\nn^2)$ time, and that for any decidable $\phi$,
  \begin{itemize*}
    \item  $\mopen(\phi,\nn)= 0$ for some sufficiently large $\nn$;
    \item  if $\mopen(\phi,\nn)= 0$ then $\open(\phi,\nn)$ = 0;
    \item  if $\mopen(\phi,\mm)= 0$ and $\nn>\mm$ then $\mopen(\phi,\nn)=0$.
  \end{itemize*}
  (Note that $\mopen$ may assign a value of $1$ when $\open$ does not, hence the mnemonic ``maybe open''.)  

  We will now use $\mopen$ to define a trader $\Trader$ recursively, along with a helper function $\beta$ to ensure that it never holds a total of more than $1$ unit of open (fractional) shares.  We let $\beta_1= 0$ and for $\nn\geq 1$,
  \begin{align*}
    \beta_\nn &:= 1- \sum_{i<\nn} \mopen(\phi_i, \nn) \strat_i[\phi_i];\\
    \strat_\nn[\phi_\nn] &:= \beta_\nn \cdot \ctsind{\varepsilon/2}\mleft(\phi_{\nn}^{*\nn} < p -\varepsilon /2\mright);\\
    \strat_\nn &:= \strat_\nn[\phi_n]\cdot(\phi_\nn - \phi_{\nn}^{*\nn}).
  \end{align*}
  Observe that the expressible feature $\strat_\nn$ can be computed (from scratch) in $\poly(\nn)$ time using $\mopen$, so $\Trader$ is an \ec trader. Notice also that $\beta_\nn$ and all the $\strat_\nn(\phi)$ are always in $[0,1]$.

  \tinysection{A divergent weighting}
  For the rest of the proof, we leave the constant argument $\MP$ implicit to reduce clutter, writing, e.g., $\phi_{i}^{*i}$ for $\phi_{i}^{*i}(\MP) = \pt_i(\phi_i)$, $\strat_\nn[\phi]$ for $\strat_\nn[\phi](\MP)$, and so on.

  We will show that the sequence of trade coefficients $\fuz_\nn = \strat_\nn[\phi_\nn]$ made by  $\Trader$ against the market $\MP$ form a \pgenable divergent weighting. Our trader $\Trader$ is efficiently computable and $\strat_\nn[\phi_\nn]\in [0,1]$ for all $n$, so it remains to show that, on input $\pt_{\leq\nn}$,
  \[\sum_{\nn \in \NN^+} \strat_\nn[\phi_\nn] =\infty.\]
  %
  Suppose this were not the case, so that for some sufficiently large $m$, 
  \begin{equation}\label{eq:mbound}
    \sum_{m<j} \strat_j[\phi_j] < 1/2.
  \end{equation}
  By the definition of $\mopen$, there exists some large $\mm'$ such that for all $i<\mm$, $\mopen(\phi_i,\mm')=0$.  At that point, for any $n>\mm'$, we have 
  \begin{align*}
    \beta_{n} :=&\,  1- \sum_{i<n} \strat_i[\phi_i]\cdot\mopen(\phi_i, n) \\
    =&\, 1 - \sum_{m<i<n} \strat_i[\phi_i]\cdot\mopen(\phi_i, n) \\
    \geq&\, 1 - \sum_{m<i} \strat_i[\phi_i]\\
    \intertext{which, by \Eqn{mbound}, means that}
    \beta_\nn\geq&\, 1/2.
  \end{align*}
  Then, by the earlier supposition on $\MP$, for some $n>m'$ we have $\pt_\nn(\phi_\nn) < p -\varepsilon$, at which point
  \[\strat_\nn[\phi_\nn] = \beta_\nn \cdot \ctsind{\varepsilon/2}\mleft(\phi_{\nn}^{*\nn} < p -\varepsilon /2\mright) \geq \beta_\nn\cdot 1 \geq 1/2\]
  which contradicts the $1/2$ bound in \Eqn{mbound}.  Hence, the sum $\sum_i \strat_i[\phi_\nn]$ must instead be bounded.  This means $(\strat_\nn[\phi_\nn])_{\nn\in\NN^+}$ is a \pgenable divergent weighting.

  \tinysection{Proof of exploitation} Now, by definition of $\phis$ being pseudorandom with frequency $p$ over the class of \pgenable divergent weightings, we have that 
  \[
    \lim_{\nn \to\infty} \frac{\sum_{i\le\nn} \strat_i[\phi_i] \cdot \thmind(\phi_i)}{\sum_{i\le n} \strat_i[\phi_i] } = p.
  \]
  Thus, for all sufficiently large $n$, 
  \[
    {\sum_{i\le\nn} \strat_i[\phi_i] \cdot \thmind(\phi_i)}\geq (p-\varepsilon/4)\cdot{\sum_{i\le n} \strat_i[\phi_i] } .
  \]

  \newcommand{\minval}{\operatorname{minval}}

  \noindent Now, since our construction makes $\beta_\nn\in[0,1]$ for all $\nn$, we have
  \[\sum_{i\le\nn} \trade_i[\phi_i]\cdot\mopen(\phi_i,n) \leq 1.\]
  Also, 
  \begin{align*}
    \World(\phi_i) &\geq\thmind(\phi_i)-\mopen(\phi_i,n).\\
    \intertext{Multiplying this by $\strat_i[\phi_i]$ and summing over $i$ gives}
    \sum_{i\le\nn} \trade_i[\phi_i]\cdot\World(\phi_i) 
    &\ge \mleft(\sum_{i\le\nn} \trade_i[\phi_i]\cdot\thmind(\phi_i)\mright) - \mleft(\sum_{i\le\nn} \trade_i[\phi_i]\cdot\mopen(\phi_i,n)\mright)\\
    &\ge \mleft(\sum_{i\le\nn} \trade_i[\phi_i]\cdot\thmind(\phi_i)\mright) - 1 \\
    &\ge  - 1 + (p-\varepsilon/4)\sum_{i\le\nn} \trade_i[\phi_i].\\
    \intertext{By the definition of $\Trader$, and since $\phi_{i}^{*i}\le(p-\varepsilon/2)$ whenever $\strat_i[\phi_i]\neq 0$,}
    -\sum_{i\le\nn} \strat_i[\phi_i]\cdot\phi_{i}^{*i} &\ge - (p-\varepsilon/2)\sum_{i\le\nn} \strat_i[\phi_i].\\
    \intertext{Adding the above two inequalities gives}
    \World\mleft(\sum_{i\le\nn} \strat_i\mright) &\ge -1 + (\varepsilon/4)\sum_{i\le\nn} \strat_i[\phi_i]\\
    &\to\infty \text{ as } \nn\to\infty
  \end{align*}
  because $\strat_i[\phi_i]$ is a divergent weighting (as shown above).  Hence, $\Trader$ exploits the market $\MP$, contradicting the \lic{}.  Therefore, for $\MP$ to satisfy the \lic{}, we must have
  \[\lim_{n\to\infty}\pt_\nn(\phi_\nn) = p.\]
\end{proof}


\subsection{Provability Induction}\label{sec:provind}
Recall \Thm{provind}:
\restatepi*

\begin{proof}[Proof of \Thm{provind}]
  Suppose $\phis$ is  an \ec sequence of sentences with $\Theory \vdash \phi_\nn$ for all $n$.  Notice that for every $i$, the indicator $\thmind(\phi_i)$ evaluates to 1. Therefore we immediately have that for any divergent weighting $\fuz$ at all, 
  \[\lim_{\nn \to\infty} \frac{\sum_{i < \nn} \fuz_{i} \cdot \thmind{\phi_i}}{\sum_{i < n} \fuz_{i}}= 1.\]
  \noindent That is, the sequence $\phis$ is pseudorandom (over any class of weightings) with frequency 1.  Hence, by Learning Pseudorandom Frequencies (\Thm{benford}), 
  \[  \pt_\nn(\phi_\nn) \eqsim_\nn 1,\]
  as desired. The proof that $\pt_\nn(\psi_\nn) \eqsim_\nn 0$ proceeds analogously.
\end{proof}

\noindent Examining the proof of \Theorem{benford} in the special case of provability induction yields some intuition. In this case, the trader defined in that proof essentially buys $\phi_\nn$-shares every round that $\pt_\nn(\phi_\nn) < 1-\varepsilon$. To avoid overspending, it tracks which $\phi_\nn$ have been proven so far, and never has more than 1 total share outstanding. Since eventually each $\phi_\nn$ is guaranteed to be valued at 1 in every plausible world, the value of the trader is increased by at least $\varepsilon$ (times the number of $\phi_\nn$-shares it purchased) infinitely often. In this way, the trader makes profits for so long as $\MP$ fails to recognize the pattern $\phis$ of provable sentences.

%
%
%
%
%
%
%
%
%
%
%
%
%
%
%
%
%
\section{Discussion}\label{sec:discussion}

We have proposed the \emph{\lic} as a criterion on the beliefs of deductively limited reasoners, and we have shown that reasoners who satisfy this criterion (\emph{logical inductors}) possess many desirable properties when it comes to developing beliefs about logical statements (including statements about mathematical facts, long-running computations, and the reasoner themself). We have also given a computable algorithm \LIA for constructing a logical inductor. We will now discuss applications of logical induction (\Sec{applications}) and speculate about how and why we think this framework works (\Sec{analysis}). We then discuss a few variations on our framework (\Sec{variations}) before concluding with a discussion of a few open questions (\Sec{openquestions}).

%
%
%
%
%
%
%
%
%
%
%
%
%
%
%
%
%
\subsection{Applications}\label{sec:applications}

Logical inductors are not intended for practical use. The algorithm to compare with logical induction is not Belief Propagation \mbp{an efficient method for approximate inference in Bayesian networks \citep{Pearl:1988}} but Solomonoff's theory of inductive inference \mbp{an uncomputable method for making ideal predictions about empirical facts \citep{Solomonoff:1964}}. Just as Solomonoff's sequence predictor assigns probabilities to all possible observations and learns to predict any computable environment, logical inductors assign probabilities to all possible sentences of logic and learns to recognize any efficiently computable pattern between logical claims.

Solomonoff's theory involves a predictor that considers all computable hypotheses about their observations, weighted by simplicity, and uses Bayesian inference to zero in on the best computable hypothesis. This (uncomputable) algorithm is impractical, but has nevertheless been of theoretical use: its basic idiom---consult a series of experts, reward accurate predictions, and penalize complexity---is commonplace in statistics, predictive analytics, and machine learning. These ``ensemble methods'' often perform quite well in practice. Refer to \citet{opitz1999popular,dietterich2000ensemble} for reviews of popular and successful ensemble methods.

One of the key applications of logical induction, we believe, is the development of an analogous idiom for scenarios where reasoners are uncertain about logical facts. Logical inductors use a framework similar to standard ensemble methods, with a few crucial differences that help them manipulate logical uncertainty. The experts consulted by logical inductors don't make predictions about what is going to happen next; instead, they observe the aggregated advice of all the experts (including themselves) and attempt to exploit inefficiencies in that aggregate model. A trader doesn't need to have an opinion about whether or not $\phi$ is true; they can exploit the fact that $\phi$ and $\lnot\lnot\phi$ have different probabilities without having any idea what $\phi$ says or what that's supposed to mean. This idea and others yield an idiom for building models that integrate logical patterns and obey logical constraints. 

In a different vein, we expect that logical inductors can already serve as a drop-in replacement for formal models of reasoners that assume logical omniscience and/or perfect Bayesianism, such as in game theory, economics, or theoretical models of artificial reasoners.


The authors are particularly interested in tools that help AI scientists attain novel statistical guarantees in settings where robustness and reliability guarantees are currently difficult to come by. For example, consider the task of designing an AI system that reasons about the behavior of computer programs, or that reasons about its own beliefs and its own effects on the world. While practical algorithms for achieving these feats are sure to make use of heuristics and approximations, we believe scientists will have an easier time designing robust and reliable systems if they have some way to relate those approximations to theoretical algorithms that are known to behave well in principle \mbp{in the same way that Auto-Encoding Variational Bayes can be related to Bayesian inference \citep{kingma2013auto}}. Modern models of rational behavior are not up to this task: formal logic is inadequate when it comes to modeling self-reference, and probability theory is inadequate when it comes to modeling logical uncertainty. We see logical induction as a first step towards models of rational behavior that work in settings where agents must reason about themselves, while deductively limited.

When it comes to the field of meta-mathematics, we expect logical inductors to open new avenues of research on questions about what sorts of reasoning systems can achieve which forms of self-trust. The specific type of self-trust that logical inductors achieve (via, e.g., \Thm{st}) is a subtle subject, and worthy of a full paper in its own right. As such, we will not go into depth here.

\subsection{Analysis}\label{sec:analysis}

Mathematicians, scientists, and philosophers have taken many different approaches towards the problem of unifying logic with probability theory. (For a sample, refer to \Sec{relatedwork}.) In this subsection, we will speculate about what makes the logical induction framework tick, and why it is that logical inductors achieve a variety of desiderata. The authors currently believe that the following three points are some of the interesting takeaways from the logical induction framework:

\tinysection{Following Solomonoff and Gaifman} One key idea behind our framework is our paradigm of making predictions by combining advice from an ensemble of experts in order to assign probabilities to all possible logical claims. This merges the framework of \citet{Solomonoff:1964} with that of \citet{Gaifman:1964}, and it is perhaps remarkable that this can be made to work. Say we fix an enumeration of all prime sentences of first-order logic, and then hook \LIA (\Alg{li}) up to a theorem prover that enumerates theorems of $\PA$ (written using that enumeration). Then all \LIA ever ``sees'' (from the deductive process) is a sequence of sets like
\[
  \text{\{\#92305 or \#19666 is true; \#50105 and \#68386 are true; \#8517 is false\}.}
\]
From this and this alone, \LIA develops accurate beliefs about all possible arithmetical claims. \LIA does this in a manner that outpaces the underlying deductive process and satisfies the desiderata listed above. If instead we hook \LIA up to a \ZFC-prover, it  develops accurate beliefs about all possible set-theoretic claims. This is very reminiscent of Solomonoff's framework, where all the predictor sees is a sequence of 1s and 0s, and they start figuring out precisely which environment they're interacting with.

This is only one of many possible approaches to the problem of logical uncertainty. For example, Adams' probability logic \citeyearpar{adams1996primer} works in the other direction, using logical axioms to put constraints on an unknown probability distribution and then using deduction to infer properties of that distribution. Markov logic networks \citep{Richardson:2006} construct a belief network that contains a variable for every possible way of grounding out each logical formula, which makes them quite ill-suited to the problem of reasoning about the behavior of complex Turing machines.\footnote{Reasoning about the behavior of a Turing machine using a Markov logic network would require having one node in the graph for every intermediate state of the Turing machine for every input, so doing inference using that graph is not much easier than simply running the Turing machine. Thus, Markov logic networks are ill-suited for answering questions about how a reasoner should predict the behavior of computations that they cannot run.} In fact, there is no consensus about what form an algorithm for ``good reasoning'' under logical uncertainty should take. Empiricists such as \citet{Hintikka:1962:knowledge,Fagin:1995:knowledge} speak of a set of modal operators that help differentiate between different types of knowledge; AI scientists such as \citet{russell1991principles,Hay:2012:Selecting,lin2015metareasoning} speak of algorithms that are reasoning about complicated facts while also making decisions about what to reason about next; mathematicians such as \citep{briol2015probabilistic,briol2015frank,hennig2015probabilistic} speak of numerical algorithms that give probabilistic answers to particular questions where precise answers are difficult to generate.

Our approach achieves some success by building an approximately-coherent distribution over all logical claims. Of course, logical induction does not solve all the problems of reasoning under deductive limitation---far from it! They do not engage in meta-cognition \mbp{in the sense of \citet{russell1991principles}} to decide which facts to reason about next, and they do not give an immediate practical tool \mbp{as in the case of probabilistic integration \citep{briol2015probabilistic}}, and they have abysmal runtime and uncomputable convergence bounds. It is our hope that the methods logical inductors use to aggregate expert advice will eventually yield algorithms that are useful for various applications, in the same way that useful ensemble methods can be derived from Solomonoff's theory of inductive inference.

\tinysection{Keep the experts small} One of the key differences between our framework and Solomonoff-inspired ensemble methods is that our ``experts'' are not themselves predicting the world. In standard ensemble methods, the prediction algorithm weighs advice from a number of experts, where the experts themselves are also making predictions. The ``master algorithm'' rewards the experts for accuracy and penalizes them for complexity, and uses a weighted mixture of the experts to make their own prediction. In our framework, the master algorithm is still making predictions (about logical facts), but the experts themselves are not necessarily predictors. Instead, the experts are ``traders'', who get to see the current model (constructed by aggregating information from a broad class of traders) and attempt to exploit inefficiencies in that aggregate model. This allows traders to identify (and eliminate) inconsistencies in the model even if they don't know what's actually happening in the world. For example, if a trader sees that $\pt(\phi) + \pt(\lnot \phi) \ll 1$, they can buy shares of both $\phi$ and $\lnot \phi$ and make a profit, even if they have no idea whether $\phi$ is true or what $\phi$ is about. In other words, letting the experts buy and sell shares (instead of just making predictions), and letting them see the aggregate model, allows them to contribute knowledge to the model, even if they have no idea what's going on in the real world.

We can imagine each trader as contributing a small piece of logical knowledge to a model---each trader gets to say ``look, I don't know what you're trying to predict over there, but I do know that this piece of your model is inconsistent''. By aggregating all these pieces of knowledge, our algorithm builds a model that can satisfy many different complicated relationships, even if every individual expert is only tracking a single simple pattern.

\tinysection{Make the trading functions continuous} As stated above, our framework gets significant mileage from showing each trader the aggregate model created by input from all traders, and letting them profit from identifying inconsistencies in that model. Showing traders the current market prices is not trivial, because the market prices on day $\nn$ depend on which trades are made on day $\nn$, creating a circular dependency. Our framework breaks this cycle by requiring that the traders use continuous betting strategies, guaranteeing that stable beliefs can be found.

In fact, it's fairly easy to show that something like continuity is strictly necessary, if the market is to have accurate beliefs about itself. Consider again the paradoxical sentence $\chi := \quot{\enc{\pt}_{\enc{\nn}}(\enc{\chi}) < 0.5}$ which is true iff its price in $\MP$ is less than 50\textcent{} on day $\nn$. If, on day $\nn$, traders were allowed to buy when $\chi < 0.5$ and sell otherwise, then there is no equilibrium price. Continuity guarantees that the equilibrium price will always exist.

This guarantee protects logical inductors from the classic paradoxes of self-reference---as we have seen, it allows $\MP$ to develop accurate beliefs about its current beliefs, and to trust its future beliefs in most cases. We attribute the success of logical inductors in the face of paradox to the continuity conditions, and we suspect that it is a general-purpose method that deductively limited reasoners can use to avoid the classic paradoxes.

%
%
%
%
%
%
%
%
%
%
%
%
%
%
%
%
%
\subsection{Variations}\label{sec:variations}

One notable feature of the logical induction framework is its generality. The framework is not tied to a polynomial-time notion of efficiency, nor to any specific model of computation. All the framework requires is a method of enumerating possible patterns of logic (the ``traders'') on the one hand, and a method of enumerating provable sentences of logic (the ``deductive process'') on the other. Our algorithm then gives a method for aggregating those patterns into a combined model that respects the logical patterns that actually hold.

The framework would work just as well if we used the set of linear-time traders in place of the set of poly-time traders. Of course, the market built out of linear-time traders would not satisfy all the same desirable properties---but the \emph{method of induction}, which consists of aggregating knowledge from a collection of traders and letting them all see the combined model and attempt to exploit it, would remain unchanged.

There is also quite a bit of flexibility in the definition of a trader. Above, traders are defined to output continuous piecewise-rational functions of the market prices. We could restrict this definition (e.g., by having traders output continuous piecewise-linear functions of the market prices), or broaden it (by replacing piecewise-rational with a larger class), or change the encoding scheme entirely. For instance, we could have the traders output not functions but upper-hemicontinuous relations specifying which trades they are willing to purchase; or we could give them oracle access to the market prices and have them output trades (instead of trading strategies). Alternatively, we could refrain from giving traders access to the market prices altogether, and instead let them sample truth values for sentences according to that sentence's probability, and then consider markets that are almost surely not exploited by any of these traders.

In fact, our framework is not even specific to the domain of logic. Strictly speaking, all that is necessary is a set of atomic events that can be ``true'' or ``false'', a language for talking about Boolean combinations of those atoms, and a deductive process that asserts things about those atoms (such as $\quot{a \land \lnot b}$) over time. We have mainly explored the case where the atoms are prime sentences of first order logic, but the atoms could just as easily be bits in a webcam image, in which case the inductor would learn to predict patterns in the webcam feed. In fact, some atoms could be reserved for the webcam and others for prime sentences, yielding an inductor that does empirical and logical induction simultaneously.

For the sake of brevity, we leave the development of this idea to future works.

%
%
%
%
%
%
%
%
%
%
%
%
%
%
%
%
%
\subsection{Open Questions}\label{sec:openquestions}

With \Def{lic}, we have presented a simple criterion on deductively limited reasoners, such that any reasoner who meets the criterion satisfies a large number of desiderata, and any reasoner that fails to meet the criterion can have their beliefs exploited by an efficient trader. With $\seq\LIA$ we have shown that this criterion can be met in practice by computable reasoners.

The logical induction criterion bears a strong resemblance to the ``no Dutch book'' criteria used by \citet{Ramsey:1931,DeFinetti:1937:foresight,teller1973conditionalization,lewis1999papers} to support Bayesian probability theory. This fact, and the fact that a wide variety of desirable properties follow directly from a single simple criterion, imply that logical induction captures a portion of what it means to do good reasoning under deductive limitations. That said, logical induction leaves a number of problems wide open. Here we discuss four, recalling desiderata from \Sec{desiderata}:

\renewcommand{\rparenthetical}[1]{}
\desdes*

\noindent In the case of logical inductors, we can interpret this desideratum as saying that it should be possible to tell a logical inductor to reason about one sentence in particular, and have it efficiently allocate resources towards that task. For example, we might be curious about Goldbach's conjecture, and wish to tell a logical inductor to develop its beliefs about that particular question, i.e.\ by devoting its computing resources in particular to sentences that relate to Goldbach's conjecture (such as sentences that might imply or falsify it).

Our algorithm for logical induction does not do anything of this sort, and there is no obvious mechanism for steering its deliberations. In the terminology of \citet{Hay:2012:Selecting}, $\seq\LIA$ does not do metalevel reasoning, i.e., it does nothing akin to ``thinking about  what to think about''. That said, it is plausible that logical induction could play a role in models of bounded decision-making agents. For example, when designing an artificial intelligence (AI) algorithm that \emph{does} try to reason about Goldbach's conjecture, it would be quite useful for that algorithm to have access to a logical inductor that tells it which other mathematical facts are likely related (and how). We can imagine a resource-constrained algorithm directing computing resources while consulting a partially-trained logical inductor, occasionally deciding that the best use of resources is to train the logical inductor further. At the moment, these ideas are purely speculative; significant work remains to be done to see how logical induction bears on the problem of allocation of scarce computing resources when reasoning about mathematical facts.

\descounterp*

\noindent In the year 1993, if you asked a mathematician about what we would know about mathematics if Fermat's last theorem was false, they would talk about how that would imply the existence of non-modular elliptic curves. In the year 1994, Fermat's last theorem was proven true, so by the principle of explosion, we now know that if Fermat's last theorem were false, then 1=2 and $\sqrt{2}$ is rational, because from a contradiction, anything follows. The first sort of answer seems more reasonable, and indeed, reasoning about counterpossibilities (i.e., proving a conjecture false by thinking about what would follow if it were true) is a practice that mathematicians engage in regularly. A satisfactory treatment of counterpossibilities has proven elusive; see \citep{Cohen:1990,vander2004counterpossibles,brogaard2007counterpossibles,krakauer2012counterpossibles,bjerring2014counterpossibles} for some discussion and ideas. One might hope that a good treatment of logical uncertainty would naturally result in a good treatment of counterpossibilities.

There are intuitive reasons to expect that a logical inductor has reasonable beliefs about counterpossibilities. In the days before $\DP$ has (propositionally) ruled out worlds inconsistent with Fermat's last theorem, $\MP$ has to have beliefs that allow for Fermat's last theorem to be false, and if the proof is a long time in coming, those beliefs are likely reasonable. However, we do not currently have any guarantees of this form---$\pt_\infty$ still assigns probability~0 to Fermat's last theorem being false, and so the conditional probabilities are not guaranteed to be reasonable, so we haven't yet found anything satisfactory to say with confidence about $\MP$'s counterpossible beliefs.

While the discussion of counterpossibilities may seem mainly academic, \citet{Soares:2015:toward} have argued that counterpossibilities are central to the problem of designing robust decision-making algorithms. Imagine a deterministic agent \texttt{agent} evaluating three different ``possible scenarios'' corresponding to three different actions the agent could take. Intuitively, we want the $n$th scenario (modeled inside the agent) to represent what would happen if the agent took the $n$th action, and this requires reasoning about what would happen if \texttt{agent(observation)} had the output \texttt{a} vs \texttt{b} vs \texttt{c}. Thus, a better understanding of counterpossible reasoning could yield better decision algorithms. Significant work remains to be done to understand and improve the way that logical inductors answer counterpossible questions.

\desoldev*

\noindent The canonical example of the problem of old evidence is Einstein's development of the theory of general relativity and its retrodiction of the precession in Mercury's orbit. For hundreds of years before Einstein, astronomers knew that Newton's equations failed to model this precession, and Einstein's retrodiction counted as a large boost for his theory. This runs contrary to Bayes' theorem, which says that a reasoner should wring every drip of information out of every observation the moment that the evidence appears. A Bayesian reasoner keeps tabs on all possible hypotheses at all times, and so they never find a new hypothesis in a burst of insight, and reward it for retrodictions. Humans work differently---scientists spent centuries without having even one good theory for the precession of Mercury, and the difficult scientific labor of Einstein went into \emph{inventing the theory}.

There is a weak sense in which logical inductors solve the problem of old evidence---as time goes on, they get better and better at recognizing patterns in the data that they have already seen, and integrating those old patterns into their new models. That said, a strong solution to the problem of old evidence isn't just about finding new ways to use old data every so often; it's about giving a satisfactory account of how to algorithmically \emph{generate new scientific theories}. In that domain, logical induction has much less to say: they ``invent'' their ``theories'' by sheer brute force, iterating over all possible polynomial-time methods for detecting patterns in data.

There is some hope that logical inductors will shed light on the question of how to build accurate models of the world in practice, just as ensemble methods yield models that are better than any individual expert in practice. However, the task of using logical inductors to build practical models in some limited domain is wide open.

\desefficient*

\noindent Logical inductors are far from efficient, but they do raise an interesting empirical question.
While the theoretically ideal ensemble method \mbp{the universal semimeasure \citep{Li:1993}} is uncomputable, practical ensemble methods often make very good predictions about their environments. It is therefore plausible that practical logical induction-inspired approaches could manage logical uncertainty well in practice. Imagine we pick some limited domain of reasoning, and a collection of constant- and linear-time traders. Imagine we use standard approximation methods (such as gradient descent) to find approximately-stable market prices that aggregate knowledge from those traders. Given sufficient insight and tweaking, would the resulting algorithm be good at learning to respect logical patterns in practice?  This is an empirical question, and it remains to be tested.

\subsection{Acknowledgements}

We acknowledge Abram Demski, Alex Appel, Benya Fallenstein, Daniel Filan, Eliezer Yudkowsky, Jan Leike, J\'anos Kram\'ar, Nisan Stiennon, Patrick LaVictoire, Paul Christiano, Sam Eisenstat, Scott Aaronson, and Vadim Kosoy, for valuable comments and discussions. We also acknowledge contributions from attendees of the MIRI summer fellows program, the MIRIxDiscord group, the MIRIxLA group, and the MIRI$\chi$ group.

This research was supported as part of the Future of Life Institute (futureoflife.org) FLI-RFP-AI1 program, grant~\#2015-144576.

\printbibliography[heading=bibintoc,title={References}]
\newpage
\appendix
\renewcommand{\MainTextOnly}[1]{}

\section{Preliminaries}\label{app:preliminaries}

\subsection{Organization of the Appendix}

The appendix is organized differently from the paper. Here we describe the broad dependency structure of the proofs and mention the theorems that are proven by constructing explicit traders (rather than as corollaries). 
Note that theorems that were proven in Section 6 are also proven here, but differently (generally much more concisely, as a corollary of some other theorem). 

\textbf{\ref{app:preliminaries}. Preliminaries.} \App{expressiblefeatures} describes expressible features in full detail. \App{preliminaries}.3  defines some notions for combinations, and defines when a sequence of traders can be ``efficiently emulated'', which will be useful in \ref{app:convergenceproofs}, \ref{app:recunbiasedaff}, and \ref{app:nondogproofs}.

\textbf{\ref{app:convergenceproofs}. Convergence.}  \App{roi} introduces a tool for constructing traders (\Lem{type3},  Return on Investment) that is used in  \ref{app:convergenceproofs} and \ref{app:recunbiasedaff}. Appendices \ref{app:affpolymax} (Affine Preemptive Learning)   and \ref{app:peraffkno} (Persistence of Affine Knowledge) prove those  theorems using \Lem{type3}, and the remainder of \ref{app:convergenceproofs} derives some corollaries (convergence and non-affine special cases). 

\textbf{\ref{app:coherenceproofs}. Coherence.} \App{affcoh} proves Affine Coherence, giving (Affine) Provability Induction as corollaries. The remainder of \ref{app:coherenceproofs} derives corollaries of Provability Induction (consistency and halting) and of Affine Provability Induction (coherence and exclusive-exhaustive relationships).

\textbf{\ref{app:statisticalproofs}. Statistics.} \App{recunbiasedaff} proves Affine Recurring Unbiasedness using \Lem{type3}, giving Simple Calibration (\ref{app:simcal}) as a corollary. Appendices \ref{app:wubaff}  (Affine Unbiasedness From Feedback) and \ref{app:prandaff} (Learning Pseudorandom Affine Sequences) prove those theorems by constructing traders, and the remainder of \App{statisticalproofs} derives corollaries (varied and non-affine cases). 

\textbf{\ref{app:expectationsproofs}. Expectations.} \App{mesh} proves the Mesh Independence Lemma by constructing a trader, and \ref{app:conluvapprox} and \ref{app:limexpapprox} prove two other lemmas on expectations; basic properties  of expectations such as convergence and linearity are  also proved. These proofs rely on theorems proven in \ref{app:convergenceproofs} and \ref{app:coherenceproofs}. The remainder of \ref{app:expectationsproofs}  proves analogs for expectations of the convergence, coherence, and statistical theorems   by applying their affine versions to $\features$-combinations expressing expectations. 

\textbf{\ref{app:introspectiontrust}. Introspection and Self-Trust.} The first part of \App{introspectiontrust} proves introspection properties using Affine Provability Induction and Expectation Provability Induction. The remainder derives the self-trust properties as applications of theorems proven in \App{expectationsproofs}.

\textbf{\ref{app:nondogproofs}. Non-Dogmatism and Closure.} \App{nondogproofs} is mostly self-contained. \App{parametric} proves a simple analog of the return on investment lemma with stronger hypotheses; this is applied to constructing traders in \ref{app:obu} (Uniform Non-Dogmatism), \ref{app:ob} (Occam Bounds), and \ref{app:dus} (Domination of the Universal Semimeasure), with non-dogmatism and strict domination as corollaries. \App{scon} (Conditionals on Theories) uses uniform non-dogmatism, preemptive learning, and  \ref{app:ifp} (Closure under Finite Perturbations).

\subsection{Expressible Features}\label{app:expressiblefeatures}
\emph{This section can be safely skipped and referred back to as desired.}

Recall that a trading strategy for day $\nn$ is given by an affine combination of sentences with expressible feature coefficients.  As such, a machine that implements a trader must use some notation for writing down those features.  Here, to be fully rigorous, we will make an explicit choice of notation for expressible features.  Recall their definition:

\expressiblefeatures*

A (multi-line) string representing an expressible feature will be called a \emph{well-formed feature expression}, and will be built from smaller expressions  involving variables (mainly to save space when a particular expression would otherwise need to be repeated many times).

We define the set of \emph{variable feature expressions} $\Xi$ inductively to include:
\begin{itemize}
\item Past and present market prices: for all $i \leq n$ and for all $\psi\in\Sentences$, there is a symbol $\psi^{* i}\in\Xi$.
\item Rationals: $\QQ \subset \Xi$. 
\item Variables: $V \subset \Xi$. 
\end{itemize}
Further, if $\xi\in\Xi$ and $\zeta\in\Xi$, then the following operations on them are as well:
\begin{itemize}
\item 
Addition: $\xi + \zeta\in\Xi$.
\item 
Multiplication: $\xi \cdot \zeta\in\Xi$.
\item 
Maximum: $\max(\xi, \zeta)\in\Xi$.
\item
Safe reciprocation: $1/\max(1,\xi)\in\Xi$.
\end{itemize}
These operations are sufficient to generate all the expressible features we will need. For example, 
\begin{align*}
-\xi &:= (-1)\cdot \xi;\\
\min(\xi,\zeta) &:= -\max(-\xi,-\zeta);\\
|\xi|&:= \max(\xi,-\xi);\\
\intertext{and when $\zeta \geq \varepsilon$ for some constant $\varepsilon > 0$, we can define}
\xi/\zeta &:= (1/\varepsilon) \cdot \xi/\max(1,  (1/\varepsilon) \cdot \zeta ).
\end{align*}

\noindent We now define a \emph{well-formed feature expression} to be a (multi-line) string of the following form:
\begin{align*}
&v_1 := (\text{feature expression with no variables});\\
&v_2 := (\text{feature expression involving $v_1$});\\
&\cdots\\
&v_k := (\text{feature expression involving $v_1,\ldots,v_{k-1}$});\\
&\returntrade (\text{feature expression involving $v_1,\ldots,v_k$}),
\end{align*}
\noindent where the final expression after ``$ \returntrade$'' is the expression evaluated to actually compute the expressible feature defined by this code block.
\subsubsection{Examples}
The following well-formed feature expression defines a rank 7 expressible feature:
\begin{align*}
&v_1 := \phi_1^{* 7} + \phi_2^{* 4}\\
&v_2 := v_1 - 1\\
&\returntrade 3\cdot \max(v_1,v_2).
\end{align*}
If the market at time 7 has $\pt_7(\phi_1)= 0.8$ and the market at time $4$ had $\pt_4(\phi_2)=0$, then this expressible feature evaluates to 
\[3\cdot \max(v_1,v_2) = 3\cdot \max(0.8,-0.2) = 2.4.\]
An $\nn$-strategy can now be written down in a very similar format, sharing variable definitions used in the various coefficients to save space.  For example, the following code defines a $7$-strategy:
\begin{align*}
&v_1 := \phi_1^{* 7} + \phi_2^{* 4}\\
&v_2 := v_1 \cdot v_1\\
&\strat[\phi_1] := 3\cdot \max(v_1,v_2)\\
&\strat[\phi_2] := 6 \cdot \max(v_1,v_2).\\
&\strat := \sum_{i=1}^2 T[\phi_i]\cdot(\phi_i-\phi_i^{*\nn})\\
&\returntrade T
\end{align*}

Notice that the function $\phi_1^{*7}$ returning the current market price of $\phi_1$ affects (via $v_1$) how many shares of $\phi_1$ this trader buys.  This is permitted, and indeed is crucial for allowing traders to base their trades on the current market prices. 

\subsubsection{Dynamic programming for traders}\label{app:dynamicprogramming}

We will often define traders that make use of indexed variables that are defined recursively in terms of previous indices, as in e.g.\ the proof of \Theorem{con} in Section~\ref{sec:convergence}. In particular, we often have traders refer to their own past trades, e.g.\ using  expressible features of the form  $\trade_i[\phi]$ for $i<\nn$ to define their trade at time $\nn$. This can be written down in polynomial time using the expression language for features, via dynamic programming. For example, to use previous trades, a trader can recapitulate all the variables used in all its previous trading strategies. As long as the trading strategies are efficiently computable given previous trades as variables, they are still efficiently computable without them (possibly with a higher-degree polynomial).

%
%
%
%
%
%
%
%
%
%
%
%
%
%
%
%
%

\subsection{Definitions}

\subsubsection{Price of a Combination}

\begin{definition}[Price of a Combination]
\noindent Given any affine combination 
  \[
    \aff =
    \affconst
      + \exf_1\phi_1
      + \cdots
      + \exf_k\phi_k
  \]
of rank $\leq n$, observe that the map $\seq\Valuation \mapsto \Valuation_\nn(\aff)$ is an expressible feature, called the \textbf{price} of $\aff$ on day $\nn$, and is given by the expression 
  \[
    \pf{\aff} :=
    \affconst
      + \exf_1\pf{\phi_1}
      + \cdots
      + \exf_k\pf{\phi_k}.
  \]
\noindent For any valuation sequence $\seq\UU$, observe by linearity and associativity that
\[(\Valuation(\aff))(\seq \UU) = \Valuation(\aff(\seq \UU)) = \affconst(\seq\UU) + \sum_\phi \exf_\phi(\seq\UU)\Valuation(\phi) . \]
\end{definition}

\subsubsection{Buying a Combination}

\begin{definition}[Buying a Combination]  Given any $\exfeatures$-combination  $\aff^\gens$ of $\rank \leq \nn$, we define a corresponding $n$-strategy called \textbf{buying~$\bm{\aff^\gens}$ on day $\bm \nn$} to equal
  \[
     \aff^\gens - \aff^{\gens * n}.
  \]
Observe that buying $\aff$ on day $\nn$ is indeed an $\nn$-strategy.  
\end{definition}

\subsubsection{$\features$-Combinations Corresponding to $\features$-LUV Combinations}\label{app:featureluvcombos}

\begin{definition}[$\alta$]
  Let $\affluv := \affconst + \exf_1 X_1 + \cdots + \exf_k X_k$ be an $\features$-LUV combination.
  Define 
  \[ \alta_m(\aff) = \affconst + \exf_1 \sum_{i=0}^{m-1} \frac{1}{m} (\quot{\enc{X_1} > \enc{i}/\enc{m}}) + \cdots + \exf_k \sum_{i=0}^{m-1} \frac{1}{m} (\quot{\enc{X_k} > \enc{i}/\enc{m}}) \]
  to be a $\features$-affine combination corresponding to $\affluv$.  Note that $\Valuation(\alta_m(\affluv)) = \EE_m^\Valuation(\affluv)$. Also note that if $(\affluv_\nn)_n$ is bounded, then  $(\alta_n(\affluv_\nn))_n$ is bounded; we will use this fact freely in what follows.
\end{definition}

\subsubsection{Efficiently Emulatable Sequence of Traders}\label{app:emulatable}

In Appendices \ref{app:convergenceproofs}, \ref{app:recunbiasedaff}, and \ref{app:nondogproofs},
we will construct traders that allocate their money across multiple strategies for exploiting the market. 
In order to speak unambiguously about multiple overlapping long-term strategies for making trades, 
we define the notion of a sequence of traders that can be efficiently emulated by one trader.
\begin{definition}[Efficiently Emulatable Sequence of Traders]\label{def:emulatabletraders}
  We say that a sequence of traders  $( \Trader^k)_k$ is \emph{efficiently emulatable} if
\begin{itemize}
\item the sequence of programs that compute the $\Trader^k$ can be efficiently generated; 
\item those programs for $\Trader^k$ have uniformly bounded runtime, i.e., 
  there exists a constant $c$ such that for all $k$ and all times $n$, the program that computes $\Trader^k$ runs in time $\Oo(n^c)$; and
\item for all $k$ and all $n<k$, we have that $\Trader^k_n$ is the zero trade.
\end{itemize}
\end{definition}
\noindent Efficiently emulatable sequences are so named because a single trader $\Trader$ can emulate the entire sequence of traders 
$(\Trader^k)_k$. That is, on time $n$, $\Trader$ can directly compute all the trading strategies $\trade^k_n$ for $k\le n$ by listing the appropriate programs
and running them on input $n$. This can be done in polynomial time by definition of an efficiently emulatable sequence. 
We require that $\Trader^k$ does not make non-zero trades before time $k$ so that the emulator $\Trader$ need not truncate any trades made by the $\Trader^k$.

\section{Convergence Proofs}\label{app:convergenceproofs}

\subsection{Return on Investment}\label{app:roi}

This section provides a useful tool for constructing traders, which will be applied in \App{convergenceproofs} and \App{recunbiasedaff}. The reader may wish to first begin with the proof in Appendix~\ref{app:affpolymax} of Theorem~\ref{thm:affpolymax} as motivation of the return on investment lemma.

\paragraph{Statement of the $\bm{\varepsilon}$-ROI lemma.}

If we have a \li{} $\MP$, we know that $\MP$ cannot be exploited by any trader. 
It will often be easy to show that if $\MP$ fails to satisfy some property, then there is a trader $\Trader$ that takes advantage of a specific, one-shot
opportunity to trade against the market in a way that is guaranteed to eventually be significantly higher value than the size of the original investment; 
and that such opportunities arise infinitely often.
In order to use such a situation to ensure that the market $\MP$ satisfies the property, we will now show that \li{}s are not susceptible to 
repeatable methods for making a guaranteed, substantial profit.

To define a notion of return on investment, we first define the ``magnitude'' of a trade made by a trader, so that we can talk about traders that are
profitable in proportion to the size of their trades:
\[ \magnit{\trade(\MP)} := \sum_{\phi\in\Sentences} |\trade[\phi](\MP)|. \]
This number will be called the \textbf{magnitude} of the trade.  It
is just the total number of shares traded by $\trade$ against the market $\MP$, whether the shares are
bought or sold. Note that the magnitude is \emph{not} the same as the $\magn{\any}$-norm of $\trade(\MP)$; the magnitude omits the constant term $\trade[1](\MP)$.  

The magnitude is a simple bound on the value of the holdings $\trade_n(\MP)$: for any world $\World$ (plausible or not),
\[
\left| \sum_{\phi\in\Sentences} \trade_n[\phi](\MP) \cdot \left(\World(\phi) - \pt_n(\phi)\right)\right|
\le \sum_{\phi\in\Sentences} \left| \trade_n[\phi](\MP)\right| \cdot 1  = \magnit{\trade_n(\MP)},  
\]
since $\World(\phi)\in\{0,1\}$ and $\pt_n(\phi)\in [0,1]$. 
Now we define the total magnitude of a trader over time.
\begin{definition}[Magnitude of a Trader]\label{def:tradermag}
  The \textbf{magnitude} $\magnit{\Trader(\MP)}$ of a trader $\Trader$ against the market $\MP$ is
\[ \magnit{\Trader(\MP)} := \sum_{n\in {\NN^+}}\magnit{\trade_n(\MP)} \equiv \sum_{n\in {\NN^+}} \sum_{\phi\in\Sentences}|\trade_n[\phi](\MP)|. \]
\end{definition}
The magnitude of $\Trader$ is the total number of shares it trades (buys or sells) over all time.



Now we define what it means for a trader to increase its net value by a substantial fraction of its investment, i.e., its magnitude. 
\begin{restatable}[$\varepsilon$ Return on Investment]{definition}{roi} 
\label{def:roi}
For $\varepsilon>0$, we say that a trader $\Trader$ trading against $\MP$ has \emph{$\varepsilon$ return on investment} or \emph{$\varepsilon$-ROI} if, for all $\World \in \cworlds(\Theory)$,
\[ \lim_{n \to \infty} \World \left(\sum_{i \leq n} T_i \ftn \right) \geq \varepsilon \magnit{\Trader(\MP)}. \]
\end{restatable}
In words, a trader $\Trader$ has $\varepsilon$-ROI if, in the limit of time and deduction, the value of its holdings is, in every world,  at least $\varepsilon$
times its total  investment $\magnit{\Trader(\MP)}$. Note that this does not merely say that $\Trader$ recoups at least an $\varepsilon$ fraction of its original
cost; rather, the net value is guaranteed in all worlds consistent with $\Theory$
to have increased by an $\varepsilon$ fraction of the magnitude  $\magnit{\Trader(\MP)}$ of $\Trader$'s trades.  



Recall from Definition~\ref{def:ece} that a sequence $\seq\alpha$ of rationals is \pgenable if there is some \ec $\exfeatures$-progression $\seq{\gen \alpha}$ such that $\gen{\alpha_\nn} \ftn = \alpha_\nn$ for all~$\nn$.

\begin{restatable}[No Repeatable $\varepsilon$-ROI ]{lemma}{roilemma} 
\label{lem:type3}
Let $\MP$ be a \li{} with respect to some deductive process $\DP$, and let $(\Trader^k)_{k\in\NN^+}$ be an efficiently emulatable sequence of traders (Definition~\ref{def:emulatabletraders}).
  Suppose that for some fixed $\varepsilon>0$, each trader $\Trader^k$ has $\varepsilon$-ROI\@. Suppose further that there is some
  \pgenable sequence $\seq\alpha$ 
  such that for all $k$,
  \[ \magnit{\Trader^k \ftn } = \alpha_k. \]
  Then \[ \lim_{k \rightarrow \infty} \alpha_k = 0. \]
\end{restatable}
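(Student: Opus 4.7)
I will argue by contrapositive: assume $\alpha_k \not\to 0$, so there are $\delta > 0$ and infinitely many $k$ with $\alpha_k > 2\delta$. The plan is to construct from the $\Trader^k$'s a single efficiently computable trader $\Trader^\ast$ that exploits $\MP$, contradicting the \lic{}. Using the $\gen{\alpha_k}$ features that witness $\pgenable$-ness of $\seq\alpha$, I first rescale: set $c_k := \ctsind{\delta}(\gen{\alpha_k}>2\delta)/\max(\delta,\gen{\alpha_k})$, so that $c_k=1/\alpha_k$ when $\alpha_k\geq 3\delta$ and $c_k=0$ when $\alpha_k\leq 2\delta$. The rescaled trader $\Trader^{(k)} := c_k\Trader^k$ has $\magnit{\Trader^{(k)}(\MP)}\leq 1$ and, whenever it is "active", has $\varepsilon$-ROI by linearity. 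Efficient emulatability of $(\Trader^k)$ makes the family $(\Trader^{(k)})$ emulatable as well.

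The core device is \emph{unwinding into cash}, keyed to market prices rather than to the inaccessible $\pcworlds(\dt_n)$. The idea is: if at time $n$ the trader has previously mirrored all trades $\Trader^{(k)}_{\le n}(\MP)$ and then executes the single offsetting trade that zeroes out its share holdings at current prices, the resulting position is \emph{pure cash} of value $\pt_n\bigl(\sum_{i\leq n}\Trader^{(k)}_i(\MP)\bigr)$---the same number in every world, plausible or not. By limit coherence (\Theorem{lc}), $\pt_n(\phi)\to\pt_\infty(\phi)$ and $\pt_\infty$ represents a probability measure over $\cworlds(\Theory)$; combined with bounded dominated convergence (using $\magnit{\Trader^{(k)}(\MP)}\leq 1$ as the uniform summable bound) this yields
\[
\lim_{n\to\infty}\pt_n\!\left(\sum_{i\leq n}\Trader^{(k)}_i(\MP)\right)=\pt_\infty\!\left(\sum_i\Trader^{(k)}_i(\MP)\right)=\int_{\cworlds(\Theory)}W\!\left(\sum_i\Trader^{(k)}_i(\MP)\right)d\pt_\infty(W)\geq \varepsilon
\]
by $\varepsilon$-ROI. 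Thus for every active $k$ there is some finite time at which $\pt_n$ values the accumulated position at $\geq \varepsilon/2$, and the unwinding trade locks in $\geq \varepsilon/2$ of world-independent cash.

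I then define $\Trader^\ast$ to operate in phases indexed by the active $k$'s (enumerated continuously via the $\gen{\alpha_k}$'s). In each phase it mirrors $\Trader^{(k)}$ and continuously transitions (via $\ctsind{\varepsilon/4}$ applied to the expressible feature $\pt_n(\sum_{i\leq n}\Trader^{(k)}_i(\MP))$) to the unwinding trade as the price crosses the $\varepsilon/2$ threshold, then advances the phase index. Dynamic programming over the emulation and the $\gen{\alpha_k}$'s makes $\Trader^\ast$ $\ec$ and each strategy continuous in $\MP$. To check exploitation: at any time $n$ and any $W\in\pcworlds(\dt_n)$, the value of $\Trader^\ast$'s holdings equals (sum of locked-in cash from completed phases) plus (value in $W$ of the current partial position), which is at least $(\text{phases completed})\cdot\varepsilon/2 - 1$ since only the currently active $\Trader^{(k)}$ has nonzero share-magnitude and that magnitude is $\leq 1$. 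This gives the required uniform lower bound, and because infinitely many phases complete (by the limit above, for each active $k$ the threshold is eventually crossed), the values are unbounded above, yielding exploitation.

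\textbf{Main obstacle.} The delicate step is verifying that the continuous, price-triggered unwinding really does behave, as a single $\ec$ trader, like the discrete "finish one phase then start the next" scheme. The two hazards are (i) during the $\ctsind{\varepsilon/4}$ transition the trader holds a convex combination of the mirrored position and its offset, and I must show these intermediate states add at most $\Oo(1)$ loss in any world, and (ii) the interchange of the limit $n\to\infty$ with the sum $\sum_i$ inside $\pt_n(\cdot)$---handled by the uniform bound $\sum_i\magnit{\Trader^{(k)}_i(\MP)}\leq 1$ and dominated convergence, but this is where the $\pgenable$ magnitude hypothesis genuinely matters. Together these ensure the locked-in cash is at least $\varepsilon/2$ in every world and that the trader cleanly advances, closing the argument.
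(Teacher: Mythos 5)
Your overall strategy -- lock in each $\Trader^{(k)}$'s guaranteed gain as world-independent cash by selling the accumulated position back to the market once its \emph{price} crosses $\varepsilon/2$, justified by $\pt_n \to \pt_\infty$ and limit coherence -- is genuinely different from the paper's proof, and the analytic core is sound: since every $\World \in \cworlds(\Theory)$ values $\sum_i \trade^{(k)}_i(\MP)$ at $\ge \varepsilon$ and $\pt_\infty$ is a mixture of such worlds (\Theorem{lc}), the $\ell_1$-bound $\magnit{\Trader^{(k)}(\MP)} \le 1$ does justify $\pt_n\bigl(\sum_{i\le n}\trade^{(k)}_i(\MP)\bigr) \to \pt_\infty\bigl(\sum_i \trade^{(k)}_i(\MP)\bigr) \ge \varepsilon$. (Note that this inverts the paper's dependency structure: the appendix derives convergence and coherence \emph{from} this lemma, so you must lean on the standalone Section~6 proofs of \Theorem{con} and \Theorem{lc} to avoid circularity; that is legitimate but worth saying. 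The paper's own proof is self-contained: it runs all the $\Trader^k$ \emph{simultaneously}, scaled by recursively defined budget factors $\gen{\beta_k}$ so the open magnitude stays $\le 1$, and declares an investment closed only once its value is $\ge (2\varepsilon/3)\alpha_k$ in \emph{every} world of $\pcworlds(\dt_m)$ -- a condition it precomputes from the computable market and hard-codes, never selling anything back and never invoking convergence or coherence.)

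However, as written your sequential-phase scheme has a genuine gap at the phase boundaries. The $\varepsilon$-ROI hypothesis bounds the limiting value of \emph{all} of $\Trader^{(k)}$'s trades; if the phase for $k$ begins at some market-history-dependent time $N_k$, your emulator only holds the tail $\sum_{N_k \le i \le n} \trade^{(k)}_i(\MP)$, and ROI says nothing about a tail: the missing head can be worth up to $\Oo(1)$ in every world, so the tail's limiting price may sit well below $\varepsilon/2$, the unwinding threshold is never reached, the phase never terminates, and only finitely many phases complete -- no exploitation. The fix is exactly the clause of \Def{emulatabletraders} you never invoke: $\Trader^k$ is silent before day $k$, so you must arrange that after finishing a phase at time $m$ the next index processed is an active $k > m$ (so the phase captures $\Trader^{(k)}$'s entire history), and making that history-dependent discrete selection compatible with continuity/efficiency requires either the paper's precomputation trick or an explicit nested-indicator construction -- the hard part your sketch waves at with ``advances the phase index.'' A secondary parameter slip: with $c_k := \ctsind{\delta}(\gen{\alpha_k} > 2\delta)/\max(\delta,\gen{\alpha_k})$, only indices with $\alpha_k \ge 3\delta$ are fully active and certain to reach the fixed $\varepsilon/2$ threshold, and your contrapositive assumption only guarantees infinitely many $k$ with $\alpha_k > 2\delta$; you need to lower the indicator's ramp (or raise the activity criterion) so that infinitely many \emph{fully} active indices exist, else every processed phase may again stall. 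Both gaps are repairable, but they are precisely the load-bearing details, so the proof as proposed is incomplete.
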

In words, this says roughly that there is no efficient, repeatable method for producing a substantial guaranteed return on an investment. 
The condition that $\seq\alpha$ is \pgenable will help with the budgeting done by the trader that emulates 
the sequence $(\Trader^k)_k$.


\paragraph{Proof strategy.}
We will construct a trader $\Trader$ that emulates the sequence $(\Trader^k)$ in a manner such that if the traders $\Trader^k$ did not make trades of
vanishing limiting value, then our trader $\Trader$ would accrue unbounded profit by repeatedly making investments that are guaranteed to pay out by a substantial amount. 
Very roughly, on time $n$, $\Trader$ will sum together the trades $\trade^k_n$ made by all the $\Trader^k$ with $k\le n$. In this way, $\Trader$ will accrue all the profits made by each of the $\Trader^k$.

The main problem we have to deal with is that $\Trader$ risks going deeper and deeper into debt to finance its investments, as discussed before the proof of \Theorem{con} in Section~\ref{sec:convergence}. That is, it may be that each
$\Trader^k$ makes an investment that takes a very long time for all the worlds
$\World\in\pcworlds(\dt_\nn)$ plausible at time $n$ to value highly. In the meanwhile, $\Trader$ continues to spend money buying shares and taking on risk 
from selling shares that might plausibly demand a payout. 
In this way, despite the fact that each of its investments will eventually become profitable, $\Trader$ may have holdings with unboundedly negative plausible
value. 

To remedy this, we will have  our trader $\Trader$ keep track of its 
``debt'' and of which investments have already paid off, and then scale down new traders $\Trader^k$ so that $\Trader$ maintains a lower bound on the plausible value of
its holdings.  Roughly speaking, $\Trader$ at time $n$ checks whether the current holdings of $\Trader^k$ are guaranteed to have a positive value in all
plausible worlds, for each $k\le n$. Then $\Trader$ sums up the total magnitudes $\alpha_k$ of all the trades ever made by those $\Trader^k$ whose trades are not yet guaranteed to be profitable. This sum is used to scale down all trades made by $\Trader^n$, so that the total magnitude of the unsettled investments made by $\Trader$ will remain bounded.

\paragraph{Proof of \Lem{type3}.}

\newcommand{\opentrades}{\textrm{open}}
\begin{proof}
  We now prove \Lem{type3}. 
  
  We can assume without loss of generality that each $\alpha_n \leq 1$ by dividing $\Trader^k$'s trades by $\max(1, \alpha_k)$.

\paragraph{Checking profitability of investments.}

At time $n$, our trader $\Trader$ runs a (possibly very slow) search process to enumerate traders $\Trader^k$ 
from the sequence $(\Trader^k)_k$ that have made trades that are already guaranteed to be profitable, as judged by what is plausible according to the deductive process $\DP$ with respect to which $\MP$ is a \li{}. 
That is, $\Trader$ runs a search for 
pairs of numbers  $k, m\in\NN^+$ such that:
\begin{align*}
\sum_{i \leq m} \magnit{\trade^k_i(\MP)} &\geq (1 - \varepsilon/3) \alpha_k  &\textrm{ (few future trades), and } \\
\inf_{\World\in\pcworlds(\dt_\mm)}\World \left( {\textstyle \sum_{i \leq m}  \trade^k_i(\MP)
 }\right) &\geq (2\varepsilon /3) \alpha_k & \textrm{ (guaranteed profit). }
\end{align*}
If the trader $\Trader^k$ has few future
trades and guaranteed profit at time $m$ then  we say that the trader's holdings have matured. We denote the least such $m$ by $m(k)$. 

The first condition (few future trades)  says that $\Trader^k$ has made trades of total magnitude at least $(1 - \varepsilon/3) \alpha_k$ after time $k$ up until time $m$. By the assumption that $ \magnit{\Trader^k \ftn}= \alpha_k$, for each $k$ there is some time step $m$ such that this condition holds. By that same assumption, $\Trader^k$ will make trades of total magnitude at most $(\varepsilon/3)\alpha_k$ in all time steps after $m$. 

The second condition (guaranteed profit) says that the minimum value plausible at time $m$ of all trades made by $\Trader^k$ up until time $m$ is at
least $(2\varepsilon /3) \alpha_k$. By the assumption that $\Trader^k$ has $\varepsilon$-ROI, i.e., that the minimum 
value of $\Trader^k$ is eventually at least $\varepsilon \magnit{\trade^k_k}$, the condition of guaranteed profit will hold at some $m$. 

The idea is that, since $\Trader^k$ will  trade at most $(\varepsilon/3) \alpha_k$ shares after
$m$ and the holdings of $\Trader^k$ from trades up until the current time have minimum plausible value at least $(2\varepsilon /3) \alpha_k$,
it is guaranteed that  the holdings of 
$\Trader^k$ at any time after $m$ will have minimum plausible value at least $(\varepsilon/3) \alpha_k$. This will allow our trader $\Trader$ to ``free up''
funds allocated to emulating $\Trader^k$, in the sense that there is no longer ever any plausible net downside to the holdings from trades made by $\Trader^k$.

\paragraph{Definition of the trader $\bm{\Trader}$.}

For fixed $k$ and $m$, these two conditions
refer to specific individual computations (namely  $\dt_m$,  the $\trade^k_i(\MP)$ for $i \leq m$, and $\alpha_k$).
On time step $n$, for all $k,j \leq n$,
our trader  $\Trader$ sets Boolean variables $\opentrades(k,j) :=0$ if it is verified in $j$ steps of computation that  the holdings of $\Trader^k$ have matured; and $\opentrades(k,j) :=1$ if $\Trader^k$ has open investments. 
Since the holdings of each $\Trader^k$ will eventually
mature, for all $k$ there is some $n$ such that $\opentrades(k,n) = 0$. 

Let $\seq{\gen\alpha}$ be an \ec $\exfeatures$ progression such that for each $n$ we have $\gen{\alpha_n} \ftn = \alpha_n$.
Then $\Trader$ outputs the trading strategy 
\[ \trade_n := \sum_{k\leq n} \gen{\beta_k} \cdot \trade^k_n,  \]
where the $\gen{\beta_k}$ are defined recursively by 
\[ \gen{\beta_k} := 1- \sum_{i < k} \opentrades(i,k) \gen{\beta_i} \gen{\alpha_i} .\]
That is, the machine computing $\Trader$ outputs the definitions of the budget variables
$\gen{\beta_k}$  
for each $k \leq n$, and then lists the trades
\[ \returntrade \phi := \sum_{k\leq n} \gen{\beta_k} \trade^k_n[\phi]\]
for each $\phi$ listed by any of the trades $\trade^k_n$ for $k \leq n$. 
As shorthand, we write $\beta_k := \gen{\beta_k} \ftn$.  
Notice that since $(\Trader^k)_k$ is efficiently emulatable, we have $\forall k: \forall i<k: \trade^k_i \equiv 0$, and therefore 
\[ \forall n: \trade_n \ftn = \sum_{k\in {\NN^+}} \beta_k \trade^k_n \ftn.\]
Note that each $\opentrades(i,k)$ is pre-computed 
by the machine that outputs our trader $\Trader$ and then is encoded as a constant in the expressible feature $\gen{\beta_k}$. 
The trade coordinate $\trade_n[\phi]$ is an expressible feature because the $\gen{\beta_k}$ and $\trade^k_n[\phi]$ are expressible features.

\paragraph{Budgeting the traders $\bm{\Trader^k}$.}

Since we assumed each $\alpha_k \leq 1$, it follows from the definition of the budget variable $\gen{\beta_k}$ that 
\[ \beta_k \alpha_k  \leq  1- \sum_{i < k} \opentrades(i,k) \beta_i \alpha_i,\]
and then $\beta_k$ is used as the constant scaling factor for $\Trader^k$ in the sum defining $\Trader$'s trades. 
In this way, we maintain the invariant that for any $n$, 
\[\sum_{k\leq n} \opentrades(k,n) \beta_k \alpha_k \leq 1.\]
Indeed, by induction on $k$, using the fact that $\opentrades(i,n)$ implies $\opentrades(i,m)$ for $m\ge n$, we have $\beta_k \geq 0$ and the above invariant holds.

In words, this says that out of all the traders $\Trader^k$ with investments still open at time $n$, the sum of the magnitudes  $\beta_k \alpha_k$ of
their total investments (as budgeted by the $\beta_k$) is bounded by 1. 

\paragraph{Analyzing the value of $\bm{\Trader}$'s holdings.}

Now we lower bound the value of the holdings of $\Trader$ from trades against the market $\MP$. Fix any time step $n$ and world $\World\in\pcworlds(\dt_\nn)$ plausible
at time $n$. Then we have that the value of $\Trader$'s holdings at time $n$ is
\begin{align*}
  \World( {\textstyle \sum_{i \leq n} \trade_i(\MP) }) 
&= \sum_{k\leq n}   \World( {\textstyle \sum_{i \leq n} \beta_k \trade^k_i(\MP) })\\
\shortintertext{ by linearity and by definition of our trader $\Trader$;}
&= \sum_{\substack{k\leq n\\  \opentrades(k,n)}}   \World( {\textstyle \sum_{i \leq n} \beta_k \trade^k_i(\MP) }) 
+  \sum_{\substack{k\leq n\\  \lnot \opentrades(k,n)}}   \World( {\textstyle \sum_{i \leq n} \beta_k \trade^k_i(\MP) }) 
\end{align*}
again by linearity. We analyze the first term, the value of the holdings that have not yet matured, as follows:
\begin{align*}
 \sum_{\substack{k\leq n\\  \opentrades(k,n)}}   \World( {\textstyle \sum_{i \leq n} \beta_k \trade^k_i(\MP) }) 
 &\geq -\sum_{\substack{k\leq n\\  \opentrades(k,n)}}   \beta_k \sum_{i \leq n} \magnit{\trade^k_i \ftn}
 \\
 &\geq -\sum_{\substack{k\leq n\\  \opentrades(k,n)}}   \beta_k \sum_{i \in \NN^+} \magnit{\trade^k_i \ftn}
 \\
 &= -\sum_{k\leq n  }  \opentrades(k,n) \beta_k \alpha_k \\ 
 &\geq -1 ,
\end{align*}
by the previous discussion of the $\beta_k$. In short, the $\beta_k$ were chosen so that the total magnitude of all of $\Trader$'s holdings  from trades made
by any $\Trader^k$ that haven't yet matured stays at most 1, so that its plausible value stays at least $-1$. 

Now we analyze the second term in the value of $\Trader$'s holdings, representing the value of the holdings that have already matured, as follows:
\begin{align*}
  & \sum_{\substack{k\leq n\\  \lnot\opentrades(k,n)}}   \World( {\textstyle \sum_{i \leq n} \beta_k \trade^k_i(\MP) }) 
  \\
 =& \sum_{\substack{k\leq n\\  \lnot\opentrades(k,n)}}   
  \left( \World( {\textstyle \sum_{i \leq m(k)} \beta_k \trade^k_i(\MP) }) +
  \World( {\textstyle \sum_{m(k) < i \leq n} \beta_k \trade^k_i(\MP) }) \right)
\shortintertext{where $m(k)$ is minimal such that $\Trader^k$ has guaranteed profit and makes few future trades at time $m(k)$, as defined
above;} 
&\geq
  \sum_{\substack{k\leq n\\ \lnot\opentrades(k,n)}} 
  \left(\beta_k (2\varepsilon/3) \alpha_k - \sum_{ i>  m(k) } \beta_k \magnit{ \trade^k_i \ftn } \right)\\
\shortintertext{since  by definition of $m(k)$ and the guaranteed profit condition, the value of the holdings of $\Trader^k$ from its trades 
  up until time $m(k)$ is at least 
$(2\varepsilon/3)\alpha_k$ in any world in $\dt_\nn$;}
&\geq
  \sum_{\substack{k\leq n\\ \lnot\opentrades(k,n)}} 
\left(\beta_k (2\varepsilon/3) \alpha_k -\beta_k (\varepsilon/3) \alpha_k \right)\\
\shortintertext{since $\Trader^k$ is guaranteed to make trades of magnitude at most $(\varepsilon /3) \alpha_k$ after time $m(k)$;}
& = \sum_{\substack{k\leq n\\ \lnot\opentrades(k,n)}}  \beta_k (\varepsilon/3)\alpha_k.
\end{align*}

Completing our analysis, we have a 
lower bound on the value in $\World$ of the holdings of $\Trader$ at time $n$:
\[  \World( {\textstyle \sum_{i \leq n} \trade_i(\MP) })   \geq -1 +
\sum_{\substack{k\leq n\\ \lnot\opentrades(k,n)}}  \beta_k (\varepsilon/3) \alpha_k. \]

\paragraph{$\bm{\Trader}$ exploits $\bm{\MP}$ unless $\bm{\seq\alpha}$ vanishes.}

Since $\Trader$ is an efficient trader and $\MP$ is a \li{}, $\Trader$ does not exploit $\MP$. That is, the set 
\[    \left\{ \World\mleft({\textstyle \sum_{i \leq n} \trade_i} (\MP) \mright) \,\middle|\, \nn\in\NN^+, \World\in\pcworlds(\dt_\nn)
\right\} \]
is bounded above, since it is bounded below by $-1$ by the above analysis. In words, the plausible 
value of $\Trader$'s holdings is always at least $-1$, so by the \lic{} it
cannot go to infinity. Therefore, again by the above analysis,  we must have  
\[ \lim_{n \to\infty} \sum_{\substack{k\leq n\\ \lnot\opentrades(k,n)}}  \beta_k (\varepsilon/3) \alpha_k <\infty. \]
As shown above,
for any $k$ the conditions for $\lnot\opentrades(k,n)$ will eventually be met by all sufficiently large $n$. Thus 
\[ \lim_{n \to\infty} \sum_{\substack{k\leq n\\ \lnot\opentrades(k,n)}}  \beta_k (\varepsilon/3) \alpha_k = 
\sum_{k}  (\varepsilon/3) \beta_k \alpha_k <\infty.\]
Now we show that $\lim_{k \to\infty} \alpha_k = 0$. 
Suppose by way of contradiction that for some $\delta \in (0, 1)$, $\alpha_k > \delta$ for infinitely many $k$, but nevertheless for some sufficiently large time
step $n$, we have
\[ \sum_{i>n} \beta_i \alpha_i < 1/2. \]
Recall that  for each $i\leq n$, at some time $n(i)$, $\opentrades(i,n(i)) =0$ verifies that the holdings of $\Trader^i$ have matured.
 Let $N$ be any number greater than $n(i)$ for all $i \leq n$. Then 
\begin{align*}
\sum_{i < N} \opentrades(i,N) \beta_i \alpha_i
& = \sum_{i \leq n}0 \cdot \beta_i \alpha_i + \sum_{n<i < N} \opentrades(i,N) \beta_i \alpha_i\\
& \leq 0 + \sum_{n<i < N} \beta_i \alpha_i\\
& \leq 1/2. 
\end{align*}
So for infinitely many sufficiently large $k$ we have 
\begin{align*}
  \alpha_k \beta_k 
  &= \alpha_k \left(1- \sum_{i < k} \opentrades(i,k) \beta_i \alpha_i \right) \\
  &\geq \alpha_k (1 - 1/2) \\
  &\geq \delta/2. 
\end{align*}
Thus
\[ \sum_k (\varepsilon/3) \beta_k \alpha_k =\infty, \]
contradicting that this sum is bounded. 
Therefore in fact $\alpha_k \eqsim_k 0$, as desired.
\end{proof}


\subsection{Affine Preemptive Learning}\label{app:affpolymax}

\affpolymax*

\paragraph{Proof strategy:  buying combinations that will appreciate, and ROI.}

The inequality
\[ \liminf_{\nn\to\infty}\pt_{\nn}(\aff_{\nn}) \ge \liminf_{\nn\rightarrow\infty}\sup_{m \ge n}\pt_{m}(\aff_{n}) \]
 states roughly that $\pt_n$ cannot infinitely often underprice   the $\RR$-combination
   $\aff_{n}$ by a substantial amount in comparison to any price $\pt_{m}(\aff_{n})$ assigned to $\aff_{n}$ by $\pt_m$ at any future time $m\ge
   n$.

Intuitively, if the market $\MP$ did not satisfy this inequality, then $\MP$ would be exploitable by a trader that buys the $\RR$-combination $\aff_{n}$
when 
  its price is low, and then sells it back when, inevitably, the price is substantially higher. 
  If we have sold back all our shares in some sentence $\phi$, then there is no
  contribution, positive or negative, to our
  net value from our $\phi$-shares (as opposed to their prices); for every share we owe, there is a matching share that we hold.
  So if we buy low and sell high, we have made a profit off of the price
  differential, and once the inter-temporal arbitrage is complete we have not taken on any net risk from our stock holdings.  

The fact that we can accrue stock 
holdings that we are guaranteed to eventually sell back for more than their purchase price is not sufficient to exploit the market. It
may be the case that at every time $n$ we spend $\$ 1$ on some $\RR$-combination that we eventually sell back at $\$ 2$, but not 
 until time
$4n$. (That is, until time $4n$, the price remains low.) 
Then at every time $n$ we owe $-\$ n$ in cash, but only have around $\$ 2(n/4)$ worth of cash from shares we have sold off,
for a net value of around $-n/2$. Thus we have net value unbounded below and hence do not exploit the market,
despite the fact that each individual investment we make is eventually guaranteed to be  profitable.

To avoid this obstacle, we will apply 
  the  $\varepsilon$-return on investment lemma (\Lem{type3}) to the sequence of traders $(\Trader^k)_k$ that enforce the inequality at time
 $k$ as  described above. That is, $\Trader^k$ myopically ``keeps $\MP$ sensible about
$\seq{\aff}$'' at time $k$ by buying the $\RR$-combination $\aff_k$
described above if that $\RR$-combination is under-priced at time $k$, and otherwise $\Trader^k$ does nothing.
The ROI Lemma guarantees that the inequality cannot infinitely often fail substantially, or else this sequence would have $\delta$-ROI for some $\delta$. 

  The main technical difficulty is that 
we have to buy the $\RR$-combination $\aff_{n}$ at time $n$ (if it is underpriced), wait for the
  price of the combination to increase substantially, and then sell it off, possibly over multiple time steps. 
  The traders $\Trader^k$ will therefore have to track what fraction of their initial investment they have sold off at any given time. 

\paragraph{Proof.}
\begin{proof}
  We show the first equality; the second equality follows from the first by considering the negated sequence $(-\aff_n)_n$. 

  Since for all $n$ we have 
   $ \sup_{m \ge n}\pt_{m}(\aff_{n}) \ge \pt_{\nn}(\aff_{\nn}) $, the corresponding inequality in the limit infimum is immediate. 

Suppose for contradiction that the other inequality doesn't hold, so that 
  \[
    \liminf_{\nn\to\infty}\pt_{\nn}(\aff_{\nn}) < \liminf_{\nn\rightarrow\infty}\sup_{m \ge n}\pt_{m}(\aff_{n})\ .
  \]
  Then there are rational numbers $\varepsilon>0$ and $\localportfoliovalue$  such that
we have 
\[ \liminf_{\nn\to\infty} \pt_{\nn}(\aff_{\nn})<  \localportfoliovalue - \varepsilon<  \localportfoliovalue + \varepsilon<\liminf_{\nn\to\infty} 
\sup_{m \ge n}\pt_{m}(\aff_{n})\ .\]
Therefore we
can fix some sufficiently large $s_\varepsilon$ such that:
\begin{itemize}
\item for all $n>s_\varepsilon$, we have
$\sup_{m \ge n}\pt_{m}(\aff_{n})
> \localportfoliovalue + \varepsilon$, and 
\item for infinitely many $n> s_\varepsilon$, we have $\pt_{\nn}(\aff_{\nn})<\localportfoliovalue - \varepsilon$.
\end{itemize}

We will assume without loss of generality that each $\magnit{\aff_n} \leq 1$; they are assumed to be bounded, so they can be scaled down appropriately.

\paragraph{An efficiently emulatable sequence of traders.}
Let $\seq{\gen\aff}$ be an $\exfeatures$-combination progression such that $\gen{\aff_n}(\MP) = \aff_n$ for all $n$.
We now define our sequence of traders $(\Trader^k)_k$. 
For $k \leq s_\varepsilon$, define
$\Trader^k$ to be the zero trading strategy at all times $n$.

For $k > s_\varepsilon$, define $\trade^k_n$ to be the
zero trading strategy
for $n < k$, and define $\trade^k_k$ to be the trading strategy
\[ \trade^k_k :=
\underpriced_k
 \cdot \left( \aff^\gens_{k} - \aff^{\gens * k}_{k}\right)\ ,
\]
where
\[ \underpriced_k :=
  \ctsind{\varepsilon/2}\left( \aff^{\gens * k}_{k} < \localportfoliovalue - \varepsilon/2
\right) \ .
\]
This is a buy order for the $\RR$-combination $\aff_{k}$,
scaled down by the continuous indicator function $\underpriced_k$ for the event that $\pt_k$ has underpriced that
$\RR$-combination at time $k$.
Then, for times $n>k$, we define $\Trader^k$ to submit the trading strategy 
\[ \trade^k_n :=  
-F_n  \cdot 
\left(\underpriced_k \cdot \left( \aff^\gens_{k}  - \aff^{\gens * k}_{k}  \right)\right) \ ,
\]
where we define $F_n \geq 0$ recursively in 
the previous fractions $F_i$:
\begin{align*}
 F_n &:= 
\overpriced_n^k \cdot
\left( 1- \sum_{k<i<n} F_i \right) ,
\end{align*}
using the continuous indicator $\overpriced_n^k := \ctsind{\varepsilon/2}\left( \aff^{\gens * n}_{k}
> \localportfoliovalue + \varepsilon/2 \right)$ of the $\RR$-combination being overpriced at time $n$.

In words, 
$\trade^k_n$ is a sell order for the $\RR$-combination 
$\aff_{k}$, scaled down by the fraction $\underpriced_k$ of this $\RR$-combination that $\Trader^k$ purchased at time $k$, 
 and also scaled down by the fraction $F_n$ of the original purchase $\trade^k_k$ 
 that will be sold on this time step. 
 That is, 
  $\sum_{k<i<n} F_i$  the total fraction of the original purchase $\underpriced_k \cdot \aff_k$
  that has already been sold off on all previous rounds since time $k$.
Then $\trade^k_n$ sells off the remaining fraction $1- \sum_{k<i<n} F_i$ of the  $\RR$-combination 
$\underpriced_k \cdot \aff_k$, scaled down by the extent $\overpriced_n^k$ to which $\aff_k$ is overpriced at time $n$. 

Notice that since $\overpriced_i^k\in [0,1]$ for all $i$, by induction on $n$ we have that $\sum_{k<i\leq n} F_i \leq 1$ and $F_n \ge 0$. This justifies
thinking of the $F_i$ as portions of the original purchase being sold off. 

By assumption, the $\exfeatures$-combination progression $\seq{\aff^\gens}$ is \ec Also, each trader $\Trader^k$ does not trade before time $k$.   
Therefore the sequence of traders $(\Trader^k)_k$ is efficiently emulatable (see \ref{app:dynamicprogramming} on dynamic programming).   (The constant $s_\varepsilon$ before which the $\Trader^{k\le s_\varepsilon}$
  make no trades can be hard-coded in the efficient
  enumeration.)

\paragraph{$\bm{(\varepsilon/2)}$ return on investment for $\bm{\Trader^k}$.}

Now we show that each $\Trader^k$ 
has $(\varepsilon/2)$-ROI;  i.e., for all $\World \in \cworlds(\Theory)$, 
 \[ \lim_{n \to\infty}\World\left( {\textstyle \sum_{i \leq n} \trade^k_i(\MP)}\right)
\geq
(\varepsilon/2)
\magnit{\Trader^k(\MP)} . \]
In words, this says that the trades made by $\Trader^k$ across all time are valued positively in any $\World\in\cworlds(\Theory)$, by a fixed fraction 
$(\varepsilon/2)$ of the magnitude of $\Trader^k$. For $k \leq s_\varepsilon$, this is immediate since $\magnit{\Trader^k(\MP)}=0$ by definition. 

For each $k> s_\varepsilon$,  by definition $\Trader^k$ makes a trade of magnitude $\magnit{\Trader^k_k \ftn} = \underpriced_k(\MP) \cdot \magnit{ \aff_k }$,
followed by trades of magnitude 
\[ \sum_{n>k} F_n \magnit{\Trader^k_k(\MP)} \leq    \magnit{\Trader^k_k(\MP)}\ , \]
by the earlier comment that the $F_n$ are non-negative and sum to at most 1. Furthermore, by assumption, there is some $m>k$ such that 
$\pt_{m}(\aff_{k})  > \localportfoliovalue + \varepsilon$. At that point, $\overpriced_m^k \ftn = 1$, so that $F_m\ftn = 
\left(1 - \sum_{k<i<m}F_i\ftn\right) $; intuitively this implies that at time $m$, $\Trader^k$ will sell off the last of its stock holdings from trades 
in $\aff_{k}$. Formally we have 
\begin{align*}
\sum_{k< i \leq m} \magnit{\Trader^k_i(\MP)} 
&= \left(F_m\ftn +  \sum_{k<i <m} F_i\ftn \right) \cdot
 \underpriced_k\ftn  \cdot \magnit{\aff_k }\\
 &= \magnit{\Trader^k_k(\MP)}\ . 
\end{align*}
Furthermore, for all times $M>m$ we have $F_M \ftn = 0$,  so that $\trade^k_M \ftn \equiv 0$. Therefore $ \magnit{\Trader^k(\MP)}= \sum_{k\leq
i \leq n} \magnit{\Trader^k_i(\MP)}= 2\magnit{\Trader^k_k(\MP)}$. 

Now fix any world  $\World\in\cworlds(\Theory)$. Then the limiting value of $\Trader^k$ in $\World$ is:
\begin{align*}
 \lim_{n \to\infty}\World\left( {\textstyle \sum_{i \leq n} \trade^k_i(\MP)}\right)
 &= \World\left( {\textstyle \sum_{k\leq i \leq m} \trade^k_i(\MP)}\right)\\
\shortintertext{since by the above analysis, $\trade^k_i$ is the zero trade for $i<k$ and for $i>m$;} 
 &= \World\left( {\textstyle \trade^k_k(\MP) + \sum_{k< i \leq m} \trade^k_i(\MP)}\right)\\
 &= \;\; \underpriced_k(\MP) \cdot
 \World\left( {\textstyle \; \aff_{k}-  \pt_k(\aff_{k})\; } \right) \\
&\;\;\; + \underpriced_k(\MP) \cdot
\World\left( {\textstyle 
\sum_{k< i \leq m} (-F_i\ftn) \cdot \left(\; \aff_{k}-  \pt_i(\aff_{k})\; \right) }\right)\\
\shortintertext{by linearity, by the definition of the trader $\Trader^k$, and since by definition $ \aff^{\gens * k}_{k}
(\MP) =  \pt_k(\aff^\gens_{k} (\MP)) = \pt_k(\aff_{k}) $. Note that the prices $\pt_i(\aff_{k})$ of $\aff_k$ in the summation
change with the time step $i$. Then }
 &= \;\; \underpriced_k(\MP) \cdot
 \World\left( {\textstyle \; \aff_{k}-  \aff_{k}\; } \right) \\
&\;\;\; + \underpriced_k(\MP) \cdot
\World\left( {\textstyle -\pt_k(\aff_{k}) + 
\sum_{k< i \leq m} F_i \cdot  \pt_i(\aff_{k}) }\right)\\
 \lim_{n \to\infty}\World\left( {\textstyle \sum_{i \leq n} \trade^k_i(\MP)}\right) &=  \underpriced_k(\MP) \cdot
\left( {\textstyle -\pt_k(\aff_{k}) + 
\sum_{k< i \leq m} F_i \cdot  \pt_i(\aff_{k}) }\right)\ ,
\end{align*}
using linearity to rearrange the terms in the first and second lines, and using that $\sum_{k< i \leq m} F_i = 1$ as shown above. Note that this last quantity does not contain any stock holdings whose value depends on the world; intuitively this is because $\Trader^k$ sold off exactly all of its initial purchase. The remaining quantity is the difference between the price at
which the $\RR$-combination $\aff_k$ was bought and the prices at which it was sold over time, scaled down
by the fraction $\underpriced_k \ftn $ of $\aff_k$ that $\Trader^k$ purchased at time $k$.  

If at time step $k$ the $\RR$-combination $\aff_k$ was not underpriced, i.e., $\underpriced_k (\MP)= 0$, then 
\[ \lim_{n \to\infty}\World\left( {\textstyle \sum_{i \leq n} \trade^k_i(\MP)}\right) = 0 = (\varepsilon/2)\magnit{\Trader^k(\MP)}\ ,  \]
as desired. On the other hand, suppose that $\underpriced_k(\MP)  > 0$. That is, 
\[ \ctsind{\varepsilon/2}\left( \pt_k(\aff_k) < \localportfoliovalue - \varepsilon/2
\right) >0 \ , \]
i.e., $\aff_k$ was actually underpriced at time $k$. Therefore
\begin{align*}
 \lim_{n \to\infty}\World\left( {\textstyle \sum_{i \leq n} \trade^k_i(\MP)}\right) 
 &\geq  \underpriced_k(\MP) \cdot
\left( {\textstyle -(\localportfoliovalue - \varepsilon/2)  + 
\sum_{k< i \leq m} F_i \ftn \cdot ( \localportfoliovalue + \varepsilon/2)
}\right)\\ 
\shortintertext{since when $F_i \ftn >0$ we have $\overpriced_n^k(\MP)>0$ and hence $ \pt_i(\aff_{k})  \geq \localportfoliovalue + \varepsilon/2$,
and using the fact that $\underpriced_k(\MP)$ is nonnegative; }
&= \underpriced_k(\MP) \cdot \varepsilon\\
&\geq (\varepsilon/2)\magnit{\Trader^k(\MP)}\ ,
\end{align*}
since 
\[ \magnit{\Trader^k(\MP)} = 2\magnit{\Trader^k_k(\MP)} = 2\cdot\underpriced_k(\MP) \cdot \magnit{\aff_k} \leq 2\cdot \underpriced_k(\MP) \]
Thus $\Trader^k$ has $(\varepsilon/2)$-ROI\@.

\paragraph{Deriving a contradiction.}

We have shown that the sequence of traders $(\Trader^k)_k$ is bounded, efficiently emulatable, and has $(\varepsilon/2)$-return on investment. 
The remaining condition to \Lem{type3}
states that for all $k$, the magnitude $\magnit{\Trader^k(\MP)}$ of all trades made by $\Trader^k$ must equal $\alpha_k$ for some
\pgenable $\seq{\alpha_k}$. This condition is satisfied for $\alpha_k := 2\magnit{\Trader^k_k(\MP)}$, since as
shown above, $\magnit{\Trader^k(\MP)} = 2\magnit{\Trader^k_k(\MP)}$.

Therefore we can apply \Lem{type3} (the ROI lemma) to the sequence of traders $(\Trader^k)_k$. We conclude that  $\alpha_k \eqsim_k 0$.
Recall that we supposed by way of contradiction that 
the $\RR$-combinations in $\seq{\aff_k}$ are underpriced infinitely often. That is, for infinitely many days $k$,
$\pt_{k}(\aff_{k})< \localportfoliovalue - \varepsilon$. 
But for any such $k>s_\varepsilon$, $\Trader^k_k$ purchases a full  $\RR$-combination  $\aff_k$, 
and then sells off the resulting stock holdings for at least $\localportfoliovalue + \varepsilon/2$, at which point
$\Trader^k$ has profited by at least $\varepsilon$. More precisely, for these $k$ we have 
\[ \underpriced_k(\MP)
= \ctsind{\varepsilon/2}\left( \pt_{k}(\aff_{k})  < \localportfoliovalue - \varepsilon/2\right) = 1 .\]
Since 
\[ \alpha_k = 2 \magnit{\Trader^k_k \ftn} = 2 \cdot \underpriced_k \ftn \cdot \magnit{\aff_k} = 2 \magnit{\aff_k}
\]
 and $\magnit{\aff_k} \geq \varepsilon/2$ (since $\pt_m(\aff_k) - \pt_k(\aff_k) \geq  \varepsilon$ for some $m$),
we have $\alpha_k \geq \varepsilon$ for infinitely many $k$,
which contradicts $\alpha_k \eqsim_k 0$.
\end{proof}


\subsection{Preemptive Learning}\label{app:tbo}

\restatetbo*

\begin{proof}
  This is a special case of \Theorem{affpolymax}, using the combination $\aff_n := \phi_n$.
\end{proof}

\subsection{Convergence}\label{app:con}

\convergence*

\begin{proof}
  By \Theorem{tbo},
\begin{align*}
 \liminf_{n \to \infty} \pt_n(\phi) 
 &= \liminf_{n \to \infty} \sup_{m \geq n} \pt_m(\phi) \\
  &= \lim_{k \to \infty} \inf_{n \geq k} \sup_{m \geq n} \pt_m(\phi) \\
 &= \limsup_{n \to \infty} \pt_n(\phi).
\end{align*}
Since the $\liminf$ and $\limsup$ of $\pt_n(\phi)$ are equal, the limit exists.
\end{proof}


\subsection{Persistence of Affine Knowledge}\label{app:peraffkno}

Let $\seq{\aff}\in\BCS$. Then
\[
\liminf_{\nn\rightarrow\infty}\inf_{\mm\geq \nn}\pt_\mm(\aff_\nn)= \liminf_{\nn\to\infty}\pt_\infty(\aff_\nn)
\]
and
\[
\limsup_{\nn\rightarrow\infty}\sup_{\mm\geq \nn}\pt_\mm(\aff_\nn)=\limsup_{\nn\to\infty}\pt_\infty(\aff_\nn).
\]
\paragraph{Proof strategy: keeping $\bm{\pt_m}$ reasonable on all $\bm{\afffeat_{n \le m}}$.}

In the same vein as the proof in Appendix~\ref{app:affpolymax} of Theorem~\ref{thm:affpolymax}, the  inequality  
\[
\liminf_{\nn\rightarrow\infty}\inf_{\mm\geq \nn}\pt_\mm(\aff_\nn) \ge  \liminf_{\nn\to\infty}\pt_\infty(\aff_\nn)
\]
 says roughly that $\pt_m$ cannot underprice
   the $\RR$-combination $\aff_{n}$ by a substantial amount infinitely often, where the $\RR$-combination is
   ``underpriced'' in comparison to the value of the $\RR$-combination as judged by the limiting belief state $\pt_\infty$.  

As the proof of the present theorem is quite 
   similar to the proof of Theorem~\ref{thm:affpolymax}, we will highlight the differences in this proof, and otherwise give
  a relatively terse proof. 

Intuitively, if the market $\MP$ did not satisfy the present inequality then $\MP$ would be exploitable by a trader that buys $\aff_{n}$ at
any time $m$ such that its price $\pt_{m}(\aff_n)$ is lower than its eventual price, and then sells back the $\RR$-combination when the price rises.

We would like to apply the  
return on investment lemma (\Lem{type3}) as in Theorem~\ref{thm:affpolymax}. One natural attempt is to have, for each $n$,
a trader for that watches
the price of $\aff_n$ at all times $m \ge n$, buying low and selling high. This proof strategy may be feasible, but does not follow 
straightforwardly from the ROI lemma: those traders may be required to make multiple purchases
of $\aff_n$ in order to guard against their prices \emph{ever} dipping too low. This pattern of trading may violate the condition for applying
the ROI lemma that requires traders to have a total trading volume  that is predictable by a \pgenable $\exfeatures$-progression (in order to enable verifiable budgeting). 

Thus we find it easier to index our traders by the time rather than by $\aff_n$. That is, we will define  a sequence of traders $(\Trader^k)_k$, where
the trader $\Trader^k$
ensures that $\pt_k$ does not assign too low a price to any $\aff_n$ for $n \le k$. Specifically, $\Trader^k$ at time $k$
buys any $\RR$-combination $\aff_n$  for
$n \le k$ with $\pt_{k}(\aff_n)$ sufficiently low,
and then sells back each such purchase as the price
$\pt_{m}(\aff_n)$ rises. In this way, if $\RR$-combinations are ever underpriced at any time above the main
diagonal, there is a trader ready to buy that $\RR$-combination in full.

\paragraph{Proof.}
\begin{proof}

  We show the first equality; the second equality follows from the first by considering the negated progression $(-\aff_n)_n$. 

For every $n$, since  $\pt_\infty$ is the limit of the $\pt_m$ and since $\Valuation(\aff_n)$ is continuous as a function of the valuation $\Valuation$, we have that $\inf_{m \ge n}\pt_{m}(\aff_n) 
\le \pt_{\infty}(\aff_{n})$. Therefore 
the corresponding inequality in the limit infimum is immediate. 

Suppose for contradiction that the other inequality doesn't hold, so that 
  \[
    \liminf_{\nn\to\infty}\inf_{m \ge n}\pt_{m}(\aff_n) < \liminf_{\nn\rightarrow\infty}\pt_{\infty}(\aff_{n}).
  \]
  Then there are rational numbers $\varepsilon>0$ and $\localportfoliovalue$  such that
we have 
\[  \liminf_{\nn\to\infty}\inf_{m \ge n}\pt_{m}(\aff_n)
<  \localportfoliovalue - \varepsilon<\localportfoliovalue + \varepsilon<  \liminf_{\nn\rightarrow\infty}\pt_{\infty}(\aff_{n})\ , \]
  and therefore we
can fix some sufficiently large $s_\varepsilon$ such that 

\begin{itemize}
\item for all $n>s_\varepsilon$, we have  $\pt_{\infty}(\aff_{n})
> \localportfoliovalue + \varepsilon$, and 
\item for infinitely many $n>s_\varepsilon$, we have   $\inf_{m\ge n} \pt_{m}(\aff_n)<\localportfoliovalue - \varepsilon$.
\end{itemize}

We will assume without loss of generality that each $\magnit{\aff_n} \leq 1$; they are assumed to be bounded, so they can be scaled down appropriately.

\paragraph{An efficiently emulatable  sequence of traders.}

Now we define our sequence of traders $(\Trader^k)_k$. 
Let $\seq{\gen\aff}$ be an $\exfeatures$-combination progression such that $\gen\aff_n(\MP) = \aff_n$ for all $n$.
For $n < k$, define $\trade^k_n$ to be the
zero trading strategy. 
Define $\trade^k_k$ to be the trading strategy
\[ \trade^k_k := 
\sum_{s_\varepsilon < n\le k} \left( \underpriced_k^n \cdot \left( \aff^\gens_n - \aff_{n}^{\gens * k} \right)   \right)  \  ,
\]
where
\[ \underpriced_k^n:= \ctsind{\varepsilon/2}\left( \aff_{n}^{\gens * k} < \localportfoliovalue - \varepsilon/2
\right)\ . \]
This is a buy order for each $\RR$-combination $\aff_{n}$ for $s_\varepsilon <
n \le k$, scaled down by the continuous indicator function $\underpriced_k^n$ for the event that $\aff_n$ is underpriced at time $k$ by $\MP$.
Then, for time steps $m>k$, we define $\trade^k_m$ to be the trading strategy 
\[ \trade^k_m := F_m    \cdot 
\sum_{s_\varepsilon < n\le k} 
  \left( -\underpriced_k^n \cdot \left( \aff^\gens_n - \aff_{n}^{\gens* m} \right)\right)  \  ,
\]
where 
\[ F_m := 
\left( 1- \sum_{k<i<m} F_i \right)  \cdot
\prod_{s_\varepsilon < n\le k} 
  \overpriced_m^n
\]
and
\[ \overpriced_m^n:= \ctsind{\varepsilon/2}\left( \aff_{n}^{\gens* m} > \localportfoliovalue + \varepsilon/2 \right) \ . \]
This trade is a sell order for the entire $\RR$-combination comprising the sum of the scaled $\RR$-combinations
$\underpriced_k^n \ftn \cdot \aff_{n}$ for $s_\varepsilon < n \leq k$ purchased at time $k$ by $\trade^k_k$, scaled down by 
the fraction $F_m \ftn$.  We define 
$F_m$ so that it  represents the 
fraction of the original purchase made by the  trader $\Trader^k$ that has not yet been sold off by time $m$, scaled down by the  continuous indicator 
$ \prod_{s_\varepsilon < n\le k} 
  \overpriced_m^n$ for the event that
all of those $\RR$-combinations $\aff_{n}$ for $s_\varepsilon < n \leq k$ are overpriced at time $m$.

By assumption, the $\exfeatures$-combination progression $\seq{\aff^\gens}$ is 
\ec[,] and  each trader $\Trader^k$ does not trade before time $k$. 
Therefore the sequence of traders $(\Trader^k)_k$ is efficiently emulatable (see \ref{app:dynamicprogramming} on dynamic programming).

\paragraph{$\bm{(\varepsilon/2)}$ return on investment for $\bm{\Trader^k}$.}

Now we show that each $\Trader^k$ 
has $(\varepsilon/2)$-ROI, i.e., for all $\World \in \cworlds(\Theory)$: 
 \[ \lim_{n \to\infty}\World\left( {\textstyle \sum_{i \leq n} \trade^k_i(\MP)}\right)
\geq
(\varepsilon/2)
\magnit{\Trader^k \ftn}\ .\]

Roughly speaking, $\Trader^k$ gets 
$(\varepsilon/2)$-ROI for the same reason as the traders in the proof of Theorem~\ref{thm:affpolymax}: the stock holdings from 
each $\aff_{n}$ that $\Trader^k$ purchased will be sold
off for at least $(\varepsilon/2)$-ROI, so the sum of the $\RR$-combinations is sold off for $(\varepsilon/2)$-ROI\@. 

Since $\overpriced_i^n \ftn \in [0,1]$ for all $n$ and $i$, 
by induction on $m$ we have that $\sum_{k<i\leq m} F_i\ftn  \leq 1$ and $F_m\ftn  \ge 0$. 
Therefore  for each $k> s_\varepsilon$,  by definition $\Trader^k$ makes a trade of magnitude $\magnit{\Trader^k_k \ftn } =  
\sum_{s_\varepsilon < n\le k}\underpriced_k^n\ftn  \cdot \magnit{\aff_n} $, 
followed by trades of magnitude 
\[ \sum_{n>k} F_n\ftn  \magnit{\Trader^k_k \ftn } \leq   \magnit{\Trader^k_k \ftn }\ .  \]
By assumption, there is some time $m$  such that 
$\pt_{m}(\aff_n)  > \localportfoliovalue + \varepsilon/2$ for all $s_\varepsilon < n \le k$. At that point, $\overpriced_m^n \ftn = 1$ for each such $n$, so that $F_n \ftn = 
\left(1 - \sum_{k<i<n}F_i \ftn \right) $. Then at time $m$, $\Trader^k$ will sell of the last of its stock holdings from trades in $\aff_{n}$, so that $\sum_{k< i \leq m} \magnit{\Trader^k_i \ftn}$ is equal to
\[  \left(F_m \ftn +  \sum_{k<i <m} F_i \ftn\right) \cdot 
 \sum_{s_\varepsilon < n\le k}\underpriced_k^n \ftn \cdot \magnit{\aff_n}  = \magnit{ \Trader^k_k\ftn} \ . \]
Furthermore, for all times $M>m$ we have $F_M\ftn = 0$,  so that $\trade^k_M\ftn \equiv 0$. Therefore $ \magnit{\Trader^k \ftn}= \sum_{k\leq
i \leq n} \magnit{\Trader^k_i\ftn}= 2\magnit{\Trader^k_k\ftn}$. 

From this point, the proof of return on investment is essentially identical to the analogous proof of Theorem~\ref{thm:affpolymax}. The only difference is that here the
trader $\Trader^k$ holds a combination of $\RR$-combinations. Therefore we will not belabor the details; inserting a summation  $\sum_{s_\varepsilon < n\le k}$ in
front of the trades made by the traders in the proof of Theorem~\ref{thm:affpolymax} will produce the precise derivation. 

In short, since  $\Trader^k$ will eventually hold no net  shares, the value of its holdings is determined by the prices of the shares
it trades, regardless of plausible worlds. By definition, $\Trader^k$ purchases a mixture of $\RR$-combinations
\[ \sum_{s_\varepsilon < n\le k} \left( \underpriced_k^n \ftn \cdot \aff_n \right)    , \]
where each $\aff_n$ with $\underpriced_k^n \ftn >0$
has price $\pt_k(\aff_n)$ at most $\localportfoliovalue -\varepsilon/2$ at time $k$.
Then $\Trader^k$ sells off that mixture, at times for which each $\RR$-combination has price
at least $\localportfoliovalue +\varepsilon/2$. Thus $\Trader^k$ eventually has holdings with value at least 
\begin{align*}
&\;\;\;\sum_{s_\varepsilon < n\le k} \left( \underpriced_k^n(\MP) \cdot \left(  \localportfoliovalue +\varepsilon/2 - (\localportfoliovalue -\varepsilon/2) \right) \right) \\ 
&= \sum_{s_\varepsilon < n\le k} \left( \underpriced_k^n(\MP)\cdot  \varepsilon \right)  \\
&\geq \varepsilon \sum_{s_\varepsilon < n\le k} \left( \underpriced_k^n(\MP) \cdot |\aff_n| \right)    \\
&\geq \varepsilon \magnit{\trade^k_k\ftn} \\ 
&= (\varepsilon/2 ) \magnit{\Trader^k\ftn} \ . 
\end{align*}
Thus, $\Trader^k$ 
 has $(\varepsilon/2)$-ROI\@.

\paragraph{Deriving a contradiction.}

We have shown that the sequence of traders $(\Trader^k)_k$ is bounded, efficiently emulatable, and has $(\varepsilon/2)$-return on investment. 
The remaining condition to \Lem{type3}
states that for all $k$, the magnitude $\magnit{\Trader^k(\MP)}$ of all trades made by $\Trader^k$ must equal $\alpha_k$ for some \pgenable $\seq{\alpha_k}$. This condition is satisfied for $\alpha_k := 2\magnit{\Trader^k_k(\MP)}$, since as
shown above, $\magnit{\Trader^k(\MP)} = 2\magnit{\Trader^k_k(\MP)}$.

Therefore we can apply \Lem{type3} (the ROI lemma) to the sequence of traders $(\Trader^k)_k$. We conclude that $\alpha_k \eqsim_k 0$.
Recall that we supposed by way of contradiction that infinitely often, 
some $\aff_n$ is underpriced. That is, for infinitely many times $k$ and indices
$s_\varepsilon < n\le k$,
$\pt_{k}(\aff_n)< \localportfoliovalue - \varepsilon$.

But for any such $k $ and $n$, $\trade^k_k$ will purchase the full  $\RR$-combination $\aff_{n}$, as
\[ \underpriced_k^n (\MP)
= \ctsind{\varepsilon/2}\left( \pt_{k}(\aff_n)  < \localportfoliovalue - \varepsilon/2\right) = 1 \ . \]
Now $\alpha_k = 2 \magnit{\Trader^k_k} \geq 2 \cdot \underpriced_k^n \magnit{\aff_n} =2 \magnit{\aff_n}$, and $\magnit{\aff_n} \geq \varepsilon/2$
(since $\pt_m(\aff_n) - \pt_k(\aff_n) \geq \varepsilon$ for some $m$).
So $\alpha_k \geq \varepsilon$ infinitely often, contradicting $\alpha_k \eqsim_k 0$.
\end{proof}


\subsection{Persistence of Knowledge}\label{app:perkno}

\perkno*

\begin{proof}
  The second and third statements are a special case of \Theorem{peraffkno}, using the combination $\aff_n := \phi_n$; the first statement follows from the second and third.
\end{proof}

\section{Coherence Proofs}\label{app:coherenceproofs}


\subsection{Affine Coherence}\label{app:affcoh}

\affcoh*

\begin{proof}

  We show the first series of inequalities; the second series follows from the first by considering the negated progression $(-\aff_n)_n$. Let $\seq{\gen\aff}$ be an $\exfeatures$-combination progression such that $\gen\aff_n(\MP) = \aff_n$ for all $n$.

\paragraph{Connecting $\bm{\cworlds(\Theory)}$ to $\bm{\pt_\infty}$.}

First we show that 
\[ \liminf_{n \to \infty} \inf_{\World\in \cworlds(\Theory)} \World(\aff_{\nn}) \le \liminf_{n \to \infty}  \pt_{\infty}(\aff_{n}). \]
It suffices to show the stronger statement that for any $n\in\NN^+$, 
\[ \inf_{\World\in\cworlds(\Theory)} \World(\aff_{\nn}) \le \pt_{\infty}(\aff_{n}). \]
 This is a generalization 
of coherence in the limit to affine relationships; its proof will follow a strategy essentially identical to the one used in the proof of Theorem~\ref{sec:limitprops} (coherence)
to show the particular coherence relationships that are sufficient to imply ordinary probabilistic coherence. That is,
we will construct a trader that waits for the coherence relationship to approximately hold (so to speak) and for the price of the corresponding $\RR$-combination to
approximately converge, and then buys the combination repeatedly if it is underpriced. 

 Suppose by way of contradiction that the inequality does not hold, so for some fixed $n$ 
  there are  rational numbers $\varepsilon>0$ and $\localportfoliovalue$ and  a time step $s_\varepsilon$  such that for all $m> s_\varepsilon$ we have 
  \[ \pt_{m}(\aff_{n})   <  \localportfoliovalue  -\varepsilon   <  \localportfoliovalue +\varepsilon
  < \inf_{\World\in\pcworlds(\dt_\mm)} \World(\aff_{\nn}) \ . \]
 Therefore we can define a trader $\Trader$ that waits
  until time $s_\varepsilon$, and thereafter buys a full 
$\RR$-combination $\aff_{n}$ on every time step. That is, we take $\trade_m$ to be the zero trading
  strategy for $m\le s_\varepsilon$, and we define $\trade_m$ for $m > s_\varepsilon$ to be 
  \[ \trade_m := \aff^\gens_n - \aff_n^{\gens* m} .  \]
  Intuitively, since the infimum over plausible worlds of the value of the stocks in this $\RR$-combination
  is already substantially higher than its price, the value of the  total holdings of our trader $\Trader$ immediately increases  by at least $2\varepsilon$. More formally, we have that for any time $m$ and any $\World\in\pcworlds(\dt_\mm)$, 
  \begin{align*}
    \World\left(\sum_{i\le m}\trade_i(\MP)\right) &=     \World\left(\sum_{s_\varepsilon < i\le m}\trade_i(\MP)\right)\\ 
   \shortintertext{since $\trade_m \equiv 0$ for $m \le s_\varepsilon$;}
    &= \sum_{s_\varepsilon < i\le m} \World(\aff_{\nn}) - \pt_{i}(\aff_{n}) \\ 
   \shortintertext{by linearity, by definition of $\trade_i$, and since $\aff^\gens_{\nn}(\MP) \equiv \aff_{\nn}$ and
   $\aff^{\gens * m}_{\nn}(\MP) \equiv \pt_m(\aff_{\nn})$; } 
    &\geq \sum_{s_\varepsilon < i\le m}  \localportfoliovalue  +\varepsilon -(\localportfoliovalue  -\varepsilon  )  \\ 
  &=  2\varepsilon (m-s_\varepsilon).
  \end{align*}
  This is bounded below by 0 and  unbounded above  as $m$ goes to $\infty$. Thus $\Trader$ exploits the market $\MP$, contradicting that $\MP$ is a \li{}. Therefore in fact we must have 
  \[ \liminf_{n \to \infty} \inf_{\World\in\cworlds(\Theory)} \World(\aff_{\nn}(\MP)) \le \liminf_{n \to \infty}  \pt_{\infty}(\aff_{n}(\MP)) , \]
as desired. 

\paragraph{Connecting  $\bm{\pt_n}$ to $\bm{\pt_\infty}$ and to fast diagonals.}

Now we show that 
\[ \liminf_{n \to\infty} \pt_{\infty}(\aff_{n}) \le \liminf_{n \to\infty} \pt_{\nn}(\aff_{\nn}). \]
This says, roughly speaking, that affine
relationships that hold in the limiting belief state $\pt_\infty$ also hold  along the main diagonal.  We show this inequality in two steps. 
First, by Theorem~\ref{thm:peraffkno} (Persistence of affine knowledege), we have 
\[ \liminf_{n \to\infty} \pt_{\infty}(\aff_{n}) \le  \liminf_{n \to\infty}\inf_{m \ge n} \pt_{m}(\aff_{n}). \]
This says roughly that if the limiting beliefs end up satisfying some sequence of affine relationships, then eventually all belief states above the main
diagonal satisfy that relationship to at least the same extent. Second, it is immediate that 
\[ \liminf_{n \to\infty}\inf_{m \ge n} \pt_{m}(\aff_{n}) \le \liminf_{n \to\infty}  \pt_{n}(\aff_{n}), \]
since for all $n$, 
$\inf_{m \ge n} \pt_{m}(\aff_{m}) \le \pt_{\nn}(\aff_{\nn})$. Thus we have the desired inequality. 
\end{proof}

\subsection{Affine Provability Induction}\label{app:affprovind}

\affprovind*

\begin{proof}
  We prove the statement in the case of $\geq$; the case of $\leq$ is analogous, and the case of $=$ follows from the conjunction of the other two cases.  By \Theorem{affcoh},

  \[ \liminf_{n \to \infty} \pt_n(\aff_n) \geq \liminf_{n \to \infty} \inf_{\World \in \cworlds(\Theory)} \World(\aff_n) \geq b. \]
  We will usually apply this theorem using the $=$ case.
\end{proof}

\subsection{Provability Induction}\label{app:provind}

\restatepi*

\begin{proof}
  Since $\seq\phi$ is a sequence of theorems, for all $n$ and $\World \in \cworlds(\Theory)$, $\World(\phi_n) = 1$.  So by \Theorem{affprovind},
  \[ \pt_n(\phi_n) \eqsim_n 1. \]
  Similarly, since $\seq\psi$ is a sequence of disprovable sentences, for all $n$ and $\World \in \cworlds(\Theory)$, $\World(\psi_n) = 0$.  So by \Theorem{affprovind},
  \[ \pt_n(\psi_n) \eqsim_n 0. \]
\end{proof}

\subsection{Belief in Finitistic Consistency}\label{app:pac}

\restatepac*

\begin{proof}
  Since each statement $\consen(\Theory)(\quot{\enc{f}(\enc{\nn})})$ is computable and true, and $\Theory$ \representscomputations, each of these statements is provable in $\Theory$.  Now apply \Theorem{provind} to get the desired property.
\end{proof}

\subsection{Belief in the Consistency of a Stronger Theory}\label{app:pazfc}

\restatepazfc*

\begin{proof}
  Since each statement $\consen(\Theory')(\quot{\enc{f}(\enc{\nn})})$ is computable and true, and $\Theory$ \representscomputations, each of these statements is provable in $\Theory$.  Now apply \Theorem{provind} to get the desired property.
\end{proof}

\subsection{Disbelief in Inconsistent Theories}\label{app:incons}

\restateincons*

\begin{proof}
  Since each statement 
  $\quot{\enc{\Theory^\prime_\nn}\textnormal{\ is inconsistent}}$ is provable in $\PA$, and $\Theory$ \representscomputations, each of these statements is provable in $\Theory$.  Now apply \Theorem{provind} to get the first desired property.

  Similarly, since each statement $\quot{\enc{\Theory^\prime_\nn}\textnormal{\ is consistent}}$ is disprovable in $\PA$, and $\Theory$ \representscomputations, each of these statements is disprovable in $\Theory$.  Now apply \Theorem{provind} to get the second desired property.
\end{proof}

\subsection{Learning of Halting Patterns}\label{app:halts}

\restatehalts*

\begin{proof}
  Since each statement $\quot{\text{$\enc{m_\nn}$ halts on input $\enc{x_\nn}$}}$ is computable and true, and $\Theory$ \representscomputations, each of these statements is provable in $\Theory$.  Now apply \Theorem{provind} to get the desired property.
\end{proof}

\subsection{Learning of Provable Non-Halting Patterns}\label{app:loops}

\restateloops*

\begin{proof}
  Each statement $\quot{\text{$\enc{q_\nn}$ halts on input $\enc{y_\nn}$}}$ is disprovable in $\Theory$.  Now apply \Theorem{provind} to get the desired property.
\end{proof}

\subsection{Learning not to Anticipate Halting}\label{app:dontwait}

\restatedontwait*

\begin{proof}
  Since each statement $\quot{\text{$\enc{q_\nn}$ halts on input $\enc{y_\nn}$ within $\enc{f}(\enc{\nn})$ steps}}$ is computable and false, and $\Theory$ \representscomputations, each of these statements is disprovable in $\Theory$.  Now apply \Theorem{provind} to get the desired property.
\end{proof}

\subsection{Limit Coherence}\label{app:lc}

\limitcoherence*

\begin{proof}
The limit $\pt_\infty(\phi)$ exists by \Theorem{con}, so $\pt_\infty$ is well-defined. \citet{Gaifman:1964} shows that $\pt_\infty$ defines a probability measure over $\cworlds(\Theory)$ so long as the following three implications hold for all sentences $\phi$ and $\psi$:
  \begin{itemize}
    \item If $\Theory \vdash \phi$, then $\pt_\infty(\phi) = 1$,
    \item If $\Theory \vdash \lnot \phi$, then $\pt_\infty(\phi) = 0$,
    \item If $\Theory \vdash \lnot(\phi \land \psi)$, then $\pt_\infty(\phi \lor \psi) = \pt_\infty(\phi) + \pt_\infty(\psi)$.
  \end{itemize}
Let us demonstrate each of these three properties.
\begin{itemize}
  \item Assume that $\Theory \vdash \phi$.  By \Theorem{provind}, $\pt_\infty(\phi) = 1$.
  \item Assume that $\Theory \vdash \lnot \phi$.  By \Theorem{provind}, $\pt_\infty(\phi) = 0$.
  \item Assume that $\Theory \vdash \lnot(\phi \land \psi)$.  For all $\World \in \cworlds(\Theory)$, $\World(\phi \vee \psi) = \World(\phi) + \World(\psi)$.  So by \Theorem{affprovind}, $\pt_\infty(\phi \vee \psi) = \pt_\infty(\phi) + \pt_\infty(\psi)$.
\end{itemize}

\end{proof}

\subsection{Learning Exclusive-Exhaustive Relationships}\label{app:lex}

\restatelex*

\begin{proof}
  Define $\aff_n := \phi_n^1 + \cdots + \phi_n^k$.  Note that for all $\World \in \cworlds(\Theory)$, $\World(\aff_n) = 1$.

  So by \Theorem{affprovind} and linearity,
  \[  \pt_n(\phi_n^1) + \cdots + \pt_n(\phi_n^k ) = \pt_n(\phi_n^1 + \cdots + \phi_n^k ) =     \pt_n(\aff_n) \eqsim_n 1. \]

\end{proof}

\section{Statistical Proofs}\label{app:statisticalproofs}

\subsection{Affine Recurring Unbiasedness}\label{app:recunbiasedaff}

\recunbiasedaff*

\begin{proof}

  Define 
  \[ \mathrm{Bias}_n := 
      \frac
        {\sum_{i\leq \nn}\fuz_i  \cdot(\pt_i(\aff_i)-\thmval(\aff_i))}
        {\sum_{i\leq \nn}\fuz_i} .
      \]

  Our proof consists of three steps:
  \begin{enumerate}
    \item Proving $\limsup_{n \to \infty} \mathrm{Bias}_n \geq 0$.
    \item Noting that the first argument can be applied to the sequence $(-\aff_n)_n$ to prove $\liminf_{n \to \infty} \mathrm{Bias}_n \leq 0$.
    \item Proving that, given these facts, $(\mathrm{Bias}_n)_n$ has 0 as a limit point.
  \end{enumerate}
  The first step will be deferred.  The second step is trivial.  We will now show that the third step works given that the previous two do:

  Let $a := \liminf_{n \to \infty} \mathrm{Bias}_\nn \leq 0$ and $b := \limsup_\nn \mathrm{Bias}_n \geq 0$.  If $a = 0$ or $b = 0$ then of course
  0 is a limit point.  Otherwise, let $a < 0 < b$.  If 0 is not a limit point of $(\mathrm{Bias}_n)_n$, then there are $\varepsilon > 0$ and 
  $N \in\NN$ such that $\forall n> N : \mathrm{Bias}_\nn \notin (-\varepsilon,\varepsilon) \subseteq (a,b)$.
  Choose $M>N$ such that $\mathrm{Bias}_M \in (\varepsilon, b]$ and for all $n>M$, $\mathrm{Bias}_n-\mathrm{Bias}_{n+1}<\varepsilon$; sufficiently late adjacent terms are close because $\sum_{i \leq \nn} \fuz_i$ goes to $\infty$ and the absolute difference between successive numerators is at most 1.
  Then $(\mathrm{Bias}_n)_{n>M}$ must remain positive (it cannot cross the $2\varepsilon$-wide gap), contradicting that $a$ is also a limit point and $a < 0$.

  At this point we have shown that the second and third steps follow from the first step, so we need only show the first step: $\limsup_{n \to \infty} \mathrm{Bias}_n \geq 0$.  Suppose this is not the case.  
  Then there is some natural $N$ and rational $\varepsilon \in (0, 1)$ such that for all $\nn \geq N$,
  
      \[ 
      \frac
      {\sum_{i\leq \nn}\fuz_i\cdot (\pt_i(\aff_i) - \thmval(\aff_i))}
        {\sum_{i\leq \nn}\fuz_i}
        <- 2\varepsilon
      \]
        or equivalently,
        \[ 
        \sum_{i\leq \nn}\fuz_i\cdot (\thmval(\aff_i) - \pt_i(\aff_i))
        > 2\varepsilon\sum_{i \leq \nn} \fuz_i .
      \]

      \paragraph{An efficiently emulatable sequence of traders.}
      We will consider an infinite sequence of traders, each of which will buy a ``run'' of $\RR$-combinations, and which will have $\varepsilon$-ROI\@.  Then we will apply \Lem{type3} to derive a contradiction.

      Without loss of generality, assume each $\magnit{\aff_n} \leq 1$; they are uniformly bounded and can be scaled down without changing the theorem statement.
      Let $\seq{\gen \fuz}$ be an \ec $\exfeatures$ progression such that $\gen{\fuz_n} \ftn = \fuz_n$.  Let $\seq{\gen \aff}$ be an \ec $\exfeatures$-combination progression such that $\gen{\aff_n} \ftn = \aff_n$.  Let $\seq{\gen\aff}$ be equal to
      \[
        \gen{\aff_n} = c + \xi_1 \phi_1 + \cdots + \xi_{l(n)} \phi_{l(n)},
      \]
      and define the expressible feature $\magnit{\gen{\aff_n}} := \sum_{i=1}^{l(n)} |\xi_i|$.

      For $k < N$, trader $\Trader^k$ will be identically zero.  For $k \geq N$, trader $\Trader^k$ will buy some number of copies of $\gen{\aff_n}$ on day $n$; formally,

      \[ \trade^k_n := \gen{\gamma_{k,n}} \cdot (\gen{\aff_n} - \aff_n^{\gens * n}) , \]
      with $\gen{\gamma_{k,n}}$ to be defined later.  To define $\gen{\gamma_{k,n}}$, we will first define a scaling factor on the trader's purchases:
      \[ \delta_k := \frac{\varepsilon}{1 + k} . \]
      Now we recursively define
      \[
        \gen{\gamma_{k, \nn}} := [\nn \geq k] \min\left(\delta_k \cdot \gen{\fuz_\nn}, 1 - \sum_{i\le \nn-1} \gen{\gamma_{k, i}} \magnit{ \gen{\aff_n} } \right),
      \]
      where $[\nn \geq k] $ is Iverson bracket applied to $n \geq k$, i.e. the 0-1 indicator of that condition.
      This sequence of traders is efficiently emulatable, because $\seq{\gen\aff}$ and $\seq{\gen\fuz}$ are \ec[], and $\Trader^k$ makes no trades before day $k$.

      \paragraph{Analyzing the trader.}
      On day $n$, trader $\Trader^k$ attempts to buy $\delta_k \fuz_n$ copies of $\aff_n$, but caps
      its total budget at 1 dollar; the $\min$ in the definition of $\gen{\gamma_{k, n}}$ ensures that $\magnit{\Trader^k \ftn } \leq 1$.

      Observe that $\sum_{i\le n} \magnit{ \Trader^k_i \ftn } = \max\{1, \sum_{i=k}^n \delta_k \fuz_n \magnit{\aff_n} \}$.  We can use this to show that $\magnit{\Trader^k \ftn} = 1$.  For all $n \geq N$,
        \[ 
2 \varepsilon \sum_{i \leq \nn} \fuz_i
<
\sum_{i\leq \nn}\fuz_i \cdot (\thmval(\aff_i) - \pt_i(\aff_i))
\leq 2 \sum_{i \leq n} \fuz_i \magnit{\aff_i}.
        \]
        Since the left hand side goes to $\infty$ as $n \to \infty$, so does the right hand side.  So indeed, $\sum_{n=k}^\infty \delta_k \fuz_n \magnit{\aff_n} = \infty$, and $\magnit{\Trader^k \ftn} = 1$.

        \paragraph{$\bm{\Trader^k}$ has $\bm{\varepsilon}$ return on investment.}
        We will now show that each trader $\Trader^k$ has $\varepsilon$ return on investment.  Trivially, for $k < N$, $\Trader^k$ has $\varepsilon$ return on investment, because $\magnit{\Trader^k \ftn} = 0$.  So consider $k \geq N$.

        As shorthand, let $\gamma_{k, n} := \gen{\gamma_{k, n}} \ftn$.  Let $m_k$ be the first $m \geq k$ for which $\gamma_{k,m} < \delta_k \fuz_m$.  We have $\gamma_{k, n} = \delta_k \fuz_n$ for $k \leq n < m_k$, and $\gamma_{k, n} = 0$ for $n > m_k$.  So for all $\World \in \cworlds(\Theory)$,

        \begin{align*}
          &\World \left(\sum_{n=1}^\infty \trade^k_n \ftn\right)
          \displaybreak[0]\\
          =&
        \sum_{n=1}^\infty \World(\trade^k_n \ftn)
          \displaybreak[0]\\
        =&
        \sum_{k \leq n \leq m_k} \World(\trade^k_n \ftn)
          \displaybreak[0]\\
        =&
        \sum_{k \leq n \leq m_k} \delta_k \fuz_n (\World(\aff_n) - \pt_n(\aff_n)) - (\delta_{k} \fuz_{m_k} - \gamma_{k, m_k}) (\World(\aff_{m_k}) - \pt_n(\aff_{m_k}))
          \displaybreak[0]\\
        \geq&
        \sum_{k \leq n \leq m_k} \delta_k \fuz_n (\World(\aff_n) - \pt_n(\aff_n)) - \delta_k
          \displaybreak[0]\\
        =&
        \sum_{n \leq m_k} \delta_k \fuz_n (\World(\aff_n) - \pt_n(\aff_n)) - \sum_{n < k} \delta_k \fuz_n (\World(\aff_n) - \pt_n(\aff_n)) - \delta_k
          \displaybreak[0]\\
        \geq&
          \sum_{n \leq m_k} \delta_k \fuz_n (\World(\aff_n) - \pt_n(\aff_n)) - (k \delta_k + \delta_k)
          \displaybreak[0]\\
        =&
        \sum_{n \leq m_k} \delta_k \fuz_n (\thmval{\aff_n} - \pt_n(\aff_n)) - \varepsilon
          \displaybreak[0]\\
        \geq&
        \ 2\varepsilon  - \varepsilon
          \displaybreak[0]\\
        =&
        \ \varepsilon.
        \end{align*}
        So each trader $\Trader^k$ with $k \geq N$ makes at least $\varepsilon$ profit with trades of total magnitude 1, ensuring that it has $\varepsilon$ return on investment.

        \paragraph{Deriving a contradiction.}
        Note that the magnitudes of the traders are \pgenable (the first $N - 1$ have magnitude 0 and the rest have magnitude 1).  By \Lem{type3}, $\magnit{\Trader^k} \eqsim_k 0$.  $\magnit{\Trader^k} = 1$ for all $k \geq N$ (by the above analysis), so this is a contradiction.

\end{proof}

\subsection{Recurring Unbiasedness}\label{app:recurringunbiasedness}

\recurringunbiasedness*

\begin{proof}
  This is a special case of \Theorem{recunbiasedaff}.
\end{proof}

\subsection{Simple Calibration}\label{app:simcal}

\simcal*
\begin{proof}
Define $\fuz_i := \ctsind{\delta_i}(a < \pf[i]{\phi_i} < b) $.  By \Theorem{recurringunbiasedness}, the sequence
 \[
   \left( \frac
      {\sum_{i\leq \nn}\fuz_i\cdot (\pt_i(\phi_i) - \thmind(\phi_i))}
        {\sum_{i\leq \nn}\fuz_i}
      \right)_{n \in \NN^+}
  \]
  has 0 as a limit point.  Let $n_1, n_2, \ldots$ be a the indices of a subsequence of this sequence that converges to zero.  We also know that for all $n$ high enough,
 \[
      a \leq \frac {\sum_{i\leq \nn}\fuz_i \pt_i(\phi_i)}{\sum_{i\leq \nn}\fuz_i} \leq b
  \]
  because $\fuz_i = 0$ whenever $\pt_i(\phi_i) \not\in [a, b]$. Now consider the sequence
  \begin{align*}
    & \left(\frac
      {\sum_{i\leq \nn_k}\fuz_i\cdot \thmind(\phi_i)}
        {\sum_{i\leq \nn_k}\fuz_i}
      \right)_{k \in \NN^+}
      \\
      = &
        \left(
      \frac {\sum_{i\leq \nn_k}\fuz_i \pt_i(\phi_i)}{\sum_{i\leq \nn}\fuz_i}
      -
      \frac{\sum_{i\leq \nn_k}\fuz_i\cdot (\pt_i(\phi_i) - \thmind(\phi_i))}
        {\sum_{i\leq \nn_k}\fuz_i}
      \right)_{k \in \NN^+}
    \end{align*}
      The first term is bounded between $a$ and $b$, and the second term goes to zero, so the sequence has a
      $\liminf$ at least $a$ and a $\limsup$ no more than $b$.  By the Bolzano-Weierstrass theorem,
      this sequence has a convergent subsequence, whose limit must be between $a$ and $b$.
      This subsequence is also a subsequence of
      \[
      \left(\frac
        {\sum_{i\leq \nn}\fuz_i\cdot \thmind(\phi_i)}
          {\sum_{i\leq \nn}\fuz_i}
        \right)_{n \in \NN^+}
      \]
      which proves that this sequence has a limit point in $[a, b]$, as desired.
\end{proof}

\subsection{Affine Unbiasedness From Feedback}\label{app:wubaff}

\wubaff*

\begin{proof}
  Without loss of generality, assume that each $\|\aff_n\|_1 \leq 1$.  
  Define
  \[
    \mathrm{Bias}_k :=
    \frac
    {\sum_{i\leq k }\fuz_{\deff(i)}  \cdot(\pt_{\deff(i)}(\aff_{\deff(i)})-\thmind(\aff_{\deff(i)}))}
    {\sum_{i\leq k} \fuz_{\deff(i)}}
  \]
  and observe that $\mathrm{Bias}_k \eqsim_k 0$ implies the theorem statement, since we need only consider the sum over $n$ in the support of $\deff$.
  We wish to show that $\mathrm{Bias}_k \eqsim_k 0$.  We will show a trader that exploits $\MP$ under the assumption that $\lim\inf_{k \to \infty} \mathrm{Bias}_k < 0$, proving $\mathrm{Bias}_k \gtrsim_k 0$.  We can apply the same argument to the sequence $(-\aff_n)_n$ to get $\mathrm{Bias}_k \lesssim_k 0$.

Suppose $\lim\inf_{k \to \infty} \mathrm{Bias}_k < 0$.  Under this supposition, infinitely often, $\mathrm{Bias}_k < - 3 \varepsilon$ for some rational $0 < \varepsilon < 1/6$.

\paragraph{Defining the trader.}

  Let $\seq{\gen \fuz}$ be an \ec $\exfeatures$ progression such that $\gen{\fuz_n} \ftn = \fuz_n$.  Let $\seq{\gen \aff}$ be an \ec $\exfeatures$-combination progression such that $\gen{\aff_n} \ftn = \aff_n$.  Recursively define

\begin{align*}
  \gen{\beta_i} &:= \varepsilon \cdot \gen{\mathrm{Wealth}_i} \cdot \gen{\fuz_i }
  \\
  \gen{\mathrm{Wealth}_i} &:= 1 + \sum_{j\le i-1} \gen{\beta_j} \cdot \left(\aff_{\deff(j)}^{\gens * \deff(j+1)} - \aff_{\deff(j)}^{\gens * \deff(j)}\right)
\end{align*}
in order to define the trader
\[
  T_\nn := \begin{cases}
    \gen{\beta_i} \cdot \left(\gen{\aff_{\deff(i)}} - \aff_{\deff(i)}^{\gens * n}\right)  - [i > 1] \cdot \gen{\beta_{i-1}} \cdot \left(\gen{\aff_{\deff(i-1)}} - \aff_{\deff(i-1)}^{\gens * n} \right)
    & \text{if $\exists i: \nn = \deff(i)$}
    \\
    0 & \text{otherwise.}
  \end{cases}
\]
Note that $\gen{\beta_i}$ and $\gen{\mathrm{Wealth}_i}$ have rank at most $\deff(i)$, and so $T_n$ has rank at most $n$.

\paragraph{Analyzing the trader.}
As shorthand, let $\beta_i := \gen{\beta_i} \ftn$, and $\mathrm{Wealth}_i := \gen{\mathrm{Wealth}_i} \ftn$.

Intuitively, on day $\deff(i)$, $\Trader$ buys $\aff_{\deff(i)}$ according to a fraction of its ``wealth'' $\mathrm{Wealth}_i$ (how much money $\Trader$ would have if it started with one dollar), and then sells $\aff_{\deff(i)}$ at a later time $\deff(i+1)$.  Thus, $\Trader$ makes money if the price of $\aff_{\deff(i)}$ is greater at time $\deff(i+1)$ than at time $\deff(i)$.  

Betting according to a fraction of wealth resembles the Kelley betting strategy and ensures that the trader never loses more than \$1.
$\mathrm{Wealth}_i - 1$ is a lower bound on $\Trader$'s worth in any world $\World\in\pcworlds(\dt_{\deff(i)})$, based on trades $\Trader$ makes on  $\RR$-combinations $\aff_{\deff(1)}$ through $\aff_{\deff(i-1)}$.  Thus, since the number of copies of $\aff_n$ that $\Trader$ buys is no more than $\varepsilon$ times its current wealth, and $\|\aff_n\| \leq 1$, $T$'s minimum worth is bounded below by $-1$.

Now it will be useful to write $\mathrm{Wealth}_i$ in log space.  Intuitively, this should be enlightening because $T$ always bets a fraction of its wealth (similar to a Kelley bettor), so its winnings multiply over time rather than adding. By induction,
\[
\log \mathrm{Wealth}_i 
= \sum_{j\le i-1} \log \left(1 + \varepsilon \fuz_j (\pt_{\deff(j+1)}(\aff_{\deff(j)}) - \pt_{\deff(j)}(\aff_{\deff(j)}))\right)
\]
This statement is trivial when $i=1$.  For the inductive step, we have
\begin{align*}
  \log \mathrm{Wealth}_{i+1} 
  &=
  \log \left(\mathrm{Wealth}_i + \beta_i (\pt_{\deff(i+1)}(\aff_{\deff(i)}) - \pt_{\deff(i)}(\aff_{\deff(i)})) \right)
  \\
  &=
  \log \left(\mathrm{Wealth}_i + \varepsilon \cdot \mathrm{Wealth}_i \cdot \fuz_i (\pt_{\deff(i+1)}(\aff_{\deff(i)}) - \pt_{\deff(i)}(\aff_{\deff(i)})) \right)
  \\
  &= 
  \log \mathrm{Wealth}_i + \log\left(1 + \varepsilon \fuz_j (\pt_{\deff(j+1)}(\aff_{\deff(j)}) - \pt_{\deff(j)}(\aff_{\deff(j)}))\right)
  \\
  &=
  \sum_{j\le i} \log \left(1 + \varepsilon \fuz_j (\pt_{\deff(j+1)}(\aff_{\deff(j)}) - \pt_{\deff(j)}(\aff_{\deff(j)}))\right)
\end{align*}

For $|x| \leq 1/2$, we have $\log (1 + x) \geq x - x^2$.  Therefore, since $\varepsilon < 1/4$, $\fuz_j \leq 1$, and $|\pt_{\deff(j+1)}(\aff_{\deff(j)}) - \pt_{\deff(i)}(\aff_{\deff(j)})| \leq 1$,

\begin{align*}
  \log \mathrm{Wealth}_i &\geq 
  \sum_{j\le i-1} \Big(
    \varepsilon \fuz_j (\pt_{\deff(j+1)}(\aff_{\deff(j)}) - \pt_{\deff(j)}(\aff_{\deff(j)})) \\[-1em]
    & \phantom{xxxxxxxx} - \varepsilon^2 \fuz_j^2 (\pt_{\deff(j+1)}(\aff_{\deff(j)}) - \pt_{\deff(j)}(\aff_{\deff(j)}))^2 
  \Big)
  \\
  &\geq
  \sum_{j\le i-1} \left( \varepsilon \fuz_j (\pt_{\deff(j+1)}(\aff_{\deff(j)}) - \pt_{\deff(j)}(\aff_{\deff(j)})) - \varepsilon^2\fuz_j \right)
  \\
  &=
  \sum_{j\le i-1} \varepsilon \fuz_j\left( (\pt_{\deff(j+1)}(\aff_{\deff(j)}) - \pt_{\deff(j)}(\aff_{\deff(j)})) - \varepsilon \right)
\end{align*}

At this point, it will be useful to show a relation between $\pt_{\deff(j+1)}(\aff_{\deff(j)})$ and $\thmval(\aff_{\deff(j)})$.  Consider the sequence
\[ \aff'_\nn := \begin{cases}
  \aff_{\deff(i)} - \thmval(\aff_{\deff(i)})& \text{if $\exists i: \deff(i+1) = \nn$}
  \\
  0 & \text{otherwise}
\end{cases}
\]
 which is in $\BCS$ because $\thmval(\aff_{\deff(j)})$ is computable in time polynomial in $\deff(j+1)$.
Of course, each $\aff'_\nn$ has value 0 in any world $\World \in \cworlds(\Theory)$.
So by \Theorem{affprovind},
\[ \pt_\nn(\aff'_\nn) \eqsim_\nn 0 \]
so for all sufficiently high $j$, $\pt_{\deff(j+1)}(\aff_{\deff(j)}) \geq \thmval(\aff_{\deff(j)})-\varepsilon$.  Thus, for some constant~$C$,
\begin{align*}
  \log \mathrm{Wealth}_i 
  &\geq 
  \sum_{j\le i-1} \varepsilon \fuz_j \left( (\thmval(\aff_{\deff(j)}) - \pt_{\deff(j)}(\aff_{\deff(j)})) - 2 \varepsilon \right) - C
  \\
  &=
  \varepsilon \left( \sum_{j\le i-1} \fuz_j \right) (-\mathrm{Bias}_{i-1} - 2 \varepsilon) - C
\end{align*}

Now, for infinitely many $i$ it is the case that $\mathrm{Bias}_{i-1} < - 3 \varepsilon$.  So it is infinitely often the case that $\log \mathrm{Wealth}_i \geq \varepsilon^2 \sum_{j\le i - 1} \fuz_i - C$.  Since $\seq\fuz$ is divergent, $T$'s eventual wealth (and therefore max profit) can be arbitrarily high.  Thus, $T$ exploits $\MP$.
\end{proof}

\subsection{Unbiasedness From Feedback}\label{app:wub}

\restatewub*

\begin{proof}
  This is a special case of \Theorem{wubaff}.
\end{proof}

\subsection{Learning Pseudorandom Affine Sequences}\label{app:prandaff}

\prandaff*

\begin{proof}

  We will prove the statement in the case of $\gtrsim_n$; the case of $\lesssim_n$ follows by negating the $\RR$-combination sequence $\seq\aff$, and the case of $\eqsim_n$ follows from the conjunction of the other two cases.
Suppose it is not the case that $\pt_\nn(\aff_\nn) \gtrsim_\nn 0$.
Then there is a rational $\varepsilon > 0$ such that $\pt_\nn(\aff_\nn) < - 2 \varepsilon$ infinitely often.

\newcommand{\DefinitelySettled}{\mathrm{DefinitelySettled}}

\paragraph{Defining the trader.}
  Let $\seq{\gen\aff}$ be an \ec $\exfeatures$-combination 
   progression such that $\gen{\aff_n} \ftn = \aff_n$.
Let an affine combination $\aff$ be considered \emph{settled} by day $m$ if $\World(\aff) = \thmval(\aff)$ for each $\World \in \pcworlds(\dt_m)$.
We may write $\mathrm{Settled}(\nn, m)$ to be the proposition that $\aff_\nn$ is settled by day $\mm$.
$\mathrm{Settled}(\nn, m)$ is decidable; let $\texttt{settled}$ be a Turing machine deciding $\mathrm{Settled}(\nn, m)$ given $(\nn, m)$.
Now we define a lower-approximation to $\mathrm{Settled}$:

\[ \DefinitelySettled(\nn, m) :\leftrightarrow \exists i \leq m: \texttt{settled}(\nn, i) \text{~returns true within $m$ steps}. \]

Note that
\begin{itemize}
\item $\DefinitelySettled(\nn, m)$ can be decided in time polynomial in $m$ when $\nn \leq m$,
\item $\DefinitelySettled(\nn, m) \rightarrow \mathrm{Settled}(\nn, m)$, and
\item If $\mathrm{Settled}(\nn, m)$, then $\DefinitelySettled(\nn, M)$ for some $M \geq m$.
\end{itemize}

To define the trader, first we will define $\seq{\gen{\alpha}}$ recursively by

\begin{align*}
  \gen{\alpha_\nn} &:= (1 - \gen{C_\nn} ) \ctsind{\varepsilon}(\aff_n^{\gens * n}  < - \varepsilon)
 \\
 \gen{C_\nn} &:= \sum_{i < \nn} [\neg\DefinitelySettled(i, \nn) \vee \deff(i) > \nn] \cdot \gen{\alpha_i}.
\end{align*}

The trader itself buys $\gen{\alpha_\nn} \ftn$ copies of the combination $\gen\aff_n \ftn$ on day $n$:

\[ T_\nn := \gen{\alpha_\nn} \cdot (\gen{\aff_\nn} - \aff_n^{\dagger * n}). \]

Intuitively, $\gen{C_\nn}$ is the total number of copies of $\RR$-combinations that the trader has bought that are either possibly-unsettled (according to $\DefinitelySettled$), or whose deferral time $\deff(i)$ is past the current time $\nn$. 

\paragraph{Analyzing the trader.}
As shorthand, define $\alpha_\nn := \gen{\alpha_\nn} \ftn$ and $C_n := \gen{C_n} \ftn$.
Some important properties of $\seq T$ are:
\begin{itemize}
\item Each $C_\nn \leq 1$.
\item $\alpha_\nn = 1 - C_\nn $ when $\pt_\nn(\aff_\nn) < - 2 \varepsilon$.
\item Whenever $\alpha_\nn > 0$, $\pt_\nn(\aff_\nn) < - \varepsilon$.

\item $\sum_{\nn\in {\NN^+}} \alpha_\nn =\infty$.  
  Suppose this sum were finite.  Then there is some time $N$ for which $\sum_{\nn \geq N} \alpha_\nn < 1/2$.  
  For some future time $N'$, and each $\nn < N$, we have $\DefinitelySettled(\nn, N') \wedge \deff(\nn) \leq N'$.  
  This implies that $C_\nn < 1/2$ for each $\nn \geq N'$.  
  However, consider the first $\nn \geq N'$ for which $\pt_\nn(\aff_\nn) < - 2 \varepsilon$.  
  Since $C_\nn < 1/2$, $\alpha_n \geq 1/2$.  But this contradicts $\sum_{n \geq N} \alpha_n < 1/2$.
\end{itemize}

Let $b \in \QQ$ be such that each $\magnit{\aff_n} < b$.  Consider this trader's profit at time $m$ in world $\World\in\pcworlds(\dt_\mm)$:

\begin{align*}
& \sum_{\nn \leq m} \alpha_\nn (\World(\aff_\nn) - \pt_\nn(\aff_\nn))
\\
\geq &
\sum_{\nn \leq m} \alpha_\nn  (\World(\aff_\nn) + \varepsilon)
\\
\geq &
\sum_{\nn \leq m} \alpha_\nn  (\thmval(\aff_\nn) + \varepsilon) - 2b
\end{align*}

where the last inequality follows because $\sum_{n \leq m} [ \neg \mathrm{Settled}(n, m) ] \alpha_n \leq 1$, and an unsettled copy of $\aff_n$ can only differ by $2b$ between worlds, while settled copies of $\aff_n$ must have the same value in all worlds in $\pcworlds(\dt_m)$.

$T$ holds no more than 1 copy of $\RR$-combinations unsettled by day $M$, and an unsettled combination's value can only differ by $2b$ between different worlds while settled affine combinations must have the same value in different worlds in $\pcworlds(\dt_m)$.  We now show that this quantity goes to $\infty$, using the fact that $\affs$ is pseudorandomly positive.

Observe that $\alphas$ is a divergent weighting.  It is also $\deff$-patient, since $\sum_{n \leq m} [\deff(n) \geq m] \alpha_n \leq 1$.   So by assumption,

\[ \lim\inf_{m\rightarrow\infty}
\frac{\sum_{\nn \leq m}\alpha_\nn \cdot \thmval(\aff_\nn)}
{\sum_{\nn\leq m}\alpha_\nn } \geq 0. \]

At this point, note that

\[ \sum_{\nn \leq m} \alpha_n (\thmval(\aff_n) + \varepsilon) = 
  \left(\sum_{n \leq m} \alpha_n\right) 
  \left( \frac{\sum_{\nn \leq m}\alpha_\nn \cdot \thmval(\aff_\nn)}{\sum_{\nn\leq m}\alpha_\nn }
    + \varepsilon
  \right). \]

  For all sufficiently high $m$, $\frac{\sum_{\nn \leq m}\alpha_\nn \cdot \thmval(\aff_\nn)}{\sum_{\nn\leq m}\alpha_\nn } \geq -\varepsilon/2$, and $\sum_{n \in \NN^+} \alpha_n = \infty$, so

\[ \lim\inf_{m\rightarrow\infty} \sum_{\nn \leq m} \alpha_\nn (\thmval(\aff_\nn) + \varepsilon) =\infty.\]

If we define $g(m)$ to be the minimum plausible worth at time $m$ over plausible worlds $\World\in\pcworlds(\dt_m)$, we see that $g(m)$ limits to infinity, implying that the trader's maximum worth goes to infinity.  The fact that $g(m)$ limits to infinity also implies that $g(m)$ is bounded from below, so the trader's minimum worth is bounded from below.  Thus, this trader exploits the market $\MP$.

\end{proof}

\subsection{Learning Varied Pseudorandom Frequencies}\label{app:seqprand}\label{app:prand}

\seqprand*

\prand*

\begin{proof}

  We will prove the statement in the case of pseudorandom above; the case of pseudorandom below is analogous, and the case of pseudorandom follows from the other cases.  

  Define $\aff_n := \phi_n - p_n$ and note that $\seq\aff \in \BCS$.  Observe that, because $\seq \aff$ is varied pseudorandomness above $\seq p$, for any $\deff$-patient divergent weighting $\fuz$,
  \[ 
    \frac{\sum_{i \leq n} \fuz_i \thmval{\aff_n}}{\sum_{i \leq n} \fuz_i} \gtrsim_n 0.
  \]
  Now apply \Theorem{prandaff} to get

  \[ \pt_n(\aff_n) = \pt_n(\phi_n) - p_n \gtrsim_n 0. \]

\end{proof}

\subsection{Learning Pseudorandom Frequencies}\label{app:pseudorandom}\label{app:benford}

\pseudorandom*

\benford*

\begin{proof}
  Let $q$ be any rational number less than $p$.  Note that $\phis$ is varied pseudorandom above $q$, so by \Theorem{prand},
  \[ \pt_n(\phi_n) \gtrsim_n q. \]
  But we could have chosen any rational $q < p$, so $\pt_n(\phi_n) \gtrsim_n p$.  An analogous argument shows $\pt_n(\phi_n) \lesssim_n p$.
\end{proof}

\section{Expectations Proofs}\label{app:expectationsproofs}

\subsection{Consistent World LUV Approximation Lemma}\label{app:conluvapprox}

\begin{lemma}\label{lem:conluvapprox}
  Let $\seq\affluv \in \BLCS $ be a $\RR$-LUV combination bounded by some rational number $b$.
  For all natural numbers $n$ and all $\World \in \cworlds(\Theory)$, we have
  \[ | \EE_n^\World(\affluv) - \World(\affluv)| \leq b/n. \]
\end{lemma}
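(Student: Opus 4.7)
The plan is to reduce to the single-LUV case by linearity and then bound the error for each $[0,1]$-LUV by $1/n$. Writing $\affluv = \affconst + \alpha_1 X_1 + \cdots + \alpha_k X_k$ with $\|\affluv\|_1 \le b$, both $\EE_n^\World$ and $\World$ extend linearly over the rewriting $\alta_n(\affluv)$ from \ref{app:featureluvcombos}, so the constant term $\affconst$ cancels and $|\EE_n^\World(\affluv) - \World(\affluv)| \le \sum_j |\alpha_j| \cdot |\EE_n^\World(X_j) - \World(X_j)|$. Since $\sum_j |\alpha_j| \le \|\affluv\|_1 \le b$, it therefore suffices to prove $|\EE_n^\World(X) - \World(X)| \le 1/n$ for every single $[0,1]$-LUV $X$.

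For the single-LUV bound, I would unpack the definition to write $\EE_n^\World(X) = |S|/n$, where $S := \{i \in \{0,\ldots,n-1\} : \World(\quot{\enc{X} > \enc{i}/\enc{n}}) = 1\}$, and show $|S| \in \{\lfloor n \World(X) \rfloor, \lfloor n \World(X) \rfloor + 1\}$. First, since $\Theory \vdash (X > j/n) \to (X > i/n)$ whenever $i < j$, the fact that $\World \in \cworlds(\Theory)$ forces $S$ to be down-closed in $\{0,\ldots,n-1\}$, so $S = \{0,\ldots,|S|-1\}$. Next, if $i/n < \World(X)$, then the definition of $\World(X)$ as a supremum supplies a rational $x'$ with $i/n < x' \le \World(X)$ and $\World(\quot{\enc{X} \ge \enc{x'}}) = 1$; combining this with $\Theory \vdash (X \ge x') \to (X > i/n)$ and $\Theory$-consistency yields $i \in S$, giving $|S| \ge \lfloor n\World(X)\rfloor$. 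Conversely, if $i/n > \World(X)$, then assuming $i \in S$ would (via $\Theory \vdash (X > i/n) \to (X \ge i/n)$ and consistency) force $\World(\quot{\enc{X} \ge \enc{i}/\enc{n}}) = 1$, pushing $\World(X) \ge i/n > \World(X)$, a contradiction; hence $|S| \le \lfloor n\World(X)\rfloor + 1$. Since $\lfloor n\World(X)\rfloor/n \in [\World(X) - 1/n, \World(X)]$, this yields $|\EE_n^\World(X) - \World(X)| \le 1/n$.

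The main subtlety requiring care is the boundary case $i/n = \World(X)$, where $\World(\quot{X > i/n})$ may be genuinely underdetermined (for instance, in a completion of $\World$ the value of $X$ may be a nonstandard element infinitesimally exceeding $i/n$). The argument above deliberately sidesteps this by only analyzing $i/n < \World(X)$ and $i/n > \World(X)$ separately, and the $\pm 1/n$ slack in the final bound absorbs the at most one boundary index that could go either way.
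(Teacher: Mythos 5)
Your proof is correct and follows essentially the same route as the paper's: reduce to a single $[0,1]$-LUV by linearity (using $\|\affluv\|_1 \le b$) and then show the count of indices $i$ with $\World(\quot{\enc{X} > \enc{i}/\enc{n}})=1$ lies within one of $n\World(X)$. The only difference is that you spell out the counting argument (down-closedness, the supremum argument, and the boundary index) that the paper asserts in one line, which is a fine elaboration rather than a new approach.
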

\begin{proof}

  Let $\World \in \cworlds(\Theory)$.  For any $[0, 1]$-LUV X, by \Def{e},
  \[
    |\EE_n^\World(X) - \World(X)| = \left|\sum_{i=0}^{n-1}\frac{1}{n} \World\mleft(\quot{\enc{X} > \enc{i}/\enc{n}} \mright) - \World(X) \right|
  \]
  Since $\Theory$ \representscomputations, the number of $i$ values in $\{0, \ldots, n-1\}$ for which $\World(\quot{\enc{X} > \enc{i}/\enc{n}}) = 1$ is at least $\lfloor n \World(X) \rfloor \geq n\World(X) - 1$, so
  \[
    \sum_{i=0}^{n-1}\frac{1}{n} \World\mleft(\quot{\enc{X} > \enc{i}/\enc{n}} \mright) \geq \World(X) - 1/n.
  \]
  Similarly, the number of $i$ values in $\{0, \ldots, n-1\}$ for which $\World(\quot{\enc{X} > \enc{i}/\enc{n}})$ is no more than $\lceil n\World(X)  \rceil \leq n\World(X) + 1$, so
  \[
    \sum_{i=0}^{n-1}\frac{1}{n} \World\mleft(\quot{\enc{X} > \enc{i}/\enc{n}} \mright) \leq \World(X) + 1/n.
  \]
  We now have
  \begin{align*}
    |\EE_n^\World(\affluv) - \World(\affluv)| &= |c_1 (\EE_n^\World(X_1) - \World(X_1)) + \cdots + c_k (\EE_n^\World(X_k) - \World(X_k))| 
    \\
    &\leq c_1 |\EE_n^\World(X_1) - \World(X_1)| + \cdots + c_k |\EE_n^\World(X_k) - \World(X_k)|
    \\
    &\leq c_1 /n + \cdots + c_k/n 
    \\
    &\leq b/n.
  \end{align*}
\end{proof}

\subsection{Mesh Independence Lemma}\label{app:mesh}

\begin{lemma}\label{lem:mesh}
  Let $\seq\affluv \in \BLCS$.  Then
  \[ \lim_{n \to \infty} \sup_{m \geq n} |\EE_n^{\pt_m}(\affluv_n) - \EE_m(\affluv_n)| = 0. \]
\end{lemma}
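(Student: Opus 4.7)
The plan is to recast the mesh error as the market price of an efficient $\exfeatures$-combination, use Lemma~\ref{lem:conluvapprox} to pin down its value in every consistent world, and then exhibit an efficient trader that exploits $\MP$ if the uniform convergence fails.

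\emph{Setup.} Let $b$ be a common $\ell^1$ bound for $\seq\affluv$, and let $\seq{\affluv^\gens}$ be an \ec $\exfeatures$-LUV-combination progression realizing $\seq\affluv$. For $n \leq m$, put
\[
  \aff_{n,m}^\gens := \alta_n(\affluv_n^\gens) - \alta_m(\affluv_n^\gens),
\]
which is an $\exfeatures$-combination of rank at most $n$ and $\ell^1$-norm at most $2b$. Unpacking $\alta$ gives $\pt_m(\aff_{n,m}^\gens) = \EE_n^{\pt_m}(\affluv_n) - \EE_m(\affluv_n)$, so the lemma's conclusion is equivalent to $\sup_{m \geq n}|\pt_m(\aff_{n,m}^\gens)| \eqsim_n 0$. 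Crucially, by Lemma~\ref{lem:conluvapprox}, $|\World(\aff_{n,m}^\gens)| \leq b/n + b/m \leq 2b/n$ for every $\World \in \cworlds(\Theory)$, uniformly in $m \geq n$; that is, every consistent world values these combinations arbitrarily close to zero once $n$ is large.

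\emph{Exploiting trader.} Suppose toward contradiction that for some rational $\varepsilon > 0$ there are arbitrarily large $n$ admitting some $m \geq n$ with $\pt_m(\aff_{n,m}^\gens) > \varepsilon$ (WLOG positive sign, else replace $\seq\affluv$ by $-\seq\affluv$). I will build an efficient trader exploiting $\MP$, contradicting the \lic{}. The construction will be an efficiently emulatable sequence of sub-traders $(\Trader^n)_n$, where $\Trader^n$, for $n$ large enough that $2b/n < \varepsilon/4$, continuously short-sells $\aff_{n,m}^\gens$ on each day $m \geq n$ whenever its current price exceeds $\varepsilon/2$, capping its cumulative short volume at~$1$ via a recursively defined expressible feature of the form
\[
  \gen{s^n_m} := \ctsind{\varepsilon/4}\mleft(\pf[m]{\aff_{n,m}^\gens} > \varepsilon/2\mright)\cdot\max\mleft(0,\ 1 - \textstyle\sum_{m' = n}^{m-1} \gen{s^n_{m'}}\mright).
\]
Each unit of $\Trader^n$'s magnitude earns at least $\varepsilon/2 - 2b/n \geq \varepsilon/4$ in every consistent world (by the consistent-world bound), giving $\Trader^n$ a constant-order return on investment; meanwhile, the contradictory hypothesis forces infinitely many $\Trader^n$ to exhaust their short-budget of~$1$. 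The Return on Investment lemma (Lemma~\ref{lem:type3}) applied to $(\Trader^n)_n$ then yields the contradiction.

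\emph{Main obstacle.} The principal technical difficulty is that each total magnitude $\magnit{\Trader^n(\MP)}$ is an infinite sum over $m$ of individual trade magnitudes, so it is not immediate that these magnitudes form a \pgenable sequence as required by Lemma~\ref{lem:type3}. I expect the cleanest route is to mirror the proof of Theorem~\ref{thm:wubaff}: collapse the family of sub-traders into a single Kelley-style trader whose wealth is kept nonnegative by a fractional-wealth budget rule, and whose log-wealth is lower-bounded via a $\log(1 + x) \geq x - x^2$ estimate by a divergent quantity whenever the mesh bound fails. This sidesteps the bookkeeping of magnitudes and directly produces a trader with bounded losses and unbounded plausible gains.
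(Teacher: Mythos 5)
Your reduction is fine as far as it goes: $\pt_m(\aff^\gens_{n,m})=\EE^{\pt_m}_n(\affluv_n)-\EE_m(\affluv_n)$, Lemma~\ref{lem:conluvapprox} gives $|\World(\aff^\gens_{n,m})|\le 2b/n$ for every $\World\in\cworlds(\Theory)$, and each sub-trader $\Trader^n$ you describe does earn a constant fraction of its magnitude in every consistent world. The gap is exactly the one you flag, and it is not a technicality you can wave at: Lemma~\ref{lem:type3} requires $\magnit{\Trader^n(\MP)}$ to \emph{equal} a \pgenable quantity, i.e.\ an expressible feature of finite rank evaluated on the market. Your trader's total short volume is $\lim_{m\to\infty}\sum_{m'\le m} s^n_{m'}$, which depends on the whole infinite price history: for the ``good'' $n$ the cap of $1$ is never verifiably exhausted, so no finite-rank feature computes the magnitude. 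In the paper's uses of the ROI lemma this is avoided because the trader either makes its entire purchase on one known day and later sells back exactly that amount, or a divergence hypothesis forces the budget to be exhausted; neither holds here, and you cannot ``top up'' the magnitude by forced shorts at whatever the price happens to be, since the price of $\aff^\gens_{n,m}$ can be as low as about $-2b$ and such trades destroy the ROI guarantee.

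The fallback of mirroring Theorem~\ref{thm:wubaff} does not transfer either. There, every bet is closed by an explicit resale at the scheduled time $\deff(i+1)$, so the Kelley wealth recursion is an expressible feature of past prices and the open risk at any moment is only the current bet. Here the shorted combination is never bought back: its guaranteed $\ge\varepsilon/4$ per unit is realized only asymptotically, through worlds consistent with $\Theory$, at a time no expressible feature can detect. If you credit ``wealth'' with these unrealized guarantees, the value of the open positions in worlds of $\pcworlds(\dt_{m'})$ at finite times can be about $-2b$ per unit and grows without bound as the wealth compounds, so the resulting trader need not exploit $\MP$; if instead you wait for positions to mature before reinvesting, you are rebuilding the open-trades bookkeeping of Lemma~\ref{lem:type3}, magnitudes and all. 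The paper sidesteps the whole problem by re-indexing: it proves $\limsup_{m\to\infty}\max_{n\le m}\bigl(\EE^{\pt_m}_n(\affluv_n)-\EE_m(\affluv_n)-2/n\bigr)\le 0$, aggregates the two-indexed family over $n\le m$ with a softmax-style weighting $\gen{\alpha_{n,m}}$ into a single bounded \pgenable sequence $G_m$ traded on day $m$ satisfying $\World(G_m)\le 0$ in all consistent worlds, and then invokes Affine Provability Induction (Theorem~\ref{thm:affprovind}) with no new trader construction. Repairing your argument along those lines---making the outer index the trading day $m$ rather than $n$---is the cleanest fix.
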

\begin{proof}

  We will  prove the claim that
\[ \limsup_{m \to \infty} \max_{n \leq m}\left(  |\EE_n^{\pt_m}(\affluv_n) - \EE_m(\affluv_n)| -(2/n) \right) \leq 0 .  \]
  This claim implies that, for any $\varepsilon > 0$, there are only finitely many $(n, m)$ with $n \leq m$ such that
  $|\EE_n^{\pt_m}(\affluv_n) - \EE_m(\affluv_n)| > 2/n + \varepsilon$, which in turn implies that, for any $\varepsilon' > 0$,
  there are only finitely many $(n, m)$ with $n \leq m$ such that $|\EE_n^{\pt_m}(\affluv_n) - \EE_m(\affluv_n)| > \varepsilon'$.
  This is sufficient to show the statement of the theorem.

  We will now prove
\[ \limsup_{m \to \infty} \max_{n \leq m}\left(  \EE_n^{\pt_m}(\affluv_n) - \EE_m(\affluv_n) -(2/n) \right) \leq 0 .  \]
The proof with $\EE_m(\affluv_n) - \EE_n^{\pt_m}(\affluv_n) $ instead is analogous, and together these inequalities prove the claim. 
  
  Suppose this inequality does not hold.
  Then there is some rational $\varepsilon > 0$ such that for infinitely many $m$,
  \[ \max_{n \leq m} (\EE_n^{\pt_m}(\affluv_n) - \EE_m(\affluv_n) - (2/n)) >  \varepsilon. \]

  Let $\seq{\gen\affluv}$ be an \ec $\exfeatures$-combination progression such that $\gen{\affluv_n} \ftn = \affluv_n$.
  Assume without loss of generality that each $\|\affluv_n\|_1 \leq 1$ (they are assumed to be bounded and can be scaled down appropriately).
  Define $\exfeatures$-combinations
  \[
    \gen{A_{n,m}} := \alta_n(\gen{\affluv_n}) - \alta_m(\gen{\affluv_n}) - 2/n, 
  \]
  using the $\features$-combinations $\alta$ defined in \ref{app:featureluvcombos}.
  As shorthand, we write $A_{n,m} := \gen{A_{n,m}} \ftn$.  By \Lem{conluvapprox}, for all $\World \in \cworlds(\Theory)$, $\World(A_{n, m}) \leq 0$.  
  We aim to show $\pt_m(A_{n, m}) < \varepsilon$ for all sufficiently high $m$ and $n \leq m$, but we cannot immediately
  derive this using \Theorem{affprovind}, since $A$ has two indices.  We get around this difficulty
  by taking a ``softmax'' over possible values of $n$ given a fixed value of $m$.  Specifically, for $n \leq m$, define
  expressible features (of rank $m$)
  \[
    \gen{\alpha_{n,m}} := \ctsind{\varepsilon/2}\left(A_{n,m}^{\gens * m} > \varepsilon/2\right) \cdot \left(1 - \sum_{i < n} \gen{\alpha_{i,m}}\right).
  \]
  As shorthand, we write $\alpha_{n, m} := \gen{\alpha_{n,m}} \ftn$.  Intuitively, $\alpha_{n,m}$ will distribute weight among $n$ values for which $A_{m,n}$ is overpriced at time $m$.
  Now we define the $\exfeatures$-combination progression
  \[
    \gen{G_m} := \sum_{n \leq m} \alpha_{n,m} \cdot \gen{A_{n,m}}.
  \]
  As shorthand, we write $G_{m} := \gen{G_m} \ftn$.
  Fix $m$ and suppose that $\pt_m(A_{n,m}) \geq \varepsilon$ for some $n \leq m$.  Then $\sum_{n \leq m} \alpha_{n, m} = 1$.
  Therefore, 
  \[ \pt_m(G_m) = \sum_{n \leq m} \alpha_{n, m} \pt_m(A_{n,m}) \geq \sum_{n \leq m} \alpha_{n,m} \cdot \varepsilon / 2 = \varepsilon / 2. \]

  So if we can show $\pt_m(G_m) \lesssim_m 0$, that will be sufficient to show that $\max_{n \leq m} \pt_m(A_{n,m}) < \varepsilon$ for
  all sufficiently high $m$.  We now show this.
  Let $\World \in \cworlds(\Theory)$.
  Since each $\alpha_{n,m} \geq 0$, and $\World(A_{n,m}) \leq 0$, we have $\World(G_m) \leq 0$.  So by \Theorem{affprovind}, $\pt_m(G_m) \lesssim_m 0$; here we use that $(G_m)_m$ is bounded, since the $A_{n,m}$ are bounded and since for each $m$, $\sum_{n \leq m} \alpha_{n, m} \leq  1$ by construction.

  So for all sufficiently high $m$ we have $\max_{n \leq m} \pt_m(A_{n,m}) < \varepsilon$ (or equivalently, $\max_{n \leq m} (\EE_n^{\pt_m}(\affluv_n) - \EE_m(\affluv_n)) < 2/n + \varepsilon$).  
  But this contradicts our assumption that for infinitely many $m$,

\[ \max_{n \leq m} (\EE_n^{\pt_m}(\affluv_n) - \EE_m(\affluv_n)-(2/n)) >  \varepsilon. \]

\end{proof}

\subsection{Expectation Preemptive Learning}\label{app:exppolymax}

\exppolymax*
\begin{proof}
  We prove only the first statement; the proof of the second statement is analogous.
  Apply \Theorem{affpolymax} to the bounded sequence $(\alta_n(\affluv_\nn))_n$ to get
  \[
    \liminf_{\nn\to\infty} \EE_n^{\pt_\nn}(\affluv_\nn) = \liminf_{\nn\rightarrow\infty}\sup_{\mm\geq \nn} \EE_n^{\pt_m}(\affluv_n),
  \]
  using that by definition $\pt_m (\alta_n(\affluv_\nn)) = \EE_n^{\pt_m}(\affluv_n)$.
  By \Lem{mesh},
  \[
    \lim_{\nn\rightarrow\infty}\sup_{\mm\geq \nn} |\EE_n^{\pt_m}(\affluv_n) - \EE_m(\affluv_n)| = 0
  \]
  so
  \[
\liminf_{\nn\rightarrow\infty}\sup_{\mm\geq \nn} \EE_n^{\pt_m}(\affluv_n) =
\liminf_{\nn\rightarrow\infty}\sup_{\mm\geq \nn} \EE_m(\affluv_n).
  \]
\end{proof}

\subsection{Expectations Converge}\label{sec:exc}\label{app:ec}

\restateec*
\begin{proof}
  By applying \Theorem{exppolymax} to the constant sequence $X, X, \ldots$, we have

  \[ \liminf_{n \to \infty} \EE_n(X) = \liminf_{n \to \infty} \sup_{m \geq n} \EE_m(X) = \limsup_{n \to \infty} \EE_n(X) . \]
\end{proof}

\subsection{Limiting Expectation Approximation Lemma}\label{app:limexpapprox}

\begin{lemma}\label{lem:limexpapprox}
  For any $\seq\affluv \in \BLCS$,
  \[ | \EE_n^{\pt_\infty}(\affluv_n) - \EE_\infty(\affluv_n)| \eqsim_n 0.\]
\end{lemma}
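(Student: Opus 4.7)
The plan is to derive this as a relatively short corollary of the Mesh Independence Lemma (\Lem{mesh}) together with the convergence of market prices (\Theorem{con}) and the definition of $\EE_\infty$ on $\RR$-LUV-combinations. The key observation is that $\alta_n(\affluv_n)$ is, for each fixed $n$, an ordinary $\RR$-combination of sentences, so $\EE_n^{\pt_m}(\affluv_n) = \pt_m(\alta_n(\affluv_n))$ is just the price of a fixed finite combination under $\pt_m$, and therefore converges to $\pt_\infty(\alta_n(\affluv_n)) = \EE_n^{\pt_\infty}(\affluv_n)$ as $m \to \infty$ by \Theorem{con} and linearity.

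First I would unpack $\EE_\infty(\affluv_n)$. Writing $\affluv_n = c_n + \xi_{n,1} X_{n,1} + \cdots + \xi_{n,k_n} X_{n,k_n}$, we have $\EE_m(\affluv_n) = c_n + \sum_j \xi_{n,j} \EE_m(X_{n,j})$; each $\EE_m(X_{n,j})$ converges by \Theorem{ec}, so $\EE_\infty(\affluv_n) := \lim_{m \to \infty} \EE_m(\affluv_n)$ is well-defined for each fixed $n$. Next, fix $\varepsilon > 0$. By \Lem{mesh} there is $N$ such that for all $n \geq N$ and all $m \geq n$,
\[
  \bigl| \EE_n^{\pt_m}(\affluv_n) - \EE_m(\affluv_n) \bigr| < \varepsilon .
\]

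Now fix any $n \geq N$ and let $m \to \infty$. The left-hand quantity $\EE_n^{\pt_m}(\affluv_n) = \pt_m(\alta_n(\affluv_n))$ is the price under $\pt_m$ of the fixed finite $\RR$-combination $\alta_n(\affluv_n)$; by \Theorem{con} and linearity this converges to $\pt_\infty(\alta_n(\affluv_n)) = \EE_n^{\pt_\infty}(\affluv_n)$. The right-hand quantity $\EE_m(\affluv_n)$ converges to $\EE_\infty(\affluv_n)$ by the previous paragraph. Passing to the limit in the inequality gives
\[
  \bigl| \EE_n^{\pt_\infty}(\affluv_n) - \EE_\infty(\affluv_n) \bigr| \leq \varepsilon
\]
for every $n \geq N$, which is exactly the claim $|\EE_n^{\pt_\infty}(\affluv_n) - \EE_\infty(\affluv_n)| \eqsim_n 0$.

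There is essentially no hard step: the whole content is packaged into \Lem{mesh}, and what remains is bookkeeping to swap a $\lim$ past a $\sup$ (avoided here by fixing $n$ first and only then letting $m \to \infty$, so that the uniformity in the mesh lemma is used directly). The only minor subtlety worth flagging in the write-up is that $\alta_n$ depends on $n$, so one must be careful to distinguish $\EE_n^{\pt_\infty}(\affluv_n)$ (precision $n$, evaluated under the limiting valuation) from $\lim_m \EE_m^{\pt_m}(\affluv_n)$ (precision and valuation both advancing with $m$); the point of the mesh lemma is precisely that these two procedures agree in the limit on bounded LUV-combination sequences.
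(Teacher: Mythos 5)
Your proof is correct and follows essentially the same route as the paper: both derive the lemma from the Mesh Independence Lemma by letting $m \to \infty$ in $|\EE_n^{\pt_m}(\affluv_n) - \EE_m(\affluv_n)|$, with $\EE_n^{\pt_m}(\affluv_n) \to \EE_n^{\pt_\infty}(\affluv_n)$ (the paper cites continuity of $\Valuation \mapsto \EE_n^{\Valuation}(\affluv_n)$ where you spell it out via \Theorem{con} and linearity) and $\EE_m(\affluv_n) \to \EE_\infty(\affluv_n)$. Your $\varepsilon$-$N$ bookkeeping, fixing $n$ before sending $m \to \infty$, is just a more explicit rendering of the paper's $\lim$-$\sup$ manipulation.
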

\begin{proof}
  By \Lem{mesh} and by continuity of $\Valuation \mapsto \EE_n^{\Valuation}(\affluv_n)$,
  \begin{align*}
   \lim_{n \to \infty} | \EE_n^{\pt_\infty}(\affluv_n) - \EE_\infty(\affluv_n)| 
    &= \lim_{n \to \infty} \lim_{m \to \infty} | \EE_n^{\pt_m}(\affluv_n) - \EE_m(\affluv_n)| \\
        &\leq \lim_{n \to \infty} \sup_{m \geq n} | \EE_n^{\pt_m}(\affluv_n) - \EE_m(\affluv_n)| \\
  &= 0 .
\end{align*}
\end{proof}

\subsection{Persistence of Expectation Knowledge}\label{app:perexpkno}

\perexpkno*

\begin{proof}
  We prove only the first statement; the proof of the second statement is analogous.  Apply \Theorem{peraffkno} to $(\alta_n(\affluv_\nn))_n$ to get
  \[
    \liminf_{\nn\rightarrow\infty}\inf_{\mm\geq \nn}\EE_n^{\pt_\mm}(\affluv_\nn )= \liminf_{\nn\to\infty}\EE_n^{\pt_\infty}(\affluv_\nn ).
  \]
  We now show equalities on these two terms:
  \begin{enumerate}
    \item
  By \Lem{mesh},
  \[
    \lim_{\nn\rightarrow\infty}\sup_{\mm\geq \nn}|\EE_n^{\pt_\mm}(\affluv_\nn ) - \EE_m(\affluv_n)| = 0
  \]
  so
  \[ 
    \liminf_{\nn\rightarrow\infty}\inf_{\mm\geq \nn}\EE_m(\affluv_\nn )= 
    \liminf_{\nn\rightarrow\infty}\inf_{\mm\geq \nn}\EE_n^{\pt_\mm}(\affluv_\nn ).
  \]
\item
  By \Lem{limexpapprox},
  \[
    \liminf_{\nn\to\infty}\EE_n^{\pt_\infty}(\affluv_\nn ) = \liminf_{n \to \infty} \EE_\infty(\affluv_n).
  \]
  \end{enumerate}
  Together, these three equalities prove the theorem statement.
\end{proof}

\subsection{Expectation Coherence}\label{app:expcoh}

\expcoh*

\begin{proof}
  We prove only the first statement; the proof of the second statement is analogous.
  Apply \Theorem{affcoh} to $(\alta_n(\affluv_\nn))_n$ to get
  \[
    \liminf_{\nn\rightarrow\infty}\inf_{\World\in\cworlds(\Theory)}
      \EE_n^\World(\affluv_\nn )
    \le 
    \liminf_{\nn\rightarrow\infty}
    \EE_n^{\pt_\infty}(\affluv_\nn )
    \le \liminf_{\nn\to\infty}
    \EE_n(\affluv_\nn ),
  \]
  We now show equalities on the first two terms:
  \begin{enumerate}
    \item
      Let $b$ be the bound on $\seq\affluv$.  By \Lem{conluvapprox},
      \[
      \liminf_{\nn\rightarrow\infty}\inf_{\World\in\cworlds(\Theory)} |\EE_n^\World(\affluv_\nn ) - \World(\affluv_n)|
      \leq \liminf_{n \to \infty} \inf_{\World \in \cworlds(\Theory)} b/n = 0
      \]
      so
      \[
      \liminf_{\nn\rightarrow\infty}\inf_{\World\in\cworlds(\Theory)} \EE_n^\World(\affluv_\nn ) 
      =
      \liminf_{\nn\rightarrow\infty}\inf_{\World\in\cworlds(\Theory)} \World(\affluv_\nn ).
    \]
    \item
      By \Lem{limexpapprox},
      \[
      \liminf_{\nn\rightarrow\infty}
      |\EE_n^{\pt_\infty}(\affluv_\nn ) - \EE_\infty(\affluv_n)| = 0
      \]
      so
      \[
      \liminf_{\nn\rightarrow\infty}\EE_n^{\pt_\infty}(\affluv_\nn )
      =
      \liminf_{\nn\rightarrow\infty}\EE_\infty(\affluv_\nn ).
      \]
  \end{enumerate}
  Together, these three equalities prove the theorem statement.

\end{proof}

\subsection{Expectation Provability Induction}\label{app:expprovind}

\expprovind*

\begin{proof}

  We prove the statement in the case of $\geq$; the case of $\leq$ is analogous, and the case of $=$ follows from the conjunction of the other two cases.  By \Theorem{expcoh},

  \[ \liminf_{n \to \infty} \EE_n(\affluv_n) \geq \liminf_{n \to \infty} \inf_{\World \in \cworlds(\Theory)} \World(\affluv_n) \geq b. \]
    We will usually apply this theorem using the $=$ case.

\end{proof}

\subsection{Linearity of Expectation}\label{app:loe}

\restateloe*

\begin{proof}
  Observe that $\World(a_n X_n + b_n Y_n - Z_n) = 0$ for all $n$ and $\World \in \pcworlds(\Theory)$.  So by \Theorem{expprovind}, $\EE_n(a_n X_n + b_n Y_n - Z_n) \eqsim_n 0$; the theorem statement immediately follows from the definition of $\EE_n$ applied to a LUV-combination (where $a_n X_n + b_n Y_n - Z_n$ is interpreted as a LUV-combination, not another LUV).
\end{proof}

\subsection{Expectations of Indicators}\label{app:ei}
\restateei*
\begin{proof}
  Observe that $\World(\alta_n(\sentind{\phi_n})) = \World(\phi_n)$ for all $\World \in \cworlds(\Theory)$; either 
  \begin{itemize}
    \item $\World(\phi) = 0$ and $\World(\alta_n(\sentind{\phi_n})) = \sum_{i=0}^{n-1} \frac{1}{n} \World(\quot{\enc{\sentind{\phi_n}} > \enc{i}/\enc{n}}) = \sum_{i=0}^{n-1} 0 = 0$, or
    \item $\World(\phi) = 1$ and $\World(\alta_n(\sentind{\phi_n})) = \sum_{i=0}^{n-1} \frac{1}{n} \World(\quot{\enc{\sentind{\phi_n}} > \enc{i}/\enc{n}}) = \sum_{i=0}^{n-1} \frac{1}{n} = 1$.
  \end{itemize}
  So by \Theorem{affprovind},
  \[ \EE_n(\sentind{\phi_n}) \eqsim_n \pt_n(\phi_n). \]
\end{proof}

\subsection{Expectation Recurring Unbiasedness}\label{app:recurringunbiasednessexp}

\recurringunbiasednessexp*

\begin{proof}

  Let $\World \in \cworlds(\Gamma)$.  Apply \Theorem{recunbiasedaff} to $(\alta_n(\affluv_n))_n$ and $\seq\fuz$ to get that

  \[ \left( \frac{\sum_{i \leq n} \fuz_i (\EE_i(\affluv_i) - \EE_i^{\World}(\affluv_i))}{\sum_{i \leq n}\fuz_i} \right)_{n \in \NN^+}  \]

  has 0 as a limit point.  Furthermore, by \Lem{conluvapprox},
  $|\EE_i^\World(\affluv_i) - \thmval(\affluv_i)| \leq b/i$
  where $b$ is a bound on $\seq\affluv$.  As a result, for any subsequence of
  \[ \left( \frac{\sum_{i \leq n} \fuz_i (\EE_i(\affluv_i) - \EE_i^{\World}(\affluv_i))}{\sum_{i \leq n}\fuz_i} \right)_{n \in \NN^+} \]
  that limits to zero, the corresponding subsequence of
  \[ \left( \frac{\sum_{i \leq n} \fuz_i (\EE_i(\affluv_i) - \thmval(\affluv_i))}{\sum_{i \leq n}\fuz_i} \right)_{n \in \NN^+} \]
  also limits to zero, as desired.

\end{proof}

\subsection{Expectation Unbiasedness From Feedback}\label{app:wubexp}

\wubexp*

\begin{proof}
  Let $\World \in \cworlds(\Theory)$.  Note that if $\thmval{\affluv_n}$ can be computed in time polynomial in $g(n + 1)$, then so can $\thmval{\alta_k(\affluv_n)}$.  Apply \Theorem{wubaff} to $(\alta_n(\affluv_n))_n$ to get
  \[
    \frac
        {\sum_{i\leq \nn}\fuz_i  \cdot(\EE_i(\affluv_i)-\EE_i^\World(\affluv_i))}
        {\sum_{i\leq \nn}\fuz_i}
    \eqsim_\nn 0.
  \]
  Furthermore, by \Lem{conluvapprox},
  $|\EE_i^\World(\affluv_n) - \thmval(\affluv_n)| \leq b/n$
  where $b$ is a bound on $\seq\affluv$.  As a result,
  \[
    \frac
        {\sum_{i\leq \nn}\fuz_i  \cdot(\EE_i(\affluv_i)-\thmval(\affluv_i))}
        {\sum_{i\leq \nn}\fuz_i}
    \eqsim_\nn 0.
  \]
  as desired.
\end{proof}

\subsection{Learning Pseudorandom LUV Sequences}\label{app:seqprandexp}\label{app:prandexp}

\prandexp*

\begin{proof}
  We will prove the statement in the case of $\gtrsim$; the case of $\lesssim$ is analogous, and the case of $\eqsim$ follows from
  the other cases.  
  
  Let $b$ be the bound of $\seq\affluv$.  Let $\World \in \cworlds(\Theory)$.
  First, note that by \Lem{conluvapprox},
  $|\EE_i^\World(\affluv_i) - \thmval(\affluv_i)| \leq b/i$.
  Therefore,
  \[
    \frac{\sum_{i \leq n} \fuz_i \cdot \EE_i^\World(\affluv_i)}{\sum_{i \leq n} \fuz_i} \eqsim_n \frac{ \sum_{i \leq n} \fuz_i \cdot \thmval(\affluv_i)}{\sum_{i \leq n} \fuz_i} \gtrsim_n 0.
  \]
  So we may apply \Theorem{prandaff} to $(\alta_n(\affluv_n))_n$ to get
  \[
    \EE_n(\affluv_\nn) \gtrsim_n 0.
  \]
\end{proof}

\section{Introspection and Self-Trust Proofs}\label{app:introspectiontrust}

\subsection{Introspection}\label{app:ref}
\restateref*
\begin{proof}
  Define $\psi_n := \quot{\enc{a_\nn} < \enc{\pt}_\enc{\nn}(\enc{\phi_\nn}) < \enc{b_\nn}}$.

  \paragraph{Proof of the first statement.}

  Observe that for all $n$, and all $\World\in\cworlds(\Theory)$, 

  \[ \ctsind{\delta_n}(a_n < \pt_n(\phi_n) < b_n) \cdot (1 - \World(\psi_n))=0, \]
since regardless of $\pt_n(\phi_n)$, one of the two factors is 0.
Thus, applying \Theorem{affprovind} gives 
\[ \ctsind{\delta_n}(a_n < \pt_n(\phi_n) < b_n) \cdot (1-\pt_n(\psi_n)) \eqsim_n 0. \]
Define 
\[ \varepsilon_n := \ctsind{\delta_n}(a_n < \pt_n(\phi_n) < b_n) \cdot (1-\pt_n(\psi_n)) + 1/n \] 
and note that $\varepsilon_n > 0$ and $\varepsilon_n \eqsim_n 0$.  For any $n$ for which $\pt_n(\phi_n) \in (a_n+\delta_n,b_n-\delta_n)$, the first factor is 1, so $\pt_n(\psi_n) = 1 - \varepsilon_n+1/n > 1 - \varepsilon_n$.

\paragraph{Proof of the second statement.}
  Observe that for all $n$, and all $\World\in\cworlds(\Theory)$, 
\[ \left(\ctsind{\delta_n}(\pt_n(\phi_n)<a_n)+\ctsind{\delta_n}(\pt_n(\phi_n) > b_n)\right)\cdot \World(\psi_n)=0, \]
since regardless of $\pt_n(\phi_n)$, one of the factors is 0.
Thus, applying \Theorem{affprovind} gives 
\[ \left(\ctsind{\delta_n}(\pt_n(\phi_n)<a_n)+\ctsind{\delta_n}(\pt_n(\phi_n) > b_n)\right)\cdot \pt_n(\psi_n) \eqsim_n 0. \]
Define 
\[\varepsilon_n := \left(\ctsind{\delta_n}(\pt_n(\phi_n)<a_n)+\ctsind{\delta_n}(\pt_n(\phi_n) > b_n)\right)\cdot \pt_n(\psi_n) + 1/n\] 
and note that $\varepsilon_n > 0$ and $\varepsilon_n \eqsim_n 0$.  For any $n$ for which $\pt_n(\phi_n)\notin (a_n-\delta_n,b_n+\delta_n)$, the first factor is 1, so $\pt_n(\psi_n) < \varepsilon_n$.

\end{proof}

\subsection{Paradox Resistance}\label{app:lp}

\restatelp*

\begin{proof}

  We prove $\pt_n(\phi_n) \gtrsim_n p$ and $\pt_n(\phi_n) \lesssim_n p$ individually.
  \begin{enumerate}
    \item
      Suppose it is not the case that $\pt_n(\phi_n) \gtrsim_n p$, so $\pt_n(\phi_n) < p - \varepsilon$ infinitely often for some $\varepsilon > 0$.
      Observe that for all $n$, and all $\World \in \cworlds(\Theory)$,
      \[ \ctsind{1/n}(\pt_n(\phi_n) < p) \cdot (1 - \World(\phi_n)) = 0, \]
      since regardless of $\pt_n(\phi_n)$, one of the factors is 0.  Thus, applying \Theorem{affprovind} yields
      \begin{equation}\label{eq:lp1}
        \ctsind{1/n}(\pt_n(\phi_n) < p) \cdot (1 - \pt_n(\phi_n)) \eqsim_n 0.
      \end{equation}
      But infinitely often,
      \[ \ctsind{1/n}(\pt_n(\phi_n) < p) \cdot (1 - \pt_n(\phi_n)) \geq 1 \cdot (1 - (p - \varepsilon)) \geq \varepsilon \]
      which contradicts \Eqn{lp1}.
    \item
      Suppose it is not the case that $\pt_n(\phi_n) \lesssim_n p$, so $\pt_n(\phi_n) > p + \varepsilon$ infinitely often for some $\varepsilon > 0$.
      Observe that for all $n$, and all $\World \in \cworlds(\Theory)$,
      \[ \ctsind{1/n}(\pt_n(\phi_n) > p) \cdot \World(\phi_n) = 0, \]
      since regardless of $\pt_n(\phi_n)$, one of the factors is 0.  Thus, applying \Theorem{affprovind} yields
      \begin{equation}\label{eq:lp2}
        \ctsind{1/n}(\pt_n(\phi_n) > p) \cdot \pt_n(\phi_n) \eqsim_n 0.
      \end{equation}
      But infinitely often,
      \[ \ctsind{1/n}(\pt_n(\phi_n) > p) \cdot \pt_n(\phi_n) \geq 1 \cdot (p + \varepsilon) \geq \varepsilon \]
      which contradicts \Eqn{lp2}.
  \end{enumerate}

\end{proof}

\subsection{Expectations of Probabilities}\label{app:epr}

\restateepr*

\begin{proof}

  Observe that for all $n$, and for all $\World \in \cworlds(\Theory)$, $\World(\pt_n(\phi_n) - \quot{\enc{\pt}_\enc{n}(\enc{\phi_n})}) = 0$ (where $\pt_n(\phi_n)$ is a number and $ \quot{\enc{\pt}_\enc{n}(\enc{\phi_n})})$ is a LUV).  Thus, by \Theorem{expprovind},

  \[ \pt_n(\phi_n) - \EE_n(\quot{\enc{\pt}_\enc{n}(\enc{\phi_n})}) \eqsim_n 0. \]
\end{proof}

\subsection{Iterated Expectations}\label{app:er}

\restateer*

\begin{proof}
  Observe that for all $n$, and for all $\World \in \cworlds(\Theory)$, $\World(\EE_n(X_n)- \quot{\enc{\EE}_\enc{n}(\enc{X_n})}) = 0$ (where $\EE_n(X_n)$ is a number and $ \quot{\enc{\EE}_\enc{n}(\enc{X_n})})$ is a LUV).  Thus, by \Theorem{expprovind},

  \[ \EE_n(X_n) - \EE_n(\quot{\enc{\EE}_\enc{n}(\enc{X_n})}) \eqsim_n 0.\]

\end{proof}

\subsection{Expected Future Expectations}\label{app:cee}

\restatecee*

\begin{proof}

  Let $Y_m := X_n$ if $m = f(n)$ for some $n$, and $Y_m := \quot{0}$ otherwise.  Observe that $(Y_m)_m$ is \ec[].  By \Theorem{er},

  \[ \EE_{f(n)}(X_n) \eqsim_n \EE_{f(n)}( \quot{ \enc{\EE}_{\enc{f(n)} } (\enc{X_n})}  ). \]

  We now manipulate the encodings $\enc{f(n)}$ (for the number $f(n)$) and $\enc{f}(\enc{n})$ (for the program computing $f$ and its input $n$). Observe than for all $\World \in \cworlds(\Theory)$, 

  \[ \World ( \quot{ \enc{\EE}_{\enc{f(n)} } (\enc{X_n})}  ) = 
  \World ( \quot{ \enc{\EE}_{\enc{f}(\enc{n}) } (\enc{X_n})}  ). \]

  So by \Theorem{expprovind},

  \[ \EE_{f(n)}(X_n) \eqsim_n \EE_{f(n)}( \quot{ \enc{\EE}_{\enc{f}(\enc{n}) } (\enc{X_n})}). \]

  By \Theorem{exppolymax},

  \[ \EE_n(X_n) \eqsim_n \EE_n(\quot{ \enc{\EE}_{\enc{f}(\enc{n}) } (\enc{X_n}) }).  \]
\end{proof}

\subsection{No Expected Net Update}\label{app:ceu}

\restateceu*

\begin{proof}

  Let $\psi_m := \phi_n$ if $m = f(n)$ for some $n$, and $\psi_m := \bot$ otherwise.  Observe that $(\psi_m)_m$ is \ec[].  By \Theorem{epr},

  \[ \pt_{f(n)}(\phi_n) \eqsim_n \EE_{f(n)}( \quot{ \enc{\pt}_{ \enc{f(n)} } (\enc{\phi_n})}  ). \]

We now manipulate the encodings $\enc{f(n)}$  and $\enc{f}(\enc{n})$. 
  Observe that for all $\World \in \cworlds(\Theory)$, 

  \[ \World ( \quot{ \enc{\pt}_{\enc{f(n)} } (\enc{\phi_n})}  ) = 
  \World ( \quot{ \enc{\pt}_{\enc{f}(\enc{n}) } (\enc{\phi_n})}  ). \]

  So by \Theorem{expprovind},

  \[ \pt_{f(n)}(\phi_n) \eqsim_n \EE_{f(n)}( \quot{ \enc{\pt}_{\enc{f}(\enc{n}) } (\enc{\phi_n})}  ). \]

  By \Theorem{ei},

  \[ \EE_{f(n)} (\sentind{\phi_n}) \eqsim_n \EE_{f(n)}( \quot{ \enc{\pt}_{\enc{f}(\enc{n}) } (\enc{\phi_n})}  ). \]

  By \Theorem{exppolymax},

  \[ \EE_n(\sentind{\phi_n})  \eqsim_n \EE_n( \quot{ \enc{\pt}_{\enc{f}(\enc{n}) } (\enc{\phi_n})}  ). \]

  By \Theorem{ei},

  \[ \pt_n(\phi_n) \eqsim_n \EE_n( \quot{ \enc{\pt}_{\enc{f}(\enc{n}) } (\enc{\phi_n})}  ). \]

\end{proof}

\subsection{No Expected Net Update under Conditionals}\label{app:ccee}

\restateccee*

\begin{proof}

  By \Theorem{er} and \Theorem{expprovind},
  \[ \EE_{f(n)}(X_n) \eqsim_n \EE_{f(n)}( \quot{ \enc{\EE}_{\enc{f(n) }} (\enc{X_n})} ) \eqsim_n \EE_{f(n)}( \quot{ \enc{\EE}_{\enc{f}(\enc{n}) } (\enc{X_n})}  ) \]
  and thus
  \[ \EE_{f(n)}(X_n) \cdot \fuz_{f(n)} \eqsim_n \EE_{f(n)}( \quot{ \enc{\EE}_{\enc{f}(\enc{n}) } (\enc{X_n})}  ) \cdot \fuz_{f(n)}. \]
  Observe that for all $n$, and for all $\World \in \cworlds(\Theory)$, 
  \[
    \World(X_n) \cdot \fuz_{f(n)} =  \World(\quot{ \enc{X_n} \cdot \enc{\fuz}_{ \enc{f}(\enc{n} )} }).
  \]
  So by \Theorem{expprovind},
  \[
  \EE_{f(n)}(X_n) \cdot \fuz_{f(n)}   
  \eqsim_n
  \EE_{f(n)}(\quot{ \enc{X_n} \cdot \enc{\fuz}_{ \enc{f}(\enc{n} )} }).
  \]
  Similarly, for all $n$ and all $\World \in \cworlds(\Theory)$,
  \[
    \World( \quot{ \enc{\EE}_{\enc{f}(\enc{n}) } (\enc{X_n})}) \cdot \fuz_{f(n)} \eqsim_n
    \World( \quot{ \enc{\EE}_{\enc{f}(\enc{n}) } (\enc{X_n}) \cdot \enc{\fuz}_{\enc{f}(\enc{n})} } ).
  \]
  So by \Theorem{expprovind},
  \[
    \EE_{f(n)}( \quot{ \enc{\EE}_{\enc{f}(\enc{n}) } (\enc{X_n})}) \cdot \fuz_{f(n)} \eqsim_n
    \EE_{f(n)}( \quot{ \enc{\EE}_{\enc{f}(\enc{n}) } (\enc{X_n}) \cdot \enc{\fuz}_{\enc{f}(\enc{n})} } ).
  \]
  Combining these,
  \[
    \EE_{f(n)}( \quot{ \enc{\EE}_{\enc{f}(\enc{n}) } (\enc{X_n}) \cdot \enc{\fuz}_{\enc{f}(\enc{n})}}  )
    \eqsim_n
    \EE_{f(n)}(\quot{ \enc{X_n} \cdot \enc{\fuz}_{ \enc{f}(\enc{n} )} }).
  \]
  So by \Theorem{exppolymax},
  \[
    \EE_{n}( \quot{ \enc{\EE}_{\enc{f}(\enc{n}) } (\enc{X_n}) \cdot \enc{\fuz}_{\enc{f}(\enc{n})} } )
    \eqsim_n
    \EE_{n}(\quot{ \enc{X_n} \cdot \enc{\fuz}_{ \enc{f}(\enc{n} )} }).
  \]
\end{proof}

\subsection{Self-Trust}\label{app:st}

\restatest*

\begin{proof}
  Define  $\alpha_n := \ctsind{\delta_n}(P_{f(n)}(\phi_n) > p_n)$.
  By \Theorem{epr},
  \[ \pt_{f(n)}(\phi_n) 
     \eqsim_n 
     \EE_{f(n)}(\quot{ \enc{\pt}_{\enc{f}(\enc{n}) } (\enc{\phi_n})} )
   \]
   and so
  \[ \pt_{f(n)}(\phi_n) \cdot \alpha_n
     \eqsim_n 
     \EE_{f(n)}(\quot{ \enc{\pt}_{\enc{f}(\enc{n}) } (\enc{\phi_n})} ) \cdot \alpha_n.
   \]
   Observe that for all $\World \in \cworlds(\Theory)$,
   \[
   \World(\sentind{\phi_n}) \cdot \alpha_n
   =
  \World(\quot{ \enc{\sentind{\phi_n}} \cdot \enc{\alpha}_{ \enc{n}} }).
  \]
   So by \Theorem{ei} and \Theorem{expprovind},
  \[
    \pt_{f(n)}(\phi_n) \cdot \alpha_n \eqsim_n \EE_{f(n)}(\sentind{\phi_n}) \cdot \alpha_n
  \eqsim_n
  \EE_{f(n)}(\quot{ \enc{\sentind{\phi_n}} \cdot \enc{\alpha}_{ \enc{n}} }).
  \]
  By two more similar applications of \Theorem{expprovind},
  \[ 
     \EE_{f(n)}(\quot{ \enc{\pt}_{\enc{f}(\enc{n}) } (\enc{\phi_n})} ) \cdot \alpha_n
     \eqsim_n
     \EE_{f(n)}(\quot{ \enc{\pt}_{\enc{f}(\enc{n}) } (\enc{\phi_n}) \cdot \enc{\alpha}_{\enc{n}}} )
     \gtrsim_n
     p_n \cdot \EE_{f(n)}(\quot{\enc{\alpha}_{\enc{n}}}).
   \]
   Combining these,
   \[
     \EE_{f(n)}(\quot{ \enc{\sentind{\phi_n}} \cdot \enc{\alpha}_{ \enc{n}} })
    \gtrsim_n
     p_n \cdot \EE_{f(n)}(\quot{\enc{\alpha}_{\enc{n}}}).
   \]
   By \Theorem{exppolymax},
   \[
     \EE_n(\quot{ \enc{\sentind{\phi_n}} \cdot \enc{\alpha}_{ \enc{n}} })
    \gtrsim_n
     p_n \cdot \EE_n(\quot{\enc{\alpha}_{\enc{n}}}).
   \]

\end{proof}

\section{Non-Dogmatism and Closure Proofs}\label{app:nondogproofs}

\subsection{Parametric Traders}\label{app:parametric}


Now we show that  there is no uniform strategy (i.e., efficiently emulatable sequence of traders) for taking on increasing finite amounts of possible reward with uniformly bounded possible losses.


\begin{lemma}[Parametric Traders]\label{lem:type2}
  Let $\MP$ be a logical inductor over $\DP$. Then there does not exist an efficiently emulatable sequence of traders $(\Trader^k)_k$ such that for all $k$, the set
  \[
    \left\{ \World\mleft({\textstyle \sum_{i \leq n} \trade^k_i\mleft(\MP\mright)}\mright) \,\middle|\, \nn \in \NN^+, \World \in
    \pcworlds(\dt_{\nn}) \right\}
  \]
of plausible values of $\Trader^k$'s holdings is bounded below by $-1$ and has supremum at least $k$. 
\end{lemma}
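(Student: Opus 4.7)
The strategy is to derive a contradiction with the logical induction criterion by aggregating the hypothesized sequence $(\Trader^k)_k$ into a single \ec trader $\Trader$ that exploits $\MP$ relative to $\DP$. I choose efficiently computable rational weights $c_k \geq 0$ such that $C := \sum_k c_k < \infty$ while $c_k \cdot k$ is unbounded: for instance, $c_k := 2^{-j}$ when $k = 4^j$ for some $j \in \NN^+$, and $c_k := 0$ otherwise, so that $C < \infty$ and $c_{4^j} \cdot 4^j = 2^j \to \infty$. Define $\trade_n := \sum_{k \leq n} c_k \cdot \trade^k_n$; this is a valid $n$-strategy because each $\trade^k_n$ is an $n$-strategy, constants preserve that property, and only finitely many terms are nonzero (since $\trade^k_n = 0$ for $k > n$ by the efficient-emulatability convention of \Def{emulatabletraders}). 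The sequence $\seq\trade$ is efficiently computable: on input $n$, one lists the (uniformly bounded) programs computing $\Trader^1,\ldots,\Trader^n$, runs each to obtain $\trade^k_n$, and outputs the linear combination weighted by the easily computed $c_k$.

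Next I would bound $\Trader$'s holdings from below across all plausible assessments. For any $n\in\NN^+$ and any $\World\in\pcworlds(\dt_n)$, linearity of $\World$ and the hypothesized lower bound $\World(\sum_{i\leq n}\trade^k_i(\MP)) \geq -1$ give
\[
  \World\mleft(\textstyle\sum_{i\leq n}\trade_i(\MP)\mright)
  = \sum_{k\leq n} c_k\,\World\mleft(\textstyle\sum_{i\leq n}\trade^k_i(\MP)\mright)
  \geq -\sum_{k\leq n} c_k \geq -C,
\]
so the plausible holdings of $\Trader$ are uniformly bounded below.

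Then I would show the plausible holdings are unbounded above. Fix any target $M > 0$ and choose $k$ with $c_k \cdot k \geq M + C + 1$; such $k$ exists by the choice of $\seq c$. By hypothesis, the set of plausible values of $\Trader^k$ has supremum at least $k$, so there exist $n^*\in\NN^+$ and $\World^*\in\pcworlds(\dt_{n^*})$ with $\World^*\mleft(\sum_{i\leq n^*}\trade^k_i(\MP)\mright) \geq k - 1$. At this same $(n^*,\World^*)$, using the lower bounds $\World^*(\sum_{i\leq n^*}\trade^j_i(\MP)) \geq -1$ for $j\neq k$,
\[
  \World^*\mleft(\textstyle\sum_{i\leq n^*}\trade_i(\MP)\mright)
  \geq c_k(k-1) - \sum_{j\neq k} c_j
  \geq c_k\cdot k - c_k - C
  \geq M.
\]
Since $M$ was arbitrary, the supremum of $\Trader$'s plausible values is infinite.

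\textbf{Conclusion and main obstacle.} Combining the two displays, $\Trader$ exploits $\MP$ relative to $\DP$ in the sense of \Def{exploitation}, contradicting \Thm{lia} (or the \lic{} assumption on $\MP$). The only real subtlety is the choice of coefficients $\seq c$: naive choices like $c_k = 1/k^2$ fail because $c_k\cdot k \to 0$, so the upper-bound argument collapses. Using a sparse schedule such as $c_{4^j} = 2^{-j}$ simultaneously secures convergence of $\sum c_k$ (which bounds the downside) and unbounded $c_k\cdot k$ along a subsequence (which supplies the unbounded upside once the constant drag $C$ from the remaining traders is absorbed). Everything else is a routine verification that $\seq\trade$ inherits efficient computability from the efficient emulatability of $(\Trader^k)_k$ and the efficient computability of $\seq c$.
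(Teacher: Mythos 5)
Your proof is correct and is essentially the paper's own argument: both aggregate the hypothesized traders into a single \ec trader using a sparse, summable weight schedule (you use weight $2^{-j}$ on index $4^j$; the paper uses weight $2^{-j}$ on index $j2^j$), so that the plausible downside is uniformly bounded while $c_k\cdot k$ diverges along the support, contradicting the \lic{}. The only differences are the particular weight schedule and the resulting constants, which do not change the substance.
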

In words, this lemma states that if $\MP$ is a \li{} then there is no 
efficiently emulatable sequence of traders $(\Trader^k)_k$ such that each $\Trader^k$ never risks more than $\$ 1$, and exposes $\MP$ to at least $\$ k$ of
plausible risk. To show this lemma, roughly speaking we sum together scaled versions of some of the $\Trader^k$ so that the sum of their risks converges but
the set of their plausible profits diverges.  In this proof only  we will use the abbreviation $\itwoi{j} := j2^j$ for $j \in \NN^+$.  

\begin{proof}
  Suppose for contradiction that such a sequence $(\Trader^k)_k$  exists. Define a trader $\Trader$ by the formula
\[ \trade_n := \sum_{j: \itwoi{j} \leq n} \frac{\trade_n^{\itwoi{j}}}{2^j}. \]
This is well-defined as it is a finite sum of trading strategies, and it is efficiently computable in $n$ because $(\Trader^k)_k$  is efficiently emulatable. 
Then for any time $n$ and any world $\World \in \pcworlds(\dt_{n})$, 
\begin{align*}
\World\left({\textstyle \sum_{i \leq n} \trade_i(\MP)}\right) 
&= 
\World\left(\sum_{i \leq n} \sum_{j: \itwoi{j} \leq n} \frac{\trade_i^{\itwoi{j}} (\MP)}{2^j}\right) \\
\shortintertext{by definition of $\Trader$ and  since $\trade_n^{\itwoi{j}}\equiv 0 $ if $\itwoi{j} > n$;}
&= 
\sum_{j: \itwoi{j} \leq n}\frac{1}{2^j}\World\left(\textstyle{ \sum_{i \leq n}  \trade_i^{\itwoi{j}} (\MP)} \right) \\
\shortintertext{by linearity;}
&\geq \sum_{j\in \NN^+}\frac{1}{2^j} \cdot (-1) \\
&\geq -1 ,
\end{align*}
by the assumption that the plausible values $\World\left(\textstyle{ \sum_{i \leq n}  \trade_i^{\itwoi{j}} (\MP)} \right)$ are bounded below by $-1$.
Furthermore, for any $k \in \NN^+$, consider the trader $\Trader^{\itwoi{k}}$.  By assumption, for some time $n$ and some 
world $\World \in \pcworlds(\dt_{n})$, we have 
\[\World\left(\textstyle{ \sum_{i \leq n}  \trade_i^{\itwoi{k}} (\MP)} \right) \geq \itwoi{k} \equiv k 2^{k}. \]
Then, by the above analysis, we have
\begin{align*}
\World\left({\textstyle \sum_{i \leq n} \trade_i(\MP)}\right) 
&\geq  \frac{1}{2^{k}}\cdot \World\left(\textstyle{ \sum_{i \leq n}  \trade_i^{\itwoi{k}} (\MP)} \right)  + \sum_{j \in \NN^+}\frac{1}{2^j} \cdot (-1)\\
&\geq  \frac{k 2^{k}}{2^{k}} -1\\
&=k-1.
\end{align*}
Thus we have shown that the plausible values  
$\World\left({\textstyle \sum_{i \leq n} \trade_i(\MP)}\right) $ of our trader $\Trader$ are bounded below by $-1$ but unbounded above, i.e. 
$\Trader$ exploits the market $\MP$. This  contradicts that $\MP$ is a \li{}, showing that this sequence $(\Trader^k)_k$ cannot exist. 

\end{proof}


\subsection{Uniform Non-Dogmatism}\label{app:obu}
Recall Theorem~\ref{thm:obu}:

\restateunondog*

Roughly speaking, to show this, 
we will construct a parametric trader using Lemma~\ref{lem:type2}  by defining  an efficiently emulatable sequence $(\Trader^k)_k$ of traders.
Each trader $\Trader^k$ will attempt to ``defend'' the probabilities of the $\phi_i$ from dropping too far by buying no more than $k+1$ total shares in various
$\phi_i$ when they are priced below $1/(k+1)$.  If the property doesn't hold of $\MP$, then each $\Trader^k$ will buy a full $(k+1)$-many shares, at a total price of at most
$-1$. But since the $\phi_i$ are all collectively consistent, there is always a plausible world that values the holdings of $\Trader^k$ at no less than
$k+1-1=k$. Then the parametric trader that emulates  $(\Trader^k)_k$ will exploit $\MP$, contradicting that $\MP$ is a \li{}.

\begin{proof}
We can assume without loss of generality that for each $\phi_i$ that appears in the sequence of sentences $\phis$, that same sentence $\phi_i$ appears in $\phis$
infinitely often, by transforming the machine that enumerates $\phis$ into a machine that enumerates $(\phi_1,\phi_1,\phi_2,
\phi_1,\phi_2,\phi_3,\phi_1,\cdots)$. 
Futhermore,  we can assume that $\phis$ is efficiently computable by, if necessary, padding $\phis$ with copies of $\top$
while waiting for the enumeration to list the next $\phi_i$.

\paragraph{Constructing the traders.}
We now define our sequence  $(\Trader^k)_k$ of traders. For $n<k$,  let $\trade^k_n$ be the zero trading strategy. For $n \ge k$, define $\trade^k_n$ 
to be the trading strategy 
\[ \trade^k_n :=  (k + 1 - \amountbought^k_n)  \cdot \lowprice^k_n\cdot  
(\phi_n - \phi_n^{* n}), 
\]
where 
\[\lowprice^k_n:=  \ctsind{1/(2k + 2)} \left(\phi_n^{* n} < \frac{1}{k+1}\right) \]
and
\[ \amountbought^k_n := \sum_{i\leq n-1} \magnit{\trade^k_i} .\]
We will make use of the convention for writing the coefficients of a trade:
\[ \trade^k_n[\phi_n] \equiv  (k + 1 - \amountbought^k_n)  \cdot \lowprice^k_n .
\]
In words, $\trade^k_n$ is a buy order for $(k + 1 - \amountbought^k_n)$-many shares of $\phi_n$, scaled down by the extent
$\lowprice^k_n$  to which  $\phi_n$ is priced below  $1/(k+1)$ at time $n$.  The quantity  $\amountbought^k_n$ measures the total number of shares that
$\Trader^k$ has purchased before the current time step $n$.  

For some fixed polynomial independent of $k$,
the $\Trader^k$ are uniformly computable with runtime bounded by that polynomial as a function of $n$,
using the discussion in \ref{app:dynamicprogramming} on dynamic programming. 
Furthermore, $\trade^k_n \equiv 0$ for $n < k$ by definition. Hence $(\Trader^k)_k$ is an
efficiently emulatable sequence of traders as defined in~\ref{def:emulatabletraders}. 

Note that by the definition of $\trade^k_n$, the magnitude $\magnit{\trade^k_n(\MP)}$ of the trade is bounded by $k+1- \amountbought^k_n(\MP)$. 
By definition of   
$\amountbought^k_n$ and by  induction on $n$, we have that 
\begin{align*}
\amountbought^k_1 (\MP)& = 0 \leq k+1\\
\shortintertext{ as $ \amountbought^k_1 $ is an empty sum, and}
\amountbought^k_{n +1 }(\MP) &= \sum_{i\leq  n}  \magnit{\trade^k_i (\MP)} \\
  &= \amountbought^k_n (\MP)+  \magnit{\trade^k_n (\MP)}\\ 
&\leq  \amountbought^k_n (\MP)+ k+1- \amountbought^k_n(\MP)\\
&= k+1.
\end{align*}
 In words,  $\Trader^k$ never trades more than
$k+1$ shares in total. Furthermore, since by definition $\lowprice$ is always
non-negative, we have that 
\[\magnit{\trade^k_n(\MP)} = |\trade^k_n[\phi_n](\MP) | = |(k + 1 - \amountbought^k_n(\MP) )  \cdot \lowprice^k_n(\MP)  | \geq 0. \]

\paragraph{Analyzing the value of $\bm{\Trader^k}$.} 
Fix $\Trader^k$ and a time step $n$. 
For any plausible world $\World \in \pcworlds(\dt_{n})$, the value in $\World$ of holdings from trades made by $\Trader^k$ up to time $n$ is
\begin{align*}
\World\left({\textstyle \sum_{i \leq n} \trade^k_i(\MP) }\right) 
&= 
\World\left({\textstyle \sum_{i \leq n} \trade^k_i[\phi_i](\MP) \cdot (\phi_i - \phi_i^{* i}(\MP)) }\right) \\
&= \;\;\;
\sum_{i \leq n} \trade^k_i[\phi_i](\MP)  \cdot \World(\phi_i)\\
&\;\;\;
+ \sum_{i \leq n}\trade^k_i[\phi_i](\MP)  \cdot (-\pt_i(\phi_i)),
\end{align*}
by linearity and by the definition $ \phi_i^{* i }(\MP) \equiv   \pt_i (\phi_i)$. 
We analyze the second term first, which represents the contribution to the value of $\Trader^k$ from the prices of the $\phi_i$-shares that it has purchased
up to time $n$. 
We have that 
\begin{align*}
&\;\;\;\;  
\sum_{i \leq n}\trade^k_i[\phi_i](\MP)  \cdot (-\pt_i(\phi_i))\\
&\geq   \sum_{i \leq n} 
\magnit{\trade^k_i} \cdot \left( - \frac{1}{k+1} \right) \\
\shortintertext{since $\magnit{\trade^k_i(\MP)} = \trade^k_i[\phi_i](\MP) \ge 0$ and since   $ \pt_i (\phi_i) \leq 1/(k+1)$ whenever $\lowprice^k_i(\MP)$ is non-zero;}
&\geq   - \frac{k+1}{k+1} \\
&= -1,
\end{align*}
since $\Trader^k$ never purchases more than $k+ 1$ shares.  Now consider the value 
\[\sum_{i \leq n} \trade^k_i[\phi_i](\MP)  \cdot \World(\phi_i)\]
in $\World$ of the stock holdings
from trades made by $\Trader^k$ up to time $n$. Since both $\World(\phi_i)$  and $\trade^k_i[\phi_i](\MP) $ are non-negative, this value is
non-negative. Hence we have shown that 
\[\World\left({\textstyle \sum_{i \leq n} \trade^k_i(\MP) }\right)  \geq  -1 + 0 = -1 ,\]
i.e. the total value of $\Trader^k$ is bounded below by $-1$. 

Furthermore, since $\Theory \cup \phis$ is consistent, there is always a plausible world 
$\World \in \pcworlds(\dt_{n})$ such that $\forall i\leq n: \World(\phi_i) = 1$, and therefore  
\[\World\left({\textstyle \sum_{i \leq n} \trade^k_i(\MP) }\right)  \geq  -1 + 
\sum_{i \leq n} \trade^k_n[\phi](\MP).\]

\paragraph{Exploitation by the parametric trader.}

Now suppose by way of contradiction that the market $\MP$ does not satisfy 
the uniform non-dogmatism property.  Then for every $k$, in particular the property does not hold for $\varepsilon = 1/(2k+2)$, so 
there is some $\phi_i$ in the sequence $\phis$ such that
$\pt_\infty(\phi_i) <  1/(2k+2)$. Since by assumption $\phi_i$ appears infinitely often in $\phis$, for some sufficiently large
$n$ we have $\pt_n(\phi_n) \equiv \pt_n(\phi_i)  <  1/(2k+2)$,  at which point
\[\lowprice^k_n(\MP) = \ctsind{1/(2k + 2)} \left(\pt_n(\phi_n) < \frac{1}{k+1}\right)= 1. \]
Therefore 
\[\trade^k_n[\phi_n] = (k + 1 - \amountbought^k_n),  \]
so that 
\[\sum_{i\leq  n} \magnit{ \trade^k_i (\MP)} =  \amountbought^k_{n} (\MP)+ k+1- \amountbought^k_{n}(\MP) = k+1. 
\]
Thus
\[\World\left({\textstyle \sum_{i \leq n} \trade^k_i(\MP) }\right)  \geq  -1 + k+1 = k. \]
In words, once the price of some $\phi_i$ dips below $1/(2k+2)$, the trader $\Trader^k$ will purchase the remaining $k+1-\amountbought^k_n(\MP)$ shares it
will ever buy. Then in a world $\World$ that witnenesses that $\phis$ is consistent with $\Theory$, all the shares held by $\Trader^k$ are valued at $\$1$
each, so $\Trader^k$ has stock holdings valued at $k+1$, and cash holdings valued at no less than $-1$. 

Therefore each $\Trader^k$ has plausible value bounded below by $-1$ and at least $k$ in some plausible world at some time step, and therefore 
Lemma~\ref{lem:type2} applies, contradicting that $\MP$ is a \li{}. Therefore in fact 
$\MP$ does satisfy 
the uniform non-dogmatism property. 

\end{proof}

\subsection{Occam Bounds}\label{app:ob}
Recall Theorem~\ref{thm:ob}:

\restateoccam*

We show the result for $\phi$ such that $\Theory \not\vdash \lnot \phi$; the result for $\Theory \not\vdash \phi$ follows by
considering $\lnot\lnot \phi$ and using the coherence of $\pt_\infty$. 

Roughly speaking, we will construct an efficiently emulatable sequence of traders $(\Trader^k)_k$ where $\Trader^k$ attempts to ensure that $\pt_n(\phi)$
does not drop below 
$2^{-\kappa(\phi)}/(k+1)$ for any $\phi$. We do this by having $\Trader^k$ purchase shares in any $\phi$ that are underpriced in this way, as
judged by a computable  approximation from below of $2^{-\kappa(\phi)}$. The trader $\Trader^k$ will purchase at most $k+1$ shares in each $\phi$, and hence 
spend at most $\$ 2^{-\kappa(\phi)}$ for each $\phi$ and at most $\$1$ in total.
On the other hand, if the market $\MP$ does not satisfy the Occam property with constant $C = 1/(k+1)$, then for some $\phi$ with
$\Theory \not\vdash \lnot \phi$, we will have that
$\Trader^k$ purchases a full $k+1$ shares in $\phi$. Since there is always a plausible world that values $\phi$ at $\$1$, $\Trader^k$ will have a plausible value
of at least $\$ k$,  taking into account the $\$ 1$ maximum total prices paid. This contradicts Lemma~\ref{lem:type2}, so in fact $\MP$ satisfies the Occam property. 

To implement this strategy, we will use tools similar to those used in the proof of Theorem~\ref{thm:obu}, and the proof is similar 
in spirit, so we will elide some details.

\begin{proof}
Observe  that $2^{-\kappa(\phi)}$ is approximable from below uniformly in $\phi$, since we can (slowly) enumerate all prefixes on which our fixed UTM halts
and outputs $\phi$. 
Let $\seq\phi$ be an efficiently computable enumeration of all sentences. Let $M$ be a Turing machine that takes an index $i$ into our enumeration and takes
a time $n$, and outputs a non-negative rational number. We further specify that $M$ runs in   
time polynomial in $i+n$, satisfies
$\forall n,i: M(n,i) \leq 2^{-\kappa(\phi_i)}$, and satisfies $\lim_{n\rightarrow\infty}M(n,i)=2^{-\kappa(\phi_i)}$. 
Note that since we are using prefix complexity,  we have $\sum_{\phi\in\Sentences}2^{-\kappa(\phi)}\leq 1$.  
(We can assume without loss of generality that $M(n,i) >0$ for all $i \le n$, by padding the sequence $\phis$ with $\phi_1$ while waiting until $M(n,i)>0$ to
enumerate $\phi_i$,  using the fact that our UTM outputs each $\phi$ for some prefix.) 

We define  a sequence of traders $(\Trader^k)_k$. For $n<k$, define $\trade^k_n$ to be the zero trading strategy. For $n\geq k$, 
define $\trade^k_n$ to be the trading strategy given by 
\[\trade^k_\nn :=  \sum_{i \leq n} (k+1 - \amountbought^k_n(i)) \cdot \lowprice^k_n(i) \cdot (\phi_i - \phi_i^{* n})   ,\]
where 
\[ \lowprice^k_n(i) :=  \ctsind{M(n,i)/(2k + 2)}\left(\pf{\phi}<\frac{M(n,i)}{k + 1}\right)\] 
and
\[\amountbought^k_n(i) := \sum_{j \leq n-1} \trade^k_j[\phi_i] .\]
This is similar to the parametric trader used in the proof of Theorem~\ref{thm:obu}, except that here on time $n$,  $\Trader^k$ buys any $\phi_i$ when
$\pt_\nn(\phi_i)$ is too low, 
and  $\amountbought^k_n(i)$ tracks the number of shares bought by $\Trader^k$ in each $\phi_i$ individually up to time $n$. 
By the discussion of dynamic programming in \ref{app:dynamicprogramming}, $(\Trader^k)_k$ is an efficiently emulatable sequence of traders. (We use that $M(n,i)$ is pre-computed by the machine that computes
$\Trader^k$, and hence appears in
the feature expression for $\trade^k_n$ as a constant which is strictly positive by assumption.)

Observe that for each $i$, by induction on $n$ we have $\amountbought^k_n(i) \leq k+1$, so that $\Trader^k$ only buys positive shares in the various $\phi_i$, 
and $\Trader^k$ only buys up to $k + 1$ shares of $\phi_i$. 
Further,  $\Trader^k$ 
only buys $\phi_i$-shares at time $n$ if the price $\pt_\nn(\phi_i)$ is below $1/(k+1)$-th of the approximation $M(n,i)$ to $2^{-\kappa(\phi)}$, i.e. 
\[\pt_\nn(\phi_i) < \frac{M(n,i)}{k + 1} \leq \frac{2^{-\kappa(\phi_i)}}{k+1} .\]
Therefore $\Trader^k$
spends at most $\$ 2^{-\kappa(\phi_i)}$ on $\phi_i$-shares, and hence spends at most $\$ 1$ in total. 

Suppose for contradiction that $\MP$ does not satisfy the Occam property.  
Then for every $k$ there exists some $\phi_i$ such that  the limiting price of $\phi_i$ is 
\[\pt_\infty(\phi_i) < \frac{2^{-\kappa(\phi_i)}  }{(2k+2)}, \]  

but nevertheless $\Theory\nvdash \neg\phi_i $. 
Then for some time step $n$ we will have that $ \pt_n(\phi_i) < M(n,i)/(2k+2) $,  and hence $\lowprice^k_n(i)= 1 $. At that point 
$\Trader^k$ will purchase  $k+1 - \amountbought^k_n(i)$ shares in
$\phi_i$, bringing   $\amountbought^k_{n+1} (i)$ to $k+1$; that is, $\Trader^k$ will have bought $k + 1$ shares of $\phi_i$.  
Since $\phi$ is consistent with $\Theory$, 
there is some  plausible world $\World \in \pcworlds(\dt_{n})$ that values those shares at $\$ 1$ each, so that the total value of all of the holdings from trades made
by $\Trader^k$ is at least $k$.
By Lemma~\ref{lem:type2} this contradicts that $\MP$ is a  \li{}, so in fact $\MP$ must satisfy the Occam property. 
\end{proof}

\subsection{Non-Dogmatism}\label{app:nondog}

\restatenondog*

\begin{proof}
  This is a special case of~\ref{thm:ob}, since $\kappa(\phi) >0$ for any $\phi$.
\end{proof}

\subsection{Domination of the Universal Semimeasure}\label{app:dus}

\restatedus*

\begin{proof}

\newcommand{\Purchased}{\mathrm{Purchased}}
\newcommand{\MeanPayout}{\mathrm{MeanPayout}}
\newcommand{\MaxPayout}{\mathrm{MaxPayout}}

Let $(\seq{\sigma}^i)_i$ be an \ec enumeration of all finite strings.  Let $l(\seq{\sigma})$ be the length of $\seq{\sigma}$.  Define

\[
  \phi_i := \quot{
    (b_1 \iff \enc{\sigma^i_1}=1) \land
    (b_2 \iff \enc{\sigma^i_2}=1) \land
    \ldots \land
    (b_{\enc{l(\seq{\sigma}^i)}} \iff \enc{\sigma^i_{l(\seq{\sigma}^i)}}=1)}
\]

  to be the sentence saying that the string $(b_1, b_2, \ldots)$ starts with $\sigma^i$.  Suppose the theorem is not true; we will construct
a sequence of parametric traders to derive a contradiction through \Lem{type2}.

\paragraph{Defining a sequence of parametric traders.}
To begin, let $A(\seq{\sigma}, \nn)$ be a lower-approximation of $M(\seq{\sigma})$ that can be computed in time polynomial in $n$ and the length of $\seq{\sigma}$.  Specifically, $A$ must satisfy
\begin{itemize}
  \item
   $A(\seq{\sigma}, \nn) \leq M(\seq{\sigma})$, and 
   \item
   $\lim_{\nn \to \infty} A(\seq{\sigma}, \nn) = M(\seq{\sigma})$.
\end{itemize}
  Now, recursively define
\begin{align*}
  \gen{\alpha_{k,n,i}} &:= \begin{cases}
  \ctsind{\frac{1}{4(k+1)}} \left( \frac{ \phi_i^{* n}}{ A(\seq{\sigma}^i, n) } < \frac{1}{2(k+1)} \right) & \text{if } A(\seq{\sigma}^i, n) > 0 \wedge n \geq k \wedge i \leq n \\
    0 & \text{otherwise}
  \end{cases}
    \\
  \gen{\beta_{k,n,i}} &:= \min(\gen{\alpha_{k,n,i}}, 1 - \gen{\gamma_{k,n,i}})
  \\
  \gen{\gamma_{k,n,i}} &:= \sum_{m < n, j \leq m} \gen{\beta_{k, m, j}} {\phi}_j^{* m} + \sum_{j < i} \gen{\beta_{k, n, j}} \phi_j^{* n}
\end{align*}
in order to define a parametric trader
\[
  T^k_n := \sum_{i \leq n} \beta_{k,n,i} \cdot (\phi_i - \phi_i^{* n})
\]

As shorthand, we write $\alpha_{k,n,i} := \gen{\alpha_{k,n,i}} \ftn$, $\beta_{k,n,i} := \gen{\beta_{k,n,i}} \ftn$, $\gamma_{k,n,i} := \gen{\gamma_{k,n,i}} \ftn$. Intuitively,
\begin{itemize}
  \item $\alpha_{k,n,i}$ is the number of copies $\phi_i$ that  $\Trader^k$ would buy on day $n$ if it were not for budgetary constraints.  It is high if $\pt_n$ obviously underprices
    $\phi_i$ relative to $M$ (which can be checked by using the lower-approximation $A$).
  \item $\beta_{k,n,i}$ is the actual number of copies of $\phi_i$ that $\Trader^k$ buys, which is capped by budgetary constraints.
  \item $\gamma_{k, n, i}$ is the amount of money $T^k$ has spent on propositions $\phi_1, \ldots, \phi_{n-1}$ ``before considering'' buying $\phi_i$ on day $n$.  We imagine that,
        on each day $n$, the trader goes through propositions in the order $\phi_1, \ldots, \phi_n$.
\end{itemize}

\paragraph{Analyzing the sequence of parametric traders.}

Observe that $T^k$ spends at most $\$1$ in total, since $\beta_{k,n,i} \leq 1 - \gamma_{k,n,i}$.  Now we will analyze the trader's maximum payout.  Assume by contradiction
that $\pt_\infty$ does not dominate $M$.  Define
\[ \Purchased_{k,n,i} := \sum_{m\le n} \beta_{k,m,i} \]
to be the number of shares of $\seq{\sigma}^i$ that $T$ has bought by time $\nn$, and
\[ \MeanPayout_{k,n} := \sum_{i \in \NN^+} M(\seq{\sigma}^i) \Purchased_{k,n,i}. \]
to be the ``mean'' value of stocks purchased by time $\nn$ according to the semimeasure $M$.  Both of these quantities are nondecreasing in $n$.  Now we show that there is some $N$ such that $\MeanPayout_{k,n} \geq k+1$ for all $n \geq N$:
\begin{itemize}
  \item Every purchase costing $c$ corresponds to $\MeanPayout_{k,n}$ increasing by at least $c\cdot 2(k+1)$.  This is because the trader only buys $\phi_i$ when $\frac{\pt_\nn(\phi_i)}{A(\seq{\sigma}_i, \nn)} < \frac{1}{2(k+1)}$, and $A(\seq{\sigma}_i, \nn) \leq M(\seq{\sigma}_i)$.
  \item For some $N$, $\MeanPayout_{k, N} \geq k+1$.  Suppose this is not the case.
Since we are supposing that $\pt_\infty$ does not dominate the universal semimeasure, there is some $i$ such that
$\pt_\infty(\phi_{i}) < \frac{M(\seq{\sigma}^{i})}{8(k + 1)}$.  So we will have
$\pt_n(\phi_{i}) < \frac{M(\seq{\sigma}^{i})}{8(k + 1)}$ for infinitely many $n$; let $\mathcal{N}$ be the set of such $n$.

For all sufficiently high $n$ we have $A(\seq{\sigma}^i, n) \geq M(\seq{\sigma}^i)/2$, so for all sufficiently high
$n \in \mathcal{N}$,

\[ \frac{\pt_n(\phi_i)}{A(\seq{\sigma}^i,n)} \leq \frac{\pt_n(\phi_i)}{M(\seq{\sigma}^i)/2} \leq \frac{1}{4(k+1)}  \]

and so there is some infinite subset $\mathcal{N}' \subseteq \mathcal{N}$ for which $\alpha_{k,n,i} = 1$.
By assumption, $\forall n: \MeanPayout_{k, n} < k+1$, so the trader never has spent more than $\$1/2$ (using the previous step), so $\gamma_{k,n,i} \leq 1/2$.
This means $\beta_{k,n,i} \geq 1/2$, which implies an increase in mean payout $\MeanPayout_{k,n} - \MeanPayout_{k,n-1} \geq M(\seq{\sigma}^i) > 0$.  But this increase happens for infinitely many $n$,
so $\lim_{n \to \infty} \MeanPayout_{k, n} = \infty$.  This contradicts the assumption that $\MeanPayout_{k, N} < k+1$ for all $N$.
  \item $\MeanPayout_{k,N}$ is nondecreasing in $N$, so $\MeanPayout_{k,n} \geq k+1$ for all $n \geq N$.
\end{itemize}

Using this lower bound on $\MeanPayout_{k,n}$, we would now like to show that $T^k$'s purchases pay out at least $k+1$ in some $W \in \pcworlds(\dt_\infty)$. To do this, define

\[
\MaxPayout_{k,n} := \sup_{\seq{\sigma} \in \mathbb{B}^{\leq \NN^+}} \sum_{\seq{\sigma'}^i \text{~prefix of~} \seq{\sigma}} \Purchased(\seq{\sigma}^i, \nn)
\]
to be the maximum amount that $T^k$'s purchases pay out over all possible strings (finite and infinite).
Since $M$ is a semimeasure over finite and infinite bitstrings, we have $\MeanPayout(\nn) \leq \MaxPayout(\nn)$.
Since each $\phi_i$ is independent of $\Theory$, $\Trader^k$'s maximum worth is at least
\[ \limsup_{\nn \to \infty} \MaxPayout(\seq{\varepsilon}, \nn) - 1 \geq \limsup_{\nn \to \infty} \MeanPayout(\seq{\varepsilon}, \nn) - 1 \geq k + 1 - 1 = k. \]

This is sufficient to show a contradiction using \Lem{type2}.
\end{proof}

\subsection{Strict Domination of the Universal Semimeasure}\label{app:strict}

Recall \Theorem{strict}:

\restatestrict*

\begin{proof}

Consider the sets of codes for Turing machines 
\begin{align*}
  A_0 &:= \{ M \mid \text{ $M$ halts on input 0 and outputs 0}\}\\
  \shortintertext{and}
  A_1 &:= \{ M \mid \text{ $M$ halts on input 0 and outputs 1}\}.
\end{align*}
Both of these sets are computably enumerable and disjoint, so by \Theorem{obu}, $\pt_\infty$ assigns positive measure to the set $[A]$ of infinite bitstrings that encode a separating set for $A_0$ and $A_1$, i.e., a set $A$ such that $A\cap A_0 = \emptyset$ and $A\supseteq A_1$. 

Thus it suffices to show that the universal semimeasure assigns 0 measure to $[A]$.  This is a known result from computability theory, using the fact that $A_0$ and $A_1$ are recursively inseparable; see for example \cite{kuvcera2011demuth}. Here we give an elementary proof sketch.

Suppose for contradiction that $m$ computes a universal semimeasure and $m([A])= r >0$; we argue that we can compute some separating set $A$. Let $q \in [4r/5,r] \cap \QQ$. There is some fixed $k$ such that the finite binary subtree $A_k$ consisting of finite prefixes of length $k$ of strings in $[A]$  is assigned  $m(A_k) \in [r,6r/5]$. 

On input $n$, we can run $m$ on the set of strings of length up to $n$ until the set of extensions of strings in $A_k$ has measure at least $q$; this will happen because $m([A]) >q$. Then we output 0 if the majority of the measure is on strings with $n$th bit equal to 0, and we output 1 otherwise. If we output 0 but in fact $n \in A_1$, then there is measure at most $6r/5-2r/5 = 4r/5$ on extensions of strings in $A_k$ that are consistent with separating $A_0$ and $A_1$; but this is impossible, as $[A]$ has measure $r$. Likewise if we output 1 then it can't be that $n \in A_0$. Thus we have recursively separated $A_0$ and $A_1$, contradicting that $A_0$ and $A_1$ are recursively inseparable.

\end{proof}

\subsection{Closure under Finite Perturbations}\label{app:ifp}

Recall \Theorem{ifp}:

\ifp*

In short, a trader that exploits $\MP$ also exploits ${\seq{\pt^\prime}}$ since all but finitely many of its trades are identically valued. 
The proof mainly concerns a minor technical issue; we have to make a small adjustment to the trader to ensure that it makes exactly the same
trades against ${\seq{\pt^\prime}}$ as it does against $\MP$.

\begin{proof}
  Assume there is a trader $\Trader$ which exploits $\MP$. We will construct a new trader $\Trader'$ that exploits ${\seq{\pt^\prime}}$. Fix $N$ large enough that $ \pt_\nn= \pt^\prime_\nn$ for all $\nn\geq N$. 

We will define $\Trader'$ so that it makes the same trades against the market ${\seq{\pt^\prime}}$ as the trader $\Trader$ makes against $\MP$.   That is, we want that for
all $n$, 
\[
  \trade'_n({\seq{\pt^\prime}}) =\trade_n(\MP).  
\]
It is insufficient to set the trading strategy $\trade'_n$ 
equal to $\trade_n$ for all $n$.  This is because $\trade_n$ may infinitely often make different trades given the history $\pt^\prime_{\leq n}$ instead of
the history 
$\pt_{\leq n}$. For example, it may be that every day $\Trader$ buys $\Valuation_1(\phi)$-many shares in $\phi$ against $\Valuation$; in this case if
$\pt^\prime_1(\phi) \ne \pt_1(\phi)$, then at each time $n$, 
$\trade_n({\seq{\pt^\prime}})$ will buy a different number of shares from $\trade_n(\MP)$. Roughly speaking, we will patch this problem by copying $\Trader$, but feeding it
``false reports'' about the market prices so that it appears to the $\trade_n$ that they are reacting to $\MP$ rather than ${\seq{\pt^\prime}}$.

More precisely, let $F$ be a computable function from feature expressions to feature expressions, 
in the expression language discussed in \ref{app:expressiblefeatures}. For a feature expression $\alpha$, we define $F(\alpha
)$ to be identical to $\alpha$ but with all occurrences of an expression $\phi^{* i}$ for $i<N$ replaced by a constant $\pt_i(\phi)$.

Note that $F$ is efficiently computable: by the assumption that $ \pt_\nn= \pt^\prime_\nn$ for all $\nn\geq N$, only finitely many constants $\pt_i(\phi)$ are
needed, and can be hard-coded into $F$. Furthermore, $F$ behaves as intended: for any $\alpha$, we have $F(\alpha)({\seq{\pt^\prime}})= \alpha(\MP)$ (using a slight abuse of
notation, treating $\alpha$ as both an expression and as the feature thus expressed). This follows by structural induction
the expression $\alpha$, where every step is trivial except the base cases for symbols $\phi^{* i}$ with $i<N$, which follow from the
definition of $F$. Now we define
\[
  \trade'_n := \sum_{\phi \in \Sentences} F(\trade_n[\phi]) (\phi - \phi^{* n}) 
\]
for any $n$. This is efficiently computable because $\trade_n$ and $F$ are both \ec  Furthermore, for all $n \ge N$, we have that $\trade'_n({\seq{\pt^\prime}}) =
\trade_n(\MP)$. Therefore for any $n$ we have that 
\begin{align*}
&\;\;\;\;\left| 
\World\mleft({\textstyle \sum_{i \leq n} \trade_i\mleft(\MP\mright)}\mright) - 
\World\mleft({\textstyle \sum_{i \leq n} \trade'_i\mleft({\seq{\pt^\prime}}\mright)}\mright) 
\right|\\
&\leq  
\left| 
\World\mleft({\textstyle \sum_{i < N } \trade_i\mleft(\MP\mright)}\mright) - 
\World\mleft({\textstyle \sum_{i < N } \trade'_i\mleft({\seq{\pt^\prime}}\mright)}\mright) 
\right|,
\end{align*}
which is a fixed constant, where we use that all terms for $i\ge N$ cancel with each other. This says that at all times and all plausible worlds, there is a
fixed upper bound  on the difference between the values of $\Trader$ against
$\MP$ and of
$\Trader'$ against ${\seq{\pt^\prime}}$. Thus if 
\[  \left\{ \World\mleft({\textstyle \sum_{i \leq n} \trade_i\mleft(\MP\mright)}\mright) \,\middle|\, \nn\in\NN^+, \World\in\pcworlds(\dt_\nn)
\right\}\]
is bounded below but unbounded above, then so is
\[  \left\{ \World\mleft({\textstyle \sum_{i \leq n} \trade'_i\mleft({\seq{\pt^\prime}}\mright)}\mright) \,\middle|\, \nn\in\NN^+, \World\in\pcworlds(\dt_\nn)
\right\}. \]
Therefore,
if some trader exploits $\MP$,  so that $\MP$ is not a logical inductor, then some trader exploits ${\seq{\pt^\prime}}$, so ${\seq{\pt^\prime}}$ also fails to be a \li{}. Symmetrically,  if ${\seq{\pt^\prime}}$ is not a \li{},
then neither is $\MP$.
\end{proof}

\subsection{Conditionals on Theories}\label{app:scon}

\newcommand{\symb}{\circ}
\newcommand{\altDP}{{\DP^{\symb}}}
\newcommand{\altdt}{{\dt^{\symb}}}
\newcommand{\altpsi}[1]{{\psi^{\symb}_{#1}}}
\newcommand{\altMP}{{\MP^{\symb}}}
\newcommand{\altpt}{{\pt^{\symb}}}

\restatescon*

  Since  $\MP$ is a \li{} over $\Theory$, we can fix some particular $\Theory$-complete deductive process $\DP$ over which $\MP$ is a \li{}, which exists by definition of ``\li{} over $\Theory$''. 
Let $\DP'$ be any other \ec deductive process.
Write 
\[ \altpsi{n} := \bigwedge_{\psi \in \dt'_n} \psi  \]
for the conjunction of all sentences $\psi$ that have appeared in $\DP'$ up until time $n$. (We take the empty conjunction to be the sentence
$\top$.) 
Write $\altMP$ to mean the market 
$\left(  \pt_n(\any \mid  \altpsi{n}) \right)_{n \in \NN^+}$. 

We will show the slightly more general fact that for any \ec $\DP'$, if the theory \[  \Theory \cup \{\psi'
\mid \exists n: \psi' \in \dt'_n \} \] is consistent, then $\altMP$
  is a \li{} over the deductive process $\altDP$ defined for any $n$ by $\altdt_n := \dt_n \cup \dt'_n$, which is complete for that theory. 
  This implies the theorem by specializing to the $\{\psi\}$-complete deductive process
  $(\{\psi\},\{\psi\},\{\psi\},\ldots)$, and to the $\Psi$-complete deductive process
$(\{\psi_1\},\{\psi_1,\psi_2\},\{\psi_1,\psi_2,\psi_3\},\ldots)$ (where we pad with $\top$ to ensure this sequence is efficiently computable).

Roughly speaking, we'll take a supposed trader $\Trader^\symb$ that exploits $\altMP$ and construct a trader $\Trader$ that exploits $\MP$.
We'd like our trader $\Trader$ to mimic $\Trader^\symb$ ``in the worlds where $\altpsi{n}$ is true'', and otherwise remain neutral. A first attempt
would be to have our trader buy the combination
\[
  \phi \wedge \altpsi{n} - \frac{\pt_n(\phi \wedge \altpsi{n})}{\pt_n(\altpsi{n})} \cdot \altpsi{n}
\]
whenever $\Trader^\symb$ buys a share in $\phi$. 
The idea is to make a purchase that behaves like a conditional contract that pays out if $\phi$ is true but only has any effect in worlds where  $\altpsi{n}$
is true.
That is, the hope is that the price of this combination is 0; in worlds where $\altpsi{n}$
 is false, the stock holdings from this trade are valued at 0; and in worlds where $\altpsi{n}$
 is true, the stock holdings have the same value as that of  purchasing a $\phi$-share against $\altMP$. 

There are some technical problems with the above sketch. First, the ratio of probabilities in front of $\altpsi{n}$ in the above trade is not well-defined if 
$\pt_n(\altpsi{n})=0$. We will fix this using a safe reciprocation for the ratio. To avoid having this affect the performance of  $\Trader$ in comparison to  $\Trader^\symb$, we will first correct the market using Lemma~\ref{app:ifp} (closure under finite perturbations) so that, essentially, the safe reciprocation never makes a difference.

Second, if 
$\pt_n(\phi \wedge \altpsi{n})$ is greater than   $\pt_n(\altpsi{n})$, then their ratio $\frac{\pt_n(\phi \wedge \altpsi{n})}{\pt_n(\altpsi{n})}$ is greater than the conditional probability $\pt_n(\phi \mid
\altpsi{n}) = 1$ as
defined in~\ref{def:condp} (which is capped at 1). In this case, our trader  $\Trader$  has stock holdings with a different (and possibly lower) value from those of the original trader  $\Trader^\symb$ exploiting
$\altMP$, and therefore  $\Trader$ possibly has less value overall across time than  $\Trader^\symb$, which breaks the desired implication (that is, maybe the original trader exploits $\altMP$, but our new, less successful trader does not exploit $\MP$). If we simply replace the ratio with the conditional probability 
$\pt_n(\phi \mid \altpsi{n}) $, then when $\pt_n(\phi \wedge \altpsi{n}) > \pt_n(\altpsi{n}) $, the value of the cash holdings for  $\Trader$ may be non-zero (in particular, may be negative). Instead
we will have $\Trader$ cut off its trades when both $\pt_n(\phi \wedge \altpsi{n}) > \pt_n(\altpsi{n}) $ and also $\Trader^\symb$ is buying $\phi$; this is no loss for $\Trader$ relative to $\Trader^\symb$, since in this case $\Trader^\symb$ is buying $\phi$ at the price of 1, and so is not making any profit anyway. 

We now implement this construction strategy. 

\newcommand{\csell}{{\rm Sell}}
\newcommand{\cbuy}{{\rm Buy}}

\begin{proof}

Let $\DP$, $\altDP$, and $\altMP$ be defined as above.

We may assume that the collection of sentences that appear in $\altDP$ is consistent. If not, then no trader exploits $\altMP$: for all sufficiently large $n$ 
the set of plausible worlds $\pcworlds(\altdt_n)$ is empty, so the set of plausible values of any
trader's holdings is a finite set, and hence bounded above. 

We may further assume without loss of generality that there exists a rational $\varepsilon>0$ such that $\pt_\nn(\altpsi{n})>\varepsilon$ for all $\nn$. Indeed,
by Theorem~\ref{thm:obu} (uniform non-dogmatism), since $\altDP$ is consistent, there is some $\varepsilon>0$ such that $\pt_\infty(\altpsi{n})>\varepsilon$ for
all sufficiently large $n$. Hence by Theorem~\ref{thm:tbo} (preemptive learning), we have $\liminf_{n\to \infty} \pt_n (\altpsi{n}) > \varepsilon$. This implies that
there are only finitely many time steps $n$ such that $\pt_n (\altpsi{n}) \leq \varepsilon$. Therefore by Lemma~\ref{app:ifp} (closure under finite perturbations),
the market ${\MP'}$
defined to be identical to $\MP$ except with
$\pt'_n (\altpsi{n})$ with  1 for all such $n$ is still a \li{}, and has the desired property. If we show that ${\altMP}'$ is
a \li{},  then again by Lemma~\ref{app:ifp},  $\altMP$ is also a \li{}.

Now suppose some trader $\Trader^\symb$ exploits $\altMP$.  We will construct a trader $\Trader$ that exploits $\MP$.

Consider the $\EF$-combination
\begin{align*}
\cbuy_\nn(\phi) &:= \phi \wedge \altpsi{n} - \frac{(\phi \wedge \altpsi{n})^{* n} }{ \max(\varepsilon, \altpsi{n}^{* n})} \cdot \altpsi{n}  
\end{align*}
parametrized by a sentence $\phi$. 
We write $(\cbuy_\nn(\phi))^{* n}$ for the expressible feature that computes the price of the $\EF$-combination $\cbuy_n(\phi)$ at time $n$, defined in the natural way by replacing sentences
with their $^{* n}$ duals. Intuitively, this combination is a ``conditional contract'' which is roughly free to buy (and valueless) in worlds where $\altpsi{n}$ is false, but behaves like a $\phi$-share in worlds where $\altpsi{n}$ is true.

Now define the trader $\Trader$ by setting
\begin{align*}
  \trade_n &:= \sum_{\phi} \alpha_n \cdot (\cbuy_n(\phi) - (\cbuy_\nn(\phi))^{* n} ) \\ 
  \alpha_n &:= \min\left(\trade^\symb_n[\phi]_{\symb}, \trade^\symb_n[\phi]_{\symb}\cdot \ctsind{\varepsilon_n}\left(\frac{(\phi \wedge \altpsi{n})^{* n} }{ \max(\varepsilon, \altpsi{n}^{* n})} < 1 + \varepsilon_n\right) \right)\\
  \varepsilon_n &:= \frac{2^{-n}}{\max(1,\magnit{\trade_n^\symb})}
\end{align*}
where $\trade^\symb_n[\phi]_{\symb}$ is defined to be the market feature $\trade^\symb_n[\phi]$ with every occurrence of a sub-expression $\chi^{* i}$ for some sentence $\chi$ replaced with 
\[  \max \left( 1,
\frac{(\chi \wedge \altpsi{i})^{* i} }{ \max(\varepsilon, \altpsi{i}^{* i})} \right) .
\]
That is, $\trade^\symb_n[\phi]_{\symb}$ is defined so that $\trade^\symb_n[\phi]_{\symb}(\MP) = \trade^\symb_n[\phi](\altMP)$, i.e., this market feature $\trade^\symb_n[\phi]_{\symb}$ behaves against
the market $\MP$ just as $\trade^\symb_n[\phi]$ behaves against the conditional market $\altMP$. Note that $\ctsind{\varepsilon_n}$ is a valid expressible feature: $\ctsind{\varepsilon_n}(x<y) := \max(0,\min(1,2^n\max(1,\magnit{\trade_n^\symb})(y-x) ))$.

The idea is that $\Trader$ will roughly implement the conditional contracts as described above, and will thus perform just as well against $\MP$ as $\Trader^\symb$ performs against $\altMP$. The catch is that it may be that $\pt_n(\phi \wedge \altpsi{n}) > \pt_n( \altpsi{n})$, in which case $\cbuy_n(\phi)$ will no longer quite function as a conditional contract, since $\altpt_n(\phi)$ is capped at 1. To prevent $\Trader$ from losing relative to $\Trader^\symb$, we use $\alpha_n$ to quickly stop $\Trader$ from buying once $\pt_n(\phi \wedge \altpsi{n}) > \pt_n( \altpsi{n})$; no profit is lost, as the price of $\phi$ for $\Trader^\symb$ is in that case just 1.

We now formalize this analysis of the value of the trades made by $\Trader$  against $\MP$ according to each term in the above summation and by cases on the traded sentences $\phi$. 

\textit{Case 1.}
First suppose that $\trade^\symb_n[\phi]_{\symb}(\MP)\leq 0$ and/or $\pt_n(\phi \wedge \altpsi{n})/  \pt_n(\altpsi{n})  =  \pt_n (\phi \mid \altpsi{n})$.
Then $\alpha_n = \trade^\symb_n[\phi]_{\symb}(\MP)$. Let $\World$ be any world; using linearity throughout, we have
\begin{align*}
 &\;\;\;\; \World(\; \alpha_n \cdot (\cbuy_n(\phi) - (\cbuy_\nn(\phi))^{* n} )\; )(\MP) \\
  &=
 \trade^\symb_n[\phi]_{\symb}(\MP) \cdot \World(  \cbuy_n(\phi) (\MP) - (\cbuy_\nn(\phi))^{* n} (\MP) ) \\
 &=\;\;\; \trade^\symb_n[\phi]_{\symb}(\MP) \cdot \World\left(  \phi \wedge \altpsi{n} - 
 \frac{\pt_n (\phi \wedge \altpsi{n})}{\pt_n(\altpsi{n}) } 
 \cdot \altpsi{n}
 \right) \\
& \;\;\;\; -  \trade^\symb_n[\phi]_{\symb}(\MP) \cdot \left( 
\pt_n(\phi \wedge \altpsi{n})- 
\frac{\pt_n(\phi \wedge \altpsi{n})}{ \pt_n(\altpsi{n}) } 
\cdot 
\pt_n(\altpsi{n}) \right) \\
\intertext{by the definition of $\cbuy$;}
&=\;\;\; \trade^\symb_n[\phi]_{\symb}(\MP) \cdot \left(\World ( \phi \wedge \altpsi{n}) - 
 \frac{\pt_n (\phi \wedge \altpsi{n})}{\pt_n(\altpsi{n}) } 
 \cdot \World(\altpsi{n})
 \right)\\
\intertext{by distribution, and where the cash term simply cancels;}
 &\geq \;\;\; \trade^\symb_n[\phi]_{\symb}(\MP) \cdot \left( \World\left(  \phi \wedge \altpsi{n}\right) -
 \pt_n (\phi \mid \altpsi{n})  \cdot \World \left(  \altpsi{n}
 \right)
\right),  
\end{align*}
by definition of   $\pt_n (\phi \mid \altpsi{n})$, and by the assumptions on $\pt_n (\phi \wedge \altpsi{n})$, $\pt_n(\altpsi{n})$, and $\trade^\symb_n[\phi]_{\symb}(\MP)$. Note that if $\World \left(  \altpsi{n}
 \right)=0$ then this quantity is 0, and if $\World \left(  \altpsi{n}
 \right)=1$ then this quantity is 
 \[ \trade^\symb_n[\phi](\altMP) \cdot \left( \World\left( \altpsi{n}\right) -
 \pt_n (\phi \mid \altpsi{n})
\right),  
\]
 which is just the value of $\trade^\symb_n$'s holdings in $\phi$ from trading against $\altMP$.

To lower-bound the value of the $- \altpsi{n}$ term by $-
 \pt_n (\phi \mid \altpsi{n})  \cdot \World \left(  \altpsi{n}\right)$, we use the fact that $\trade^\symb_n[\phi]_{\symb}(\MP)\leq 0$, the fact that $\World \left(  \altpsi{n}
 \right) \geq 0$, and the fact that  $ \pt_n(\phi \wedge \altpsi{n})/  \pt_n(\altpsi{n})  \geq 
 \pt_n (\phi \mid \altpsi{n})  $; or just the fact that $\pt_n(\phi \wedge \altpsi{n})/  \pt_n(\altpsi{n})  =  \pt_n (\phi \mid \altpsi{n})$. (Intuitively: $\Trader$ sells (the equivalent of) $\phi$ at the price of $\frac{\pt_n(\phi \wedge \altpsi{n})}{ \pt_n(\altpsi{n}) }$, while $\Trader^{\symb}$ sells $\phi$  at the no greater price of $\pt_n (\phi \mid \altpsi{n})$; or else $\Trader$ buys (the equivalent of) $\phi$ at the same price as $\Trader^{\symb}$; and so $\Trader$ does at least as well as $\Trader^{\symb}$.)

\textit{Case 2.}
Now suppose that $\trade^\symb_n[\phi]_{\symb}(\MP)\geq 0$, and also $\pt_n(\phi \wedge \altpsi{n})/  \pt_n(\altpsi{n})  >  \pt_n (\phi \mid \altpsi{n})$.
Then $\alpha_n = \trade^\symb_n[\phi]_{\symb}\cdot \ctsind{\varepsilon_n}\left(\frac{(\phi \wedge \altpsi{n})^{* n} }{ \max(\varepsilon, \altpsi{n}^{* n})} < 1 + \varepsilon_n\right)$.
Let $\World$ be any world. We have:
\begin{align*}
 &\;\;\;\; \World(\; \alpha_n \cdot (\cbuy_n(\phi) - (\cbuy_\nn(\phi))^{* n} )\; )(\MP) \\
   &=\;\;\; \trade^\symb_n[\phi](\altMP)\cdot \ctsind{\varepsilon_n}\left(\frac{\pt_n (\phi \wedge \altpsi{n})}{\pt_n(\altpsi{n}) }  < 1 + \varepsilon_n\right) \cdot \World\left(  \phi \wedge \altpsi{n} - 
 \frac{\pt_n (\phi \wedge \altpsi{n})}{\pt_n(\altpsi{n}) } 
 \cdot \altpsi{n}
 \right).
  \end{align*}
 If $\World(\altpsi{n}) = 0$ then this quantity is 0. If $\World(\altpsi{n}) = 1$, 
then if we subtract off the value of $\trade^\symb_n$'s holdings in $\phi$ from trading against $\altMP$, we have:
\begin{align*}
  &  \trade^\symb_n[\phi](\altMP)\cdot \ctsind{\varepsilon_n}\left(\frac{\pt_n (\phi \wedge \altpsi{n})}{\pt_n(\altpsi{n}) }  < 1 + \varepsilon_n\right) \cdot \World\left(  \phi \wedge \altpsi{n} - 
 \frac{\pt_n (\phi \wedge \altpsi{n})}{\pt_n(\altpsi{n}) } 
 \cdot \altpsi{n}
 \right)\\
 - & \;\trade^\symb_n[\phi](\altMP)\cdot  \left(\World\left(  \phi  \right) 
 -   \pt_n (\phi \mid \altpsi{n})
 \right)\\
 &=  \trade^\symb_n[\phi](\altMP)\cdot \ctsind{\varepsilon_n}\left(\frac{\pt_n (\phi \wedge \altpsi{n})}{\pt_n(\altpsi{n}) }  < 1 + \varepsilon_n\right) \cdot \left( \World( \phi ) - 
 \frac{\pt_n (\phi \wedge \altpsi{n})}{\pt_n(\altpsi{n}) } 
 \right)\\
 - & \;\trade^\symb_n[\phi](\altMP)\cdot  \left(\World\left(  \phi  \right) 
 -  1
 \right)\\
 \intertext{by the assumption that $\World(\altpsi{n}) = 1$, and since $\pt_n (\phi \mid \altpsi{n})= 1$  by the assumption that $\pt_n(\phi \wedge \altpsi{n})/  \pt_n(\altpsi{n})  >  \pt_n (\phi \mid \altpsi{n})$;}
 =&\; \trade^\symb_n[\phi](\altMP)\cdot \bigg(    
 \left(\ctsind{\varepsilon_n}\left(\frac{\pt_n (\phi \wedge \altpsi{n})}{\pt_n(\altpsi{n}) }  < 1 + \varepsilon_n\right)
 -1 \right) \cdot \World(\phi) \\
 +&\; 1 - \ctsind{\varepsilon_n}\left(\frac{\pt_n (\phi \wedge \altpsi{n})}{\pt_n(\altpsi{n}) }  < 1 + \varepsilon_n\right) \cdot \frac{\pt_n (\phi \wedge \altpsi{n})}{\pt_n(\altpsi{n}) }
 \bigg)\\
 \intertext{by rearranging;}
 \geq  &\; \trade^\symb_n[\phi](\altMP)\cdot \left(    
 \ctsind{\varepsilon_n}\left(\frac{\pt_n (\phi \wedge \altpsi{n})}{\pt_n(\altpsi{n}) }  < 1 + \varepsilon_n\right)
  \left(1- \frac{\pt_n (\phi \wedge \altpsi{n})}{\pt_n(\altpsi{n})}\right)  \right)
\\
 \intertext{since $\ctsind{\varepsilon_n}\leq 1,$ so in the worst case $\World(\phi) =1$;}
 \geq  &\; \trade^\symb_n[\phi](\altMP)\cdot \left(-  \varepsilon_n  \right),
\\
  \end{align*}
by definition of $\ctsind{\varepsilon_n}$.

\textit{Combining the cases.} Now summing over all $\phi$,  for any world $\World$ such that $\World(\altpsi{n})=1$, we have:

\begin{align*}
&\World\left( \trade_n (\altMP)\right) - \World\left( \trade^\symb_n(\altMP) \right)\\
& = \sum_{\phi} \left( \alpha_n(\altMP) \cdot (\cbuy_n(\phi)(\altMP) - (\cbuy_\nn(\phi))^{* n}(\altMP) ) \right) - \trade^\symb_n[\phi](\altMP)\left(\phi -\altMP(\phi) \right) \\
& \geq \sum_{\phi}\trade^\symb_n[\phi](\altMP)\cdot \left(- \varepsilon_n  \right)\\
\intertext{since for each $\phi$ the corresponding inequality holds by the above analyses;}
& \geq -\magnit{\trade_n^\symb} \cdot \frac{2^{-n}}{\max(1,\magnit{\trade_n^\symb})}\\
\intertext{by definition of $\magnit{\trade_n^\symb}$ and of $\varepsilon_n$;}
& \geq -2^{-n}.
\end{align*}

 In particular,  for any world $\World \in \pcworlds(\altdt_n)$ plausible at time $n$ according to  $\altDP$, 
 \[ \World\left( {\textstyle \sum_{i \leq n} \trade_i(\MP) } \right) \geq  \World\left( {\textstyle \sum_{i \leq n} \trade^\symb_i(\altMP) } \right) -1. \]
Since $\Trader^\symb$ exploits $\altMP$ over $\altDP$, by definition the set
\[ \left\{    \World\left( {\textstyle \sum_{i \leq n} \trade^\symb_i(\altMP) } \right)  \mid   n \in \NN^+,   \World \in \pcworlds(\altdt_n) \right\} \]
is bounded below and unbounded above. Therefore the set 
\[ \left\{    \World\left( {\textstyle \sum_{i \leq n} \trade_i(\MP) } \right)  \mid   n \in \NN^+,   \World \in \pcworlds(\dt_n) \right\} \]
is unbounded above, since for all $n$ we have 
$\altdt_\nn \supseteq \dt_\nn$ and hence $ \pcworlds(\altdt_n) \subseteq \pcworlds(\dt_n) $. 

It remains to show that this set is unbounded below. Suppose for contradiction that it is not, so 
there is some infinite sequence $\{(\World_i, n_i)\}$ with $\World_i\in\pcworlds(\dt_{n_i})$ on which the value $\World_i \left( {\textstyle \sum_{j \leq n_i}
\trade_j(\MP) } \right)  $   of $\Trader$ is unbounded below.

We may assume without loss of generality that each $\World_i$ is inconsistent with $\altDP$. Indeed, if there is no subsequence with this property and with the
values of $\Trader$ unbounded below, then  the 
$\World_i$ consistent with $\altDP$ have the corresponding values $\World_i \left( {\textstyle \sum_{j \leq n_i}
\trade_j(\MP) } \right)  \geq \World_i \left( {\textstyle \sum_{j \leq n_i}
\trade^\symb_j(\altMP) } \right)-1 $ unbounded below, contradicting that 
$\Trader^\symb$ exploits $\altMP$ over  $\altDP$. 
Having made this assumption, there is an infinite sequence $m_i$ with $\World_i(\altpsi{m_i-1}) = 1 \wedge \World_i(\altpsi{m_i}) = 0$ for all $i$.

We may further assume without loss of generality that for each $i$, we have $n_i \leq m_i-1$.  Indeed, for any $n \ge m_i$, we have by the above analysis that 
$\World_i \left( { \textstyle  \trade_n (\MP) } \right) \ge 0$; in this case replacing $n_i$ with $m_i-1$  would only decrease  the values
$\World_i ( {\textstyle \sum_{j \leq n_i}
\trade_j(\MP) } ) $, and hence would preserve that this sequence is unbounded below. 

In particular, it is the case that $\altpsi{m_i - 1}$ propositionally
implies $\altpsi{n_i}$. 
Because $\World_i(\altpsi{m_i - 1}) = 1$ and $\World_i\in\pcworlds(\dt_{n_i})$, this implies $\World_i\in\pcworlds(\altdt_{n_i})$, i.e., $\World_i$ was
plausible at time step $n_i$ according to $\altDP$.  
But then we have that the sequence of values $\World_i ( {\textstyle \sum_{j \leq n_i}
\trade_j(\MP) } )  \geq \World_i ( {\textstyle \sum_{j \leq n_i}
\trade^\symb_j(\altMP) } )-1 $ is unbounded below, contradicting that $\Trader^\symb$ exploits $\altMP$ over $\altDP$.

Thus we have shown that, assuming that $\Trader^\symb$ exploits $\altMP$ over $\altDP$, also $\Trader$ exploits $\MP$ over $\DP$.  This contradicts that $\MP$
is a \li{}, so in fact it cannot be that $\Trader^\symb$ exploits $\altMP$; thus $\altMP$ is a \li{} over $\altDP$, as desired.

\end{proof}

\end{document}